\definecolor{mydarkblue}{rgb}{0,0.08,0.45}
\DeclareMathOperator{\esn}{ESN}
\definecolor{Gray}{gray}{0.9}
\newcommand{\bx}{\mathbf{x}}
\newcommand{\bz}{\mathbf{z}}
\newcommand{\bM}{\mathbf{M}}
\def\*#1{\mathbf{#1}}
\colorlet{linecol}{black!75}
\newcommand{\highlight}[2]{\colorbox{#1!17}{$\displaystyle #2$}}
\renewcommand{\highlight}[2]{\colorbox{#1!17}{#2}}
\newcommand{\cube}[1]{%
\scalebox{#1}{
\begin{tikzpicture}
\pgfmathsetmacro{\cubex}{0.2}
\pgfmathsetmacro{\cubey}{0.2}
\pgfmathsetmacro{\cubez}{0.2}
\draw (0,0,0) -- ++(-\cubex,0,0) -- ++(0,-\cubey,0) -- ++(\cubex,0,0) -- cycle;
\draw (0,0,0) -- ++(0,0,-\cubez) -- ++(0,-\cubey,0) -- ++(0,0,\cubez) -- cycle;
\draw (0,0,0) -- ++(-\cubex,0,0) -- ++(0,0,-\cubez) -- ++(\cubex,0,0) -- cycle;
\end{tikzpicture}
}
}
\newcommand{\sphere}[2]{%
\scalebox{#1}{
\begin{tikzpicture}
  \shade[ball color = #2!40, opacity = 0.2] (0,0) circle (0.2cm);
  \draw (0,0) circle (0.2cm);
\end{tikzpicture}
}
}
\newcommand{\cylinder}[1]{%
\scalebox{#1}{
\begin{tikzpicture}
\node[cylinder, draw, shape border rotate = 90,minimum size = 0.4cm] (c) at (0,0) {};
\end{tikzpicture}
}
}
\definecolor{mydarkred}{rgb}{0.6,0,0}
\definecolor{myblue}{HTML}{268BD2}
\definecolor{mygreen}{HTML}{658354}
\newlist{hanglist}{enumerate}{1}%
\setlist[hanglist]{label=\arabic*.}%
\setlist[hanglist,1]{leftmargin=0pt}%
\theoremstyle{plain}
\theoremstyle{remark}
\newtheoremstyle{customsty1}
{3pt}%
{3pt}%
{}
{}
{\bfseries}
{:}
{.5em}
{}
\theoremstyle{customsty1} 
\newenvironment{wb-bib}[1]{%
  \chapter*{references}
\ifnobibintoc\else 
\phantomsection 
\addcontentsline{toc}{chapter}{References} 
\fi 
\prebibhook
  \begin{bibitemlist}{#1}}{\end{bibitemlist}\postbibhook}
    \addtodef{\endthebibliography}{}{\vskip-\lastskip\postbibhook}
      \renewcommand{\bibsection}{\@memb@bsec}}%
\renewcommand{\bibsection}{\@memb@bchap}}}%
    \renewcommand{\sectionbib}[2]{}
    \renewcommand{\bibsection}{\@memb@bsec}}{}
\newcommand{\wb@episource}{}
\newcommand{\vcinfo}{}%
\newcommand{\svnidlong}[4]{}%
\newcommand{\vcinfo}{}%
\newcommand\@thesistitlemedskip{0.2in}
\newcommand\@thesistitlebigskip{0.6in}
\newcommand{\degree}[1]{\gdef\@degree{#1}}
\newcommand{\project}{\gdef\@doctype{A masters project report}}
\newcommand{\prelim}{\gdef\@doctype{A preliminary report}}
\newcommand{\thesis}{\gdef\@doctype{A thesis}}
\newcommand{\dissertation}{\gdef\@doctype{A dissertation}}
\newcommand{\department}[1]{\gdef\@department{(#1)}}
\newenvironment{titlepage}
 {\@restonecolfalse\if@twocolumn\@restonecoltrue\onecolumn
  \else \newpage \fi \thispagestyle{empty}
}{\if@restonecol\twocolumn \else \newpage \fi}
\gdef\@degree{Doctor of Philosophy}    
\gdef\@doctype{A dissertation}         
\gdef\@department{(Electrical Engineering)} 
\gdef\@defensedate{07/19/2023}
\gdef\@committee{
  Jerry Xiaojin Zhu, Professor, Computer Sciences \\
  Yong Jae Lee, Associate Professor, Computer Sciences\\
  Yingyu Liang, Associate Professor, Computer Sciences\\
  Yiqiao Zhong, Assistant Professor, Statistics\\
  Sharon Yixuan Li (Advisor), Assistant Professor, Computer Sciences
  }
\renewcommand{\maketitle}{%
  \begin{titlepage}
    \def\thanks##1{\typeout{Warning: `thanks' deleted from thesis titlepage.}}
    \let\footnotesize\small \let\footnoterule\relax \setcounter{page}{1}
    \begin{center}
      {\textbf{\expandafter\expandafter{\@title}}} \\
      [\@thesistitlemedskip] \vspace{0.5cm}
       by \\[\@thesistitlemedskip]
      \@author \\[\@thesistitlebigskip]
      \@doctype\ submitted in partial fulfillment of \\
      the requirements for the degree of\\[\@thesistitlebigskip]
      \@degree \\[\@thesistitlemedskip]
      \@department \\[\@thesistitlemedskip] \vspace{0.5cm}
      at the \\[\@thesistitlemedskip]
      UNIVERSITY OF WISCONSIN--MADISON\\[\@thesistitlemedskip]
      \@date
    \end{center}
    \hspace*{-0.7in}Date of final oral examination: \@defensedate \\
    [\@thesistitlemedskip]
    \hspace*{-0.7in}The dissertation is approved  by the following members of the 
     Oral Committee:\\
    \@committee
  \end{titlepage}
  \setcounter{footnote}{0}
  \setcounter{page}{1} 
  \let\thanks\relax
  \let\maketitle\relax \let\degree\relax \let\project\relax \let\prelim\relax
  \let\department\relax
  \gdef\@thanks{}\gdef\@degree{}\gdef\@doctype{}
  \gdef\@department{}
}
\def\abstract{
  \chapter*{Abstract}
  \addcontentsline{toc}{chapter}{Abstract}
  \relax\markboth{Abstract}{Abstract}}
\def\advisortitle#1{\gdef\@advisortitle{#1}}
\def\advisorname#1{\gdef\@advisorname{#1}}
\gdef\@advisortitle{Professor}
\gdef\@advisorname{Cheer E.\ Place}
\def\umiabstract{
             \thispagestyle{empty}
                  \addtocounter{page}{-1}
                \begin{center}
                  {\textbf{\expandafter\uppercase\expandafter{\@title}}}\\
                  \vspace{12pt}
                  \@author \\
                  \vspace{12pt}
                  Under the supervision of \@advisortitle\ \@advisorname\\
                  At the University of Wisconsin-Madison
                \end{center}
}
\def\endumiabstract{\vfill \hfill\@advisorname\par\newpage}
\def\verbatimfile#1{\begingroup \singlespace
                    \@verbatim \frenchspacing \@vobeyspaces
                    \input#1 \endgroup
}
\def\copyrightpage{
  \newpage
  \thispagestyle{empty}    
  \addtocounter{page}{-1}
  \chapter*{}            
  \begin{center}
   \vfill
   \copyright\ Copyright by \@author\ \@date\\
   All Rights Reserved
  \end{center}}
\def\dedication{
  \newpage
  \null\vfil
  \begin{center}}
\def\enddedication{\end{center}\par\vfil\newpage}
\def\today{\@testday=\day
  \ifnum\@testday>30 \advance\@testday by -30
  \else\ifnum\@testday>20 \advance\@testday by -20
  \fi\fi
  \number\day\ \
  \ifcase\month\or
    January \or February \or March \or April \or May \or June \or
    July \or August \or September \or October \or November \or December
    \fi\ \number\year
}
\newtheorem{theorem}{Theorem}[chapter]
\newtheorem{proposition}[theorem]{Proposition}
\newtheorem{assumption}[theorem]{Assumption}
\newtheorem{definition}[theorem]{Definition}
\newtheorem{lemma}[theorem]{Lemma}
\newcommand{\hiddensubsection}[1]{
    \stepcounter{subsection}
    \subsection*{\arabic{chapter}.\arabic{section}.\arabic{subsection}\hspace{1em}{#1}}
}
\def\@bibchaptitle{Bibliography}
\def\altbibtitle{\def\@bibchaptitle{Bibliography}}
\def\thebibliography#1{
  \global\@bibpresenttrue
  \chapter*{\@bibchaptitle\markboth{\@bibchaptitle}{\@bibchaptitle}}
  \addcontentsline{toc}{chapter}{\@bibchaptitle}
  \vspace{0.375in}    
  \interlinepenalty=10000 
  \singlespace\list
  {[\arabic{enumi}]}{\settowidth\labelwidth{[#1]}\leftmargin\labelwidth
    \advance\leftmargin\labelsep \usecounter{enumi}}
  \def\newblock{\hskip .11em plus .33em minus -.07em}
  \sloppy
  \sfcode`\.=1000\relax}
\let\endthebibliography=\endlist
\clearpage\pagenumbering{roman}  
\title{Detecting and Learning Out-of-Distribution Data in the Open world: \\ Algorithm and Theory}
\author{Yiyou Sun}
\date{2023}
\begin{document}


\ifpdf
\DeclareGraphicsExtensions{.pdf, .jpg, .tif}
\else
\DeclareGraphicsExtensions{.eps, .jpg}
\fi

\maketitle


\svnidlong{$LastChangedBy$}{$LastChangedRevision$}{$LastChangedDate$}{$HeadURL: http://freevariable.com/dissertation/branches/diss-template/frontmatter/frontmatter.tex $}
\vcinfo{}


\copyrightpage




\pagestyle{deposit}

\begin{acks}
I would like to express my deepest gratitude to everyone who has contributed to the completion of this doctoral thesis. This research journey has been a transformative and enlightening experience, and I am fortunate to have received guidance, support, and encouragement from all individuals and institutions.

Firstly, I owe a huge thanks to my advisor, Professor Sharon Li. Her knowledge, guidance, and dedication to high standards have really shaped my work. 
 She encouraged me to venture into the unknown, confront challenges, and dig deeper intellectually. When I faced significant setbacks, like when my papers were repeatedly rejected, she was there and continued to assure me that my work was valuable and helped me regain my confidence. Her support during these tough times was invaluable and helped me get back on track.

 I'm also hugely grateful to my dissertation committee, Prof. Jerry Zhu, Prof. Yong Jae Lee, Prof. Yiqiao Zhong, and Prof. Yingyu Liang. Their expertise, constructive feedback, and insights have greatly improved my thesis.  
I'm also deeply grateful to my collaborators and labmates, Chuan Guo, Zhenmei Shi, Yifei Ming, Xuefeng Du, and Haoyue Bai, who have supported me tremendously. They enriched my thought process and created a supportive research environment.

Finally, my heartfelt appreciation goes to my family especially to my mom and my wife for their understanding and constant cheering. Their patience, support, and unwavering belief in me have been the foundation of my achievements. 

To everyone who helped with this thesis, whether I mentioned you or not, your support and encouragement have been invaluable. I am deeply grateful for all of you being part of my life.

\end{acks}

\renewcommand{\printtoctitle}[1]{\chapter*{#1}}
\renewcommand{\printloftitle}[1]{\chapter*{#1}}
\renewcommand{\printlottitle}[1]{\chapter*{#1}}

\renewcommand{\tocmark}{}
\renewcommand{\lofmark}{}
\renewcommand{\lotmark}{}

\renewcommand{\tocheadstart}{}
\renewcommand{\lofheadstart}{}
\renewcommand{\lotheadstart}{}

\renewcommand{\aftertoctitle}{}
\renewcommand{\afterloftitle}{}
\renewcommand{\afterlottitle}{}

\renewcommand{\cftchapterfont}{\normalfont} 
\renewcommand{\cftsectionfont}{\itshape} 
\renewcommand{\cftchapterpagefont}{\normalfont} 
\renewcommand{\cftchapterpresnum}{\bfseries} 
\renewcommand{\cftchapterleader}{} 
\renewcommand{\cftsectionleader}{} 
\renewcommand{\cftchapterafterpnum}{\cftparfillskip} 
\renewcommand{\cftsectionafterpnum}{\cftparfillskip} 



\tableofcontents

\clearpage
\listoftables

\clearpage
\listoffigures

\clearpage


\begin{abstract}

This thesis makes considerable contributions to the realm of machine learning, specifically in the context of open-world scenarios where systems face previously unseen data and contexts. Traditional machine learning models are usually trained and tested within a fixed and known set of classes, a condition known as the closed-world setting. While this assumption works in controlled environments, it falls short in real-world applications where new classes or categories of data can emerge dynamically and unexpectedly.

To address this, our research investigates two intertwined steps essential for open-world machine learning: \textit{Out-of-distribution (OOD) Detection} and \textit{Open-world Representation Learning (ORL)}. OOD detection focuses on identifying instances from unknown classes that fall outside the model's training distribution. This process reduces the risk of making overly confident, erroneous predictions about unfamiliar inputs. Moving beyond OOD detection, ORL extends the capabilities of the model to not only detect unknown instances but also learn from and incorporate knowledge about these new classes. 

In the realm of OOD detection, our work first introduces pioneering methodologies, namely ReACT and DICE, that can effectively differentiate samples from known and unknown classes. ReACT truncates abnormally high unit activations during test time to reduce the model's overconfidence in the output, while DICE leverages a model's most contributing weights by sparsification for OOD detection. Moreover, we present a distance-based OOD detection method with the introduction of a non-parametric approach using K-nearest neighbor (KNN) distance, with a paradigm shift in eschewing rigid distributional assumptions about the underlying feature space.

Moving beyond OOD detection, ORL involves deeper exploration into learning the unknown, answering crucial research questions about the interplay between known and unknown classes, and the role of label information in shaping representations. Through rigorous investigations, we aim to illuminate how knowledge about known classes can help uncover previously unseen classes and how label information impacts the learning and representation of both known and novel classes. This exploration inspires the development of a comprehensive algorithmic framework (OpenCon) for ORL, underpinned by a theoretical interpretation from the Expectation-maximization perspective.

By delving into these research problems of open-world learning, this thesis paves the way for building machine learning models that are not only performant but also reliable in the face of the evolving complexities of the real world.
\end{abstract}

\clearpage\pagenumbering{arabic}


\chapter{Introduction}
\label{sec:intro}

Advances in machine learning have revolutionized numerous domains, including image classification~\citep{deng2009imagenet,he2016deep}, object detection~\citep{girshick2015fast,sun2017faster}, segmentation~\citep{chen2017deeplab}, video processing~\citep{kahou2016emonets}, and audio recognition~\citep{purwins2019deep}, driving innovation and transforming the way we interact with technology. Noticeably, the vast majority of learning algorithms have been driven by the \textbf{closed-world} setting. For example, the face recognition systems of border control assume the inputs are all face images, in which case the traditional methods are sufficient to satisfy the industrial requirements~\citep{boulkenafet2015face,li2016original}. These applications assume that the classes are stationary and unchanged. This assumption, however, rarely holds for models deployed in the wild.

 
 One important characteristic of the \textbf{open-world} is that the intelligent system will encounter new contexts and data that were not taught to the algorithms during training, therefore requiring safe handling and adaption to the novel data. Traditional ML algorithms are typically unreliable to such out-of-distribution (OOD) data and can fail catastrophically~\citep{nguyen2015deep} (e.g., blindly predicting an OOD sample from an unknown class into a known class with high confidence). Preventing disastrous and overconfident outcomes for safe decision-making is thus a critical problem within trustworthy and open-world machine learning. This area already has numerous applications in autonomous driving, cloud computing, voice-assisted smartphones, smart logistics, healthcare, insurance, e-commerce systems, and many other industries. For example, a medical machine learning system may encounter a new disease it has never seen~\citep{sun2023lood}; an e-commerce classifier may come across brand-new products in the market~\citep{ecomm}; an autonomous driving model can run into an unknown object on the road~\citep{templeton2020tesla}. As the demand for intelligent systems grows, the need for machine learning algorithms to handle open-world scenarios becomes increasingly paramount. Open-world machine learning is an upcoming frontier and has gained increasing interest within the computer science community in the last few years. 

\begin{figure}[htb]
    \centering
    \includegraphics[width=0.9\linewidth]{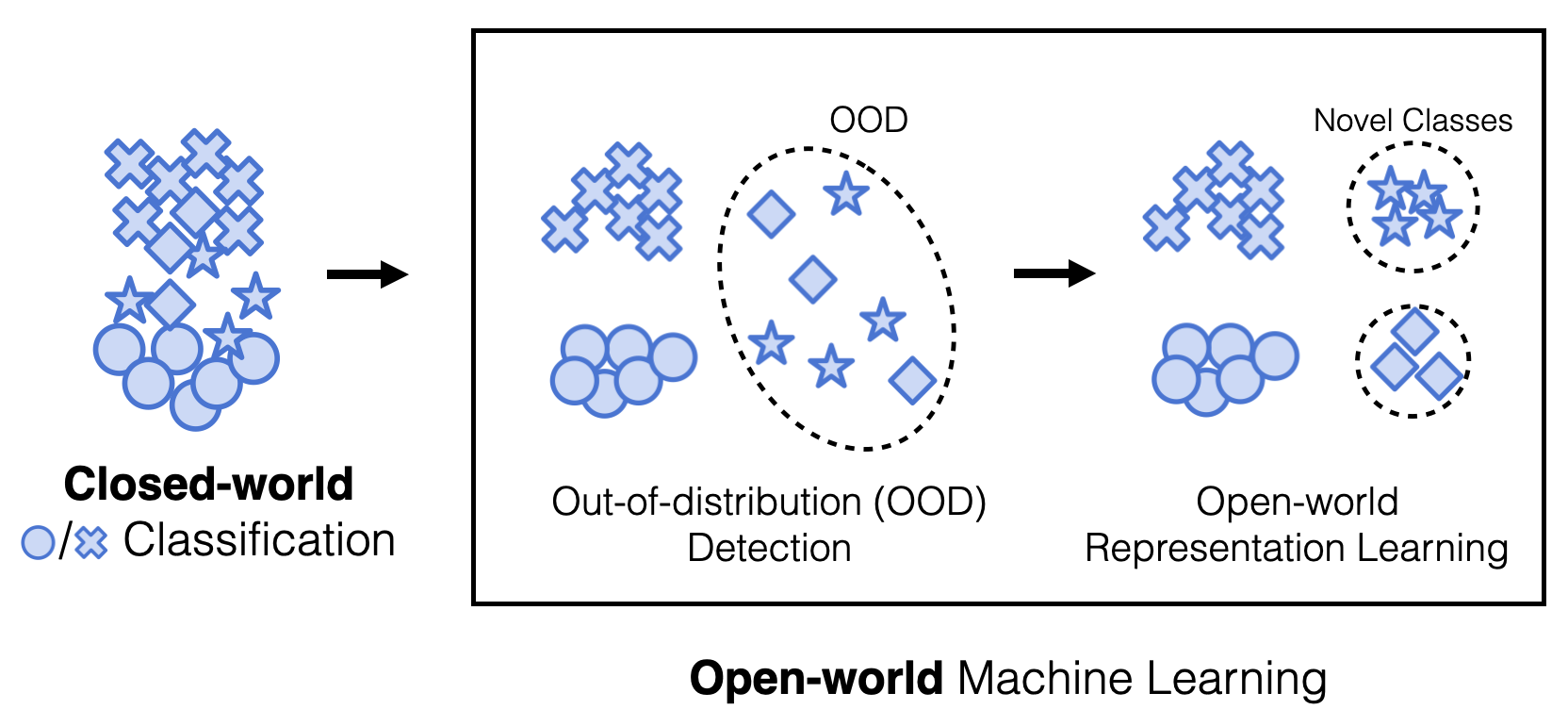}
    \caption[Illustration of two sub-problems of open-world machine learning and their relationship.]{The thesis focuses on the open-world machine learning problem,  composes of two sub-problems: \textit{Out-of-distribution (OOD) Detection} and \textit{Open-world Representation Learning (ORL)}. The figure uses a toy example with a binary classification problem of the circle and the cross. OOD detection aims to detect OOD samples (star and square) that are not in the training categories (circle and cross). ORL aims to learn distinguishable representations for all classes including known classes and OOD classes. }
    \label{fig:teaser}
\end{figure}

 Within this field, two prominent research topics emerge as the central areas of investigation: \textit{Out-of-distribution (OOD) Detection} and \textit{Open-world Representation Learning (ORL)}. The relationship between these research topics is illustrated in Figure~\ref{fig:teaser}. At a high level, OOD detection can be seen as the initial step in extending the closed-world classification problem to the open world. A reliable machine learning model should not only accurately classify in-distribution (ID) samples but also possess the capability to identify samples that lie outside the known distribution. Moving beyond OOD detection, the ORL problem further requires models to learn the hidden classes within OOD samples, in addition to the known classes. We delve deeper into this discussion in the subsequent paragraphs.

  \section{Out-of-distribution (OOD) Detection} The research revolves around effectively identifying instances from unknown classes or categories. In traditional machine learning, algorithms assume a closed-world setting with a fixed and known set of classes during training and inference~\citep{he2016deep,huang2017densely}. However, in open-world scenarios, where new classes can emerge dynamically, existing models often struggle to identify instances from previously unseen categories accurately~\citep{nguyen2015deep}. Developing robust techniques for Out-of-distribution Detection is crucial for reliably distinguishing between known and unknown classes, enabling more reliable machine learning systems. 

  A driving idea behind OOD detection is that the model should be much less confident about samples outside of its training distribution. However, modern neural networks can produce overconfident predictions on OOD inputs. This observation goes back to the early work by ~\cite{nguyen2015deep}. This phenomenon renders the separation of in-distribution (ID) and OOD data a non-trivial task, which attracts growing research attention in several thriving directions: 
  
(a) One line of work attempted to perform OOD detection by devising scoring functions, including OpenMax score~\citep{openworld}, maximum softmax probability~\citep{Kevin}, ODIN score~\citep{liang2018enhancing}, deep ensembles~\citep{lakshminarayanan2017simple}, Mahalanobis
distance-based score~\citep{lee2018simple}, energy score~\citep{liu2020energy,lin2021mood, wang2021canmulti, morteza2022provable}, gradient-based score~\citep{huang2021importance} and ViM score~\citep{wang2022vim}. 
  On this line, this doctoral thesis includes two representative works (ReACT~\citep{sun2021react} and DICE~\citep{sun2022dice}) which push the boundaries of novel methodologies in detecting OOD data.


  Specifically, ReACT~\citep{sun2021react} is proposed as a simple yet effective solution for reducing model overconfidence in OOD data. The key idea behind ReACT is to truncate the abnormally high unit activations during test-time OOD detection. Empirical and theoretical insights are provided to characterize and explain how ReACT improves OOD uncertainty estimation. By rectifying the activations, the outsized contribution of hidden units on OOD output can be attenuated, resulting in stronger separability from ID data. 

  The success of ReACT has led to a significant follow-up work called DICE~\citep{sun2022dice} which delves deeper into the detection of OOD data by investigating the influence of weights. DICE leverages the observation that a model's prediction for an ID class depends on only a subset of important units and their corresponding weights. Building on this observation, DICE introduces a novel idea of ranking weights based on their measure of contribution and selectively using the most contributing weights to derive the output for OOD detection. 

(b) Another avenue of exploration in OOD detection involves the adoption of distance-based approaches, which operate under the assumption that the test OOD samples are relatively far away from the ID data. In particular, {CSI}~\citep{tack2020csi} investigate the type of data augmentations that are particularly beneficial for OOD detection. Other works~\citep{winkens2020contrastive,2021ssd} verify the effectiveness of applying the off-the-shelf multi-view contrastive losses such as {SimCLR}~\citep{chen2020simclr} and {SupCon}~\citep{khosla2020supcon} for OOD detection. 
   
   Prior works commonly make a strong distributional assumption, assuming the underlying feature space follows a class-conditional Gaussian distribution. Unlike previous methods, this thesis introduces a non-parametric approach, specifically utilizing K-nearest neighbor (KNN) distance~\citep{sun2022knnood} and not relying on any specific distributional assumption about the underlying feature space. This crucial paradigm shift provides greater flexibility and generality in detecting OOD samples, as it does not impose rigid distributional assumptions. \\

  By advancing the understanding and techniques in OOD detection, this research contributes to the development of more reliable and robust machine learning models, paving the way for applications in open-world scenarios where the presence of unknown classes is a crucial challenge to overcome.

 \section{Open-world Representation Learning (ORL)}
 Beyond detecting the OOD data from unknown classes, an extended line of research lies in the ability to learn and incorporate knowledge in these unknown classes.  Concretely, the model has access to the training dataset with both labeled and unlabeled data. The labeled dataset contains samples that belong to a set of known classes, while the unlabeled dataset has a mixture of samples from both the known and novel classes. In practice, such unlabeled in-the-wild data can be collected almost for free upon deploying a model in the open world, and thus is available in abundance.  This gives rise to the pressing demand for the advancement of ORL algorithms, enabling more robust and adaptable open-world machine learning systems.

 The learning setting that considers both labeled and unlabeled data with a mixture of known and novel classes is first proposed in~\cite{cao2022openworld} and inspires a proliferation of follow-up works~\citep{pu2023dynamic,zhang2022promptcal,rizve2022openldn,vaze22gcd} advancing empirical success where most works put emphasis on learning high-quality representations~\citep{vaze22gcd,pu2023dynamic,zhang2022promptcal}.
The thesis further \emph{advances theoretical understanding} by answering two unresolved research questions~\citep{sun2023nscl,sun2023sorl} as well as providing effective empirical solutions~\citep{sun2023opencon,sun2023nscl,sun2023sorl}. 

 
The first research question we aim to address is ``\textit{when and how does known class help discover unknown ones?}'' Recognizing the potential interplay between known and unknown classes is essential for effective open-world representation learning. By investigating this question, we seek to uncover the unsolved mystery in Novel Class Discovery (NCD)~\citep{hsu2017kcl, Han2019dtc, hsu2019mcl, zhong2021openmix, zhao2020rankstat, yang2022divide, sun2023opencon} by which knowledge about known classes can facilitate the discovery and recognition of previously unseen classes. Understanding these dynamics is crucial for designing algorithms that can leverage the relationships and similarities between known and unknown classes, leading to enhanced representation learning in open-world scenarios.

The second research question we explore is ``\textit{what is the role of label information in shaping representations for both known and novel classes?}'' In open-world representation learning, label information plays a vital role in guiding the formation of effective representations. 
By examining this question, we aim to shed light on how label information influences the learning process and the resulting representations for both known and novel classes. Investigating the impact of label information on the representation space can provide valuable insights into the change of the representations' discriminative power in known classes and how it generalizes to the novel class.


Finally, the thesis tackles the empirical challenges in ORL.  
Different from self-supervised representation learning~\citep{van2018cpc,chen2020simclr,caron2020swav,he2019moco}, open-world representation learning is a distinct endeavor that goes beyond simply leveraging unlabeled data to uncover meaningful representations. It encompasses the challenging task of effectively capturing the underlying structures and characteristics of both known and unknown classes. In this monograph, we provide a unified and comprehensive algorithmic framework~\citep{sun2023opencon} accompanied by a theoretical interpretation from the Expectation-maximization (EM) perspective, tackling unique challenges within open-world representation learning. \\

Through rigorous theoretical analysis and algorithmic development, this monograph strives to address these research questions, contributing to the advancement of open-world representation learning. By elucidating the relationships between known and unknown classes and understanding the role of label information, we aim to unlock new avenues for representation learning in open-world settings, empowering machine learning systems to effectively capture the complexities and nuances of diverse and evolving real-world environments.


\section{Contribution and Thesis Outline}
\begin{figure}[htb]
    \centering
    \includegraphics[width=0.99\linewidth]{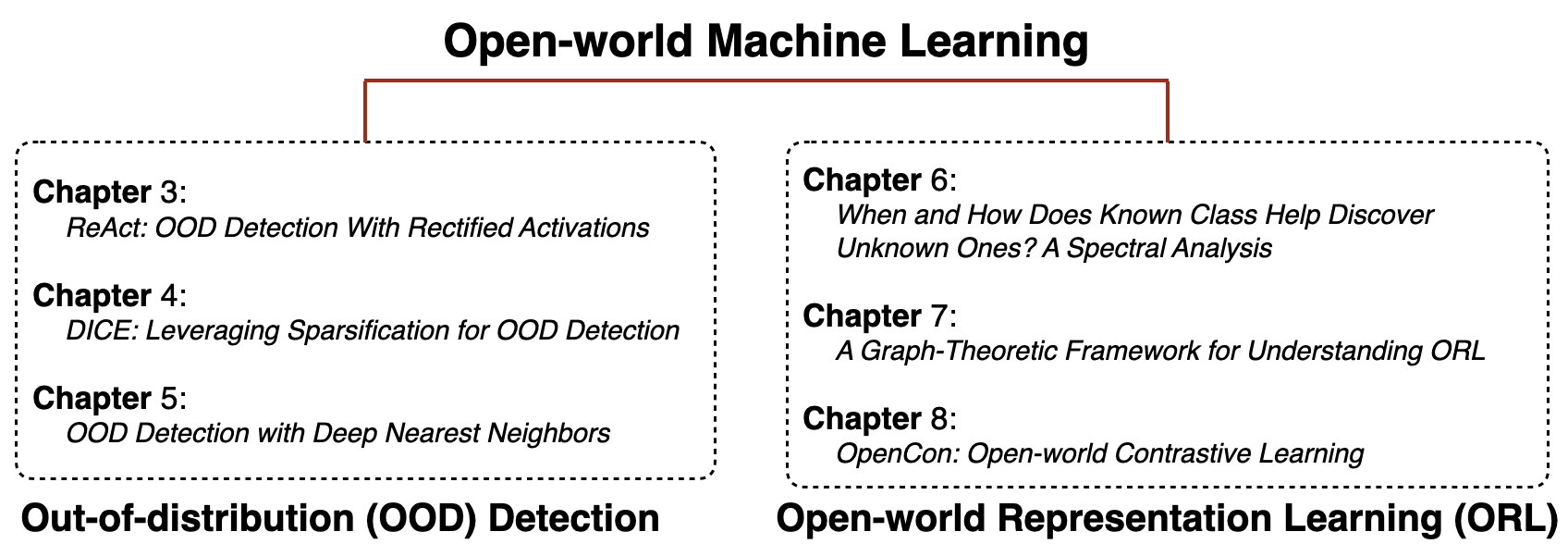}
    \caption[The outline of the thesis.]{Thesis outline including \textit{Out-of-Distribution detection} and \textit{Open-world Representation Learning}. }
    \label{fig:outline}
\end{figure}
This doctoral thesis makes significant contributions to the field of open-world machine learning, with a specific focus on two key research topics: \textit{Out-of-Distribution detection} and \textit{Open-world Representation Learning}, with outline illustrated in Figure~\ref{fig:outline}. The central contribution of this thesis lies in the development of novel methodologies and theoretical insights to address these challenges.

One of the primary contributions of this thesis lies in the development of several competitive algorithms for OOD detection, addressing the problem of model overconfidence in OOD inputs. This thesis unveils the mechanisms underlying overconfident predictions on OOD data, providing insights into why such phenomena occur. The introduction of the ReACT~\citep{sun2021react}, DICE~\citep{sun2022dice} and KNN~\citep{sun2022knnood} technique further offers practical post hoc solutions to reduce model overconfidence on OOD data. The empirical and theoretical insights gained from these algorithms shed light on the improvement of OOD detection performance and establish a solid ground for future work in this research topic.

Furthermore, this thesis highlights the importance of open-world representation learning as a crucial second step beyond the OOD detection process. Open-world representation learning is required to effectively capture the underlying structures and characteristics of both known and unknown classes, enabling the development of robust and adaptable machine learning systems in open-world scenarios, where new classes can emerge dynamically. This thesis presents novel methodologies ~\citep{sun2023opencon,sun2023sorl,sun2023nscl} for open-world representation learning, producing a compact representation space for both known and novel classes. The monograph also establishes the theoretical foundations of the ORL problem by elucidating the relationships between known and unknown classes and understanding
the role of label information, empowering machine learning system developers to effectively capture the complexities and nuances of diverse and evolving real-world environments. \\

In terms of the thesis outline:

\textbf{Chapter~\ref{sec:background}} offers a meticulous description of the problem setup, encompassing a comprehensive literature review that surveys the current body of research on out-of-distribution (OOD) detection and open-world representation learning. \\

Chapter~\ref{sec:react}, Chapter~\ref{sec:dice}, and Chapter~\ref{sec:knn} investigate the out-of-distribution detection problem:

\textbf{Chapter~\ref{sec:react}} presents the theoretical foundations and insights into the mechanisms of ``why model gets overconfidence in OOD data''. This chapter also introduces the ReACT technique with the key idea that truncates the abnormally high unit activations for test-time OOD detection. The content of this chapter is primarily based on ~\cite{sun2021react}.

\textbf{Chapter~\ref{sec:dice}} introduces DICE, which extended the research scope beyond the unit activations and investigated the influence of weight in OOD detection. DICE leverages the observation that a model’s prediction for an ID class depends on only a subset of important units (and corresponding weights). The idea is to rank weights based on the measure of contribution, and selectively use the most contributing weights to derive the output for OOD
detection. The content of this chapter is primarily based on ~\cite{sun2022dice}.

\textbf{Chapter~\ref{sec:knn}} describes a distance-based OOD detection by using $k$-th nearest neighbor distances, which operates under the assumption that the test OOD samples are relatively far away from the ID data. Importantly, it brings a crucial paradigm shift from a parametric to a non-parametric distance-based approach for OOD detection.
The content of this chapter is primarily based on ~\cite{sun2022knnood}. \\

Chapter~\ref{sec:nscl}, Chapter~\ref{sec:sorl}, and Chapter~\ref{sec:opencon} fall under the umbrella of open-world representation learning:

\textbf{Chapter~\ref{sec:nscl}} aims to answer an underexplored  
 research question ``\textit{when and how does known class help discover unknown ones?}'' Tailored to the problem, we introduce a graph-theoretic representation that can be learned by a novel NCD Spectral Contrastive
Loss (NSCL), which is appealing for practical usage while enjoying theoretical guarantees.
The content of this chapter is primarily based on ~\cite{sun2023nscl}.

\textbf{Chapter~\ref{sec:sorl}} investigates the second research question in open-world representation learning: ``\textit{what is the role of label information in shaping representations for both known and novel classes?}''  Our graph-theoretic framework (SORL) illuminates practical
algorithms and shed light on how label information influences the learning process and the resulting representations for both known and novel classes. 
The content of this chapter is primarily based on ~\cite{sun2023sorl}.

\textbf{Chapter~\ref{sec:opencon}} introduces OpenCon, a pioneering training framework for open-world representation learning. It establishes a contrastive loss framework that tackles unique challenges in the ORL problem: (a) the lack of clear separation between known vs. novel data in unlabeled data, and (b) the lack of supervision for data in novel classes.
The content of this chapter is primarily based on ~\cite{sun2023opencon}. \\

Finally, \textbf{Chapter~\ref{sec:conclusion}} concludes the thesis by summarizing the contributions, discussing the implications of the research findings, and outlining potential directions for future work. Through these contributions and the systematic exploration of OOD detection and open-world representation learning, this thesis advances the understanding and state-of-the-art in the field, providing valuable insights and practical methodologies to enhance the reliability and adaptability of machine learning models in open-world scenarios.

\chapter{Background}
\label{sec:background}

\section{Problem Statement}
\label{sec:prob}

In this section, we introduce the problem setup of \textit{Out-of-distribution Detection} and \textit{Open-world Representation Learning}. We delve into the investigation of the OOD Detection problem in Chapter~\ref{sec:react}, Chapter~\ref{sec:dice}, and Chapter~\ref{sec:knn}. Furthermore, we explore the ORL problem in Chapter~\ref{sec:nscl}, Chapter~\ref{sec:sorl}, and Chapter~\ref{sec:opencon}. Note that Chapter~\ref{sec:nscl} specifically focuses on a sub-problem of ORL known as \textit{Novel Class Discovery} (NCD), which we will elaborate on in detail within Chapter~\ref{sec:nscl}.

\hiddensubsection{Out-of-distribution Detection} 

In OOD detection, we consider supervised multi-class classification, where $\mathcal{X}$ denotes the input space and $\mathcal{Y}_l=\{1,2,...,C\}$ denotes the label space. The training set $\mathcal{D}_{in} = \{(\*x_i, y_i)\}_{i=1}^n$ is drawn \emph{i.i.d.} from the joint data distribution $\mathcal{P}_{\mathcal{X}\mathcal{Y}_l}$. Let $\mathcal{P}_\text{in}$ denote the marginal distribution on $\mathcal{X}$. Let $f: \mathcal{X} \mapsto \mathbb{R}^{C}$ be a neural network trained on samples drawn from $\mathcal{P}_{\mathcal{X}\mathcal{Y}_l}$ to output a logit vector, which is used to predict the label of an input sample. 
 
 When deploying a machine model in the real world, {a reliable classifier should not only accurately classify known in-distribution (ID) samples, but also identify as ``unknown'' any OOD input}. This can be achieved by having an OOD detector, in tandem with the classification model $f$. OOD detection can be formulated as a binary classification problem. At test time, the goal of OOD detection is to decide whether a sample $\*x \in \mathcal{X}$ is from $\mathcal{P}_\text{in}$ (ID) or not (OOD). The decision can be made via a level set estimation:
\vspace{-0.1cm}
\begin{align*}
\label{eq:threshold}
	\mathcal{S}_{\lambda}(\*x)=\begin{cases} 
      \text{ID} & S(\*x)\ge \lambda \\
      \text{OOD} & S(\*x) < \lambda 
   \end{cases},
\end{align*}
where samples with higher scores $S(\*x)$ are classified as ID and vice versa, and  $\lambda$ is the threshold. In practice, OOD is often defined by a distribution that simulates unknowns encountered during deployment time, such as samples from an irrelevant distribution {whose label set has no intersection with $\mathcal{Y}$ and therefore should not be predicted by the model}.

\hiddensubsection{Open-world Representation Learning}

In addition to detecting out-of-distribution (OOD) samples, the open-world representation learning setting places significant emphasis on the objective of not only identifying new classes within OOD samples but also learning the existing classes in the wild. To formalize this, we provide a description of the data setup and the learning goal:

\noindent \textbf{Data setup.} We consider the training dataset $\mathcal{D} = \mathcal{D}_{l} \cup \mathcal{D}_{u}$ with two parts: 
\begin{enumerate}
\vspace{-0.2cm}
    \item The labeled set $\mathcal{D}_{l}=\left\{\bx_{i} , y_{i}\right\}_{i=1}^{n}$, with $y_i \in \mathcal{Y}_l$. The label set $\mathcal{Y}_l$ is  known. 
    \item The unlabeled set $\mathcal{D}_{u}=\left\{\bx_{i}\right\}_{i=1}^{m}$, where each sample $\bx_i \in \mathcal{X}$ can come from either known or novel classes\footnote{It generalizes the problem of Novel Class Discovery (NCD)~\citep{Han2019dtc}, which assumes the unlabeled set is purely from novel classes.}. Note that we do not have  access to the labels in $\mathcal{D}_{u}$. For mathematical convenience, we denote the underlying label set as $\mathcal{Y}_\text{all}$, where $\mathcal{Y}_l \subset \mathcal{Y}_\text{all}$ implies category shift and expansion. Accordingly, the set of novel classes is $\mathcal{Y}_n = \mathcal{Y}_\text{all} \backslash \mathcal{Y}_l$, where the \textit{subscript} $n$ stands for ``\textbf{n}ovel''. The model has no knowledge of the set $\mathcal{Y}_n$ nor its size.
\end{enumerate} 

\noindent \textbf{Goal.} Under the setting, the goal is to learn distinguishable representations \emph{for both known and novel classes} simultaneously.

\begin{table}[htb]
\caption{Comparison of problem settings related to the open-world representation learning. }
\centering
\scalebox{0.8}{
\begin{tabular}{llll} \toprule
\multirow{2}{*}{\textbf{Problem Setting}} & \multirow{2}{*}{\textbf{Labeled data}} & \multicolumn{2}{c}{\textbf{Unlabeled data}} \\ \cline{3-4} 
 & & \textbf{Known classes} & \textbf{Novel classes} \\ \hline
Semi-supervised learning & Yes & Yes & No \\
Robust semi-supervised learning & Yes & Yes & Yes (Reject) \\
Supervised learning & Yes & No & No \\
Novel class discovery & Yes & No & Yes (Discover) \\
Open-world representation  learning & Yes & Yes  & Yes (Cluster) \\
\hline
\end{tabular}}
\label{tab:set_diff}
\end{table}

\noindent \textbf{Difference w.r.t. existing problem settings.}  The open-world representation learning is a practical and relatively novel problem, which differs from existing problem settings (see Table~\ref{tab:set_diff} for a summary). In particular, (a) we consider \textit{both labeled data and unlabeled data} in training, and (b) we consider a mixture of \textit{both known and novel classes} in unlabeled data. Note that our setting generalizes traditional representation learning. For example, Supervised Contrastive Learning (SupCon)~\citep{khosla2020supcon} only assumes the labeled set $\mathcal{D}_l$, without considering the unlabeled data $\mathcal{D}_u$. Weakly supervised contrastive learning~\citep{zheng2021weakcl} assumes the same classes in labeled and unlabeled data, \emph{i.e.}, $\mathcal{Y}_l = \mathcal{Y}_\text{all}$, and hence remains closed-world. Self-supervised learning~\citep{chen2020simclr} relies completely on the unlabeled set $\mathcal{D}_{u}$ and does not assume the availability of the labeled dataset. The setup is also known as open-world semi-supervised learning (OSSL) or generalized category discovery (GCD), which is introduced in ~\cite{cao2022openworld} and ~\cite{vaze22gcd} respectively. Despite the similar setup, the learning goal of ORL is different: \citet{cao2022openworld} and ~\citet{vaze22gcd} focus on classification accuracy, while ORL aims to learn high-quality embeddings.

\section{Related Work}
\label{sec:related}
This section includes an introduction to the related works in \textit{Out-of-distribution Detection} and \textit{Open-world Representation Learning}. Additionally, each chapter includes discussions on other research areas relevant to its specific topic.

\hiddensubsection{Out-of-distribution Detection}

The phenomenon of neural networks' overconfidence in out-of-distribution data is first revealed in \cite{nguyen2015deep} with the learning theory established in recent work~\citep{fang2022out}. This research area attracts growing research attention in several thriving directions.   

\noindent\textbf{Training-based OOD Detection.} One promising line of work addressed OOD detection by training-time regularization~\citep{lee2017training, bevandic2018discriminative,  malinin2018predictive, hendrycks2018deep,  geifman2019selectivenet, hein2019relu, meinke2019towards, mohseni2020self, liu2020energy, jeong2020ood, van2020uncertainty, yang2021semantic, chen2021atom, hongxin2022logitnorm, ming2022posterior, katzsamuels2022training,du2022siren,tao2023non,bai2023feed}.
For example, models are encouraged to give predictions with uniform distribution~\citep{lee2017training,hendrycks2018deep} or higher energies~\citep{liu2020energy, ming2022posterior, du2022unknown, katzsamuels2022training} for outlier data. Most regularization methods require the availability of auxiliary OOD
data. VOS~\citep{du2022towards} alleviates the need by automatically synthesizing virtual outliers that can meaningfully regularize the model's decision boundary during training. 
Although these methods have demonstrated empirical success, their practical application scope is limited due to the requirement of a re-training process. Moreover, in the case of large models such as CLIP~\citep{radford2021learning}, the re-training process can be prohibitively expensive. The thesis does not encompass a discussion on this particular research direction but instead places a greater emphasis on the inference-based method, which we will introduce in the subsequent paragraph.


\noindent\textbf{Inference-based OOD Detection.} 
This category of methods operates on a pre-trained network and detects OOD samples in a post hoc manner. They offer flexibility by allowing for plug-and-play functionality with most existing models. These methods can be broadly categorized into two branches: output-based and distance-based methods:

(a) \textit{Output-based methods}. This line of work attempted to perform OOD detection by devising scoring functions based on the model's output, including OpenMax score~\citep{openworld}, maximum softmax probability~\citep{Kevin}, ODIN score~\citep{liang2018enhancing}, deep ensembles~\citep{lakshminarayanan2017simple}, energy score~\citep{liu2020energy,lin2021mood, wang2021canmulti, morteza2022provable},  gradient-based score~\citep{huang2021importance}, MOS score~\citep{huang2021mos} and ViM score~\citep{wang2022vim}.
On this line, this doctoral thesis includes two representative works -- ReAct~\citep{sun2021react} in Chapter~\ref{sec:react} and DICE~\citep{sun2022dice} in Chapter~\ref{sec:dice} which push the boundaries of novel methodologies in detecting OOD data.

(b) \textit{Distance-based methods.} Another avenue of exploration in OOD detection involves the adoption of distance-based approaches, which operate under the assumption that the test OOD samples are relatively far away from the ID data.  {CSI}~\citep{tack2020csi} investigate the type of data augmentations that are particularly beneficial for OOD detection. Other works~\citep{winkens2020contrastive,2021ssd} verify the effectiveness of applying the off-the-shelf multi-view contrastive losses such as {SimCLR}~\citep{chen2020simclr} and {SupCon}~\citep{khosla2020supcon} for OOD detection. 
\citet{ming2023exploit} propose a prototype-based contrastive learning framework for OOD detection, which promotes stronger ID-OOD separability than SupCon loss. Prior works commonly make a strong distributional assumption, assuming the underlying feature space follows a class-conditional Gaussian distribution. Unlike previous methods, this thesis introduces a non-parametric approach, specifically utilizing K-nearest neighbor (KNN) distance~\citep{sun2022knnood} in Chapter~\ref{sec:knn} and not relying on any specific distributional assumption about the underlying feature space. Performance-wise, this method outperforms 13 competitive rivals according to a recent survey study~\citep{yang2022openood}.

\hiddensubsection{Open-world Representation Learning}
The learning setting that considers both labeled and unlabeled data with a mixture of known and novel classes is first proposed in~\cite{cao2022openworld} and inspires a proliferation of follow-up works~\citep{pu2023dynamic,zhang2022promptcal,rizve2022openldn,vaze22gcd} advancing empirical success. Most works put emphasis on learning high-quality embeddings~\citep{vaze22gcd,pu2023dynamic,zhang2022promptcal}. 
In particular, ~\citet{vaze22gcd} employs contrastive learning with both supervised and self-supervised signals. ~\citet{pu2023dynamic} improves clustering accuracy by learning conceptional representation and ~\citet{zhang2022promptcal} applies a two-stage approach that refines the embedding by an affinity graph after a pre-training stage. Different from prior works, the thesis further \emph{advancing theoretical understanding} by answering two unresolved research questions~\citep{sun2023nscl,sun2023sorl} in Chapter~\ref{sec:nscl} and Chapter~\ref{sec:sorl} as well as providing effective learning algorithms~\citep{sun2023opencon} in Chapter~\ref{sec:opencon}.

\clearpage
\section{Notations}
\label{sec:notation}
In this section, we define common notation that is shared throughout the thesis. Specific additional notations are defined within each respective chapter. It is crucial to recognize that the notations utilized in one chapter do not carry over to others.

\renewcommand{\arraystretch}{1.2}
\begin{table}[htb]
\caption{List of common math notations. }
\centering
\begin{tabular}{|c||l|}
\hline
$[n]$    &  the set $\{1, ..., n\} $   \\ \hline
 $\|\cdot\|_1$  & $l_1$ norm of a matrix or a vector       \\ \hline
$\|\cdot\|_2$  & $l_2$ norm of a matrix or a vector  \\ \hline
 $\|\cdot\|_F$  & the Frobenius norm of a matrix \\ \hline
   $\mathbf{1}_n$ &  $n$-dimensional vector with all 1 \\ \hline
 $\mathbf{0}_n$ &  $n$-dimensional vector with all 0     \\ \hline
 $\mathbf{1}_{m\times n}$  &   $m$-by-$n$ matrix with all 1  \\ \hline
 $\mathbf{0}_{m\times n}$  &   $m$-by-$n$ matrix with all 0 \\ \hline
$I_n$ &   identity matrix with shape $n\times n$  \\ \hline
 $V_{(i,j)}/V_{ij}$ & the value at $i$-th row and $j$-th column of a matrix $V$  \\ \hline
 $V_{k, (i,j)}$ &  the value at $i$-th row and $j$-th column of a matrix $V_k$ \\ \hline
 $\*v_{(i)}/\*v_{i}$   & $i$-th value for a vector $\*v$  \\ \hline
 $\*v_{k, (i)}$  & $i$-th value for a subscripted vector $\*v_{k}$ \\ \hline
$\langle \*u, \*v \rangle$  & inner-production between $\*u$ and $\*v$  \\ \hline
 $V^{\dagger}$  & Moore-Penrose inverse of matrix $V$  \\ \hline
\end{tabular}
\end{table}

\part{Out-of-distribution Detection}

\chapter{ReAct: OOD Detection With
Rectified Activations}
\label{sec:react}

\paragraph{Publication Statement.} This chapter is joint work with Chuan Guo and Yixuan Li. The paper version of this chapter appeared in NeurIPS21~\citep{sun2021react}. 

\noindent\rule{\textwidth}{1pt}

Out-of-distribution (OOD) detection has received much attention lately due to its practical importance in enhancing the safe deployment of neural networks. One of the primary challenges is that models often produce highly confident predictions on OOD data, which undermines the driving principle in OOD detection that the model should only be confident about in-distribution samples. In this chapter, we introduce \textbf{ReAct}---a simple and effective technique for reducing model overconfidence in OOD data. ReAct is motivated by a novel analysis of internal activations of neural networks, which displays highly distinctive signature patterns for  OOD distributions. ReAct can generalize effectively to different network architectures and different OOD detection scores. We empirically demonstrate that ReAct achieves competitive detection performance on a comprehensive suite of benchmark datasets, and give theoretical explication.

\section{Introduction}
\label{sec:react_intro}

Neural networks deployed in real-world systems often encounter out-of-distribution (OOD) inputs---unknown samples that the network has not been exposed to during training. Identifying and handling these OOD inputs can be paramount in safety-critical applications such as autonomous driving~\citep{filos2020can} and health care. For example, an autonomous vehicle may fail to recognize objects on the road that do not appear in its object detection model's training set, potentially leading to a crash. This can be prevented if the system identifies the unrecognized object as OOD and warns the driver in advance.

A driving idea behind OOD detection is that the model should be much more uncertain about samples outside of its training distribution. However, \citet{nguyen2015deep} revealed that modern neural networks can produce overconfident predictions on OOD inputs. This phenomenon renders the separation of in-distribution (ID) and OOD data a non-trivial task. Indeed, much of the prior work on OOD detection focused on defining more suitable measures of OOD uncertainty~\citep{godin2020CVPR, lakshminarayanan2017simple, liang2018enhancing, lee2018simple, liu2020energy, huang2021importance}. Despite the improvement, it is arguable that continued research progress in OOD detection requires insights into the fundamental cause and mitigation of model overconfidence on OOD data. %

\begin{figure}[t]
	\begin{center}
		\includegraphics[width=\linewidth]{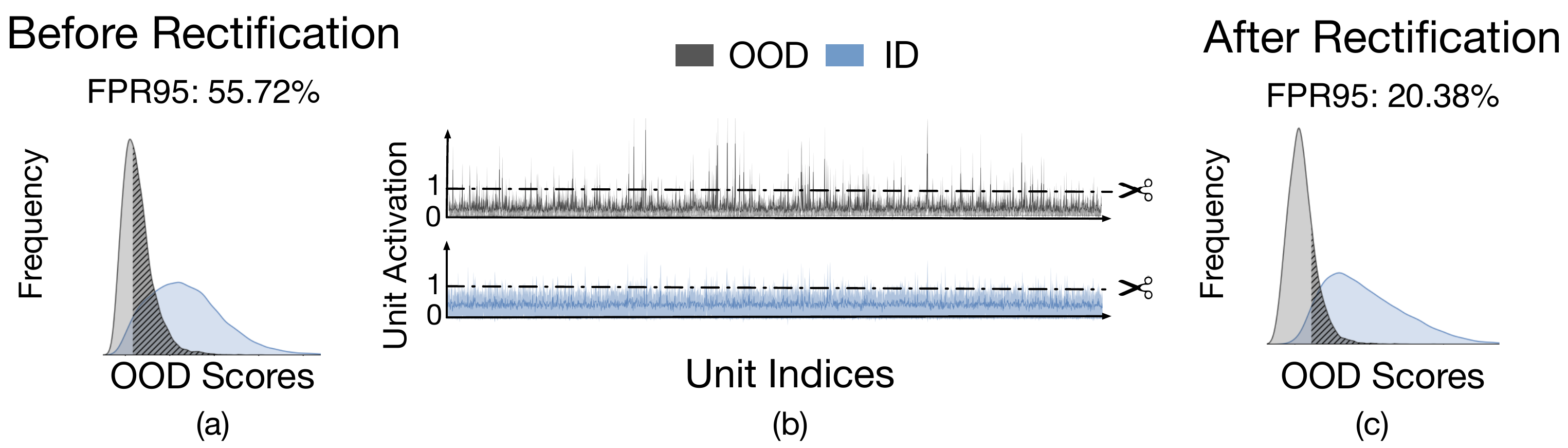}
	\end{center}
	\caption[Plot of the distribution of per-unit activations and uncertainty scores of ID and OOD  before and after truncation.]{\small Plots showing (a) the distribution of ID (ImageNet~\citep{deng2009imagenet}) and OOD (iNaturalist~\citep{inat}) uncertainty scores before truncation, (b) the distribution of per-unit activations in the penultimate layer for ID and OOD data, and (c) the distribution of OOD uncertainty scores~\citep{liu2020energy} after rectification. Applying ReAct drastically improves the separation of ID and OOD data. See text for details. }
	\label{fig:react_teaser}
\end{figure}
In this chapter, we start by revealing an important observation that OOD data can trigger unit activation patterns that are significantly different from ID data. \autoref{fig:react_teaser}(b) shows the distribution of activations in the penultimate layer of ResNet-50 trained on ImageNet~\citep{deng2009imagenet}. Each point on the horizontal axis corresponds to a single unit. The mean and standard deviation are shown by the solid line and shaded area, respectively. The mean activation for ID data (blue) is well-behaved with a near-constant mean and standard deviation. In contrast, for OOD data (gray), the mean activation has significantly larger variations across units and is biased towards having sharp positive values (\emph{i.e.}, positively skewed). As a result, such high unit activation can undesirably manifest in model output, producing overconfident predictions on OOD data. A similar distributional property holds for other OOD datasets as well.

The above observation naturally inspires a simple yet surprisingly effective method---\textbf{Re}ctified \textbf{Act}ivations (dubbed \textbf{ReAct}) for OOD detection. In particular, the outsized activation of a few selected hidden units can be attenuated by rectifying the activations at an upper limit $c > 0$. 
Conveniently, this can be done on a pre-trained model without any modification to training. The dashed horizontal line in \autoref{fig:react_teaser}(b) shows the cutoff point $c$, and its effect on the OOD uncertainty score is shown in \autoref{fig:react_teaser}(c). After rectification, the output distributions for ID and OOD data become much more well-separated and the false positive rate (FPR) is significantly reduced from $55.72\%$ to $20.38\%$. Importantly, this truncation largely preserves the activation for in-distribution data, and therefore ensures the classification accuracy on the original task is largely comparable.  

We provide both empirical and theoretical insights, characterizing and explaining the mechanism by which ReAct improves OOD detection. We perform extensive evaluations and establish competitive performance on a suite of common OOD detection benchmarks, including CIFAR-10 and CIFAR-100, as well as a large-scale ImageNet dataset~\citep{deng2009imagenet}. ReAct outperforms Energy score~\citep{liu2020energy} by a large margin, reducing the average FPR95 by up to {25.05}\%. We further analyze our method theoretically and show that ReAct is more beneficial when OOD activations are more chaotic (\emph{i.e.}, having a larger variance) and positively skewed compared to ID activations, a behavior that is typical of many OOD datasets (\emph{cf.} \autoref{fig:react_teaser}). In summary, the \textbf{key results and contributions} for this chapter are:
\begin{enumerate}
    \item We introduce ReAct---a simple and effective \emph{post hoc} OOD detection approach that utilizes activation truncation. 
    We show that ReAct can generalize effectively to different network architectures and works with different OOD detection methods including MSP~\citep{Kevin}, ODIN~\citep{liang2018enhancing}, and energy score~\citep{liu2020energy}.
    \item We extensively evaluate ReAct on a suite of OOD detection tasks and establish a competitive performance among post hoc methods. Compared to the previous best method, ReAct achieves an FPR95 reduction of {25.05}\% on a large-scale ImageNet benchmark. 
    \item We provide both empirical ablation and theoretical analysis, revealing important insights that abnormally high activations on OOD data can harm their detection and how ReAct effectively mitigates this issue. Our insight inspires future research to further examine the internal mechanisms of neural networks for OOD detection.
\end{enumerate}


\section{Methodology}
\label{sec:react_method}

We introduce a simple and surprisingly effective technique,  \textbf{Re}ctified \textbf{Act}ivations (ReAct), for improving OOD detection performance. Our key idea is to perform \emph{post hoc} modification to the unit activation, so to bring the overall activation pattern closer to the well-behaved case. Specifically, we consider a pre-trained neural network parameterized by $\theta$, which encodes an input $\*x \in \mathbb{R}^d$ to a feature space with dimension $m$. We denote by $ h(\*x) \in \mathbb{R}^m$ the feature vector from the penultimate layer of the network. A weight matrix $\mathbf{W} \in \mathbb{R}^{m\times C}$ connects the feature $h(\*x)$ to the output $f(\*x)$, where $C$ is the total number of classes in $\mathcal{Y}=\{1,2,...,C\}$.

\vspace{0.1cm} \noindent \textbf{ReAct: Rectified Activation.} We propose the \texttt{ReAct} operation, which is applied on the penultimate layer of a network:
\begin{align}
\bar h(\*x) = \texttt{ReAct} (h(\*x); c),
\end{align}
where  $\texttt{ReAct}(x;c) = \min(x,c)$ and is applied element-wise to the feature vector $h(\*x)$. In effect, this operation truncates activations above $c$ to limit the effect of noise.  The model output after \emph{rectified activation} is given by:
\begin{align}
f^{\text{ReAct}}(\*x;\theta) = \*W^\top \bar h(\*x) + \mathbf{b},
\end{align}
where $\mathbf{b} \in \mathbb{R}^C$ is the bias vector. A higher $c$ indicates a larger threshold of activation truncation. When $c=\infty$, the output becomes equivalent to the original output $f(\*x;\theta)$ without rectification, where $f(\*x;\theta) = \*W^\top h(\*x) + \*b$.  Ideally, the rectification parameter $c$ should be  chosen to sufficiently preserve the activations for ID data while rectifying that of OOD data. In practice, we set $c$ based on the $p$-th percentile of activations estimated on the ID data. For example, when $p=90$, it indicates that 90\% percent of the ID activations are less than the threshold $c$. We discuss the effect of percentile in detail in Section~\ref{sec:react_experiments}.

\vspace{0.1cm} \noindent \textbf{OOD detection with rectified activation.} During test time, ReAct can be leveraged by a variety of downstream OOD scoring functions relying on $f^{\text{ReAct}}(\*x;\theta)$: 
\begin{align}
\label{eq:react_threshold}
	\mathcal{S}_{\lambda}(\*x; f^\text{ReAct})=\begin{cases} 
      \text{in } & S(\*x;f^\text{ReAct})\ge \lambda \\
      \text{out} & S(\*x;f^\text{ReAct}) < \lambda 
   \end{cases},
\end{align}
 where a thresholding mechanism is exercised to distinguish between ID and OOD during test time. To align with the convention,  
samples with higher scores $S(\*x;f)$ are classified as ID and vice versa. The threshold $\lambda$ is typically chosen so that a high fraction of ID data (\emph{e.g.,} 95\%) is correctly classified.  ReAct can be compatible with several commonly used OOD scoring functions  derived from the model output $f(\*x;\theta)$, including the softmax confidence~\citep{Kevin}, ODIN score~\citep{liang2018enhancing}, and the energy score~\citep{liu2020energy}.  In Section~\ref{sec:react_experiments}, we default to using the energy score (since it is hyperparameter-free and does not require fine-tuning), but demonstrate the benefit of using ReAct with other OOD scoring functions too.


\section{Experiment}
\label{sec:react_experiments}

In this section, we evaluate ReAct on a suite of OOD detection tasks. We first evaluate a on large-scale OOD detection benchmark based on ImageNet~\citep{huang2021mos} (Section~\ref{sec:react_imagenet}), and then proceed in Section~\ref{sec:react_common_benchmark} with CIFAR benchmarks~\citep{krizhevsky2009learning}.

\subsection{Evaluation on Large-scale ImageNet Task}
\label{sec:react_imagenet}

We first evaluate ReAct on a large-scale OOD detection benchmark developed in~\cite{huang2021mos}. Compared to the CIFAR benchmarks that are routinely used in literature, the ImageNet benchmark is more challenging due to a larger label space $(C=1,000)$. 
Moreover, such large-scale evaluation is more relevant to real-world applications, where the deployed models often operate on images that have high resolution and contain more classes than the CIFAR benchmarks. 

\vspace{0.1cm} \noindent \textbf{Setup.} 
We use a pre-trained ResNet-50 model~\citep{he2016identity} for ImageNet-1k. At test time, all images are resized to 224 $\times$ 224. 
We evaluate on four test OOD datasets from (subsets of) \texttt{Places365}~\citep{zhou2017places}, \texttt{Textures}~\citep{cimpoi2014describing}, \texttt{iNaturalist}~\citep{inat}, and \texttt{SUN}~\citep{sun} with non-overlapping categories w.r.t ImageNet. 
We use a validation set of Gaussian noise images, which are generated by sampling from $\mathcal{N}(0,1)$ for each pixel location. To ensure validity, we further verify the activation pattern under Gaussian noise, which exhibits a similar distributional trend with positive skewness and chaoticness; see Figure~\ref{fig:react_noise_act} in Appendix~\ref{sec:react_noise_act} for details.  We select $p$ from $\{10, 65, 80, 85, 90, 95, 99\}$ based on the FPR95 performance. The optimal $p$ is 90. 
\vspace{0.1cm} \noindent \textbf{Comparison with competitive OOD detection methods.} In Table~\ref{tab:main-results}, we compare ReAct with OOD detection methods that are competitive in the literature. For a fair comparison, all  methods use the pre-trained networks \emph{post hoc}.\@
We report performance for each OOD test dataset, as well as the average of the four. ReAct outperforms all baselines considered, including Maximum Softmax Probability~\citep{Kevin}, ODIN~\citep{liang2018enhancing}, Mahalanobis distance~\citep{lee2018simple}, and energy score~\citep{liu2020energy}.  Noticeably, ReAct reduces the FPR95 by \textbf{25.05}\% compared to ~\cite{liang2018enhancing} on ResNet. Note that Mahalanobis requires training a separate binary classifier, and displays limiting performance since the increased size of label space makes the class-conditional Gaussian density estimation less viable. In contrast, ReAct is much easier to use in practice, and can be implemented through a simple \emph{post hoc} activation rectification.

\begin{table}[t]
\caption[Comparison with ReAct and competitive {post hoc} out-of-distribution detection methods.]{\small \textbf{Main results.} Comparison with competitive \emph{post hoc} out-of-distribution detection methods. All methods are based on a model trained on \textbf{ID data only} (ImageNet-1k), without using any auxiliary outlier data. $\uparrow$ indicates larger values are better and $\downarrow$ indicates smaller values are better. The compared baselines include MSP~\citep{Kevin}, ODIN ~\citep{liang2018enhancing}, Mahalanobis~\citep{lee2018simple}, and Energy ~\citep{liu2020energy}. All values are percentages. %
}
\centering
\scalebox{0.58}{
\begin{tabular}{llcccccccccc}
\toprule
\multicolumn{1}{c}{\multirow{4}{*}{\textbf{Model}}} & \multicolumn{1}{c}{\multirow{4}{*}{\textbf{Methods}}} & \multicolumn{8}{c}{\textbf{OOD Datasets}} & \multicolumn{2}{c}{\multirow{2}{*}{\textbf{Average}}} \\ \cline{3-10}
\multicolumn{1}{c}{} & \multicolumn{1}{c}{} & \multicolumn{2}{c}{\textbf{iNaturalist}} & \multicolumn{2}{c}{\textbf{SUN}} & \multicolumn{2}{c}{\textbf{Places}} & \multicolumn{2}{c}{\textbf{Textures}} & \multicolumn{2}{c}{}\\
\multicolumn{1}{c}{} & \multicolumn{1}{c}{} & FPR95 & AUROC & FPR95 & AUROC & FPR95 & AUROC & FPR95 & AUROC & FPR95 & AUROC \\
\multicolumn{1}{c}{} & \multicolumn{1}{c}{} & \multicolumn{1}{c}{$\downarrow$} & \multicolumn{1}{c}{$\uparrow$} & \multicolumn{1}{c}{$\downarrow$} & \multicolumn{1}{c}{$\uparrow$} & \multicolumn{1}{c}{$\downarrow$} & \multicolumn{1}{c}{$\uparrow$} & \multicolumn{1}{c}{$\downarrow$} & \multicolumn{1}{c}{$\uparrow$} & \multicolumn{1}{c}{$\downarrow$} & \multicolumn{1}{c}{$\uparrow$} \\ \midrule
\multirow{6}{*}{ResNet} 
 & MSP  & 54.99 & 87.74 & 70.83 & 80.86 & 73.99 & 79.76 & 68.00 & 79.61 & 66.95 & 81.99 \\
 & ODIN & 47.66 & 89.66 & 60.15 & 84.59 & 67.89 & 81.78 & 50.23 & 85.62 & 56.48 & 85.41 \\
 & Mahalanobis & 97.00 & 52.65 & 98.50 & 42.41 & 98.40 & 41.79 & 55.80 & 85.01 & 87.43 & 55.47 \\
 & Energy & 55.72 & 89.95 & 59.26 & 85.89 & 64.92 & 82.86 & 53.72 & 85.99 & 58.41 & 86.17 \\
 &  \textbf{ReAct (Ours)} & \textbf{20.38} & \textbf{96.22 } & \textbf{24.20} & \textbf{94.20 } & \textbf{33.85} & \textbf{91.58} & \textbf{47.30} & \textbf{89.80} & \textbf{31.43} & \textbf{92.95} \\ \midrule
\multirow{6}{*}{MobileNet} 
 & MSP & 64.29 & 85.32 & 77.02 & 77.10 & 79.23 & 76.27 & 73.51 & 77.30 & 73.51 & 79.00 \\
 & ODIN & 55.39 & 87.62 & 54.07 & 85.88 & 57.36 & 84.71 & 49.96 & 85.03 & 54.20 & 85.81 \\
 & Mahalanobis & 62.11 & 81.00 & 47.82 & 86.33 & 52.09 & 83.63 & 92.38 & 33.06 & 63.60 & 71.01	\\
 & Energy & 59.50 & 88.91 & 62.65 & 84.50 & 69.37 & 81.19 & 58.05 & 85.03 & 62.39 & 84.91 \\
 & \textbf{ReAct (Ours)} & \textbf{42.40} & \textbf{91.53} & \textbf{47.69} & \textbf{88.16} & \textbf{51.56} & \textbf{86.64 } & \textbf{38.42 } & \textbf{91.53 } & \textbf{45.02} & \textbf{89.47 } \\ \bottomrule 
\end{tabular}
} 
\label{tab:main-results}

\end{table}

\vspace{0.1cm} \noindent \textbf{Effect of rectification threshold $c$.} 
We now characterize  the effect of the rectification parameter $c$, which can be modulated by the percentile $p$ described in Section~\ref{sec:react_method}.  
In Table~\ref{tab:c-ablation}, we summarize the OOD detection performance, where we vary $p=\{10, 65, 80, 85, 90, 95, 99\}$.  
This ablation confirms that over-activation does compromise the ability to detect OOD data, and ReAct can effectively alleviate this problem. Moreover, when $p$ is sufficiently large, ReAct can improve OOD detection while maintaining a comparable ID classification accuracy. Alternatively, once a sample is detected to be ID, one can always use the original activation $h(\*x)$, \emph{which is guaranteed to give identical classification accuracy}. When $p$ is too small, OOD performance starts to degrade as expected.%

 \begin{table}[t]
 \caption[Effect of rectification threshold for inference.]{\small Effect of rectification threshold for inference. Model is trained on ImageNet using ResNet-50~\citep{he2016deep}. All numbers are percentages and are averaged over 4 OOD test datasets. }
\centering
\scalebox{0.8}{
\begin{tabular}{c|ccccc}
\toprule
\textbf{Rectification percentile}               & \textbf{FPR95} \newline $~\downarrow$ & \textbf{AUROC} $~\uparrow$ & \textbf{AUPR} \newline$~\uparrow$ & \textbf{ID ACC.}$~\uparrow$  & \textbf{Threshold} $c$ \\ \midrule
No ReAct & 58.41                                  & 86.17                       & 96.88                              & 75.08   &  $\infty$                                    \\ 
$p=99$                              & 44.57                                  & 90.45                       & 97.96                              & {75.12}   & 2.25                                 \\
$p=95$                              & 35.39                                  & 92.39                       & 98.37                              & 74.76   & 1.50                                 \\
$p=90$                              & {31.43}                                  & {92.95}                       & {98.50}                              & 73.75   & 1.00                                 \\
$p=85$                              & 34.08                                  & 92.05                       & 98.35                              & 72.91   & 0.84                                 \\
$p=80$                              & 41.51                                  & 89.54                       & 97.91                              & 71.93   & 0.72                                 \\
$p=65$                              & 74.62                                  & 74.14                       & 94.39                              & 67.14   & 0.50                                 \\
$p=10$                              & 74.70                                  & 57.55                       & 86.06                              & 1.22    & 0.06                                 \\ 
\bottomrule
\end{tabular}
}
\label{tab:c-ablation}
\end{table}

\vspace{0.1cm} \noindent \textbf{Effect on other network architectures.} We show that ReAct is effective on a different architecture in Table~\ref{tab:main-results}. In particular, we consider a lightweight model MobileNet-v2~\citep{mobilenet2018CVPR}, which can be suitable for OOD detection in on-device mobile applications. Same as before, we apply ReAct on the output of the penultimate layer, with the rectification threshold chosen based on the $90$-th percentile. Our method reduces the FPR95 by \textbf{9.18}\% compared to the best baseline considered~\citep{liang2018enhancing}.

\vspace{0.1cm} \noindent \textbf{What about applying ReAct on other layers?} Our results suggest that applying ReAct on the penultimate layer is the most effective, since the activation patterns are most distinctive. To see this, we provide the activation and performance study for intermediate layers in Appendix~\ref{sec:react_diff_layers} (see Figure~\ref{fig:react_diff_layers} and Table~\ref{tab:diff_layers}). Interestingly, early layers display less distinctive signatures between ID and OOD data. This is expected because neural networks generally capture lower-level features in early layers (such as Gabor filters~\citep{zeiler2014visualizing} in layer 1), whose activations can be very similar between ID and OOD. The semantic-level features only emerge as with deeper layers, where ReAct is the most effective. %

\vspace{-0.2cm}
\subsection{Evaluation on CIFAR Benchmarks}
\label{sec:react_common_benchmark}

\vspace{0.1cm} \noindent \textbf{Datasets.}
We evaluate on CIFAR-10 and CIFAR-100~\citep{krizhevsky2009learning} datasets as in-distribution data, using the standard split with 50,000 training images and 10,000 test images. For OOD data, we consider six common benchmark datasets: \texttt{Textures}~\citep{cimpoi2014describing}, \texttt{SVHN}~\citep{netzer2011reading}, \texttt{Places365}~\citep{zhou2017places}, \texttt{LSUN-Crop}~\citep{yu2015lsun}, \texttt{LSUN-Resize}~\citep{yu2015lsun}, and \texttt{iSUN}~\citep{xu2015turkergaze}.

\vspace{0.1cm} \noindent \textbf{Experimental details.}
We train a standard ResNet-18~\citep{he2016deep} model on in-distribution data. The feature dimension of the penultimate layer is 512. For both CIFAR-10 and CIFAR-100, the models are trained for 100 epochs. The start learning rate is 0.1 and decays by a factor of 10 at epochs 50, 75, and 90. For threshold $c$, we use the 90-th percentile of activations estimated on the ID data. %

\begin{table}[b]
\caption[Performance of different OOD scoring functions equipped with ReAct.]{\small \textbf{Ablation results.} ReAct is compatible with different OOD scoring functions. For each ID dataset, we use the same model and compare the performance with and without ReAct respectively. $\uparrow$ indicates larger values are better and $\downarrow$ indicates smaller values are better. All values are percentages and are averaged over multiple OOD test datasets. Detailed performance for each OOD test dataset is available in Table~\ref{tab:detail-results}. }
\scalebox{0.7}{
\begin{tabular}{lccc|ccc|ccc}
\toprule
\multicolumn{1}{c}{\multirow{3}{*}{\textbf{Method}}} & \multicolumn{3}{c}{\textbf{CIFAR-10}}                                                                       & \multicolumn{3}{c}{\textbf{CIFAR-100}}                                                                      & \multicolumn{3}{c}{\textbf{ImageNet}}                                                                       \\
\multicolumn{1}{c}{}                        & \textbf{FPR95}                            & \textbf{AUROC}                          & \textbf{AUPR}                           & \textbf{FPR95}                            & \textbf{AUROC}                          & \textbf{AUPR}                           & \textbf{FPR95}                            & \textbf{AUROC}                          & \textbf{AUPR}                           \\
\multicolumn{1}{c}{}                        & \multicolumn{1}{c}{$\downarrow$} & \multicolumn{1}{c}{$\uparrow$} & \multicolumn{1}{c}{$\uparrow$} & \multicolumn{1}{c}{$\downarrow$} & \multicolumn{1}{c}{$\uparrow$} & \multicolumn{1}{c}{$\uparrow$} & \multicolumn{1}{c}{$\downarrow$} & \multicolumn{1}{c}{$\uparrow$} & \multicolumn{1}{c}{$\uparrow$} \\ \midrule

MSP & 56.71 & 91.17 & 79.11 & 80.72 & 76.83 & 78.41 & 66.95 & 81.99 & 95.76 \\
\rowcolor{Gray} MSP + ReAct & 53.81 & 91.70 & 92.11 & 75.45 & 80.40 & 84.28 & 58.28 & 87.06 & 97.22 \\ \midrule
Energy & 35.60 & 93.57 & 95.01 & 71.93 & 82.82 & 86.28 & 58.41 & 86.17 & 96.88 \\
\rowcolor{Gray} Energy+ReAct & {32.91} & \textbf{94.27 } & \textbf{95.53} & \textbf{59.61} & \textbf{87.48} & \textbf{89.63} & \textbf{31.43} & \textbf{92.95} & \textbf{98.50} \\
\midrule
 ODIN & 31.10 & 93.79 & 94.95 & 66.21 & 82.88 & 86.25 & 56.48 & 85.41 & 96.61 \\
\rowcolor{Gray} ODIN+ReAct & \textbf{28.81} & 94.04 & 94.82 & 59.91 & 85.23 & 87.53 & 44.10 & 90.70 & 98.04 \\ 
\bottomrule          
\end{tabular}
}
        \label{tab:ablation-results}
\end{table}

\vspace{0.1cm} \noindent \textbf{ReAct is compatible with various OOD scoring functions.} We show in Table~\ref{tab:ablation-results} that ReAct is a flexible method that is compatible with alternative scoring functions $S(\*x;f^\text{ReAct})$. To see this, we consider commonly used scoring functions, and compare the performance both with and without using ReAct respectively. In particular, we consider softmax confidence~\citep{Kevin}, ODIN score~\citep{liang2018enhancing} as well as energy score~\citep{liu2020energy}---all of which derive OOD scores directly from the output $f(\*x)$. In particular, using ReAct on energy score yields the best performance, which is desirable as energy is a hyperparameter-free OOD score and is easy to compute in practice. Note that Mahalanobis~\citep{lee2018simple} estimates OOD score using feature representations instead of the model output $f(\*x)$, hence is less compatible with ReAct. On all three in-distribution datasets, using ReAct consistently outperforms the counterpart without rectification. Results in Table~\ref{tab:ablation-results} are based on the average across multiple OOD test datasets. {Detailed performance for each OOD test dataset is provided in  Table~\ref{tab:detail-results}}.
%


\section{Theoretical Insight}
\label{sec:react_theory}

To better understand the effect of ReAct, we mathematically model the ID and OOD activations as rectified Gaussian distributions and derive their respective distributions after applying ReAct. These modeling assumptions are based on the activation statistics observed on ImageNet in Figure~\ref{fig:react_teaser}. In the following analysis, we show that {ReAct reduces mean OOD activations more than ID activations since OOD activations are more positively skewed} (see Section \ref{sec:react_theory_details} for derivations).

\vspace{0.1cm} \noindent \textbf{ID activations.} Let $h(\mathbf{x}) = (z_1,\ldots,z_m) =: \mathbf{z}$ be the activations for the penultimate layer. We assume each $z_i \sim \mathcal{N}^R(\mu, \sigma_\text{in}^2)$ for some $\sigma_\text{in} > 0$. 
Here $\mathcal{N}^R(\mu, \sigma_\text{in}^2) = \max(0, \mathcal{N}(\mu, \sigma_\text{in}^2))$ denotes the rectified Gaussian distribution, which reflects the fact that activations after ReLU have no negative components. Before truncation with ReAct, the expectation of $z_i$ is given by:
\begin{equation*}
\mathbb{E}_\text{in}[z_i] = \left[ 1 - \Phi \left( \frac{-\mu}{\sigma_\text{in}} \right) \right] \cdot \mu + \phi \left( \frac{-\mu}{\sigma_\text{in}} \right) \cdot \sigma_\text{in},
\end{equation*}
where $\Phi$ and $\phi$ denote the cdf and pdf of the standard normal distribution, respectively. After rectification with ReAct, the expectation of $\bar{z}_i = \min(z_i, c)$ is:
\begin{equation*}
\mathbb{E}_\text{in}[\bar{z}_i] = \left[ \Phi \left( \frac{c - \mu}{\sigma_\text{in}} \right) - \Phi \left( \frac{-\mu}{\sigma_\text{in}} \right) \right] \cdot \mu + \left[1 - \Phi \left( \frac{c - \mu}{\sigma_\text{in}} \right) \right] \cdot c + \left[ \phi \left( \frac{-\mu}{\sigma_\text{in}} \right) - \phi \left( \frac{c - \mu}{\sigma_\text{in}} \right) \right] \cdot \sigma_\text{in},
\end{equation*}
The reduction in activation after ReAct is:
\begin{equation}
    \label{eq:react_id_reduction}
    \mathbb{E}_\text{in} [z_i - \bar{z}_i] = \phi \left( \frac{c - \mu}{\sigma_\text{in}} \right) \cdot \sigma_\text{in} - \left[ 1 - \Phi \left( \frac{c - \mu}{\sigma_\text{in}} \right) \right] \cdot (c - \mu)
\end{equation}

\vspace{0.1cm} \noindent \textbf{OOD activations.} We model OOD activations as being generated by a two-stage process: Each OOD distribution defines a set of $\mu_i$'s that represent the mode of the activation distribution for unit $i$, and the activations $z_i$ given $\mu_i$ is represented by $z_i | \mu_i \sim \mathcal{N}^R(\mu_i, \tau^2)$ with $\tau > 0$. For instance, the dark gray line in Figure~\ref{fig:react_teaser} shows the set $\mu_i$'s on the iNaturalist dataset, and the light gray area depicts the distribution of $z_i | \mu_i$. One commonality across different OOD datasets is that the distribution of $\mu_i$ is \emph{positively skewed}. The assumption of positive skewness is motivated by our observation on real OOD data. Indeed, Figure~\ref{fig:react_mui} shows the empirical distribution of $\mu_i$ on an ImageNet pre-trained model for four OOD datasets, all of which display strong positive-skewness, \emph{i.e.}, the right tail has a much higher density than the left tail. This observation is surprisingly consistent across datasets and model architectures. Although a more in-depth understanding of the fundamental cause of positive skewness is important, for this work, we chose to rely on this empirically verifiable assumption and instead focus on analyzing our method ReAct.

\begin{figure}[htb]
	\begin{center}
		\includegraphics[width=0.90\linewidth]{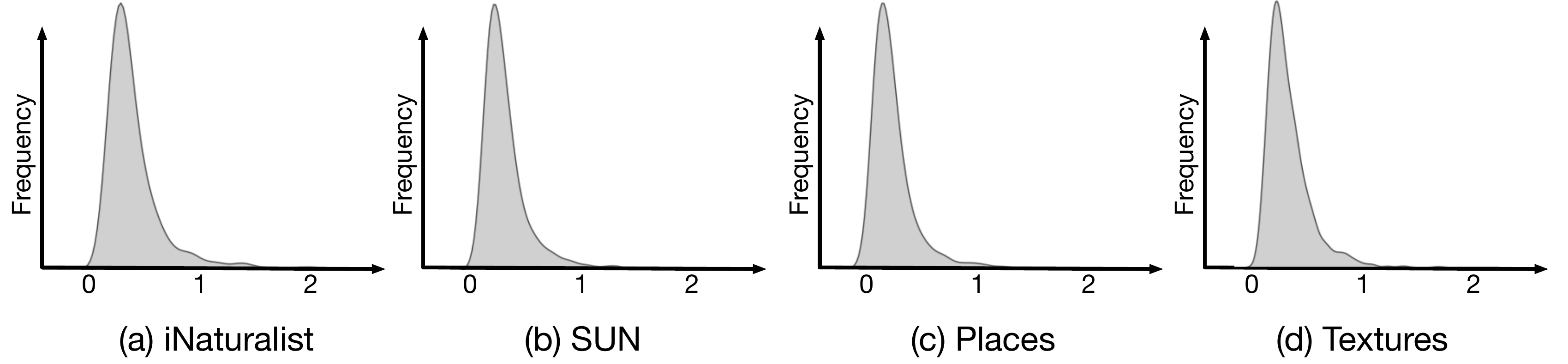}
	\end{center}
    \vspace{-2ex}
	\caption[Positively skewed distribution of $\mu_i$ (mean of each unit) in the penultimate layer for four OOD datasets.]{\small Positively skewed distribution of $\mu_i$ (mean of each unit) in the penultimate layer for four OOD datasets (iNaturalist~\citep{inat}, SUN~\citep{sun}, Places~\citep{zhou2017places}, Textures~\citep{cimpoi2014describing}). Model is trained on ImageNet. Note that the right tail has a higher density than the left tail. By left and right tail we refer to the samples that are to the left and right of the \emph{median}, which is close to the mode in the case of skewed distribution.}
	\label{fig:react_mui}
\end{figure}

Utilizing the positive-skewness property of $\mu_i$, we analyze the distribution of $z_i$ after marginalizing out $\mu_i$, which corresponds to averaging across different $\mu_i$'s induced by various OOD distributions. Let $x_i | \mu_i \sim \mathcal{N}(\mu_i, \tau^2)$ so that $z_i | \mu_i = \max(x_i | \mu_i, 0)$. Since $x_i | \mu_i$ is symmetric and $\mu_i$ is positively-skewed, the marginal distribution of $x_i$ is also positively-skewed\footnote{This can be argued rigorously using Pearson's mode skewness coefficient if the distribution of $\mu_i$ is unimodal.}, which we model with the epsilon-skew-normal (ESN) distribution~\citep{mudholkar2000epsilon}. Specifically, we assume that $x_i \sim \esn(\mu, \sigma_\text{out}^2, \epsilon)$, which has the following density function:
\begin{equation}
    q(x) = 
    \begin{cases}
    \phi((x - \mu) / \sigma_\text{out} (1+\epsilon)) / \sigma_\text{out} & \text{if } x < \mu, \\
    \phi((x - \mu) / \sigma_\text{out} (1-\epsilon)) / \sigma_\text{out} & \text{if } x \geq \mu.
    \end{cases}
    \label{eq:react_esn}
\end{equation}
with $\epsilon \in [-1,1]$ controlling the skewness.  In particular, the ESN distribution is positively-skewed when $\epsilon < 0$. It follows that $z_i = \max(x_i, 0)$, with expectation:
\begin{equation}
    \mathbb{E}_\text{out}[z_i] = \mu - (1+\epsilon) \Phi \left( \frac{-\mu}{(1+\epsilon) \sigma_\text{out}} \right) \cdot \mu + (1+\epsilon)^2 \phi \left( \frac{-\mu}{(1+\epsilon) \sigma_\text{out}} \right) \cdot \sigma_\text{out} - \frac{4 \epsilon}{\sqrt{2 \pi}} \cdot \sigma_\text{out}.
    \label{eq:react_ood_before_mean}
\end{equation}
Expectation after applying ReAct becomes:
\begin{align}
    \mathbb{E}_\text{out}[\bar{z}_i] &= \mu - (1 + \epsilon) \Phi \left( \frac{-\mu}{(1+\epsilon)\sigma_\text{out}} \right) \cdot \mu + (1 - \epsilon) \left[ 1 - \Phi \left( \frac{c - \mu}{(1-\epsilon)\sigma_\text{out}} \right) \right] \cdot (c - \mu) \nonumber \\
    &\qquad + \left[ (1+\epsilon)^2 \phi\left(\frac{-\mu}{(1+\epsilon)\sigma_\text{out}}\right) - (1-\epsilon)^2 \phi\left(\frac{c - \mu}{(1-\epsilon)\sigma_\text{out}}\right) - \frac{4 \epsilon}{\sqrt{2 \pi}} \right] \cdot \sigma_\text{out},
    \label{eq:react_ood_after_mean}
\end{align}
Hence:
\begin{equation}
    \label{eq:react_ood_reduction}
    \mathbb{E}_\text{out} [z_i - \bar{z}_i] = (1-\epsilon)^2 \phi\left(\frac{c - \mu}{(1-\epsilon)\sigma_\text{out}}\right) \cdot \sigma_\text{out} - (1 - \epsilon) \left[ 1 - \Phi \left( \frac{c - \mu}{(1-\epsilon)\sigma_\text{out}} \right) \right] \cdot (c - \mu),
\end{equation}
which recovers \autoref{eq:react_id_reduction} when $\epsilon = 0$ and $\sigma_\text{out} = \sigma_\text{in}$.

\begin{figure}[t]
\centering
\includegraphics[width=0.6\textwidth]{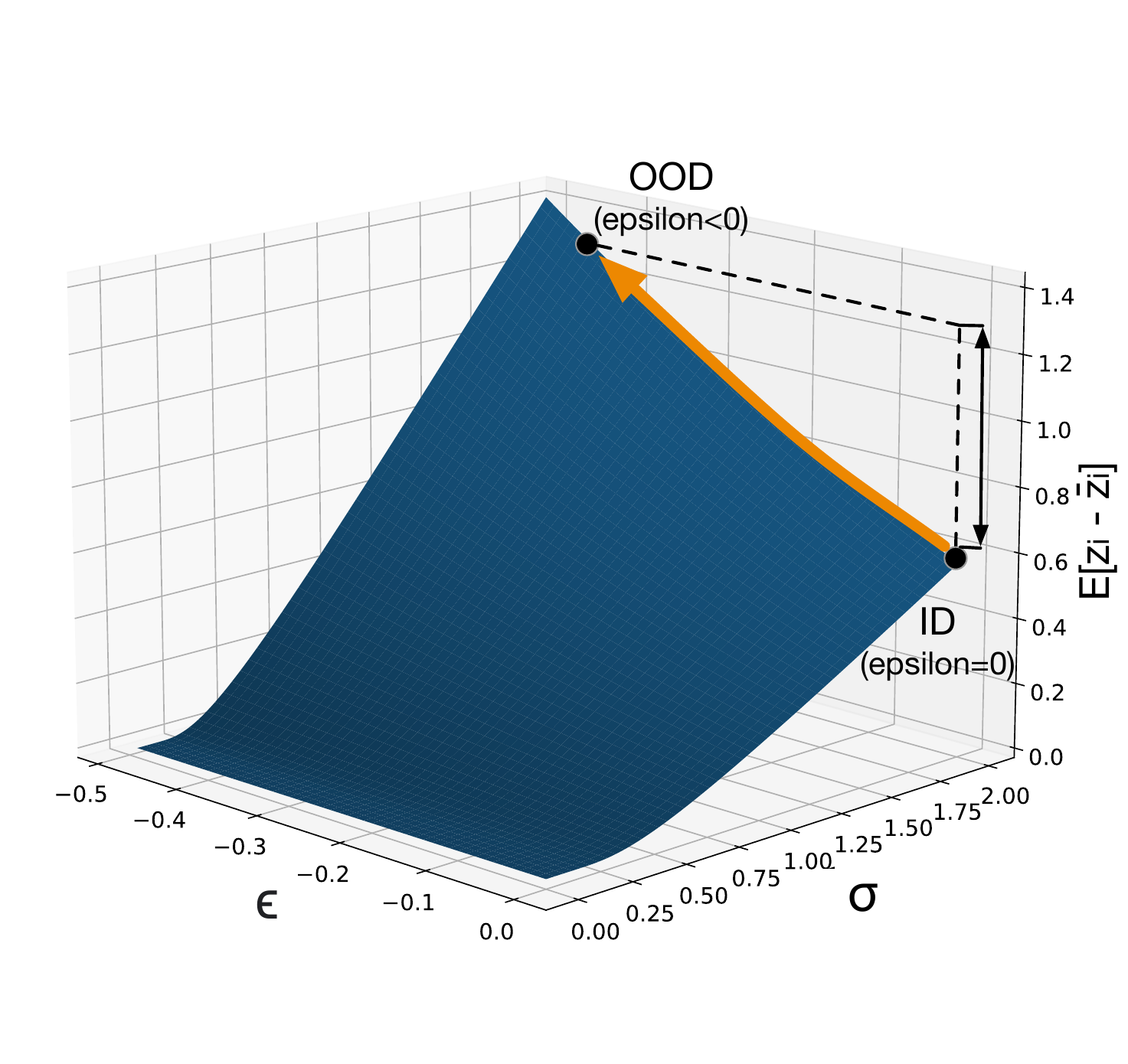}
\vspace{-0.5cm}
\caption[Plot showing the relationship between the skewness parameter $\epsilon$ and the chaotic-ness parameter $\sigma$ on activation reduction after applying ReAct.]{Plot showing the relationship between the skewness parameter $\epsilon$ and the chaotic-ness parameter $\sigma$ on activation reduction after applying ReAct. The function $\mathbb{E}[z_i - \bar{z}_i]$ is increasing in both $-\epsilon$ and $\sigma$, which suggests that ReAct has a greater reduction effect for activation distributions with positive skewness ($\epsilon<0)$ and chaotic-ness---two signature characteristics of OOD activation.}
\label{fig:react_act_reduction}
\end{figure}

\vspace{0.1cm} \noindent \textbf{Remark 1: Activation reduction on OOD is more than ID.} \autoref{fig:react_act_reduction} shows a plot of $\mathbb{E}[z_i - \bar{z}_i]$ for $\mu = 0.5$ and $c = 1$. Observe that decreasing $\epsilon$ (more positive-skewness) or increasing $\sigma$ (more chaotic-ness) leads to a larger reduction in the mean activation after applying ReAct. For example, under the same $\sigma$, a larger $\mathbb{E}_\text{out} [z_i - \bar{z}_i] - \mathbb{E}_\text{in}  [z_i - \bar{z}_i]$ can be observed by the gap of $z$-axis value between $\epsilon=0$ and $\epsilon<0$ (\emph{e.g.}, $\epsilon=-0.4$). This suggests that rectification on average affects OOD activations more severely compared to ID activations. 

\vspace{0.1cm} \noindent \textbf{Remark 2: Output reduction on OOD is more than ID.} To derive the effect on the distribution of model output, consider output logits $f(\mathbf{z}) = W \mathbf{z} + \mathbf{b}$ and assume without loss of generality that $W \mathbf{1} > 0$ element-wise. This can be achieved by adding a positive constant to $W$ without changing the output probabilities or classification decision. Let $\delta = \mathbb{E}_\text{out}[\mathbf{z} - \bar{\mathbf{z}}] - \mathbb{E}_\text{in}[\mathbf{z} - \bar{\mathbf{z}}] > 0$. Then:
\begin{align*}
    \mathbb{E}_\text{out}[f(\mathbf{z}) - f(\bar{\mathbf{z}})] & = \mathbb{E}_\text{out}[W (\mathbf{z} - \bar{\mathbf{z}})] = W \mathbb{E}_\text{out}[\mathbf{z} - \bar{\mathbf{z}}] = W \left( \mathbb{E}_\text{in}[\mathbf{z} - \bar{\mathbf{z}}] + \delta \mathbf{1} \right) \\ & = \mathbb{E}_\text{in}[W (\mathbf{z} - \bar{\mathbf{z}})] + \delta W \mathbf{1} \\ & > \mathbb{E}_\text{in}[f(\mathbf{z}) - f(\bar{\mathbf{z}})].
\end{align*}
Hence the increased separation between OOD and ID activations transfers to the output space as well. Note that the condition of $W \mathbf{1} > 0$ is sufficient but not necessary for this result to hold. In fact, our experiments in Section \ref{sec:react_experiments} do not require this condition. However, we verified empirically that ensuring $W \mathbf{1} > 0$ by adding a positive constant to $W$ and applying ReAct does confer benefits to OOD detection, which validates our theoretical analysis.

\vspace{0.1cm} \noindent \textbf{Why ReAct improves the OOD scoring functions?} Our theoretical analysis above shows that ReAct suppresses logit output for OOD data more so than for ID data. This means that for detection scores depending on the logit output (\emph{e.g.,} energy score~\citep{liu2020energy}), the gap between OOD and ID score will be enlarged after applying ReAct, which makes thresholding more capable of separating OOD and ID inputs; see Figure~\ref{fig:react_teaser}(a) and (c) for a concrete example showing this effect.


\section{Discussion and Further Analysis}
\label{sec:react_discussion}

\subsection{Why do OOD samples trigger abnormal unit activation patterns?}
So far we have shown that OOD data can trigger unit activation patterns that are significantly different from ID data, and that ReAct can effectively alleviate this issue (empirically in Section~\ref{sec:react_experiments} and theoretically in Section~\ref{sec:react_theory}). Yet a question left in mystery is \emph{why} such a pattern occurs in modern neural networks? 
Answering this question requires carefully examining the internal mechanism by which the network is trained and evaluated. Here we provide one plausible explanation for the activation patterns observed in Figure~\ref{fig:react_teaser}, with the hope of shedding light for future research.

\begin{table*}[htb]
\caption[Comparison with oracle using OOD's BN statistics.]{\small Comparison with oracle using OOD's BN statistics. The model is trained on ImageNet (see Section~\ref{sec:react_imagenet}). Values are AUROC. }
\centering
\scalebox{0.8}{
\begin{tabular}{lcccc}
\toprule
\textbf{Method}  &\textbf{iNaturalist}  & \textbf{Places}  & \textbf{SUN} & \textbf{Textures}   \\
\hline
Oracle (batch OOD for estimating BN statistics) & 99.59 & 99.09 & 98.32 & 91.43\\
 {ReAct} (single OOD) &  96.22 & 	94.20 & 	91.58 & 	89.80\\ 
  No ReAct~\citep{liu2020energy} & 89.95   & 85.89      & 82.86       & 85.99       \\ 
 \bottomrule
 \end{tabular}}
 \label{tab:bn-results}
\end{table*}

\begin{figure}[htb]
	\begin{center}
		\includegraphics[width=0.9\linewidth]{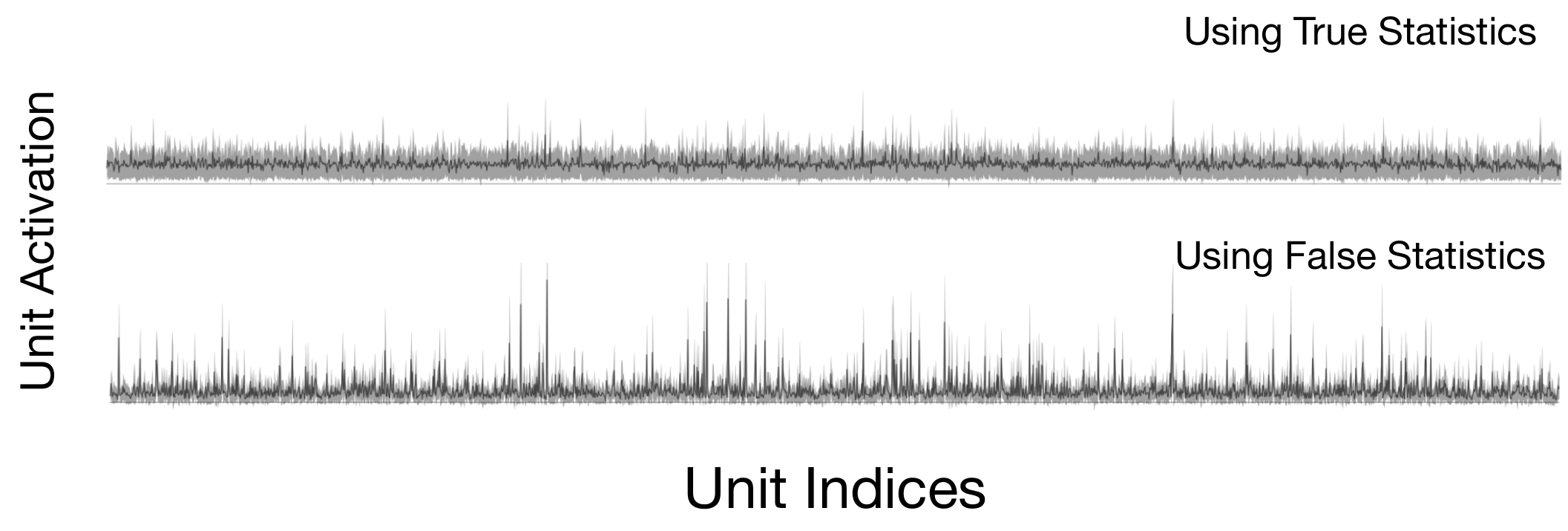}
	\end{center}
 	\vspace{-0.4cm}
	\caption[The distribution of per-unit activations in the penultimate layer for OOD data by using true vs. mismatched BatchNorm statistics for OOD data.]{\small The distribution of per-unit activations in the penultimate layer for OOD data (iNaturalist) by using \textit{true} (top) vs. \textit{mismatched} (bottom) BatchNorm statistics for OOD data.}
	\label{fig:react_trainval}
\end{figure}

Intriguingly, our analysis reveals an important insight that batch normalization (BatchNorm)~\citep{bn2015pmlr}---a common technique employed during model training---is in fact both a blessing (for ID classification) and a curse (for OOD detection). Specifically, for a unit activation denoted by $z$, the network estimates the running mean $\mathbb{E}_\text{in}(z)$ and variance $\text{Var}_\text{in}(z)$, over the entire ID training set during training. During inference time, the network applies BatchNorm statistics $\mathbb{E}_\text{in}(z)$ and $\text{Var}_\text{in}(z)$, which helps normalize the activations for the test data with the same distribution $\mathcal{D}_\text{in}$:
\begin{align}
    \text{BatchNorm}(z;\gamma, \beta, \epsilon) = \frac{z-\mathbb{E}_\text{in}[z]}{\sqrt{\text{Var}_\text{in}[z]+\epsilon}}\cdot \gamma + \beta
\end{align}
However, our key observation is that using \emph{mismatched} BatchNorm statistics---that are estimated on $\mathcal{D}_\text{in}$ yet blindly applied to the OOD $\mathcal{D}_\text{out}$---can trigger abnormally high unit activations. 
As a thought experiment, we instead apply the \emph{true} BatchNorm statistics estimated on a batch of OOD images and we observe well-behaved activation patterns with near-constant mean and standard deviations---just like the ones observed on the ID data  (see Figure~\ref{fig:react_trainval}, top). Our study therefore reveals one of the fundamental causes for neural networks to produce overconfident predictions for OOD data. After applying the true statistics (estimated on OOD), the output distributions between ID and OOD data become much more separable. While this thought experiment has shed some guiding light, the solution of estimating BatchNorm statistics on a batch of OOD data is not at all satisfactory and realistic. Arguably, it poses a strong and impractical assumption of having access to a batch of OOD data during test time. Despite its limitation, we view it as an oracle, which serves as an \emph{upper bound on performance} for ReAct. %

In particular, results in Table~\ref{tab:bn-results} suggest that our method favorably matches the oracle performance using the ground truth BN statistics. This is encouraging as our method does not impose any batch assumption and can be feasible for \emph{single-input} testing scenarios. %

\begin{table}[htb]
\caption[Effectiveness of ReAct for different normalization methods.]{\small \textbf{Effectiveness of ReAct for different normalization methods.} ReAct consistently improves OOD detection performance for the model trained with GroupNorm and WeightNorm. In-distribution is ImageNet-1k dataset. \textbf{Bold} numbers are superior results.  }
\centering
\scalebox{0.58}{
\begin{tabular}{llcccccccccc}
    \toprule
\multicolumn{1}{c}{\multirow{4}{*}{\textbf{}}} & \multicolumn{1}{c}{\multirow{4}{*}{\textbf{Methods}}} & \multicolumn{8}{c}{\textbf{OOD Datasets}}                                                                                      & \multicolumn{2}{c}{\multirow{2}{*}{\textbf{Average}}}  \\ \cline{3-10}
\multicolumn{1}{c}{}                       & \multicolumn{1}{c}{}                         & \multicolumn{2}{c}{\textbf{iNaturalist}} & \multicolumn{2}{c}{\textbf{SUN}} & \multicolumn{2}{c}{\textbf{Places}} & \multicolumn{2}{c}{\textbf{Textures}} & \multicolumn{2}{c}{}                       \\
\multicolumn{1}{c}{}                       & \multicolumn{1}{c}{}                         & FPR95          & AUROC          & FPR95      & AUROC      & FPR95        & AUROC       & FPR95         & AUROC        & FPR95                 & AUROC                        \\
\multicolumn{1}{c}{}                       & \multicolumn{1}{c}{}                         &        \multicolumn{1}{c}{$\downarrow$}         &     \multicolumn{1}{c}{$\uparrow$}            &      \multicolumn{1}{c}{$\downarrow$}       &    \multicolumn{1}{c}{$\uparrow$}   &        \multicolumn{1}{c}{$\downarrow$}       &    \multicolumn{1}{c}{$\uparrow$}    &       \multicolumn{1}{c}{$\downarrow$}         &    \multicolumn{1}{c}{$\uparrow$}    &         \multicolumn{1}{c}{$\downarrow$}               &       \multicolumn{1}{c}{$\uparrow$}                       \\    \hline

\multirow{2}{*}{GroupNorm} & w.o. ReAct & 65.38 & 88.45 & 65.11 & 85.52 & 65.46 & 84.34 & 69.17 & 83.22 & 66.28 & 85.38 \\
  & w/ ReAct    & \textbf{39.45} & \textbf{92.95} & \textbf{51.57} & \textbf{87.90} & \textbf{52.78} & \textbf{87.32} & \textbf{62.50} & \textbf{81.76} & \textbf{51.58} & \textbf{87.48} \\ \midrule
\multirow{2}{*}{WeightNorm}  & w.o. ReAct & 40.71 & 92.52 & 48.07 & 89.39 & 50.92 & 87.87 & 61.65 & 80.71 & 50.34 & 87.62 \\
  &  w/ ReAct    &\textbf{19.73} & \textbf{95.91} & \textbf{31.39} & \textbf{93.21} & \textbf{42.34} & \textbf{88.94} & \textbf{13.74} & \textbf{96.98} & \textbf{26.80} & \textbf{93.76} \\ \bottomrule 
\end{tabular}
} 
        \label{tab:norm-ablation}
\end{table}

\subsection{What about networks trained with different normalization mechanisms?}
Going beyond batch normalization~\citep{bn2015pmlr}, we further investigate (1) whether networks trained with alternative normalization approaches exhibit similar activation patterns, and (2) whether ReAct is helpful there. To answer this question, we additionally evaluate networks trained with  WeightNorm~\citep{salimans2016weight} and GroupNorm~\citep{wu2018group}---two other well-known normalization methods. As shown in Figure~\ref{fig:react_abnormal}, the unit activations also display highly distinctive signature patterns between ID and OOD data, with more chaos on OOD data. In all cases, the networks are trained to adapt to the ID data, resulting in abnormal activation signatures on OOD data in testing. Unlike BatchNorm, there is no easy oracle (\emph{e.g.}, re-estimating the statistics on OOD data) to counteract the ill-fated normalizations.

We apply ReAct on models trained with WeightNorm and GroupNorm, and report results in Table~\ref{tab:norm-ablation}. Our results suggest that ReAct is consistently effective under various normalization schemes. For example, ReAct reduces the average FPR95 by \textbf{23.54}\% and \textbf{14.7}\% respectively. 
Overall, ReAct has shown broad efficacy and compatibility with different OOD scoring functions~(Section~\ref{sec:react_common_benchmark}).

\begin{figure}[htb]
	\begin{center}
		\includegraphics[width=0.90\linewidth]{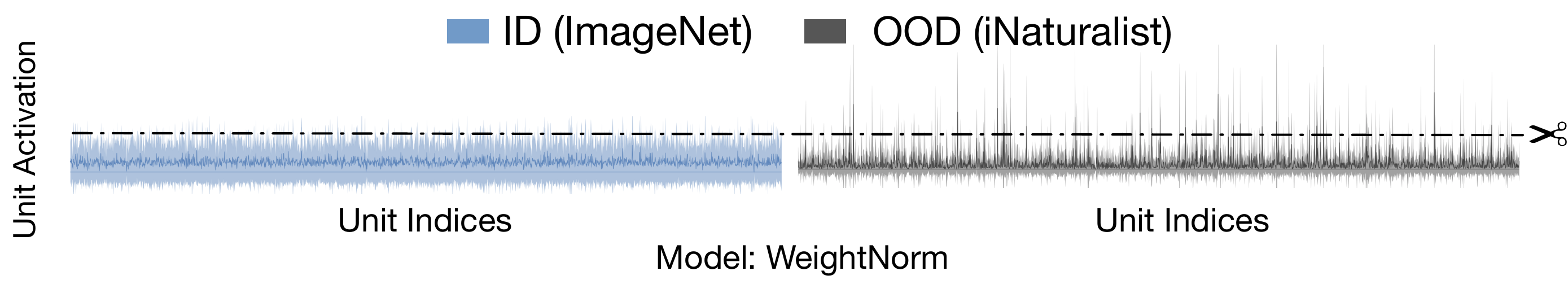}
	\end{center}
	\caption[The distribution of per-unit activations in the penultimate layer for ID  and OOD on model trained with weight normalization.]{\small The distribution of per-unit activations in the penultimate layer for ID (ImageNet~\citep{deng2009imagenet}) and OOD (iNaturalist~\citep{inat}) on model trained with weight normalization~\citep{salimans2016weight}.} %
	\label{fig:react_abnormal}
 \vspace{-1.4cm}
\end{figure}


\section{Additional Related Work}
\label{sec:react_related}

\noindent \textbf{Neural network activation analysis.}
Neural networks have been studied at the granularity of the activation of individual layers~\citep{bn2015pmlr, morcos2018iclr, jason2014nips, zhou2018revisiting, sun2019adaptive}, or individual networks~\citep{li2015convergent}. In particular, ~\citet{li2015convergent} studied the similarity of activation space between two independently trained neural networks. Previously,  ~\citet{hein2019relu} showed that neural networks with ReLU activation can lead to arbitrary high activation for inputs far away from the training data. We show that using ReAct could efficiently alleviate this undesirable phenomenon. ReAct does not rely on auxiliary data and can be conveniently used for pre-trained models. The idea of rectifying unit activation~\citep{relu62010}, which is known as Relu6, was used to facilitate the learning of sparse features. In this chapter, we show that rectifying activation can drastically alleviate the overconfidence issue for OOD data, and as a result, improve OOD detection. 

\vspace{0.1cm} \noindent \textbf{Distributional shifts.} Distributional shifts have attracted increasing research interests~\citep{koh2021wilds}. It is important to recognize and differentiate various types of distributional shift problems. Literature in OOD detection is commonly concerned about model reliability and detection of label-space shifts, where the OOD inputs have disjoint labels \emph{w.r.t.} ID data and therefore \emph{should not be predicted by the model}. Meanwhile, some works considered {label distribution} shift~\citep{saerens2002adjusting, lipton2018detecting, shrikumar2019calibration, azizzadenesheli2019regularized, alexandari2020maximum, wu2021online}, where the label space is common between ID and OOD but the marginal label distribution changes, as well as covariate shift in the input space~\citep{hendrycks2019benchmarking, ovadia2019can}, where inputs can be corruption-shifted or domain-shifted~\citep{sun2020test, godin2020CVPR}.
It is important to note that our work focuses on the detection of shifts where the label space $\mathcal{Y}$ is different between ID and OOD data and hence the model should not make any prediction, instead of covariate shift where the model is expected to {generalize}.


\section{Summary}
\label{sec:react_summary}

This chapter provides a simple activation rectification strategy termed ReAct, which truncates the high activations during test time for  OOD detection. We provide both empirical and theoretical insights characterizing and explaining the mechanism by which ReAct improves OOD uncertainty estimation. By rectifying the activations, the outsized contribution of hidden units on OOD output can be attenuated, resulting in a stronger separability from ID data. Extensive experiments show ReAct can significantly improve the performance of OOD detection on both common benchmarks and large-scale image classification models.  Our insights have inspired future research to further examine the internal mechanisms of neural networks for OOD detection. %




\chapter{DICE: Leverage Sparsification for OOD Detection }
\label{sec:dice}

\paragraph{Publication Statement.} This chapter is joint work with Yixuan Li. The paper version of this chapter appeared in ECCV22~\citep{sun2022dice}. 

\noindent\rule{\textwidth}{1pt}

ReAct led to subsequent research in this chapter that delves beyond unit activations, focusing on exploring the impact of weight and unit jointly in out-of-distribution (OOD) detection. In particular, conventional approaches often rely on an OOD score derived from the overparameterized weight space, while largely neglecting the significance of \textit{sparsification}. In this chapter, we reveal important insights that reliance on unimportant weights and units can directly attribute to the brittleness of OOD detection. To mitigate the issue, we propose a sparsification-based OOD detection framework termed \textbf{DICE}. Our key idea is to rank weights based on a measure of contribution, and selectively use the most salient weights to derive the output for OOD detection. We provide both empirical and theoretical insights, characterizing and explaining the mechanism by which DICE improves OOD detection. By pruning away noisy signals, DICE provably reduces the output variance for OOD data, resulting in a sharper output distribution and stronger separability from ID data. We demonstrate the effectiveness of
sparsification-based OOD detection on several
benchmarks and establish competitive performance.


\section{Introduction}
\label{sec:dice_intro}

Deep neural networks deployed in real-world systems often encounter out-of-distribution (OOD) inputs---samples from unknown classes that the network has not been exposed to during training, and therefore should not be predicted by the model in testing. 
Being able to estimate and mitigate OOD uncertainty is paramount for safety-critical applications such as medical diagnosis~\citep{roy2021does,wang2017chestx} and autonomous driving~\citep{filos2020can}. For example, an autonomous vehicle may fail to recognize objects on the road that do not appear in its detection model’s training set, potentially leading to a crash. This gives rise to the importance of OOD detection, which allows the
learner to express ignorance and take precautions in the presence of OOD data.  %

The main challenge in OOD detection stems from the fact that modern deep neural networks can easily produce overconfident predictions on OOD inputs, making the separation between in-distribution (ID) and OOD data a non-trivial task. The vulnerability of machine learning to OOD data can be hard-wired in high-capacity models used in practice. In particular, modern deep neural networks can overfit observed patterns in the training data~\citep{zhang2016understanding}, and worse, {activate features on unfamiliar inputs}~\citep{nguyen2015deep}. 
To date, existing OOD detection methods commonly derive OOD scores using overparameterized weights, while largely overlooking the role of \emph{sparsification}. This chapter aims to bridge the gap.

\begin{figure}[t]
	\begin{center}
		\includegraphics[width=0.9\linewidth]{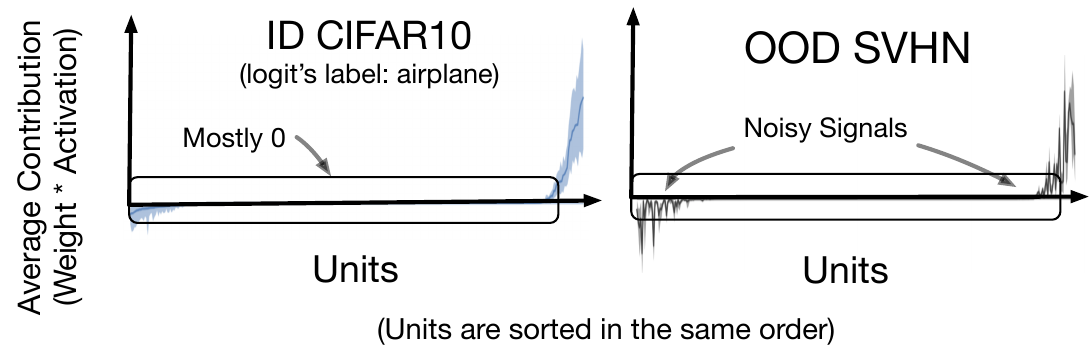}
	\end{center}
	\caption[Illustration of unit contribution to the class output.]{\small Illustration of unit contribution (\emph{i.e.}, \texttt{weight} $\times$ \texttt{activation}) to the class output. For class $c$, the output $f_c(\*x)$ is the summation of unit contribution from the penultimate feature layer of a neural network. \emph{Units are sorted in the same order}, based on the expectation of ID data's contribution (averaged over many CIFAR-10 samples) on the $x$-axis. \textbf{Shades indicate the variance for each unit}. \textbf{Left:} For in-distribution data (CIFAR-10, airplane), only a subset of units contributes to the model output.  \textbf{Right:} In contrast, out-of-distribution (OOD) data can trigger a non-negligible fraction of units with noisy signals, as indicated by the variances.}
	\label{fig:dice_whytopk}
\end{figure}

In this chapter, we start by revealing key insights that reliance on unimportant units and weights can directly attribute to the brittleness of OOD detection. Empirically on a network trained with CIFAR-10, we show that an OOD image can activate a non-negligible fraction of units in the penultimate layer (see Figure~\ref{fig:dice_whytopk}, right). Each point on the horizontal axis corresponds to a single unit. The y-axis measures the unit contribution (\emph{i.e.}, \texttt{weight} $\times$ \texttt{activation}) to the output of class \textsc{airplane}, with the solid line and the shaded area indicating the mean and variance, respectively. 
Noticeably, for OOD data (gray), we observe a non-negligible fraction of ``noisy'' units that display high variances of contribution, which is then aggregated to the model's output through summation. As a result, such noisy signals can undesirably manifest in model output---increasing the variance of output distribution and reducing the separability from ID data.

The above observation motivates a simple and effective method,
\emph{\textbf{Di}rected \textbf{S}parisification} (\textbf{DICE}), for OOD detection. %
DICE leverages the observation that a model's prediction for an ID class depends
on only a subset of important units (and corresponding weights), as evidenced in Figure~\ref{fig:dice_whytopk} (left). To exploit this, our novel idea is to rank weights based on the measure of {contribution}, and selectively use the most contributing weights to derive the output for OOD detection. As a result of the weight sparsification, we show that the model's output becomes more separable between ID and OOD data.   Importantly, DICE can be conveniently used by {{post hoc} weight masking} on a pre-trained network and therefore can preserve the ID classification accuracy. 
Orthogonal to existing works on sparsification for accelerating computation, our primary goal is to explore the sparsification approach for improved  OOD detection performance.%

We provide both empirical and theoretical insights characterizing and explaining the mechanism by which DICE improves OOD detection. We perform extensive evaluations and establish competitive performance on common OOD detection benchmarks, including CIFAR-10, CIFAR-100~\citep{krizhevsky2009learning}, and a large-scale ImageNet benchmark~\citep{huang2021mos}. Compared to the competitive post hoc method ReAct~\citep{sun2021react}, DICE reduces the FPR95 by up to {12.55}\%. Moreover, we perform ablation using various sparsification techniques and demonstrate the benefit of {directed sparsification} for OOD detection. 
Theoretically, by pruning away noisy signals from unimportant units and weights,
DICE  \emph{provably reduces the output variance}  and results in a sharper output distribution (see Section~\ref{sec:dice_theory}). The sharper distributions lead to a stronger separability between ID and OOD data and overall improved OOD detection performance (\emph{c.f.} Figure~\ref{fig:dice_teaser}).  The key results and contributions for this chapter are:
\begin{enumerate}
    \item (Methodology) We introduce DICE, a simple and effective approach for OOD detection utilizing {post hoc} weight sparsification. In the realm of OOD detection, DICE holds a significant place as it pioneered the exploration and exemplification of the efficacy of sparsification.
    
    \item (Experiments) We extensively evaluate DICE on common benchmarks and establish competitive performance among post hoc OOD detection baselines. DICE  outperforms the ReAct~\citep{sun2021react} by reducing the FPR95 by up to {12.55}\%. We show DICE can effectively improve OOD detection while preserving the classification accuracy of ID data.
    
    \item (Theory and ablations) We provide  ablation and theoretical analysis that improves understanding of a sparsification-based method for OOD detection. Our analysis reveals an important variance reduction effect, which probably explains the effectiveness of DICE. The aforementioned insights serve as a catalyst for further investigation into weight sparsification techniques aimed at out-of-distribution (OOD) detection.
\end{enumerate}


\section{Method}
\label{sec:dice_method}

\begin{figure*}[htb]
	\begin{center}
		\includegraphics[width=0.95\linewidth]{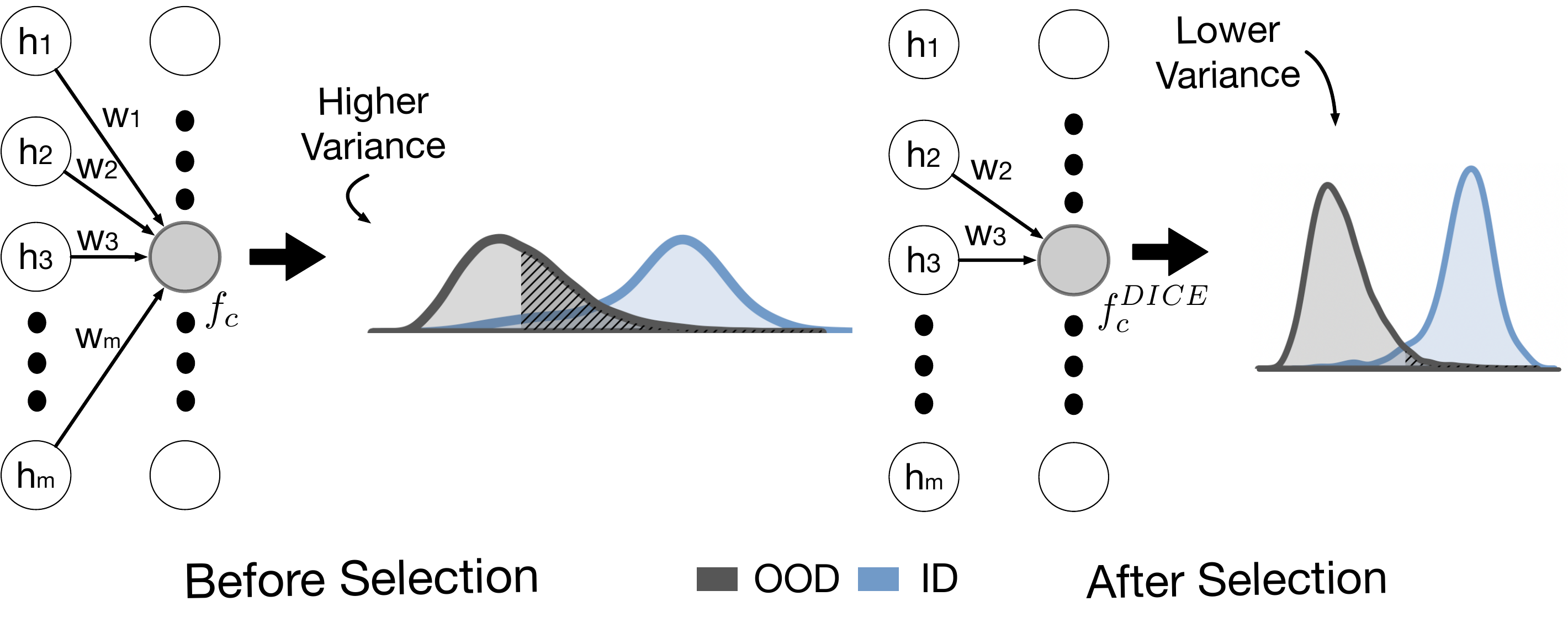}
	\end{center}
	\caption[Illustration of out-of-distribution detection using DICE.]{\small Illustration of out-of-distribution detection using \emph{Directed Sparsification} (\textbf{DICE}). We consider a pre-trained neural network, which encodes an input $\*x$ to a feature vector $h(\*x) \in \mathbb{R}^m$. \textbf{Left}: The logit output $f_c(\*x)$ of class $c$ is a linear combination of activation from \emph{all} units in the preceding layer, weighted by $w_i$. The full connection results in a high variance for OOD data's output, as depicted in the gray. \textbf{Right}: Our proposed approach leverages a selective subset of weights, which effectively reduces the output variance for OOD data, resulting in a sharper score distribution and stronger separability from ID data. The output distributions are based on CIFAR-10 trained network, with ID class label ``frog'' and SVHN as OOD.}
	\label{fig:dice_teaser}
\end{figure*}

\vspace{0.1cm} \noindent \textbf{Method overview.} %
The key idea of DICE is to selectively use a subset of important weights to derive the output for OOD detection. By utilizing sparsification, the network prevents adding irrelevant information to the output. We illustrate our idea in Figure~\ref{fig:dice_teaser}. Without DICE (\emph{left}), the final output is a summation of weighted activations across all units, which can have a high variance for OOD data (colored in gray). In contrast, with DICE (\emph{right}), the variance of output can be significantly reduced, which improves separability from ID data. We proceed with describing our method in detail, and provide the theoretical explanation later in Section~\ref{sec:dice_theory}.

\hiddensubsection{DICE: Directed Sparsification}
\label{sec:dice_dks}
We consider a deep neural network parameterized by $\theta$, which encodes an input $\*x \in \mathbb{R}^d$ to a feature space with dimension $m$. We denote by $ h(\*x) \in \mathbb{R}^m$ the feature vector from the penultimate layer of the network. 
A weight matrix $\mathbf{W} \in \mathbb{R}^{m\times C}$ connects the feature $h(\*x)$ to the output $f(\*x)$. 

\vspace{0.1cm} \noindent \textbf{Contribution matrix.} We perform a \emph{directed sparsification} based on a measure of contribution, and preserve the most important weights in $\*W$. To measure the contribution, we define a contribution matrix $\*V \in \mathbb{R}^{m\times C}$, where each column $\*v_c \in \mathbb{R}^m$ is given by:
\begin{align}
\*v_c = \mathbb{E}_{\*x\in \mathcal{D}_{in}} [\*w_c \odot h(\*x)],
\end{align}
where $\odot$ indicates the element-wise multiplication, and $\*w_c$ indicates weight vector for class $c$. Each element in $\*v_c \in \mathbb{R}^m$ intuitively measures the corresponding unit's average contribution to class $c$, estimated empirically on in-distribution data $\mathcal{D}_{in}$. A larger value indicates a higher contribution to the output $f_c(\*x)$ of class $c$. The vector $\*v_c$ is derived for all classes  $c \in \{1,2,...,C\}$, forming the contribution matrix $\*V$. Each element $\*v_{c}^i\in\*V$ measures the average contribution (\texttt{weight} $\times$ \texttt{activation}) from a unit $i$ to the output class $c\in \{1,2,...,C\}$. %

We can now select the top-$k$ weights based on the $k$-largest elements in $\*V$.  In particular, we define a masking matrix $\*M \in \mathbb{R}^{m\times C}$, which returns a matrix by setting $1$ for entries corresponding to the $k$ largest elements in $\*V$ and setting other elements to $0$.
The model output under \emph{contribution-directed sparsification} is given by
\begin{align}
f^{\text{DICE}}(\*x;\theta) = (\*M\odot\*W)^\top h(\*x) + \mathbf{b},
\end{align}
where $\mathbf{b} \in \mathbb{R}^C$ is the bias vector. The procedure described above essentially accounts for information from the most relevant units in the penultimate layer. Importantly, the sparsification can be conveniently imposed by \emph{post hoc} weight masking on the final layer of a pre-trained network, without changing any parameterizing of the neural network. Therefore one can improve OOD detection while preserving the ID classification accuracy. 

\vspace{0.1cm} \noindent \textbf{Sparsity parameter $p$.} To align with the convention in literature, we use the sparsity parameter $p= 1- \frac{k}{m\cdot C}$ in the remainder of this chapter. A higher $p$ indicates a larger fraction of weights dropped. When $p=0$, the output becomes equivalent to the original output $f(\*x;\theta)$ using dense transformation, where $f(\*x;\theta) = \*W^\top h(\*x) + \*b$. We provide ablations on the sparsity parameter later in Section~\ref{sec:dice_sparsification}.
\hiddensubsection{OOD Detection with DICE}
Our method DICE  in Section~\ref{sec:dice_dks} can be flexibly leveraged by the downstream OOD scoring function:
\begin{align}
\label{eq:dice_threshold_dice}
	\mathcal{S}_{\lambda}(\*x)=\begin{cases} 
      \text{in} & S_\theta(\*x)\ge \lambda \\
      \text{out} & S_\theta(\*x) < \lambda
   \end{cases},
\end{align}
 where a thresholding mechanism is exercised to distinguish between ID and OOD during test time. The threshold $\lambda$ is typically chosen so that a high fraction of ID data (\emph{e.g.,} 95\%) is correctly classified. Following recent work by Liu \emph{et. al}~\citep{liu2020energy}, we derive an energy score using the logit output $f^\text{DICE}(\*x;\theta)$ with contribution-directed sparsification. The function maps the logit outputs $f^\text{DICE}(\*x;\theta)$ to a scalar $E_\theta(\*x) \in \mathbb{R}$, which is relatively lower for ID data:
 \begin{equation}
     S_\theta(\*x) = -E_\theta(\*x) = \log \sum_{c=1}^C \exp(f_c^\text{DICE}(\*x;\theta)).
 \end{equation}
 The energy score can be viewed as the log of the denominator in softmax function:
 \begin{equation}
     p(y | \*x) = \frac{p(\*x,y)}{p(\*x)} = \frac{\exp(f_y(\*x;\theta))}{\sum_{c=1}^C \exp(f_c(\*x;\theta))},
 \end{equation}
and enjoys better theoretical interpretation than using posterior probability $p(y|\*x)$. 
Later in Section~\ref{sec:dice_theory}, we formally characterize and explain why DICE improves the separability of the  scores between ID and OOD data. 
\label{sec:dice_ood}


\section{Experiments}
\label{sec:dice_experiments}
In this section, we evaluate our method on a suite of OOD detection tasks. We begin with the CIFAR benchmarks that are routinely used in literature (Section~\ref{sec:dice_common_benchmark}).  In Section~\ref{sec:dice_imagenet}, we continue with a large-scale OOD detection task based on ImageNet. 
\subsection{Evaluation on CIFAR Benchmarks}
\label{sec:dice_common_benchmark}

\vspace{0.1cm} \noindent \textbf{Experimental details.}
We use  CIFAR-10~\citep{krizhevsky2009learning}, and CIFAR-100~\citep{krizhevsky2009learning} datasets as in-distribution data. We use the standard split with 50,000 training images and 10,000 test images. We evaluate the model on six common OOD benchmark datasets: \texttt{Textures}~\citep{cimpoi2014describing}, \texttt{SVHN}~\citep{netzer2011reading}, \texttt{Places365}~\citep{zhou2017places}, \texttt{LSUN-Crop}~\citep{yu2015lsun}, \texttt{LSUN-Resize}~\citep{yu2015lsun}, and \texttt{iSUN}~\citep{xu2015turkergaze}. We use DenseNet-101 architecture~\citep{huang2017densely} and train on in-distribution datasets. The feature dimension of the penultimate layer is 342. For both CIFAR-10 and CIFAR-100, the model is trained for 100 epochs with batch size 64, weight
decay 0.0001 and momentum 0.9. The start learning rate is 0.1 and decays by a factor of 10 at epochs 50, 75, and 90. We use the validation strategy in Appendix~\ref{sec:dice_val} to select $p$.

\vspace{0.1cm} \noindent \textbf{DICE vs. competitive baselines.} We show the results in Table~\ref{tab:dice_cifar-results}, where DICE outperforms competitive baselines. In particular, we compare with Maximum Softmax Probability~\citep{Kevin}, ODIN~\citep{liang2018enhancing}, Mahalanobis distance~\citep{lee2018simple}, Generalized ODIN~\citep{godin2020CVPR}, Energy score~\citep{liu2020energy}, and ReAct~\citep{sun2021react} (Chapter~\ref{sec:react}). For a fair comparison, all the methods derive the OOD score post hoc from the same pre-trained model, except for G-ODIN which requires model re-training. 

On CIFAR-100, we show that DICE reduces the average FPR95 by \textbf{18.73\%} compared to the vanilla energy score~\citep{liu2020energy} without sparsification. Moreover, our method also outperforms ReAct~\citep{sun2021react} (Chapter~\ref{sec:react}) by {12.55\%}. While {ReAct} only considers activation space, {DICE} examines \emph{both the {weights} and activation} values together---the multiplication of which {directly} determines the network's logit output. 
{Overall our method is more generally applicable}, and can be implemented through a simple {post hoc} weight masking. 

\begin{table}[t]
\centering
\caption[Comparison with out-of-distribution detection method on CIFAR benchmarks.]{\small Comparison with competitive \textit{post hoc} out-of-distribution detection method on CIFAR benchmarks. All values are percentages and are averaged over 6 OOD test datasets. The full results for each evaluation dataset are provided in Appendix~\ref{sec:dice_detailed-cifar}. We report standard deviations estimated across 5 independent runs. $^\S$ indicates an exception, where model retraining using a different loss function is required.} 
\scalebox{0.8}{
\begin{tabular}{lll|lll}
\toprule
 \multicolumn{1}{c}{\multirow{3}{*}{\textbf{Method}}} & \multicolumn{2}{c}{\textbf{CIFAR-10}} & \multicolumn{2}{c}{\textbf{CIFAR-100}} \\
 \multicolumn{1}{c}{} & \multicolumn{1}{l}{\textbf{FPR95}} & \multicolumn{1}{l|}{\textbf{AUROC}} & \multicolumn{1}{l}{\textbf{FPR95}} & \multicolumn{1}{l}{\textbf{AUROC}} & \\ 
  & $\downarrow$ & $\uparrow$ & $\downarrow$ & $\uparrow$ & \\ \midrule
 MSP~\citep{Kevin}  & 48.73 & 92.46 & 80.13 & 74.36 \\
 ODIN~\citep{liang2018enhancing}  & 24.57 & 93.71 & 58.14 & 84.49  \\
  GODIN$^\S$~\citep{godin2020CVPR}  & 34.25 & 90.61 & 52.87 & 85.24  \\
 Mahalanobis~\citep{lee2018simple}  & 31.42	 & 89.15 & 55.37 &	82.73 \\
 Energy~\citep{liu2020energy}  & 26.55 & 94.57 & 68.45 & 81.19  \\
 ReAct~\citep{sun2021react}  & 26.45	& 94.95 & 62.27 & 84.47  \\
\textbf{DICE} & \multicolumn{1}{c}{\textbf{20.83}$^{\pm{1.58}}$} & \textbf{95.24}$^{\pm{0.24}}$ & \multicolumn{1}{c}{\textbf{49.72}$^{\pm{1.69}}$} & \multicolumn{1}{c}{\textbf{87.23}$^{\pm{0.73}}$} \\ \bottomrule  
\end{tabular}}
\label{tab:dice_cifar-results}
\end{table}

\vspace{0.1cm} \noindent \textbf{ID classification accuracy.} Given the \emph{post hoc} nature of DICE, once the input image is marked as ID, one can always use the original fc layer, {which is guaranteed to give identical classification accuracy}. This incurs minimal overhead and results in optimal performance for both classification and OOD detection. We also measure the classification accuracy under 
different sparsification parameter $p$. Due to the space limit, the full results are available in Table~\ref{tab:dice_k-ablation} in Appendix.

\subsection{Evaluation on Large-scale ImageNet Task}
\label{sec:dice_imagenet}

\vspace{0.1cm} \noindent \textbf{Dataset.} We then evaluate DICE on a large-scale ImageNet classification model. Following MOS~\citep{huang2021mos}, we use four OOD test datasets from (subsets of) \texttt{Places365}~\citep{zhou2017places}, \texttt{Textures}~\citep{cimpoi2014describing}, \texttt{iNaturalist}~\citep{inat}, and \texttt{SUN}~\citep{sun} with non-overlapping categories \emph{w.r.t.} ImageNet. The evaluations span a diverse range of domains including fine-grained images, scene images, and textural images. OOD detection for the ImageNet model is more challenging due to both a larger feature space ($m=2,048$) as well as a larger label space $(C=1,000)$. In particular, the large-scale evaluation can be relevant to real-world applications, where the deployed models often operate on images that have high resolution and contain many class labels. Moreover, as the number of feature dimensions increases, noisy signals may increase accordingly, which can make OOD detection more challenging.  

\vspace{0.1cm} \noindent \textbf{Experimental details.} 
We use a pre-trained ResNet-50 model~\citep{he2016identity} for ImageNet-1k provided by Pytorch.
At test time, all images are resized to 224 $\times$ 224. We use the entire training dataset to estimate the contribution matrix and masking matrix $\*M$. We use the validation strategy in Appendix~\ref{sec:dice_val} to select $p$.

\begin{table*}[t]

\caption[Comparison with  out-of-distribution detection methods on ImageNet.]{\small \textbf{Main results.} Comparison with competitive \emph{post hoc} out-of-distribution detection methods. All methods are based on a discriminative model trained on {ImageNet}. $\uparrow$ indicates larger values are better and $\downarrow$ indicates smaller values are better. All values are percentages. \textbf{Bold} numbers are superior results. 
}
\label{tab:dice_main-results}
\scalebox{0.64}{
\begin{tabular}{lllllllllll}
    \toprule
  \multicolumn{1}{c}{\multirow{4}{*}{\textbf{Methods}}} & \multicolumn{8}{c}{\textbf{OOD Datasets}} & \multicolumn{2}{c}{\multirow{2}{*}{\textbf{Average}}} \\ \cline{2-9}
 \multicolumn{1}{c}{} & \multicolumn{2}{c}{\textbf{iNaturalist}} & \multicolumn{2}{c}{\textbf{SUN}} & \multicolumn{2}{c}{\textbf{Places}} & \multicolumn{2}{c}{\textbf{Textures}} & & \\
 \multicolumn{1}{c}{} & FPR95 & AUROC & FPR95 & AUROC & FPR95 & AUROC & FPR95 & AUROC & FPR95 & AUROC \\
 \multicolumn{1}{l}{} & \multicolumn{1}{l}{$\downarrow$} & \multicolumn{1}{l}{$\uparrow$} & \multicolumn{1}{l}{$\downarrow$} & \multicolumn{1}{l}{$\uparrow$} & \multicolumn{1}{l}{$\downarrow$} & \multicolumn{1}{l}{$\uparrow$} & \multicolumn{1}{l}{$\downarrow$} & \multicolumn{1}{l}{$\uparrow$} & \multicolumn{1}{l}{$\downarrow$} & \multicolumn{1}{l}{$\uparrow$} \\ \hline

MSP & 54.99 & 87.74 & 70.83 & 80.86 & 73.99 & 79.76 & 68.00 & 79.61 & 66.95 & 81.99 \\
ODIN  & 47.66 & 89.66 & 60.15 & 84.59 & 67.89 & 81.78 & 50.23 & 85.62 & 56.48 & 85.41 \\
GODIN & 61.91 & 85.40 & 60.83 & 85.60	 & 63.70 & 83.81 & 77.85 & 73.27 & 66.07 & 82.02 \\
Mahalanobis  & 97.00 & 52.65 & 98.50 & 42.41 & 98.40 & 41.79 & 55.80 & 85.01 & 87.43 & 55.47 \\
Energy & 55.72 & 89.95 & 59.26 & 85.89 & 64.92 & 82.86 & 53.72 & 85.99 & 58.41 & 86.17 \\

ReAct & 20.38 & 96.22 & \textbf{24.20} & \textbf{94.20} & \textbf{33.85} & \textbf{91.58} & 47.30 & 89.80 & 31.43 & 92.95 \\ \midrule
\textbf{DICE} & 25.63 & 94.49 & 35.15 & 90.83 & 46.49 & 87.48 & 31.72 & 90.30 & 34.75 & 90.77 \\

\textbf{DICE + ReAct} & \textbf{18.64} & \textbf{96.24} & 25.45 & 93.94 & 36.86 & 90.67 & \textbf{28.07} & \textbf{92.74} & \textbf{27.25} & \textbf{93.40}\\
\bottomrule
\end{tabular}
}

\end{table*}

\vspace{0.1cm} \noindent \textbf{Comparison with baselines.} In Table~\ref{tab:dice_main-results}, we compare DICE with competitive post hoc OOD detection methods. We report performance for each OOD test dataset, as well as the average of the four. 
We first contrast DICE with energy score~\citep{liu2020energy}, which allows us to see the direct benefit of using sparsification under the same scoring function. DICE reduces
the FPR95 drastically from 58.41\% to 34.75\%, a \textbf{23.66}\% improvement using sparsification.  
Second, we contrast with ReAct, which demonstrates strong performance on this challenging task using activation truncation. 
With the truncated activation proposed in ReAct, we show that DICE can further reduce the FPR95 by {5.78}\% with weight sparsification. 
Since the comparison is conducted on the same scoring function and feature activation, the performance improvement from ReAct to DICE+ReAct precisely highlights the benefit of using weight sparsification as opposed to the full weights.
Lastly, Mahalanobis displays limiting performance on ImageNet, while being computationally expensive due to estimating the inverse of the covariance matrix. In contrast, DICE is easy to use in practice, and can be implemented through simple {post hoc} weight masking. %

\section{Discussion and Ablations}
\label{sec:dice_sparsification}

\vspace{0.1cm} \noindent \textbf{Ablation on sparsity parameter $p$.} 
We now characterize the effect of the sparsity parameter $p$.  In Figure~\ref{fig:dice_sparsity}, we summarize the OOD detection performance for DenseNet trained on CIFAR-100, where we vary $p=\{0.1, 0.3, 0.5, 0.7, 0.9, 0.99\}$.  
Interestingly, we observe the performance improves with mild sparsity parameter $p$. A significant improvement can be observed from $p=0$ (no sparsity) to $p=0.1$. As we will theoretically later in Section~\ref{sec:dice_theory}, this is because the leftmost part of units being pruned has larger variances for OOD data (gray shade). Units in the middle part have small variances and contributions for both ID and OOD, therefore leading to similar performance as $p$ increases mildly. This ablation confirms that over-parameterization does compromise the OOD detection ability, and DICE can effectively alleviate the problem. In the extreme case when $p$ is too large (\emph{e.g.}, $p=0.99$), the OOD performance starts to degrade as expected. 

\begin{figure}[htb]
	\begin{center}
		\includegraphics[width=0.7\linewidth]{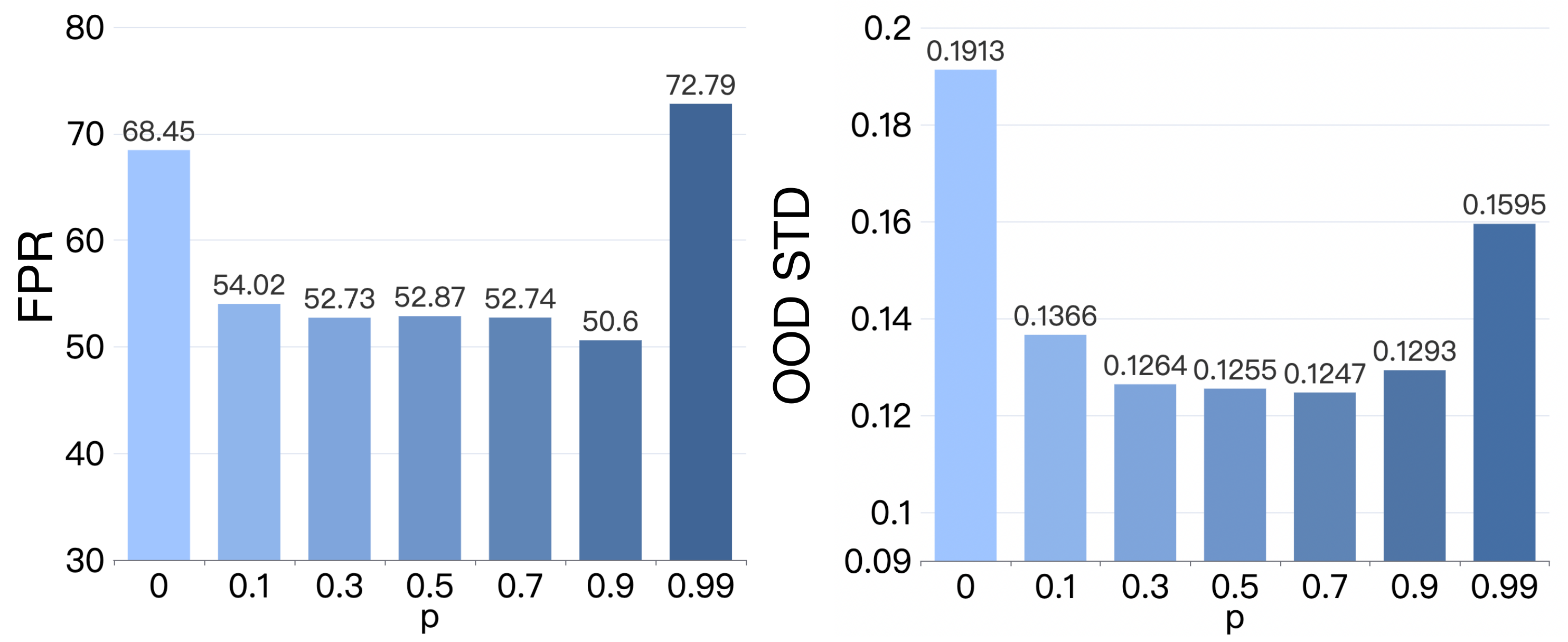}
	\end{center}
	\vspace{-0.3cm}
	\caption[Effect of varying sparsity parameter $p$ during inference time.]{\small Effect of varying sparsity parameter $p$ during inference time. Model is trained on CIFAR-100 using DenseNet101~\citep{huang2017densely}. 
	}
	\vspace{-0.4cm}
	\label{fig:dice_sparsity}
\end{figure}

\vspace{0.1cm} \noindent \textbf{Effect of variance reduction for output distribution.} 
Figure~\ref{fig:dice_teaser} shows that DICE has an interesting variance reduction effect on the output distribution for OOD data, and at the same time preserves the information for the ID data (CIFAR-10, class ``frog''). The output distribution without any sparsity ($p=0$) appears to have a larger variance, resulting in less separability from ID data (see left of Figure~\ref{fig:dice_teaser}). In contrast, sparsification with DICE results in a sharper distribution, which benefits OOD detection. 
In Figure~\ref{fig:dice_sparsity}, we also measure the standard deviation of energy score for OOD data (normalized by the mean of ID data's OOD scores in each setting). By way of sparsification, DICE can reduce the output variance. In Section~\ref{sec:dice_theory}, we formally characterize this and provide a theoretical explanation.

\begin{table}[t]
\caption[Effect of different sparsification methods for OOD detection with ImageNet as ID dataset.]{\small \textbf{Ablation results.} Effect of different \textit{post hoc} sparsification methods for OOD detection with ImageNet as ID dataset. All sparsification methods are based on the {same OOD scoring function}~\citep{liu2020energy}, with sparsity parameter $p=0.7$. All values are percentages and are averaged over multiple OOD test datasets.}
\label{tab:dice_prune-results}
\centering
\scalebox{0.9}{
        \begin{tabular}{lll}
    \toprule
     \multicolumn{1}{l}{\multirow{1}{*}{\textbf{Method}}} 
    & \multicolumn{1}{l}{\textbf{FPR95}}$\downarrow$ & \multicolumn{1}{l}{\textbf{AUROC}}$\uparrow$  \\
    \midrule
     Weight-Droput   & 76.28 &  76.55  \\
     Unit-Droput   & 83.91  & 64.98 \\
     Weight-Pruning &  52.81 & 87.08 \\
     Unit-Pruning   & 90.80 &  49.15 \\
     {{DICE (Ours)}} & \textbf{34.75} & \textbf{90.77} \\
    \bottomrule
    \end{tabular}
    }
\end{table}

\vspace{0.1cm} \noindent \textbf{Ablation on  pruning methods.} In this ablation, we evaluate OOD detection performance under the most common \emph{post hoc} sparsification methods. Here we primarily consider {post hoc} sparsification strategy which
operates conveniently on a \emph{pre-trained} network, instead of training with sparse regularization or architecture modification. The property is especially desirable for the adoption of OOD detection methods in real-world production environments, where the overhead cost of retraining can be sometimes prohibitive.   Orthogonal to existing works on sparsification, our primary goal is to explore the role of sparsification for improved  OOD detection performance, {rather than establishing a generic sparsification algorithm}. We consider the most common strategies, covering both unit-based and weight-based sparsification methods: (1) {unit dropout}~\citep{Nitish2014dropout} which randomly drops a fraction of units, (2) {unit pruning}~\citep{Hao2017pruneUnit} which
drops units with the smallest $l_2$ norm of the corresponding weight vectors, (3) {weight dropout}~\citep{Wan2013weightdropout} which randomly drops weights in the fully connected layer, and (4) {weight pruning}~\citep{Han2015prune} drops weights with the smallest entries under the $l_1$ norm. %
For consistency, we use the same OOD scoring function and the same sparsity parameter for all. 

Our ablation reveals several important insights shown in Table~\ref{tab:dice_prune-results}. 
First, in contrasting weight dropout vs. DICE, a salient performance gap of {41.53}\% (FPR95) is observed under the same sparsity. This suggests the importance of dropping weights \emph{directedly} rather than \emph{randomly}. Second, DICE outperforms a popular $l_1$-norm-based pruning method~\citep{Han2015prune} by up to {18.06}\% (FPR95). While it prunes weights with low magnitude, negative weights with large $l_1$-norm can be kept. The negative weights can undesirably corrupt the output with noisy signals (as shown in Figure~\ref{fig:dice_whytopk}). The performance gain of DICE over~\citep{Han2015prune} attributes to our contribution-directed sparsification, which is better suited for OOD detection.

\begin{table}
    \caption[Ablation on different strategies of choosing a subset of units. ]{\small Ablation on different strategies of choosing a subset of units.  Values are FPR95 (averaged over multiple test datasets).}
    \centering
    \scalebox{0.9}{
        \begin{tabular}{c|lll}
        \toprule
        Method  & CIFAR-10\textbf{$\downarrow$} & CIFAR-100 \textbf{$\downarrow$}\\
        \midrule
         Bottom-$k$ & 91.87 & 99.70 \\
         (Top+Bottom)-$k$  & 24.25 & 59.93\\
          Random-$k$ & 62.12 & 77.48 \\
         Top-$k$ (\textbf{DICE}) & \textbf{20.83}$^{\pm{1.58}}$ & \textbf{49.72}$^{\pm{1.69}}$ \\
         \bottomrule
        \end{tabular}}
    \label{tab:dice_topbot}
\end{table}

\vspace{0.1cm} \noindent \textbf{Ablation on unit selection.}
We have shown that choosing a subset of weights (with \emph{top-k} unit contribution) significantly improves the OOD detection performance. In this ablation, we also analyze those ``lower contribution units'' for OOD detection. Specifically, we consider: (1) \emph{Bottom-k} which only includes $k$ unit contribution with least contribution values, (2) \emph{top+bottom-k} which includes $k$ unit contribution with largest and smallest contribution values, (3) \emph{random-k} which randomly includes $k$ unit contribution and (4) \emph{top-k} which is equivalent to DICE method.  In Table~\ref{tab:dice_topbot}, we show that DICE  outperforms these variants.


\section{Why does DICE improve OOD detection?}
\label{sec:dice_theory}

In this section, we formally explain the mechanism by which reliance on irrelevant units hurts OOD detection and how DICE effectively mitigates the issue. Our analysis highlights that DICE reduces the output variance for both ID and OOD data. Below we provide details. 

\vspace{0.1cm} \noindent \textbf{Setup.} For a class $c$, we consider the unit contribution vector $\*v$, the element-wise multiplication between the feature vector $h(\*x)$ and corresponding weight vector $\*w$. 
We contrast the two outputs with and without sparsity:
\begin{align*}
f_c= \sum_{i=1}^m v_i ~~~\text{(w.o sparsity)}  , \\
 f_c^\text{DICE}=\sum_{i\in \text{top units}} v_i ~~~ ~~~\text{(w. sparsity)},
\end{align*}
where $f_c$ is the output using the summation of all units' contribution, and $f_c^\text{DICE}$ takes the input from the top units (ranked based on the average contribution on ID data, see bottom of Figure~\ref{fig:dice_theory}).

\begin{figure}
    \centering
    \includegraphics[width=0.8\textwidth]{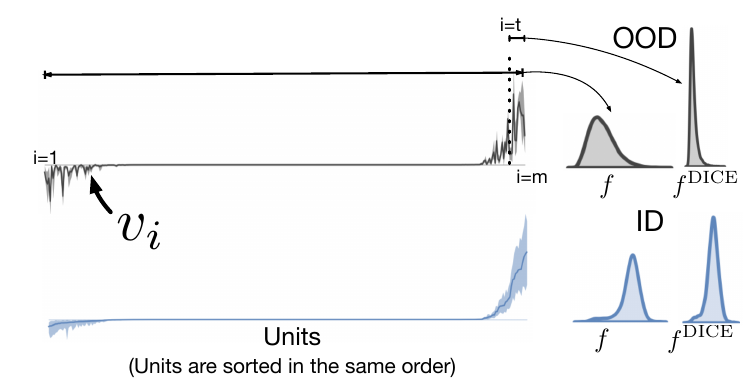}
   \caption[Plot of the average unit contribution in the penultimate layer in a sorted manner.]{\small Units in the penultimate layer are sorted based on the average contribution to a CIFAR-10 class (``airplane''). OOD data (SVHN) can trigger a non-negligible fraction of units with noisy signals on the CIFAR-10 trained model. }
  \label{fig:dice_theory}
\end{figure}

\vspace{0.1cm} \noindent \textbf{DICE reduces the output variance.} We consider the unit contribution vector for OOD data $\*v \in \mathbb{R}^m$, where each element is a \emph{random variable} $v_i$ with  mean $\mathbb{E}[v_i] = \mu_i$ and variance $\mathrm{Var}[v_i] = \sigma_i^2$.
For simplicity, we assume each component is independent, but our theory can be extended to correlated variables (see Remark 1). 
Importantly, {indices in $\*v$ are sorted based on \emph{the same order} of unit contribution on ID data}. %
By using units on the rightmost side, we now show the key result that DICE reduces the output variance.

\begin{proposition}
\label{prop:sum_gaussian}
Let $v_i$ and $v_j$ be two independent {random variables}. Denote the summation $r=v_i + v_j$, we have $\mathbb{E}[r] = \mathbb{E}[v_i] + \mathbb{E}[v_j]$ and $\mathrm{Var}[r] = \mathrm{Var}[v_i] + \mathrm{Var}[v_j]$.
\end{proposition}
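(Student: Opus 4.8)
The plan is to prove the two identities separately, since the expectation identity is a consequence of linearity alone, while the variance identity is where the independence hypothesis actually enters. First I would dispatch the mean: by linearity of expectation, $\mathbb{E}[r] = \mathbb{E}[v_i + v_j] = \mathbb{E}[v_i] + \mathbb{E}[v_j]$, which holds for any pair of integrable random variables irrespective of dependence. This step is immediate and needs no special structure.

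For the variance, I would start from the definition $\mathrm{Var}[r] = \mathbb{E}\big[(r - \mathbb{E}[r])^2\big]$, substitute $r - \mathbb{E}[r] = (v_i - \mathbb{E}[v_i]) + (v_j - \mathbb{E}[v_j])$ using the mean identity just established, and expand the square. This produces three terms: $\mathbb{E}\big[(v_i - \mathbb{E}[v_i])^2\big]$, $\mathbb{E}\big[(v_j - \mathbb{E}[v_j])^2\big]$, and the cross term $2\,\mathbb{E}\big[(v_i - \mathbb{E}[v_i])(v_j - \mathbb{E}[v_j])\big]$. The first two are by definition $\mathrm{Var}[v_i]$ and $\mathrm{Var}[v_j]$. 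The crux is showing the cross term vanishes: since $v_i$ and $v_j$ are independent, so are the centered variables $v_i - \mathbb{E}[v_i]$ and $v_j - \mathbb{E}[v_j]$, hence the expectation of their product factorizes as the product of their expectations, each of which is zero. Collecting terms gives $\mathrm{Var}[r] = \mathrm{Var}[v_i] + \mathrm{Var}[v_j]$.

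The only point requiring any care — and it is a mild one — is using the independence hypothesis precisely where it is needed, namely the factorization of the cross-term expectation, and recognizing it is \emph{not} needed for the mean identity. Once the expansion of the square is written out, the remainder is a few lines of routine algebra, so I anticipate no genuine obstacle; the value of the statement here is as a building block for the inductive summation over the retained units in Section~\ref{sec:dice_theory}, where it is applied repeatedly to conclude that discarding the high-variance OOD units on the left of Figure~\ref{fig:dice_theory} strictly decreases the output variance.
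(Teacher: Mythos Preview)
Your proof is correct and is the standard textbook argument. The paper itself does not supply a proof for this proposition---it is stated as an elementary probability fact and immediately used to derive Lemma~\ref{lemma:dicev3}---so there is nothing to compare against; your write-up simply fills in what the paper leaves implicit.
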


\begin{lemma}
\label{lemma:dicev3}
When taking the top $m-t$ units, the output variable $f_c^\text{DICE}$ under sparsification has reduced variance:
$$\mathrm{Var} [f_c] - \mathrm{Var} [f_c^\text{DICE}] = \sum_{i=1}^{t} \sigma_i^2 $$
\end{lemma}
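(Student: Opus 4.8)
The plan is to deduce the identity directly from Proposition~\ref{prop:sum_gaussian} (additivity of variance under independence) once the indexing convention in the setup is made explicit. The first step is to pin down which units survive: because the coordinates of $\*v$ are ordered according to the average unit contribution on ID data, ``taking the top $m-t$ units'' means retaining the indices $t+1,\dots,m$ and pruning the indices $1,\dots,t$. Consequently $f_c=\sum_{i=1}^m v_i$ while $f_c^{\text{DICE}}=\sum_{i=t+1}^m v_i$, so the two outputs differ exactly by the block of $t$ lowest-ID-contribution coordinates.

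The second step is to compute the two variances. Since the $v_i$ are assumed independent with $\mathrm{Var}[v_i]=\sigma_i^2$, a straightforward induction on the number of summands, with Proposition~\ref{prop:sum_gaussian} as the inductive step, yields $\mathrm{Var}[f_c]=\sum_{i=1}^m \sigma_i^2$ and, applied to the truncated sum, $\mathrm{Var}[f_c^{\text{DICE}}]=\sum_{i=t+1}^m \sigma_i^2$. Subtracting, the common tail $\sum_{i=t+1}^m \sigma_i^2$ cancels and leaves $\mathrm{Var}[f_c]-\mathrm{Var}[f_c^{\text{DICE}}]=\sum_{i=1}^t \sigma_i^2$, which is the claimed equality; note this also shows the difference is nonnegative, i.e. DICE genuinely reduces the output variance.

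There is no substantive obstacle in this argument — it is essentially bookkeeping — and the only point requiring care is making sure the pruned set is precisely $\{1,\dots,t\}$ in the sorted order, so that the surviving index set is $\{t+1,\dots,m\}$ and the difference is the sum of $\sigma_i^2$ over exactly those $t$ indices rather than some other subset. For the correlated extension alluded to in Remark~1, the same computation applies with $\mathrm{Var}[\sum_i v_i]=\sum_i \sigma_i^2+\sum_{i\neq j}\mathrm{Cov}(v_i,v_j)$; the variance difference then acquires additional covariance terms involving a pruned index, but the qualitative conclusion is unchanged. The take-away, visible in Figure~\ref{fig:dice_theory}, is that the coordinates with $i\le t$ are precisely the high-variance OOD units forming the large gray shaded region on the left, so discarding them removes a disproportionately large share of $\mathrm{Var}[f_c]$ for OOD inputs.
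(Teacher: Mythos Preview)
Your proposal is correct and takes essentially the same approach as the paper: the paper's own proof is the single line ``The proof directly follows Proposition~1,'' and your argument is precisely the fleshed-out version of that, applying additivity of variance under independence to both $f_c$ and $f_c^{\text{DICE}}$ and subtracting. Your added care about which indices are pruned under the sorted ordering is a helpful clarification the paper leaves implicit.
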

\noindent \textit{Proof}. The proof directly follows Proposition 1. 

\vspace{0.1cm} \noindent \textbf{Remark 1 (Extension to correlated variables).} We can show in a more general case with correlated variables, the variance reduction is:  %
$$\sum_{i=1}^{t} \sigma_i^2  + 2 \sum_{1\le i < j \le m} \mathrm{Cov}(v_i, v_j) - 2\sum_{t< i <j \le m} \mathrm{Cov}(v_i, v_j), $$
where $\mathrm{Cov}(\cdot ,\cdot )$ is the covariance. Our analysis shows that the covariance matrix primarily consists of 0, which indicates the independence of variables. Moreover, the summation of non-zero entries in the full matrix (i.e., the second term) is greater than that of the submatrix with top units (i.e., the third term), resulting in a larger variance reduction than in Lemma~\ref{lemma:dicev3}. 
See complete proof in Appendix~\ref{sec:dice_corr}. 

\vspace{0.1cm} \noindent \textbf{Remark 2.} 
 Energy score is compatible with DICE since it directly operates in the logit space. Our theoretical analysis above shows that DICE reduces the variance of each logit $f_c(\*x)$. 
This means that for detection scores such as energy score, the gap between OOD and ID score will be enlarged after applying DICE, which makes thresholding more capable of separating OOD and ID inputs and benefit OOD detection.

\begin{table}[t]
\centering
     \caption[Difference between the mean of ID's output and OOD's output.]{\small Difference between the mean of ID's output and OOD's output. Here we use CIFAR-100 as ID data and {$\Delta$=$\mathbb{E}_\text{in}[\max_c f_c^\text{DICE}]$ - $\mathbb{E}_\text{out}[\max_c f_c^\text{DICE}]$} is averaged over six common OOD benchmark datasets described in Section~\ref{sec:dice_experiments}.}
    \label{tab:dice_mean_shift}
\scalebox{0.9}{
    \centering
    \begin{tabular}{c|cccccc}
         \toprule 
          \textbf{Sparsity} &  $p=0.9$ & $p=0.7$ & $p=0.5$ & $p=0.3$ & $p=0.1$ & $p=0$\\ \midrule
          {$\Delta$ } & 7.92 & 7.28 & 7.99 & 8.04 & 7.36 &  6.67  \\ 
          \bottomrule
 \end{tabular}

    }
\end{table}

\vspace{0.1cm} \noindent \textbf{Remark 3 (Mean of output).}  Beyond variance, we further show in Table~\ref{tab:dice_mean_shift} the effect of sparsity on the mean of output: $\mathbb{E}_\text{in}[\max_c f_c^\text{DICE}]$ and $\mathbb{E}_\text{out}[\max_c f_c^\text{DICE}]$. The gap between the two directly translates into the OOD score separability. We show that DICE maintains similar (or even enlarges) differences in terms of mean as sparsity $p$ increases. Therefore, DICE overall benefits OOD detection due to both \emph{reduced output variances} and \emph{increased differences of mean}---the combination of both effects leads to stronger separability between ID and OOD. 

\vspace{0.1cm} \noindent \textbf{Remark 4 (Variance reduction on ID data).}
Note that we can also show the effect of variance reduction for ID data in a similar way. 
Importantly, DICE effectively preserves the most important information akin to the ID data, while reducing noisy signals that are harmful to OOD detection. Overall the variance reduction effect on both ID and OOD data leads to stronger separability.


\section{Additional Related Work}
\label{sec:dice_related}
\vspace{0.1cm} \noindent \textbf{Pruning and sparsification.} 
A great number of effort has been put into improving \emph{post hoc} pruning and training time regularization for deep neural networks~\citep{adadrop2013neurips,Mohammad2016NoiseOut,targetDropout,Han2016deepcomp,Han2015prune,Hao2017pruneUnit,Christos2018l0prune}. Many works obtain a sparse model by training with sparse regularization~\citep{adadrop2013neurips,Mohammad2016NoiseOut,Han2016deepcomp,Christos2018l0prune,sun2019adaptive} or architecture modification~\citep{targetDropout,Hao2017pruneUnit}, while our work primarily considers \emph{post hoc} sparsification strategy which
operates conveniently on a {pre-trained} network. On this line, two popular Bernoulli dropout techniques include unit dropout and weight dropout~\citep{Nitish2014dropout}. 
\emph{Post hoc} pruning strategies truncate weights with low magnitude~\citep{Han2015prune}, or drop units with low weight norms~\citep{Hao2017pruneUnit}.  %
In ~\citep{wong2021leveraging}, they use a sparse linear layer to help identify spurious correlations and explain misclassifications. 
Orthogonal to existing works, our goal is to improve the OOD detection performance rather than accelerate computation and network debugging. In this chapter, we first demonstrate that sparsification can be useful for OOD detection.
An in-depth discussion and comparison of these methods are presented in Section~\ref{sec:dice_sparsification}. 

%

\section{Summary}
This chapter provides a simple sparsification strategy termed DICE, which ranks weights based on a contribution measure and then uses the most significant weights to derive the output for OOD detection. We provide both empirical and theoretical insights characterizing and explaining the mechanism by which DICE improves OOD detection. By exploiting the most important weights, DICE provably reduces the output variance for OOD data, resulting in a sharper output distribution and stronger separability from ID data. Extensive experiments show DICE can significantly improve the performance of OOD detection for over-parameterized networks. We hope our research can raise more attention to the importance of weight sparsification for OOD detection.



\chapter{OOD Detection with Deep Nearest Neighbors}
\label{sec:knn}

\paragraph{Publication Statement.} This chapter is joint work with Yifei Ming, Xiaojin Zhu, and Yixuan Li. The paper version of this chapter appeared in ICML22~\citep{sun2022knnood}. 

\noindent\rule{\textwidth}{1pt}

In this chapter, we delve into an alternate pathway for Out-of-Distribution (OOD) detection, focusing on the utilization of distance-based methodologies. These techniques have shown significant potential, identifying test samples as OOD if their distance from in-distribution (ID) data considerably exceeds a set threshold. However, it is essential to note that previous approaches tend to carry a potent assumption about the distribution of the underlying feature space, an assumption that may not consistently hold true. Therefore, we examine the efficacy of a non-parametric approach using nearest-neighbor distance for OOD detection - an aspect that has hitherto received scant attention in the existing literature. Unlike prior works, our method does not impose any distributional assumption, hence providing stronger flexibility and generality. We demonstrate the effectiveness of nearest-neighbor-based OOD detection on several benchmarks and establish superior performance. Under the same model trained on ImageNet-1k, our method substantially reduces the false positive rate (FPR95) by {24.77}\% compared to a strong baseline SSD+, which uses a parametric approach Mahalanobis distance in detection.


\section{Introduction}
\label{sec:knn_intro}

Modern machine learning models deployed in the open world often struggle with out-of-distribution (OOD) inputs---samples from a different distribution that the network has not been exposed to during training, and therefore should not be predicted at test time. A reliable classifier should not only accurately classify known in-distribution (ID) samples, but also identify as ``unknown'' any OOD input. This gives rise to the importance of OOD detection, which determines whether an input is ID or OOD and enables the
model to take precautions.

A rich line of OOD detection algorithms has been developed recently, among which distance-based methods demonstrated promise~\citep{lee2018simple, tack2020csi, 2021ssd}. Distance-based methods leverage feature embeddings extracted from a model, and operate under the assumption that the test OOD samples are relatively far away from the ID data. For example, \citet{lee2018simple} modeled the feature embedding space as a mixture of multivariate Gaussian distributions, and used the maximum Mahalanobis distance~\citep{mahalanobis1936generalized} to all class centroids for OOD detection. 
However, all these approaches make a strong distributional assumption of the underlying feature space being class-conditional Gaussian. 
As we verify, the learned embeddings can fail the Henze-Zirkler multivariate normality test~\citep{henze1990class}. This limitation leads to the open question:

\begin{center}
\emph{Can we leverage the non-parametric nearest neighbor approach for OOD detection?}
\end{center}

Unlike prior works, the non-parametric approach does not impose {any} distributional assumption about the underlying feature space, hence providing stronger \emph{flexibility and generality}. Despite its simplicity, the nearest neighbor approach has received scant attention. Looking at the literature on OOD detection in the past several years, there has not been any work that demonstrated the efficacy of a non-parametric nearest neighbor approach for this problem. {This suggests that making the seemingly simple idea work is non-trivial}. Indeed, we found that simply using the nearest neighbor distance derived from the feature embedding of a standard classification model is not performant. 

In this chapter, {we challenge the status quo by presenting the first study exploring and demonstrating the efficacy of the non-parametric nearest-neighbor distance for OOD detection}. To detect OOD samples, we compute the  $k$-th nearest neighbor (KNN) distance between the embedding of test input and the embeddings of the training set and use
a threshold-based criterion to determine if the input
is OOD or not. In a nutshell, we perform non-parametric level set estimation, partitioning the data into two sets (ID vs. OOD) based on the deep $k$-nearest neighbor distance.  KNN offers compelling advantages of being: (1) \textbf{distributional assumption free}, (2) \textbf{OOD-agnostic} (\emph{i.e.}, the distance threshold is estimated on the ID data only, and does not rely on information of unknown data),  (3) \textbf{easy-to-use} (\emph{i.e.}, no need to calculate the inverse of the covariance matrix which can be numerically unstable), and (4) \textbf{model-agnostic} (\emph{i.e.}, the testing procedure is applicable to different model architectures and training losses).

Our exploration leads to both empirical effectiveness (Section~\ref{sec:knn_exp} \& \ref{sec:knn_discussion}) and theoretical justification (Section~\ref{sec:knn_theory}). By studying the role of representation space, we show that a compact and normalized feature space is the key to the success of the nearest neighbor approach for OOD detection.  Extensive experiments show that KNN outperforms the parametric approach, and scales well to the large-scale dataset. Computationally, modern implementations of approximate nearest neighbor
search allow us to do this in milliseconds even when the database contains billions of images~\citep{faiss}. On a challenging ImageNet OOD detection benchmark~\citep{huang2021mos}, our KNN-based approach achieves superior performance under a similar inference speed as the baseline methods. The overall simplicity and effectiveness of KNN make it appealing for real-world applications. 
We summarize our contributions below:

\begin{enumerate}

\item We present the first study exploring and demonstrating the efficacy of non-parametric density estimation with nearest neighbors for OOD detection---a simple, flexible yet overlooked approach in literature. It draws attention to the strong promise of the non-parametric approach, which obviates data assumption on the feature space.

\item We demonstrate the superior performance of the KNN-based method on several OOD detection benchmarks, different model architectures (including CNNs and ViTs), and different training losses. Under the same model trained on ImageNet-1k, our method substantially reduces the false positive rate (FPR95) by \textbf{24.77}\% compared to a strong baseline {SSD+}~\citep{2021ssd}, which uses a parametric approach (\emph{i.e.}, Mahalanobis distance~\citep{lee2018simple}) for detection.

\item We offer new insights on the key components to make KNN effective in practice, including feature normalization and a compact representation space. Our findings are supported by extensive ablations and experiments. These insights are valuable to the community in carrying out future research.

\item We provide theoretical analysis, showing that KNN-based OOD detection can reject inputs equivalent to the Bayes optimal estimator. By modeling the nearest neighbor distance in
the feature space, our theory (1) directly connects to our method which also operates in the feature space, and (2) complements our experiments by considering the universality of OOD data. 

\end{enumerate}

\begin{figure*}[htb]
\centering
\includegraphics[width=0.99\linewidth]{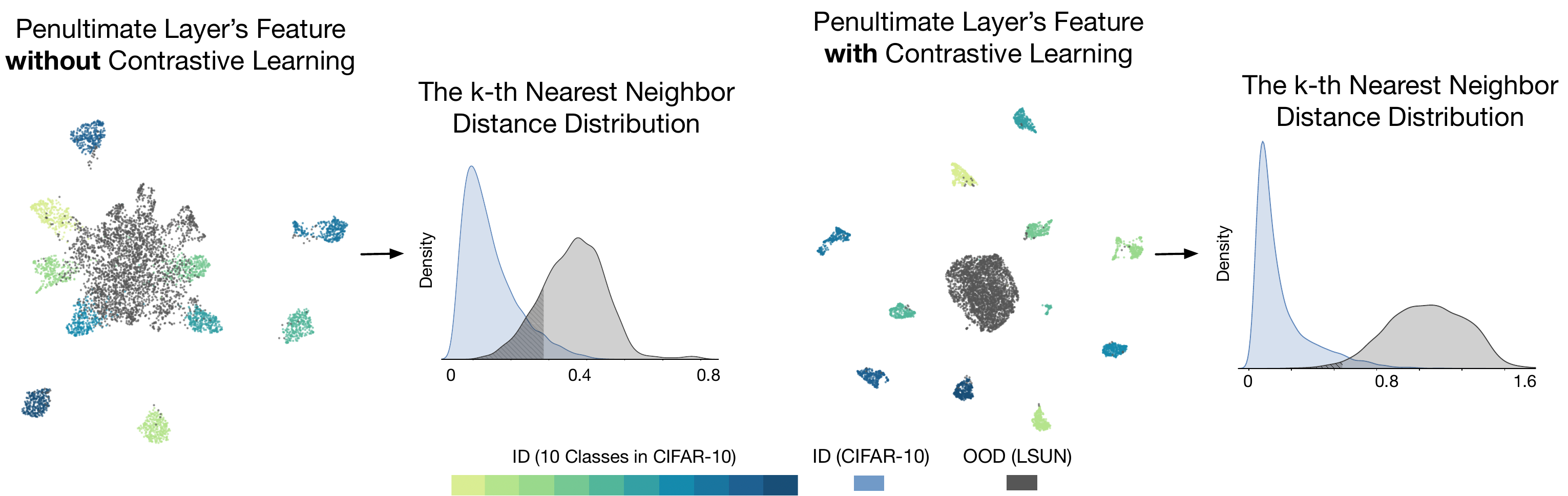}
\caption[Illustration of the framework using nearest neighbors for OOD detection.]{\small Illustration of our framework using nearest neighbors for OOD detection. KNN performs non-parametric level set estimation, partitioning the data into two sets (ID vs. OOD) based on the $k$-th nearest neighbor distance. The distances are estimated from the penultimate feature embeddings, visualized via UMAP~\citep{umap}.
Models are trained on ResNet-18~\citep{he2016identity} using cross-entropy loss (left) v.s. contrastive loss (right). The in-distribution data is CIFAR-10 (colored in non-gray colors) and  OOD data is LSUN (colored in gray). The shaded grey area in the density distribution plot indicates OOD samples that are misidentified as ID data. }
\label{fig:knn_umap}
\end{figure*}

%
%

\section{Deep Nearest Neighbor for OOD detection}
\label{sec:knn_method}

In this section, we describe our approach using the deep $k$-Nearest Neighbor (KNN) for OOD detection. We illustrate our approach in Figure~\ref{fig:knn_umap}, which at a high level, can be categorized as a distance-based method. Distance-based methods leverage feature embeddings
extracted from a model and operate under the assumption that the test OOD samples are relatively far away from the ID data. Previous distance-based OOD detection methods employed parametric density estimation and modeled the feature embedding space as a mixture of multivariate Gaussian distributions~\citep{lee2018simple}. However, such an approach makes a strong distributional assumption of the learned feature space, which may not necessarily hold\footnote{We verified this by performing the Henze-Zirkler multivariate normality test~\citep{henze1990class} on the embeddings. The testing results show that the feature vectors for each class are not normally distributed at the significance level of 0.05.}.

In this chapter, we instead explore the efficacy of \emph{non-parametric density estimation using nearest neighbors} for OOD detection. Despite the simplicity, KNN approach is not systematically explored or compared in most
current OOD detection papers. Specifically, we compute the  $k$-th nearest neighbor distance between the
embedding of each test image and the training set, and use
a simple threshold-based criterion to determine if an input
is OOD or not. Importantly, we use the normalized penultimate feature $\*z= \phi(\bx) / \lVert \phi(\bx) \rVert_2$ for OOD detection, where $\phi: \mathcal{X} \mapsto \mathbb{R}^{m}$ is a feature encoder. Denote the embedding set of training data as $\mathcal{Z}_n = (\*z_1, \*z_2, ..., \*z_n )$. During testing, we derive the normalized feature vector $\*z^*$ for a test sample $\bx^*$, and calculate the Euclidean distances $\lVert\*z_i - \*z^*\rVert_2$ with respect to embedding vectors $\*z_i \in \mathcal{Z}_n$. We reorder $\mathcal{Z}_n$ according to the increasing distance $\lVert\*z_i - \*z^*\rVert_2$. 
Denote the reordered data sequence as $\mathcal{Z}_n' = (\*z_{(1)}, \*z_{(2)}, ..., \*z_{(n)})$. The decision function for OOD detection is given by: 
\begin{equation*}
    \mathcal{S}_{\lambda}(\*z^*;k) = \mathbf{1}\{-r_k(\*z^*) \ge \lambda\},
\end{equation*} 
where $r_k(\*z^*) = \lVert\*z^* - \*z_{(k)}\rVert_2$ is the distance to the $k$-th nearest neighbor ($k$-NN) and $\mathbf{1}\{\cdot\}$ is the indicator function. The threshold $\lambda$ is typically chosen so that a high fraction of ID data (\emph{e.g.,} 95\%) is correctly classified. The threshold does not depend on OOD data.

\begin{algorithm}[t]
\begin{algorithmic}
   \STATE {\textbf{Input:}} Training dataset $\mathcal{D}_{in}$, pre-trained neural network encoder $\phi$, test sample $\bx^*$, threshold $\lambda$\\
   
   \STATE For $\bx_i$ in the training data $\mathcal{D}_{in}$,  collect feature vectors $\mathcal{Z}_n = (\*z_1, \*z_2, ..., \*z_n )$

   \STATE \textbf{Testing Stage}: 
        \STATE Given a test sample, we calculate feature vector $\*z^* = \phi(\bx^*) / \lVert \phi(\bx^*) \rVert_2$
        \STATE Reorder $\mathcal{Z}_n$ according to the increasing value of $\lVert\*z_i - \*z^*\rVert_2$ as $\mathcal{Z}_n' = (\*z_{(1)}, \*z_{(2)}, ..., \*z_{(n)})$
    \STATE \textbf{Output: } OOD detection decision $\mathbf{1}\{-\lVert\*z^* - \*z_{(k)}\rVert_2 \ge \lambda\}$
\end{algorithmic}
\caption{OOD Detection with Deep Nearest Neighbors}
\label{alg}
\end{algorithm}

We summarize our approach in Algorithm~\ref{alg}. Noticeably, KNN-based OOD detection offers several compelling advantages:
\begin{enumerate}
\vspace{-0.2cm}
     \item \textbf{Distributional assumption free}: Non-parametric nearest neighbor approach does not impose distributional assumptions about the underlying feature space. KNN therefore provides stronger flexibility and generality, and is applicable even when the feature space does not conform to the mixture of Gaussians. 
       \item \textbf{OOD-agnostic}: The testing procedure does not rely on the information of unknown data. The distance threshold is estimated on the ID data only. 
     \item \textbf{Easy-to-use}: Modern implementations of approximate nearest neighbor search
allow us to do this in milliseconds even when the database contains billions of images~\citep{faiss}. In contrast, Mahalanobis distance requires calculating the inverse of the covariance matrix, which can be numerically unstable. 
\vspace{-0.2cm}
   \item \textbf{Model-agnostic}: The testing procedure applies to a variety of model architectures, including CNNs and more recent Transformer-based  ViT models~\citep{dosovitskiy2020image}. Moreover, we will show that KNN is agnostic to the training procedure as well, and is compatible with models trained under different loss functions (\emph{e.g.}, cross-entropy loss and contrastive loss). 
\end{enumerate}
  We proceed to show the efficacy of the KNN-based OOD detection approach in Section~\ref{sec:knn_exp}.


\section{Experiments}
\label{sec:knn_exp}

The goal of our experimental evaluation is to answer the
following questions: (1) How does KNN fare against the parametric counterpart such as Mahalanobis distance for OOD detection? (2) Can KNN scale to a more challenging task when the training data is large-scale (\emph{e.g.}, ImageNet)? (3) Is KNN-based OOD detection effective under different model architectures and training objectives? (4) How do various design choices affect the performance?

\vspace{0.1cm} \noindent \textbf{Evaluation metrics.} 
We report the following metrics: (1) the false positive rate (\text{FPR}95) of OOD samples when the true positive rate of ID samples is at 95\%, (2) the area under the receiver operating characteristic curve (AUROC), (3) ID classification accuracy (ID ACC), and (4) per-image inference time (in milliseconds, averaged across test images).

\vspace{0.1cm} \noindent \textbf{Training losses.} In our experiments, we aim to show that KNN-based OOD detection is agnostic to the training procedure, and is compatible with models trained under different losses. We consider two types of loss functions, with and without contrastive learning respectively. We employ (1) cross-entropy loss which is the most commonly used training objective in classification, and (2) supervised contrastive learning (SupCon)~\citep{khosla2020supcon}--- the latest development for representation learning, which leverages the label information by aligning samples belonging to the same class in the embedding space.

\vspace{0.1cm} \noindent \textbf{Remark on the implementation.} 
All of the experiments are
based on PyTorch~\citep{pytorch}. Code is made publicly available online. We use Faiss~\citep{faiss}, a library for efficient nearest neighbor search. Specifically, we use \texttt{faiss.IndexFlatL2} as the indexing method with Euclidean distance. In practice, we pre-compute the embeddings for all images and store
them in a key-value map to make KNN search efficient. The embedding vectors for ID data only need to be extracted once after the training is completed. 

\subsection{Evaluation on CIFAR Benchmarks}
\label{sec:knn_common_benchmark}


\begin{table*}[htb]
\caption[Comparison with OOD detection methods on CIFAR-10.]{ \textbf{Results on CIFAR-10.} Comparison with competitive OOD detection methods. All methods are based on a discriminative model trained on {ID data only}, without using outlier data.  $\uparrow$ indicates larger values are better and vice versa. } 
\centering
\scalebox{0.52}{

\begin{tabular}{llllllllllllll} \toprule
\multicolumn{1}{c}{\multirow{3}{*}{\textbf{Method}}} & \multicolumn{10}{c}{\textbf{OOD Dataset}} & \multicolumn{2}{c}{\multirow{2}{*}{\textbf{Average}}} & \multirow{3}{*}{\textbf{ID ACC}} \\
\multicolumn{1}{c}{} & \multicolumn{2}{c}{\textbf{SVHN}} & \multicolumn{2}{c}{\textbf{LSUN }} &  \multicolumn{2}{c}{\textbf{iSUN}} & \multicolumn{2}{c}{\textbf{Texture}} & \multicolumn{2}{c}{\textbf{Places365}} & \multicolumn{2}{c}{} & \\
\multicolumn{1}{c}{} & \textbf{FPR$\downarrow$} & \textbf{AUROC$\uparrow$} & \textbf{FPR$\downarrow$} & \textbf{AUROC$\uparrow$} &  \textbf{FPR$\downarrow$} & \textbf{AUROC$\uparrow$} & \textbf{FPR$\downarrow$} & \textbf{AUROC$\uparrow$} & \textbf{FPR$\downarrow$} & \textbf{AUROC$\uparrow$} & \textbf{FPR$\downarrow$} & \textbf{AUROC$\uparrow$} & \\ \midrule  &\multicolumn{13}{c}{\textbf{Without Contrastive Learning}}          \\
MSP & 59.66 & 91.25 & 45.21 & 93.80 & 54.57 & 92.12 & 66.45 & 88.50 & 62.46 & 88.64 & 57.67 & 90.86 & 94.21 \\
ODIN & 53.78 & 91.30 & 10.93  & 97.93 & 28.44 & 95.51 & 55.59 & 89.47 & 43.40 & 90.98 & 38.43 & 93.04 & 94.21 \\
Energy & 54.41 & 91.22 & 10.19 & 98.05 & 27.52 & 95.59 & 55.23 & 89.37 & 42.77 & 91.02 & 38.02 & 93.05 & 94.21 \\
GODIN & 18.72 & 96.10 & 11.52  & 97.12 & 30.02 & 94.02 & 33.58 & 92.20 & 55.25 & 85.50 & 29.82 & 92.97 & 93.64 \\ 
Maha. & 9.24  & 97.80 & 67.73 & 73.61 & 6.02  & 98.63 & 23.21 & 92.91 & 83.50 & 69.56 & 37.94 & 86.50 & 94.21 \\ 
KNN (Ours) & 27.97 & 95.48 & 18.50 & 96.84 & 24.68 & 95.52  & 26.74 & 94.96 & 47.84 & 89.93  & 29.15 & 94.55 & 94.21\\ \midrule
&\multicolumn{13}{c}{\textbf{With Contrastive Learning}}          \\
CSI & 37.38 & 94.69 & 5.88  & 98.86 & 10.36 & 98.01 & 28.85 & 94.87 & 38.31 & 93.04 & 24.16 & 95.89 &  94.38 \\
SSD+ & 1.51  & 99.68 & 6.09  & 98.48 & 33.60 & 95.16 & 12.98 & 97.70 & 28.41 & 94.72 & 16.52 & 97.15 & \textbf{95.07} \\
KNN+ & 2.42  & 99.52 & 1.78  & 99.48 & 20.06 & 96.74 & 8.09  & 98.56 & 23.02 & 95.36 & \textbf{11.07} & \textbf{97.93} & \textbf{95.07} \\ \bottomrule
\end{tabular}
}
\label{tab:knn_cifar_main}
\end{table*}


\vspace{0.1cm} \noindent \textbf{Datasets.}
We begin with the CIFAR benchmarks that are routinely used in literature. We use the standard split with 50,000 training images and 10,000 test images. We evaluate the methods on common OOD datasets: \texttt{Textures}~\citep{cimpoi2014describing}, \texttt{SVHN}~\citep{netzer2011reading}, \texttt{Places365}~\citep{zhou2017places}, \texttt{LSUN-C}~\citep{yu2015lsun}, and \texttt{iSUN}~\citep{xu2015turkergaze}. All images are of size $32\times 32$.

\vspace{0.1cm} \noindent \textbf{Experiment details.}
We use ResNet-18 as the backbone for CIFAR-10. Following the original settings in \citet{khosla2020supcon}, models with {SupCon} loss are trained for 500 epochs, with the batch size of $1024$. The temperature $\tau$ is $0.1$. The dimension of the penultimate feature where we perform the nearest neighbor search is 512. The dimension of the projection head is 128. We use the cosine annealing learning rate~\citep{loshchilov2016sgdr} starting at 0.5. We use $k=50$ for CIFAR-10 and $k=200$ for CIFAR-100, which is selected from $k=\{1,10,20,50,100,200,500,1000,3000,5000\}$ using the validation method in ~\citep{hendrycks2018deep}. 
We train the models using stochastic gradient descent with momentum 0.9, and weight decay  $10^{-4}$. The model without contrastive learning is trained for 100 epochs. The start learning rate is 0.1 and decays by a factor of 10 at epochs 50, 75, and 90 respectively.

\vspace{0.1cm} \noindent \textbf{Nearest neighbor distance achieves superior performance.}
We present results in Table~\ref{tab:knn_cifar_main}, where non-parametric KNN approach shows favorable performance. Our comparison covers an extensive collection of competitive methods in the literature. For clarity, we divide the baseline methods into two categories: trained with and without contrastive losses. Several baselines derive OOD scores from a model trained with common softmax cross-entropy (CE) loss, including {MSP}~\citep{Kevin}, {ODIN}~\citep{liang2018enhancing}, {Mahalanobis}~\citep{lee2018simple}, and {Energy}~\citep{liu2020energy}. {GODIN}~\citep{godin2020CVPR} is trained using a DeConf-C loss, which does not involve contrastive loss either. 
For methods involving contrastive losses, we use the same network backbone architecture and embedding dimension, while only varying the training objective. These methods include 
{CSI}~\citep{tack2020csi} and
{SSD+}~\citep{2021ssd}. For terminology clarity, KNN refers to our method trained with CE loss, and KNN+ refers to the variant trained with SupCon loss. We highlight two groups of comparisons: 
\begin{itemize}
\vspace{-0.2cm}
\item \textbf{KNN vs. Mahalanobis} (without contrastive learning): Under the \emph{same} model trained with cross-entropy (CE) loss, our method achieves an average FPR95 of 29.15\%, compared to that of Mahalanobis distance 37.94\%. The performance gain precisely demonstrates the advantage of KNN over the parametric method Mahalanobis distance.
\vspace{-0.2cm}
\item \textbf{KNN+ vs. SSD+} (with contrastive loss):   KNN+ and {SSD+}  are fundamentally different in OOD detection mechanisms, despite both benefit from the contrastively learned representations.  {SSD+} modeled the feature embedding space as a multivariate Gaussian distribution for each class, and use Mahalanobis distance~\citep{lee2018simple} for OOD detection. Under the \emph{same} model trained with Supervised Contrastive Learning (SupCon) loss, our method with the nearest neighbor distance reduces the average FPR95 by ${5.45}\%$, which is a relatively \textbf{32.99}\% reduction in error. It further suggests the advantage of using nearest neighbors without making any distributional assumptions on the feature embedding space.  

\end{itemize}
	\vspace{-0.2cm}
The above comparison suggests that the nearest neighbor approach is compatible with models trained both with and without contrastive learning. In addition, KNN is also simpler to use and implement than {CSI}, which relies on sophisticated data augmentations and ensembling in testing. Lastly, as a result of the improved embedding quality, the ID accuracy of the model trained with {SupCon} loss is improved by ${0.86}\%$ on CIFAR-10 and 2.45\% on ImageNet compared to training with the {CE} loss. 
Due to space constraints, we provide results on  DenseNet~\citep{huang2017densely} in Appendix~\ref{sec:knn_other_arc}.

\vspace{0.1cm} \noindent \textbf{Contrastively learned representation helps.} While contrastive learning has been extensively studied in recent literature, {its role remains untapped when coupled with a non-parametric approach} (such as nearest neighbors) for OOD detection. We examine the effect of using supervised contrastive loss for KNN-based OOD detection. We provide both qualitative and quantitative evidence, highlighting advantages over the standard softmax cross-entropy ({CE}) loss. (1) We visualize the learned feature embeddings in Figure~\ref{fig:knn_umap} using UMAP~\citep{umap}, where the colors encode different class labels. A salient observation is that the representation with {SupCon} is more distinguishable and compact than the representation obtained from the {CE} loss. The high-quality embedding space indeed confers benefits for KNN-based OOD detection. (2) Beyond visualization, we also quantitatively compare the performance of KNN-based OOD detection using embeddings trained with {SupCon} vs {CE}. As shown in 
Table~\ref{tab:knn_cifar_main}, KNN+ with contrastively learned representations reduces the FPR95 on all test OOD datasets compared to using embeddings from the model trained with {CE} loss.

\vspace{0.1cm} \noindent \textbf{Comparison with other non-parametric methods.} In Table~\ref{tab:knn_nonparam}, we compare the nearest neighbor approach with other non-parametric methods. For a fair comparison, we use the same embeddings trained with {SupCon} loss. Our comparison  covers  an  extensive  collection of outlier detection methods in literature including: {IForest}~\citep{liu2008iforest}, {OCSVM}~\citep{bernhard2001ocsvm}, 
{LODA}~\citep{2016loda}, 
{PCA}~\citep{shyu2003pca}, and
{LOF}~\citep{breunig2000lof}. The parameter setting for these methods is available in Appendix~\ref{sec:knn_config}. We show that KNN+ outperforms alternative non-parametric methods by a large margin.

\begin{table}[htb]
\centering
\caption[Comparison with other non-parametric methods.]{Comparison with other non-parametric methods. Results are averaged across all test OOD datasets. Model is trained on CIFAR-10.}
\vspace{0.2cm}
\scalebox{0.95}{
\begin{tabular}{lll} \toprule
 & \textbf{FPR95}$\downarrow$ & \textbf{AUROC}$\uparrow$ \\ \midrule
IForest~\citep{liu2008iforest} & 65.49 & 76.98 \\
OCSVM~\citep{bernhard2001ocsvm}   & 52.27 & 65.16 \\
LODA~\citep{2016loda}    & 76.38 & 62.59 \\
PCA~\citep{shyu2003pca}     & 37.26 & 83.13 \\
LOF~\citep{breunig2000lof}     & 40.06 & 93.47 \\
KNN+ (ours)    & \textbf{11.07} & \textbf{97.93} \\ \bottomrule
\end{tabular}}
\label{tab:knn_nonparam}
\end{table}


\vspace{0.1cm} \noindent \textbf{Evaluations on hard OOD tasks.} Hard OOD samples are particularly
challenging to detect. To test the limit of the  non-parametric KNN approach, we follow CSI~\citep{tack2020csi} and evaluate on several hard OOD datasets: LSUN-FIX, ImageNet-FIX, ImageNet-R, and CIFAR-100. The results are summarized in Table~\ref{tab:knn_hardood}. Under the same model, {KNN+ consistently outperforms {SSD+}}.

\begin{table*}[htb]
\centering
\caption[Results on hard OOD detection tasks.]{Evaluation (FPR95) on hard OOD detection tasks. The model is trained on CIFAR-10 with SupCon loss. }
\scalebox{0.9}{
\begin{tabular}{ccccc} \toprule
    & \textbf{LSUN-FIX} & \textbf{ImageNet-FIX} & \textbf{ImageNet-R} & \textbf{C-100} \\ \midrule
{SSD+} & 29.86      & 32.26          &  45.62             & 45.50     \\
{KNN+ (Ours)} & \textbf{21.52}       & \textbf{25.92}          & \textbf{29.92}            & \textbf{38.83} \\ \bottomrule 
\end{tabular}}
\label{tab:knn_hardood}
\end{table*}

\subsection{Evaluation on Large-scale ImageNet Task}
\label{sec:knn_imagenet}

We evaluate on a large-scale OOD detection task based on ImageNet~\citep{deng2009imagenet}. Compared to the CIFAR benchmarks above, the ImageNet task is more challenging due to a large amount of training data. Our goal is to verify KNN's performance benefits and whether it scales computationally with millions of samples.

\vspace{0.1cm} \noindent \textbf{Setup.} We use a ResNet-50 backbone~\citep{he2016identity} and train on ImageNet-1k~\citep{deng2009imagenet} with resolution $224 \times 224$. Following the experiments in \citet{khosla2020supcon}, models with {SupCon} loss are trained for 700 epochs, with a batch size of $1024$. The temperature $\tau$ is $0.1$. The dimension of the penultimate feature where we perform the nearest neighbor search is 2048. The dimension of the project head is 128. We use the cosine learning rate~\citep{loshchilov2016sgdr} starting at 0.5. We train the models using stochastic gradient descent with momentum 0.9, and weight decay  $10^{-4}$. We use $k=1000$ which follows the same validation procedure as before. When randomly sampling $\alpha\%$ training data for nearest neighbor search, $k$  is scaled accordingly to $1000 \cdot \alpha\%$.

Following the ImageNet-based OOD detection benchmark in MOS~\citep{huang2021mos}, we evaluate on four test OOD datasets that are subsets of: {Places365}~\citep{zhou2017places}, {Textures}~\citep{cimpoi2014describing}, {iNaturalist}~\citep{inat}, and {SUN}~\citep{sun} with non-overlapping categories \emph{w.r.t.} ImageNet. The evaluations span a diverse range of domains including fine-grained images, scene images, and textural images.

\begin{table*}[htb]
\centering
\caption[Comparison of OOD baseline methods on ImageNet.]{\textbf{Results on ImageNet}. All methods are based on a model trained on ID data only (ImageNet-1k~\citep{deng2009imagenet}). We report the OOD detection performance, along with the per-image inference time.}
\scalebox{0.5}{ 
\begin{tabular}{lllllllllllll} \toprule
\multirow{4}{*}{\textbf{Methods}} & \multirow{4}{*}{\begin{tabular}[c]{@{}l@{}}\textbf{Inference} \\ \textbf{time (ms)}\end{tabular}} & \multicolumn{8}{c}{\textbf{OOD Datasets}} & \multicolumn{2}{c}{\multirow{2}{*}{\textbf{Average}}} & \multirow{4}{*}{\textbf{ID ACC}}\\
 & & \multicolumn{2}{c}{\textbf{iNaturalist}} & \multicolumn{2}{c}{\textbf{SUN}} & \multicolumn{2}{c}{\textbf{Places}} & \multicolumn{2}{c}{\textbf{Textures}} & \multicolumn{2}{c}{} \\
 & & \textbf{FPR95} & \textbf{AUROC} & \textbf{FPR95} & \textbf{AUROC} & \textbf{FPR95} & \textbf{AUROC} & \textbf{FPR95} & \textbf{AUROC} & \textbf{FPR95} & \textbf{AUROC} \\ 
 & & $\downarrow$ & $\uparrow$ & $\downarrow$ & $\uparrow$ & $\downarrow$ & $\uparrow$ & $\downarrow$ & $\uparrow$ & $\downarrow$ & $\uparrow$ \\ \midrule
 &\multicolumn{10}{c}{\textbf{Without Contrastive Learning}}          \\
MSP & 7.04 & 54.99 & 87.74 & 70.83 & 80.86 & 73.99 & 79.76 & 68.00 & 79.61 & 66.95 & 81.99 & 75.08 \\
ODIN & 7.05 & 47.66 & 89.66 & 60.15 & 84.59 & 67.89 & 81.78 & 50.23 & 85.62 & 56.48 & 85.41 & 75.08 \\
Energy & 7.04 & 55.72 & 89.95 & 59.26 & 85.89 & 64.92 & 82.86 & 53.72 & 85.99 & 58.41 & 86.17 & 75.08 \\
GODIN & 7.04 & 61.91 & 85.40 & 60.83 & 85.60 & 63.70 & 83.81 & 77.85 & 73.27 & 66.07 & 82.02 & 70.43 \\
Mahalanobis &35.83 & 97.00 & 52.65 & 98.50 & 42.41 & 98.40 & 41.79 & 55.80 & 85.01 & 87.43 & 55.47 & 75.08 \\
KNN ($\alpha=100\%$) & 10.31 & 59.77 & 85.89 & 68.88 & 80.08 & 78.15 & 74.10 & 10.90 & 97.42 & 54.68 & 84.37 & 75.08 \\
KNN ($\alpha=1\%$) & 7.04 & 59.08 & 86.20 & 69.53 & 80.10 & 77.09 & 74.87 & 11.56 & 97.18 & 54.32 & 84.59 & 75.08\\
\hline
&\multicolumn{10}{c}{\textbf{With Contrastive Learning}}          \\
SSD+ & 28.31 & 57.16 & 87.77 & 78.23 & 73.10 & 81.19 & 70.97 & 36.37 & 88.52 & 63.24 & 80.09 & \textbf{79.10}\\
KNN+ ($\alpha=100\%$) & 10.47 & 30.18 & 94.89 & 48.99 & 88.63 & 59.15 & 84.71 & 15.55 & 95.40 & \textbf{38.47} & \textbf{90.91} & \textbf{79.10} \\
KNN+ ($\alpha=1\%$) & 7.04 & 30.83 & 94.72 & 48.91 & 88.40 & 60.02 & 84.62 & 16.97 & 94.45 & 39.18 & 90.55 & \textbf{79.10} \\ \bottomrule
\end{tabular}}
\label{tab:knn_imagenet_main}
\end{table*}

\vspace{0.1cm} \noindent \textbf{Nearest neighbor approach achieves superior performance without compromising the inference speed.} In Table~\ref{tab:knn_imagenet_main}, we compare our approach with OOD detection methods that are competitive in the literature. The baselines are the same as what we described in Section~\ref{sec:knn_common_benchmark} except for {CSI}\footnote{The training procedure of {CSI} is computationally  prohibitive on ImageNet, which takes three months on 8 Nvidia 2080Tis.}. We report both  OOD detection performance and the inference time (measured by milliseconds). We highlight three trends: (1) KNN+ outperforms the best baseline by \textbf{18.01}\% in FPR95. (2) Compared to {SSD+}, KNN+ substantially reduces the FPR95 by $\textbf{24.77}\%$ averaged across all test sets. The limiting performance of {SSD+} is due to the increased size of label space and data complexity, which makes the class-conditional Gaussian assumption less viable. In contrast, our non-parametric method does not suffer from this issue, and can better estimate the density of the complex distribution for OOD detection. (3) KNN+ achieves strong performance with a comparable inference speed as the baselines. In particular, we show that performing nearest neighbor distance estimation with only $1\%$ randomly sampled training data can yield a similar performance as using the full dataset.

\vspace{0.1cm} \noindent \textbf{Nearest neighbor approach is competitive on ViT.} Going beyond convolutional neural networks, we show in Table~\ref{tab:knn_vit} that the nearest neighbor approach is effective for transformer-based  ViT model~\citep{dosovitskiy2020image}. We adopt the ViT-B/16 architecture fine-tuned on the ImageNet-1k dataset using cross-entropy loss. Under the same ViT model, our non-parametric KNN method consistently outperforms Mahalanobis. 

\begin{table*}[htb]
\centering
\caption[Performance on ViT-B/16 model fine-tuned on ImageNet-1k.]{Performance comparison (FPR95) on ViT-B/16 model fine-tuned on ImageNet-1k.}
\scalebox{0.85}{
\begin{tabular}{ccccc} \toprule
    & \textbf{iNaturalist} & \textbf{SUN} & \textbf{Places} & \textbf{Textures} \\ \midrule
{Mahalanobis (parametric)} & 17.56 & 80.51 & 84.12 & 70.51  \\
{KNN (non-parametric)} & \textbf{7.30} & \textbf{48.40} & \textbf{56.46} & \textbf{39.91} \\ \bottomrule 
\end{tabular}}
\label{tab:knn_vit}
\end{table*}

\begin{figure*}[htb]
	\begin{center}
		\includegraphics[width=0.8\linewidth]{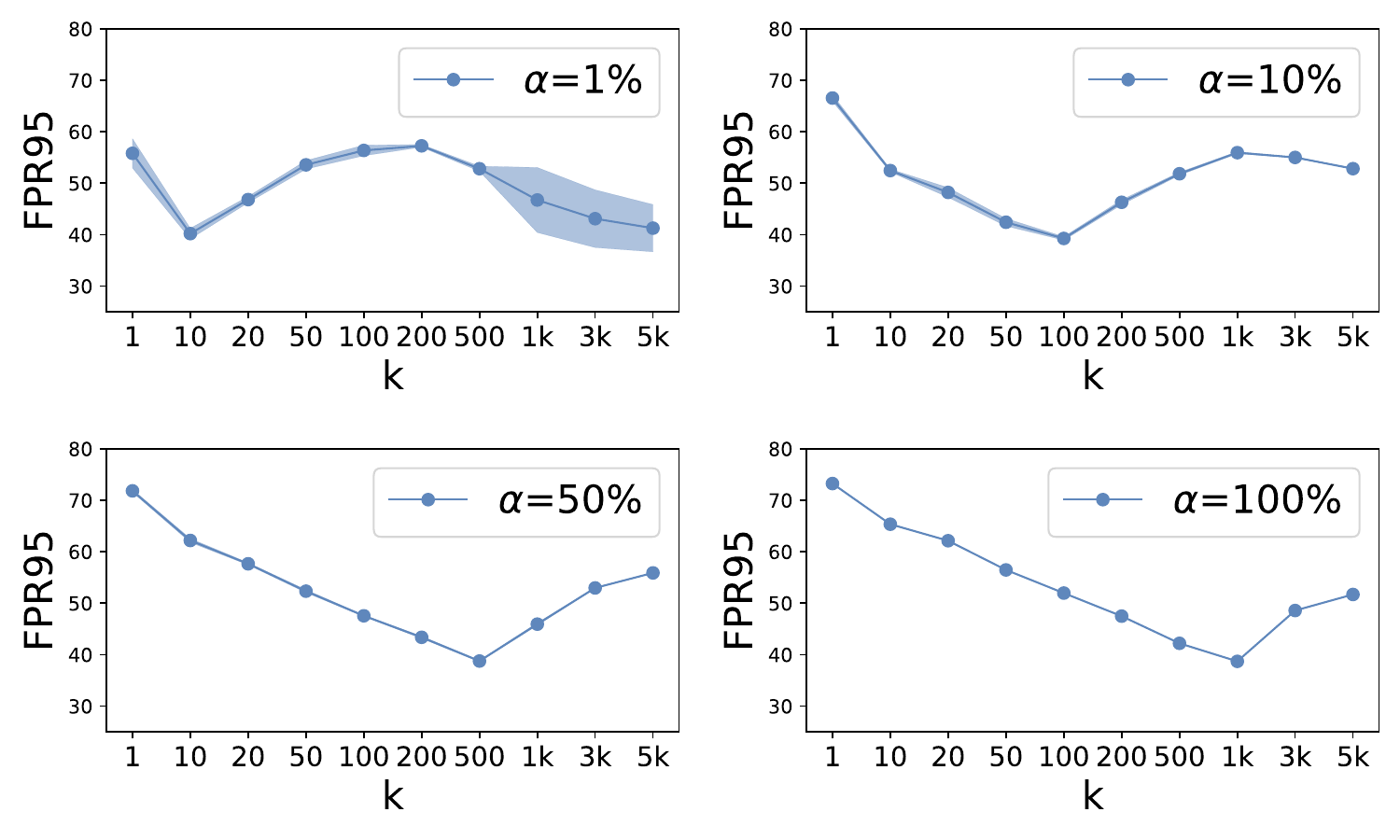}
	\end{center}
	\vspace{-0.5cm}
	\caption[Comparison with the effect of different $k$ and sampling ratio $\alpha$.]{Comparison with the effect of different $k$ and sampling ratio $\alpha$. We report an average FPR95 score over four test OOD datasets. The variances are estimated across 5 different random seeds. The solid blue line represents the averaged value across all runs and the shaded blue area represents the standard deviation. Note that the full ImageNet dataset ($\alpha=100\%$) has 1000 images per class. }
	\label{fig:knn_k_fpr}
\end{figure*}

\section{A Closer Look at KNN-based OOD Detection}
We provide further analysis and ablations to understand the behavior of KNN-based OOD detection. All the ablations are based on the ImageNet model trained with SupCon loss (same as in Section~\ref{sec:knn_imagenet}).

\label{sec:knn_discussion}


\begin{figure*}[htb]
	\begin{center}
		\includegraphics[width=.8\linewidth]{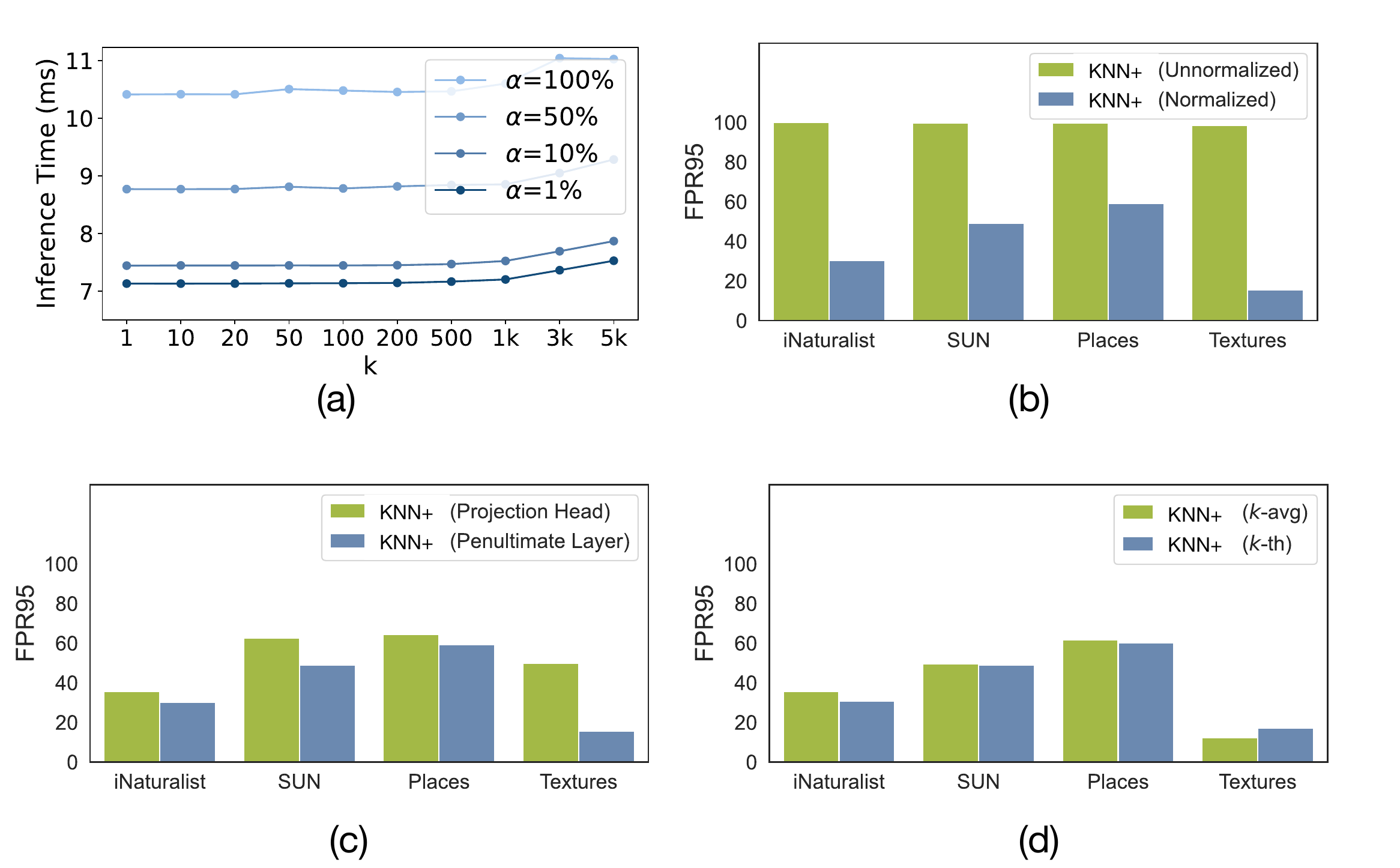}
	\end{center}
 \vspace{-0.5cm}
	\caption[Ablation results on different $k$, sampling ration $\alpha$, normalization, feature layer, etc.]{\small  \textbf{Ablation results.} In (a), we compare the inference speed (per-image) using different $k$ and sampling ration $\alpha$. For (b) (c) (d), the FPR95 value is reported over all test OOD datasets. Specifically, (b) compares the effect of using normalization in the penultimate layer feature vs. without normalization, (c) compares using features in the penultimate layer feature vs the projection head, and (d) compares the OOD detection performance using  $k$-th and averaged $k$ ($k$-avg) nearest neighbor distance. }
	\label{fig:knn_abl_many}
\end{figure*}


\vspace{0.1cm} \noindent \textbf{Effect of $k$ and sampling ratio.}
In Figure~\ref{fig:knn_k_fpr} and Figure~\ref{fig:knn_abl_many} (a), we systematically analyze the effect of $k$ and the dataset sampling ratios $\alpha$. We vary the number of neighbors $k=\{1,10,20,50,100,200,500,1000,3000,5000\}$ and random sampling ratio $\alpha = \{1\%,10\%,50\%,100\%\}$. We note several interesting observations: (1) The optimal OOD detection (measured by FPR95) remains \emph{similar} under different random sampling ratios $\alpha$. (2) The optimal $k$ is consistent with the one chosen by our validation strategy. For example, the optimal $k$ is 1,000 when $\alpha=100\%$; and the optimal $k$ becomes 10 when $\alpha=1\%$. (3) Varying $k$ does not significantly affect the inference speed when $k$ is relatively small (\emph{e.g.}, $k<1000$) as shown in Figure~\ref{fig:knn_abl_many} (a).

\vspace{0.1cm} \noindent \textbf{Feature normalization is critical.}
In this ablation, we contrast the performance of KNN-based OOD detection with and without feature normalization. The $k$-th NN distance can be derived by  $r_k(\frac{\phi(\bx)}{\lVert(\phi(\bx)\rVert})$ and $r_k(\phi(\bx))$, respectively. 
As shown in Figure~\ref{fig:knn_abl_many} (b), using feature normalization improved the FPR95 drastically by \textbf{61.05}\%, compared to the counterpart without normalization. To better understand this, we look into  the Euclidean distance $r=\lVert u - v \rVert_2$ between two vectors $u$ and $v$. The norm of the feature vector $u$ and $v$ could notably affect the value of the Euclidean distance. Interestingly, recent studies share the observation in Figure~\ref{fig:knn_norm} (a) that the ID data has a larger $L_2$ feature norm than OOD data~\citep{tack2020csi, huang2021importance}. Therefore, the Euclidean distance between ID features can be large (Figure~\ref{fig:knn_norm} (b)).  This contradicts the hope that ID data has a smaller $k$-NN distance than OOD data. 
Indeed, the normalization effectively mitigated this problem, as evidenced in Figure~\ref{fig:knn_norm} (c). Empirically, the normalization plays a key role in the nearest neighbor approach to be successful in OOD detection as shown in Figure~\ref{fig:knn_abl_many} (b).


\begin{figure}[t]
	\begin{center}
		\includegraphics[width=0.7\linewidth]{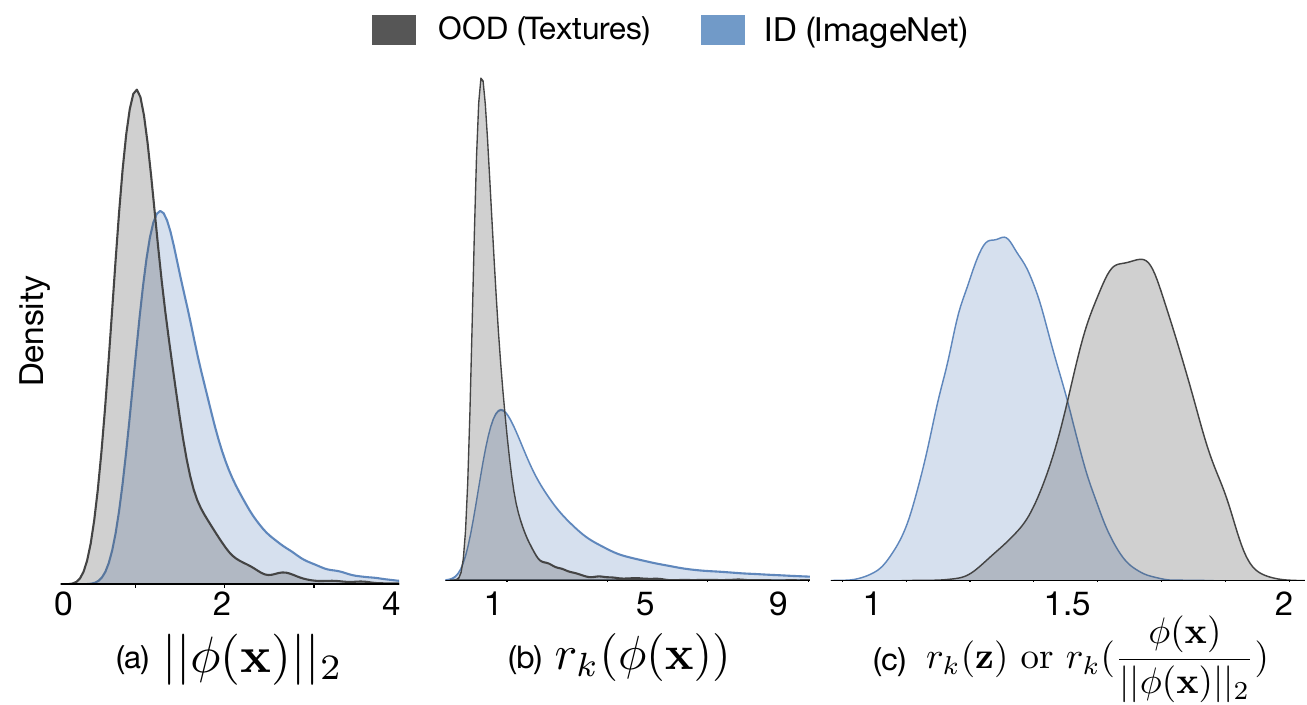}
	\end{center}	
	
	\caption[Distribution plot of features' norm and $k$-NN distance with or w/o. normalization.]{\small Distribution of (a) the $L_2$-norm of feature embeddings, (b) the $k$-NN distance with the \emph{unnormalized} feature embeddings, and (c) the $k$-NN distance with the \emph{normalized} features. }
	\label{fig:knn_norm}

\end{figure}


\vspace{-0.1cm}
\vspace{0.1cm} \noindent \textbf{Using the penultimate layer's feature is better than using the projection head.}
In this chapter, we follow the convention in {SSD+}, which uses features from the penultimate layer instead of the projection head. We also verify in Figure~\ref{fig:knn_abl_many} (c) that using the penultimate layer's feature is better than using the projection head on all test OOD datasets. This is likely due to the penultimate layer preserving more information than the projection head, which has much smaller dimensions.

\vspace{-0.1cm}
\vspace{0.1cm} \noindent \textbf{KNN can be further boosted by activation rectification.} We show that KNN+ can be made stronger with a ReAct~\citep{sun2021react} (Chapter~\ref{sec:react}). It was shown that the OOD data can have overly high activations on some feature dimensions, and this rectification is effective in suppressing the values. Empirically, we compare the results in Table~\ref{tab:knn_heu} by using the activation rectification and achieve improved OOD detection performance.

\begin{table}[htb]
    \centering
    \caption[Comparison of KNN-based method with and w/o. activation truncation.]{Comparison of KNN-based method with and without activation truncation. The ID data is ImageNet-1k. The value is averaged over all test OOD datasets.}
    \label{tab:knn_heu}
    \scalebox{0.8}{
    \begin{tabular}{c|cc}
    \toprule
    Method & {FPR95}\textbf{$\downarrow$} & {AUROC} \textbf{$\uparrow$} \\  \midrule
    KNN+ & 38.47 & 90.91 \\
     KNN+ (w. ReAct~\citep{sun2021react}) & \textbf{26.45} & \textbf{93.76} \\
     \bottomrule
    \end{tabular}}
\end{table}

\vspace{-0.1cm}
\vspace{0.1cm} \noindent \textbf{Using $k$-th and averaged $k$ nearest nerighbors' distance has similar performance.}
We compare two variants for OOD detection: $k$-th nearest neighbor distance vs. averaged $k$ ($k$-avg) nearest neighbor distance. The comparison is shown in Figure~\ref{fig:knn_abl_many} (d), where the average performance (on four datasets) is on par. The reported results are based on the full ID dataset ($\alpha=100\%$) with the optimal $k$ chosen for $k$-th NN and $k$-avg NN respectively. Despite the similar performance, using $k$-th NN distance has a stronger theoretical interpretation, as we show in the next section.

\section{Theoretical Justification}
\label{sec:knn_theory}

In this section, we provide a theoretical analysis of using KNN for OOD detection. By modeling the KNN in the feature space, our theory (1) directly connects to our method which also operates in the feature space, and (2) complements our experiments by considering the universality of OOD data. Our goal here is to analyze the average performance of our algorithm while being OOD-agnostic and training-agnostic.

\vspace{0.1cm} \noindent \textbf{Setup.}
We consider OOD detection task as a special binary classification task, where the negative samples (OOD) are only available in the testing stage. We assume the input is from feature embeddings space $\mathcal{Z}$ and the labeling set $\mathcal{G} = \{0 (\text{OOD}), 1 (\text{ID})\}$. In the inference stage, the testing set $\{(\*z_i, g_i)\}$ is drawn \textit{i.i.d.} from $P_{\mathcal{Z}\mathcal{G}}$. 

Denote the marginal distribution on $\mathcal{Z}$ as $\mathcal{P}$. We adopt the Huber contamination model~\citep{huber1964} to model the fact that we may encounter both ID and OOD data in test time:
$$
\mathcal{P} = \varepsilon \mathcal{P}_{out} + (1 - \varepsilon )\mathcal{P}_{in},
$$
where $\mathcal{P}_{in}$ and $\mathcal{P}_{out}$ are the underlying distributions of feature embeddings for ID and OOD data, respectively, and $\varepsilon$ is a constant controlling the fraction of OOD samples in testing. We use lower case $p_{in}(\*z_i)$ and $p_{out}(\*z_i)$ to denote the probability density function, where $p_{in}(\*z_i) = p(\*z_i| g_i=1)$ and $p_{out}(\*z_i) = p(\*z_i| g_i=0)$. 

A key challenge in OOD detection (and theoretical analysis) is the lack of knowledge on OOD distribution, which can arise universally outside ID data. We thus try to keep our analysis general and reflect the fact that we do not have any strong prior information about OOD. For this reason, we model OOD data with an equal chance to appear outside of the high-density region of ID data, $p_{out}(\*z) =c_0\mathbf{1}\{p_{in}(\*z) < c_1\}$\footnote{In experiments, as it is difficult to simulate the universal OOD, we approximate it by using a diverse yet finite collection of datasets. Our theory is thus complementary to our experiments and captures the universality of OOD data.}. The Bayesian classifier is known as the optimal binary classifier defined by $h_{Bay}(\*z_i) = \mathbf{1}\{p(g_i = 1|\*z_i) \ge \beta\}$\footnote{Note that $\beta$ does not have to be $\frac{1}{2}$ for the Bayesian classifier to be optimal. $\beta$ can be any value 
larger than $\frac{(1-\epsilon)c_1}{(1-\epsilon)c_1 + \epsilon c_0}$ when $\epsilon c_0 \ge (1-\epsilon)c_1$.}, assuming the underlying density function is given. 

Without such oracle information, our method applies $k$-NN as the distance measure which acts as a probability density estimation, and thus provides the decision boundary based on it. Specifically, KNN's hypothesis class $\mathcal{H}$ is given by $\{h:h_{\lambda, k, \mathcal{Z}_n}(\*z_i) = \mathbf{1}\{-r_k(\*z_i) \ge \lambda\}\}$, where $r_k(\*z_i)$ is the distance to the $k$-th nearest neighbor (\emph{c.f.} Section~\ref{sec:knn_method}).

\vspace{0.1cm} \noindent \textbf{Main result.} We show that our KNN-based OOD detector can reject inputs equivalent to the estimated Bayesian binary decision function. A small KNN distance $r_k(\*z_i)$ directly translates into a high probability of being ID, and vice versa. We depict this in the following Theorem.

\begin{theorem} 
\label{th:knn2bayes}
With the setup specified above, if $\hat{p}_{out}(\*z_i) = \hat{c}_{0}\mathbf{1}\{\hat{p}_{in}(\*z_i;k, n) < \frac{\beta\varepsilon\hat{c}_{0}}{(1-\beta)(1-\varepsilon)}\}$, and $\lambda = -\sqrt[m-1]{\frac{(1-\beta)(1 - \varepsilon) k}{\beta\varepsilon c_bn \hat{c}_{0}}}$, we have
$$
\mathbf{1}\{-r_k(\*z_i) \ge \lambda\} = \mathbf{1}\{\hat{p}(g_i = 1|\*z_i) \ge \beta\},
$$
\end{theorem}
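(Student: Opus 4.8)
\textbf{Proof proposal for Theorem~\ref{th:knn2bayes}.}

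The plan is to unfold both indicator functions on either side of the claimed equality and show that, under the stated choices of $\hat p_{out}$ and $\lambda$, they reduce to the \emph{same} threshold condition on the empirical density estimate $\hat p_{in}(\*z_i;k,n)$. First I would write down the Bayesian side: by Bayes' rule, $\hat p(g_i=1\mid \*z_i)\ge\beta$ is equivalent to $(1-\varepsilon)\hat p_{in}(\*z_i)\ge \beta\big[(1-\varepsilon)\hat p_{in}(\*z_i)+\varepsilon\hat p_{out}(\*z_i)\big]$, i.e. $(1-\beta)(1-\varepsilon)\hat p_{in}(\*z_i)\ge \beta\varepsilon\hat p_{out}(\*z_i)$. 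Substituting the assumed form $\hat p_{out}(\*z_i)=\hat c_0\mathbf 1\{\hat p_{in}(\*z_i;k,n)<\frac{\beta\varepsilon\hat c_0}{(1-\beta)(1-\varepsilon)}\}$ makes this a self-consistent statement purely about $\hat p_{in}$: when $\hat p_{in}$ is large the right side is $0$ and the inequality holds; when $\hat p_{in}$ is small one checks the threshold is exactly $\frac{\beta\varepsilon\hat c_0}{(1-\beta)(1-\varepsilon)}$. So the RHS indicator equals $\mathbf 1\{\hat p_{in}(\*z_i;k,n)\ge \tau\}$ with $\tau := \frac{\beta\varepsilon\hat c_0}{(1-\beta)(1-\varepsilon)}$.

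Next I would handle the LHS. The key ingredient is the standard $k$-NN density estimator: in an $m$-dimensional feature space, $\hat p_{in}(\*z_i;k,n) = \frac{k}{n\,c_b\, r_k(\*z_i)^{m}}$ (or $m-1$ depending on the precise convention in the paper's Section~\ref{sec:knn_theory}), where $c_b$ is the volume of the unit ball and $r_k(\*z_i)$ is the distance to the $k$-th nearest neighbor in $\mathcal Z_n$. Since this is a strictly decreasing function of $r_k(\*z_i)$, the condition $-r_k(\*z_i)\ge\lambda$, i.e. $r_k(\*z_i)\le -\lambda$, is equivalent to $\hat p_{in}(\*z_i;k,n)\ge \frac{k}{n\,c_b\,(-\lambda)^{m}}$. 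It then remains to verify that the stated $\lambda = -\sqrt[m-1]{\frac{(1-\beta)(1-\varepsilon)k}{\beta\varepsilon c_b n \hat c_0}}$ makes $\frac{k}{n c_b (-\lambda)^{\,\cdot}} = \tau$; this is just algebra — plug in $(-\lambda)^{\,\cdot} = \frac{(1-\beta)(1-\varepsilon)k}{\beta\varepsilon c_b n\hat c_0}$ and cancel. With both sides reduced to $\mathbf 1\{\hat p_{in}(\*z_i;k,n)\ge\tau\}$, the theorem follows.

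The main obstacle I anticipate is pinning down the exact form of the $k$-NN density estimator used in this paper and making the exponent bookkeeping consistent — the statement has $\sqrt[m-1]{\cdot}$ rather than $\sqrt[m]{\cdot}$, which suggests their density normalization or their ``distance-to-density'' conversion uses $m-1$ (perhaps because features are $L_2$-normalized and hence effectively live on an $(m-1)$-sphere, a point emphasized in the ablations). I would need to track that convention carefully so that the power of $(-\lambda)$ appearing when I invert $r_k$ matches the $m-1$ in the theorem statement; once the estimator $\hat p_{in}(\*z;k,n) = \frac{k}{n c_b r_k(\*z)^{m-1}}$ is fixed, everything else is a monotonicity argument plus one line of algebra. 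A secondary (minor) point is the edge/tie behavior of the indicators at exact equality, which I would dispatch by noting both sides use the same $\ge$ convention so the boundary matches.
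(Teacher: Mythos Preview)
Your proposal is correct and follows essentially the same approach as the paper: both arguments use Bayes' rule, the self-consistency of the indicator in $\hat p_{out}$, and the $k$-NN density estimator $\hat p_{in}(\*z_i;k,n)=\frac{k}{c_b n\, r_k(\*z_i)^{m-1}}$ (with exponent $m-1$ precisely because the normalized features lie on the unit sphere, as you correctly anticipated) to reduce both indicators to the same threshold condition on $\hat p_{in}$. The only cosmetic difference is that the paper writes the argument as a single chain of equalities from the LHS to the RHS, whereas you meet in the middle at $\mathbf 1\{\hat p_{in}\ge\tau\}$; the content is identical.
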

where $\hat{p}(\cdot)$ denotes the empirical estimation. The proof is in Appendix~\ref{sec:knn_theory_sup}.


\section{Additional Related Work}
\label{sec:knn_related}

\vspace{0.1cm} \noindent \textbf{KNN for anomaly detection.} KNN has been explored for anomaly detection ~\citep{jing2014somknn, zhao2020analysis, liron2020knnanomly}, which aims to detect abnormal input samples from one class. We focus on OOD detection, which {requires additionally performing multi-class classification for ID data}. Some other recent works~\citep{dang2015knntabular, gu2019statknn, pires2020knntabular} explore the effectiveness of KNN-based anomaly detection for the tabular data. The potential of using KNN for OOD detection in deep neural networks is currently underexplored. Our work provides both new empirical insights and theoretical analysis of using the KNN-based approach for OOD detection.


\section{Summary}
\label{sec:knn_summary}

this chapter presents the first study exploring and demonstrating the efficacy of the non-parametric nearest-neighbor distance for OOD detection. Unlike prior works, the non-parametric approach does not impose {any} distributional assumption about the underlying feature space, hence providing stronger flexibility and generality.
We provide important insights that  
a high-quality feature embedding and a suitable distance measure are two indispensable components for the OOD detection task. Extensive experiments show  KNN-based method can notably improve the performance on several OOD detection benchmarks, establishing superior results. We hope our work inspires future research on using the non-parametric approach to OOD detection. 



\part{Open-world Representation Learning}
\chapter{When and How Does Known Class Help Discover Unknown Ones? A Spectral Analysis}
\label{sec:nscl}

\paragraph{Publication Statement.} This chapter is joint work with Zhenmei Shi, Yingyu Liang, and Yixuan Li. The paper version of this chapter appeared in ICML23~\citep{sun2023nscl}. 

\noindent\rule{\textwidth}{1pt}

The pivotal progression beyond recognizing OOD samples involves discovering latent classes within these samples. This unique task, known as Novel Class Discovery (NCD), is dedicated to the identification of new classes within an unlabeled dataset by leveraging pre-established knowledge from a labeled set of familiar classes. In the context of open-world representation learning, which accommodates unlabeled samples from both known and novel classes, NCD emerges as a distinct and significant sub-problem, specifically focusing on unveiling these novel classes.

Despite its importance, there is a lack of theoretical foundations for NCD. This chapter bridges the gap by providing an analytical framework to formalize and investigate \emph{when and how known classes can help discover novel classes}. 
Tailored to the NCD problem, we introduce a graph-theoretic representation that can be learned by a novel NCD Spectral Contrastive Loss (NSCL). Minimizing this objective is equivalent to factorizing the graph's adjacency matrix, which allows us to derive a provable error bound and provide the sufficient and necessary condition for NCD. Empirically, NSCL can match or outperform several strong baselines on common benchmark datasets, which is appealing for practical usage while enjoying theoretical guarantees.

\section{Introduction}
\label{sec:nscl_intro}

Though modern machine learning methods have achieved remarkable success~\citep{he2016deep, chen2020simclr, song2020score, wang2022pico}, the vast majority of learning algorithms have been driven by the closed-world setting, where the classes are assumed stationary and unchanged between training and testing. 
However, machine learning models in the open world will inevitably encounter novel classes that are outside the existing known categories~\citep{sun2021react,sun2022knnood,ming2022delving,ming2023exploit}.  Novel Class Discovery (NCD)~\citep{Han2019dtc} has emerged as an important problem, which aims to cluster similar samples in an unlabeled dataset (of novel classes) by way of utilizing knowledge from the labeled data (of known classes).  Key to NCD  is harnessing the power of labeled data for possible knowledge sharing and transfer to the unlabeled data~\citep{hsu2017kcl, Han2019dtc, hsu2019mcl, zhong2021openmix, zhao2020rankstat, yang2022divide, sun2023opencon}.

\begin{figure}[t]
    \centering
    \includegraphics[width=0.75\linewidth]{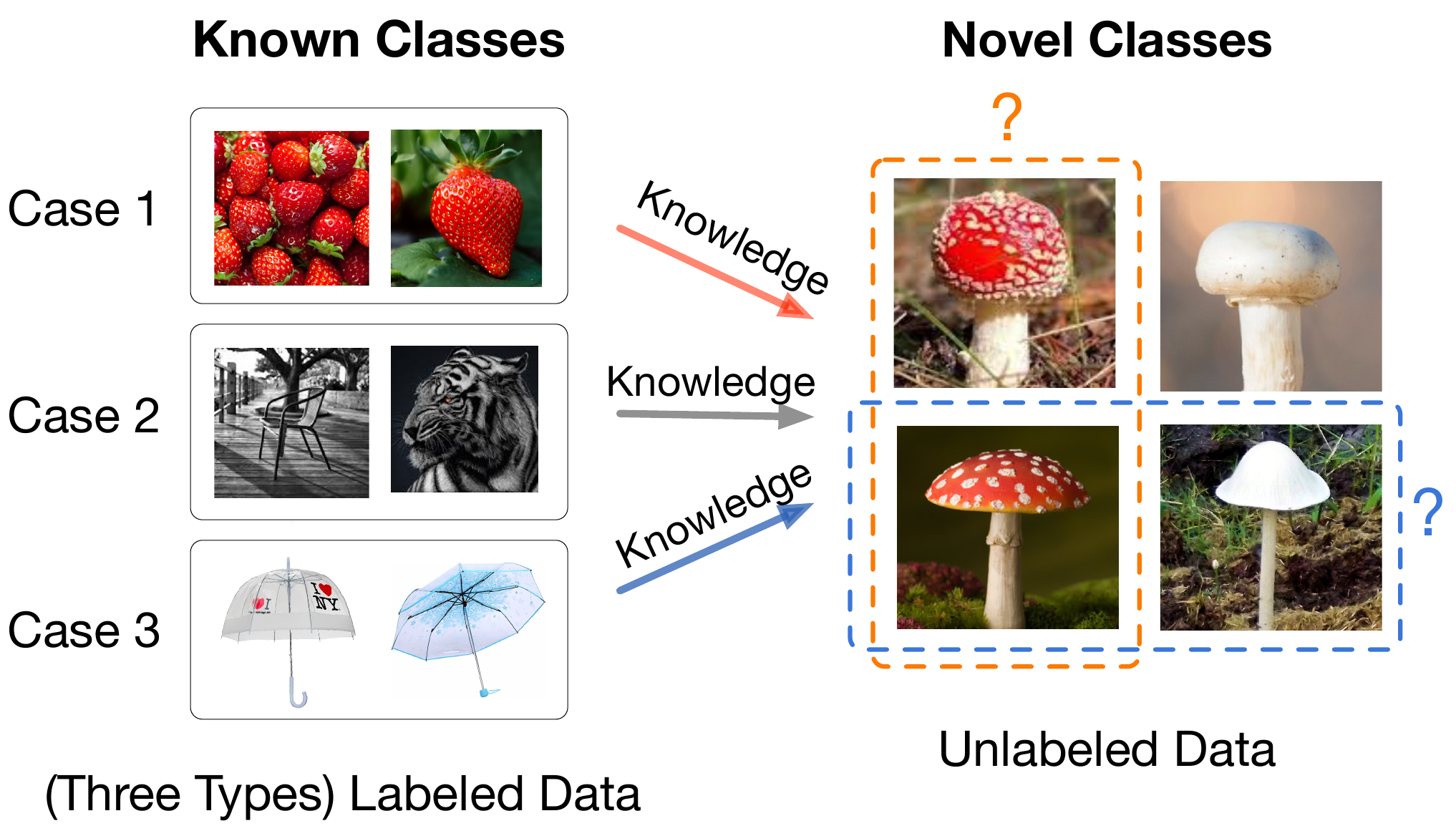}
    \caption[Illustration of scenarios where different known classes could result in different novel clusters. ]{\textbf{ Novel Class Discovery (NCD)} aims to cluster similar samples in unlabeled
data (right), by way of utilizing knowledge from the labeled data (left). We illustrate scenarios where different known classes  could result in different novel clusters (e.g., red mushrooms or mushrooms with umbrella shapes). This chapter aims to provide a formal understanding.
    }
    \label{fig:nscl_teaser}
\end{figure}

One promising approach for NCD is to  learn feature representation jointly from both labeled and unlabeled data, so that meaningful cluster structures emerge as novel classes. We argue that interesting intricacies can arise in this learning process---the resulting novel clusters may be very different, depending on the type of known class provided. We exemplify the nuances in Figure~\ref{fig:nscl_teaser}. In one scenario, the novel class ``red mushroom'' can be discovered, provided with the known class ``strawberry'' of a shared color feature. Alternatively, a different novel class can also emerge by grouping the bottom two images together (as ``mushroom with umbrella shape'' class), if the umbrella-shape images are given as a known class to the learner. We argue---perhaps obviously---that a formalized understanding of the intricate phenomenon is needed. This motivates our research:
\begin{center}
   \textit{\textbf{When and  how does the known class help discover novel classes?}}  
\end{center}

Despite the empirical successes in recent years, there is a limited theoretical understanding and formalization for novel class discovery. To the best of our knowledge, there is no prior work that investigated this research question from a rigorous theoretical standpoint or provided provable error bound. Our work thus complements the existing works by filling in the critical blank.

In this chapter, we start by formalizing a new learning algorithm that facilitates the understanding of NCD from a spectral analysis perspective. Our theoretical framework first introduces a graph-theoretic representation tailored for NCD, where the vertices are all the labeled and unlabeled data points, and classes form connected sub-graphs (Section~\ref{sec:nscl_graph_rep}). 
Based on this graph representation, we then 
introduce a new loss called NCD Spectral Contrastive Loss (NSCL) and show that minimizing our loss is equivalent to performing spectral decomposition on the graph (Section~\ref{sec:nscl_ncd-scl}). Such  equivalence allows us to derive the formal error bound for NCD based on the properties of the graph, which directly encodes the relations between known and novel classes.

We analyze the NCD quality by the linear probing performance on novel data, which is the least error of all possible linear classifiers with the learned representation. 
Our main result (Theorem~\ref{th:nscl_no_approx}) suggests that the linear probing  error can be significantly reduced (even to 0) when the linear span of known samples' feature covers the ``ignorance space'' of unlabeled data in discovering novel classes. 
Lastly, we verify that our theoretical guarantees can translate into empirical effectiveness. In particular, NSCL establishes competitive performance on common NCD benchmarks, outperforming the best baseline by \textbf{10.6}\% on the CIFAR-100-50 dataset (with 50 novel classes). 

Our \textbf{main contributions} are: 
\vspace{-0.2cm}
\begin{enumerate}
    \item We provide the first provable framework for the NCD problem, formalizing it by spectral decomposition of the graph containing both known and novel data. 
    Our framework  allows the research community to gain insights from a graph-theoretic perspective. 
    \vspace{-0.2cm}
    \item We propose a new loss called NCD Spectral Contrastive Loss
(NSCL) and show that minimizing our loss is equivalent to performing singular decomposition on the graph. The loss leads to strong empirical performance while enjoying theoretical guarantees.
    \vspace{-0.2cm}
    \item We provide theoretical insight by formally defining the semantic relationship between known and novel classes. Based on that, we derive an error bound of novel class discovery  and investigate the sufficient and necessary conditions for the perfect discovery results.
\end{enumerate}

\section{Setup~}
\label{sec:nscl_setup}

In Section~\ref{sec:prob}, we delineated the problem setup for Open-World Representation Learning, taking into account unlabeled samples from both known and novel classes. In this chapter,  we shift our focus to a more specific subset of the problem—Novel Class Discovery—which exclusively examines unlabeled samples from unidentified classes.
Formally, we describe the data setup and learning goal for novel class discovery (NCD).

\vspace{0.1cm} \noindent \textbf{Data setup.} We consider the empirical training set  $\mathcal{D}_{l} \cup \mathcal{D}_{u}$ as a union of labeled and unlabeled data. The labeled dataset is given by $\mathcal{D}_{l} = \{(\bar{x}_1,y_1),\ldots,(\bar{x}_i,y_i),\ldots\}$, where $y_i$ belongs to known class space $\mathcal{Y}_l$; and the unlabeled dataset is $\mathcal{D}_{u} = \{\bar{x}_1, \ldots,\bar{x}_j,\ldots\}$. We assume that each unlabeled sample $\bar x \in \mathcal{D}_u$ belongs to one of the \textbf{novel} classes, \emph{which do not overlap with the {known} classes $\mathcal{Y}_l$}.  We use $\mathcal{P}_{l}$ and $\mathcal{P}_{u}$ to denote the  marginal distributions of labeled and unlabeled data in the input space. Further, we let $\mathcal{P}_{l_i}$ denote the distribution of labeled samples with class label $i \in \mathcal{Y}_l$.

\vspace{0.1cm} \noindent \textbf{Learning goal.} We assume that there exists an underlying class space $\mathcal{Y}_{u} = \{1, ..., |\mathcal{Y}_u|\}$ for unlabeled data $\mathcal{X}_u$, which is not revealed to the learner. The goal of novel class discovery is to learn a clustering for the novel data, which can be mapped to $\mathcal{Y}_{u}$ with low  error. %

%

\section{Spectral Contrastive Learning for Novel Class Discovery}
\label{sec:nscl_method}

In this section, we introduce a new learning algorithm for NCD, from a graph-theoretic perspective. NCD is inherently a clustering problem---grouping similar points in unlabeled data $\mathcal{D}_u$ into the same cluster, by way of possibly utilizing helpful information from the labeled data $\mathcal{D}_l$. This clustering process can be fundamentally modeled by a graph, where the vertices are all the data points and classes form connected sub-graphs. Our novel framework first introduces a graph-theoretic representation for NCD, where edges connect similar data points (Section~\ref{sec:nscl_graph_rep}). We then 
propose a new loss that performs spectral decomposition on the similarity graph and can be
 written as a contrastive learning objective on neural net representations (Section~\ref{sec:nscl_ncd-scl}).   %

\subsection{Graph-Theoretic Representation for NCD} 
\label{sec:nscl_graph_rep}

We start by formally defining the augmentation graph and adjacency matrix. 
For notation clarity, we use $\bar x$ to indicate the natural sample (raw inputs without augmentation). Given an $\bar x$, we use $\mathcal{T}(x|\Bar{x})$ to denote the probability of $x$ being augmented from $\Bar{x}$. For instance, when $\Bar{x}$ represents an image, $\mathcal{T}(\cdot|\Bar{x})$ can be the distribution of common augmentations such as Gaussian blur, color distortion, and random cropping. 
The augmentation allows us to define a general population space $\mathcal{X}$, which contains all the original images along with their augmentations. In our case, $\mathcal{X}$ ($|\mathcal{X}|=N$) is composed of two parts $\mathcal{X}_l$ ($|\mathcal{X}_l| = N_l$), $\mathcal{X}_u$ ($|\mathcal{X}_u| = N_u$) which represents the division into labeled data with known classes and unlabeled data with novel classes respectively. 
Unlike unsupervised learning~\citep{chen2020simclr}, NCD has access to both labeled and unlabeled data. This leads to two cases where two samples $x$ and $x^+$ form a {\textbf{positive pair}} if: 
\begin{enumerate}[(a)]
    \item 
    $x$ and $x^+$ are augmented from the same unlabeled image $\Bar{x}_u\sim \mathcal{P}_u$.
    
    \item $x$ and $x^+$ are augmented from two labeled samples $\Bar{x}_l$ and $\Bar{x}'_l$ \emph{with the same known class $i$}. In other words, both $\Bar{x}_l$ and $\Bar{x}'_l$ are drawn independently from $\mathcal{P}_{l_i}$.%
\end{enumerate}
We define the graph $G(\mathcal{X}, w)$ with vertex set $\mathcal{X}$ and edge weights $w$. For any two augmented data $x, x' \in \mathcal{X}$, $w_{x x'}$ is the marginal probability of generating the pair $(x,x')$:
\begin{align}
\begin{split}
w_{x x^{\prime}} &\triangleq \alpha \sum_{i \in \mathcal{Y}_l}\mathbb{E}_{\bar{x}_{l} \sim {\mathcal{P}_{l_i}}} \mathbb{E}_{\bar{x}'_{l} \sim {\mathcal{P}_{l_i}}} \tikzmarknode{c2}{\highlight{red}{$\mathcal{T}(x | \bar{x}_{l}) \mathcal{T}\left(x' | \bar{x}'_{l}\right)  $}} \\ &+ 
    \beta \mathbb{E}_{\bar{x}_{u} \sim {\mathcal{P}_u}} \tikzmarknode{c1}{\highlight{blue}{$ \mathcal{T}(x| \bar{x}_{u}) \mathcal{T}\left(x'| \bar{x}_{u}\right) $}},
    \label{eq:nscl_def_wxx}
    \vspace{1cm}
\end{split}
\end{align}
\begin{tikzpicture}[overlay,remember picture,>=stealth,nodes={align=left,inner ysep=1pt},<-]
    \path (c2.south) ++ (0,0.1em) node[anchor=north west,color=red!67] (c2t){\textit{ case (b)}};
    \draw [color=red!87](c2.south) |- ([xshift=-0.3ex,color=red] c2t.south east);
    \path (c1.south) ++ (0,0.1em) node[anchor=north west,color=blue!67] (c1t){\textit{ case (a)}};
    \draw [color=blue!87](c1.south) |- ([xshift=-0.3ex,color=blue]c1t.south east);
\end{tikzpicture}

where $\alpha,\beta$ modulates the importance between unlabeled and labeled data. 
The magnitude of $w_{xx'}$ indicates the ``positiveness'' or similarity between  $x$ and $x'$. 
We then use $w_x = \sum_{x' \in \mathcal{X}}w_{xx'}$ to denote the total edge weights connected to vertex $x$.

As a standard technique in graph theory~\citep{chung1997spectral}, we use the \textit{normalized adjacency matrix}:
\begin{equation}
    \dot{A}\triangleq D^{-1 / 2} A D^{-1 / 2},
    \label{eq:nscl_def}
\end{equation}
where  $A \in \mathbb{R}^{N \times N}$ is adjacency matrix with entries $A_{x x^\prime}=w_{x x^{\prime}}$ and $D \in \mathbb{R}^{N \times N}$ is a diagonal matrix with $D_{x x}=w_x.$ The normalization balances the degree of each node,  reducing the influence of vertices with very large degrees. The adjacency matrix defines the probability of $x$ and $x^{\prime}$  being considered as the positive pair from the perspective of augmentation, which helps derive the NCD Spectral Contrastive Loss as we show next.

\subsection{NCD Spectral Contrastive Learning}
\label{sec:nscl_ncd-scl}
In this subsection, we propose a formal definition of NCD Spectral Contrastive Loss, which can be derived from a spectral decomposition of $\dot{A}$. The derivation of the loss is inspired by ~\citep{haochen2021provable}, and allows us to theoretically show the equivalence between learning  feature embeddings and the projection on the top-$k$ SVD components of $\dot{A}$. Importantly, such equivalence facilitates the theoretical understanding based on the semantic relation between known and novel classes encoded in $\dot{A}$. 

Specifically, we consider low-rank matrix approximation:
\begin{equation}
    \min _{F \in \mathbb{R}^{N \times k}} \mathcal{L}_{\mathrm{mf}}(F, A)\triangleq\left\|\Dot{A}-F F^{\top}\right\|_F^2
    \label{eq:nscl_lmf}
\end{equation}
According to the Eckart–Young–Mirsky theorem~\citep{eckart1936approximation}, the minimizer of this loss function is $F^*\in \mathbb{R}^{N \times k}$ such that $F^* F^{*\top}$ contains the top-$k$ components of $\Dot{A}$'s SVD decomposition. 

Now, if we view each row $\*f_x^{\top}$ of $F$ as a learned feature embedding  $f:\mathcal{X}\mapsto \mathbb{R}^k$, the $\mathcal{L}_{\mathrm{mf}}(F, A)$ can be written as a form of the contrastive learning objective. We formalize this connection in Theorem~\ref{th:nscl_ncd-scl} below.

\begin{theorem}
\label{th:nscl_ncd-scl} 
We define $\*f_x = \sqrt{w_x}f(x)$ for some function $f$. Recall $\alpha,\beta$ are hyper-parameters defined in Eq.~\eqref{eq:nscl_def_wxx}. Then minimizing the loss function $\mathcal{L}_{\mathrm{mf}}(F, A)$ is equivalent to minimizing the following loss function for $f$, which we term \textbf{NCD Spectral Contrastive Loss (NSCL)}:
\begin{align}
\begin{split}
    \mathcal{L}_{nscl}(f) &\triangleq - 2\alpha \mathcal{L}_1(f) 
- 2\beta  \mathcal{L}_2(f) \\ & + \alpha^2 \mathcal{L}_3(f) + 2\alpha \beta \mathcal{L}_4(f) +  
\beta^2 \mathcal{L}_5(f),
\label{eq:nscl_def_nscl}
\end{split}
\end{align} where
\begin{align*}
    \mathcal{L}_1(f) &= \sum_{i \in \mathcal{Y}_l}\underset{\substack{\bar{x}_{l} \sim \mathcal{P}_{{l_i}}, \bar{x}'_{l} \sim \mathcal{P}_{{l_i}},\\x \sim \mathcal{T}(\cdot|\bar{x}_{l}), x^{+} \sim \mathcal{T}(\cdot|\bar{x}'_l)}}{\mathbb{E}}\left[f(x)^{\top} {f}\left(x^{+}\right)\right] , \\
    \mathcal{L}_2(f) &= \underset{\substack{\bar{x}_{u} \sim \mathcal{P}_{u},\\x \sim \mathcal{T}(\cdot|\bar{x}_{u}), x^{+} \sim \mathcal{T}(\cdot|\bar{x}_u)}}{\mathbb{E}}
\left[f(x)^{\top} {f}\left(x^{+}\right)\right], \\
    \mathcal{L}_3(f) &= \sum_{i \in \mathcal{Y}_l}\sum_{j \in \mathcal{Y}_l}\underset{\substack{\bar{x}_l \sim \mathcal{P}_{{l_i}}, \bar{x}'_l \sim \mathcal{P}_{{l_{j}}},\\x \sim \mathcal{T}(\cdot|\bar{x}_l), x^{-} \sim \mathcal{T}(\cdot|\bar{x}'_l)}}{\mathbb{E}}
\left[\left(f(x)^{\top} {f}\left(x^{-}\right)\right)^2\right], \\
    \mathcal{L}_4(f) &= \sum_{i \in \mathcal{Y}_l}\underset{\substack{\bar{x}_l \sim \mathcal{P}_{{l_i}}, \bar{x}_u \sim \mathcal{P}_{u},\\x \sim \mathcal{T}(\cdot|\bar{x}_l), x^{-} \sim \mathcal{T}(\cdot|\bar{x}_u)}}{\mathbb{E}}
\left[\left(f(x)^{\top} {f}\left(x^{-}\right)\right)^2\right], \\
    \mathcal{L}_5(f) &= \underset{\substack{\bar{x}_u \sim \mathcal{P}_{u}, \bar{x}'_u \sim \mathcal{P}_{u},\\x \sim \mathcal{T}(\cdot|\bar{x}_u), x^{-} \sim \mathcal{T}(\cdot|\bar{x}'_u)}}{\mathbb{E}}
\left[\left(f(x)^{\top} {f}\left(x^{-}\right)\right)^2\right].
\end{align*}
\end{theorem}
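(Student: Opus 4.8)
The plan is to expand the matrix-factorization loss $\mathcal{L}_{\mathrm{mf}}(F,A) = \|\dot A - FF^\top\|_F^2$ into three pieces, discard the constant $\|\dot A\|_F^2$ term, and rewrite the remaining two terms as expectations over the augmentation process using the substitution $\*f_x = \sqrt{w_x}\, f(x)$. First I would write
$$
\mathcal{L}_{\mathrm{mf}}(F,A) = \|\dot A\|_F^2 - 2\,\mathrm{tr}(\dot A\, FF^\top) + \|FF^\top\|_F^2
= \mathrm{const} - 2\sum_{x,x'} \dot A_{xx'} \,\*f_x^\top \*f_{x'} + \sum_{x,x'} (\*f_x^\top \*f_{x'})^2.
$$
Then I would substitute $\dot A_{xx'} = w_{xx'}/\sqrt{w_x w_{x'}}$ and $\*f_x = \sqrt{w_x} f(x)$ so that the factors of $\sqrt{w_x}$ cancel in the cross term, giving $-2\sum_{x,x'} w_{xx'}\, f(x)^\top f(x')$, and the quartic term becomes $\sum_{x,x'} w_x w_{x'}\, (f(x)^\top f(x'))^2$. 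Since $w_x = \sum_{x'} w_{xx'}$, both sums are precisely expectations: the cross term samples a positive pair $(x,x')$ with probability proportional to $w_{xx'}$, and the quartic term samples $x$ and $x'$ independently from the marginal induced by $w$.

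The core of the argument is then a bookkeeping expansion: plug the definition of $w_{xx'}$ from Eq.~\eqref{eq:nscl_def_wxx}, which is a sum of an $\alpha$-weighted ``case (b)'' (labeled, same-class) term and a $\beta$-weighted ``case (a)'' (unlabeled, same-image) term. The cross term $-2\sum w_{xx'} f(x)^\top f(x')$ splits linearly into $-2\alpha \mathcal{L}_1(f) - 2\beta \mathcal{L}_2(f)$, matching the two expectations in the theorem statement once one recognizes that integrating $\mathcal{T}(x|\bar x_l)\mathcal{T}(x'|\bar x'_l)$ against $f(x)^\top f(x')$ is exactly $\mathbb{E}[f(x)^\top f(x^+)]$ over the stated sampling process. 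For the quartic term, the marginal weight $w_x$ itself decomposes as $w_x = \alpha \sum_{i}\mathbb{E}_{\bar x_l\sim\mathcal{P}_{l_i}}\mathcal{T}(x|\bar x_l)\cdot(\text{const}) + \beta\,\mathbb{E}_{\bar x_u}\mathcal{T}(x|\bar x_u)\cdot(\text{const})$; here one must check that $\sum_{x'}\mathcal{T}(x'|\bar x'_l) = 1$ and $\sum_{x'}\mathcal{T}(x'|\bar x_u)=1$ so the inner sums collapse. Multiplying the two decompositions of $w_x$ and $w_{x'}$ produces four cross terms with coefficients $\alpha^2$, $\alpha\beta$, $\beta\alpha$, $\beta^2$; combining the two $\alpha\beta$ terms (which are symmetric under swapping the roles of $x$ and $x'$) gives the factor $2\alpha\beta$ in front of $\mathcal{L}_4$, and the remaining terms match $\alpha^2\mathcal{L}_3$, $\beta^2\mathcal{L}_5$. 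Finally I would note that the substitution $\*f_x = \sqrt{w_x} f(x)$ is without loss of generality for the minimization (any $F$ corresponds to some $f$ and vice versa), so minimizers coincide up to this reparametrization, establishing the claimed equivalence.

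The main obstacle I anticipate is the careful handling of the normalization constants hidden in the ``$(\text{const})$'' factors above: when $w_x$ is decomposed, the labeled part carries a factor like $\sum_{j\in\mathcal{Y}_l}\mathbb{E}_{\bar x'_l\sim \mathcal{P}_{l_j}}\mathbb{E}_{\bar x''_l\sim\mathcal{P}_{l_j}}\sum_{x'}\mathcal{T}(x'|\bar x''_l)$, and one must verify this telescopes correctly so that the resulting expectations in $\mathcal{L}_3,\mathcal{L}_4,\mathcal{L}_5$ are over the \emph{stated} distributions (e.g., in $\mathcal{L}_3$ the two class indices $i,j$ range independently over $\mathcal{Y}_l$, whereas in $\mathcal{L}_1$ there is a single shared index $i$). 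A secondary subtlety is confirming the symmetry argument that merges the two $\alpha\beta$ cross terms into a single $2\alpha\beta\mathcal{L}_4(f)$ — this uses that $f(x)^\top f(x^-)$ is symmetric in its two arguments and that the labeled and unlabeled marginals enter symmetrically in the product $w_x w_{x'}$. Everything else is routine algebra, closely paralleling the derivation in \citet{haochen2021provable} but with the two-source ($\alpha$ labeled, $\beta$ unlabeled) structure of the adjacency matrix bookkept explicitly.
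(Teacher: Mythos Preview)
Your proposal is correct and follows essentially the same approach as the paper: expand $\|\dot A - FF^\top\|_F^2$, drop the constant, substitute $\*f_x = \sqrt{w_x}f(x)$ to obtain the $-2\sum w_{xx'}f(x)^\top f(x')$ and $\sum w_x w_{x'}(f(x)^\top f(x'))^2$ terms, then plug in the two-component definition of $w_{xx'}$ and the resulting marginal $w_x$ (using $\sum_{x'}\mathcal{T}(x'|\bar x)=1$) to read off the five $\mathcal{L}_i$ pieces. The ``obstacles'' you flag are exactly the bookkeeping steps the paper carries out explicitly, and they resolve as you anticipate.
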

\begin{proof} (\textit{sketch})
We can expand $\mathcal{L}_{\mathrm{mf}}(F, A)$ and obtain
\begin{align*}
&\mathcal{L}_{\mathrm{mf}}(F, A) =\sum_{x, x^{\prime} \in \mathcal{X}}\left(\frac{w_{x x^{\prime}}}{\sqrt{w_x w_{x^{\prime}}}}-\*f_x^{\top} \*f_{x^{\prime}}\right)^2 = const + \\
&\sum_{x, x^{\prime} \in \mathcal{X}}\left(-2 w_{x x^{\prime}} f(x)^{\top} {f}\left(x^{\prime}\right)+w_x w_{x^{\prime}}\left(f(x)^{\top}{f}\left(x^{\prime}\right)\right)^2\right)
\end{align*} 
The form of $\mathcal{L}_{nscl}(f)$ is derived from plugging $w_{xx'}$ (defined in Eq.~\eqref{eq:nscl_def_wxx}) and $w_x$. 
We include the details in Appendix~\ref{sec:nscl_proof-nscl}. 
\end{proof}

\vspace{0.1cm} \noindent \textbf{Interpretation of $\mathcal{L}_{nscl}(f)$.} 
At a high level, $\mathcal{L}_1$ and $\mathcal{L}_2$ push the embeddings of \textbf{positive pairs} to be closer while $\mathcal{L}_3$, $\mathcal{L}_4$ 
 and $\mathcal{L}_5$ pull away the embeddings of \textbf{negative pairs}. In particular, $\mathcal{L}_1$ samples two random augmentation views of two images from labeled data with the \textbf{same} class label, and $\mathcal{L}_2$ samples two views from the same image in $\mathcal{X}_{u}$. For negative pairs, $\mathcal{L}_3$ uses two augmentation views from two samples in $\mathcal{X}_{l}$ with \textbf{any} class label. $\mathcal{L}_4$ uses two views of one sample in $\mathcal{X}_{l}$ and another one in $\mathcal{X}_{u}$. $\mathcal{L}_5$ uses two views from two random samples in $\mathcal{X}_{u}$. 

\section{Theoretical Analysis}
\label{sec:nscl_theory}
So far we have presented a spectral approach for NCD based on the augmentation graph. Under this formulation, we now formally investigate and analyze:
\emph{\textbf{when and how does the known class help discover novel class?}}  We start by showing that analyzing the linear probing performance is equivalent to analyzing the regression residual using singular vectors of $\Dot{A}$ in Sec.~\ref{sec:nscl_setup}. We then construct a toy example to illustrate and verify the key insight in Sec.~\ref{sec:nscl_toy}. We finally provide a formal theory for the general case in Sec.~\ref{sec:nscl_theory_main}.

\subsection{Theoretical Setup}
\label{sec:nscl_theory_setup}
\vspace{0.1cm} \noindent \textbf{Representation for unlabeled data.} We apply NCD spectral learning objective $\mathcal{L}_{nscl}(f)$ in Equation~\ref{eq:nscl_def_nscl} and assume the optimizer is capable to obtain the representation that minimizes the loss. We can then obtain the $F^*$ s.t. $F^*F^{*\top}$ are the top-$k$ components of $\Dot{A}$’s SVD decomposition. 
To ease the analysis, we will focus on the top-$k$ singular vectors $V^* \in \mathbb{R}^{N\times k}$ of $\Dot{A}$ such that $F^* = V^* \sqrt{\Sigma_k}$, where $\Sigma_k$ is the diagonal matrix with top-$k$ singular values ($\sigma_1, ..., \sigma_k$). 

Since we are primarily interested in the unlabeled data, we split $V^*$ into two parts: $U^* \in \mathbb{R}^{N_u\times k}$ for unlabeled data and $L^* \in \mathbb{R}^{N_l\times k}$ for labeled data, respectively. Assuming the first $N_l$ rows/columns in $\Dot{A}$ corresponds to the labeled data, we can conveniently rewrite $V^*$ as: 
\begin{equation}
    V^* = \left[\begin{array}{c}
         L^* (\text{labeled part})\\
         U^* (\text{unlabeled part})
    \end{array}\right]
\end{equation}

\vspace{0.1cm} \noindent \textbf{Linear probing evaluation.} With the learned representation for the unlabeled data, we can evaluate NCD quality by the linear probing performance. The strategy is commonly used in self-supervised learning~\citep{chen2020simclr}. Specifically, the weight of a linear classifier is denoted as $\*M \in \mathbb{R}^{k \times |\mathcal{Y}_u|}$.
The class prediction is given by $h(x;f, \*M) = \operatorname{argmax}_{i \in \mathcal{Y}_u} (f(x)^\top \*M)_i$. The linear probing performance is given by the least error of all possible linear classifiers:
\begin{equation}
\mathcal{E}(f)\triangleq\underset{{\*M}\in \mathbb{R}^{k \times |\mathcal{Y}_u|}}{\operatorname{min}}  \underset{{x \in \mathcal{X}_u}}{\sum} \mathbbm{1}\left[y(x) \neq h(x;f, \*M)\right],
\label{eq:nscl_def_error}
\end{equation}

where $y(x)$ indicates the ground-truth class of $x$.

\begin{figure*}[t]
    \centering
    \includegraphics[width=0.95\linewidth]{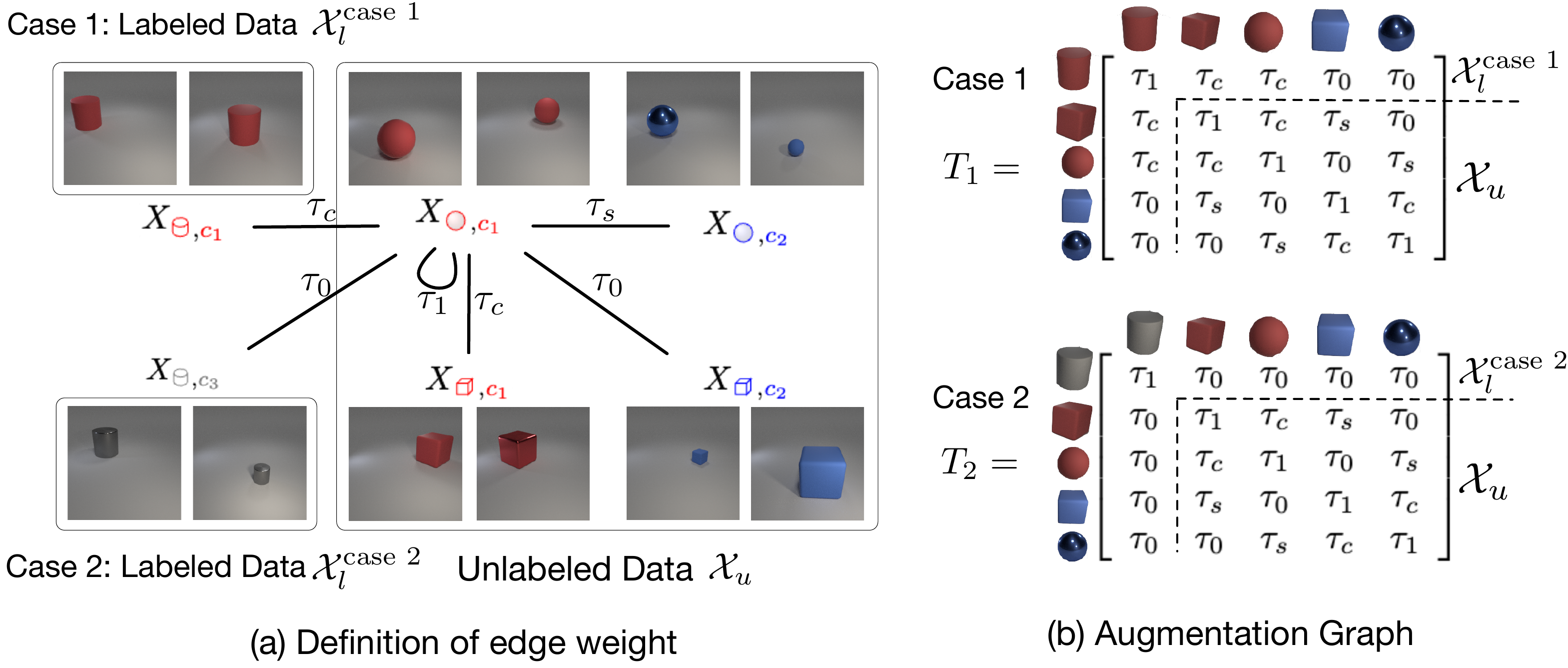}
    \caption[An illustrative example for theoretical analysis.]{An illustrative example for theoretical analysis. \textbf{(a)} The unlabeled data $\mathcal{X}_u$ consists of 3D objects of sphere/cube with red/blue colors. We consider two cases of labeled data: (1) Case 1 uses red cylinders  that are correlated with the target novel class (red). (2) Case 2 uses gray cylinders which have no correlation with $\mathcal{X}_u$. \textbf{(b)} The augmentation matrices for case 1 and case 2 respectively. See definition in Eq.~\eqref{eq:nscl_def_edge}. Best viewed in color.}
    \label{fig:nscl_toy_setting}
\end{figure*}

\vspace{0.1cm} \noindent \textbf{Residual analysis.} With defined $U^*$, we can bound the linear probing error $\mathcal{E}(f)$  by the residual of the regression error $\mathcal{R}(U^*)$ as we show in Lemma~\ref{lemma:nscl_cls_bound} with proof in Appendix~\ref{sec:nscl_sup_cls_bound}. 
\begin{lemma}
Denote the $\mathbf{y}(x) \in \mathbb{R}^{|\mathcal{Y}_u|}$ as a one-hot vector whose $y(x)$-th position is 1 and 0 elsewhere. Let
$\mathbf{Y} \in \mathbb{R}^{N_u \times |\mathcal{Y}_u|}$ as a binary mask whose rows are stacked by $\mathbf{y}(x)$. We have: 
    $$\mathcal{R}(U^*) \triangleq \underset{{\*M}\in \mathbb{R}^{k \times |\mathcal{Y}_u|}}{\operatorname{min}} \|\mathbf{Y} - U^* \*M \|^2_F \geq \frac{1}{2}\mathcal{E}(f).$$
\label{lemma:nscl_cls_bound}
\end{lemma}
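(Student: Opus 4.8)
\textbf{Proof plan for Lemma~\ref{lemma:nscl_cls_bound}.}

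The plan is to relate the classification error $\mathcal{E}(f)$, which counts misclassified points under the best linear head, to the squared-error regression residual $\mathcal{R}(U^*)$ obtained by regressing the one-hot label matrix $\mathbf{Y}$ onto the learned features $U^*$. The key observation is that the $\operatorname{argmax}$ classifier induced by a regression solution cannot do worse (up to the stated factor) than the regression loss itself, because every misclassified point contributes a non-trivial amount to the squared error.

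First I would fix the minimizer $\*M^\star \in \mathbb{R}^{k\times|\mathcal{Y}_u|}$ of the regression problem $\min_{\*M}\|\mathbf{Y} - U^*\*M\|_F^2$, and consider the plug-in classifier $h(x;f,\*M^\star) = \operatorname{argmax}_{i\in\mathcal{Y}_u}(f(x)^\top\*M^\star)_i = \operatorname{argmax}_i (U^*\*M^\star)_{x,i}$ (recall $\*f_x^\top$ is the row of $F^*$, and $U^*$ is the unlabeled block of the singular vectors, so up to the fixed scaling by $\sqrt{\Sigma_k}$ this row is the feature of $x$; any positive rescaling can be absorbed into $\*M^\star$). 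Since $\mathcal{E}(f)$ is the error of the \emph{best} linear classifier, it suffices to upper bound the error of this particular classifier: $\mathcal{E}(f) \le \sum_{x\in\mathcal{X}_u}\mathbbm{1}[y(x)\neq h(x;f,\*M^\star)]$.

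Next I would do the pointwise estimate. Write $\*r_x = (U^*\*M^\star)_{x,\cdot} - \mathbf{y}(x) \in \mathbb{R}^{|\mathcal{Y}_u|}$, so that $\mathcal{R}(U^*) = \sum_{x\in\mathcal{X}_u}\|\*r_x\|_2^2$. Suppose $x$ is misclassified, i.e. there is some $j \neq y(x)$ with $(U^*\*M^\star)_{x,j} \ge (U^*\*M^\star)_{x,y(x)}$. Since the true-label coordinate of $\mathbf{y}(x)$ is $1$ and the $j$-th is $0$, we get $\*r_x$ at coordinate $y(x)$ equals $(U^*\*M^\star)_{x,y(x)} - 1$ and at coordinate $j$ equals $(U^*\*M^\star)_{x,j}$. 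If $(U^*\*M^\star)_{x,y(x)} \le \tfrac12$ then $|\*r_{x,y(x)}| \ge \tfrac12$; otherwise $(U^*\*M^\star)_{x,j} \ge (U^*\*M^\star)_{x,y(x)} > \tfrac12$, so $|\*r_{x,j}| > \tfrac12$. Either way $\|\*r_x\|_2^2 \ge \tfrac14$. Summing over misclassified points gives $\mathcal{R}(U^*) \ge \sum_{x\in\mathcal{X}_u}\|\*r_x\|_2^2 \ge \tfrac14\sum_{x\in\mathcal{X}_u}\mathbbm{1}[y(x)\neq h(x;f,\*M^\star)] \ge \tfrac14 \mathcal{E}(f)$.

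The factor above is $\tfrac14$, whereas the statement claims $\tfrac12$; to close the gap I would sharpen the pointwise bound by noting that a misclassified point forces \emph{both} $\*r_{x,y(x)}$ and $\*r_{x,j}$ to be sizable simultaneously. Indeed, from $(U^*\*M^\star)_{x,j} \ge (U^*\*M^\star)_{x,y(x)}$ we have $\*r_{x,y(x)} + 1 = (U^*\*M^\star)_{x,y(x)} \le (U^*\*M^\star)_{x,j} = \*r_{x,j}$, hence $\*r_{x,j} - \*r_{x,y(x)} \ge 1$, and therefore $\*r_{x,y(x)}^2 + \*r_{x,j}^2 \ge \tfrac12(\*r_{x,j}-\*r_{x,y(x)})^2 \ge \tfrac12$, so $\|\*r_x\|_2^2 \ge \tfrac12$. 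Summing over the misclassified points yields $\mathcal{R}(U^*) \ge \tfrac12\mathcal{E}(f)$, as required.

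The main obstacle I expect is bookkeeping rather than a genuine difficulty: being careful about the scaling relation between the row of $F^*$ (which carries a $\sqrt{\sigma_i}$ factor per coordinate) and the row of $U^*$, and confirming that this rescaling is harmless because it can be folded into the free linear map $\*M$ in both $\mathcal{E}(f)$ and $\mathcal{R}(U^*)$; and handling ties in the $\operatorname{argmax}$ consistently with the indicator in the definition of $\mathcal{E}(f)$.
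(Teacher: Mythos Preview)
Your proposal is correct and essentially matches the paper's own proof. The paper also establishes the pointwise bound $\|\mathbf{y}(x) - \tilde f(x)^\top \*M\|^2 \ge \tfrac12$ for a misclassified $x$ via exactly your inequality $(1-a)^2 + b^2 \ge \tfrac12(1-a+b)^2 \ge \tfrac12$ (which is your $\*r_{x,y(x)}^2 + \*r_{x,j}^2 \ge \tfrac12(\*r_{x,j}-\*r_{x,y(x)})^2$), then sums over $x$ and handles the two scalings just as you flagged: $\sqrt{\Sigma_k}$ is absorbed into $\*M$, and the per-sample factor $\sqrt{w_x}$ is irrelevant for the $\operatorname{argmax}$, so $h(x;f,\*M)=h(x;\tilde f,\*M)$.
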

Note that we can rewrite $\mathcal{R}(U^*)$ as the summation of individual residual terms $\mathcal{R}(U^*, \Vec{y}_i)$: 
$
    \mathcal{R}(U^*) = \sum_{i \in \mathcal{Y}_u} \mathcal{R}(U^*, \Vec{y}_i),
$
where
$$ \mathcal{R}(U^*, \Vec{y}_i) \triangleq  \underset{{\Vec{\mu}_i}\in \mathbb{R}^{k}}{\operatorname{min}} \|\Vec{y}_i - U^* \Vec{\mu}_i \|^2_2, $$
and $\Vec{y}_i \in \mathbb{R}^{N_u}$ is the $i$-th column of $\mathbf{Y}$ and $\Vec{\mu}_i \in \mathbb{R}^{k}$ is the $i$-th column of $\*M$. Without losing the generality, our analysis will revolve around the residual term $\mathcal{R}(U^*, \Vec{y}_i)$ for specific class $i$. It is clear that if learned representation $U^*$ encodes more information of the label vector $\Vec{y}_i$, the residual $\mathcal{R}(U^*, \Vec{y}_i)$ becomes smaller\footnote{In an extreme case, if the first column of $U^*$ is exactly the same as $\Vec{y}_i$, one can set $\Vec{\mu}_i = [1, 0 , 0, ...]^{\top}$ to make residual zero.}. Such insight can be used to investigate which type of known class is more helpful for learning the representation of novel classes.

\subsection{An Illustrative Example}
\label{sec:nscl_toy}

We consider a toy example that helps illustrate the core idea of our theoretical findings. Specifically, the example aims to cluster 3D objects of different colors and shapes, as shown in Figure~\ref{fig:nscl_toy_setting} (a). These images are generated by a 3D rendering software~\citep{johnson2017clevr} with user-defined properties including colors, shape, size, position, etc. 

In what follows, we define two data configurations and corresponding graphs, where the labeled data is correlated  with the attribute of unlabeled data (\textbf{case 1}) vs. not (\textbf{case 2}). We are interested in contrasting the representations (in form of singular vectors) and residuals  derived from both scenarios. The proof of all theorems in this section is provided in Appendix~\ref{sec:nscl_proof-eigen}.

\vspace{0.1cm} \noindent \textbf{Motivation and data design.}  For simplicity, we focus on two main properties: color and shape. Formally, the images with shape $s$ and color $c$ are sampled from a generation procedure $\mathcal{G}$:  $$X_{s, c} \sim \mathcal{G}(s, c),$$ 
where $s \in \{\cube{1} (\text{cube}), \sphere{0.7}{gray} (\text{sphere}),\cylinder{0.6} (\text{cylinder})\}$, $c \in \{\textcolor{red}{c_1} (\text{red}), \textcolor{blue}{c_2} (\text{blue}), \textcolor{gray}{c_3} (\text{gray})\}$. We then construct our unlabeled dataset containing red/blue cubes/spheres as: 

$$\mathcal{X}_u \triangleq \{X_{\textcolor{red}{\cube{0.6}}, \textcolor{red}{c_1}}, X_{\textcolor{red}{\sphere{0.5}{red}}, \textcolor{red}{c_1}}, X_{\textcolor{blue}{\cube{0.6}}, \textcolor{blue}{c_2}}, X_{\textcolor{blue}{\sphere{0.5}{blue}}, \textcolor{blue}{c_2}}\}.$$

For simplicity, we assume each element in $\mathcal{X}_u$ is a single example. W.o.l.g, we also assume the red cube and red sphere form the target novel class. Then the corresponding labeling vector on $\mathcal{X}_u$ is defined by: $$\Vec{y} \triangleq \{1,1,0,0\}.$$ 
To answer \textit{``when and how does the known class help discover novel class?''}, we  construct two separate scenarios: one helps and the other one does not. Specifically, in the first case, we let the labeled data $\mathcal{X}_{l}^{\text{case 1}}$ be strongly correlated with the target class (red color) in unlabeled data:
$$\mathcal{X}_{l}^{\text{case 1}} \triangleq \{X_{\textcolor{red}{\cylinder{0.4}}, \textcolor{red}{c_1}}\} (\text{red cylinder}).$$  
In the second case, we construct the labeled data that has no correlation with any novel classes. We use gray cylinders which have no overlap in either shape and color: 
$$\mathcal{X}_{l}^{\text{case 2}} \triangleq \{X_{\textcolor{gray}{\cylinder{0.4}}, \textcolor{gray}{c_3}}\}  (\text{gray cylinder}).$$ 
Putting it together, our entire training dataset is 
$\mathcal{X}^{\text{case 1}} = \mathcal{X}_{l}^{\text{case 1}} \cup \mathcal{X}_{u}$ or  $\mathcal{X}^{\text{case 2}} = \mathcal{X}_{l}^\text{case 2} \cup \mathcal{X}_{u}$. We aim to verify the hypothesis that: 
\textit{the representation learned by $\mathcal{X}^{\text{case 1}}$ provides a much smaller regression residual to $\Vec{y}$ than $\mathcal{X}^{\text{case 2}}$ for color class. }

\vspace{0.1cm} \noindent \textbf{Augmentation graph.} 
Based on the data, we now define the probability of augmenting an image $X_{s, c}$ to another $X'_{s', c'}$:
\begin{align}
    \mathcal{T}\left(X'_{s', c'} \mid X_{s, c} \right)=\left\{\begin{array}{lll}
    \tau_1 & \text { if } & s=s', c=c', \\
    \tau_{s} & \text { if } & s=s', c \neq c', \\
    \tau_{c} & \text { if } & s\neq s', c=c', \\
    \tau_0 & \text { if } & s\neq s', c\neq c', \\
    \end{array}\right. 
    \label{eq:nscl_def_edge}
\end{align}
It is natural to assume the magnitude order that follows $\tau_1 \gg \max(\tau_{s},\tau_{c})$ and $\min(\tau_{s},\tau_{c}) \gg \tau_0$.  In two data settings $\mathcal{X}^\text{case 1}$ and $\mathcal{X}^\text{case 2}$, 
the corresponding augmentation matrices ${T}_1, {T}_2$ formed by $\mathcal{T}\left(\cdot|\cdot\right)$ are presented in Fig.~\ref{fig:nscl_toy_setting} (b). 
According to Eq.~\eqref{eq:nscl_def_wxx}, it can be verified that the adjacency matrices are $A_1 = T_1^2$ and $A_2 = T_2^2$ respectively. 

\vspace{0.1cm} \noindent \textbf{Main analysis.}  
We are primarily interested in analyzing the difference of the representation space derived from $A_1$ vs. $A_2$. Since $\tau_1 \gg \max(\tau_{s},\tau_{c})$, one can show that $A_1$ and $A_2$ are positive-definite. The singular vector is thus equivalent to the eigenvector. Also note that $A_1$ and their square root $T_1$ have the same eigenvectors and order. It is thus equivalent to analyzing the eigenvectors of $T_1$. Same with $A_2$ and $T_2$. In this toy example, we consider the eigenvalue problem of the unnormalized adjacency matrix\footnote{The normalized/unnormalized adjacency matrix corresponds to the NCut/RatioCut problem respectively~\citep{von2007tutorial}.} for simplicity. 

We put analysis on the top-$2$ eigenvectors $V^*_{1},V^*_{2}  \in \mathbb{R}^{5\times 2}$ for  $A_1$/$A_2$ ---- as we will see later, the top-$1$ eigenvector of $T_1/T_2$ usually functions at distinguishing known vs novel data, while the 2nd eigenvector functions at distinguishing color or shape. 

We let $U^*_1 \in \mathbb{R}^{4\times 2}$ contains the last 4 rows of $V^*_{1}$, and corresponds to the ``representation''  for the unlabeled data only. $U^*_2$ is defined in the same way \emph{w.r.t.} $A_2$. We have the following theorem:

\begin{theorem}
   Assume $\tau_1 = 1$, $\tau_0 = 0$, $\tau_s < 1.5\tau_c$. We have
$$
U^*_1=\left[\begin{array}{ccccc}
 a_1 & a_1 & b_1 & b_1 \\
 a_2 & a_2 & b_2 & b_2 \\
\end{array}\right]^{\top},
$$
where $a_1,b_1$ are some positive real numbers, and $a_2,b_2$ has different signs.  
$$U^*_2=\left\{\begin{array}{ll}   
\frac{1}{2}\left[\begin{array}{cccc}
1 & 1 & 1 & 1\\ 1 & 1 & -1 & -1\\
\end{array}\right]^{\top}, & \text{if } \tau_{s} < \tau_{c}, \\
\frac{1}{2}\left[\begin{array}{cccc}
1 & 1 & 1 & 1\\ -1 & 1 & -1 & 1\\
\end{array}\right]^{\top}, & \text{if } \tau_{s} > \tau_{c}, 
\end{array}\right. $$
With label vector $\Vec{y} = \{1,1,0,0\}$, we have 
\begin{equation}
    \mathcal{R}(U^*_1, \Vec{y}) = 0, \mathcal{R}(U^*_2, \Vec{y}) = \left\{\begin{array}{ll}    
    0, & \text{if } \tau_{s} < \tau_{c}\\
     1, &  \text{if } \tau_{s} > \tau_{c}.
    \end{array}\right. 
\end{equation}
    \label{th:nscl_toy_extreme}
\end{theorem}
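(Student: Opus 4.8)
The plan is to compute the relevant eigenvectors of the (unnormalized) adjacency matrices $A_1 = T_1^2$ and $A_2 = T_2^2$ explicitly, exploiting the block structure induced by the color/shape symmetries, and then read off the residuals directly. First I would note that since $A_i$ and $T_i$ share eigenvectors with the same ordering (as $T_i$ is positive-definite under $\tau_1 \gg \max(\tau_s,\tau_c)$), it suffices to diagonalize $T_1$ and $T_2$. For $T_2$ (the ``gray cylinder'' case), the labeled vertex is disconnected from $\mathcal{X}_u$ in color and shape except through $\tau_0 = 0$, so $T_2$ is block-diagonal: the labeled row/column decouples entirely, and the $4\times 4$ block on $\mathcal{X}_u$ has the structure of a circulant-like matrix governed by $\tau_1$ on the diagonal, $\tau_s$ for same-shape/different-color pairs, $\tau_c$ for same-color/different-shape pairs, and $\tau_0 = 0$ for the ``opposite'' pair. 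Its eigenvectors are the standard $\pm 1$ sign patterns: the all-ones vector (top eigenvalue $\tau_1 + \tau_s + \tau_c$), the color-indicator $(1,1,-1,-1)$ with eigenvalue $\tau_1 + \tau_s - \tau_c$, the shape-indicator $(1,-1,1,-1)$ with eigenvalue $\tau_1 - \tau_s + \tau_c$, and the ``opposite'' pattern $(-1,1,1,-1)$. Ordering these by eigenvalue, the second eigenvector is the color pattern when $\tau_s < \tau_c$ and the shape pattern when $\tau_s > \tau_c$, which gives exactly the stated $U_2^*$ (after restricting to the $\mathcal{X}_u$ rows and normalizing).

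Next, for the residual computation I would observe that $\Vec{y} = (1,1,0,0)^\top = \tfrac12(\text{all-ones}) + \tfrac12(\text{color-indicator})$. When $\tau_s < \tau_c$, the span of the two columns of $U_2^*$ (all-ones and color-indicator) contains $\Vec{y}$, hence $\mathcal{R}(U_2^*,\Vec{y}) = 0$. When $\tau_s > \tau_c$, the span is (all-ones, shape-indicator); projecting $\Vec{y}$ onto this span and computing the squared residual gives $1$ — this is the routine linear-algebra step (project $\Vec{y}$ onto $\operatorname{span}\{(1,1,1,1),(1,-1,1,-1)\}$, the best fit is the constant $\tfrac12(1,1,1,1)$, with residual $\|\Vec{y} - \tfrac12\mathbf{1}\|^2 = 4\cdot\tfrac14 = 1$).

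For $T_1$ (the ``red cylinder'' case), the labeled vertex $X_{\text{red cyl}}$ is now connected to the two red objects in $\mathcal{X}_u$ via $\tau_c$ (same color, different shape) and to the two blue objects via $\tau_0 = 0$. So $T_1$ is a $5\times 5$ matrix that no longer decouples the labeled row. I would exploit the remaining symmetry — swapping the two red objects, and swapping the two blue objects, are symmetries of $T_1$ — to block-diagonalize: antisymmetric combinations $(0; 1, -1, 0, 0)$ and $(0; 0, 0, 1, -1)$ are eigenvectors (with the labeled coordinate $0$), and the remaining $3$-dimensional symmetric subspace (spanned by the labeled coordinate, the red-sum, and the blue-sum) carries a $3\times 3$ matrix to be diagonalized. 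Under $\tau_1 = 1$, $\tau_0 = 0$, $\tau_s < 1.5\tau_c$, I would argue that the top eigenvector of this $3\times 3$ block distinguishes labeled-vs-unlabeled (roughly constant on $\mathcal{X}_u$, giving the $a_1 = b_1 > 0$ pattern after restriction) and the second eigenvector lies in the symmetric subspace with opposite signs on the red-sum vs. blue-sum coordinates (giving $a_2, b_2$ of opposite sign). The key point is that both columns of $U_1^*$, restricted to $\mathcal{X}_u$, have the form $(t, t, s, s)$ — constant on red and constant on blue — so $\operatorname{span}$ of the two columns is the $2$-dimensional space $\{(t,t,s,s)\}$, which contains $\Vec{y} = (1,1,0,0)$; hence $\mathcal{R}(U_1^*,\Vec{y}) = 0$ provided the two columns are linearly independent, which follows from $a_1 b_2 \neq a_2 b_1$ (guaranteed by $b_1 > 0$, $a_1 > 0$ and $a_2, b_2$ having opposite signs, so one of $a_1 b_2, a_2 b_1$ is negative and the other positive, or at worst distinct).

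The main obstacle I anticipate is the explicit diagonalization of the $3\times 3$ symmetric block of $T_1$ and verifying the eigenvalue ordering under the hypothesis $\tau_s < 1.5\tau_c$ — i.e., confirming that it is precisely the top two eigenvectors (not the third) that have the constant-on-red, constant-on-blue restriction, and that the condition $\tau_s < 1.5\tau_c$ is exactly what forces the second eigenvector into the symmetric subspace rather than being one of the antisymmetric $(0;1,-1,0,0)$-type vectors. This requires tracking the characteristic polynomial coefficients carefully; the cleanest route is probably to compute the eigenvalue of the antisymmetric eigenvectors (which is simply $\tau_1 - \tau_s$, since the red pair has diagonal $\tau_1$ and off-diagonal $\tau_s$ between themselves and the antisymmetric combination kills the coupling to the labeled and blue coordinates), and then show the $3\times 3$ block has two eigenvalues exceeding $\tau_1 - \tau_s$ iff $\tau_s < 1.5\tau_c$. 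Everything else — the $T_2$ case, the residual projections — is routine.
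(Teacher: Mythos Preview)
Your overall strategy matches the paper's, but the $T_1$ analysis contains a structural error. The swap ``red cube $\leftrightarrow$ red sphere'' by itself is \emph{not} a symmetry of $T_1$: red cube connects to blue cube with weight $\tau_s$ (same shape) while red sphere connects to blue cube with weight $\tau_0=0$, so exchanging rows/columns $2\leftrightarrow 3$ alone does not fix the matrix. Only the \emph{simultaneous} cube $\leftrightarrow$ sphere swap on both colors is a symmetry. Hence $(0;1,-1,0,0)$ and $(0;0,0,1,-1)$ are not eigenvectors; the antisymmetric subspace is an invariant $2$-plane on which $T_1$ acts as a $2\times 2$ matrix with diagonal $\tau_1-\tau_c$ and off-diagonal $\tau_s$, giving eigenvectors $(0;-1,1,-1,1)$ and $(0;1,-1,-1,1)$ with eigenvalues $1\pm\tau_s-\tau_c$. (A related slip: the red pair has off-diagonal $\tau_c$, not $\tau_s$ --- same \emph{color}, different shape --- and your stated eigenvalues for the $T_2$ block are correspondingly swapped, though your conclusion there still comes out right.) The threshold the $3\times 3$ symmetric block must beat is therefore $1+\tau_s-\tau_c$, not $\tau_1-\tau_s$.

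Once this is corrected, your plan coincides with the paper's. The paper parametrizes $T(t)$ for general $t$, verifies the two antisymmetric eigenvectors directly, and writes the three symmetric eigenvalues as $1+z$ where $z$ solves the cubic $g(z)=z^3-2\tau_c z^2+(\tau_c^2-\tau_s^2-2t^2)z+2\tau_c t^2=0$; sign-checking $g$ at $z=0,\,\tau_c,\,\tau_s-\tau_c,\,\pm(\tau_s+\tau_c)$ locates the roots and shows that at $t=\tau_c$, under $\tau_s<1.5\tau_c$, two of the three symmetric eigenvalues exceed $1+\tau_s-\tau_c$. The sign claims on $a_1,b_1,a_2,b_2$ and the residual projections are then exactly as you outlined.
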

\vspace{0.1cm} \noindent \textbf{Interpretation of Theorem~\ref{th:nscl_toy_extreme}:} 
The discussion can be divided into two cases: (1) 
$\tau_{s} < \tau_{c}$. (2) $\tau_{s} > \tau_{c}$. In the \textbf{first case} $\tau_{s} < \tau_{c}$, the connection between the same-color data pair is already stronger than the same-shape data pair. Thus the eigenvector corresponding to color information ($\frac{1}{2}[1,1,-1,-1]^\top$)  will be more prominent (and ranked higher in $U^*_2$) than ``shape eigenvector'' ($\frac{1}{2}[-1,1,-1,1]^\top$). 
Since the feature $U^*_2$ already encodes sufficient information (color) of the labeling vector $\Vec{y}$, fitting $\Vec{y}$ becomes easy and the residual $\mathcal{R}(U^*_2,\Vec{y})$ becomes 0.

In NCD, \textbf{we are more interested in the second case} ($\tau_{s} > \tau_{c}$), where unlabeled data indeed need some help from labeled data for better clustering. Such help comes from the semantic connection between labeled data and unlabeled data. In our toy example, the semantic connection comes from the first row/column of ${T}_1$ and ${T}_2$. However, the first row/column of ${T}_2$ is $[1,0,0,0,0]$, which means there is no extra information offered from $\mathcal{X}_{l}^\text{case 2}$. It is because $\mathcal{X}_{l}^\text{case 2}$ contains gray cylinders which have neither colors nor shapes connection to unlabeled data $\mathcal{X}_u$. Contrarily, $\mathcal{X}_{l}^\text{case 1}$ with red cylinder provides strong color prior. This allows the  ``color eigenvector'' ($[a_2, a_2, -b_2, -b_2]$) to become a main component in $U^*_1$, making the residual $\mathcal{R}(U^*_1,\Vec{y})=0$ even when $\tau_{s} > \tau_{c}$. 

\vspace{0.1cm} \noindent \textbf{Main takeaway.} In Theorem~\ref{th:nscl_toy_extreme}, we have verified the hypothesis that incorporating labeled data $\mathcal{X}_{l}^\text{case 1}$ (red  cylinder) can reduce the residual $\mathcal{R}(U^*_1,\Vec{y})$ more than using  $\mathcal{X}_{l}^\text{case 2}$, especially when color is a weaker signal than shape in unlabeled data. 

\vspace{0.1cm} \noindent \textbf{Extension: A more general result.} 
Note that $T_1$ and $T_2$ are special cases of the following $T(t)$ with $t \in [\tau_0, \tau_c]$:

\begin{equation*}
T(t)=\left[\begin{array}{ccccc}
\tau_1 & t & t & \tau_0 & \tau_0 \\
t & \tau_1 & \tau_{c} & \tau_{s} & \tau_0  \\
t & \tau_{c} & \tau_1 & \tau_0 & \tau_{s}  \\
\tau_0 & \tau_{s} & \tau_0 & \tau_1 & \tau_{c}  \\
\tau_0 & \tau_0 & \tau_{s} & \tau_{c} & \tau_1  \\
\end{array}\right],
\end{equation*}
where $t$ indicates the strength of the connection between labeled data and a novel class in unlabeled data. Let $U^*_t$ be the representation for unlabeled data derived from $T(t)$.  The following theorem indicates that the residual decreases when $t$ increases and the residual becomes 0 when $t$ is larger than a threshold $\Bar{t}$ depending on the gap between $\tau_s$ and $\tau_c$. 
\begin{theorem}
     Assume  $\tau_1 = 1$, $\tau_0 = 0$, $1.5\tau_c > \tau_s > \tau_c$. Let $\Bar{t} = \sqrt{\frac{2(\tau_s-\tau_c)^2\tau_c}{2\tau_c - \tau_s}}$, $r: \mathbb{R} \mapsto (0,1) $ be a real value function, we have 
     \begin{equation}
         \mathcal{R}(U^*_t, \Vec{y}) = \left\{\begin{array}{ll}    
     0, &  \text{if } t \in (\Bar{t}, \tau_s), \\
    r(t), & \text{if } t \in (0, \Bar{t}), \\ 
     1, &  \text{if } t = 0. \end{array}\right. 
     \end{equation}
    \label{th:nscl_toy_general}
\end{theorem}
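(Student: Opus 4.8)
\textbf{Proof proposal for Theorem~\ref{th:nscl_toy_general}.}

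The plan is to reduce the claim to an explicit eigenvector computation for the $5\times5$ symmetric matrix $T(t)$ (recalling that $A(t) = T(t)^2$ has the same eigenvectors in the same order, since $\tau_1 \gg \max(\tau_s,\tau_c)$ makes $T(t)$ positive-definite), then analyze how the residual $\mathcal{R}(U^*_t, \Vec{y})$ depends on $t$. First I would exploit the symmetry of the problem: $T(t)$ is invariant under the permutation that simultaneously swaps rows/columns $(2,3)$ and $(4,5)$ — this is the ``swap red$\leftrightarrow$blue'' symmetry. Consequently every eigenvector is either symmetric or antisymmetric under this involution. The label vector $\Vec{y} = (1,1,0,0)^\top$ on $\mathcal{X}_u$ (ignoring the labeled coordinate) is \emph{symmetric} under the swap, so only the eigenvectors lying in the symmetric subspace can contribute to reducing $\mathcal{R}(U^*_t,\Vec{y})$. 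This lets me block-diagonalize $T(t)$ into a $3\times3$ symmetric block (spanned by $e_1$, $e_2+e_3$, $e_4+e_5$) and a $2\times2$ antisymmetric block (spanned by $e_2-e_3$, $e_4-e_5$), drastically shrinking the computation.

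Next I would identify which two eigenvectors land in $U^*_t$ (the top-$k$ with $k=2$). As in Theorem~\ref{th:nscl_toy_extreme}, the top eigenvector of $T(t)$ is (close to) the all-positive Perron vector, which separates labeled vs.\ unlabeled data and carries symmetric weight on $\mathcal{X}_u$; restricted to $\mathcal{X}_u$ it is roughly constant and hence contributes a constant (the ``$\mathbf{1}$-direction''). The crucial second eigenvector is where the $t$-dependence enters: when $t$ is large, the symmetric $3\times3$ block's second eigenvector has a strong ``color'' component $\propto (X_{\text{red cube}}+X_{\text{red sphere}}) - (X_{\text{blue}\ldots})$, i.e.\ restricted to $\mathcal{X}_u$ it is $\propto(1,1,-1,-1)^\top$ — exactly aligned with $\Vec{y}$ modulo the constant direction, giving residual $0$. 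When $t=0$, the symmetric block decouples the labeled node entirely (its first row/column becomes $(\tau_1,0,0,0,0)$), so the second eigenvector on $\mathcal{X}_u$ is instead forced to be the ``shape'' direction $(-1,1,-1,1)^\top$ (since $\tau_s > \tau_c$ makes shape the dominant coupling among $\mathcal{X}_u$), which is orthogonal to the constant direction and to $\Vec{y}$'s residual-relevant part, giving $\mathcal{R} = 1$ — matching the $\tau_s>\tau_c$ branch of Theorem~\ref{th:nscl_toy_extreme}. For intermediate $t$, the second eigenvector of the $3\times3$ symmetric block is a $t$-dependent mixture of the color and shape directions; writing out its restriction to $\mathcal{X}_u$ as $(1,1,-1,-1)\cos\theta(t) + (\text{const})$ plus a shape-component weighted by $\sin\theta(t)$, the residual after projecting $\Vec{y}$ onto $\mathrm{span}(U^*_t)$ becomes an explicit decreasing function $r(t)$ of the mixing angle, which I would compute via the standard $2\times2$ (post-deflation of the Perron direction) eigenvalue formula and check that it hits $0$ precisely when the shape-component vanishes, i.e.\ at $t = \Bar t = \sqrt{\frac{2(\tau_s-\tau_c)^2\tau_c}{2\tau_c-\tau_s}}$.

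The main obstacle — and the step deserving the most care — is establishing the \emph{ordering} of eigenvalues as $t$ varies, i.e.\ verifying that the color/shape-bearing eigenvector really is the \emph{second} one (and not displaced below some other eigenvector of the antisymmetric block or a lower symmetric eigenvector) throughout the range $t\in(0,\tau_s)$ under the hypotheses $\tau_1=1$, $\tau_0=0$, $1.5\tau_c > \tau_s > \tau_c$. This requires an eigenvalue interlacing / perturbation argument: one must bound the eigenvalues of the $2\times2$ antisymmetric block (which are $\tfrac{\tau_s \pm ?}{}$-type expressions independent of $t$ since $t$ only appears in the symmetric block) against the top two eigenvalues of the symmetric $3\times3$ block, using the constraint $1.5\tau_c > \tau_s$ to guarantee no level-crossing reshuffles which two vectors constitute $U^*_t$. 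I expect the computation of $\Bar t$ itself to follow from setting the shape-coefficient of the deflated symmetric second eigenvector to zero — a routine quadratic — once the eigenvector is in hand; the threshold formula's particular form $\sqrt{2(\tau_s-\tau_c)^2\tau_c/(2\tau_c-\tau_s)}$ is consistent with solving such a quadratic, and the denominator $2\tau_c-\tau_s>0$ (guaranteed by $\tau_s<1.5\tau_c<2\tau_c$) is exactly why the stated hypotheses are needed. I would relegate the full matrix algebra and the monotonicity verification of $r(t)$ to the appendix (paralleling the cited Appendix~\ref{sec:nscl_proof-eigen}), presenting here only the symmetry reduction, the eigenvalue-ordering lemma, and the identification of $\Bar t$.
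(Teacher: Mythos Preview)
Your symmetry reduction is correct in spirit, though you mislabel the involution: swapping rows/columns $(2,3)$ and $(4,5)$ is the cube$\leftrightarrow$sphere swap, not red$\leftrightarrow$blue (index~1 is the red cylinder, so $T_{12}=T_{13}=t$ while $T_{14}=T_{15}=0$, and red$\leftrightarrow$blue is \emph{not} a symmetry for $t>0$). More importantly, the heart of your argument contains a contradiction with your own block decomposition. The shape direction $(-1,1,-1,1)$ flips sign under the swap $(2,3),(4,5)$, so it lies in the \emph{antisymmetric} $2\times 2$ block; the symmetric block, restricted to $\mathcal{X}_u$, is spanned only by the constant and color directions. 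Hence ``the second eigenvector of the $3\times3$ symmetric block'' can never acquire a shape component, and your proposed criterion for $\bar t$ --- setting that component to zero --- is vacuous.

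The actual mechanism is precisely the level-crossing you propose to rule out. The antisymmetric block has eigenvalues $1+\tau_s-\tau_c$ and $1-\tau_s-\tau_c$, independent of $t$; the symmetric block's three eigenvalues are $1+z$ where $z$ runs over the roots of the cubic $g(z)=z^3-2\tau_c z^2+(\tau_c^2-\tau_s^2-2t^2)z+2\tau_c t^2$. The threshold $\bar t$ is exactly where the symmetric block's second eigenvalue crosses $1+\tau_s-\tau_c$, i.e.\ where $g(\tau_s-\tau_c)=0$; solving gives $\bar t=\sqrt{2(\tau_s-\tau_c)^2\tau_c/(2\tau_c-\tau_s)}$. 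For $t>\bar t$ the top two eigenvectors of $T(t)$ are both symmetric, of the form $(a_i,a_i,b_i,b_i)$ on $\mathcal{X}_u$, and they span const$+$color, so $\mathcal{R}=0$. For $0<t<\bar t$ the second eigenvector is the pure shape vector $(-1,1,-1,1)$, orthogonal to $\vec y=(1,1,0,0)$ and hence useless; the residual $r(t)\in(0,1)$ comes entirely from projecting $\vec y$ onto the single Perron-type vector $(a_1,a_1,b_1,b_1)$, yielding $r(t)=2b_1^2/(a_1^2+b_1^2)=2\tau_s^2/\bigl((\lambda_1-1-\tau_c)^2+\tau_s^2\bigr)$. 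The three regimes are therefore separated by a discrete switch in which block supplies the second eigenvector, not by a continuous mixing angle.
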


\vspace{0.1cm} \noindent \textbf{Can adding labeled data be harmful?} We exemplify the scenario in Figure~\ref{fig:nscl_teaser}, where the umbrella images are
given as a known class, undesirably causing the “mushroom with
umbrella shape” to be grouped together. To formally analyze this case,  we construct \textbf{case 3}:
$$\mathcal{X}_{l}^{\text{case 3}} \triangleq \{X_{\textcolor{gray}{\cube{0.5}}, \textcolor{gray}{c_3}}\}  (\text{gray cube}).$$  
In this case, we have the following Lemma~\ref{th:nscl_toy_harmful}. 
\begin{lemma}
    If  $\frac{\tau_c}{\tau_s} \in (1, 1.5)$,  
$
    \mathcal{R}(U^*_3, \Vec{y})- \mathcal{R}(U^*_2, \Vec{y})=1. 
$
    \label{th:nscl_toy_harmful}
\end{lemma}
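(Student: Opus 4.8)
\textbf{Proof plan for Lemma~\ref{th:nscl_toy_harmful}.}

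The plan is to follow the same eigenvector-computation strategy used for Theorems~\ref{th:nscl_toy_extreme} and~\ref{th:nscl_toy_general}, but now applied to the augmentation matrix $T_3$ induced by $\mathcal{X}^{\text{case 3}} = \mathcal{X}_{l}^{\text{case 3}} \cup \mathcal{X}_u$, where the labeled sample is a gray cube. First I would write down $T_3$ explicitly using the augmentation rule in Eq.~\eqref{eq:nscl_def_edge}: the gray cube shares its \emph{shape} (cube) with the two cubes in $\mathcal{X}_u$ and shares no color with anything in $\mathcal{X}_u$, so the first row/column of $T_3$ has entries $\tau_s$ in the positions corresponding to the red cube and the blue cube, and $\tau_0$ (taken as $0$) in the positions corresponding to the two spheres. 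This is structurally the ``shape-correlated'' analogue of $T_1$, which was ``color-correlated.'' Since $\tau_1 = 1 \gg \max(\tau_s,\tau_c)$, the matrix $A_3 = T_3^2$ is positive-definite, so SVD reduces to eigen-decomposition, and $T_3$ and $A_3$ share eigenvectors in the same order; hence it suffices to analyze $T_3$.

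Next I would extract $U^*_3 \in \mathbb{R}^{4\times 2}$, the submatrix of the top-$2$ eigenvectors of $T_3$ restricted to the unlabeled rows. By the same symmetry argument as in Theorem~\ref{th:nscl_toy_extreme}: the top eigenvector separates labeled from unlabeled data (roughly constant on the unlabeled block), while the second eigenvector now aligns with the \emph{shape} direction rather than the color direction, because the labeled gray cube reinforces the shape connection. Concretely, in the regime $\tau_c/\tau_s \in (1, 1.5)$ we have $\tau_s < \tau_c$, so without any labeled help the color eigenvector $\tfrac12[1,1,-1,-1]^\top$ already dominates over the shape eigenvector; this is exactly the ``first case'' of Theorem~\ref{th:nscl_toy_extreme} where $\mathcal{R}(U^*_2,\vec y)=0$. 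The point of case 3 is that adding the gray cube boosts the shape signal enough to flip the ordering, making the second eigenvector of $U^*_3$ proportional to the shape vector $\tfrac12[-1,1,-1,1]^\top$ (up to sign/ordering), i.e. $U^*_3$ encodes \emph{shape} information in its second coordinate while the first coordinate is constant on the unlabeled block. I would then compute the regression residual $\mathcal{R}(U^*_3,\vec y) = \min_{\vec\mu}\|\vec y - U^*_3\vec\mu\|_2^2$ with $\vec y = [1,1,0,0]^\top$: since the column space of $U^*_3$ is $\mathrm{span}\{[1,1,1,1]^\top, [-1,1,-1,1]^\top\}$, and $\vec y$ projects onto this span with residual $1$ (the best fit is $\tfrac12[1,1,1,1]^\top$, giving residual $\|[\tfrac12,\tfrac12,-\tfrac12,-\tfrac12]\|_2^2 = 1$), we get $\mathcal{R}(U^*_3,\vec y) = 1$. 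Combined with $\mathcal{R}(U^*_2,\vec y)=0$ from Theorem~\ref{th:nscl_toy_extreme}, this yields $\mathcal{R}(U^*_3,\vec y) - \mathcal{R}(U^*_2,\vec y) = 1$.

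The main obstacle is making rigorous the claim that the gray cube ``flips the eigenvector ordering'' in the stated parameter regime $\tau_c/\tau_s \in (1,1.5)$ — i.e., showing that the shape-aligned eigenvector of $T_3$ genuinely becomes the second singular vector (ranked above the color-aligned one) precisely when $\tau_c/\tau_s < 1.5$, and pinning down the constant $1.5$. This requires a careful eigenvalue comparison: one must compute (or tightly bound) the eigenvalues of $T_3$ associated with the shape-symmetric and color-symmetric subspaces — exploiting the permutation symmetry of the unlabeled block under swapping the two spheres and the two cubes to block-diagonalize $T_3$ — and verify the crossover threshold. This is the analogue of the $\tau_s < 1.5\tau_c$ hypothesis appearing in Theorem~\ref{th:nscl_toy_extreme}, and I would handle it by the same symmetry-reduction plus explicit $2\times 2$ (or $3\times 3$) eigenvalue calculation, deferring the detailed arithmetic to the appendix (\ref{sec:nscl_proof-eigen}) as the paper does for its companion results.
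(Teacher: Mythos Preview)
Your proposal is correct and will arrive at the right answer, but it takes a longer route than the paper. You recognize that $T_3$ is the ``shape-correlated analogue of $T_1$,'' yet you then plan to re-derive its eigenstructure from scratch (block-diagonalize via permutation symmetry, compare eigenvalues of the small blocks, pin down the $1.5$ threshold). The paper instead exploits that duality fully: after permuting the unlabeled rows/columns to the order (red cube, blue cube, red sphere, blue sphere), $T_3$ becomes \emph{exactly} the matrix $T(t)$ from Lemma~\ref{lemma:nscl_sup_eigprob} with $t=\tau_s$ and the roles of $\tau_s,\tau_c$ interchanged. The hypothesis $\tau_c/\tau_s\in(1,1.5)$ is precisely the swapped version of $\tau_c<\tau_s<1.5\tau_c$ there, and one checks $t=\tau_s$ lands in Case~1 of that lemma. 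Hence the top-two eigenvectors on the unlabeled block have the form $[a,b,a,b]^\top$, so $U^*_3$ spans the same space as $\tfrac12[1,1,1,1;\,-1,1,-1,1]^\top$, and $\mathcal{R}(U^*_3,\vec y)=1$ follows immediately from the residual computation already done in Theorem~\ref{th:nscl_toy_extreme}. What your approach buys is self-containment; what the paper's buys is a three-line proof with no new eigenvalue work and a transparent explanation of where the constant $1.5$ comes from. One minor inaccuracy in your writeup: the individual top-two eigenvectors of $T_3$ are not literally $[1,1,1,1]^\top$ and $[-1,1,-1,1]^\top$ on the unlabeled block (they are $[a_i,b_i,a_i,b_i]^\top$ with $a_i\neq b_i$ in general); only their \emph{span} coincides with that of those two vectors, which is all you need for the residual.
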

The residual in case 3 is now larger than in case 2, since the shape is treated as a more important feature than the color feature (which relates to the target class). The main takeaway of this lemma is that the labeled data can be harmful when its connection with unlabeled data is undesirably stronger in the spurious feature dimension.

\begin{figure}[t]
    \centering
    \includegraphics[width=0.75\linewidth]{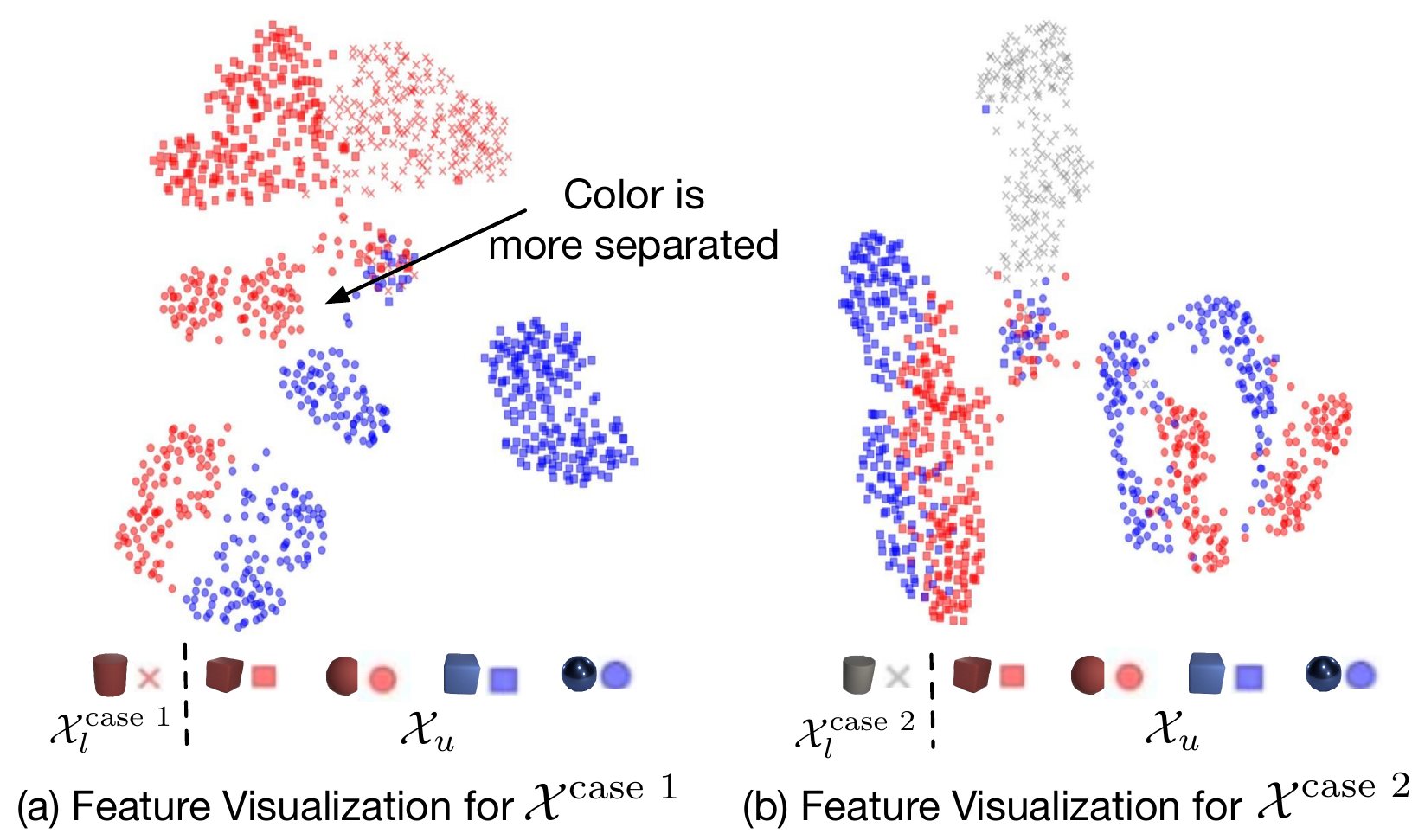}
    \caption[UMAP visualization of the feature embedding.]{UMAP~\citep{umap} visualization of the feature embedding learned from $\mathcal{X}^\text{case 1}$ and $\mathcal{X}^\text{case 2}$ respectively. The model is trained with NCD Spectral Contrastive Loss. }
    \label{fig:nscl_toy_vis}
\end{figure}
\vspace{0.1cm} \noindent \textbf{Qualitative results.} The theoretical results can be verified in our empirical results by visualization in Fig.~\ref{fig:nscl_toy_vis}. Due to the space limitation, we include experimental details in Appendix~\ref{sec:nscl_sup_exp_vis}. As seen in Fig.~\ref{fig:nscl_toy_vis} (a), the features of unlabeled data $\mathcal{X}_u$ jointly learned with red cylinder $\mathcal{X}_{l}^\text{case 1}$ are more distinguishable by color attribute,  as opposed to Fig.~\ref{fig:nscl_toy_vis} (b). 

\subsection{Main Theory}
\label{sec:nscl_theory_main}

The toy example offers an important insight that using the labeled data help reduce the residual when it provides the missing information of unlabeled data. In this section, we will formalize this insight by extending the toy example to a more general setting with $N$ samples. We start with the definition of notations. 

\vspace{0.1cm} \noindent \textbf{Notations.} Recall that $V^* \in \mathbb{R}^{N \times k}$ is defined as the top-$k$ singular vectors of $\Dot{A}$, which is further split into two parts $L^* = \left[l_1, l_2, \cdots, l_k\right] \in \mathbb{R}^{N_l \times k}$, $U^* = \left[u_1, u_2, \cdots, u_k\right] \in \mathbb{R}^{N_u \times k}$, for labeled and unlabeled samples respectively. Then we let $V^{\flat} \in \mathbb{R}^{N \times (N-k)}$ be the remaining singular vectors of $\Dot{A}$ except top-$k$. Similarly, we split $V^{\flat}$ into two parts ($L^{\flat} = \left[l_{k+1}, l_{k+2}, \cdots, l_N\right] \in \mathbb{R}^{N_l \times (N-k)}$, $U^{\flat} = \left[u_{k+1}, u_{k+2}, \cdots, u_N\right] \in \mathbb{R}^{N_u \times (N-k)}$). 

We now present our first main result in Theorem~\ref{th:nscl_no_approx}.
\begin{theorem}
    Denote the projection matrix $\mathsf{P}_{L^{\flat}} = L^{\flat\top}(L^{\flat}L^{\flat\top})^{\dag}L^{\flat}$, where $^{\dag}$ denotes the Moore-Penrose inverse. For any labeling vector $\Vec{y} \in \{0,1\}^{N_u}$, we have
    \begin{equation}
        \mathcal{R}(U^*, \Vec{y}) \leq \|(I-\mathsf{P}_{L^{\flat}}) U^{\flat\top} \Vec{y}\|^2_2.
        \label{eq:nscl_R_bound_no_approx}
    \end{equation} 
    \label{th:nscl_no_approx}
\end{theorem}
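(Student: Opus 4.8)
The plan is to reduce the linear-probing residual $\mathcal{R}(U^*,\Vec{y})$ to a regression problem that can be controlled using the orthogonality structure of the singular vectors of $\Dot A$. Recall from Lemma~\ref{lemma:nscl_cls_bound} that $\mathcal{R}(U^*,\Vec{y}) = \min_{\Vec{\mu}\in\mathbb{R}^k}\|\Vec{y} - U^*\Vec{\mu}\|_2^2$, i.e.\ it is the squared norm of the component of $\Vec{y}$ orthogonal to the column space of $U^*$. The key structural fact I would exploit is that $V^* = \begin{bmatrix} L^* \\ U^* \end{bmatrix}$ and $V^{\flat} = \begin{bmatrix} L^{\flat} \\ U^{\flat} \end{bmatrix}$ together form a full orthonormal basis of $\mathbb{R}^N$ (since $\Dot A$ is symmetric, its singular vectors are orthonormal and span $\mathbb{R}^N$). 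Consequently, for any vector supported on the unlabeled block, we can expand it in this basis and split off the part captured by $U^*$.

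First I would embed $\Vec{y}\in\mathbb{R}^{N_u}$ into $\mathbb{R}^N$ as $\tilde{\Vec{y}} = \begin{bmatrix}\mathbf{0}_{N_l}\\ \Vec{y}\end{bmatrix}$ and expand it in the orthonormal basis $\{V^*, V^{\flat}\}$: $\tilde{\Vec{y}} = V^* (V^{*\top}\tilde{\Vec{y}}) + V^{\flat}(V^{\flat\top}\tilde{\Vec{y}}) = V^* (U^{*\top}\Vec{y}) + V^{\flat}(U^{\flat\top}\Vec{y})$, using that the labeled block of $\tilde{\Vec{y}}$ is zero. Reading off the unlabeled rows gives $\Vec{y} = U^*(U^{*\top}\Vec{y}) + U^{\flat}(U^{\flat\top}\Vec{y})$. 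Taking $\Vec{\mu} = U^{*\top}\Vec{y}$ as a (suboptimal) candidate already yields $\mathcal{R}(U^*,\Vec{y}) \le \|U^{\flat}(U^{\flat\top}\Vec{y})\|_2^2 = \|U^{\flat\top}\Vec{y}\|_2^2$ — but that bound does not yet involve $\mathsf{P}_{L^{\flat}}$ and is too weak.

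The refinement is to additionally subtract, from $\Vec{y}$, a linear combination of columns of $L^{\flat}$ that does \emph{not} cost us anything in the unlabeled block — wait, that is not directly allowed since the predictor only sees $U^*$. The correct move is instead to use the labeled rows of the expansion. From the labeled block of $\tilde{\Vec{y}} = V^*(U^{*\top}\Vec{y}) + V^{\flat}(U^{\flat\top}\Vec{y})$ we get $\mathbf{0} = L^*(U^{*\top}\Vec{y}) + L^{\flat}(U^{\flat\top}\Vec{y})$, i.e.\ $L^{\flat}(U^{\flat\top}\Vec{y}) = -L^*(U^{*\top}\Vec{y})$, which shows the coefficient vector $c := U^{\flat\top}\Vec{y}$ satisfies a constraint. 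The plan is then: write the residual as $\min_{\Vec\mu}\|U^*\Vec\mu - \Vec y\|_2^2 = \min_{\Vec\mu}\|U^*\Vec\mu - U^* c - U^\flat c\|_2^2$; choosing $\Vec\mu = c + (\text{anything in a useful subspace})$. Concretely, pick $\Vec\mu = c + \Delta$ where we want $U^*\Delta$ to cancel as much of $U^\flat c$ as possible. Since $U^*\Delta$ ranges over the column space of $U^*$ and $U^\flat c$ lies in the column space of $U^\flat$, and using the orthonormality constraint together with $L^{\flat}c = -L^* (U^{*\top}\Vec y)$, one shows that the uncancellable part of $U^\flat c$ has norm exactly $\|(I-\mathsf{P}_{L^{\flat}})U^{\flat\top}\Vec y\|_2$: the projection $\mathsf{P}_{L^{\flat}}$ precisely captures the directions of $U^\flat c$ that become reachable by $U^*$ because of the global orthonormality coupling between labeled and unlabeled blocks. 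I would verify this by an explicit computation using $V^{*\top}V^\flat = 0$, i.e.\ $L^{*\top}L^{\flat} + U^{*\top}U^{\flat} = 0$, which converts $U^{*\top}U^\flat = -L^{*\top}L^\flat$ and lets the $L^{\flat}$-projection appear.

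The main obstacle I anticipate is getting the bookkeeping of the two orthogonality relations ($V^{*\top}V^* = I_k$, $V^{*\top}V^\flat = 0$, $V^{\flat\top}V^\flat = I_{N-k}$ restricted appropriately, all of which only hold for the \emph{full} $N$-dimensional vectors, not their unlabeled sub-blocks) to line up so that exactly the Moore–Penrose projector $\mathsf{P}_{L^{\flat}} = L^{\flat\top}(L^{\flat}L^{\flat\top})^{\dag}L^{\flat}$ emerges, rather than some other generalized inverse expression; handling the rank-deficient case (where $L^\flat L^{\flat\top}$ is singular, hence the pseudoinverse) cleanly is the delicate point. A safe route is to first prove the bound assuming $L^\flat$ has full row rank (so $\mathsf{P}_{L^\flat}$ is an honest orthogonal projection onto $\mathrm{row}(L^\flat)$), obtaining the inequality by choosing the optimal $\Vec\mu$ in closed form, and then argue the general case by a limiting/continuity argument or by directly invoking properties of the pseudoinverse ($\mathsf{P}_{L^\flat}$ is the orthogonal projector onto $\mathrm{row}(L^\flat)$ regardless of rank). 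The remaining steps — expanding squared norms, applying Pythagoras in the orthonormal basis — are routine.
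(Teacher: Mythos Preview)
Your route can be made to work, but it is substantially harder than the paper's and you have not yet identified the ingredient that makes the projector $\mathsf{P}_{L^\flat}$ appear. The paper sidesteps all the sub-block bookkeeping with one trick you are missing: instead of embedding $\Vec y$ with a \emph{fixed} zero in the labeled block, let that block be a \emph{free} variable $\Vec\zeta\in\mathbb{R}^{N_l}$ and set $\Vec y' = [\Vec\zeta^\top,\Vec y^\top]^\top$. Since $\|\Vec y' - V^*\Vec\mu\|_2^2 = \|\Vec\zeta - L^*\Vec\mu\|_2^2 + \|\Vec y - U^*\Vec\mu\|_2^2$, minimizing over $\Vec\zeta$ costs nothing (take $\Vec\zeta = L^*\Vec\mu$), so
\[
\mathcal{R}(U^*,\Vec y)=\min_{\Vec\mu,\Vec\zeta}\|\Vec y' - V^*\Vec\mu\|_2^2
=\min_{\Vec\zeta}\|V^{\flat\top}\Vec y'\|_2^2
=\min_{\Vec\zeta}\|L^{\flat\top}\Vec\zeta + U^{\flat\top}\Vec y\|_2^2,
\]
and the last minimum is \emph{exactly} $\|(I-\mathsf{P}_{L^\flat})U^{\flat\top}\Vec y\|_2^2$, with no rank assumptions and no limiting argument. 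This even gives equality, not just the stated inequality.

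Your plan, by contrast, reduces to computing $\min_{\Vec\mu'}\|U^*\Vec\mu' - U^\flat c\|_2^2$ with $c=U^{\flat\top}\Vec y$ (note your line ``$\Vec\mu = c + \Delta$'' has a dimension mismatch; $c\in\mathbb{R}^{N-k}$ but $\Vec\mu\in\mathbb{R}^k$). To finish from there you need more than $V^{*\top}V^\flat=0$: you must also use the \emph{row} identity $L^*L^{*\top}+L^\flat L^{\flat\top}=I_{N_l}$ (from $VV^\top=I_N$), together with $U^{*\top}U^*=I_k-L^{*\top}L^*$, and then a Woodbury-type step $(I_{N_l}-L^*L^{*\top})^{-1}=I_{N_l}+L^*(I_k-L^{*\top}L^*)^{-1}L^{*\top}$ to collapse the resulting expression into $(L^\flat L^{\flat\top})^{\dag}$. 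That works in the full-rank case but is exactly the ``delicate point'' you flagged, and it is entirely avoided by the free-$\Vec\zeta$ trick.
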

\vspace{0.1cm} \noindent \textbf{Interpretation of Theorem~\ref{th:nscl_no_approx}.} The bound of residual in Ineq.~\eqref{eq:nscl_R_bound_no_approx} is composed of two projections:  $U^{\flat\top}$ and $(I-\mathsf{P}_{L^{\flat}})$. 
We first consider the ignorance space formed by the first projection: 
$$\text{\textbf{ignorance space}} \triangleq U^{\flat\top} \Vec{y},$$ 
 which contains the information of the labeling vector $\Vec{y}$ that is not encoded in the learned representation $U^{*}$ of the unlabeled data. Intuitively, when $\mathcal{R}\left(U^*, \vec{y}\right)>0$, the labeling vector $\vec{y}$ does not lie in the span of the existing representation $U^*$. On the other hand, $\mathcal{R}\left([\begin{array}{cc}  U^* & U^{\flat}\end{array}], \vec{y}\right)=0$ since $U^*$ together with $U^{\flat}$ forms a full rank space. We also define a measure of the ignorance degree of the current feature space: 
 $\text{\textbf{ignorance degree}} \triangleq \mathfrak{T}(\Vec{y}) = \frac{\|U^{\flat\top} \Vec{y}\|_2}{\|\Vec{y}\|_2}.$ 

The second projection matrix $(I-\mathsf{P}_{L^{\flat}})$ is composed of $L^{\flat}$, which we deem as the  extra knowledge from known classes: 
\begin{equation*}
\text{\textbf{extra knowledge}} \triangleq L^{\flat}.
\end{equation*} Multiplying the second projection matrix $(I-\mathsf{P}_{L^{\flat}})$ further reduces the norm of the ignorance space by considering the extra knowledge from labeled data, since $\mathsf{P}_{L^{\flat}}$ is a projection matrix that projects a vector to the linear span of $L^{\flat}$. In the extreme case, when $U^{\flat\top} \Vec{y}$ fully lies in the linear span of $L^{\flat}$, the residual $\mathcal{R}(U^*, \Vec{y})$ goes 0.

Next, we present another main theorem that bounds the linear probing error  $\mathcal{E}(f)$ based on the relations between the known and novel classes. See Appendix~\ref{sec:nscl_sup_main_proof} for a detailed discussion and assumption.
\begin{theorem} 
Let $[\begin{array}{cc} A_{ul}\in \mathbb{R}^{N_u \times N_l}, A_{uu}\in \mathbb{R}^{N_u \times N_u} 
\end{array}]$ be the sub-matrix of the last $N_u$ rows of $\Dot{A}$, and $q_i$ be the $i$-th eigenvector of $A_{uu}$. 
The linear probing error can be bounded as follows:
    \begin{equation*}
        \mathcal{E}(f) \lesssim \frac{2N_u}{|\mathcal{Y}_u|}\left(\sum_i^{|\mathcal{Y}_u|} \tikzmarknode{igdegree}{\highlight{red}{$\mathfrak{T}(\Vec{y}_i)$}}(1-\tikzmarknode{kappa}{\highlight{blue}{$\kappa(\Vec{y}_i)^2$}}) + \frac{\|\Dot{A} - \bar{A}\|_2}{\sigma_{k} - \sigma_{k+1}} \right), 
    \end{equation*}
\begin{tikzpicture}[overlay,remember picture,>=stealth,nodes={align=left,inner ysep=1pt},<-]
    \path (igdegree.north) ++ (0,0.5em) node[anchor=south west,color=red!67] (ig_arrow){\textit{ ignorance degree }};
    \draw [color=red!87](igdegree.north) |- ([xshift=-0.3ex,color=red] ig_arrow.south east);
    \path (kappa.south) ++ (0,-0.7em) node[anchor=north west,color=blue!67] (kappa_arrow){\textit{ knowledge coverage}};
    \draw [color=blue!87](kappa.south) |- ([xshift=-0.3ex,color=blue]kappa_arrow.south east);
\end{tikzpicture}
    where     
    $$\kappa(\Vec{y}) = \cos(\Bar{U}^{\flat\top} \Vec{y},\Bar{\mathfrak{l}}^{\flat}) 
    \gtrsim \min_{i > k, j > k} \frac{2\sqrt{\frac{\Vec{y}^{\top}q_i}{\Vec{\eta}_u^{\top}q_i}\frac{\Vec{y}^{\top}q_j}{\Vec{\eta}_u^{\top}q_j}}}{\frac{\Vec{y}^{\top}q_i}{\Vec{\eta}_u^{\top}q_i}+\frac{\Vec{y}^{\top}q_j}{\Vec{\eta}_u^{\top}q_j}},$$
    \label{th:nscl_main}
\end{theorem}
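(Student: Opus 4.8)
\textbf{Plan of proof for Theorem~\ref{th:nscl_main}.}
The strategy is to combine three ingredients: (i) the deterministic residual bound of Theorem~\ref{th:nscl_no_approx}, (ii) a perturbation argument that relates the SVD of the full matrix $\Dot{A}$ to that of its ``block-diagonalized'' counterpart $\bar A$ (where the cross-block $A_{ul}$ is replaced by something whose singular structure decouples labeled and unlabeled parts), and (iii) the linear-probing-to-residual reduction of Lemma~\ref{lemma:nscl_cls_bound}. First I would start from Lemma~\ref{lemma:nscl_cls_bound}, which gives $\mathcal{E}(f)\le 2\mathcal{R}(U^*)=2\sum_{i\in\mathcal{Y}_u}\mathcal{R}(U^*,\Vec{y}_i)$, so it suffices to bound each $\mathcal{R}(U^*,\Vec{y}_i)$ by $\mathfrak{T}(\Vec{y}_i)(1-\kappa(\Vec{y}_i)^2)$ plus a perturbation term, and then sum over the $|\mathcal{Y}_u|$ classes while pulling out the $\frac{N_u}{|\mathcal{Y}_u|}$ normalization (which comes from $\|\Vec{y}_i\|_2^2$ being of order $N_u/|\mathcal{Y}_u|$ for a balanced class $i$).

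The core of the argument is the decomposition $\mathcal{R}(U^*,\Vec{y})\le \|(I-\mathsf{P}_{L^{\flat}})U^{\flat\top}\Vec{y}\|_2^2$ from Theorem~\ref{th:nscl_no_approx}. Writing $\Vec{v}=U^{\flat\top}\Vec{y}$ for the ignorance-space vector, its squared norm is $\|\Vec{v}\|_2^2=\mathfrak{T}(\Vec{y})^2\|\Vec{y}\|_2^2$, and $\|(I-\mathsf{P}_{L^{\flat}})\Vec{v}\|_2^2 = \|\Vec{v}\|_2^2 - \|\mathsf{P}_{L^{\flat}}\Vec{v}\|_2^2 = \|\Vec{v}\|_2^2\bigl(1-\cos^2(\Vec{v},\operatorname{span}L^{\flat})\bigr)$; the quantity $\cos^2$ of the angle between the ignorance vector and the column span of the ``extra knowledge'' $L^{\flat}$ is exactly what $\kappa(\Vec{y})$ is designed to lower-bound. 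So the real work is the lower bound $\kappa(\Vec{y})\gtrsim \min_{i,j>k}\frac{2\sqrt{a_ia_j}}{a_i+a_j}$ with $a_i=\frac{\Vec{y}^{\top}q_i}{\Vec{\eta}_u^{\top}q_i}$: this requires expressing both $\bar U^{\flat\top}\Vec{y}$ and the projected vector in the eigenbasis $\{q_i\}$ of $A_{uu}$, using the structure of $\bar A$ to show that within the non-top-$k$ subspace the labeled and unlabeled singular vectors are ``aligned'' up to the scalars $\Vec{\eta}_u^{\top}q_i$ (a degree-type normalization), and then recognizing the resulting cosine as a ratio of a weighted arithmetic-to-geometric mean, which is minimized pairwise — hence the $\min_{i,j}$ over coordinates. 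This is where I would spend most of the effort.

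Finally, the passage from $\bar A$ back to the true $\Dot{A}$ is a standard matrix-perturbation step: since $U^*,U^{\flat}$ are defined via the SVD of $\Dot{A}$ but the clean computation of $\kappa$ is done on $\bar A$, I would invoke a Davis–Kahan $\sin\Theta$-type bound to control $\|U^{*}_{\Dot{A}} - U^{*}_{\bar A}\|$ by $\frac{\|\Dot{A}-\bar A\|_2}{\sigma_k-\sigma_{k+1}}$, the spectral gap appearing exactly as in the statement; propagating this through the quadratic residual (using $\|P\Vec{y}\|_2\le\|\Vec{y}\|_2$ Lipschitz-type estimates for projections) yields the additive $\frac{\|\Dot{A}-\bar A\|_2}{\sigma_k-\sigma_{k+1}}$ term, again scaled by $\frac{2N_u}{|\mathcal{Y}_u|}$ after summing. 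The main obstacle I anticipate is the second step: making precise the claim that on the orthogonal complement of the top-$k$ space the labeled block $L^{\flat}$ and unlabeled block $U^{\flat}$ share eigen-directions (inherited from the block structure of $\bar A$ and the relation $A_1=T_1^2$-type squaring), and then controlling the angle $\kappa$ tightly enough to get the AM–GM ratio rather than a cruder bound — the bookkeeping with the normalization vectors $\Vec{\eta}_u$ and the Moore–Penrose inverse $\mathsf{P}_{L^{\flat}}$ when $L^{\flat}$ is rank-deficient will be delicate. The remaining steps (Lemma~\ref{lemma:nscl_cls_bound}, the norm identity for projections, Davis–Kahan) are routine once this alignment lemma is in place.
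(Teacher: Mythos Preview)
Your proposal is correct and follows essentially the same route as the paper: reduce $\mathcal{E}(f)$ to $\sum_i\mathcal{R}(U^*,\vec y_i)$ via Lemma~\ref{lemma:nscl_cls_bound}, pass to the approximated residual $\mathcal{R}(\bar U^*,\vec y)$ by a Davis--Kahan argument (this is Lemma~\ref{lemma:nscl_sup_error_bound_approx}), compute $\mathcal{R}(\bar U^*,\vec y)=\tfrac{N_u}{|\mathcal{Y}_u|}(1-\kappa(\vec y)^2)\mathfrak{T}(\vec y)^2$ exactly (Theorem~\ref{th:nscl_sup_with_approx}), and finally lower-bound $\kappa$ by the AM--GM ratio. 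The one structural ingredient you should not underestimate is that the averaging in $\bar A$ forces $\bar L^\flat$ to have (on the relevant indices) all rows proportional to a single vector $\bar{\mathfrak{l}}^\flat$ (Lemma~\ref{lemma:nscl_sup_l_form}), which is what collapses the projection onto $\operatorname{span}\bar L^\flat$ to a cosine with one vector; and the passage from the resolvent-defined scalars $\omega_i=\vec y^\top(\bar\sigma_i I-A_{uu})^\dagger\vec\eta_u$ to the ratios $\vec y^\top q_i/\vec\eta_u^\top q_i$ is not a direct eigenbasis expansion but a perturbative eigenvalue computation (Lemma~\ref{lemma:nscl_sup_w_bound}) under the assumption that $\|\vec\eta_u\|_\infty$ is small.
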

and $\Bar{A}$ is the approximation of $\Dot{A}$ by taking the expectation in the rows/columns of labeled samples (Appendix~\ref{sec:nscl_sup_with_approx}) with a similar motivation as the SBM model~\citep{holland1983stochastic}. In such condition,  $\Bar{U}^{\flat\top}$, $\Bar{\mathfrak{l}}^{\flat}$ and $\eta_u$ is the approximation to $U^{\flat\top}$,  $L^{\flat}$ and $A_{ul}$ accordingly.

\vspace{0.1cm} \noindent \textbf{Interpretation of $\kappa(\Vec{y})$.} We provide the detailed derivation of $\kappa(\Vec{y})$ in Lemma~\ref{lemma:nscl_sup_bound_kappa}. Intuitively, $\kappa(\Vec{y})$ measures the usefulness and relevance of knowledge from known classes for NCD. We formally call it coverage, which measures the cosine distance between the ignorance space and the extra knowledge: 
\begin{equation*}
    \text{\textbf{coverage}} \triangleq \kappa(\Vec{y}) = \cos(\tikzmarknode{ignorance}{\highlight{red}{$ \Bar{U}^{\flat\top} \Vec{y}$}}, \tikzmarknode{knowledge}{\highlight{blue}{$\Bar{\mathfrak{l}}^{\flat}$}}). 
\end{equation*}
\begin{tikzpicture}[overlay,remember picture,>=stealth,nodes={align=left,inner ysep=1pt},<-]
    \path (ignorance.south) ++ (0,0.1em) node[anchor=north east,color=red!67] (igtt){\textit{ ignorance space }};
    \draw [color=red!87](ignorance.south) |- ([xshift=-0.3ex,color=red] igtt.south west);
    \path (knowledge.north) ++ (0,0.5em) node[anchor=south east,color=blue!67] (kntt){\textit{ extra knowledge }};
    \draw [color=blue!87](knowledge.north) |- ([xshift=-0.3ex,color=blue]kntt.south west);
\end{tikzpicture}

Our Theorem~\ref{th:nscl_main} thus meaningfully shows that the linear probing error can be bounded more tightly as $\kappa(\Vec{y})$ increases (i.e., when labeled data provides more useful information for the unlabeled data). 

\vspace{0.1cm} \noindent \textbf{Implication of Theorem~\ref{th:nscl_main}.} Our theorem allows us to formalize answers to the ``When and How'' question. Firstly, the Theorem answers ``\textit{how the labeled data helps}''---because the knowledge from the known classes changes the representation of unlabeled data and reduces the ignorance space for novel class discovery. Secondly, the Theorem answers ``\textit{when the labeled data helps}''. Specifically, labeled data helps when the coverage between ignorance space and extra knowledge is nonzero. In the extreme case, if the extra knowledge fully covers the ignorance space, we get the perfect performance (0 linear probing error).

\section{Experiments on Common Benchmarks~}
\label{sec:nscl_exp}
Beyond theoretical insights, we show empirically that our proposed NCD spectral  loss is effective on common benchmark datasets CIFAR-10 and CIFAR-100~\citep{krizhevsky2009learning}. Following the well-established NCD benchmarks ~\citep{Han2019dtc, han2020automatically, fini2021unified}, each dataset is divided into two subsets, the labeled set that
contains labeled images belonging to a set of known classes, and an unlabeled set with novel classes. Our comparison is on three benchmarks: \texttt{C10-5} means CIFAR-10 datasets split with 5 known classes and 5 novel classes and \texttt{C100-80} means CIFAR-100 datasets split with 80 known classes while \texttt{C100-50} has 50 known classes. The division is consistent with~\citet{fini2021unified}. %
We train the model by the proposed NSCL algorithm with details in Appendix~\ref{sec:nscl_sup_exp_cifar} and measure performance on the features in the penultimate layer of ResNet-18. 

 \vspace{0.1cm} \noindent \textbf{NSCL is competitive in discovering novel classes.} Our proposed loss NSCL is amenable to   the theoretical understanding of NCD, which is our primary goal of this work. Beyond theory, we show that NSCL is equally desirable in empirical performance. In particular, NSCL outperforms its rivals by a significant margin, as evidenced in Table~\ref{tab:nscl_main}. Our comparison covers an extensive collection of common NCD algorithms and baselines. In particular, on C100-50, we improve upon the best baseline ComEx by \textbf{10.6}\%. 
 This finding further validates that putting analysis on NSCL is appealing for both theoretical and empirical reasons.%

\begin{table}[htb]
\caption[Comaprison of NCD baselines in clustering accuracy on the novel set.]{Main Results. Results are reported in clustering accuracy (\%) on the \textit{training} split of the novel set. With the learned feature, we perform a K-Means clustering with the default setting in Python's \texttt{sklearn} package. The accuracy of the novel classes is measured by solving an optimal assignment problem using the Hungarian algorithm~\citep{kuhn1955hungarian}. ``C'' is short for CIFAR. SCL denotes training with Spectral Contrastive Loss purely on $\mathcal{D}_u$ while SCL$^\ddagger$ is trained on $\mathcal{D}_u \cup \mathcal{D}_l$ unsupervisedly.  } 
\centering
\scalebox{0.85}{
\begin{tabular}{llll}
\toprule
\textbf{Method} & \textbf{C10-5} & \textbf{C100-80} & \textbf{C100-50} \\ \midrule
 \textbf{KCL}~\citep{hsu2017kcl} & 72.3 & 42.1 & -\\
 \textbf{MCL}~\citep{hsu2019mcl} & 70.9 & 21.5 & - \\
\textbf{DTC}~\citep{Han2019dtc} & 88.7 & 67.3 & 35.9\\
\textbf{RS+}~\citep{zhao2020rankstat} & 91.7 & 75.2 & 44.1 \\
\textbf{DualRank}~\citep{zhao2021rankstat} & 91.6 & 75.3 & - \\
\textbf{Joint}~\citep{jia2021joint} & 93.4 & 76.4 & - \\
\textbf{UNO}~\citep{fini2021unified} & 92.6 & 85.0 & 52.9  \\
\textbf{ComEx}~\citep{yang2022divide} & 93.6 & 85.7 & 53.4
\\ \midrule
\textbf{SCL}~\citep{haochen2021provable}  & 92.4 & 72.7 & 51.8\\
\textbf{SCL}{$^\ddagger$}~\citep{haochen2021provable}  & 93.7  & 68.9 & 53.3\\
\textbf{NSCL} (Ours)  & \textbf{97.5}  &  \textbf{85.9} & \textbf{64.0}
\\ \bottomrule
\end{tabular}}
\label{tab:nscl_main}
\end{table}

\begin{table*}[htb]
\caption[Comparison of baselines' results with overall/novel/known accuracy.]{Comparison of results reported in overall/novel/known accuracy (\%) on the \textit{test} split of CIFAR. The three metrics are calculated as follows. (1) \textbf{Known accuracy}: For the features from the labeled data, we train an additional linear head by linear probing and then measure classification accuracy based on the prediction $\Vec{h}_l$;  (2) \textbf{Novel accuracy}: For features from the unlabeled data, we perform a K-Means clustering with the default setting in Python's \texttt{sklearn} package, which produces the clustering prediction $\Vec{h}_u$. The clustering accuracy is further measured by solving an optimal assignment problem using the Hungarian algorithm~\citep{kuhn1955hungarian}; (3) \textbf{Overall accuracy}. The overall accuracy is measured by concatenating the prediction $\Vec{h}_l$ and $\Vec{h}_u$ and then solving the assignment problem.} 
\vspace{0.2cm}
\centering
\resizebox{0.85\linewidth}{!}{
\begin{tabular}{lllllll}
\toprule
\multirow{2}{*}{\textbf{Method}} & \multicolumn{3}{c}{\textbf{C10-5}} & \multicolumn{3}{c}{\textbf{C100-50}} \\
 & \textbf{All} & \textbf{Novel} & \textbf{Known} & \textbf{All} & \textbf{Novel} & \textbf{Known} \\ \midrule
 \textbf{DTC}~\citep{Han2019dtc} & 68.7 & 78.6 & 58.7 & 32.5 & 34.7 & 30.2 \\
\textbf{RankStats}~\citep{zhao2020rankstat} & 89.7 & 88.8 & 90.6 & 55.3 & 40.9 & 69.7 \\
\textbf{UNO}~\citep{fini2021unified} & \textbf{95.8} & 95.1 & 96.6  & 65.4  & 52.0 & 78.8 \\
\textbf{ComEx}~\citep{yang2022divide} & 95.0 & 93.2 & \textbf{96.7} & 67.2 & 54.5 & \textbf{80.1} \\ \midrule
\textbf{NSCL} (Ours) & 95.5 & \textbf{96.7} & 94.2  &  \textbf{67.4} & \textbf{57.1} &	77.4
\\ \bottomrule
\end{tabular}}
\label{tab:nscl_sup_main}
\end{table*}

\vspace{0.1cm} \noindent \textbf{Ablation study on the unsupervised counterpart.} To verify whether the known classes indeed help discover new classes, we compare NSCL with the unsupervised counterpart (dubbed SCL) that is purely trained on the unlabeled data $\mathcal{D}_u$. Results show that the labeled data offers tremendous help and improves \textbf{13.2}\% in novel class accuracy. 

\vspace{0.1cm} \noindent \textbf{Supervision signals are important in the labeled data.} We also analyze how much the supervision signals in labeled data help. To investigate it, we compare our method NSCL with SCL trained on $\mathcal{D}_u \cup \mathcal{D}_l$ in a purely unsupervised manner. The difference is that SCL does not utilize the label information in $\mathcal{D}_l$. We denote this setting as SCL$^\ddagger$ in Table~\ref{tab:nscl_main}. Results show that NSCL provides stronger performance than SCL$^\ddagger$. The ablation suggests that relevant knowledge of known classes indeed provides meaningful help in novel class discovery. 

\vspace{0.1cm} \noindent \textbf{NSCL is competitive in the inductive setting.} 
We report performance comparison in Table~\ref{tab:nscl_sup_main}, comprehensively measuring three accuracy metrics for all/novel/known classes respectively. Different from Table~\ref{tab:nscl_main} which reports clustering results in a transductive manner, the performance in Table~\ref{tab:nscl_sup_main} is reported on the test split.   
For evaluation, we first collect the feature representations and then report overall/novel/known accuracy with inference details provided in the caption of Table~\ref{tab:nscl_sup_main}. 
We see that NSCL establishes comparable performance with baselines on the labeled data from known classes  and superior performance on novel class discovery. Notably, NSCL outperforms UNO~\citep{fini2021unified} on \texttt{C10-5} by 1.6\% and outperforms ComEx~\citep{yang2022divide} by 2.6\% on \texttt{C100-50} in terms of novel accuracy.

\section{Additional Related Work~}
\label{sec:nscl_related}

\vspace{0.1cm} \noindent \textbf{Novel class discovery.} 
Early works tackled novel category discovery (NCD) as a transfer learning problem, such as DTC~\citep{Han2019dtc}, KCL~\citep{hsu2017kcl}, MCL~\citep{hsu2019mcl}. 
Many subsequent works incorporate representation learning for NCD, including  RankStats~\citep{zhao2020rankstat}, NCL~\citep{zhong2021ncl} and UNO~\citep{fini2021unified}. CompEx~\citep{yang2022divide} further uses a novelty detection module to better separate novel and known. However, none of the previous works theoretically analyzed the key question: \textit{when and how do known classes help?} \citet{li2022closer} try to answer this question from an empirical perspective by comparing labeled datasets from different levels of semantic similarity. \citet{chi2021meta} directly define a solvable condition for the NCD problem but do not investigate the semantic relationship between known and novel classes. This chapter introduces the first work that systematically investigates the ``{when and how}'' questions by modeling the sample relevance
from a graph-theoretic perspective and providing a provable error bound for the NCD problem.

\vspace{0.1cm} \noindent \textbf{Spectral graph theory.} 
Spectral graph theory is a classic research problem~\citep{chung1997spectral,cheeger2015lower,kannan2004clusterings,lee2014multiway,mcsherry2001spectral}, which aims to partition the graph by studying the eigenspace of the adjacency matrix. The spectral graph theory is also widely applied in machine learning~\citep{ng2001spectral,shi2000normalized,blum2001learning,zhu2003semi,argyriou2005combining,shaham2018spectralnet}. Recently, ~\citet{haochen2021provable} derive a spectral contrastive loss from the factorization of the graph's adjacency matrix which facilitates theoretical study in unsupervised domain adaptation~\citep{shen2022connect,haochen2022beyond}.
The graph definition in existing works is purely formed by the unlabeled data, whereas {our graph and adjacency matrix is uniquely tailored for the NCD problem setting and consists of both labeled data from known classes and unlabeled data from novel classes}. We offer new theoretical guarantees and insights based on the relations between known and novel classes, which has not been explored in the previous literature.%

\vspace{0.1cm} \noindent \textbf{Theoretical analysis on contrastive learning.} 
 Recent works have advanced contrastive learning with empirical success~\citep{chen2020simclr,khosla2020supcon,zhang2021supporting,wang2022pico}, which necessitates  a theoretical foundation. ~\citet{arora2019theoretical,lee2021predicting,tosh2021contrastive,tosh2021contrastive2,balestriero2022contrastive, shi2023the} provided provable guarantees on the representations learned by contrastive learning for linear probing. ~\citet{shen2022connect,haochen2021provable,haochen2022beyond} further modeled the pairwise relation from the graphic view and provided error analysis of the downstream tasks. However, the existing body of work has mostly focused on \emph{unsupervised learning}. There is no prior theoretical work considering the NCD problem where both labeled and unlabeled data are presented. In this chapter, we systematically investigate how the label information can change the representation manifold and affect the downstream novel class discovery task.


\section{Summary~}
\label{sec:nscl_summary}
In this chapter, we present a theoretical framework of novel class discovery and provide new insight on the research question: ``\textit{when and how does the known class help discover novel classes?}''. Specifically, we propose a graph-theoretic representation that can be learned through a new NCD Spectral Contrastive Loss (NSCL). Minimizing this objective is equivalent to factoring the graph's adjacency matrix, which allows us to analyze the NCD quality by measuring the linear probing error on novel samples' features. Our main result (Theorem~\ref{th:nscl_no_approx}) suggests such error can be significantly reduced (even to 0) when the linear span of known samples' feature covers the ``ignorance space'' of unlabeled data in discovering novel classes. Our framework is also empirically appealing to use since it can achieve similar or better performance than existing methods on benchmark datasets. 
In summary, NSCL establishes a robust foundation for open-world representation learning by deciphering the influence of known classes in discovering new classes. This contributes profound theoretical and empirical impacts that stretch beyond conventional boundaries.


\chapter{A Graph-Theoretic Framework for Understanding ORL~}
\label{sec:sorl}

\paragraph{Publication Statement.} This chapter is joint work with Zhenmei Shi, and Yixuan Li. The paper version of this chapter appeared
in NeurIPS23~\citep{sun2023sorl}. 

\noindent\rule{\textwidth}{1pt}

The preceding chapter delved into the analysis of the Novel Class Discovery (NCD) problem, which operates under the presumption that all training samples from known classes are labeled, thereby focusing its intent on discovering new classes within the unlabeled data. Open-world representation learning, on the other hand, pursues a more general objective, which endeavors to infer both known and novel classes in unlabeled data by leveraging prior knowledge from a labeled set. However, despite its significance, the theoretical underpinnings for this complex problem remain notably deficient, indicating a pressing need for further research and exploration in this domain.

This chapter bridges the gap by formalizing a graph-theoretic framework tailored for the open-world setting, where the clustering can be theoretically characterized by graph factorization.
Our graph-theoretic framework illuminates practical algorithms and provides guarantees. 
Specifically, utilizing our graph formulation, we present the algorithm --- Spectral Open-world Representation Learning (SORL). This technique, though bearing similarities to NSCL as discussed in Chapter~\ref{sec:nscl}, operates within a distinct problem domain. The process of minimizing the corresponding loss is fundamentally analogous to executing a spectral decomposition on the graph. 
Such equivalence allows us to derive a provable error bound on the clustering performance for both known and novel classes, and analyze rigorously when labeled data helps. Empirically, SORL can match or outperform several strong baselines on common benchmark datasets, which is appealing for practical usage while enjoying theoretical guarantees.

\section{Introduction~}
\label{sec:sorl_intro}

\begin{figure}[htb]
    \centering
\includegraphics[width=0.75\linewidth]{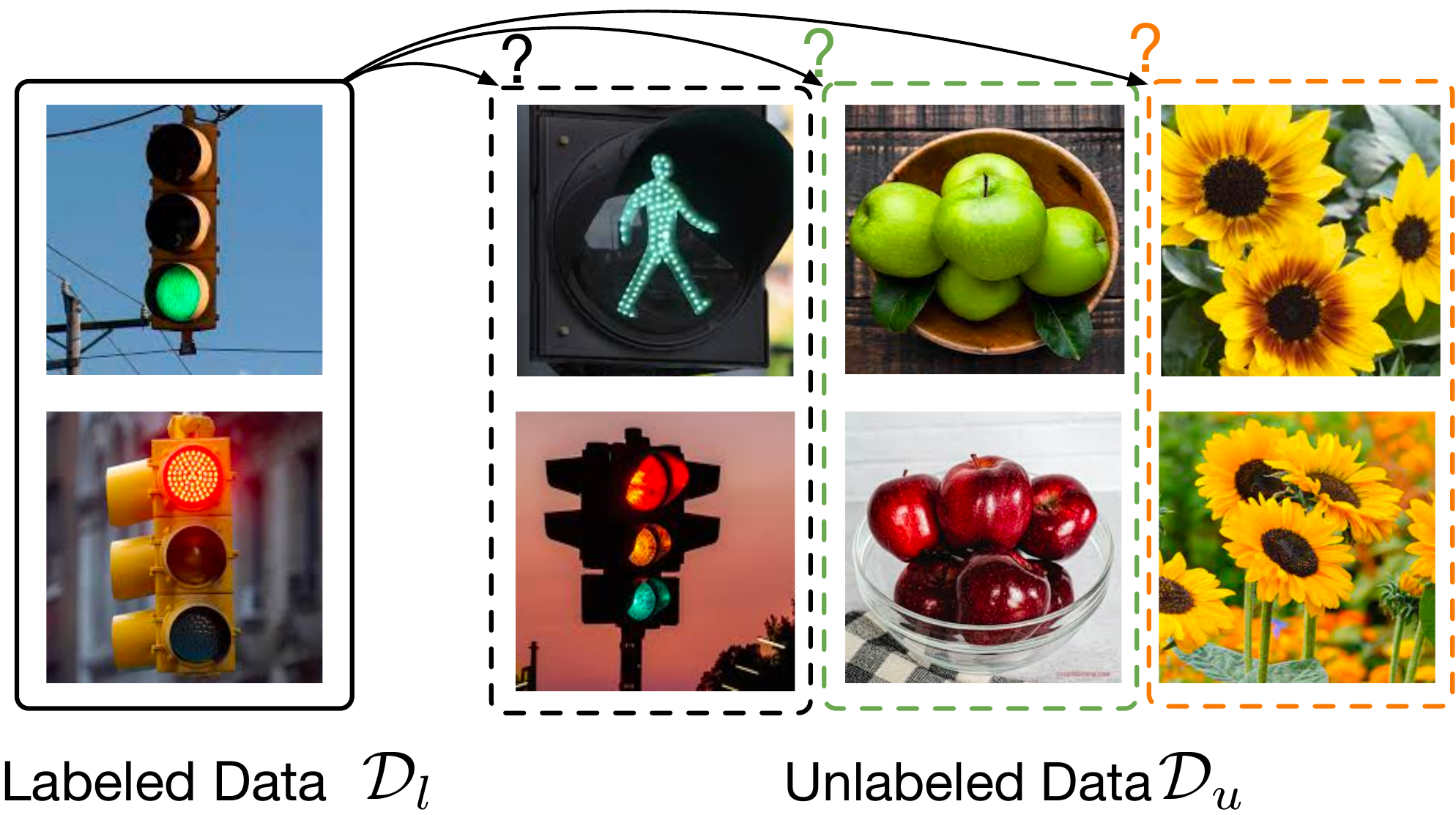}
 
\caption[Illustration of the research question in Open-world Representation Learning.]{{Open-world Representation Learning} aims to learn representations that can correctly cluster samples in the novel class and classify samples in the known classes by utilizing knowledge from the labeled data. An open question is \textit{``what
is the role of the label information in shaping representations for both known and novel classes?''} This chapter aims to provide a formal understanding.
    }
    \label{fig:sorl_teaser}
\end{figure}

Machine learning models in the open world inevitably encounter data from both known and novel classes.  
Traditional supervised machine learning models are trained on a closed set of labels, and thus can struggle to effectively cluster new semantic concepts. On the other hand, open-world representation learning approaches, such as those discussed in studies \citep{cao2022openworld, vaze22gcd, sun2023opencon},  enable models to distinguish \emph{both known and novel classes}, making them highly desirable for real-world scenarios. As shown in Figure~\ref{fig:sorl_teaser}, the learner has access to a labeled training dataset $\mathcal{D}_l$ (from known classes) as well as a large unlabeled dataset $\mathcal{D}_u$ (from both known and novel classes). By optimizing feature representations jointly from both labeled and unlabeled data, the learner aims to create meaningful cluster structures that correspond to either known or novel classes. With the explosive growth of data generated in various domains, open-world representation learning has emerged as a crucial problem in the field of machine learning. %

\vspace{0.1cm} \vspace{0.1cm} \noindent \textbf{Motivation.} Different from self-supervised learning~\citep{van2018cpc,chen2020simclr,caron2020swav,he2019moco,zbontar2021barlow,bardes2021vicreg,chen2021exploring,haochen2021provable}, open-world representation learning allows harnessing the power of the labeled data for possible knowledge sharing and transfer to unlabeled data, and from known classes to novel classes. In this joint learning process, we argue that interesting intricacies can arise---
the labeled data provided may be beneficial or unhelpful to the resulting  clusters.
We exemplify the nuances in Figure~\ref{fig:sorl_teaser}. In one scenario, when the model learns the labeled known classes (e.g., traffic light) by pushing red and green lights closer, such a relationship might transfer to help cluster green and red apples into a coherent cluster. Alternatively, when the connection between the labeled data and the novel class (e.g., flower) is weak, the benefits might be negligible. We argue---perhaps obviously---that a formalized understanding of the intricate phenomenon is needed.

\vspace{0.1cm} \vspace{0.1cm} \noindent \textbf{Theoretical significance.} To date, theoretical understanding of open-world representation learning is still in its infancy. In this chapter, we aim to fill the  critical blank by analyzing this important learning problem from a rigorous theoretical standpoint. Our exposition gravitates around the open question: \emph{what is the role of labeled data in shaping representations for both known and novel classes?} 
To answer this question, we formalize a graph-theoretic framework tailored for the open-world setting, 
where the vertices are all the data points and connected sub-graphs form classes (either known or novel). The edges are defined by a combination of supervised and self-supervised signals, which reflects the availability of both labeled and unlabeled data. Importantly, this graph facilitates the understanding of open-world representation learning from a
spectral analysis perspective, where the clustering can be theoretically characterized by graph factorization. Based on the graph-theoretic formulation, we derive a formal error bound by contrasting the clustering performance for all classes, before and after adding the labeling information. Our Theorem~\ref{th:sorl_main} reveals the sufficient condition for the improved clustering performance for a class. Under the K-means measurement, the unlabeled samples in one class can be better clustered, if their overall connection to the labeled data is stronger than their self-clusterability.

\vspace{0.1cm} \vspace{0.1cm} \noindent \textbf{Practical significance.}  Our graph-theoretic framework also illuminates practical algorithms with provided guarantees.  In particular, based on our graph formulation, we present the algorithm called Spectral Open-world Representation Learning (SORL) adapted from \citet{sun2023nscl}. Minimizing this loss is equivalent to performing spectral decomposition on the graph (Section~\ref{sec:sorl_orl-scl}), which brings two key benefits: (1) it allows us to analyze the representation space and resulting clustering performance in closed-form; (2) practically, it enables end-to-end training in the context of deep networks. We show that our learning algorithm leads to strong empirical
performance while enjoying theoretical guarantees. 
The learning objective can be effectively optimized using stochastic gradient descent on modern neural network architecture, making it desirable for real-world applications.

\section{Problem Setup~}
\label{sec:sorl_prob_setup}
While Chapter \ref{sec:prob} previously addressed the context of open-world representation learning, the subtle disparities in the notation for the purpose of theoretical analysis necessitate a re-visitation of the problem setup and notations. Formally, this section reiterates the data setup and learning objectives inherent in open-world representation learning.

\vspace{0.1cm} \noindent \textbf{Data setup.} We consider the empirical training set  $\mathcal{D}_{l} \cup \mathcal{D}_{u}$ as a union of labeled and unlabeled data. 
\begin{enumerate}

    \item The labeled set $\mathcal{D}_{l}=\left\{\Bar{x}_{i} , y_{i}\right\}_{i=1}^{n}$, with $y_i \in \mathcal{Y}_l$. The label set $\mathcal{Y}_l$ is  known. 
    \item The unlabeled set $\mathcal{D}_{u}=\left\{\Bar{x}_{i}\right\}_{i=1}^{m}$, where each sample $\Bar{x}_i$ can come from either known or novel classes\footnote{This generalizes the problem of Novel Class Discovery~\citep{Han2019dtc,hsu2017kcl,hsu2019mcl,zhao2021rankstat,zhong2021ncl,fini2021unified}, which assumes the unlabeled set is \emph{purely} from novel classes.}. Note that we do not have  access to the labels in $\mathcal{D}_{u}$. For mathematical convenience, we denote the underlying label set as $\mathcal{Y}_\text{all}$, where $\mathcal{Y}_l \subset \mathcal{Y}_\text{all}$. We denote $C = |\mathcal{Y}_\text{all}|$  the total number of classes. 

\end{enumerate} 

We use $\mathcal{P}_{l}$ and $\mathcal{P}$ to denote the  marginal distributions of labeled data and all data in the input space, respectively. Further, we let $\mathcal{P}_{l_i}$ denote the distribution of labeled samples with class label $i \in \mathcal{Y}_l$.

\vspace{0.1cm} \noindent \textbf{Learning goal.} Under the setting, the goal is to learn distinguishable representations \emph{for both known and novel classes} simultaneously. The representation quality will be measured using classic metrics, such as K-means clustering accuracy, which we will define mathematically in Section~\ref{sec:sorl_eval}.

\vspace{0.1cm} \noindent \textbf{Theoretical analysis goal.} We aim to comprehend the role of label information in shaping representations for both known and novel classes. It's important to note that our theoretical approach aims to understand the perturbation in the clustering performance by labeling existing, previously unlabeled data points within the dataset. By contrasting the clustering performance before and after labeling these instances, we uncover the underlying structure and relations that the labels may reveal. This analysis provides invaluable insights into how labeling information can be effectively leveraged to enhance the representations of both known and novel classes.

\section{A Spectral Approach for Open-world Representation Learning~}

In this section, we formalize and tackle the open-world representation learning (ORL) problem from a graph-theoretic view. Our fundamental idea is to formulate ORL as a clustering problem---where similar data points are grouped into the same cluster, by way of possibly utilizing helpful information from the labeled data $\mathcal{D}_l$. This clustering process can be modeled by a graph, where the vertices are all the data points and classes form connected sub-graphs.
 Specifically, utilizing our graph formulation, we present the algorithm — Spectral Open-world Representation Learning (SORL) in Section~\ref{sec:sorl_orl-scl}. The process of minimizing the corresponding loss is fundamentally analogous to executing a spectral decomposition on the graph.

\subsection{A Graph-Theoretic Formulation}
\label{sec:sorl_graph_def}
We start by formally defining the augmentation graph and adjacency matrix. 
For clarity, we use $\bar x$ to indicate the natural sample (raw inputs without augmentation). Given an $\bar x$, we use $\mathcal{T}(x|\Bar{x})$ to denote the probability of $x$ being augmented from $\Bar{x}$. For instance, when $\Bar{x}$ represents an image, $\mathcal{T}(\cdot|\Bar{x})$ can be the distribution of common augmentations \citep{chen2020simclr} such as Gaussian blur, color distortion, and random cropping. The augmentation allows us to define a general population space $\mathcal{X}$, which contains all the original images along with their augmentations. In our case, $\mathcal{X}$ is composed of augmented samples from both labeled and unlabeled data, with cardinality $|\mathcal{X}|=N$. We further denote  $\mathcal{X}_l$ as the set of  samples (along with augmentations) from the labeled data part.

 We define the graph $G(\mathcal{X}, w)$ with 
vertex set $\mathcal{X}$ and edge weights $w$. To define edge weights $w$, we decompose the graph connectivity into two components: (1) self-supervised connectivity $w^{(u)}$ by treating all points in $\mathcal{X}$ as entirely unlabeled, and (2) supervised connectivity $w^{(l)}$ by adding 
labeled information from $\mathcal{P}_l$ to the graph. We proceed to define these two cases separately.

First, by assuming all points as unlabeled, two samples ($x$, $x^+$) are considered a {\textbf{positive pair}} if:

\begin{center}
\textbf{Unlabeled Case (u):} \textit{$x$ and $x^+$ are augmented from the same image $\Bar{x} \sim \mathcal{P}$.}
\end{center}

For any two augmented data $x, x' \in \mathcal{X}$, $w^{(u)}_{x x'}$ denotes the marginal probability of generating the pair:
\begin{align}
\begin{split}
w^{(u)}_{x x^{\prime}} \triangleq \mathbb{E}_{\bar{x} \sim {\mathcal{P}}}  \mathcal{T}(x| \bar{x}) \mathcal{T}\left(x'| \bar{x}\right),
    \label{eq:sorl_def_wxx}
    \vspace{0.6cm}
\end{split}
\end{align}
which can be viewed as self-supervised connectivity~\citep{chen2020simclr,haochen2021provable}. However, different from self-supervised learning, ORL has access to the labeled information for a subset of nodes, which \emph{allows adding additional connectivity to the graph}. Accordingly, the positive pair can be defined as:
\begin{center}
    \textbf{Labeled Case (l):} \textit{$x$ and $x^+$ are augmented from two labeled samples $\Bar{x}_l$ and $\Bar{x}'_l$ \emph{with the same known class $i$}. In other words, both $\Bar{x}_l$ and $\Bar{x}'_l$ are drawn independently from $\mathcal{P}_{l_i}$}.
\end{center}

Considering both case (u) and case (l), the overall  edge weight for any pair of data $(x,x')$ is given by: 
\begin{align}
\begin{split}
w_{x x^{\prime}} = \eta_{u} w^{(u)}_{x x^{\prime}} + \eta_{l} w^{(l)}_{x x^{\prime}}, \text{where }
w^{(l)}_{x x^{\prime}} \triangleq \sum_{i \in \mathcal{Y}_l}\mathbb{E}_{\bar{x}_{l} \sim {\mathcal{P}_{l_i}}} \mathbb{E}_{\bar{x}'_{l} \sim {\mathcal{P}_{l_i}}} \mathcal{T}(x | \bar{x}_{l}) \mathcal{T}\left(x' | \bar{x}'_{l}\right),
    \label{eq:sorl_def_wxx_b}
\end{split}
\end{align}
and $\eta_{u},\eta_{l}$ modulates the importance between the two cases. 
The magnitude of $w_{xx'}$ indicates the ``positiveness'' or similarity between  $x$ and $x'$. 
We then use 
$w_x = \sum_{x' \in \mathcal{X}}w_{xx'}$
to denote the total edge weights connected to a vertex $x$. 

\vspace{0.1cm} \noindent \textbf{Remark: A graph perturbation view.}  With the graph connectivity defined above, we can now define the adjacency matrix $A \in \mathbb{R}^{N \times N}$ with entries $A_{x x^\prime}=w_{x x^{\prime}}$. Importantly, the adjacency matrix can be decomposed into two parts: 
\vspace{0.1cm}
\begin{equation}
    A = \eta_{u} A^{(u)} +  \tikzmarknode{Al}{\highlight{myblue}{$\eta_{l} A^{(l)}$}},
    \label{eq:sorl_adj_orl}
\end{equation}
\begin{tikzpicture}[overlay,remember picture,>=stealth,nodes={align=left,inner ysep=1pt},<-]
    \path (Al) ++ (0,0.7em) node[anchor=south west,color=blue!67] (Alt){\textit{ Perturbation by adding labels}};
    \draw [color=blue!87](Al.north) |- ([xshift=-0.3ex,color=blue]Alt.north east);
\end{tikzpicture}
which can be 
 regarded as the self-supervised adjacency matrix $A^{(u)}$ perturbed by additional labeling information encoded in $A^{(l)}$. This graph perturbation view serves as a critical foundation for our theoretical analysis of the clustering performance in Section~\ref{sec:sorl_theory}. 
As a standard technique in graph theory~\citep{chung1997spectral}, we use the \textit{normalized adjacency matrix}
of $G(\mathcal{X}, w)$:
\begin{equation}
    \Dot{A}\triangleq D^{-\frac{1}{2}} A D^{-\frac{1}{2}},
    \label{eq:sorl_def}
\end{equation}
where ${D} \in \mathbb{R}^{N \times N}$ is a diagonal matrix with ${D}^{}_{x x}=w^{}_x$. The normalization balances the degree of each node,  reducing the influence of vertices with very large degrees. The normalized adjacency matrix defines the probability of $x$ and $x^{\prime}$  being considered as the positive pair from the perspective of augmentation, which helps derive the new representation learning loss as we show next.

\subsection{SORL: Spectral Open-World Representation Learning}
\label{sec:sorl_orl-scl}
We introduce the algorithm called Spectral Open-world Representation Learning (SORL), which can be derived from a spectral decomposition of $\Dot{A}$. This technique, though
bearing similarities to NSCL as discussed in Chapter~\ref{sec:nscl}, operates within a different problem domain. 
The algorithm has both practical and theoretical values. First, it enables efficient end-to-end training in the context of modern neural networks. More importantly, it allows drawing a theoretical equivalence between learned representations and the top-$k$ singular vectors of $\Dot{A}$. Such equivalence facilitates theoretical understanding of the clustering structure encoded in $\Dot{A}$. 
Specifically, we consider low-rank matrix approximation:
\begin{equation}
    \min _{F \in \mathbb{R}^{N \times k}} \mathcal{L}_{\mathrm{mf}}(F, A)\triangleq\left\|\Dot{A}-F F^{\top}\right\|_F^2
    \label{eq:sorl_lmf}
\end{equation}
According to the Eckart–Young–Mirsky theorem~\citep{eckart1936approximation}, the minimizer of this loss function is $F_k\in \mathbb{R}^{N \times k}$ such that $F_k F_k^{\top}$ contains the top-$k$ components of $\Dot{A}$'s SVD decomposition. 

Now, if we view each row $\*f_x^{\top}$ of $F$ as a scaled version of learned feature embedding  $f:\mathcal{X}\mapsto \mathbb{R}^k$, the $\mathcal{L}_{\mathrm{mf}}(F, A)$ can be written as a form of the contrastive learning objective. We formalize it in Theorem~\ref{th:sorl_orl-scl}   below\footnote{Theorem~\ref{th:sorl_orl-scl} is primarily adapted from Theorem~\ref{th:nscl_ncd-scl} with a distinction in the data setting, as Chapter~\ref{sec:nscl} does not consider known class samples within the unlabeled dataset.}.

\begin{theorem}
\label{th:sorl_orl-scl} 
We define $\*f_x = \sqrt{w_x}f(x)$ for some function $f$. Recall $\eta_{u},\eta_{l}$ are coefficients defined in Eq.~\eqref{eq:sorl_def_wxx}. Then minimizing the loss function $\mathcal{L}_{\mathrm{mf}}(F, A)$ is equivalent to minimizing the following loss function for $f$, which we term \textbf{Spectral Open-world Representation Learning (SORL)}:
\begin{align}
\begin{split}
    \mathcal{L}_\text{SORL}(f) &\triangleq - 2\eta_{u} \mathcal{L}_1(f) 
- 2\eta_{l}  \mathcal{L}_2(f)  + \eta_{u}^2 \mathcal{L}_3(f) + 2\eta_{u} \eta_{l} \mathcal{L}_4(f) +  
\eta_{l}^2 \mathcal{L}_5(f),
\label{eq:sorl_def_SORL}
\end{split}
\end{align} where
\begin{align*}
    \mathcal{L}_1(f) &= \sum_{i \in \mathcal{Y}_l}\underset{\substack{\bar{x}_{l} \sim \mathcal{P}_{{l_i}}, \bar{x}'_{l} \sim \mathcal{P}_{{l_i}},\\x \sim \mathcal{T}(\cdot|\bar{x}_{l}), x^{+} \sim \mathcal{T}(\cdot|\bar{x}'_l)}}{\mathbb{E}}\left[f(x)^{\top} {f}\left(x^{+}\right)\right] , \\
    \mathcal{L}_2(f) &= \underset{\substack{\bar{x}_{u} \sim \mathcal{P},\\x \sim \mathcal{T}(\cdot|\bar{x}_{u}), x^{+} \sim \mathcal{T}(\cdot|\bar{x}_u)}}{\mathbb{E}}
\left[f(x)^{\top} {f}\left(x^{+}\right)\right], \\
    \mathcal{L}_3(f) &= \sum_{i, j\in \mathcal{Y}_l}
    \underset{\substack{\bar{x}_l \sim \mathcal{P}_{{l_i}}, \bar{x}'_l \sim \mathcal{P}_{{l_{j}}},\\x \sim \mathcal{T}(\cdot|\bar{x}_l), x^{-} \sim \mathcal{T}(\cdot|\bar{x}'_l)}}{\mathbb{E}}
\left[\left(f(x)^{\top} {f}\left(x^{-}\right)\right)^2\right], \\
    \mathcal{L}_4(f) &= \sum_{i \in \mathcal{Y}_l}\underset{\substack{\bar{x}_l \sim \mathcal{P}_{{l_i}}, \bar{x}_u \sim \mathcal{P},\\x \sim \mathcal{T}(\cdot|\bar{x}_l), x^{-} \sim \mathcal{T}(\cdot|\bar{x}_u)}}{\mathbb{E}}
\left[\left(f(x)^{\top} {f}\left(x^{-}\right)\right)^2\right], \\
    \mathcal{L}_5(f) &= \underset{\substack{\bar{x}_u \sim \mathcal{P}, \bar{x}'_u \sim \mathcal{P},\\x \sim \mathcal{T}(\cdot|\bar{x}_u), x^{-} \sim \mathcal{T}(\cdot|\bar{x}'_u)}}{\mathbb{E}}
\left[\left(f(x)^{\top} {f}\left(x^{-}\right)\right)^2\right].
\end{align*}
\end{theorem}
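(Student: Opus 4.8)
\textbf{Proof Proposal for Theorem~\ref{th:sorl_orl-scl}.}

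The plan is to mirror the strategy of Theorem~\ref{th:nscl_ncd-scl} (whose proof sketch already appears in the excerpt), adapting it to the new edge-weight decomposition in Eq.~\eqref{eq:sorl_def_wxx_b}. First I would expand the matrix factorization loss directly. Writing $\*f_x = \sqrt{w_x}f(x)$, a routine expansion of the Frobenius norm gives
\begin{align*}
\mathcal{L}_{\mathrm{mf}}(F, A) &= \sum_{x, x' \in \mathcal{X}}\left(\frac{w_{x x'}}{\sqrt{w_x w_{x'}}}-\*f_x^{\top} \*f_{x'}\right)^2 \\
&= \mathrm{const} + \sum_{x, x' \in \mathcal{X}}\left(-2 w_{x x'} f(x)^{\top} f(x') + w_x w_{x'}\left(f(x)^{\top} f(x')\right)^2\right),
\end{align*}
where the constant term $\sum_{x,x'} w_{xx'}^2/(w_x w_{x'})$ does not depend on $f$ and can be discarded for optimization. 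This reduces the problem to rewriting the two remaining sums as expectations under the sampling distributions that define $w_{xx'}$ and $w_x$.

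The key step is then to substitute the decomposition $w_{xx'} = \eta_u w^{(u)}_{xx'} + \eta_l w^{(l)}_{xx'}$ from Eq.~\eqref{eq:sorl_def_wxx_b} into both sums. For the first (linear) term, linearity immediately splits it into an $\eta_u$-weighted piece involving $w^{(u)}_{xx'}$ and an $\eta_l$-weighted piece involving $w^{(l)}_{xx'}$; recognizing $w^{(u)}_{xx'} = \mathbb{E}_{\bar x \sim \mathcal{P}}\mathcal{T}(x|\bar x)\mathcal{T}(x'|\bar x)$ as the generating probability of a self-supervised positive pair yields $-2\eta_u \mathcal{L}_2(f)$, while the labeled term $w^{(l)}_{xx'} = \sum_{i\in\mathcal{Y}_l}\mathbb{E}_{\bar x_l,\bar x'_l \sim \mathcal{P}_{l_i}}\mathcal{T}(x|\bar x_l)\mathcal{T}(x'|\bar x'_l)$ yields $-2\eta_l \mathcal{L}_1(f)$. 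For the second (quadratic) term, one uses $w_x = \sum_{x'} w_{xx'} = \eta_u w_x^{(u)} + \eta_l w_x^{(l)}$, so $w_x w_{x'}$ expands into four cross terms with coefficients $\eta_u^2$, $\eta_u\eta_l$ (appearing twice, hence the factor $2$), and $\eta_l^2$; matching each cross product of marginal degree-generating distributions against the definitions of $\mathcal{L}_3, \mathcal{L}_4, \mathcal{L}_5$ completes the identification. Collecting all five pieces gives exactly $\mathcal{L}_\text{SORL}(f)$ in Eq.~\eqref{eq:sorl_def_SORL}, up to an $f$-independent constant, which establishes the claimed equivalence of minimizers.

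The main obstacle—though it is bookkeeping rather than conceptual—is correctly tracking the marginalization structure when expanding $w_x w_{x'}$ into the four $\mathcal{L}_3$–$\mathcal{L}_5$ cross terms: one must verify that the product of degree-weighting factors $w_x^{(\cdot)} w_{x'}^{(\cdot)}$, after dividing out the $\sqrt{w_x w_{x'}}$ normalization implicit in $\*f_x = \sqrt{w_x}f(x)$, reproduces precisely the ``negative pair'' sampling distributions (two independent draws from $\mathcal{P}$, from $\mathcal{P}_{l_i}$ and $\mathcal{P}$, or from $\mathcal{P}_{l_i}$ and $\mathcal{P}_{l_j}$) appearing in $\mathcal{L}_3, \mathcal{L}_4, \mathcal{L}_5$. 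Since this is structurally identical to the computation already carried out for Theorem~\ref{th:nscl_ncd-scl}—with $\alpha \mapsto \eta_l$, $\beta \mapsto \eta_u$, and the unlabeled marginal $\mathcal{P}_u$ replaced by the full marginal $\mathcal{P}$—I would present the full derivation in an appendix and note in the main text only that it follows the same lines. The one genuinely new point worth highlighting is that, unlike in Chapter~\ref{sec:nscl}, $\mathcal{P}$ here contains contributions from known classes as well, so $\mathcal{L}_2, \mathcal{L}_4, \mathcal{L}_5$ sample over a distribution whose support overlaps with that of the labeled terms; this does not affect the algebra but is what makes the resulting theory (in Section~\ref{sec:sorl_theory}) a statement about open-world rather than pure novel-class-discovery representation learning.
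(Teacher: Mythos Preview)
Your approach is exactly the paper's: expand the Frobenius norm, drop the $f$-independent constant, substitute the decomposition of $w_{xx'}$ and $w_x$, and collect the five expectation terms. The appendix proof (Section~\ref{sec:sorl_proof-SORL}) does precisely this, line by line, with no additional ideas.

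One bookkeeping point: following Eq.~\eqref{eq:sorl_def_wxx_b} as written ($\eta_u$ on $w^{(u)}$, $\eta_l$ on $w^{(l)}$), your linear term yields $-2\eta_u\mathcal{L}_2 - 2\eta_l\mathcal{L}_1$ and the quadratic term yields $\eta_l^2\mathcal{L}_3 + 2\eta_u\eta_l\mathcal{L}_4 + \eta_u^2\mathcal{L}_5$, which is \emph{not} the coefficient pattern in Eq.~\eqref{eq:sorl_def_SORL}. The paper's appendix proof silently swaps the roles of $\eta_u$ and $\eta_l$ in its restatement of $w_{xx'}$ (so there $\eta_u$ multiplies the labeled sum), and only under that swapped convention do you land on the theorem's stated form. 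Your mapping $\alpha\mapsto\eta_l$, $\beta\mapsto\eta_u$ is the one consistent with Eq.~\eqref{eq:sorl_def_wxx_b}; the theorem statement corresponds to $\alpha\mapsto\eta_u$, $\beta\mapsto\eta_l$. This is a notational inconsistency in the paper rather than an error in your strategy, but your sentence ``gives exactly $\mathcal{L}_\text{SORL}(f)$'' would not survive the explicit computation as you have set it up---you should flag the discrepancy rather than claim exact agreement.
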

\begin{proof} (\textit{sketch})
We can expand $\mathcal{L}_{\mathrm{mf}}(F, A)$ and obtain
\begin{align*}
\mathcal{L}_{\mathrm{mf}}(F, A) &=\sum_{x, x^{\prime} \in \mathcal{X}}\left(\frac{w_{x x^{\prime}}}{\sqrt{w_x w_{x^{\prime}}}}-\*f_x^{\top} \*f_{x^{\prime}}\right)^2 \\
&= const + 
\sum_{x, x^{\prime} \in \mathcal{X}}\left(-2 w_{x x^{\prime}} f(x)^{\top} {f}\left(x^{\prime}\right)+w_x w_{x^{\prime}}\left(f(x)^{\top}{f}\left(x^{\prime}\right)\right)^2\right)
\end{align*} 
The form of $\mathcal{L}_\text{SORL}(f)$ is derived from plugging $w_{xx'}$ (defined in Eq.~\eqref{eq:sorl_def_wxx}) and $w_x$. 
Full proof is in Appendix~\ref{sec:sorl_proof-SORL}. 
\end{proof}

\textbf{Interpretation of $\mathcal{L}_\text{SORL}(f)$.} 
At a high level, $\mathcal{L}_1$ and $\mathcal{L}_2$ push the embeddings of \textbf{positive pairs} to be closer while $\mathcal{L}_3$, $\mathcal{L}_4$ 
 and $\mathcal{L}_5$ pull away the embeddings of \textbf{negative pairs}. In particular, $\mathcal{L}_1$ samples two random augmentation views of two images from labeled data with the \textbf{same} class label, and $\mathcal{L}_2$ samples two views from the same image in $\mathcal{X}$. For negative pairs, $\mathcal{L}_3$ uses two augmentation views from two samples in $\mathcal{X}_{l}$ with \textbf{any} class label. $\mathcal{L}_4$ uses two views of one sample in $\mathcal{X}_{l}$ and another one in $\mathcal{X}$. $\mathcal{L}_5$ uses two views from two random samples in $\mathcal{X}$.

\section{Theoretical Analysis~}
\label{sec:sorl_theory}
So far we have presented a spectral approach for open-world representation learning based on graph factorization. Under this framework, we now formally  analyze:
\emph{\textbf{how does the labeling information shape the representations for known and novel classes?}}

\subsection{An Illustrative Example}
\label{sec:sorl_theory_toy}
We consider a toy example that helps illustrate the core idea of our theoretical findings. Specifically, the example aims to distinguish 3D objects with different shapes, as shown in Figure~\ref{fig:sorl_toy_setting}. These images are generated by a 3D rendering software~\citep{johnson2017clevr} with user-defined properties including colors, shape, size, position, etc. We are interested in contrasting the representations (in the form of singular vectors), when  the label information is either incorporated in training or not. 

\vspace{0.1cm} \noindent \textbf{Data design.} Suppose the training samples come from three types, $\mathcal{X}_{\cube{1}}$, $\mathcal{X}_{\sphere{0.7}{gray}}$, $\mathcal{X}_{\cylinder{0.6}}$. Let $\mathcal{X}_{\cube{1}}$ be the sample space with \textbf{known} class, and $\mathcal{X}_{\sphere{0.7}{gray}}, \mathcal{X}_{\cylinder{0.6}}$ be the sample space with \textbf{novel} classes. Further, the two novel classes are constructed to have different relationships with the known class. Specifically, 
$\mathcal{X}_{\sphere{0.7}{gray}}$ shares some similarity with $\mathcal{X}_{\cube{1}}$ in color (red and blue); whereas another novel class $\mathcal{X}_{\cylinder{0.6}}$ has no obvious similarity with the known class.  
Without any labeling information, it can be difficult to distinguish $\mathcal{X}_{\sphere{0.7}{gray}}$ from $\mathcal{X}_{\cube{1}}$ since samples share common colors. We aim to verify the hypothesis that: \textit{adding labeling information to $\mathcal{X}_{\cube{1}}$ (i.e., connecting $\textcolor{blue}{\cube{0.7}}$ and $\textcolor{red}{\cube{0.7}}$) has a larger (beneficial) impact to cluster  $\mathcal{X}_{\sphere{0.7}{gray}}$ than $\mathcal{X}_{\cylinder{0.6}}$}.

\textbf{Augmentation graph.} Based on the data design, we formally define the augmentation graph, which encodes the probability of augmenting a source image $\bar{x}$ to the augmented view $x$:
\begin{align}
    \mathcal{T}\left(x \mid \bar{x} \right)=
    \left\{\begin{array}{ll}
    \tau_{1} & \text { if }  \text{color}(x) = \text{color}(\bar{x}), \text{shape}(x) = \text{shape}(\bar{x}); \\
    \tau_{c} & \text { if }  \text{color}(x) = \text{color}(\bar{x}), \text{shape}(x) \neq \text{shape}(\bar{x}); \\
    \tau_{s} & \text { if }  \text{color}(x) \neq \text{color}(\bar{x}), \text{shape}(x) = \text{shape}(\bar{x}); \\
    \tau_{0} & \text { if }  \text{color}(x) \neq \text{color}(\bar{x}), \text{shape}(x) \neq \text{shape}(\bar{x}). \\
    \end{array}\right.
    \label{eq:sorl_def_edge}
\end{align}
With Eq.~\eqref{eq:sorl_def_edge} and the definition of the adjacency matrix in Section~\ref{sec:sorl_graph_def}, we can derive the analytic form of $A^{(u)}$  and $A$, as shown in Figure~\ref{fig:sorl_toy_setting}(b). We refer readers to Appendix~\ref{sec:sorl_sup_toy} for the detailed derivation. The two matrices allow us to contrast the connectivity changes in the graph, before and after the labeling information is added.%

\begin{figure*}[htb]
    \centering
\includegraphics[width=0.98\linewidth]{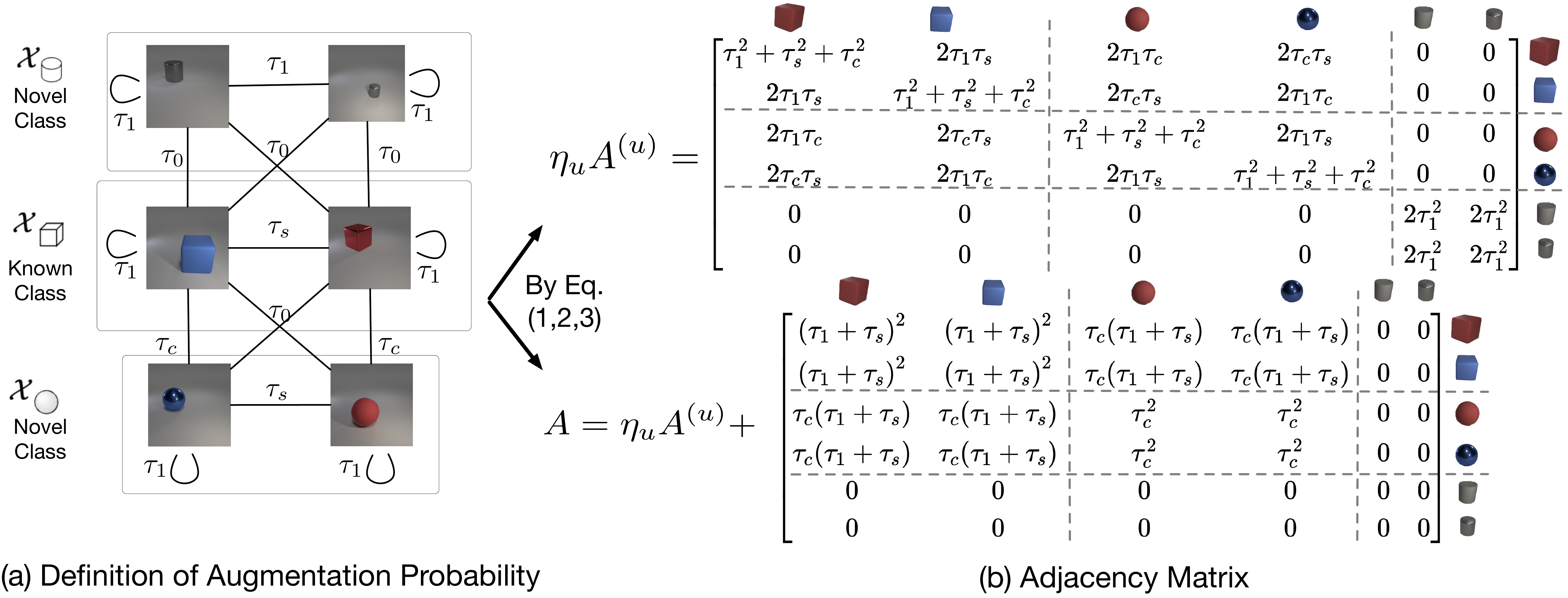}
    \caption[An illustrative example for theoretical analysis.]{An illustrative example for theoretical analysis. We consider a 6-node graph with one known class (cube) and two novel classes (sphere, cylinder). (a) The augmentation probabilities between nodes are defined by their color and shape in Eq.~\eqref{eq:sorl_def_edge}. (b) The adjacency matrix can then be calculated by Equations in Sec.~\ref{sec:sorl_graph_def} where we let $\tau_0=0, \eta_u=6, \eta_l=4$. The calculation details are in Appendix~\ref{sec:sorl_sup_toy}. The magnitude order follows $\tau_1 \gg \tau_{c} > \tau_{s} > 0$. }
    
    \label{fig:sorl_toy_setting}
\end{figure*}

\vspace{0.1cm} \vspace{0.1cm} \noindent \textbf{Insights.} We are primarily interested in analyzing the
difference of the representation space derived from $A^{(u)}$ and $A$. We visualize the top-3 eigenvectors\footnote{When $\tau_1 \gg \tau_{c} > \tau_{s} > 0$, the top-3 eigenvectors 
are almost equivalent to the feature embedding.}
of the normalized adjacency matrix $\Dot{A}^{(u)}$ and $\Dot{A}$ in Figure~\ref{fig:sorl_toy_result}(a), where the results are based on the magnitude order  $\tau_1 \gg \tau_{c} > \tau_{s} > 0$. Our key {takeaway} is: \emph{adding labeling information to known class $\mathcal{X}_{\cube{1}}$ helps better distinguish the known class itself and the novel class $\mathcal{X}_{\sphere{0.7}{gray}}$, which has a stronger connection/similarity with $\mathcal{X}_{\cube{1}}$}. 

\vspace{0.1cm} \noindent \textbf{Qualitative analysis.} Our theoretical insight can also be verified empirically, by learning representations on over 10,000 samples using the loss defined in Section~\ref{sec:sorl_orl-scl}. Due to the space limitation, we include experimental details in Appendix~\ref{sec:sorl_sup_exp_vis}.
In Figure~\ref{fig:sorl_toy_result}(b), we visualize the learned features through UMAP~\citep{umap}. Indeed, we observe that samples become more concentrated around different shape classes after adding labeling information to the cube class.

\begin{figure*}[htb]
    \centering
\includegraphics[width=0.98\linewidth]{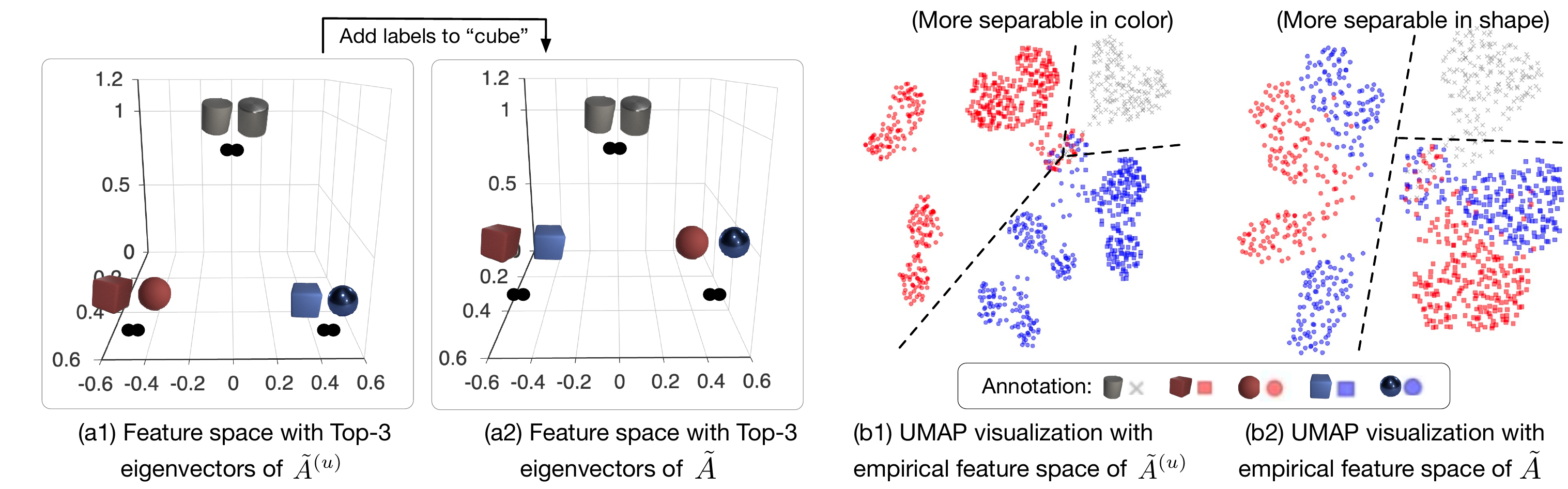}
    \caption[Visualization of representation space for toy example. ]{\small Visualization of representation space for toy example. (a) Theoretically contrasting the feature formed by top-3 eigenvectors of $\Dot{A}^{(u)}$ and $\Dot{A}$ respectively. (b) UMAP visualization of the features learned without (left) and with labeled information (right).  Details are in Appendix~\ref{sec:sorl_sup_toy} (eigenvector calculation) and Appendix~\ref{sec:sorl_sup_exp_vis} (visualization setting). }
    
    \label{fig:sorl_toy_result}
\end{figure*}

%

\subsection{Main Theory}
\label{sec:sorl_th_main}
The toy example offers an important insight that the added labeled information is more helpful for the class with a stronger connection to the known class.
In this section, we
 formalize this insight by extending the toy example to
a more general setting. As a roadmap, we derive the result through three steps: \textbf{(1)} derive the closed-form solution of the learned representations; \textbf{(2)} define the clustering performance by the K-means measure; \textbf{(3)} contrast the resulting clustering performance before and after adding labels. We start by deriving the representations. 

\subsubsection{Learned Representations in Analytic Form}
\label{sec:sorl_feature}
\noindent\textbf{Representation without labels.} 
To obtain the representations, one can train the neural network $f:\mathcal{X}\mapsto \mathbb{R}^k$ using the spectral loss defined in Equation~\ref{eq:sorl_def_SORL}. We assume that the optimizer is capable to obtain the representation $Z^{(u)} \in \mathbb{R}^{N\times k}$
that minimizes the loss, where each row vector $\*z_i = f(x_i)^{\top}$. Recall that Theorem~\ref{th:sorl_orl-scl} allows us to derive a closed-form solution for the learned feature space by the spectral decomposition of the adjacency matrix, which is $\Dot{A}^{(u)}$ in the case without labeling information. Specifically, we have $F^{(u)}_k = \sqrt{D^{(u)}}Z^{(u)}$, where $F^{(u)}_k F^{(u)\top}_k$ contains the top-$k$ components of $\Dot{A}^{(u)}$'s SVD decomposition and $D^{(u)}$ is the diagonal matrix defined based on the row sum of $A^{(u)}$. We further define the top-$k$ singular vectors of $A^{(u)}$  as $V_k^{(u)} \in \mathbb{R}^{N\times k}$, so we have $F^{(u)}_k = V_k^{(u)} \sqrt{\Sigma_k^{(u)}}$, where $\Sigma_k^{(u)}$ is a diagonal matrix of the top-$k$ singular values of $A^{(u)}$.
{By equalizing the two forms of $F^{(u)}_k$}, the closed-formed solution of the learned feature space is given by $Z^{(u)} = [D^{(u)}]^{-\frac{1}{2}} V_k^{(u)} \sqrt{\Sigma_k^{(u)}}$. %

\vspace{0.1cm} \noindent \textbf{Representation perturbation by adding labels.} 
We now analyze how the representation is ``perturbed'' as a result of adding label information. We consider $|\mathcal{Y}_l| = 1$\footnote{To understand the perturbation by adding labels from more than one class, one can take the summation of the perturbation by each class.} to facilitate a better understanding of our key insight. We can rewrite $A$ in Eq.~\ref{eq:sorl_adj_orl} as:
\begin{equation*}
    A(\delta) \triangleq \eta_u A^{(u)} + \delta \mathfrak{l} \mathfrak{l}^{\top},
\end{equation*}
where we replace $\eta_l$ to $\delta$ to be more apparent in representing the perturbation and define $\mathfrak{l} \in \mathbb{R}^{N}, (\mathfrak{l})_x = \mathbb{E}_{\bar{x}_{l} \sim {\mathcal{P}_{l_1}}} \mathcal{T}(x | \bar{x}_{l})$. Note that $\mathfrak{l}$ can be interpreted as the vector of ``\textit{the semantic connection for sample $x$ to the labeled data}''. One can easily extend to $r$ classes by letting $\mathfrak{l} \in \mathbb{R}^{N \times r}$.

Here we treat the adjacency matrix as a function of the perturbation. In a similar manner as above, we can derive the normalized  adjacency matrix $\Tilde{A}(\delta)$ and the feature representation $Z(\delta)$ in closed-form. The details are included in Appendix~\ref{sec:sorl_sup_proof_main}.

\subsubsection{Evaluation Target}
\label{sec:sorl_eval}
With the learned representations, we can evaluate their quality by the clustering performance.
Our theoretical analysis of the clustering performance can well connect to empirical evaluation strategy in the literature~\citep{yang2022divide} using $K$-means clustering accuracy/error. 
Formally, we define the ground-truth partition of clusters by $\Pi = \{\pi_{1}, \pi_{2}, ..., \pi_{C}\}$, where $\pi_{i}$ is the set of samples' indices with underlying label $y_i$ and $C$ is the total number of classes (including both known and novel). We further let $\boldsymbol{\mu}_\pi = \mathbb{E}_{i \in \pi}\*z_i$ be the center of features in $\pi$, and the average of all feature vectors be $\boldsymbol{\mu}_\Pi = \mathbb{E}_{j \in [N]}\*z_j$.

The clustering performance of K-means depends on two measurements: \textbf{Intra-class} measure and \textbf{Inter-class} measure. Specifically, we let the intra-class measure be the average Euclidean distance from the samples' feature to the corresponding cluster center and we measure the inter-class separation as the distances between cluster centers: 
\begin{equation}
    \mathcal{M}_\text{intra-class}(\Pi, Z) \triangleq \sum_{\pi\in \Pi}  \sum_{i \in \pi}\left\|\*z_i-\boldsymbol{\mu}_\pi \right\|^2, \mathcal{M}_\text{inter-class}(\Pi, Z) \triangleq  \sum_{\pi\in \Pi}  |\pi| \left\|\boldsymbol{\mu}_\pi-\boldsymbol{\mu}_\Pi\right\|^2.
\label{eq:sorl_def_align_sep_measure}
\end{equation}
Strong clustering results translate into low $\mathcal{M}_\text{intra-class}$ and high $\mathcal{M}_\text{inter-class}$. Thus we define the \textbf{K-means measure} as:
\begin{equation}
    \mathcal{M}_{kms}(\Pi, Z) \triangleq \mathcal{M}_\text{intra-class}(\Pi, Z) / \mathcal{M}_\text{inter-class}(\Pi, Z). 
\label{eq:sorl_def_kms_measure}
\end{equation}
We also formally show in Theorem~\ref{th:sorl_sup_cluster_err_bound} (Appendix) that the K-means clustering error\footnote{ {It is theoretically inconvenient to directly analyze the clustering error since it is a non-differentiable target.} } is asymptotically equivalent to the K-means measure we defined above. 

\subsubsection{Perturbation in Clustering Performance}
With the evaluation target defined above, our main analysis will revolve around analyzing \textit{``how the extra label information help reduces} $\mathcal{M}_{kms}(\Pi, Z)$''. %
Formally, we investigate the following error difference, as a result of added label information:
$$\Delta_{kms}(\delta) = \mathcal{M}_{kms}(\Pi, Z) - \mathcal{M}_{kms}(\Pi, Z(\delta)), $$
where the closed-form solution is given by the following theorem. Positive $\Delta_{kms}(\delta)$ means improved clustering, as a result of adding labeling information.

\begin{tcolorbox}[colback=gray!5!white]
\begin{theorem} (Main result.) Denote $V_{\varnothing}^{(u)} \in \mathbb{R}^{N \times (N-k)}$ as the \textit{null space} of $V_k^{(u)}$ and $\Tilde{A}_k^{(u)} = V_k^{(u)} \Sigma_k^{(u)} V_k^{(u)\top}$ as the rank-$k$ approximation for $\Tilde{A}^{(u)}$.  
 Given $\delta, \eta_{1} > 0 $ and let 
 $\mathcal{G}_k$ as the spectral gap between $k$-th and $k+1$-th singular values of $\Tilde{A}^{(u)}$, we have: 
\begin{align*}
    \Delta_{kms}(\delta) &=  \delta \eta_{1} \operatorname{Tr} \left( \Upsilon \left(V_k^{(u)} V_k^{(u)\top} \mathfrak{l} \mathfrak{l}^{\top}(I +   V_\varnothing^{(u)}  V_\varnothing^{(u)\top}) - 2\Tilde{A}_k^{(u)} diag(\mathfrak{l}) \right)\right) \\ &+ O(\frac{1}{\mathcal{G}_k} + \delta^2),
\end{align*}
where $diag(\cdot)$ converts the vector to the corresponding diagonal matrix and $\Upsilon \in \mathbb{R}^{N\times N}$ is a matrix encoding the \textbf{ground-truth clustering structure} in the way that $\Upsilon_{xx'} > 0$ if $x$ and $x'$ has the same label and  $\Upsilon_{xx'} < 0$ otherwise. The concrete form and the proof are in Appendix~\ref{sec:sorl_sup_proof_main}.  
\label{th:sorl_main}
\end{theorem}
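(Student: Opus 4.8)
\textbf{Proof strategy for Theorem~\ref{th:sorl_main}.} The plan is to treat the adjacency matrix $A(\delta) = \eta_u A^{(u)} + \delta\, \mathfrak{l}\mathfrak{l}^\top$ as a rank-one perturbation of the unlabeled graph and propagate this perturbation through three layers: (i) from $A(\delta)$ to the normalized adjacency matrix $\Tilde{A}(\delta) = D(\delta)^{-1/2}A(\delta)D(\delta)^{-1/2}$; (ii) from $\Tilde{A}(\delta)$ to its top-$k$ singular subspace, hence to the closed-form representation $Z(\delta) = [D(\delta)]^{-1/2}V_k(\delta)\sqrt{\Sigma_k(\delta)}$ obtained via Theorem~\ref{th:sorl_orl-scl}; and (iii) from $Z(\delta)$ to the K-means measure $\mathcal{M}_{kms}(\Pi, Z(\delta))$ defined in Equation~\eqref{eq:sorl_def_kms_measure}. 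At each layer I would Taylor-expand in $\delta$ around $\delta = 0$, keeping the first-order term and collecting everything of order $\delta^2$ (and the subspace-rotation error of order $1/\mathcal{G}_k$ coming from classical Davis--Kahan / Weyl-type perturbation bounds on the spectral gap) into the $O(1/\mathcal{G}_k + \delta^2)$ remainder.

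First I would handle the normalization. Writing $D(\delta) = D^{(u)} + \delta\, \mathrm{diag}(\mathfrak{l}\mathfrak{l}^\top\mathbf{1})$ (up to the $\eta_u$ scaling, which I fold into constants), a first-order expansion of $D(\delta)^{-1/2}$ gives $\Tilde{A}(\delta) = \Tilde{A}^{(u)} + \delta\bigl( \text{normalized rank-one term} - \text{degree-correction terms}\bigr) + O(\delta^2)$; the degree-correction pieces are where the $diag(\mathfrak{l})$ factor and the $-2\Tilde{A}_k^{(u)}\,diag(\mathfrak{l})$ structure in the statement originate. Next, for the spectral step, I would use the standard first-order perturbation formula for the projector onto the top-$k$ singular subspace: if $P_k^{(u)} = V_k^{(u)}V_k^{(u)\top}$ is the unperturbed projector and $P_\varnothing^{(u)} = V_\varnothing^{(u)}V_\varnothing^{(u)\top}$ its complement, then the perturbed rank-$k$ approximation $\Tilde{A}_k(\delta)$ changes, to first order, by a term that is sandwiched between $P_k^{(u)}$ and $(I + P_\varnothing^{(u)})$ — this is exactly the combination $V_k^{(u)}V_k^{(u)\top}\mathfrak{l}\mathfrak{l}^\top(I + V_\varnothing^{(u)}V_\varnothing^{(u)\top})$ appearing in the theorem, and the error of this approximation is controlled by $1/\mathcal{G}_k$. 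Finally, I would substitute $Z(\delta)$ into the ratio $\mathcal{M}_{\text{intra-class}}/\mathcal{M}_{\text{inter-class}}$, expand the quotient to first order in $\delta$, and recognize that the combination of pairwise distance terms $\sum \|\*z_i - \*z_j\|^2$ weighted by whether $i,j$ share a label is precisely $\operatorname{Tr}(\Upsilon\, \cdot)$ for the signed clustering matrix $\Upsilon$ with $\Upsilon_{xx'}>0$ for same-label pairs and $<0$ otherwise. Assembling the three expansions and bookkeeping the constant $\eta_1$ (absorbing $\eta_u$-dependent scalars) yields the claimed formula.

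The main obstacle, I expect, will be the second step — tracking the perturbation of the \emph{singular subspace} (not just the eigenvalues) cleanly enough to land on the exact asymmetric sandwich $P_k^{(u)} \mathfrak{l}\mathfrak{l}^\top (I + P_\varnothing^{(u)})$ rather than a symmetrized surrogate, while simultaneously keeping the normalization-induced $diag(\mathfrak{l})$ corrections in the right place. A subtlety is that $\Tilde{A}(\delta)$ is symmetric PSD in this setup (so singular vectors coincide with eigenvectors, as the toy example already exploited when $\tau_1 \gg \tau_c > \tau_s$), which simplifies matters; but the bookkeeping between the projector perturbation and the degree-matrix perturbation is delicate because both scale linearly in $\delta$ and their cross-terms must be sorted into first-order versus $O(\delta^2)$. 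I would isolate this by first proving a self-contained lemma: for a symmetric PSD matrix with spectral gap $\mathcal{G}_k$ perturbed by $\delta B$, the rank-$k$ approximation satisfies $\Tilde{A}_k(\delta) = \Tilde{A}_k + \delta(P_k B P_\varnothing + P_\varnothing B P_k + P_k B P_k) + O(\delta^2 + \|B\|/\mathcal{G}_k)$, and then apply it with $B$ equal to the (normalized) rank-one increment. The remaining steps are essentially routine trace algebra, and I would relegate the explicit form of $\Upsilon$ and the constant-chasing to the appendix as the statement indicates.
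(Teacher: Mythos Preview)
Your proposal is correct and follows essentially the same three-layer first-order perturbation strategy as the paper: expand $D(\delta)^{-1/2}$, then the top-$k$ spectral projector of $\Tilde{A}(\delta)$, then the K-means ratio, all at $\delta=0$. The paper's implementation differs only in packaging: it uses the explicit eigenvalue/projector derivative formulas $[\lambda_j]'=\operatorname{Tr}(\Phi_j[\Tilde A]')$ and $[\Phi_j]'=(\lambda_j I-\Tilde A^{(u)})^\dagger[\Tilde A]'\Phi_j+\text{sym.}$ (citing Greenbaum et al.) rather than your consolidated rank-$k$ lemma, obtains the $O(1/\mathcal{G}_k)$ term via the geometric series $\frac{\lambda_j}{\lambda_j-\lambda_i}=1+\sum_p(\lambda_i/\lambda_j)^p$ rather than Davis--Kahan, and derives the concrete form $\Upsilon=(1+\eta_2)H_\Pi-I-\frac{\eta_2}{N}\mathbf{1}\mathbf{1}^\top$ directly from the quotient rule on $\mathcal{M}_{\text{intra}}/\mathcal{M}_{\text{inter}}$ (with $\eta_1=1/\mathcal{M}_{\text{inter}}$, $\eta_2=\mathcal{M}_{\text{intra}}/\mathcal{M}_{\text{inter}}$) rather than from a pairwise-distance interpretation.
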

\end{tcolorbox}
 Theorem~\ref{th:sorl_main} is more general but less intuitive to understand. To gain a better insight, we introduce Theorem~\ref{th:sorl_main_simp} which provides more direct implications. {We provide the justification of the  assumptions and the formal proof in Appendix~\ref{sec:sorl_sup_proof_main_simp}.}

\begin{tcolorbox}[colback=gray!5!white]
\begin{theorem} (Intuitive result.) 
Assuming the spectral gap $\mathcal{G}_k$ is sufficiently large and $\mathfrak{l}$ lies in the linear span of $V_k^{(u)}$. We also assume $\forall \pi_c \in \Pi, \forall i \in \pi_c, \mathfrak{l}_{(i)} =: \mathfrak{l}_{\pi_c}$ which represents the \textit{connection between class $c$ to the labeled data}.
 Given $\delta, \eta_{1}, \eta_{2} > 0 $, 
 we have: 
\begin{align*}
    &\Delta_{kms}(\delta) \geq  \delta \eta_{1}\eta_{2} \sum_{\pi_c \in \Pi}  |\pi_c| \mathfrak{l}_{\pi_c} \Delta_{\pi_c}(\delta),
\end{align*}
where 
\begin{equation*}
    \Delta_{\pi_c}(\delta) = \tikzmarknode{connect}{\highlight{myblue}{$(\mathfrak{l}_{\pi_c} - \frac{1}{N})$}} - (\frac{2N-2|\pi_c|}{N})(\tikzmarknode{cmp}{\highlight{red}{$\mathbb{E}_{i \in \pi_c} \mathbb{E}_{j \in \pi_c}\*z_i^{\top}\*z_j$}} - \tikzmarknode{div}{\highlight{orange}{$\mathbb{E}_{i \in \pi_c} \mathbb{E}_{j \notin \pi_c}\*z_i^{\top}\*z_j$}} ).~~~~~~~~
\end{equation*}
\begin{tikzpicture}[overlay,remember picture,>=stealth,nodes={align=left,inner ysep=1pt},<-]
    \path (connect.north) ++ (0,0.5em) node[anchor=south west,color=blue!67] (cnnt){\textit{Connection from class $c$ to the labeled data.}};
    \draw [color=blue!87](connect.north) |- ([xshift=-0.3ex,color=myblue]cnnt.south east);
    \path (cmp.south) ++ (0,0.0em) node[anchor=north east,color=red!67] (c){\textit{ Intra-class similarity }};
    \draw [color=red!87](cmp.south) |- ([xshift=-0.3ex,color=red] c.south west);
    \path (div.south) ++ (-0.7em,-1.3em) node[anchor=north east,color=orange!99] (c){\textit{ Inter-class similarity}};
    \draw [color=orange!87](div.south) |- ([xshift=-0.3ex,color=orange] c.south west);
\end{tikzpicture}
\label{th:sorl_main_simp}
\end{theorem}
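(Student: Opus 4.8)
\textbf{Proof proposal for Theorem~\ref{th:sorl_main_simp}.}

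The plan is to derive Theorem~\ref{th:sorl_main_simp} as a simplification of Theorem~\ref{th:sorl_main} under the two stated assumptions: (i) the spectral gap $\mathcal{G}_k$ is large enough that the $O(1/\mathcal{G}_k)$ term is negligible, and (ii) $\mathfrak{l}$ lies in the linear span of $V_k^{(u)}$, so that $V_k^{(u)} V_k^{(u)\top}\mathfrak{l} = \mathfrak{l}$ and, crucially, $V_\varnothing^{(u)} V_\varnothing^{(u)\top}\mathfrak{l} = \mathbf{0}$ (the null-space projection kills $\mathfrak{l}$). First I would substitute these identities into the leading term of Theorem~\ref{th:sorl_main}, which collapses the factor $V_k^{(u)} V_k^{(u)\top}\mathfrak{l}\mathfrak{l}^{\top}(I + V_\varnothing^{(u)} V_\varnothing^{(u)\top})$ down to $\mathfrak{l}\mathfrak{l}^{\top}$, since $\mathfrak{l}^{\top}(I + V_\varnothing^{(u)} V_\varnothing^{(u)\top}) = \mathfrak{l}^{\top} + (V_\varnothing^{(u)} V_\varnothing^{(u)\top}\mathfrak{l})^{\top} = \mathfrak{l}^{\top}$. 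So the trace reduces to $\operatorname{Tr}(\Upsilon(\mathfrak{l}\mathfrak{l}^{\top} - 2\Tilde{A}_k^{(u)} \operatorname{diag}(\mathfrak{l})))$, plus lower-order terms.

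Next I would expand this trace explicitly using the block/clustering structure. Using the piecewise-constant assumption $\mathfrak{l}_{(i)} = \mathfrak{l}_{\pi_c}$ for $i \in \pi_c$, the first piece $\operatorname{Tr}(\Upsilon\mathfrak{l}\mathfrak{l}^{\top}) = \sum_{x,x'} \Upsilon_{xx'}\mathfrak{l}_x\mathfrak{l}_{x'}$ splits into an intra-class sum (where $\Upsilon_{xx'}>0$) and an inter-class sum (where $\Upsilon_{xx'}<0$); with the normalization encoded in $\Upsilon$ — recall $\Upsilon$ encodes $\boldsymbol\mu_\Pi$ via the $-1/N$ type centering — this will produce the $(\mathfrak{l}_{\pi_c} - \tfrac{1}{N})$ factor after collecting terms for each cluster. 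For the second piece, $\operatorname{Tr}(\Upsilon\Tilde{A}_k^{(u)}\operatorname{diag}(\mathfrak{l}))$, I would use that $\Tilde{A}_k^{(u)}$ relates to the Gram matrix of the (unlabeled) features: up to the diagonal scaling by $D^{(u)}$, $(\Tilde{A}_k^{(u)})_{xx'}$ is proportional to $\*z_x^{\top}\*z_{x'}$, so this trace becomes a weighted sum of $\*z_i^{\top}\*z_j$ terms, weighted by $\mathfrak{l}$ and by the sign pattern of $\Upsilon$. Collecting the intra-class ($i,j \in \pi_c$) and inter-class ($i\in\pi_c, j\notin\pi_c$) contributions yields the $\mathbb{E}_{i\in\pi_c}\mathbb{E}_{j\in\pi_c}\*z_i^{\top}\*z_j - \mathbb{E}_{i\in\pi_c}\mathbb{E}_{j\notin\pi_c}\*z_i^{\top}\*z_j$ difference, with the combinatorial prefactor $\tfrac{2N-2|\pi_c|}{N}$ arising from counting how many inter-class pairs each sample participates in. Finally I would absorb the constants $\eta_u$ (which appears through $A^{(u)} = \eta_u A^{(u)}$ in the perturbation expansion and I would relabel as part of $\eta_1, \eta_2$) and drop the $O(1/\mathcal{G}_k + \delta^2)$ remainder under assumption (i), converting the exact expression into the stated lower bound by noting the discarded terms are non-negative or controllably small.

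The main obstacle I anticipate is getting the combinatorial bookkeeping exactly right when expanding $\operatorname{Tr}(\Upsilon \Tilde{A}_k^{(u)}\operatorname{diag}(\mathfrak{l}))$ — specifically, tracking how the ground-truth structure matrix $\Upsilon$ (with its built-in centering by $\boldsymbol\mu_\Pi$), the rank-$k$ feature Gram matrix, and the cluster sizes $|\pi_c|$ interact. The signs and the $1/N$ versus $1/|\pi_c|$ normalizations have to be threaded carefully through both the $\mathcal{M}_\text{intra-class}$ and $\mathcal{M}_\text{inter-class}$ contributions to the K-means measure, and it is easy to be off by a factor of $|\pi_c|$ or a sign. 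A secondary difficulty is justifying that the error terms are genuinely lower-order: the $O(1/\mathcal{G}_k)$ bound comes from Davis–Kahan-type perturbation theory for the singular subspace (standard given a large spectral gap), and the $O(\delta^2)$ from second-order perturbation expansion of $A(\delta)$; both were presumably established en route to Theorem~\ref{th:sorl_main}, so here I would only need to check that passing to the lower bound does not reintroduce them at leading order. The rest — substituting the span assumption, distributing the trace over the block structure — is routine linear algebra.
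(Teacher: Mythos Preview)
Your high-level plan --- use the span assumption to collapse the factor in Theorem~\ref{th:sorl_main} to $\operatorname{Tr}\bigl(\Upsilon(\mathfrak{l}\mathfrak{l}^\top - 2\Tilde{A}_k^{(u)}\operatorname{diag}(\mathfrak{l}))\bigr)$, then expand over the cluster structure --- matches the paper. But there is a genuine gap in how you obtain the \emph{inequality}.

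You write that the lower bound comes from ``noting the discarded terms are non-negative or controllably small,'' apparently meaning the $O(1/\mathcal{G}_k+\delta^2)$ remainder. That remainder gives only an asymptotic equality; it is not what produces the one-sided bound. The paper's actual mechanism is different and specific. It uses the explicit form
\[
\Upsilon = (1+\eta_2)H_\Pi - I - \tfrac{\eta_2}{N}\mathbf{1}_N\mathbf{1}_N^\top,
\]
where $\eta_1 = 1/\mathcal{M}_{\text{inter-class}}$ and $\eta_2 = \mathcal{M}_{\text{intra-class}}/\mathcal{M}_{\text{inter-class}}$ (so your plan to ``relabel $\eta_u$ as part of $\eta_1,\eta_2$'' is a misreading --- these constants come from the K-means ratio, not from the graph weights). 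Splitting the trace along these three pieces, the $H_\Pi$ and $-I$ contributions combine to
\[
2\sum_{\pi\in\Pi}|\pi|\,\mathfrak{l}_\pi\Bigl(\|\boldsymbol\mu_\pi\|_2^2 - \mathbb{E}_{i\in\pi}\|\*z_i\|_2^2\Bigr),
\]
which is $\le 0$ by Jensen's inequality applied as $\|\mathbb{E}_{i\in\pi}\*z_i\|_2^2 \le \mathbb{E}_{i\in\pi}\|\*z_i\|_2^2$. Dropping this nonpositive block is exactly what converts the expression into the stated lower bound; what remains is $\eta_2$ times the $H_\Pi$ piece plus $\eta_2$ times the $\mathbf{1}\mathbf{1}^\top$ piece, which after the routine rewriting $\boldsymbol\mu_\pi^\top\boldsymbol\mu_\pi - \boldsymbol\mu_\pi^\top\boldsymbol\mu_\Pi = \tfrac{N-|\pi|}{N}(\text{intra}-\text{inter})$ gives the claimed $\Delta_{\pi_c}$.

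Without the Jensen step your expansion will carry an extra $\mathbb{E}_{i\in\pi}\|\*z_i\|_2^2$ term that does not cancel, and you will not recover the clean form in the theorem. The ``sign pattern of $\Upsilon$'' viewpoint you sketch is not sharp enough to isolate this; you need the $H_\Pi / I / \mathbf{1}\mathbf{1}^\top$ decomposition explicitly.
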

\end{tcolorbox}
\textbf{Implications.} In Theorem~\ref{th:sorl_main_simp}, we define the \textbf{class-wise perturbation} of the K-means measure as $\Delta_{\pi_c}(\delta)$. This way, we can interpret the effect of adding labels for a specific class $c$. If we desire $\Delta_{\pi_c}(\delta)$ to be large, the sufficient condition is that 
\begin{center}
    \textit{connection of class c to the labeled data > intra-class similarity - inter-class similarity}.
\end{center}
\vspace{-0.15cm}
    We use examples in Figure~\ref{fig:sorl_teaser} to epitomize the core idea. Specifically, our unlabeled samples consist of  three underlying classes: traffic lights (known), apples (novel), and flowers (novel). \textbf{(a)} For unlabeled traffic lights from \emph{known classes} which are strongly connected to the labeled data, adding labels to traffic lights can largely improve the clustering performance; \textbf{(b)} For \emph{novel classes} like apples, it may also help  when they have a strong connection to the traffic light, and their intra-class similarity is not as strong (due to different colors); \textbf{(c)} However, labeled data may offer little improvement in clustering the flower class, due to the minimal connection to the labeled data and that flowers' self-clusterability is already strong.

\section{Empirical Validation of Theory~}
\label{sec:sorl_exp}

Beyond theoretical insights, we show empirically that SORL is effective on standard benchmark image
classification datasets CIFAR-10/100~\citep{krizhevsky2009learning}. 
Following the seminal work ORCA~\citep{cao2022openworld}, classes are divided into 50\% known and 50\% novel classes. We then use 50\% of samples from the known classes as the labeled dataset, and the rest as the unlabeled set. 
We follow the evaluation strategy in~\citep{cao2022openworld} and report the following metrics: (1) classification accuracy on known classes, (2) clustering accuracy on the novel data, and (3) overall accuracy on all classes. 
More experiment details are in Appendix~\ref{sec:sorl_sup_exp_details}. 

\begin{table}[htb]

\centering
\vspace{-0.6cm}
\caption[Main Results of comparing baselines in open-world representation learning.]{\small Main Results. Mean and std are estimated on five different runs. Baseline numbers are  from ~\citep{sun2023opencon,cao2022openworld}.}  
\resizebox{0.99\linewidth}{!}{
\begin{tabular}{lllllll}
\toprule
\multirow{2}{*}{\textbf{Method}} & \multicolumn{3}{c}{\textbf{CIFAR-10}} & \multicolumn{3}{c}{\textbf{CIFAR-100}} \\
 & \textbf{All} & \textbf{Novel} & \textbf{Known} & \textbf{All} & \textbf{Novel} & \textbf{Known} \\ \midrule
\textbf{FixMatch}~\citep{alex2020fixmatch} & 49.5 & 50.4 & 71.5  & 20.3 & 23.5 & 39.6 \\
\textbf{DS$^{3}$L}~\citep{guo2020dsl} & 40.2 & 45.3 & 77.6  & 24.0 & 23.7 & 55.1\\
\textbf{CGDL}~\citep{sun2020cgdl} & 39.7 & 44.6 & 72.3  & 23.6 & 22.5 & 49.3 \\
\textbf{DTC}~\citep{Han2019dtc} & 38.3 & 39.5 & 53.9 & 18.3 & 22.9 & 31.3  \\
\textbf{RankStats}~\citep{zhao2021rankstat} & 82.9 & 81.0 & 86.6  & 23.1 & 28.4 & 36.4 \\
\textbf{SimCLR}~\citep{chen2020simclr} & 51.7 & 63.4 & 58.3 & 22.3 & 21.2 & 28.6 \\ \midrule
\textbf{ORCA}~\citep{cao2022openworld} & 88.3$ ^{\pm{0.3}} $ & 87.5$ ^{\pm{0.2}} $ & 89.9$ ^{\pm{0.4}} $ & 47.2$ ^{\pm{0.7}} $ & 41.0$ ^{\pm{1.0}} $ & 66.7$ ^{\pm{0.2}} $  \\
\textbf{GCD}~\citep{vaze22gcd} & 87.5$ ^{\pm{0.5}} $ & 86.7$ ^{\pm{0.4}} $ & 90.1$ ^{\pm{0.3}} $ & 46.8$ ^{\pm{0.5}} $ & 43.4$ ^{\pm{0.7}} $ & \textbf{69.7}$ ^{\pm{0.4}} $   \\
\textbf{SORL (Ours)} & \textbf{93.5} $ ^{\pm{1.0}} $ & \textbf{92.5} $ ^{\pm{0.1}} $  & \textbf{94.0}$ ^{\pm{0.2}} $  & \textbf{56.1} $ ^{\pm{0.3}} $ & \textbf{52.0} $ ^{\pm{0.2}} $  & 68.2$ ^{\pm{0.1}} $
\\ \bottomrule
\end{tabular}}
\label{tab:sorl_main}
\end{table}

\vspace{0.1cm} \noindent \textbf{SORL achieves competitive performance.} 
Our proposed loss SORL is amenable to the theoretical understanding, which is our primary goal of this work. Beyond theory, we show that SORL is equally desirable in empirical performance. In particular, SORL displays competitive performance compared to existing methods, as evidenced in Table~\ref{tab:sorl_main}. Our comparison covers an extensive collection of very recent algorithms developed for this problem, including ORCA~\citep{cao2022openworld}, GCD~\citep{vaze22gcd}. We also compare methods in related problem domains: 
(1) Semi-Supervised Learning~\citep{alex2020fixmatch,guo2020dsl,sun2020cgdl}, (2) Novel Class Discovery~\citep{Han2019dtc,zhao2021rankstat}, (3) common representation learning method SimCLR~\citep{chen2020simclr}.
In particular, on CIFAR-100, we improve upon the best baseline ORCA by \textbf{8.9}\% in terms of overall accuracy.
 Our result further validates that putting analysis on SORL is appealing for both theoretical and empirical reasons.

\section{Broader Impact~}

From a theoretical perspective, our graph-theoretic framework can facilitate and deepen the understanding of other representation learning methods that commonly involve the notion of positive/negative pairs. In Appendix~\ref{sec:sorl_sup_simclr_analysis}, \textit{we exemplify how our framework can be potentially generalized to other common contrastive loss functions}~\citep{van2018cpc, khosla2020supcon, chen2020simclr}, and baseline methods that are tailored for the open-world representation learning problem (e.g., GCD~\citep{vaze22gcd}). Hence, we believe our theoretical framework has a broader utility and significance.

From a practical perspective, our work can directly impact and  benefit many real-world applications, where unlabeled data are produced at an incredible rate today. Major companies exhibit a strong need for making their machine learning systems and services amendable for the open-world setting but lack fundamental and systematic knowledge. Hence, our research advances the understanding of open-world machine learning and helps the industry improve ML systems by discovering insights and structures from unlabeled data.

\section{Additional Related Work~}
\label{sec:sorl_related}

\vspace{0.1cm} \noindent \textbf{Semi-supervised learning.} 
Semi-supervised learning (SSL) is a classic problem in machine learning. SSL typically assumes the same class space between labeled and unlabeled data, and hence remains closed-world. A rich line of empirical works~\citep{chapelle2006ssl, lee2013pseudo,sajjadi2016regularization,laine2016temporal,zhai2019s4l,rebuffi2020semi,alex2020fixmatch,guo2020dsl, chen2020semi, yu2020multi,park2021opencos, saito2021openmatch, huang2021trash, yang2022classaware,liu2010large} and theoretical efforts~\citep{oymak2021theoretical,sokolovska2008asymptotics,singh2008unlabeled,balcan2005pac,rigollet2007generalization,wasserman2007statistical,niyogi2013manifold} have been made to address this problem. An important class of SSL methods is to represent data as graphs and predict labels by aggregating proximal nodes' labels~\citep{zhu2002learning,zhang2009prototype,wang2006label,fergus2009semi,jebara2009graph,zhou2004semi,argyriou2005combining}. Different from classic SSL, we allow its
semantic space to cover both known and novel classes. Accordingly, we contribute a new graph-theoretic framework tailored to the open-world setting, and reveal new insights on how the labeled data can benefit the clustering performance on both known and novel classes. %

\vspace{0.1cm} \noindent \textbf{Spectral graph theory.} 
Spectral graph theory is a classic research problem~\cite{von2007tutorial,chung1997spectral,cheeger2015lower,kannan2004clusterings,lee2014multiway,mcsherry2001spectral}, which aims to partition the graph by studying the eigenspace of the adjacency matrix. The spectral graph theory is also widely applied in machine learning~\cite{ng2001spectral,shi2000normalized,blum2001learning,zhu2003semi,argyriou2005combining,shaham2018spectralnet,sun2023nscl}. Recently, ~\citet{haochen2021provable} derive a spectral contrastive loss from the factorization of the graph's adjacency matrix which facilitates theoretical study in unsupervised domain adaptation~\cite{shen2022connect,haochen2022beyond}. In these works, the graph's formulation is exclusively based on unlabeled data. Sun et al.~\cite{sun2023nscl} later expanded this spectral contrastive loss approach to cater to learning environments that encompass both labeled data from known classes and unlabeled data from novel ones. In this chapter, our adaptation of the loss function from ~\cite{sun2023nscl} is tailored to address the open-world representation learning challenge, considering known class samples within unlabeled data. 

\vspace{0.1cm} \noindent \textbf{Theory for self-supervised learning.} A proliferation of works in self-supervised representation learning demonstrates the empirical success~\citep{van2018cpc,chen2020simclr,caron2020swav,he2019moco,zbontar2021barlow,bardes2021vicreg,chen2021exploring,haochen2021provable} with the theoretical foundation by providing provable guarantees on the representations learned by contrastive learning for linear probing~\citep{arora2019theoretical,lee2021predicting,tosh2021contrastive,tosh2021contrastive2,balestriero2022contrastive,shi2023the}. From the graphic view, ~\citet{shen2022connect,haochen2021provable,haochen2022beyond}  model the pairwise relation by the augmentation probability and provided error analysis of the downstream tasks. 
The existing body of work has mostly focused on \emph{unsupervised learning}. 
In this chapter, we systematically investigate how the label information can change the representation manifold and affect the downstream clustering performance on both known and novel classes.

\section{Summary~}
\label{sec:sorl_summary}

In this chapter, we present a graph-theoretic framework for open-world representation learning. The framework facilitates the understanding of how representations change as a result of adding labeling information to the graph. Specifically, we learn representation through Spectral Open-world Representation Learning (SORL). Minimizing this objective is equivalent to factorizing the graph’s 
adjacency matrix, which allows us to analyze the clustering error difference between having vs. excluding labeled data. Our main results suggest that the clustering error can be significantly reduced if the
connectivity to the labeled data is stronger than their self-clusterability. Our framework is also empirically appealing to use since it achieves competitive performance on par with existing baselines. 
We also hope our framework and insights can inspire the broader representation learning community to understand the role of labeling prior.


\chapter{OpenCon: Open-world Contrastive Learning~}
\label{sec:opencon}

\paragraph{Publication Statement.} This chapter is a joint work with Yixuan Li. The paper version of this chapter appeared in TMLR23~\citep{sun2023opencon}. 

\noindent\rule{\textwidth}{1pt}

In the preceding chapters, we have established a solid theoretical foundation for open-world representation learning. As we delve deeper into this thesis, we unveil a pioneering learning framework, dubbed Open-World Contrastive Learning (OpenCon). This cutting-edge approach provides solutions to the empirical challenges stemming from the theoretical concepts previously discussed. OpenCon adeptly grapples with the complexities of constructing compact representations for both known and novel classes, while also facilitating novelty discovery along the way. The efficacy of OpenCon is demonstrated through rigorous testing on challenging benchmark datasets, where it exhibits superior performance.
On the ImageNet dataset, OpenCon significantly outperforms the current best method by {11.9}\% and {7.4}\% on the novel and overall classification accuracy, respectively. Theoretically, OpenCon can be rigorously interpreted from an EM algorithm perspective---minimizing our contrastive loss partially maximizes the likelihood by clustering similar samples in the embedding space.


\section{Introduction~}
\label{sec:opencon_intro}

Modern machine learning methods have achieved remarkable success~\citep{sun2017faster, van2018cpc, chen2020simclr, caron2020swav, he2019moco, zheng2021weakcl, wu2021ngc, cha2021co2l, cui2021parametriccl, jiang2021improving, gao2021fewshot, zhong2021ncl, zhao2021rankstat, fini2021unified,tsai2022wcl2, zhang2022semi,wang2022pico}. Noticeably, the vast majority of learning algorithms have been driven by the
closed-world setting, where the classes are assumed stationary and unchanged. This assumption, however, rarely holds for models deployed in the wild. 
 One important  characteristic of open world is that
the model will naturally encounter novel classes. 
Considering a realistic scenario, where a machine learning model for recognizing products in e-commerce may encounter brand-new products together with old products. Similarly, an autonomous driving
model can run into novel objects on the road, in addition to known ones. 
Under the setting, the model should ideally learn to distinguish not only the known classes, but also the novel categories. This problem is  proposed as open-world semi-supervised learning~\citep{cao2022openworld} or generalized category discovery~\citep{vaze22gcd}. Research efforts have only started  very recently to address this important and realistic problem.

Formally, we are given a labeled
training dataset $\mathcal{D}_l$ as well as an unlabeled dataset $\mathcal{D}_u$. The labeled dataset contains samples that belong to
a set of {known} classes, while the unlabeled dataset has a mixture of samples from \emph{both the known and novel classes}. In practice, such unlabeled in-the-wild data can be collected
almost for free upon deploying a model
in the open world, and thus is available in abundance. Under the setting, our goal is to learn distinguishable representations for both known and novel classes
simultaneously. While this setting naturally suits many real-world
applications, it also poses unique challenges due to: (a){ the lack of clear separation between known vs. novel data in} $\mathcal{D}_{u}$, and (b){ the lack of supervision for data in novel classes.} Traditional representation learning methods are not designed for this new setting. For example, supervised contrastive learning (SupCon)~\citep{khosla2020supcon} only assumes the labeled set $\mathcal{D}_l$, without considering the unlabeled data $\mathcal{D}_u$. Weakly supervised contrastive learning~\citep{zheng2021weakcl} assumes the same classes in labeled and unlabeled data, hence remaining closed-world and less generalizable to novel samples. Self-supervised learning~\citep{chen2020simclr} relies completely on the unlabeled set $\mathcal{D}_u$ and 
does not utilize the availability of the labeled dataset $\mathcal{D}_l$. 

\begin{figure}[t]
    \centering
    \includegraphics[width=0.97\linewidth]{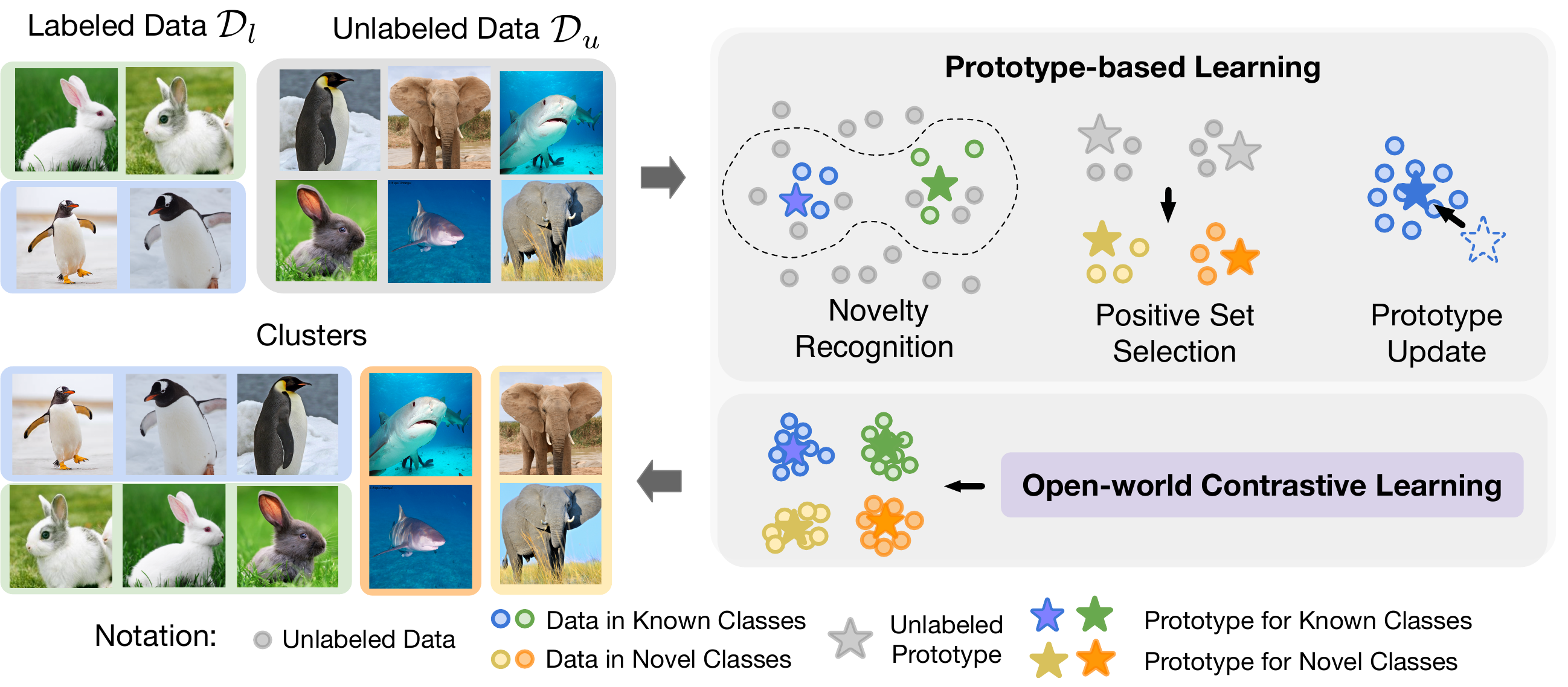}
    \caption[Illustration of our learning framework.]{Illustration of our learning framework \emph{Open-world Contrastive Learning} (OpenCon).  The model is trained on a labeled dataset $\mathcal{D}_l$ of known classes, and an unlabeled dataset $\mathcal{D}_u$ (with samples from both known and novel classes). OpenCon aims to learn distinguishable representations for both known (blue and green) and novel (yellow and orange) classes simultaneously. See Section~\ref{sec:opencon_method} for details.}
    \label{fig:opencon_teaser}
\end{figure}

Targeting these  challenges, we formally introduce a new learning framework, \emph{open-world contrastive learning} (dubbed \textbf{OpenCon}). OpenCon is designed to produce a compact representation space for both known and novel classes, and facilitates novelty discovery along the way. Key to our framework, we propose a novel prototype-based learning strategy, which encapsulates two components. First, we leverage the prototype vectors to separate known vs. novel classes in unlabeled data $\mathcal{D}_u$. The prototypes can be viewed as a set of representative embeddings, one for each class, and are updated 
by the evolving representations. 
Second, to mitigate the challenge of lack of supervision, we generate pseudo-positive pairs for contrastive comparison. We define the positive set to be those examples carrying the same {approximated} label, which is predicted based on the closest class prototype. In effect, the loss encourages closely aligned representations to {all} samples from the same {predicted} class, rendering a compact clustering of the representation. 

Our framework offers several compelling advantages. 
\textbf{(1)} Empirically, OpenCon establishes strong performance on challenging benchmark datasets, outperforming existing baselines by a significant margin (Section~\ref{sec:opencon_experiments}). OpenCon is also competitive without knowing the number of novel classes in advance---achieving similar or even slightly better performance compared to the oracle (in which the number of classes is given). 
\textbf{(2)} Theoretically, we demonstrate that our prototype-based learning can be rigorously interpreted from an Expectation-Maximization (EM) algorithm perspective. 
\textbf{(3)} Our framework is end-to-end trainable, and is compatible with both CNN-based and Transformer-based architectures. The \textbf{main contributions} are:
\begin{enumerate}

    \item We propose a novel framework, open-world contrastive learning (OpenCon), tackling a largely unexplored problem in representation learning. As an integral part
of our framework, we also introduce a prototype-based learning algorithm, which
facilitates novelty discovery and learning distinguishable  representations. 

\item Empirically, OpenCon establishes competitive performance on challenging tasks. For example, on the ImageNet dataset, OpenCon substantially outperforms the current best method ORCA~\citep{cao2022openworld} by \textbf{11.9}\% and \textbf{7.4}\% in terms of novel and overall accuracy.

\item  We provide insights through extensive ablations, showing the effectiveness of components in our framework. Theoretically, we show a formal connection with the EM algorithm---minimizing our contrastive loss partially maximizes the likelihood by clustering similar samples
in the embedding space.
\end{enumerate}


\section{Methodology~}
\label{sec:opencon_method}

We formally introduce a new learning framework, \emph{open-world contrastive learning} (dubbed OpenCon), which is designed to produce compact representation space for both known and novel classes. The open-world setting posits unique challenges for learning effective representations, namely due to (1) the lack of the separation between known vs. novel data in $\mathcal{D}_{u}$, (2) the lack of supervision for data in novel classes. Our learning framework targets these challenges. 

\subsection{Background: Generalized Contrastive Loss}

We start by defining a generalized contrastive loss that can characterize the family of contrastive losses. We will later instantiate the formula to define our open-world contrastive loss (Section~\ref{sec:opencon_prototype} and Section~\ref{sec:opencon_losses}). Specifically, we consider a deep neural network encoder $\phi:\mathcal{X}\mapsto \mathbb{R}^d$ that maps the input $\bx$ to a $L_2$-normalized feature embedding $\phi(\bx)$.  Contrastive losses operate on the normalized feature $\bz = \phi(\bx)$. In other words, the features have unit norm and lie on the unit hypersphere. For a given anchor point $\bx$, we define the per-sample contrastive loss:
\begin{equation}
    \label{eq:opencon_general_cl_loss}
    \mathcal{L}_\phi\big(\bx;\tau,\mathcal{P}(\bx),\mathcal{N}(\bx)\big) = -\frac{1}{|\mathcal{P}(\bx)|}\sum_{\bz^{+} \in \mathcal{P}(\bx)} \log \frac{\exp(\bz^{\top} \cdot \bz^+ / \tau)}{\sum_{\bz^- \in \mathcal{N}(\bx)} \exp (\bz^{\top} \cdot \bz^- / \tau)},
\end{equation}
where $\tau$ is the temperature parameter, $\bz$ is the $L_2$-normalized embedding vector of $\bx$, $\mathcal{P}(\bx)$ is the positive set of embeddings \emph{w.r.t.}  $\bz$, and $\mathcal{N}(\bx)$ is the negative set of embeddings. 

In open-world contrastive learning, the crucial challenge is how to construct $\mathcal{P}(\bx)$ and $\mathcal{N}(\bx)$ \emph{for different types of samples}. Recall that we have two broad categories of training data: (1) labeled data $\mathcal{D}_l$ with known class, and (2) unlabeled data $\mathcal{D}_u$ with both known and novel classes. In conventional supervised CL frameworks with $\mathcal{D}_l$ only, the positive sample pairs can be easily drawn according to the ground-truth labels~\citep{khosla2020supcon}. That is, $\mathcal{P}(\bx)$ consists of embeddings of samples that carry the same label as the anchor point $\bx$, and $\mathcal{N}(\bx)$ contains all the embeddings in the multi-viewed mini-batch excluding itself. However, this is not
straightforward in the open-world setting with novel classes. 

\subsection{Learning from Wild Unlabeled Data} 
\label{sec:opencon_prototype}
We now dive into the most challenging part of the data, $\mathcal{D}_u$, which contains both known and novel classes. We propose a novel prototype-based learning strategy that tackles the challenges of: (1) the separation between known and novel classes in $\mathcal{D}_u$, and (2) pseudo label assignment that can be used for positive set construction for novel classes. Both components facilitate the goal of learning compact representations, and enable  end-to-end training. 

Key to our framework, we keep a prototype embedding vector $\boldsymbol{\mu}_c$ for each class $c \in \mathcal{Y}_\text{all}$. Here $\mathcal{Y}_\text{all}$ contains both known classes $\mathcal{Y}_l$ and novel classes $\mathcal{Y}_n = \mathcal{Y}_\text{all}\backslash \mathcal{Y}_l$, and $\mathcal{Y}_l \cap \mathcal{Y}_n = \emptyset$. The prototypes can be viewed as a set of representative embedding vectors.  All the prototype vectors $\bM = [\boldsymbol{\mu}_1|...|\boldsymbol{\mu}_c|...]_{c\in \mathcal{Y}_\text{all}}$ are randomly initiated at the beginning of training, and will be updated along with learned embeddings. We will also discuss determining the cardinality $|\mathcal{Y}_\text{all}|$ (\emph{i.e.}, number of prototypes) in Section~\ref{sec:opencon_ablation}.

\vspace{0.1cm} \noindent \textbf{Prototype-based OOD detection.} We leverage the prototype vectors to perform out-of-distribution (OOD) detection, \emph{i.e.}, separate known vs. novel data in $\mathcal{D}_u$. For any given sample $\bx_i \in \mathcal{D}_u$, we measure the cosine similarity between its embedding $\phi(\bx_i)$ and prototype vectors of known classes $\mathcal{Y}_l$. If the sample embedding is far away from all the known class prototypes, it is more likely to be a novel sample, and vice versa. Formally, we propose the level set estimation:
\begin{equation}
\mathcal{D}_{n} = \{ \bx_{i} | \underset{j \in \mathcal{Y}_l}{\max}~~~\boldsymbol{\mu}_{j}^\top \cdot  \phi(\bx_i)  < \lambda\},
\end{equation}
where a thresholding mechanism is exercised to distinguish between known and novel samples during  training time. The threshold $\lambda$
can be chosen based on the labeled data $\mathcal{D}_l$. Specifically, one can calculate the scores $\max_{j \in \mathcal{Y}_l}\boldsymbol{\mu}_{j}^\top \cdot \phi(\bx_i)$ for all the samples in $\mathcal{D}_l$, and use the score at the $p$-percentile as the threshold. For example, when $p=90$, that means 90\% of labeled data is above the threshold. We provide ablation on the effect of $p$ later in Section~\ref{sec:opencon_ablation} and theoretical insights into why OOD detection helps open-world representation learning in Appendix~\ref{sec:opencon_whyood}.

\vspace{0.1cm} \noindent \textbf{Positive and negative set selection.} Now that we have identified novel samples from the unlabeled sample, we would like to facilitate learning compact representations for $\mathcal{D}_n$, where samples belonging to the same class are close to each other. As mentioned earlier, the crucial challenge is how to construct the positive
set, denoted as $\mathcal{P}_n(\bx)$. In particular, we do not have any supervision signal for unlabeled data in the novel classes. We propose utilizing the predicted label $\hat{y} =  \text{argmax}_{j \in \mathcal{Y}_\text{all} }\boldsymbol{\mu}_j^\top \cdot \phi(\bx)$ for positive set selection. 

For a mini-batch $\mathcal{B}_n$ with samples drawn from $\mathcal{D}_n$, we apply two random augmentations for each sample and generate a multi-viewed batch $\tilde{\mathcal{B}}_n$. We denote the embeddings of the multi-viewed batch as $\mathcal{A}_n$, where the cardinality $|\mathcal{A}_n| = 2 |\mathcal{B}_n|$. For any sample $\bx$ in the mini-batch $\tilde{\mathcal{B}}_n$, we propose selecting the positive and negative set of embeddings as follows:
\begin{align}
    \mathcal{P}_n(\bx)  & = \{ \bz' | \bz' \in \{\mathcal{A}_n \backslash \bz\}, \hat y' = \hat y\},\\
     \mathcal{N}_n(\bx) & = \mathcal{A}_n \backslash \bz,
\end{align}
where $\bz$ is the $L_2$-normalized embedding of $\bx$, and $\hat y'$
is the predicted label for the corresponding training example of $\bz'$. In other words, we define the
positive set of $\bx$ to be those examples carrying the same \emph{approximated} label prediction $\hat y$.

With the positive and negative sets defined, we are now ready to introduce our new contrastive loss for open-world data. We desire embeddings where samples assigned with the same pseudo-label can form a compact cluster. Following the general template in Equation~\ref{eq:opencon_general_cl_loss}, we define a novel loss function:
\begin{equation}
\label{eq:opencon_ln}
    \mathcal{L}_n = \sum_{\bx \in \tilde{\mathcal{B}}_n} \mathcal{L}_\phi \big(\bx; \tau_n, \mathcal{P}_n(\bx), \mathcal{N}_n(\bx)\big). 
\end{equation}
For each anchor, the loss encourages the network to align embeddings of its positive pairs while repelling the negatives. \emph{All} positives in a multi-viewed batch (\emph{i.e.}, the
augmentation-based sample as well as any of the remaining samples with the same label) contribute to the numerator. The loss encourages the encoder to give closely aligned representations to \emph{all} entries from the same \emph{predicted} class, resulting in a compact representation
space. We provide visualization in  Figure~\ref{fig:opencon_umap} (right).

\vspace{0.1cm} \noindent \textbf{Prototype update.} 
 The most canonical way to update the prototype embeddings is to compute
it in every iteration of training. However, this would extract a heavy computational toll and in turn cause unbearable training latency. Instead, we update the class-conditional prototype vector in a moving-average style~\citep{li2020mopro, wang2022pico}:

\begin{equation}
\label{eq:opencon_mov_avg}
\boldsymbol{\mu}_{c}:=\operatorname{Normalize}\left(\gamma \boldsymbol{\mu}_{c}+(1-\gamma) \bz \right), \text{for}~c = 
\begin{cases} 
y \text{ (ground truth label)}, & \text{if } \bz \in \mathcal{D}_{l}  \\
\text{argmax}_{j\in \mathcal{Y}_\text{n}} \boldsymbol{\mu}_j^\top \cdot \bz, & \text{if } \bz \in \mathcal{D}_{n}
\end{cases}
\end{equation}

Here, the prototype $\boldsymbol{\mu}_{c}$ of class $c$ is defined by the moving average of the normalized embeddings $\bz$, whose predicted class conforms to $c$. $\bz$ are embeddings of samples from $\mathcal{D}_l \cup \mathcal{D}_n$. $\gamma$  is a tunable hyperparameter. 

 \textbf{Remark:} We exclude samples in $\mathcal{D}_{u}\backslash \mathcal{D}_{n}$ because they may contain non-distinguishable data from known and unknown classes, which undesirably introduce noise to the prototype estimation. We verify this phenomenon by comparing the performance of mixing $\mathcal{D}_{u}\backslash \mathcal{D}_{n}$ with labeled data $\mathcal{D}_{l}$ for training the known classes. The results verify our hypothesis that the non-distinguishable 
 data would be harmful to the overall accuracy. We provide more discussion on this  in Appendix~\ref{sec:opencon_du_slash_dn}.

\subsection{Open-world Contrastive Loss} 
\label{sec:opencon_losses}
Putting it all together, we define the open-world contrastive loss (dubbed \textbf{OpenCon}) as the following:
\begin{equation}
\label{eq:opencon_overall_loss}
    \mathcal{L}_\text{OpenCon} = \lambda_n \mathcal{L}_n + \lambda_l\mathcal{L}_l + \lambda_u \mathcal{L}_u,
\end{equation}
where $\mathcal{L}_n$ is the newly devised contrastive loss for the novel data, $\mathcal{L}_l$ is the supervised contrastive loss~\citep{khosla2020supcon} employed on the labeled data $\mathcal{D}_l$, and $\mathcal{L}_u$ is the self-supervised contrastive loss~\citep{chen2020simclr} employed on the unlabeled data $\mathcal{D}_u$.  $\lambda$ are the coefficients of loss terms. Details of $\mathcal{L}_l$ and $\mathcal{L}_u$ are in Appendix~\ref{sec:opencon_lull}, along with the complete pseudo-code in Algorithm~\ref{alg:main} (Appendix).

\vspace{0.1cm} \noindent \textbf{Remark.} Our loss components work collaboratively to enhance the embedding quality in an open-world setting. The overall objective well suits the complex nature of our training data, which blends both labeled and unlabeled data. As we will show later in Section~\ref{sec:opencon_ablation}, a simple solution by combining supervised contrastive loss (on labeled data) and self-supervised loss (on unlabeled data) is suboptimal. Instead, having $\mathcal{L}_n$ is critical to encourage closely aligned representations to \emph{all} entries from the same {predicted} class, resulting in an overall more compact representation for novel classes.


\section{Theoretical Understandings~}
\label{sec:opencon_em}

\vspace{0.1cm} \noindent \textbf{Overview.} 
Our learning objective using wild data (\emph{c.f.} Section~\ref{sec:opencon_prototype}) can be rigorously interpreted from an Expectation-Maximization (EM) algorithm perspective. We start by introducing the high-level ideas of how our method can be decomposed into E-step and M-step respectively. At the \textbf{E-step}, we assign each data
example $\bx \in \mathcal{D}_n$ to one specific cluster. In OpenCon, it is estimated by using the prototypes: $\hat{y}_i =  \text{argmax}_{j \in \mathcal{Y}_\text{all} }\boldsymbol{\mu}_j^\top \cdot \phi(\bx_i)$.
 At the \textbf{M-step}, the EM algorithm aims to maximize the likelihood under the posterior class probability from the previous E-step. Theoretically, we show that minimizing our contrastive loss $\mathcal{L}_n$ (Equation~\ref{eq:opencon_ln}) partially maximizes the likelihood by clustering similar examples. In effect, our  loss concentrates similar data to the corresponding
  prototypes, encouraging the compactness of features.

\subsection{Analyzing the E-step}
In \textbf{E-step}, the goal of the EM algorithm is to maximize the likelihood with learnable feature encoder $\phi$ and prototype matrix $\bM = [\boldsymbol{\mu}_1|...|\boldsymbol{\mu}_c|...]$, which can be lower bounded:
\begin{align*}
    \sum_i^{|\mathcal{D}_n|} \operatorname{log} p(\bx_i| \phi, \bM) \geq
    \sum_i^{|\mathcal{D}_n|} q_i(c) \operatorname{log} \sum_{c \in \mathcal{Y}_\text{all}}  \frac{p(\bx_i, c| \phi, \bM)}{q_i(c)}, 
\end{align*}

where $q_i(c)$ is denoted as the density function of a possible distribution over $c$ for sample $\bx_i$. 
By using the fact that $\log(\cdot)$ function is concave, the inequality holds with equality when $\frac{p(\bx_i, c| \phi, \bM)}{q_i(c)}$ is a constant value, therefore we set: 
$$
q_i(c) = \frac{p(\bx_i, c| \phi, \bM)}{\sum_{c \in \mathcal{Y}_\text{all}} p(\bx_i, c| \phi, \bM)} = \frac{p(\bx_i, c| \phi, \bM)}{p(\bx_i| \phi, \bM)} = p(c| \bx_i, \phi, \bM),
$$  
which is the posterior class probability. To estimate $p(c| \bx_i, \phi, \bM)$, we model the data using  the von Mises-Fisher (vMF)~\citep{fisher1953dispersion} distribution since the {normalized} embedding locates in a high-dimensional hyperspherical space.

\begin{assumption}
\label{ass:vmf}
  The density function is given by $f\left(\bx | \boldsymbol{\mu}, \kappa\right) = c_{d}(\kappa) e^{\kappa \boldsymbol{\mu}^{\top} \phi(\bx)}$, where $\kappa$ is the concentration parameter and $c_{d}(\kappa)$ is a coefficient. 
\end{assumption} 

With the vMF distribution assumption in ~\ref{ass:vmf}, we have
$p(c| \bx_i, \phi, \bM) = \sigma_{c}(\bM^\top \cdot \phi(\bx_i)), $ where $\sigma$ denotes the softmax function and $\sigma_c$ is the $c$-th element. Empirically we take a one-hot prediction with $\hat{y}_i =  \text{argmax}_{j \in \mathcal{Y}_\text{all} }\boldsymbol{\mu}_j^\top \cdot \phi(\bx_i)$ since each example inherently belongs to exactly one prototype, so we let $q_i(c) = \mathbf{1}\{c = \hat{y}_i\}$.

\subsection{Analyzing the M-step}
In \textbf{M-step}, using the label distribution prediction $q_i(c)$ in the E-step, the optimization for the network $\phi$ and the prototype matrix $\bM$ is given by:
\begin{equation}
    \label{eq:opencon_m-target}
    \underset{\phi, \bM}{\operatorname{argmax\ }} \sum_{i=1}^{|\mathcal{D}_n|} \sum_{c \in \mathcal{Y}_\text{all}} q_i(c) \log \frac{p\left(\bx_{i}, c | \phi, \bM\right)}{q_i(c)} 
\end{equation}

The joint optimization target in Equation~\ref{eq:opencon_m-target} is then achieved by rewriting the Equation~\ref{eq:opencon_m-target} according to the following Lemma~\ref{lemma:mstep}  with proof in Appendix~\ref{sec:opencon_proof}:

\begin{lemma}
\label{lemma:mstep}{~\citep{zha2001spectral}}
We define the set of samples with the same prediction $\mathcal{S}(c) = \{\bx_i \in \mathcal{D}_n| \hat{y}_i = c\}$. The maximization step is equivalent to aligning the feature vector $\phi(\bx)$ to the corresponding prototype $\boldsymbol{\mu}_{c}$:

\begin{align*}
\underset{\phi, \bM}{\operatorname{argmax\ }} \sum_{i=1}^{|\mathcal{D}_n|} \sum_{c \in \mathcal{Y}_\text{all}} q_i(c) \log \frac{p\left(\bx_{i}, c | \phi, \bM\right)}{q_i(c)}
&= \underset{\phi, \bM}{\operatorname{argmax\ }} \sum_{c \in \mathcal{Y}_\text{all}} \sum_{\bx \in \mathcal{S}(c)}  \phi(\bx)^{\top} \cdot \boldsymbol{\mu}_{c}
\end{align*}
\end{lemma}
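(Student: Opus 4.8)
\textbf{Proof plan for Lemma~\ref{lemma:mstep}.}

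The plan is to start from the left-hand side of the claimed identity and simplify it using the three ingredients already established: the one-hot posterior $q_i(c) = \mathbf{1}\{c = \hat{y}_i\}$ from the E-step, the vMF modeling Assumption~\ref{ass:vmf}, and an appropriate (uninformative) prior on the cluster assignments. First I would expand $p(\bx_i, c \mid \phi, \bM) = p(\bx_i \mid c, \phi, \bM)\, p(c)$ and plug in the vMF density $p(\bx_i \mid c, \phi, \bM) = c_d(\kappa)\exp(\kappa\, \boldsymbol{\mu}_c^{\top}\phi(\bx_i))$, together with a uniform prior $p(c) = 1/|\mathcal{Y}_\text{all}|$ (so that it contributes only a constant). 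Taking logarithms then gives $\log p(\bx_i, c \mid \phi, \bM) = \kappa\, \boldsymbol{\mu}_c^{\top}\phi(\bx_i) + \log c_d(\kappa) - \log|\mathcal{Y}_\text{all}|$, where the last two terms do not depend on $\phi$, $\bM$, or $c$.

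Next I would substitute this back into the objective and observe that the $-q_i(c)\log q_i(c)$ term is the entropy of the one-hot distribution, which equals zero, hence drops out entirely; it is not even a constant nuisance, it simply vanishes. The remaining sum is $\sum_{i=1}^{|\mathcal{D}_n|}\sum_{c\in\mathcal{Y}_\text{all}} q_i(c)\big(\kappa\,\boldsymbol{\mu}_c^{\top}\phi(\bx_i) + \mathrm{const}\big)$. Using $q_i(c) = \mathbf{1}\{c=\hat{y}_i\}$, the inner sum over $c$ collapses to the single term $c = \hat{y}_i$, yielding $\kappa\sum_{i=1}^{|\mathcal{D}_n|} \boldsymbol{\mu}_{\hat{y}_i}^{\top}\phi(\bx_i)$ plus an additive constant. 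Since $\kappa > 0$ is a fixed positive scalar, it does not affect the $\operatorname{argmax}$, and neither does the additive constant. Finally, I would regroup the single summation over examples into a double summation over clusters by introducing $\mathcal{S}(c) = \{\bx_i \in \mathcal{D}_n : \hat{y}_i = c\}$, which simply partitions $\mathcal{D}_n$ according to the predicted label; this rewrites $\sum_i \boldsymbol{\mu}_{\hat{y}_i}^{\top}\phi(\bx_i) = \sum_{c\in\mathcal{Y}_\text{all}}\sum_{\bx\in\mathcal{S}(c)} \phi(\bx)^{\top}\boldsymbol{\mu}_c$, which is exactly the right-hand side.

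I do not anticipate a genuine obstacle here — the argument is essentially bookkeeping — but the one point requiring a little care is making explicit which quantities are legitimately ``constants'' to be discarded inside the $\operatorname{argmax}$: the normalizing coefficient $c_d(\kappa)$, the uniform prior $1/|\mathcal{Y}_\text{all}|$, and the entropy term $q_i(c)\log q_i(c)$. The entropy term deserves a one-line justification (it is identically $0$ for a one-hot distribution, using the convention $0\log 0 = 0$), and the prior assumption should be stated so the reader knows we are treating the mixture weights as uniform (or at least fixed). One should also note that $\kappa>0$ is what lets us pull it out of the $\operatorname{argmax}$ without flipping the optimization direction. With these conventions in place, each displayed equality is a direct substitution, and chaining them gives the stated result; the full argument can be found in Appendix~\ref{sec:opencon_proof}.
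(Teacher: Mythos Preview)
Your proposal is correct and follows essentially the same approach as the paper's proof: factor out the prior and entropy contributions as constants (the paper bundles these as the single term $q_i(c)\log\frac{p(c)}{q_i(c)}$), substitute the one-hot $q_i(c)$, plug in the vMF density to extract $\kappa\,\boldsymbol{\mu}_c^{\top}\phi(\bx_i)$, and regroup by predicted cluster $\mathcal{S}(c)$. If anything, your write-up is more explicit than the paper's about why each discarded term is legitimately constant (uniform prior, zero entropy of a one-hot, $\kappa>0$), which is a plus.
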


In our algorithm, the maximization step is achieved by optimizing $\bM$ and $\phi$ separately.

(a) \textbf{Optimizing $\bM$}: 

For fixed $\phi$, the optimal prototype is given by $\boldsymbol{\mu}^*_{c} = \operatorname{Normalize}(\mathbb{E}_{\bx \in \mathcal{S}(c)}[\phi(\bx)]).$ \textbf{This optimal form empirically corresponds to our prototype estimation in Equation~\ref{eq:opencon_mov_avg}.} Empirically, it is expensive to collect all features in $\mathcal{S}(c)$. We use the estimation of $\boldsymbol{\mu}_{c}$ by moving average: $$\boldsymbol{\mu}_{c}:=\operatorname{Normalize}\left(\gamma \boldsymbol{\mu}_{c}+(1-\gamma) \phi(\bx) \right), \forall \bx \in \mathcal{S}(c).$$ 

(b) \textbf{Optimizing $\phi$}: 

We then show that the contrastive loss $\mathcal{L}_n$ composed with the alignment loss part $\mathcal{L}_a$ encourages the closeness of features from positive pairs. {By minimizing $\mathcal{L}_a$, it is approximately maximizing the target in Equation~\ref{eq:opencon_m-target} with the optimal prototypes $\boldsymbol{\mu}_{c}^{*}$}. We can decompose the loss as follows:

\begin{align*}
    \mathcal{L}_n
    &= -\frac{1}{|\mathcal{P}(\bx)|}\sum_{\bz^{+} \in \mathcal{P}(\bx)} \log \frac{\exp(\bz^{\top} \cdot \bz^+ / \tau)}{\sum_{\bz^- \in \mathcal{N}(\bx)} \exp (\bz \cdot \bz^- / \tau)} 
    \\ &= \underbrace{-\frac{1}{|\mathcal{P}(\bx)|}\sum_{\bz^{+} \in \mathcal{P}(\bx)} (\bz^{\top} \cdot \bz^+ / \tau)}_{\mathcal{L}_{a}(\bx)} +  \underbrace{ \frac{1}{|\mathcal{P}(\bx)|} \sum_{\bz^{+}\in \mathcal{P}(\bx)} \log \sum_{\bz^- \in \mathcal{N}(\bx)} \exp (\bz^{\top} \cdot \bz^- / \tau)}_{\mathcal{L}_{b}(\bx)}.
\end{align*}
In particular, the first term $\mathcal{L}_{a}(\bx)$ is referred to as the alignment term ~\citep{wang2020understanding}, which encourages the compactness of features
from positive pairs. To see this, we have the following lemma~\ref{lemma:opt_phi} with proof in Appendix~\ref{sec:opencon_proof}. 

\begin{lemma}
\label{lemma:opt_phi}
Minimizing  $\mathcal{L}_{a}(\bx)$ is equivalent to the maximization step w.r.t. parameter $\phi$.
\begin{align*}
    \underset{\phi}{\operatorname{argmin\ }} \sum_{\bx \in \mathcal{D}_n}  \mathcal{L}_{a}(\bx)  &=  \underset{\phi}{\operatorname{argmax\ }} \sum_{c \in \mathcal{Y}_\text{all}} \sum_{\bx \in \mathcal{S}(c)}  \phi(\bx)^{\top} \cdot \boldsymbol{\mu}^*_{c} , 
\end{align*}
\end{lemma}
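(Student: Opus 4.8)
\textbf{Proof proposal for Lemma~\ref{lemma:opt_phi}.}

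The plan is to unpack the alignment term $\mathcal{L}_a(\bx)$ and show it is, up to reindexing and constants that do not depend on $\phi$, a sum of inner products $\phi(\bx)^\top \cdot \boldsymbol{\mu}_c^*$ aggregated over the pseudo-label clusters $\mathcal{S}(c)$. First I would recall the definition of the positive set $\mathcal{P}_n(\bx)$: it consists of embeddings $\bz'$ of samples carrying the same predicted label $\hat y' = \hat y$ as the anchor. Hence for an anchor $\bx$ with $\hat y = c$, the set $\mathcal{P}_n(\bx)$ is (a multi-viewed version of) $\mathcal{S}(c)\setminus\{\bx\}$. Summing $\mathcal{L}_a(\bx) = -\frac{1}{|\mathcal{P}_n(\bx)|}\sum_{\bz^+\in\mathcal{P}_n(\bx)} \bz^\top\cdot\bz^+ / \tau$ over all anchors $\bx \in \mathcal{D}_n$ and grouping anchors by their cluster assignment, the double sum over pairs within $\mathcal{S}(c)$ can be rewritten. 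The key move is that $\sum_{\bx\in\mathcal{S}(c)}\frac{1}{|\mathcal{S}(c)|}\sum_{\bx'\in\mathcal{S}(c)}\bz^\top\cdot\bz'$ equals, by symmetry, $|\mathcal{S}(c)|\,\big(\mathbb{E}_{\bx\in\mathcal{S}(c)}\phi(\bx)\big)^\top\big(\mathbb{E}_{\bx'\in\mathcal{S}(c)}\phi(\bx')\big)$ — i.e. the average over the cluster factorizes into the norm-squared of the cluster mean.

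Next I would invoke the optimal-prototype characterization from part (a): $\boldsymbol{\mu}_c^* = \operatorname{Normalize}(\mathbb{E}_{\bx\in\mathcal{S}(c)}[\phi(\bx)])$. Writing $\bar\phi_c := \mathbb{E}_{\bx\in\mathcal{S}(c)}[\phi(\bx)]$, we have $\bar\phi_c = \lVert\bar\phi_c\rVert\,\boldsymbol{\mu}_c^*$, so the cluster-mean norm-squared is $\lVert\bar\phi_c\rVert^2 = \lVert\bar\phi_c\rVert\,\bar\phi_c^\top\cdot\boldsymbol{\mu}_c^* = \lVert\bar\phi_c\rVert\sum_{\bx\in\mathcal{S}(c)}\frac{1}{|\mathcal{S}(c)|}\phi(\bx)^\top\cdot\boldsymbol{\mu}_c^*$. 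Thus $-\sum_{\bx\in\mathcal{D}_n}\mathcal{L}_a(\bx)$ is, up to the positive factor $\frac{1}{\tau}$ and up to the weights $\lVert\bar\phi_c\rVert$, a weighted sum of $\sum_{c}\sum_{\bx\in\mathcal{S}(c)}\phi(\bx)^\top\cdot\boldsymbol{\mu}_c^*$, which is exactly the M-step target identified in Lemma~\ref{lemma:mstep}. Therefore minimizing $\sum_{\bx\in\mathcal{D}_n}\mathcal{L}_a(\bx)$ over $\phi$ coincides with maximizing $\sum_{c\in\mathcal{Y}_\text{all}}\sum_{\bx\in\mathcal{S}(c)}\phi(\bx)^\top\cdot\boldsymbol{\mu}_c^*$, establishing the claimed equivalence.

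The main obstacle I anticipate is handling the per-cluster normalization weights $\frac{1}{|\mathcal{P}_n(\bx)|}$ (equivalently $\frac{1}{|\mathcal{S}(c)|}$) and the cluster-mean norms $\lVert\bar\phi_c\rVert$ carefully, since strictly speaking the equivalence is up to these cluster-dependent positive scalars rather than an exact identity; I would either argue that treating $\boldsymbol{\mu}_c$ as the (fixed) optimal prototype and the cluster sizes as fixed during the $\phi$-optimization makes these genuine constants, or state the equivalence in the sense of sharing the same maximizer under a mild balancedness assumption on cluster sizes. A secondary subtlety is the multi-viewed batch: the sum over $\mathcal{A}_n$ versus over $\mathcal{D}_n$ differs by augmentation copies, which only rescales terms by the (constant) number of views and does not affect the argmin. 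I would keep these bookkeeping points brief and defer the fully rigorous constant-tracking to the appendix, matching the level of detail used in the E-step and M-step derivations above.
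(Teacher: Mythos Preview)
Your proposal is correct and follows essentially the same approach as the paper: group the alignment term by pseudo-label cluster, rewrite the inner sum over positives as an inner product with the (unnormalized) cluster mean $\bar\phi_c$, then substitute $\boldsymbol{\mu}_c^* = \operatorname{Normalize}(\bar\phi_c)$ to obtain the M-step target up to a per-cluster positive factor $\eta_{\mathcal{S}_c} = \frac{|\mathcal{S}_c|}{|\mathcal{S}_c|-1}\lVert\bar\phi_c\rVert$ that is treated as approximately $1$. The paper resolves the cluster-dependent scalar exactly as you anticipate---via the $\eta_{\mathcal{S}_c}\approx 1$ approximation rather than a balancedness assumption---so your identified obstacles and their proposed handling match the paper's own treatment.
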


\vspace{0.1cm} \noindent \textbf{Summary.} These observations validate that our  framework learns representation for novel classes in an EM fashion. Importantly, we extend EM from a traditional learning setting to an open-world setting with the capability to handle real-world data arising in the wild. We proceed by introducing the empirical verification of our algorithm.


\section{Experimental Results~}
\label{sec:opencon_experiments}

\vspace{0.1cm} \noindent \textbf{Datasets.} We evaluate on standard benchmark image
classification datasets CIFAR-100~\citep{krizhevsky2009learning} and ImageNet~\citep{deng2009imagenet}. For the ImageNet, we sub-sample 100 classes, following the same setting as ORCA~\citep{cao2022openworld} for fair comparison. 
Note that we focus on these tasks, as they are much more challenging than toy datasets with fewer classes. The additional comparison on CIFAR-10 is in Appendix~\ref{sec:opencon_cifar-10}. By default, classes are divided into 50\% seen and 50\% novel classes. We then select 50\%
of known classes as the labeled dataset, and the rest as the unlabeled set.
The division is consistent with~\citet{cao2022openworld}, which allows us to compare the performance in a fair setting. Additionally, we explore different ratios of unlabeled data and novel classes (see Section~\ref{sec:opencon_ablation}).

\vspace{0.1cm} \noindent \textbf{Evaluation metrics.} We follow the evaluation strategy in~\citet{cao2022openworld} and report the following metrics: (1) classification accuracy on known classes, (2) classification accuracy  on the novel data, and (3) overall accuracy on all classes. The accuracy
of the novel classes is measured by solving an optimal assignment problem using the Hungarian algorithm~\citep{Kuhn1955thehungarian}. When reporting accuracy on all classes, we solve optimal assignments using both known and
novel classes.

\vspace{0.1cm} \noindent \textbf{Experimental details.}
We use ResNet-18 as the backbone
for CIFAR-100 and ResNet-50 as the backbone for ImageNet-100. 
The pre-trained backbones (no final FC layer) are identical to the ones in ~\citet{cao2022openworld}. To ensure a fair comparison, we follow the same practice in~\citet{cao2022openworld} and only update the parameters
of the last block of ResNet. 
In addition, we add a trainable two-layer MLP projection head that projects the feature from the penultimate layer to a lower-dimensional space $\mathbb{R}^d$ ($d=128$), which is shown to be effective for contrastive loss~\citep{chen2020simclr}. We use the same data augmentation strategies as SimCLR~\citep{chen2020simclr}. Same as in ~\citet{cao2022openworld}, we regularize the KL-divergence between the predicted label distribution $p\left(\hat{y}\right)$ and the class prior to prevent the network degenerating into a trivial solution in which all instances
are assigned to a few classes. We provide extensive details on the training configurations and all hyper-parameters in Appendix~\ref{sec:opencon_hyperparam}. 

\begin{table}[t]
\centering
\caption[Main Results of OpenCon and baseline methods' performance in open-world representation learning.]{Main Results. Asterisk ($^\star$) denotes that the original method can not recognize seen classes. Dagger ($^\dagger$) denotes the original method can not
detect novel classes (and we had to extend it). Results on ORCA, GCD and OpenCon  (mean and standard deviation) are averaged over five different runs. The ORCA results are reported based on the official repo~\citep{cao2022git}. } 
\scalebox{0.75}{
\begin{tabular}{lllllll}
\toprule
\multirow{2}{*}{\textbf{Method}} & \multicolumn{3}{c}{\textbf{CIFAR-100}} & \multicolumn{3}{c}{\textbf{ImagNet-100}} \\
 & \textbf{All} & \textbf{Novel} & \textbf{Seen} & \textbf{All} & \textbf{Novel} & \textbf{Seen} \\ \midrule
$^{\dagger}$\textbf{FixMatch}~\citep{alex2020fixmatch} & 20.3 & 23.5 & 39.6 & 34.9 & 36.7 & 65.8 \\
$^{\dagger}$\textbf{DS$^{3}$L}~\citep{guo2020dsl} & 24.0 & 23.7 & 55.1 & 30.8 & 32.5 & 71.2 \\
$^{\dagger}$\textbf{CGDL}~\citep{sun2020cgdl} & 23.6 & 22.5 & 49.3 & 31.9 & 33.8 & 67.3 \\
$^\star$\textbf{DTC}~\citep{Han2019dtc} & 18.3 & 22.9 & 31.3 & 21.3 & 20.8 & 25.6 \\
$^\star$\textbf{RankStats}~\citep{zhao2021rankstat} & 23.1 & 28.4 & 36.4 & 40.3 & 28.7 & 47.3 \\
$^\star$\textbf{SimCLR}~\citep{chen2020simclr} & 22.3 & 21.2 & 28.6 & 36.9 & 35.7 & 39.5 \\ \midrule
\textbf{ORCA}~\citep{cao2022openworld} & 47.2$ ^{\pm{0.7}} $ & 41.0$ ^{\pm{1.0}} $ & 66.7$ ^{\pm{0.2}} $ & 76.4$ ^{\pm{1.3}} $ & 68.9$ ^{\pm{0.8}} $ & 89.1$ ^{\pm{0.1}} $ \\
\textbf{GCD}~\citep{vaze22gcd} & 46.8$ ^{\pm{0.5}} $ & 43.4$ ^{\pm{0.7}} $ & \textbf{69.7}$ ^{\pm{0.4}} $ & 75.5$ ^{\pm{1.4}} $ & 72.8$  ^{\pm{1.2}} $  & \textbf{90.9} $ ^{\pm{0.2}} $  \\
\textbf{OpenCon (Ours)} & \textbf{52.7}$ ^{\pm{0.6}} $ & \textbf{47.8}$ ^{\pm{0.6}} $ & {69.1}$ ^{\pm{0.3}} $ & \textbf{83.8}$ ^{\pm{0.3}} $ & \textbf{80.8}$ ^{\pm{0.3}} $ & 90.6$ ^{\pm{0.1}} $ 
\\ \bottomrule
\end{tabular}}
\label{tab:opencon_main}
\end{table}

\vspace{0.1cm} \noindent \textbf{OpenCon achieves SOTA performance.} As shown in Table 1, OpenCon outperforms the rivals
by a significant margin on both CIFAR and ImageNet datasets. Our comparison covers an extensive collection of algorithms, including the best-performed methods to date. In particular, on ImageNet-100, we improve upon the best baseline by \textbf{7.4}\% in terms of overall accuracy. It is also worth noting that OpenCon improves the accuracy of novel classes by \textbf{11.9}\%. Note that the open-world representation learning is a relatively new setting.  Closest to our setting is the open-world semi-supervised learning (SSL) algorithms, namely  {ORCA}~\citep{cao2022openworld} and {GCD}~\citep{vaze22gcd}---that directly optimize the classification performance. While our framework emphasizes representation learning, we demonstrate the quality of learned embeddings by also measuring the classification accuracy. {This can be easily done by leveraging our learned prototypes on a converged model: $\hat{y} = \text{argmax}_{j \in \mathcal{Y}_\text{all}}~\boldsymbol{\mu}_j^\top \cdot \phi(\bx)$.}  We discuss the significance \emph{w.r.t.} existing works in detail:
\begin{itemize}

    \item \textbf{OpenCon vs. ORCA}  Our framework bears significant differences \emph{w.r.t.} ORCA in terms of learning goal and approach. (1) Our framework focuses on the representation learning problem, whereas ORCA optimizes for the classification performance using cross-entropy loss. Unlike ours, ORCA does not necessarily learn compact representations, as evidenced in Figure~\ref{fig:opencon_umap} (left). (2) We propose a novel open-world contrastive learning framework, whereas ORCA does not employ contrastive learning. ORCA uses a pairwise loss to predict similarities between pairs of instances, and does not consider negative samples. In contrast, our approach constructs both positive and negative sample sets, which encourage aligning representations to \emph{all} entries from the same ground-truth label or predicted pseudo label (for novel classes). (3) Our framework explicitly considers OOD detection, which allows separating known vs. novel data in $\mathcal{D}_u$. ORCA does not consider this and can suffer from noise in the pairwise loss (\emph{e.g.}, the loss may maximize the similarity between samples from known vs. novel classes).  
    
    \item \textbf{OpenCon vs. GCD} There are two key differences to highlight: (1) GCD~\citep{vaze22gcd} requires a  two-stage training procedure, whereas our learning framework proposes an end-to-end training strategy. Specifically, GCD applies the SupCon loss~\citep{khosla2020supcon} on the labeled data $\mathcal{D}_l$ and SimCLR loss~\citep{chen2020simclr} on the unlabeled data $\mathcal{D}_u$. The feature is then clustered separately by a semi-supervised K-means method. However, the two-stage method hinders the useful pseudo-labels to be incorporated into the training stage, which results in suboptimal performance. In contrast, our prototype-based learning strategy alleviates the need for a separate clustering process (\emph{c.f.} Section~\ref{sec:opencon_prototype}), which is therefore easy to use in practice and provides meaningful supervision for the unlabeled data.
    (2) We propose a contrastive loss $\mathcal{L}_n$ better utilizing the pseudo-labels during training, which facilitates learning a more compact representation space for the novel data . 
    From Table~\ref{tab:opencon_main}, we observe that OpenCon outperforms GCD by \textbf{8.3}\% (overall accuracy) on ImageNet-100, showcasing the benefits of our  framework. 
\end{itemize}

\begin{figure}
    \centering
    \includegraphics[width=0.95\linewidth]{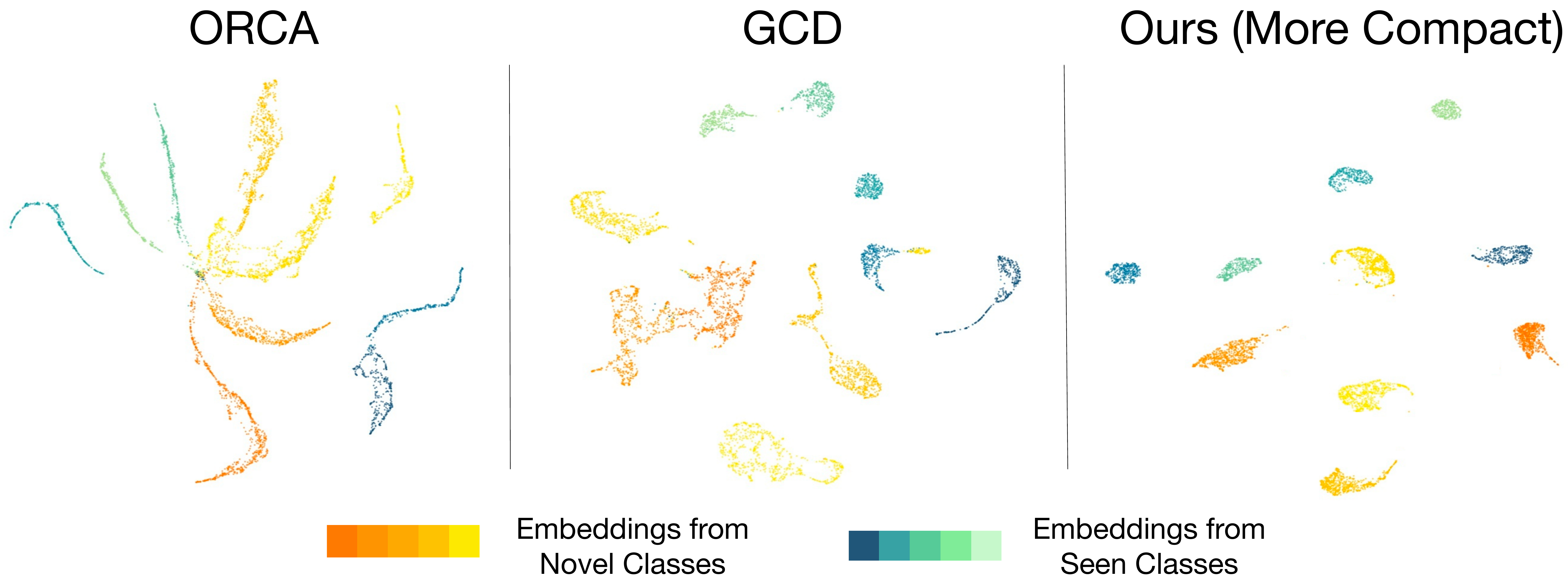}
    \caption[UMAP visualization of the feature embedding.]{UMAP~\citep{umap} visualization of the feature embedding from 10 classes (5 for seen, 5 for novel) when the model is trained on ImageNet-100 with ORCA~\citep{cao2022openworld}, GCD~\citep{vaze22gcd} and OpenCon (ours).}
    \label{fig:opencon_umap}
\end{figure}

Lastly, for completeness, we compare methods in related problem domains: (1) \textit{novel class detection}: \textsc{dtc}~\citep{Han2019dtc},  \textsc{RankStats}~\citep{zhao2021rankstat},
(2) \textit{semi-supervised learning}: \textsc{FixMatch}~\citep{alex2020fixmatch}, \textsc{ds$^{3}$l}~\citep{guo2020dsl} and \textsc{cgdl}~\citep{sun2020cgdl}. We also compare it with the common representation method \textsc{SimCLR}~\citep{chen2020simclr}. 
These methods are not designed for the Open-SSL task, therefore the performance is less competitive.

\begin{table}[htb]
\centering
\caption{Comparison of accuracy on ViT-B/16 architecture. Results are reported on ImageNet-100. }
\scalebox{0.9}{
    \begin{tabular}{ccccc} \toprule
         Methods & All & Novel & Seen \\ \midrule
         ORCA~\citep{cao2022openworld} & 73.5 & 64.6  & 89.3  \\
         GCD~\citep{vaze22gcd} &  74.1 & 66.3 &  89.8 \\ 
         {$k$-Means~\citep{macqueen1967classification}} &  {72.7} & {71.3} &  {75.5}  \\ 
         {RankStats+~\citep{zhao2021rankstat}} & {37.1} &   {24.8} & {61.6}   \\
         {UNO+~\citep{fini2021unified}} & {70.3} & {57.9} & \textbf{{95.0}}\\
         OpenCon (Ours) & \textbf{84.0} & \textbf{81.2} & 93.8 \\
            \bottomrule
    \end{tabular}}
    \label{tab:opencon_vit}
\end{table}

\noindent \textbf{OpenCon is competitive on ViT} 
Going beyond convolutional neural networks, we show in Table~\ref{tab:opencon_vit} that the OpenCon is competitive for transformer-based ViT model~\citep{dosovitskiy2020vit}. We adopt the ViT-B/16 architecture with DINO pre-trained weights ~\citep{caron2021emerging}, following the pipeline used in ~\citet{vaze22gcd}.  In Table~\ref{tab:opencon_vit}, we compare OpenCon's performance with {ORCA}~\citep{cao2022openworld}, {GCD}~\citep{vaze22gcd}, {$k$-Means~\citep{macqueen1967classification}, RankStats+~\citep{zhao2021rankstat} and UNO+~\citep{fini2021unified}} on ViT-B-16 architecture. On ImageNet-100, we improve upon the best baseline by 9.9 in terms of overall accuracy.

\noindent \textbf{OpenCon learns more distinguishable representations} 
We visualize feature embeddings using UMAP~\citep{umap} in Figure~\ref{fig:opencon_umap}. Different
colors represent different ground-truth class labels. For clarity, we use the ImageNet-100 dataset and visualize a subset of 10 classes. We can observe that OpenCon produces a better embedding space than GCD and ORCA. In particular, ORCA does not produce distinguishable representations for novel classes, especially when the number of classes increases. The features of GCD are improved, yet with some class overlapping (\emph{e.g.}, two orange classes). 
For reader's reference, we also include the version with a subset of 20 classes in Appendix~\ref{sec:opencon_umap20}, where OpenCon displays more distinguishable representations.

\section{A Comprehensive Analysis of OpenCon~} 
\label{sec:opencon_ablation}

\vspace{0.1cm} \noindent \textbf{Prototype-based OOD detection is important.} In Figure~\ref{fig:opencon_ood-p}, we ablate the
contribution of a key component in OpenCon: prototype-based OOD detection (\emph{c.f.} Section~\ref{sec:opencon_prototype}). To systematically analyze the effect, we report the performance under varying percentile $p \in \{0,10,30,50,70,90\}$. Each $p$ corresponds to 
a different threshold $\lambda$ for separating known vs. novel data in $\mathcal{D}_u$. In the extreme case with $p=0$, $\mathcal{D}_n$ becomes equivalent to $\mathcal{D}_u$, and hence the contrastive loss $\mathcal{L}_n$ is applied to the entire unlabeled data. 
We highlight two findings: (1) Without OOD detection ($p=0$), the unseen accuracy reduces by 2.4\%, compared to the best setting ($p=70\%$). This affirms the importance of OOD detection for better representation learning. (2) A higher percentile $p$, in general, leads to better performance. We also provide theoretical insights in Appendix~\ref{sec:opencon_whyood} showing OOD detection helps contrastive learning of novel classes by having fewer candidate classes.

\begin{table}[htb]
    \centering
    \includegraphics[width=0.5\linewidth]{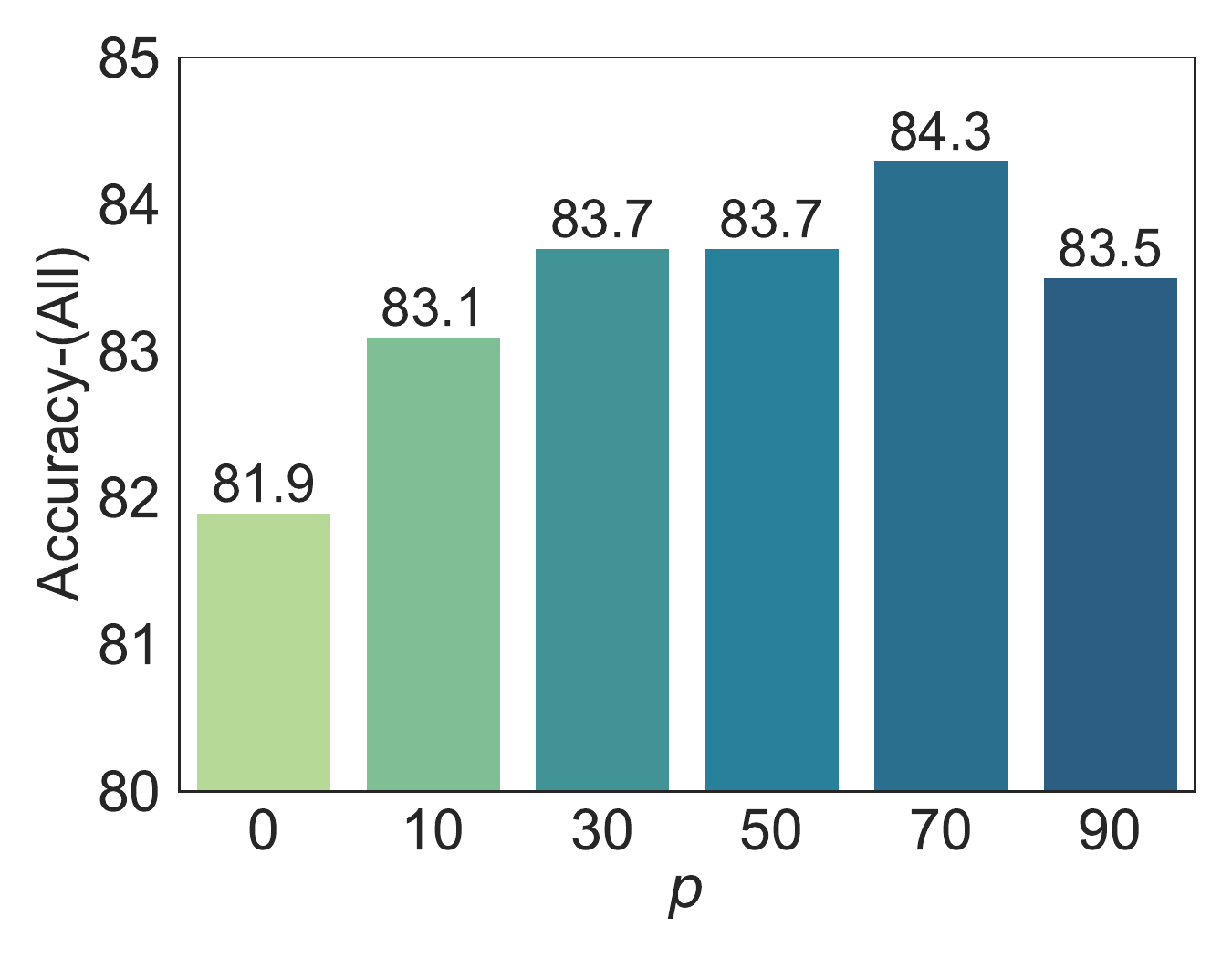}
    \caption[Effect of OOD detection threshold $p$. ]{\small Effect of $p$ on ImageNet-100 and CIFAR-100, measured by the overall accuracy. $p=0$ means no OOD detection.} 
    \label{fig:opencon_ood-p}
\end{table}

\vspace{0.1cm} \noindent \textbf{Ablation study on the loss components.} Recall that our overall objective function in Equation~\ref{eq:opencon_overall_loss} consists of three parts. We ablate the contributions of each component in Table~\ref{tab:opencon_loss}. Specifically, we modify OpenCon by removing: (i) supervised objective (\emph{i.e.}, w/o $\mathcal{D}_l$), (ii) unsupervised objective on the entire unlabeled data (\emph{i.e.}, w/o $\mathcal{D}_u$), and (iii) prototype-based contrastive learning on novel data $\mathcal{D}_n$.
We have the following key observations: (1) Both supervised objective $\mathcal{L}_l$ and unsupervised loss $\mathcal{L}_u$ are indispensable parts of open-world representation learning. This suits the complex nature of our training data, which requires learning on both labeled and unlabeled data, across both known and novel classes. (2) {Purely combining SupCon~\citep{khosla2020supcon} and SimCLR~\citep{chen2020simclr}---as used in GCD~\citep{vaze22gcd}---does not give competitive results}. For example, the overall accuracy is \textbf{9.3}\% lower than our method on the ImageNet-100 dataset. In contrast, having $\mathcal{L}_n$ encourages closely aligned representations to \emph{all} entries from the same \emph{predicted} class, resulting in a more compact representation
space for novel data. 
Overall, the ablation suggests that all losses in our framework work together synergistically to enhance the representation quality.

\begin{table}[htb]

\caption[Ablation study on loss component in OpenCon.]{Ablation study on loss component. }
\label{tab:opencon_loss}
\centering
\scalebox{0.8}{
\begin{tabular}{lllllll}
\hline
\multirow{2}{*}{\textbf{Loss Components}} & \multicolumn{3}{c}{\textbf{CIFAR-100}} & \multicolumn{3}{c}{\textbf{ImageNet-100}}\\
 & All & Novel & Seen & All & Novel & Seen \\  \midrule
w/o $\mathcal{L}_{l}$ & 43.3 & 47.1 & 38.8 & 68.6 & 73.4 & 59.1 \\
w/o $\mathcal{L}_{u}$ & 36.9 & 28.3 & 63.4 & 55.9 & 39.3 & 89.2 \\
w/o $\mathcal{L}_{n}$ & 46.6 & 42.2 & \textbf{70.3} & 74.5 & 70.7 & \textbf{91.0} \\ \midrule
OpenCon (ours) & \textbf{52.7}$ ^{\pm{0.6}} $ & \textbf{47.8}$ ^{\pm{0.6}} $ & {69.1}$ ^{\pm{0.3}} $ & \textbf{83.8}$ ^{\pm{0.3}} $ & \textbf{80.8}$ ^{\pm{0.3}} $ & 90.6$ ^{\pm{0.1}} $
\\ \bottomrule 
\end{tabular}}
\end{table}

\vspace{0.1cm} \noindent \textbf{Handling an unknown number of novel classes.}
In practice, we often do not know the number of classes $|\mathcal{Y}_\text{all}|$ in
advance. This is the dilemma faced by OpenCon and other baselines as well. 
In such cases, one can apply OpenCon by first estimating the number of classes. For a fair comparison, we use the same estimation technique\footnote{A clustering algorithm is performed on the combination of labeled data and unlabeled data. The optimal number of classes is chosen by validating clustering accuracy on the labeled data. } as in ~\citet{Han2019dtc, cao2022openworld}. On CIFAR-100, the estimated total number of classes is 124. At the beginning of training, we initialize the same number of prototypes accordingly. Results in Table~\ref{tab:opencon_unknown-cls} show that OpenCon outperforms the best baseline ORCA~\citep{cao2022openworld} by \textbf{7.3}\%.
Interestingly, despite the initial number of classes, the training process will converge to solutions that closely match the ground truth number of classes. 
For example, at convergence, we observe a total number of 109 actual clusters. The remaining ones have no samples assigned, hence can be discarded. 
Overall, with the estimated number of classes, OpenCon can achieve similar performance compared to the setting in which the number of classes is known.

\begin{table}[htb]
\caption[Performance with an unknown number of classes.]{Accuracy  on CIFAR-100 dataset with an unknown number of classes.}
    \centering
    \scalebox{0.9}{
    \begin{tabular}{ccccc} \toprule
         Methods & All & Novel & Seen \\ \midrule
         ORCA~\citep{cao2022openworld} & 46.4 & 40.0  & 66.3  \\
         GCD~\citep{vaze22gcd}& 47.2 & 41.9  & \textbf{69.8} \\ \midrule
         OpenCon (Known $|\mathcal{Y}_\text{all}|$) & 53.7 & \textbf{48.7} & \textbf{69.0} \\
         OpenCon (Unknown $|\mathcal{Y}_\text{all}|$) & \textbf{53.7} & 48.2  &  68.8 \\
            \bottomrule
    \end{tabular}}
    \label{tab:opencon_unknown-cls}

\end{table}

\begin{table}[htb]

\caption[Performance varying different labeling ratios and different numbers of known classes.]{Accuracy on CIFAR-100 under varying different labeling ratios (for labeled data) and different numbers of known classes ($|\mathcal{Y}_{l}|$). The number of novel classes is $100-|\mathcal{Y}_{l}|$. %
}
\label{tab:opencon_abl-label}
\centering
\scalebox{0.88}{
\begin{tabular}{lllllllll}
\hline
\multirow{2}{*}{\textbf{Labeling Ratio}} & \multirow{2}{*}{$|\mathcal{Y}_{l}|$} & \multicolumn{1}{c}{\multirow{2}{*}{\textbf{Method}}} & \multicolumn{3}{c}{\textbf{CIFAR-100}} & \multicolumn{3}{c}{\textbf{ImageNet-100}} \\
 & & \multicolumn{1}{c}{} & \multicolumn{1}{c}{All} & \multicolumn{1}{c}{Novel} & \multicolumn{1}{c}{Seen}  & \multicolumn{1}{c}{All} & \multicolumn{1}{c}{Novel} & \multicolumn{1}{c}{Seen} \\ \hline

\multirow{3}{*}{0.5} & \multirow{3}{*}{50} & ORCA & 47.0 & 41.3 & 66.2 & 76.6 & 69.0 & 88.9 \\
 & &  GCD & {47.2} & {43.6} & \textbf{{69.4}} & {75.1} & {73.2}  & \textbf{{90.9}}  \\
  & & {OpenCon} & \textbf{{53.7}} & \textbf{{48.7}} & {{69.0}} & \textbf{{84.3}} & \textbf{{81.1}} & {90.7} \\ \midrule
 
 \multirow{3}{*}{0.25} & \multirow{3}{*}{50} & ORCA & 47.6 & 42.5 & 61.8 & 72.2 & 64.2 & 87.3 \\
 & & GCD & 41.4 & 39.3 & \textbf{66.0} & 76.7 & 69.1 & 89.1 \\
 & & OpenCon & \textbf{51.3} & \textbf{44.6} & 65.5 & \textbf{82.0} & \textbf{77.1} & \textbf{90.6} \\ \midrule
\multirow{3}{*}{0.1} & \multirow{3}{*}{50} & ORCA & 41.2 & 37.7 & 54.6 & 68.7 & 56.8 & 83.4 \\
 & & GCD & 37.0 & 38.6 & 62.2 & 69.4 & 56.6 & \textbf{85.8} \\
 & & OpenCon & \textbf{48.2} & \textbf{44.4 } & \textbf{62.5 } & \textbf{75.4} & \textbf{66.8} & 85.2 \\ \midrule
\multirow{3}{*}{0.5} & \multirow{3}{*}{25} & ORCA & 40.4 & 38.8 & 66.0 & 57.8 & 54.0 & 89.4 \\
 & & GCD & 41.6 & 39.2 & 70.0 & 65.3 & 63.2 & 90.8 \\
 & & OpenCon & \textbf{43.9} & \textbf{41.9} & \textbf{70.2} & \textbf{74.5} & \textbf{72.7} & \textbf{91.2} \\ \midrule
\multirow{3}{*}{0.5} & \multirow{3}{*}{10} & ORCA & 34.3 & 33.8 & 67.4 & 45.1 & 43.6 & 93.0 \\
 & & GCD & 38.4 & 36.8 & 61.3 & 53.3 & 52.9 & 94.2 \\
 & & OpenCon & \textbf{40.9} & \textbf{40.5} & \textbf{69.9 } & \textbf{59.0} & \textbf{58.2} & \textbf{94.3}

 \\ \bottomrule
 
\end{tabular}}
\end{table}

\vspace{0.1cm} \noindent \textbf{OpenCon is robust under a smaller number of labeled examples, and a larger number of novel classes.} 
We show that OpenCon's strong performance holds under more challenging settings with: (1) reduced fractions of labeled examples, and (2) different ratios of known vs. novel classes. The results are summarized in Table~\ref{tab:opencon_abl-label}. First, we reduce the labeling ratio from $50\%$ (default) to $25\%$ and $10\%$, while keeping the number of known classes to be the same (\emph{i.e.}, 50). With fewer labeled samples in the known classes, the unlabeled sample set size will expand accordingly. It posits more challenges for novelty discovery and representation learning. On ImageNet-100, OpenCon substantially improves the novel class accuracy by \textbf{10}\% compared to ORCA and GCD, when only 10\% samples are labeled. Secondly, we further increase the number of novel classes, from 50 (default) to 75 and 90 respectively.  On ImageNet-100 with 75 novel classes ($|\mathcal{Y}_l|=25$), OpenCon improves the novel class accuracy by \textbf{16.7}\% over ORCA~\citep{cao2022openworld}. Overall our experiments confirm the robustness of OpenCon under various settings.  

\section{Additional Related Work~}

\vspace{0.1cm} \noindent \textbf{Contrastive learning.} A great number of works have explored the effectiveness of contrastive loss in unsupervised representation learning: InfoNCE~\citep{van2018cpc}, SimCLR~\citep{chen2020simclr}, SWaV~\citep{caron2020swav}, MoCo~\citep{he2019moco}, SEER~\citep{goyal2021seer} and ~\citep{li2020mopro,li2020prototypical, zhang2021supporting}. It motivates follow-up works to on weakly supervised learning tasks~\citep{zheng2021weakcl, tsai2022wcl2}, semi-supervised learning~\citep{chen2020simclrv2, li2021comatch, zhang2022semi, yang2022classaware}, supervised learning with noise ~\citep{wu2021ngc, karim2022unicon, Li2022SelCL}, continual learning~\citep{cha2021co2l}, long-tailed recognition~\citep{cui2021parametriccl, tian2021divide, jiang2021improving, tianhong2022targetedsupcon}, few-shot learning~\citep{gao2021fewshot}, partial label learning~\citep{wang2022pico}, novel class discovery~\citep{zhong2021ncl, zhao2021rankstat, fini2021unified}, hierarchical multi-label learning~\citep{shu2022hierarchical}. Under different circumstances, all works adopt different choices of the positive set, which is not limited to the self-augmented view in SimCLR~\citep{chen2020simclr}. 
Specifically, with label information available, SupCon~\citep{khosla2020supcon} improved representation quality by aligning features within the same class. Without supervision for the unlabeled data, ~\citet{dwibedi2021nncl} used the nearest neighbor as positive pair to learn a  compact embedding space. Different from prior works, we focus on the open-world representation learning problem, which is largely unexplored.

\vspace{0.1cm} \noindent \textbf{Novel category discovery.}
At an earlier stage, the problem of novel category discovery (NCD) is targeted as a transfer learning problem in DTC~\citep{Han2019dtc}, KCL~\citep{hsu2017kcl}, MCL~\citep{hsu2019mcl}. The learning is generally in a two-stage manner: the model is firstly trained with the labeled data and then transfers knowledge to learn the unlabeled data. OpenMix~\citep{zhong2021openmix} further proposes an end-to-end framework by mixing the seen and novel classes in a joint space. In recent studies, many researchers incorporate representation learning for NCD like RankStats~\citep{zhao2021rankstat}, NCL~\citep{zhong2021ncl} and UNO~\citep{fini2021unified}. In the ORL setting, the unlabeled test set consists of novel classes
but also classes are previously seen in the labeled data that need to be separated.

\vspace{0.1cm} \noindent \textbf{Semi-supervised learning.}  
A great number of early works~\citep{chapelle2006ssl, lee2013pseudo,sajjadi2016regularization,laine2016temporal,zhai2019s4l,rebuffi2020semi,alex2020fixmatch} have been proposed to tackle the problem of
semi-supervised learning (SSL). Typically, a standard cross-entropy loss is applied to the labeled data, and a consistency loss~\citep{laine2016temporal,alex2020fixmatch} or self-supervised loss~\citep{sajjadi2016regularization,zhai2019s4l,rebuffi2020semi} is applied to the unlabeled data. Under the closed-world assumption, SSL
methods achieve competitive performance which is close to the supervised methods. Later works ~\citep{oliver2018realistic, chen2020semi} point out that including novel classes in the unlabeled set can downgrade the performance. 
In ~\citet{guo2020dsl, chen2020semi, yu2020multi,park2021opencos, saito2021openmatch, huang2021trash, yang2022classaware}, OOD detection techniques are wielded to separate the OOD samples in the unlabeled data. Recent works~\citep{vaze22gcd, cao2022openworld, rizve2022openldn} further require the model to group samples from novel classes into semantically meaningful clusters. In our framework, we unify the novelty class detection and the representation learning and achieve competitive performance.

%


\section{Summary~}
\label{sec:opencon_summary}

This chapter provides a new learning framework, \textit{open-world contrastive learning} (OpenCon) that learns highly distinguishable representations for both known and novel classes in an open-world setting.  Our open-world setting can generalize traditional representation learning and offers stronger flexibility. We provide important insights that the separation between known vs. novel data in the unlabeled data and the pseudo supervision for data in novel classes is critical. Extensive
experiments show that OpenCon can notably improve the accuracy on both known and novel classes compared to the current best method ORCA. {As a shared challenge by all methods, one limitation is that the prototype number in our end-to-end training framework needs to be pre-specified.  An interesting future work may include the mechanism to dynamically estimate and adjust the class number during the training stage.}



\chapter{Conclusion~}
\label{sec:conclusion}

In conclusion, this thesis has provided significant contributions to the advancement of machine learning within open-world scenarios. Open-world learning, in contrast to traditional closed-world machine learning models, is confronted with novel, unseen data and contexts. This presents an unprecedented set of challenges that demand robust, innovative solutions.
The thesis tackled these challenges in two interconnected stages: Out-of-distribution (OOD) Detection and Open-world Representation Learning (ORL). 

The first stage, OOD detection, provides the foundation for identifying instances from previously unseen classes, thus reducing the risk of overconfident and potentially misleading predictions. We have developed pioneering methodologies, including ReACT (Chapter~\ref{sec:react}) and DICE (Chapter~\ref{sec:dice}), and introduced a non-parametric approach using K-nearest neighbor (KNN) distance (Chapter~\ref{sec:knn}), all of which enhance the effectiveness of OOD detection.

Building upon the OOD detection, the second stage, ORL, extends our capacity to learn from and incorporate knowledge about new classes. NSCL in Chapter~\ref{sec:nscl} and SORL in Chapter~\ref{sec:sorl} deepen our understanding of the complex interplay between known and unknown classes and the critical role of label information in shaping representations. This thesis answers these challenging questions, providing invaluable insights for the development of advanced ORL algorithms.
Moreover, we provided an empirical solution in the form of a comprehensive contrastive learning framework in Chapter~\ref{sec:opencon} for ORL, buttressed by theoretical interpretation from the Expectation-maximization perspective. This work is anticipated to significantly enhance machine learning's adaptability and reliability in open-world scenarios.

By navigating the complexities of open-world learning, this thesis contributes to shaping a new paradigm in machine learning, one that is responsive and adaptable to ever-evolving open-world contexts. The insights, methodologies, and theoretical developments presented here pave the way for future research in open-world machine learning, making strides toward unknown-aware and unknown-adaptable AI systems.

\section{Future Work~}

As we chart the future of open-world machine learning, there are several promising areas for further research. While this thesis has made significant strides in OOD detection and ORL, these areas of study are still in their nascent stages, and there are numerous unexplored avenues to investigate.
\begin{enumerate}
    \item \textbf{Fine-tuning of Models with Growing Data:} Future studies could consider more sophisticated techniques for fine-tuning machine learning models in the face of new data. This might involve developing dynamic models capable of evolving their parameters as they encounter new classes without forgetting existing knowledge, thereby boosting their adaptability and effectiveness in open-world scenarios.

    \item \textbf{Expanding the Scope of Open-world Learning:} Presently, open-world learning is most often associated with visual classification tasks. Future work can aim to extend this concept to other domains such as natural language processing, speech recognition, and recommendation systems.

    \item \textbf{Ethical and Legal Considerations:} As machine learning models become increasingly ubiquitous in society, future research must consider the ethical and legal aspects of open-world learning. This could involve developing frameworks to ensure fairness, transparency, and accountability in open-world learning models.

    \item \textbf{Real-world Testing and Implementation}: Lastly, while the results presented in this thesis exhibit considerable promise on benchmark datasets, the significant value lies in deploying and evaluating these models on real-world datasets. These datasets differ from benchmark collections in several ways; they may not have a balanced distribution of training samples across different classes and might not provide a clear separation between class sets. Implementing models in such environments helps identify practical challenges and areas for refinement that are not typically apparent in benchmark datasets. Conducting tests within this context, therefore, stands as a crucial step for future work, enabling us to fine-tune and validate our models under actual operating conditions and further improve their reliability and adaptability.
\end{enumerate}

Overall, the future of open-world machine learning holds vast potential. As this field of research progresses, we are likely to see even more innovative solutions and methodologies, moving us closer to truly intelligent, adaptive, and reliable AI systems.

\newpage
\appendix 

\chapter{Appendix for Out-of-distribution Detection~}

\section{ReAct: OOD Detection With
Rectified Activations~}
\label{sec:react_supp}

\subsection{Theoretical Details}
\label{sec:react_theory_details}

Here we derive \autoref{eq:react_ood_reduction} for $\epsilon \geq 0$ and $\sigma_\text{out} = \sigma > 0$. Since $\text{ESN}(\mu, \sigma^2, 0) = \mathcal{N}^R(\mu, \sigma)$, we can obtain \autoref{eq:react_id_reduction} for ID activation by specializing the result to $\epsilon = 0$. We begin with a useful lemma.

\begin{lemma}
\label{lem:esn_prob}
Let $X \sim \text{ESN}(0, \sigma^2, \epsilon)$ and let $a \leq b \leq 0$, $0 \leq c \leq d$. Then $\mathbb{P}(a \leq X \leq b) = (1+\epsilon) \left[ \Phi \left( \frac{b}{(1+\epsilon) \sigma} \right) - \Phi \left( \frac{a}{(1+\epsilon) \sigma} \right) \right]$ and $\mathbb{P}(c \leq X \leq d) = (1-\epsilon) \left[ \Phi \left( \frac{d}{(1-\epsilon) \sigma} \right) - \Phi \left( \frac{c}{(1-\epsilon) \sigma} \right) \right]$.
\end{lemma}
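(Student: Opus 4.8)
The proof of Lemma~\ref{lem:esn_prob} is a direct computation from the density in Equation~\eqref{eq:react_esn}, so the plan is simply to integrate the ESN density over the two stated ranges, handling the $x<\mu$ and $x\geq\mu$ branches separately, and then recognize each resulting integral as a difference of standard normal cdf values after a change of variables.

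First I would observe that, since $X \sim \esn(0,\sigma^2,\epsilon)$ (i.e., $\mu = 0$), the density specializes to $q(x) = \phi\!\left(\frac{x}{\sigma(1+\epsilon)}\right)/\sigma$ for $x<0$ and $q(x) = \phi\!\left(\frac{x}{\sigma(1-\epsilon)}\right)/\sigma$ for $x\geq 0$. For the left case $a \leq b \leq 0$, the whole interval $[a,b]$ lies in the region $x<0$, so
\[
\mathbb{P}(a\leq X\leq b) = \int_a^b \frac{1}{\sigma}\,\phi\!\left(\frac{x}{\sigma(1+\epsilon)}\right)dx.
\]
The substitution $u = \frac{x}{\sigma(1+\epsilon)}$, $du = \frac{dx}{\sigma(1+\epsilon)}$, turns this into $(1+\epsilon)\int_{a/(\sigma(1+\epsilon))}^{b/(\sigma(1+\epsilon))}\phi(u)\,du = (1+\epsilon)\left[\Phi\!\left(\frac{b}{(1+\epsilon)\sigma}\right) - \Phi\!\left(\frac{a}{(1+\epsilon)\sigma}\right)\right]$, which is the first claim. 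For the right case $0 \leq c \leq d$, the interval lies entirely in the region $x\geq 0$, and the identical substitution with $1-\epsilon$ in place of $1+\epsilon$ gives $\mathbb{P}(c\leq X\leq d) = (1-\epsilon)\left[\Phi\!\left(\frac{d}{(1-\epsilon)\sigma}\right) - \Phi\!\left(\frac{c}{(1-\epsilon)\sigma}\right)\right]$, the second claim.

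There is essentially no obstacle here; the only point requiring a word of care is that each interval must not straddle the breakpoint $x=0$, which is guaranteed by the hypotheses $b\leq 0$ and $c\geq 0$ respectively, so the single-branch density formula applies throughout the integration. One might also remark that, as a sanity check, taking $a\to-\infty$ and $d\to+\infty$ recovers $\mathbb{P}(X<0) = \tfrac{1+\epsilon}{2}$ and $\mathbb{P}(X\geq 0) = \tfrac{1-\epsilon}{2}$, consistent with the total mass being $1$ and with $\epsilon$ controlling the skewness; this confirms the normalization of the ESN density as stated.

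Once Lemma~\ref{lem:esn_prob} is in hand, the derivation of Equation~\eqref{eq:react_ood_reduction} proceeds by writing $z_i = \max(x_i,0)$ with $x_i \sim \esn(\mu,\sigma_\text{out}^2,\epsilon)$, shifting to the centered variable $x_i - \mu$, splitting $\mathbb{E}[z_i]$ and $\mathbb{E}[\min(z_i,c)]$ into integrals over $(-\infty,0)$, $[0,c)$ and $[c,\infty)$, applying the lemma (together with the analogous first-moment identities obtained by integrating $x\,q(x)$, which produce the $\phi$ terms) on each piece, and finally subtracting; specializing to $\epsilon = 0$, $\sigma_\text{out} = \sigma_\text{in}$ collapses the ESN back to the rectified Gaussian and recovers Equation~\eqref{eq:react_id_reduction}, as claimed in the text.
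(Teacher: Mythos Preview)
Your proof is correct and essentially identical to the paper's: both integrate the ESN density over the single relevant branch and identify the result as a difference of normal cdf values (the paper phrases the change of variables as recognizing the integrand as a $\mathcal{N}(0,(1+\epsilon)^2\sigma^2)$ density, which is the same thing). Your additional remarks on the breakpoint and the normalization sanity check are fine but go beyond what the paper records.
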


\begin{proof}
\begin{align*}
    \mathbb{P}(a \leq X \leq b) &= (1+\epsilon) \int_{x = a}^b \frac{1}{(1+\epsilon) \sigma} \phi \left( \frac{x}{(1+\epsilon) \sigma} \right) dx \\
    &= (1+\epsilon) \left[ \Phi \left( \frac{b}{(1+\epsilon) \sigma} \right) - \Phi \left( \frac{a}{(1+\epsilon) \sigma} \right) \right]
\end{align*}
since the integral that of a $\mathcal{N}(0, (1+\epsilon)^2 \sigma^2)$ distribution between $a$ and $b$. The result for $\mathbb{P}(c \leq X \leq d)$ follows analogously.
\end{proof}

Suppose that $X_\mu \sim \text{ESN}(\mu, \sigma^2, \epsilon)$ with $\mu > 0$ and let $Z_\mu = \max(X_\mu, 0)$. Define $X = X_\mu - \mu$ so that $Z_\mu = \max(X + \mu, 0) = \max(X, -\mu) + \mu$. We can derive the expectation of $Z := \max(X, -\mu)$:
\begin{align*}
    \mathbb{E}[Z] &= \underbrace{-\mu \cdot \mathbb{P}(X < -\mu)}_{\text{(I)}} + \underbrace{\int_{x = -\mu}^0 \frac{x}{\sigma} \phi \left( \frac{x}{(1+\epsilon) \sigma} \right) dx}_{\text{(II)}} + \underbrace{\int_{x = 0}^\infty \frac{x}{\sigma} \phi \left( \frac{x}{(1-\epsilon) \sigma} \right) dx}_{\text{(III)}}. \\
    \text{(I)} &= -\mu (1+\epsilon)\Phi \left(\frac{-\mu}{(1+\epsilon) \sigma}\right) \quad \text{by Lemma \ref{lem:esn_prob}}; \\
    \text{(II)} &= (1 + \epsilon) \int_{x = -\mu}^0 \frac{x}{1 + \epsilon) \sigma} \phi \left( \frac{x}{(1+\epsilon) \sigma} \right) dx = (1 + \epsilon)^2 \left[ \phi \left( \frac{-\mu}{(1+\epsilon) \sigma} \right) - \phi(0) \right] \sigma
\end{align*}
since the integral is the expectation of an un-normalized truncated Gaussian between $-\mu$ and $0$. Similarly, (III) is $(1-\epsilon)$ times the expectation of an un-normalized Gaussian between $0$ and $\infty$, thus $\text{(III)} = (1 - \epsilon)^2 \phi(0) \sigma$. Combining (I)-(III) gives:
\begin{align*}
    \mathbb{E}[Z] &= -\mu (1+\epsilon)\Phi \left(\frac{-\mu}{(1+\epsilon) \sigma}\right) + (1 + \epsilon)^2 \left[ \phi \left( \frac{-\mu}{(1+\epsilon) \sigma} \right) - \phi(0) \right] \sigma + (1 - \epsilon)^2 \phi(0) \sigma \\
    &= -\mu (1+\epsilon)\Phi \left(\frac{-\mu}{(1+\epsilon) \sigma}\right) + (1 + \epsilon)^2 \phi \left( \frac{-\mu}{(1+\epsilon) \sigma} \right) \sigma + \phi(0) \sigma [(1-\epsilon)^2 - (1 + \epsilon)^2] \\
    &= -\mu (1+\epsilon)\Phi \left(\frac{-\mu}{(1+\epsilon) \sigma}\right) + (1 + \epsilon)^2 \phi \left( \frac{-\mu}{(1+\epsilon) \sigma} \right) \sigma - 4 \epsilon \phi(0) \sigma \\
    &= -\mu (1+\epsilon)\Phi \left(\frac{-\mu}{(1+\epsilon) \sigma}\right) + (1 + \epsilon)^2 \phi \left( \frac{-\mu}{(1+\epsilon) \sigma} \right) \sigma - \frac{4 \epsilon}{\sqrt{2 \pi}} \sigma.
\end{align*}
\autoref{eq:react_ood_before_mean} follows since $\mathbb{E}[Z_\mu] = \mathbb{E}[Z] + \mu$. To derive \autoref{eq:react_ood_after_mean}, note that the expectation of $\bar{Z} := \min(Z, c - \mu)$ is given by:
\begin{equation*}
    \mathbb{E}[\bar{Z}] = \text{(I)} + \text{(II)} + \underbrace{\int_{x = 0}^{c - \mu} \frac{x}{\sigma} \phi \left( \frac{x}{(1-\epsilon) \sigma} \right) dx}_{\text{(IV)}} + \underbrace{(c - \mu) \cdot \mathbb{P}(X > c - \mu)}_{\text{(V)}}.
\end{equation*}
$\text{(IV)} = (1 - \epsilon)^2 \left[ \phi(0) - \phi \left( \frac{c-\mu}{(1-\epsilon) \sigma} \right) \right] \sigma$ follows from a similar argument as above, and
$$\text{(V)} = (c - \mu) (1 - \epsilon) \left[ 1 - \Phi \left( \frac{c - \mu}{(1 + \epsilon) \sigma} \right) \right]$$
can be derived using Lemma \ref{lem:esn_prob}. Combining (I),(II),(IV),(V) and observing that $\mathbb{E}[\min(Z_\mu, c)] = \mathbb{E}[\bar{Z}] + \mu$ gives \autoref{eq:react_ood_after_mean}.

\subsection{Ablation Study on Different Layers }
\label{sec:react_diff_layers}
We provide the activation patterns for intermediate layers in Figure~\ref{fig:react_diff_layers} and the OOD detection performance of applying ReAct to these layers in Table~\ref{tab:diff_layers}. In particular, there are four residual blocks in the original ResNet-50 network~\citep{he2016identity}. The four layers (denoted by \texttt{layer 1} - \texttt{layer 4}) are taken from the output of each residual block. Interestingly, early layers display less distinctive signatures between ID and OOD data and ReAct performs worse than the baseline~\citep{liu2020energy} when it is applied on \texttt{layer 1} - \texttt{layer 3}. This is expected because neural networks generally capture lower-level features in early layers (such as Gabor filters~\citep{zeiler2014visualizing}), whose activations can be very similar between ID and OOD. The semantic-level features only emerge as with deeper layers, where ReAct is the most effective. 

\begin{figure}[htb]
	\begin{center}
		\includegraphics[width=0.98\linewidth]{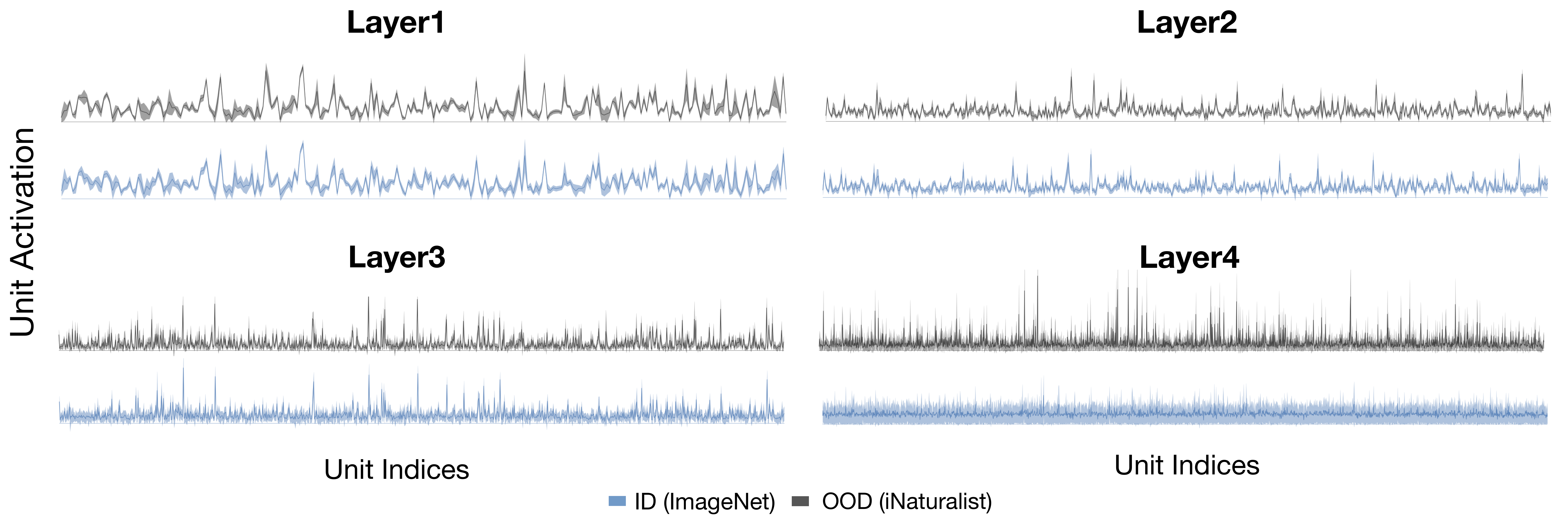}
	\end{center}
	\caption[The distribution of per-unit activations in the penultimate layer for ImageNet and iNaturalist on different layers.]{\small The distribution of per-unit activations in the penultimate layer for ID (ImageNet, blue) and OOD (iNaturalist, gray) on different layers. 
	Each layer (denoted by \texttt{layer 1} - \texttt{layer 4})  corresponds to the output of each residual block in ResNet-50~\citep{he2016identity}. 
	}
	\vspace{-0.4cm}
	\label{fig:react_diff_layers}
\end{figure}

\begin{table}[htb]
\caption[Ablation study of applying ReAct on different layers.]{ \small Ablation study of applying ReAct on different layers. We used ResNet-18~\citep{he2016deep} pre-trained on CIFAR-100 and ResNet-50 pre-trained on ImageNet. $\uparrow$ indicates larger values are better and $\downarrow$ indicates smaller values are better. All values are percentaged over multiple OOD test datasets described in Section~\ref{sec:react_common_benchmark} and Section~\ref{sec:react_imagenet}.}
\centering
\scalebox{0.85}{
\begin{tabular}{lllll}
\toprule
\multirow{2}{*}{Layers of applying ReAct} & \multicolumn{2}{c}{ID: CIFAR-100} & \multicolumn{2}{c}{ID: ImageNet} \\ \cline{2-5}
 & FPR95 $\downarrow$ & AUROC $\uparrow$ & FPR95 $\downarrow$ & AUROC $\uparrow$ \\
\midrule
Layer1 & 90.86 & 68.17 & 84.83 & 74.88 \\
Layer2 & 84.12 & 75.32 & 76.25 & 79.37 \\
Layer3 & 73.4 & 80.91 & 63.87 & 86.46 \\
Layer4 (ReAct) & \textbf{59.61} & \textbf{87.48} & \textbf{31.43} & \textbf{92.95} \\
\midrule
No ReAct~\citep{liu2020energy} & 71.93 & 82.82 & 58.41 & 86.17 \\ \bottomrule
\end{tabular}}
\label{tab:diff_layers}
\end{table}

\subsection{Using True BatchNorm Statistics on OOD Data}
\label{sec:react_bn_ood}
Typically, for a unit activation denoted by $z$, the network estimates the running mean $\mathbb{E}_\text{in}(z)$ and variance $\text{Var}_\text{in}(z)$, over the entire ID training set during training. During inference time, the network applies BatchNorm statistics~\citep{bn2015pmlr} $\mathbb{E}_\text{in}(z)$ and $\text{Var}_\text{in}(z)$, which helps normalize the activations for the test data with the same distribution $\mathcal{D}_\text{in}$:
\begin{align}
    \text{BatchNorm}(z;\gamma, \beta, \epsilon) = \frac{z-\mathbb{E}_\text{in}[z]}{\sqrt{\text{Var}_\text{in}[z]+\epsilon}}\cdot \gamma + \beta
\end{align}

However, our key observation is that using \emph{mismatched} BatchNorm statistics---that are estimated on $\mathcal{D}_\text{in}$ yet blindly applied to the OOD $\mathcal{D}_\text{out}$---can trigger abnormally high unit activations (see bottom of Figure~\ref{fig:react_trainval}). 
As a thought experiment, for OOD data, we instead apply the \emph{true} BatchNorm statistics estimated on a batch of OOD images:
\begin{align}
    \text{BatchNorm}(z;\gamma, \beta, \epsilon) = \frac{z-\mathbb{E}_\text{out}[z]}{\sqrt{\text{Var}_\text{out}[z]+\epsilon}}\cdot \gamma + \beta.
\end{align}
As a result, we observe well-behaved activation patterns with near-constant mean and standard deviations (see the top of Figure~\ref{fig:react_trainval}). Our study therefore reveals one of the fundamental causes for neural networks to produce overconfident predictions for OOD data. Despite the interesting observation, we note that estimating the true BN statistics for OOD poses a strong and impractical assumption of having access to a batch of OOD data during test time. In contrast, using ReAct does not operate under such an assumption and can be applied on any single OOD instance, as well as for neural networks trained with alternative normalization mechanisms (as we show in Section~\ref{sec:react_discussion}). 

\begin{figure}[htb]
	\begin{center}
		\includegraphics[width=0.7\linewidth]{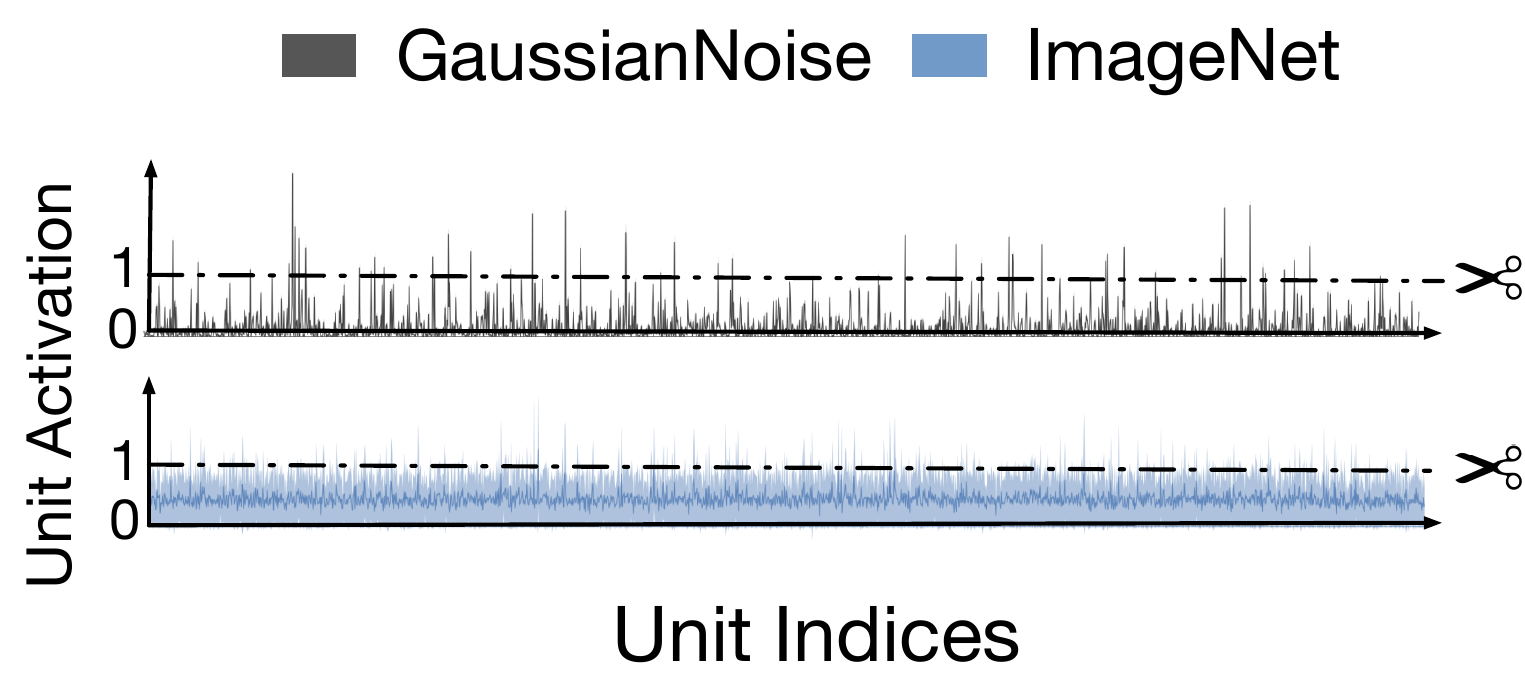}
	\end{center}
	\caption[The distribution of per-unit activations in the penultimate layer for Gaussian Noise and ImageNet.]{\small The distribution of per-unit activations in the penultimate layer for OOD data (Gaussian Noise) and ID data (ImageNet) on ResNet50.}
	\vspace{-0.4cm}
	\label{fig:react_noise_act}
\end{figure}

\subsection{Unit Activation Patterns for Gaussian Noise}
\label{sec:react_noise_act}
We provide the activation patterns for Gaussian noise input (our validation data) in Figure~\ref{fig:react_noise_act}. The experiment is based on ResNet-50 architecture~\citep{he2016identity}. We show that using Gaussian noise as input can lead to overly high unit activations, which is consistent with the observation in Figure~\ref{fig:react_teaser} and Figure~\ref{fig:react_abnormal}.

\begin{sidewaystable}
\caption[Detailed results on six common OOD benchmark datasets.]{\small Detailed results on six common OOD benchmark datasets: \texttt{Textures}~\citep{cimpoi2014describing}, \texttt{SVHN}~\citep{netzer2011reading}, \texttt{Places365}~\citep{zhou2017places}, \texttt{LSUN-Crop}~\citep{yu2015lsun}, \texttt{LSUN-Resize}~\citep{yu2015lsun}, and \texttt{iSUN}~\citep{xu2015turkergaze}. For each ID dataset, we use the same ResNet-18 architecture~\citep{he2016deep} and compare the performance with and without ReAct respectively. $\uparrow$ indicates larger values are better and $\downarrow$ indicates smaller values are better. }
\scalebox{0.60}{
\begin{tabular}{cllllllll}
\toprule
\multicolumn{1}{l}{\multirow{3}{*}{\textbf{ID Dataset}}} & \multicolumn{1}{c}{\multirow{3}{*}{\textbf{Methods}}} & \multirow{2}{*}{\textbf{SVHN}} & \multirow{2}{*}{\textbf{LSUN-Crop}} & \multirow{2}{*}{\textbf{LSUN-Resize}} & \multirow{2}{*}{\textbf{iSUN}} & \multirow{2}{*}{\textbf{Textures}} & \multirow{2}{*}{\textbf{Places365}} & \multirow{2}{*}{\textbf{Average}} \\
\multicolumn{1}{l}{} & \multicolumn{1}{c}{} & & & & & & & \\ \cline{3-9} 
\multicolumn{1}{l}{} & \multicolumn{1}{c}{} & \multicolumn{7}{c}{\textbf{FPR95 $\downarrow$ / AUROC $\uparrow$ / AUPR $\uparrow$ }} \\ \hline
\multirow{8}{*}{CIFAR-10} & MSP & 59.66/91.25/78.84 & 45.21/93.80/80.81 & 51.93/92.73/80.04 & 54.57/92.12/80.01 & 66.45/88.5/79.47 & 62.46/88.64/75.48 & 56.71/91.17/79.11 \\
 & MSP + ReAct & 57.15/91.69/91.77 & 46.37/93.33/93.00 & 46.32/93.61/93.37 & 50.02/92.96/93.26 & 62.85/89.31/92.59 & 60.15/89.28/88.65 & 53.81/91.70/92.11 \\
 & ODIN & 60.37/88.27/89.82 & 7.81/98.58/98.73 & 9.24/98.25/98.51 & 11.62/97.91/98.38 & 52.09/89.17/93.72 & 45.49/90.58/90.55 & 31.10/93.79/94.95 \\
 & ODIN+ReAct & 51.77/88.87/89.09 & 14.99/97.29/97.42 & 6.84/98.65/98.81 & 9.55/98.28/98.62 & 43.81/90.41/94.16 & 45.87/90.73/90.82 & 28.81/94.04/94.82 \\
 & Energy & 54.41/91.22/93.05 & 10.19/98.05/98.33 & 23.45/96.14/96.92 & 27.52/95.59/96.78 & 55.23/89.37/94.01 & 42.77/91.02/90.98 & 35.60/93.57/95.01 \\
 & Energy+ReAct & 49.77/92.18/93.67 & 16.99/97.11/97.48 & 17.94/96.98/97.56 & 20.84/96.46/97.38 & 47.96/91.55/95.40 & 43.97/91.33/91.66 & 32.91/94.27/95.53 \\ \midrule
\multirow{8}{*}{CIFAR-100} & MSP & 81.32/77.74/78.78 & 70.11/83.51/83.02 & 82.46/75.73/76.32 & 82.26/76.16/78.26 & 85.11/73.36/80.79 & 83.06/74.47/73.27 & 80.72/76.83/78.41 \\
 & MSP + ReAct & 74.17/82.3/85.58 & 73.1/82.47/85.25 & 74.73/80.81/83.77 & 73.49/81.45/85.60 & 74.82/80.37/89.03 & 82.37/74.99/76.43 & 75.45/80.4/84.28 \\
 & ODIN & 40.94/93.29/94.49 & 28.72/94.51/94.93 & 79.61/82.13/85.09 & 76.66/83.51/87.35 & 83.63/72.37/82.80 & 87.71/71.46/72.85 & 66.21/82.88/86.25 \\
 &  ODIN+ReAct & 22.87/95.63/96.13 & 36.61/91.45/91.20 & 75.02/85.53/88.42 & 70.21/86.51/89.89 & 66.79/82.69/89.52 & 87.94/69.57/70.01 & 59.91/85.23/87.53 \\
 & Energy & 81.74/84.56/88.39 & 34.78/93.93/94.77 & 73.57/82.99/85.57 & 73.36/83.80/87.40 & 85.87/74.94/84.12 & 82.23/76.68/77.40 & 71.93/82.82/86.28 \\
 & Energy+ReAct & 70.81/88.24/91.07 & 39.99/92.51/93.38 & 54.47/89.56/91.07 & 51.89/90.12/92.29 & 59.15/87.96/93.31 & 81.33/76.49/76.63 & 59.61/87.48/89.63  \\ \bottomrule
\end{tabular}
}
\label{tab:detail-results}
\end{sidewaystable}

\clearpage
\section{DICE: Leverage Sparsification for OOD Detection~}
\label{sec:dice_supp}

\subsection{Variance Reduction with Correlated Variables}
\label{sec:dice_corr}

\vspace{0.1cm} \noindent \textbf{Extension of Lemma 2.} We can show variance reduction in a more general case with correlated variables. The variance of output $f_c$ without sparsification is:
$$\mathrm{Var} [f_c] = \sum_{i=1}^{m} \sigma_i^2  + 2 \sum_{1\le i < j \le m} \mathrm{Cov}(v_i, v_j),$$
where $\mathrm{Cov}(\cdot ,\cdot )$ is the covariance. The expression states that the variance is the sum of the diagonal of the covariance matrix plus two times the sum of its upper triangular elements.

Similarly, the variance of output \emph{with} directed sparsification (by taking the top units) is:
$$\mathrm{Var} [f_c^\text{DICE}]= \sum_{i=t+1}^{m} \sigma_i^2  + 2 \sum_{t< i < j \le m} \mathrm{Cov}(v_i, v_j).$$
Therefore, the variance reduction is given by: %
$$\sum_{i=1}^{t} \sigma_i^2  + 2 \sum_{1\le i < j \le m} \mathrm{Cov}(v_i, v_j) - 2\sum_{t< i <j \le m} \mathrm{Cov}(v_i, v_j), $$

\begin{figure*}[ht]
	\begin{center}
		\includegraphics[width=0.7\linewidth]{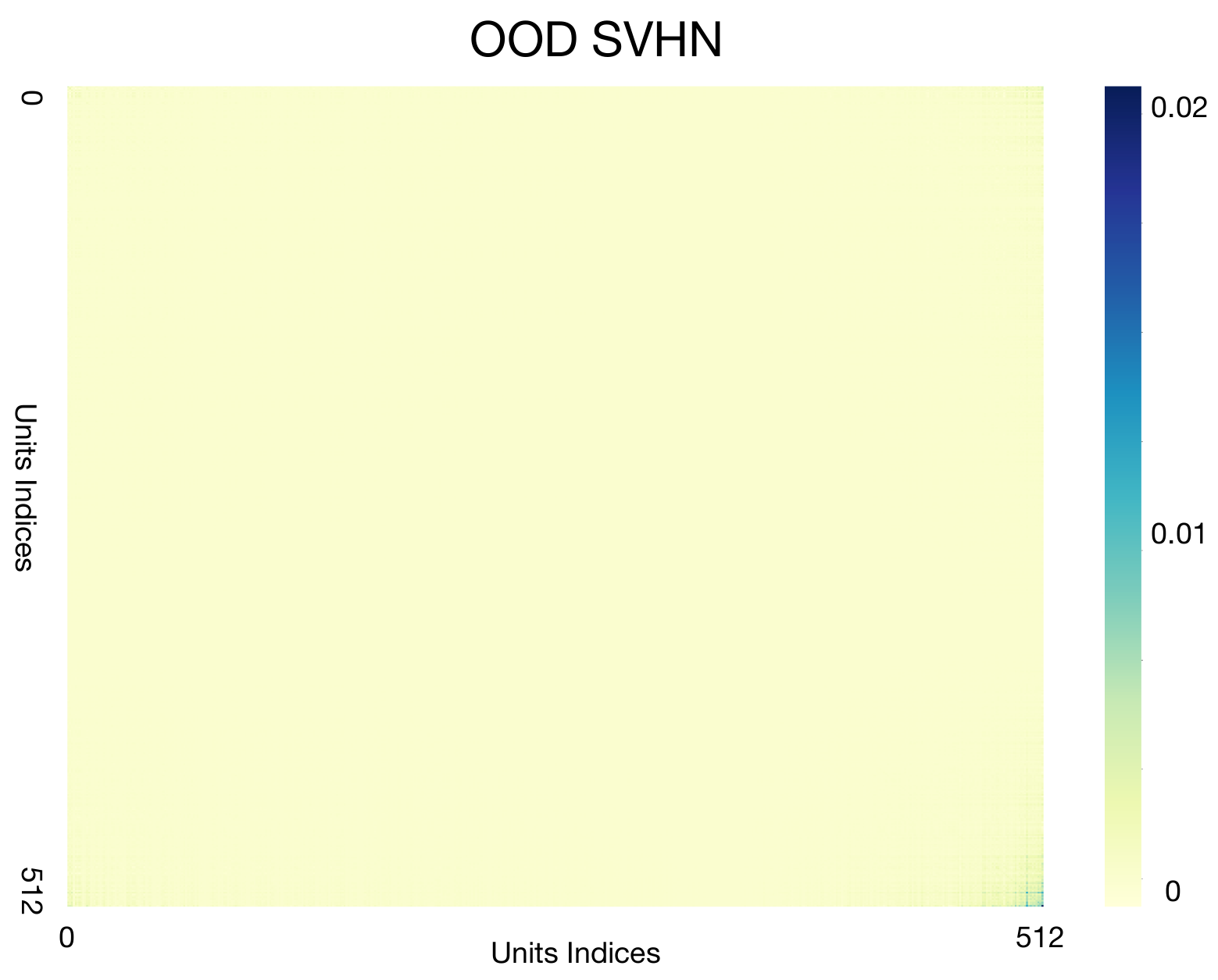}
	\end{center}
	\caption[Covariance matrix of unit contribution estimated on the OOD dataset SVHN.]{Covariance matrix of unit contribution estimated on the OOD dataset SVHN. Model is trained on ID dataset CIFAR-10. The unit indices are sorted from low to high, based on the expectation value of ID's unit contribution (airplane class, same as in Figure~\ref{fig:dice_whytopk}). The matrix primarily consists of elements with 0 value. }
	\label{fig:dice_covmat}
\end{figure*}

We show in Fig.~\ref{fig:dice_covmat} that the covariance matrix of unit contribution $v$ primarily consists of elements of 0, which indicates the independence of variables by large. The covariance matrix is estimated on the CIFAR-10 model with DenseNet-101, which is consistent with our main results in Table~\ref{tab:dice_main-results}. %

Moreover, the summation of non-zero entries in the full matrix (i.e., the second term) is greater than that of the submatrix with top units (i.e., the third term), resulting in a larger variance reduction than in Lemma~\ref{lemma:dicev3}. In the case of OOD data (SVHN), we empirically measure the  variance reduction, where
$\sum_{i=1}^{t} \sigma_i^2  + 2 \sum_{1\le i < j \le m} \mathrm{Cov}(v_i, v_j)$ equals to \textbf{6.8} and $2\sum_{t< i <j \le m} \mathrm{Cov}(v_i, v_j)$ equals to \textbf{2.2}. Therefore, DICE leads to a significant variance reduction effect.

\subsection{Validation Strategy}
\label{sec:dice_val}
We use a validation set of \texttt{Gaussian noise} images, which are generated by sampling from $\mathcal{N}(0,1)$ for each pixel location.
The optimal $p$ is selected from $\{0.1,0.3, 0.5,0.7,0.9,0.99\}$, which is $0.9$ for CIFAR-10/100 and $0.7$ for ImageNet.
We also show in Figure~\ref{fig:dice_sparsity} using Gaussian can already find the near-optimal one averaged over all OOD test datasets considered. 

\subsection{More results on the effect of Sparsity Parameter} 
\label{sec:dice_ood-sparsep}
We characterize  the effect of sparsity parameter $p$ on other ID datasets.  In Table~\ref{tab:dice_k-ablation}, we summarize the OOD detection performance and classification performance for DenseNet trained on CIFAR-10 and ImageNet, where we vary $p=\{0.1, 0.3, 0.5, 0.7, 0.9, 0.99\}$. A similar trend is observed on CIFAR-100 as discussed in the main paper.

\begin{table*}[htb] 
\caption[Effect of varying sparsity parameter $p$ on CIFAR-10 and ImageNet.]{\small Effect of varying sparsity parameter $p$. Results are averaged on the test datasets described in Section~\ref{sec:dice_experiments}.}
\label{tab:dice_k-ablation}
\centering
\footnotesize
\scalebox{0.9}{
\begin{tabular}{l|lll|lll}
\toprule
\multirow{2}{*}{\textbf{Sparsity}}  & \multicolumn{3}{c|}{\textbf{CIFAR-10}} & \multicolumn{3}{c}{\textbf{ImageNet}} \\
  & \textbf{FPR95} $\downarrow$ & \textbf{AUROC} $\uparrow$ & \textbf{Acc.} $\uparrow$ & \textbf{FPR95} $\downarrow$ & \textbf{AUROC} $\uparrow$ & \textbf{Acc.} $\uparrow$ \\ \midrule
$p=0.99$ & 57.57 & 84.29 & 60.81 & 75.79 &  66.07 & 63.28\\
$p=0.9$ & 21.76 & 94.91 & 94.38 & 40.10 & 89.09  & 73.36 \\
$p=0.7$ & 21.76 & 94.91 & 94.35 & 34.75 & 90.77  & 73.82 \\
$p=0.5$ & 21.76 & 94.91 & 94.35  & 34.58 & 90.80 & 73.80\\
$p=0.3$ & 21.75 & 94.91 & 94.35 & 34.70 & 90.69 & 73.57 \\
$p=0.1$ & 21.92 & 94.90 & 94.33 & 40.25 & 89.44 &  73.38 \\ \midrule
$p=0$  & 26.55 & 94.57 & 94.50 & 58.41 & 86.17 & 75.20\\ \bottomrule
\end{tabular}}

\end{table*}

\subsection{Detailed OOD Detection Performance for CIFAR}
\label{sec:dice_detailed-cifar}
We report the detailed performance for all six test OOD datasets for models trained on CIFAR10 and CIFAR-100 respectively in Table~\ref{tab:dice_detail-results-cifar10}
 and Table~\ref{tab:dice_detail-results-cifar100}.

\begin{sidewaystable}
\caption[Detailed results for sparsification baselines on six common OOD benchmark datasets.]{\small Detailed results on six common OOD benchmark datasets: \texttt{Textures}~\citep{cimpoi2014describing}, \texttt{SVHN}~\citep{netzer2011reading}, \texttt{Places365}~\citep{zhou2017places}, \texttt{LSUN-Crop}~\citep{yu2015lsun}, \texttt{LSUN-Resize}~\citep{yu2015lsun}, and \texttt{iSUN}~\citep{xu2015turkergaze}. For each ID dataset, we use the same DenseNet pretrained on \textbf{CIFAR-10}. $\uparrow$ indicates larger values are better and $\downarrow$ indicates smaller values are better.}
\scalebox{0.54}{
\begin{tabular}{llllllllllllllll} \toprule
\multirow{3}{*}{\textbf{Method Type}} & \multirow{3}{*}{\textbf{Method}} & \multicolumn{2}{c}{\textbf{SVHN}} & \multicolumn{2}{c}{\textbf{LSUN-c}} & \multicolumn{2}{c}{\textbf{LSUN-r}} & \multicolumn{2}{c}{\textbf{iSUN}} & \multicolumn{2}{c}{\textbf{Textures}} & \multicolumn{2}{c}{\textbf{Places365}} & \multicolumn{2}{c}{\textbf{Average}} \\ \cline{3-16}
 & & \textbf{FPR95} & \textbf{AUROC} & \textbf{FPR95} & \textbf{AUROC} & \textbf{FPR95} & \textbf{AUROC} & \textbf{FPR95} & \textbf{AUROC} & \textbf{FPR95} & \textbf{AUROC} & \textbf{FPR95} & \textbf{AUROC} & \textbf{FPR95} & \textbf{AUROC} \\ 
 & & $\downarrow$ & $\uparrow$ & $\downarrow$ & $\uparrow$ & $\downarrow$ & $\uparrow$ & $\downarrow$ & $\uparrow$ & $\downarrow$ & $\uparrow$ & $\downarrow$ & $\uparrow$ & $\downarrow$ & $\uparrow$ \\ \midrule
\multirow{5}{*}{Non-Sparse} & MSP  & 47.24 & 93.48 & 33.57 & 95.54 & 42.10 & 94.51 & 42.31 & 94.52 & 64.15 & 88.15 & 63.02 & 88.57 & 48.73 & 92.46 \\
 & ODIN  & 25.29 & 94.57 & 4.70 & 98.86 & 3.09 & 99.02 & 3.98 & 98.90 & 57.50 & 82.38 & 52.85 & 88.55 & 24.57 & 93.71 \\
 & GODIN  & 6.68 & 98.32 & 17.58 & 95.09 & 36.56 & 92.09 & 36.44 & 91.75 & 35.18 & 89.24 & 73.06 & 77.18 & 34.25 & 90.61 \\
 & Mahalanobis  & 6.42 & 98.31 & 56.55 & 86.96 & 9.14 & 97.09 & 9.78 & 97.25 & 21.51 & 92.15 & 85.14 & 63.15 & 31.42 & 89.15 \\
 & Energy  & 40.61 & 93.99 & 3.81 & 99.15 & 9.28 & 98.12 & 10.07 & 98.07 & 56.12 & 86.43 & 39.40 & 91.64 & 26.55 & 94.57 \\ 
 & ReAct  & 41.64 & 93.87 & 5.96 & 98.84 & 11.46 & 97.87 & 12.72 & 97.72 & 43.58 & 92.47 & 43.31 & 91.03 & 26.45 & 94.67 \\
  \midrule
\multirow{5}{*}{Sparse} & Unit-Droput  & 89.16 & 60.96 & 72.97 & 81.33 & 87.03 & 68.78 & 87.29 & 68.07 & 88.53 & 60.10 & 94.82 & 59.18 & 86.63 & 66.40 \\
 & Weight-Droput & 81.34 & 80.03 & 21.06 & 96.15 & 54.70 & 90.33 & 58.88 & 89.80 & 83.34 & 73.31 & 73.42 & 81.10 & 62.12 & 85.12 \\
 & Unit-Pruning & 40.56 & 93.99 & 3.81 & 99.15 & 9.28 & 98.12 & 10.07 & 98.07 & 56.1 & 86.43 & 39.47 & 91.64 & 26.55 & 94.57  \\
 & Weight-Pruning & 28.61 & 95.40 & 3.01 & 99.30 & 8.58 & 98.19 & 9.08 & 98.16 & 49.45 & 88.20 & 46.78 & 89.77 & 24.25 & 94.84 \\
 & {DICE} (ours) 
 & 25.99$^{\pm{5.10}}$ & 95.90$^{\pm{1.08}}$ & 0.26$^{\pm{0.11}}$ & 99.92$^{\pm{0.02}}$ & 3.91$^{\pm{0.56}}$ & 99.20$^{\pm{0.15}}$ & 4.36$^{\pm{0.71}}$ & 99.14$^{\pm{0.15}}$ & 41.90$^{\pm{4.41}}$ & 88.18$^{\pm{1.80}}$ & 48.59$^{\pm{1.53}}$ & 89.13$^{\pm{0.31}}$ & 20.83$^{\pm{1.58}}$ & 95.24$^{\pm{0.24}}$ 
\\ \bottomrule
\end{tabular}}

\label{tab:dice_detail-results-cifar10}
\end{sidewaystable}

\begin{sidewaystable}
\caption[Detailed results for OOD detection baselines on six common OOD benchmark datasets.]{\small Detailed results on six common OOD benchmark datasets: \texttt{Textures}~\citep{cimpoi2014describing}, \texttt{SVHN}~\citep{netzer2011reading}, \texttt{Places365}~\citep{zhou2017places}, \texttt{LSUN-Crop}~\citep{yu2015lsun}, \texttt{LSUN-Resize}~\citep{yu2015lsun}, and \texttt{iSUN}~\citep{xu2015turkergaze}. For each ID dataset, we use the same DenseNet pretrained on \textbf{CIFAR-100}. $\uparrow$ indicates larger values are better and $\downarrow$ indicates smaller values are better. }
\scalebox{0.53}{
\begin{tabular}{llllllllllllllll} \toprule
\multirow{3}{*}{\textbf{Method Type}} & \multirow{3}{*}{\textbf{Method}} & \multicolumn{2}{c}{\textbf{SVHN}} & \multicolumn{2}{c}{\textbf{LSUN-c}} & \multicolumn{2}{c}{\textbf{LSUN-r}} & \multicolumn{2}{c}{\textbf{iSUN}} & \multicolumn{2}{c}{\textbf{Textures}} & \multicolumn{2}{c}{\textbf{Places365}} & \multicolumn{2}{c}{\textbf{Average}} \\ \cline{3-16}
 & & \textbf{FPR95} & \textbf{AUROC} & \textbf{FPR95} & \textbf{AUROC} & \textbf{FPR95} & \textbf{AUROC} & \textbf{FPR95} & \textbf{AUROC} & \textbf{FPR95} & \textbf{AUROC} & \textbf{FPR95} & \textbf{AUROC} & \textbf{FPR95} & \textbf{AUROC} \\ 
 & & $\downarrow$ & $\uparrow$ & $\downarrow$ & $\uparrow$ & $\downarrow$ & $\uparrow$ & $\downarrow$ & $\uparrow$ & $\downarrow$ & $\uparrow$ & $\downarrow$ & $\uparrow$ & $\downarrow$ & $\uparrow$ \\ \midrule
\multirow{5}{*}{\textbf{Non-Sparse}} & MSP & 81.70 & 75.40 & 60.49 & 85.60 & 85.24 & 69.18 & 85.99 & 70.17 & 84.79 & 71.48 & 82.55 & 74.31 & 80.13 & 74.36 \\
 & ODIN & 41.35 & 92.65 & 10.54 & 97.93 & 65.22 & 84.22 & 67.05 & 83.84 & 82.34 & 71.48 & 82.32 & 76.84 & 58.14 & 84.49 \\
 & GODIN & 36.74 & 93.51 & 43.15 & 89.55 & 40.31 & 92.61 & 37.41 & 93.05 & 64.26 & 76.72 & 95.33 & 65.97 & 52.87 & 85.24 \\ 
 & Mahalanobis & 22.44 & 95.67 & 68.90 & 86.30 & 23.07 & 94.20 & 31.38 & 93.21 & 62.39 & 79.39 & 92.66 & 61.39 & 55.37 & 82.73 \\
 & Energy & 87.46 & 81.85 & 14.72 & 97.43 & 70.65 & 80.14 & 74.54 & 78.95 & 84.15 & 71.03 & 79.20 & 77.72 & 68.45 & 81.19 \\ 
 & ReAct & 83.81 & 81.41 & 25.55 & 94.92 & 60.08 & 87.88 & 65.27 & 86.55 & 77.78 & 78.95 & 82.65 & 74.04 & 62.27 & 84.47 \\ \midrule
 \multirow{5}{*}{\textbf{Sparse}} & Unit-Droput & 91.43 & 54.71 & 56.24 & 85.25 & 91.06 & 57.79 & 90.88 & 57.90 & 89.59 & 54.57 & 94.15 & 56.15 & 85.56 & 61.06 \\
 & Weight-Droput & 92.97 & 64.39 & 18.96 & 95.62 & 88.67 & 65.48 & 87.12 & 67.82 & 88.45 & 64.38 & 88.69 & 71.87 & 77.48 & 71.59 \\
 & Unit-Pruning & 87.52 & 81.83 & 14.73 & 97.43 & 70.62 & 80.18 & 74.46 & 79.00 & 84.20 & 71.02 & 79.32 & 77.70 & 68.48 & 81.19 \\
 & Weight-Pruning & 77.99 & 84.14 & 5.17 & 99.05 & 59.42 & 87.13 & 61.80 & 86.09 & 72.68 & 73.85 & 82.53 & 75.06 & 59.93 & 84.22 \\
 & {DICE} (ours) 
  & 54.65$^{\pm{4.94}}$ & 88.84$^{\pm{0.39}}$ & 0.93$^{\pm{0.07}}$ & 99.74$^{\pm{0.01}}$ & 49.40$^{\pm{1.99}}$ & 91.04$^{\pm{1.49}}$ & 48.72$^{\pm{1.55}}$ & 90.08$^{\pm{1.36}}$ & 65.04$^{\pm{0.66}}$ & 76.42$^{\pm{0.35}}$ & 79.58$^{\pm{2.34}}$ & 77.26$^{\pm{1.08}}$ & 49.72$^{\pm{1.69}}$ & 87.23$^{\pm{0.73}}$ 
\\ \bottomrule
\end{tabular}}

\label{tab:dice_detail-results-cifar100}
\end{sidewaystable}

\clearpage
\section{OOD Detection with Deep Nearest Neighbors~}
\label{sec:knn_supp}


\subsection{Theoretical Analysis}
\label{sec:knn_theory_sup}

\vspace{0.1cm} \noindent \textbf{Proof of Theorem~\ref{th:knn2bayes}.} We now provide the proof sketch for readers to understand the key idea, which revolves around performing the empirical estimation of the probability $\hat{p}(g_i = 1|\*z_i)$. By the Bayesian rule, the probability of $\*z$ being ID data is:
\begin{align*}
     p(g_i = 1|\*z_i) &= 
     \frac{p(\*z_i| g_i = 1) \cdot p(g_i = 1)}{p(\*z_i)} \\ 
     &=\frac{p_{in}(\*z_i)\cdot p(g_i = 1)}
     {p_{in}(\*z_i)\cdot p(g_i = 1) + p_{out}(\*z_i)\cdot p(g_i = 0)} \\
    \hat{p}(g_i = 1|\*z_i) &=   \frac{(1-\varepsilon) \hat p_{in}(\*z_i)}{(1-\varepsilon){\hat p_{in}(\*z_i) + \varepsilon\hat p_{out}(\*z_i)}}.
\end{align*}
Hence, estimating $\hat p(g_i = 1|\*z_i)$ boils down to deriving the empirical estimation of $\hat p_{in}(\*z_i)$ and $\hat p_{out}(\*z_i)$, which we show below respectively.

\vspace{0.1cm} \noindent \textbf{Estimation for $\hat p_{in}(\*z_i)$.}
Recall that $\*z$ is a normalized feature vector in $\mathbb{R}^{m}$. Therefore $\*z$ locates on the surface of a $m$-dimensional unit sphere. 
We denote $B(\*z, r) = \{\*z':\lVert\*z'-\*z\rVert_2 \le r\} \cap \{\lVert\*z'\rVert_2 = 1\}$, which is a set of data points on the unit hyper-sphere and are at most $r$ Euclidean distance away from the center $\*z$. Note that the local dimension of $B(\*z, r)$ is $m-1$. 

Assuming the density satisfies Lebesgue's differentiation theorem, the probability density function can be attained by: 
$$ 
p_{in}(\*z_i) = \lim_{r \rightarrow 0} \frac{p(\*z \in B(\*z_i, r)|g_i = 1)}{|B(\*z_i, r)|}.
$$
In training time, we empirically observe $n$ in-distribution samples $\mathcal{Z}_n = \{\*z'_1, \*z'_2, ..., \*z'_n \}$. We assume each sample $\*z'_j$ is \emph{i.i.d} with a probability mass $\frac{1}{n}$. The empirical point-density for the ID data can be estimated by $k$-NN distance:
\begin{align*}
 \hat{p}_{in}(\*z_i;k, n) &= \frac{p(\*z'_j \in B(\*z_i, r_k(\*z_i))|\*z'_j \in \mathcal{Z}_n)}{|B(\*z_i, r_k(\*z_i))|} \\
 &= \frac{k}{c_bn(r_k(\*z_i))^{m-1}}, 
\end{align*}
where $c_b$ is a constant. The following Lemma~\ref{lemma:l1bound} establishes the convergence rate of the estimator.

\begin{lemma}
\label{lemma:l1bound}
$$\lim_{\frac{k}{n} \rightarrow 0} \hat{p}_{in}(\*z_i;k, n) = p_{in}(\*z_i).$$ 
Specifically,
$$\mathbb{E}[|\hat{p}_{in}(\*z_i;k, n) - p_{in}(\*z_i)|] = o(\sqrt[m-1]{\frac{k}{n}}+ \sqrt{\frac{1}{k}}).$$
\end{lemma}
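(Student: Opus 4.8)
\textbf{Proof proposal for Lemma~\ref{lemma:l1bound}.}

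The plan is to analyze the bias and variance of the $k$-NN density estimator $\hat p_{in}(\*z_i; k, n) = k / (c_b n (r_k(\*z_i))^{m-1})$ separately, since the $L_1$ error can be bounded by $\mathbb{E}[|\hat p_{in} - p_{in}|] \le |\mathbb{E}[\hat p_{in}] - p_{in}| + \sqrt{\mathrm{Var}[\hat p_{in}]}$ via Jensen's inequality. The key object to control is the random radius $r_k(\*z_i)$: the number of sample points falling inside the geodesic ball $B(\*z_i, r)$ is $\mathrm{Binomial}(n, P_{in}(B(\*z_i,r)))$, where $P_{in}(B(\*z_i,r)) = \int_{B(\*z_i,r)} p_{in} \, d\mu \approx p_{in}(\*z_i) \cdot c_b r^{m-1}$ for small $r$ by Lebesgue differentiation. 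The event $\{r_k(\*z_i) \le r\}$ is exactly the event that this Binomial is at least $k$. First I would set $r^*$ to be the deterministic radius solving $p_{in}(\*z_i) c_b (r^*)^{m-1} = k/n$, i.e. $r^* = \sqrt[m-1]{k/(n c_b p_{in}(\*z_i))}$, so that $\hat p_{in}$ would equal $p_{in}(\*z_i)$ exactly if $r_k(\*z_i) = r^*$. The task then reduces to showing $r_k(\*z_i)$ concentrates around $r^*$.

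The main steps, in order: (1) Taylor-expand $P_{in}(B(\*z_i, r)) = p_{in}(\*z_i) c_b r^{m-1}(1 + O(r))$ using smoothness of $p_{in}$ (a standard regularity assumption on the density), so that the leading-order behavior is a clean power law and the $O(r)$ correction contributes the lower-order bias term. (2) Use binomial concentration (Chernoff/Bernstein bounds) on the count $N(r) = \#\{j : \*z_j' \in B(\*z_i, r)\}$: since $\mathbb{E}[N(r^*)] \approx k$, we get $|N(r^*) - k| = O_P(\sqrt{k})$, which transfers to a relative fluctuation of $r_k(\*z_i)$ around $r^*$ of order $\sqrt{1/k}$ through the inverse relation $r_k \approx r^* (k/N(r^*))^{1/(m-1)}$. (3) Combine: the variance of $\hat p_{in}$ scales like $p_{in}(\*z_i)^2 \cdot (1/k)$, giving a standard-deviation contribution $O(\sqrt{1/k})$; the bias has two pieces — one of order $\sqrt{1/k}$ from the fluctuation's second-order effect, and one of order $r^* = O(\sqrt[m-1]{k/n})$ from the curvature/smoothness correction in step (1). (4) Assemble the two terms to conclude $\mathbb{E}[|\hat p_{in}(\*z_i;k,n) - p_{in}(\*z_i)|] = o(\sqrt[m-1]{k/n} + \sqrt{1/k})$, and note that under $k/n \to 0$ and $k \to \infty$ both terms vanish, yielding the consistency claim $\lim \hat p_{in} = p_{in}$.

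I would lean on the fact that the feature vectors live on the unit hypersphere, so $B(\*z_i, r)$ has intrinsic dimension $m-1$ and its volume scales as $c_b r^{m-1}$ — this is why the exponent $m-1$ (rather than $m$) appears throughout, and it must be used consistently in the volume expansion and the inversion in step (2). A subtle point worth being careful about is the behavior near the boundary of the support or where $p_{in}(\*z_i)$ is very small; I would either restrict attention to points with $p_{in}(\*z_i)$ bounded away from zero (interior of the high-density region) or absorb such points into the $o(\cdot)$ via the assumed regularity, since the theorem is stated pointwise at a fixed $\*z_i$.

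The hard part will be step (2)–(3): making the passage from concentration of the \emph{count} $N(r^*)$ to concentration of the \emph{radius} $r_k(\*z_i)$ fully rigorous, including handling the regime where $N(r^*)$ could be atypically small (so $r_k$ is large and the Taylor expansion degrades), and then propagating this through the nonlinear map $r \mapsto k/(c_b n r^{m-1})$ to get an $L_1$ (not just high-probability) bound. This requires a careful truncation argument — splitting the expectation over the good event $\{|N(r^*) - k| \le t\sqrt k\}$ and its complement, bounding the tail contribution using binomial tail bounds, and optimizing the threshold $t$ — which is the standard but somewhat delicate core of $k$-NN density estimation consistency proofs (cf. classical results of Loftsgaarden–Quesenberry and Moore–Yackel). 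Everything else (the volume computation, Jensen, the final assembly) is routine bookkeeping once this concentration transfer is in place.
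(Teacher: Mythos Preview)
The paper does not give a proof of this lemma; it defers entirely to the reference \cite{zhao2020analysis} with the single line ``The proof is given in~\cite{zhao2020analysis}.'' So there is nothing to compare against in the paper itself.

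Your proposal is a from-scratch outline of the classical $k$-NN density-estimation consistency argument (Loftsgaarden--Quesenberry, Moore--Yackel), and the architecture is sound: volume expansion $P_{in}(B(\*z_i,r)) \approx p_{in}(\*z_i)\, c_b r^{m-1}$ on the $(m{-}1)$-sphere, binomial concentration on the count $N(r^*)$ at the scale $r^*$ solving $p_{in}(\*z_i) c_b (r^*)^{m-1} = k/n$, transfer of that concentration to the random radius $r_k$, and bias--variance assembly. You correctly identify the truncation/tail argument in steps (2)--(3) as the delicate core.

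One caveat worth flagging: as written, your steps naturally yield $O\big(\sqrt[m-1]{k/n} + \sqrt{1/k}\big)$, not the $o(\cdot)$ stated in the lemma. Getting little-$o$ on the first term requires upgrading the Lebesgue-differentiation assumption to continuity of $p_{in}$ at $\*z_i$ (so the smoothness correction is $o(r^*)$ rather than $O(r^*)$), and on the second term requires that the implicit constant in the variance bound vanish in the limit --- neither is automatic from your outline. This may simply be a loose statement in the paper inherited from the cited work, but if you intend to produce a self-contained proof matching the lemma as stated, you should check which regularity assumptions the reference actually imposes.
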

The proof is given in~\cite{zhao2020analysis}. 

\vspace{0.1cm} \noindent \textbf{Estimation for $\hat p_{out}(\*z_i)$.}
A key challenge in OOD detection is the lack of knowledge on OOD distribution, which can arise universally outside ID data. We thus try to keep our analysis general and reflect the fact that we do not have any strong prior information about OOD. For this reason, we model OOD data with an equal chance to appear outside of the high-density region of ID data. Our theory is thus complementary to our experiments and captures the universality of OOD data. 
Specifically, we denote $$\hat{p}_{out}(\*z_i) = \hat{c}_{0}\mathbf{1}\{\hat{p}_{in}(\*z_i;k, n) < \frac{\beta\varepsilon\hat{c}_{0}}{(1-\beta)(1-\varepsilon)}\}$$
where the threshold is chosen to satisfy the theorem. 

Lastly, our theorem holds by plugging in the empirical estimation of $\hat p_{in}(\*z_i)$ and $\hat p_{out}(\*z_i)$. 

\begin{proof}

\begin{align*}
    &\mathbf{1}\{-r_k(\*z_i) \ge \lambda\} \\
    &~~~~= \mathbf{1}\{\varepsilon c_b n \hat{c}_{0}(r_k(\*z_i))^{m-1} \le \frac{1-\beta}{\beta}(1-\varepsilon)k\} \\
    &~~~~= \mathbf{1}\{\varepsilon c_b n \hat{c}_{0}\mathbf{1}\{\varepsilon c_b n \hat{c}_{0} (r_k(\*z_i))^{m-1} > \frac{1-\beta}{\beta}(1-\varepsilon)k\}(r_k(\*z_i))^{m-1} \le \frac{1-\beta}{\beta}(1-\varepsilon)k\} \\
    &~~~~= \mathbf{1}\{\varepsilon c_b n \hat{c}_{0} \mathbf{1}\{\hat{p}_{in}(\*z_i;k, n) < \frac{\beta\varepsilon\hat{c}_{0}}{(1-\beta)(1-\varepsilon)}\}(r_k(\*z_i))^{m-1} \le \frac{1-\beta}{\beta}(1-\varepsilon)k\} \\
    &~~~~= \mathbf{1}\{\varepsilon c_b n  \hat{p}_{out}(\*z_i)(r_k(\*z_i))^{m-1} \le \frac{1-\beta}{\beta}(1-\varepsilon)k\} \\
    &~~~~= \mathbf{1}\{\frac{k(1 - \varepsilon)}{k(1 - \varepsilon) + \varepsilon c_b n\hat{p}_{out}(\*z_i)(r_k(\*z_i))^{m-1}} \ge \beta\} \\
    &~~~~= \mathbf{1}\{\hat{p}(g_i = 1|\*z_i) \ge \beta\}
\end{align*}

\end{proof}

\subsection{Configurations}
\label{sec:knn_config}

\vspace{0.1cm} \noindent \textbf{Non-parametric methods for anomaly detection.} We provide implementation details of the non-parametric methods in this section. Specifically,

\texttt{\textbf{IForest}}~\citep{liu2008iforest} generates a random forest assuming the test anomaly can be isolated in fewer steps. We use 100 base estimators in the ensemble and each estimator draws 256 samples randomly for training. The number of features to train each base estimator is set to 512.   

\texttt{\textbf{LOF}}~\citep{breunig2000lof} defines an outlier score based on the sample’s $k$-NN distances. We set $k=50$. 

\texttt{\textbf{LODA}}~\citep{2016loda} is an ensemble solution combining multiple weaker binary classifiers. The number of bins for the histogram is set to 10. 

\texttt{\textbf{PCA}}~\citep{shyu2003pca} detects anomaly samples with large values when mapping to the directions with small eigenvalues. We use 50 components for calculating the outlier scores. 

\texttt{\textbf{OCSVM}}~\citep{bernhard2001ocsvm} learns a decision
boundary that corresponds to the desired density level set
of with the kernel function. We use the RBF kernel with $\gamma=\frac{1}{512}$. The upper bound on the fraction of training error is set to 0.5. 

Some of these methods~\citep{bernhard2001ocsvm, shyu2003pca} are specifically designed for anomaly detection scenarios that assume ID data is from one class. We show that $k$-NN distance with the class-aware embeddings can achieve both OOD detection and multi-class classification tasks.

\subsection{Results on Different Architecture}
\label{sec:knn_other_arc}
In the main paper, we have shown that the nearest neighbor approach is competitive on ResNet. In this section, we show in Table~\ref{tab:knn_other-arc} that KNN's strong performance holds on different network architectures DenseNet-101~\citep{huang2017densely}. All the numbers reported are averaged over OOD test datasets described in Section~\ref{sec:knn_common_benchmark}. %

\begin{table*}[b] 
\centering
\vspace{-0.3cm}
\caption[Results on CIFAR-10/100 with DenseNet-101.]{\small \textbf{Comparison results with DenseNet-101.} Comparison with competitive out-of-distribution detection methods. All methods are based on a model trained on {ID data only}.  All values are percentages and are averaged over all OOD test datasets.}
\footnotesize
\scalebox{0.9}{
\begin{tabular}{l|lll|lll}
\toprule
\multirow{2}{*}{\textbf{Method}}  & \multicolumn{3}{c|}{\textbf{CIFAR-10}} & \multicolumn{3}{c}{\textbf{CIFAR-100}} \\
  & \textbf{FPR95} $\downarrow$ & \textbf{AUROC} $\uparrow$ & \textbf{ID ACC} $\uparrow$ & \textbf{FPR95} $\downarrow$ & \textbf{AUROC} $\uparrow$ & \textbf{ID ACC} $\uparrow$ \\ \midrule
MSP & 49.95 & 92.05 & 94.38 & 79.10 & 75.39 & 75.08 \\ 
Energy & 30.16 & 92.44 & 94.38 & 68.03 & 81.40 & 75.08 \\ 
ODIN & 30.02 & 93.86 & 94.38 & 55.96 & 85.16 & 75.08 \\ 
Mahalanobis & 35.88 & 87.56 & 94.38 & 74.57 & 66.03 & 75.08 \\ 
GODIN & 28.98 & 92.48 & 94.22 & 55.38 & 83.76 & 74.50 \\ 
CSI & 70.97 & 78.42 & 93.49 & 79.13 & 60.41 & 68.48 \\ 
SSD+ & 16.21 & 96.96 &  \textbf{94.45} &  43.44 & 88.97  &  \textbf{75.21}\\ 
KNN+ & \textbf{12.16} & \textbf{97.58} & \textbf{94.45}  & \textbf{37.27} & \textbf{89.63} & \textbf{75.21} \\
  \bottomrule
\end{tabular}}
        \vspace{-0.2cm}
        \label{tab:knn_other-arc}
\end{table*}

\chapter{Appendix for Open-world Representation Learning~}

\section{When and How Does Known Class Help Discover Unknown Ones? A Spectral Analysis~}
\label{sec:nscl_supp}

\subsection{Proof Details for Section~\ref{sec:nscl_method}}
\label{sec:nscl_sup_method_proof}

\subsubsection{Bound Linear Probing Error by Regression Residual}
\label{sec:nscl_sup_cls_bound}

\begin{lemma} (Recap of Lemma~\ref{lemma:nscl_cls_bound})
Denote by $\mathbf{y}(x) \in \mathbb{R}^{|\mathcal{Y}_u|}$  a one-hot vector, whose $y(x)$-th position is 1 and 0 elsewhere. Let
$\mathbf{Y} \in \mathbb{R}^{N_u \times |\mathcal{Y}_u|}$ be a matrix whose rows are stacked by $\mathbf{y}(x)$. We have: 
    $$\mathcal{R}(U^*) \triangleq \underset{{\*M}\in \mathbb{R}^{k \times |\mathcal{Y}_u|}}{\operatorname{min}} \|\mathbf{Y} - U^* \*M \|^2_F \geq \frac{1}{2}\mathcal{E}(f) $$
\label{lemma_sup:cls_bound}
\end{lemma}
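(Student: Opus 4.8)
The goal is to connect the combinatorial quantity $\mathcal{E}(f)$ — the number of misclassified unlabeled points under the best linear head — with the algebraically tractable least-squares residual $\mathcal{R}(U^*)$. The natural approach is to pass from the $0/1$ classification error to a squared-loss surrogate, exploiting the fact that squared loss upper-bounds $0/1$ loss (up to a constant) on the relevant one-hot targets. Concretely, I would fix the minimizer $\*M^\star$ attaining $\mathcal{R}(U^*) = \|\mathbf{Y} - U^* \*M^\star\|_F^2$, and for each unlabeled point $x$ compare its predicted score vector $\*s(x) = (U^*\*M^\star)_x \in \mathbb{R}^{|\mathcal{Y}_u|}$ against the one-hot target $\mathbf{y}(x)$. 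The linear classifier induced by $\*M^\star$ predicts $h(x;f,\*M^\star) = \operatorname{argmax}_i \*s(x)_i$.

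The key pointwise inequality is the following: if $h(x;f,\*M^\star) \neq y(x)$, then $\|\*s(x) - \mathbf{y}(x)\|_2^2 \ge \tfrac12$. This is the crux. To see it, suppose the predicted label is some $j \ne y(x)$. Then $\*s(x)_j \ge \*s(x)_{y(x)}$. Writing $a = \*s(x)_{y(x)}$ and $b = \*s(x)_j$, we have $b \ge a$, and the contribution of just these two coordinates to $\|\*s(x) - \mathbf{y}(x)\|_2^2$ is $(a-1)^2 + b^2 \ge (a-1)^2 + a^2$, which is minimized at $a = \tfrac12$ with value $\tfrac12$. Hence $\|\*s(x) - \mathbf{y}(x)\|_2^2 \ge (a-1)^2 + b^2 \ge \tfrac12$. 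Summing over all misclassified $x$ and then dropping the (nonnegative) contributions of correctly classified points gives
\[
\mathcal{R}(U^*) = \sum_{x \in \mathcal{X}_u} \|\*s(x) - \mathbf{y}(x)\|_2^2 \;\ge\; \sum_{x : h(x) \ne y(x)} \tfrac12 \;=\; \tfrac12 \sum_{x \in \mathcal{X}_u} \mathbbm{1}[y(x) \ne h(x;f,\*M^\star)] \;\ge\; \tfrac12\,\mathcal{E}(f),
\]
where the last step uses that $\mathcal{E}(f)$ is the \emph{minimum} misclassification count over all linear heads, hence no larger than the count achieved by $\*M^\star$.

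The step I expect to require the most care is the pointwise inequality $\|\*s(x) - \mathbf{y}(x)\|_2^2 \ge \tfrac12$ on misclassified points: one has to be a little careful that the worst case really is the two-coordinate configuration $(a, a)$ with $a = \tfrac12$ and that all other coordinates only add nonnegative terms. A clean way to phrase it is: for any vector $\*s$ and any one-hot $\mathbf{e}_c$, if $\operatorname{argmax} \*s \ne c$ then $\|\*s - \mathbf{e}_c\|_2^2 \ge \min_{a \le b}\big[(a-1)^2 + b^2\big] = \tfrac12$, where the minimum is over scalars since we may set all non-$c$, non-argmax coordinates to $0$ and the argmax coordinate $b$ and coordinate $c$ value $a$ satisfy $b \ge a$. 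Everything else — the reduction from $\mathcal{E}(f)$ being a minimum, and the decomposition of the Frobenius norm into a sum over rows — is routine. This argument is essentially the standard reduction used in spectral contrastive learning analyses (cf. the regression-residual bounds in \citet{haochen2021provable}), adapted here to the unlabeled block $U^*$.
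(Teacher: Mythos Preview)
Your approach is essentially the same as the paper's: both prove the pointwise inequality that a misclassified one-hot target incurs squared loss at least $\tfrac12$, then sum and invoke that $\mathcal{E}(f)$ is a minimum over linear heads. Two small remarks. First, your intermediate step $(a-1)^2 + b^2 \ge (a-1)^2 + a^2$ is not valid when $a<0$ (take $a=-1,\ b=0$); your later ``clean way'' via $\min_{a\le b}[(a-1)^2+b^2]=\tfrac12$ is correct, and the paper sidesteps this by using $(1-a)^2+b^2 \ge \tfrac12\big((1-a)+b\big)^2 \ge \tfrac12$ directly. Second, the paper makes explicit that the $x$-th row of $U^*$ equals $\sqrt{w_x}\,f(x)^\top \Sigma_k^{-1/2}$, so the classifier $\operatorname{argmax}_i (U^*\*M^\star)_{x,i}$ is exactly $h(x;f,\Sigma_k^{-1/2}\*M^\star)$; you implicitly rely on this identification when comparing to $\mathcal{E}(f)$, and it is worth stating.
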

\begin{proof}
    Suppose $\Tilde{f}(x) = \sqrt{w_x}f(x)$, we first show that 
\begin{align*}
\| \mathbf{y}(x) - \Tilde{f}(x)^{\top} \*M \|^2  &\geq \frac{1}{2} \mathbbm{1}\left[y(x) \neq h(x;\Tilde{f}, M)\right] 
\end{align*}
If $y(x) = h(x;\Tilde{f}, M)$, it is clear that $\| \mathbf{y}(x) - \Tilde{f}(x)^{\top} \*M \|^2 \geq 0$. If $y(x) \neq h(x;\Tilde{f}, M)$, then there exists another index $y' \neq y(x)$ so that $\Tilde{f}(x)^{\top} \Vec{\mu}_{y'} \geq \Tilde{f}(x)^{\top} \Vec{\mu}_{y(x)}$. Then, 
\begin{align*}
\| \mathbf{y}(x) - \Tilde{f}(x)^{\top} \*M \|_2^2 &\geq (1 - \Tilde{f}(x)^{\top} \Vec{\mu}_{y(x)})^2 + (\Tilde{f}(x)^{\top} \Vec{\mu}_{y'})^2 
\\ &\geq \frac{1}{2} (1 - \Tilde{f}(x)^{\top} \Vec{\mu}_{y(x)} + \Tilde{f}(x)^{\top} \Vec{\mu}_{y'})^2
\\ &\geq \frac{1}{2},
\end{align*}
where the first inequality is by only keeping $y'$-th and $y(x)$-th terms in the $l_2$ norm. We can then prove the lemma by: 
\begin{align*}
    \mathcal{R}(U^*) &= \underset{{\*M}\in \mathbb{R}^{k \times |\mathcal{Y}_u|}}{\operatorname{min}} \|\mathbf{Y} - U^* \*M \|^2_F    \\
    &= \underset{{\*M}\in \mathbb{R}^{k \times |\mathcal{Y}_u|}}{\operatorname{min}} 
    \underset{{x \in \mathcal{X}_u}}{\sum}
    \| \mathbf{y}(x) - \sqrt{w_x}f(x)^{\top} \Sigma_k^{-\frac{1}{2}}\*M \|^2     \\ 
    &= \underset{{\*M}\in \mathbb{R}^{k \times |\mathcal{Y}_u|}}{\operatorname{min}} 
    \underset{{x \in \mathcal{X}_u}}{\sum}
    \| \mathbf{y}(x) - \sqrt{w_x}f(x)^{\top} \*M \|^2     \\ 
    &\geq \frac{1}{2} \underset{{\*M}\in \mathbb{R}^{k \times |\mathcal{Y}_u|}}{\operatorname{min}} 
    \underset{{x \in \mathcal{X}_u}}{\sum} \mathbbm{1}\left[y(x) \neq h(x;\Tilde{f}, M)\right] \\
    &= \frac{1}{2}\mathcal{E}(f),
\end{align*}
 where the second equation is given by $F^* \Sigma_k^{-\frac{1}{2}} = V_k$, and $U^*$ is the last $N_u$ rows of $V_k$, and the last equation is based on the fact that multiplying a scalar value on the output does not change the prediction result ($h(x;f, \*M) = h(x;\Tilde{f}, \*M)$). 
\end{proof}

\subsubsection{Spectral Contrastive Loss}
\label{sec:nscl_proof-nscl}
\begin{theorem}
\label{th:nscl_sup-ncd-scl} (Recap of Theorem~\ref{th:nscl_ncd-scl}) 
We define $\*f_x = \sqrt{w_x}f(x)$ for some function $f$. Recall $\alpha,\beta$ is a hyper-parameter defined in Eq.~\eqref{eq:nscl_def_wxx}. Then minimizing the loss function $\mathcal{L}_{\mathrm{mf}}(F, A)$ is equivalent to minimizing the following loss function for $f$, which we term \textbf{NCD Spectral Contrastive Loss (NSCL)}:
\begin{align}
\begin{split}
    \mathcal{L}_{nscl}(f) &\triangleq - 2\alpha \mathcal{L}_1(f) 
- 2\beta  \mathcal{L}_2(f) \\ & + \alpha^2 \mathcal{L}_3(f) + 2\alpha \beta \mathcal{L}_4(f) +  
\beta^2 \mathcal{L}_5(f),
\label{eq:nscl_sup_def_nscl}
\end{split}
\end{align}
where 
\begin{align*}
    \mathcal{L}_1(f) &= \sum_{i \in \mathcal{Y}_l}\underset{\substack{\bar{x}_{l} \sim \mathcal{P}_{{l_i}}, \bar{x}'_{l} \sim \mathcal{P}_{{l_i}},\\x \sim \mathcal{T}(\cdot|\bar{x}_{l}), x^{+} \sim \mathcal{T}(\cdot|\bar{x}'_l)}}{\mathbb{E}}\left[f(x)^{\top} {f}\left(x^{+}\right)\right], \\ 
    \mathcal{L}_2(f) &= \underset{\substack{\bar{x}_{u} \sim \mathcal{P}_{u},\\x \sim \mathcal{T}(\cdot|\bar{x}_{u}), x^{+} \sim \mathcal{T}(\cdot|\bar{x}_u)}}{\mathbb{E}}
\left[f(x)^{\top} {f}\left(x^{+}\right)\right], \\
    \mathcal{L}_3(f) &= \sum_{i \in \mathcal{Y}_l}\sum_{j \in \mathcal{Y}_l}\underset{\substack{\bar{x}_l \sim \mathcal{P}_{{l_i}}, \bar{x}'_l \sim \mathcal{P}_{{l_j}},\\x \sim \mathcal{T}(\cdot|\bar{x}_l), x^{-} \sim \mathcal{T}(\cdot|\bar{x}'_l)}}{\mathbb{E}}
\left[\left(f(x)^{\top} {f}\left(x^{-}\right)\right)^2\right], \\ 
    \mathcal{L}_4(f) &= \sum_{i \in \mathcal{Y}_l}\underset{\substack{\bar{x}_l \sim \mathcal{P}_{{l_i}}, \bar{x}_u \sim \mathcal{P}_{u},\\x \sim \mathcal{T}(\cdot|\bar{x}_l), x^{-} \sim \mathcal{T}(\cdot|\bar{x}_u)}}{\mathbb{E}}
\left[\left(f(x)^{\top} {f}\left(x^{-}\right)\right)^2\right], \\
    \mathcal{L}_5(f) &= \underset{\substack{\bar{x}_u \sim \mathcal{P}_{u}, \bar{x}'_u \sim \mathcal{P}_{u},\\x \sim \mathcal{T}(\cdot|\bar{x}_u), x^{-} \sim \mathcal{T}(\cdot|\bar{x}'_u)}}{\mathbb{E}}
\left[\left(f(x)^{\top} {f}\left(x^{-}\right)\right)^2\right].
\end{align*}
\end{theorem}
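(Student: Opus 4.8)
The plan is to expand the low-rank matrix factorization loss $\mathcal{L}_{\mathrm{mf}}(F, A) = \|\dot A - F F^\top\|_F^2$ entrywise and then substitute the explicit formula for $w_{xx'}$ from Eq.~\eqref{eq:nscl_def_wxx}. First I would write, following the standard manipulation (as in the proof sketch already given in the excerpt),
\begin{align*}
\mathcal{L}_{\mathrm{mf}}(F, A) = \sum_{x,x'\in\mathcal{X}}\left(\frac{w_{xx'}}{\sqrt{w_x w_{x'}}} - \*f_x^\top \*f_{x'}\right)^2 = \mathrm{const} + \sum_{x,x'\in\mathcal{X}}\left(-2 w_{xx'} f(x)^\top f(x') + w_x w_{x'}\big(f(x)^\top f(x')\big)^2\right),
\end{align*}
using the reparametrization $\*f_x = \sqrt{w_x} f(x)$ so that $\frac{w_{xx'}}{\sqrt{w_x w_{x'}}} = \frac{\*f_x^\top \*f_{x'}}{\sqrt{w_x w_{x'}}}\cdot\frac{\sqrt{w_x w_{x'}}}{?}$— more precisely, $\*f_x^\top \*f_{x'} = \sqrt{w_x w_{x'}}\, f(x)^\top f(x')$, and the cross term $-2\frac{w_{xx'}}{\sqrt{w_x w_{x'}}}\*f_x^\top \*f_{x'} = -2 w_{xx'} f(x)^\top f(x')$, while the quadratic term $(\*f_x^\top\*f_{x'})^2 = w_x w_{x'}(f(x)^\top f(x'))^2$. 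The $\sum_{x,x'}(w_{xx'}/\sqrt{w_xw_{x'}})^2$ piece is independent of $f$ and absorbed into the constant.

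Next I would handle the two sums separately. For the \emph{positive-pair} (cross) term, substitute $w_{xx'} = \alpha\sum_{i\in\mathcal{Y}_l}\mathbb{E}_{\bar x_l, \bar x'_l\sim\mathcal{P}_{l_i}}\mathcal{T}(x|\bar x_l)\mathcal{T}(x'|\bar x'_l) + \beta\,\mathbb{E}_{\bar x_u\sim\mathcal{P}_u}\mathcal{T}(x|\bar x_u)\mathcal{T}(x'|\bar x_u)$. Since $\sum_{x,x'}\mathcal{T}(x|\bar x_l)\mathcal{T}(x'|\bar x'_l)\,[\cdot] = \mathbb{E}_{x\sim\mathcal{T}(\cdot|\bar x_l),\,x^+\sim\mathcal{T}(\cdot|\bar x'_l)}[\cdot]$ (the augmentation kernels $\mathcal{T}(\cdot|\bar x)$ are probability distributions summing to one), the labeled part of $\sum_{x,x'}(-2w_{xx'})f(x)^\top f(x')$ becomes exactly $-2\alpha\,\mathcal{L}_1(f)$ and the unlabeled part becomes $-2\beta\,\mathcal{L}_2(f)$. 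For the \emph{quadratic} term, note that $w_x = \sum_{x'}w_{xx'}$, so $w_x = \alpha\sum_{i\in\mathcal{Y}_l}\mathbb{E}_{\bar x_l\sim\mathcal{P}_{l_i}}\mathcal{T}(x|\bar x_l) + \beta\,\mathbb{E}_{\bar x_u\sim\mathcal{P}_u}\mathcal{T}(x|\bar x_u)$ (again using that the inner augmentation marginalizes to one, and that the class priors and $\mathcal{P}_u$ are normalized appropriately). Then $w_x w_{x'}$ expands into a sum of four bilinear products of these $\alpha$- and $\beta$-weighted expectations; plugging this into $\sum_{x,x'}w_x w_{x'}(f(x)^\top f(x'))^2$ and converting each $\sum_{x,x'}\mathcal{T}(x|\cdot)\mathcal{T}(x'|\cdot)[\cdot]$ into the corresponding nested expectation yields the four cross-products $\alpha^2\mathcal{L}_3(f)$, $\alpha\beta\,(\text{term})$, $\beta\alpha\,(\text{term})$, $\beta^2\mathcal{L}_5(f)$. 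Because $w_x w_{x'}$ and $w_{x'} w_x$ contribute symmetrically, the two mixed $\alpha\beta$ terms coincide and combine into $2\alpha\beta\,\mathcal{L}_4(f)$, giving exactly Eq.~\eqref{eq:nscl_sup_def_nscl}.

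The main bookkeeping obstacle — not conceptually hard but the place where errors creep in — is the careful tracking of which expectation structure arises in the quadratic term: one must verify that the labeled-labeled cross-product genuinely produces $\mathcal{L}_3$ with \emph{two independent class indices} $i,j\in\mathcal{Y}_l$ (since $w_x$ and $w_{x'}$ each carry their own sum over classes, and the two samples $\bar x_l, \bar x'_l$ are drawn independently from possibly different class distributions), whereas in the positive-pair term $\mathcal{L}_1$ the two labeled samples share the \emph{same} class $i$ (because $w_{xx'}$ pairs them under a single $\sum_i$). Similarly one checks that $\mathcal{L}_5$ uses two \emph{independent} unlabeled draws $\bar x_u,\bar x'_u$ while $\mathcal{L}_2$ uses two views of the \emph{same} $\bar x_u$. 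I would also confirm that the summation over $x,x'\in\mathcal{X}$ correctly converts to expectations even when $\mathcal{X}$ is the (possibly continuous) population augmentation space — this is justified by interpreting the sums as integrals against the appropriate base measures, exactly as in \citet{haochen2021provable}. With these correspondences verified, equating the expanded $\mathcal{L}_{\mathrm{mf}}$ (up to the $f$-independent constant) with $\mathcal{L}_{nscl}(f)$ completes the proof; the full detailed expansion is deferred to Appendix~\ref{sec:nscl_proof-nscl}.
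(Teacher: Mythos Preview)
Your proposal is correct and follows essentially the same approach as the paper: expand $\mathcal{L}_{\mathrm{mf}}$ entrywise, use $\*f_x=\sqrt{w_x}f(x)$ to obtain $\mathrm{const}+\sum_{x,x'}(-2w_{xx'}f(x)^\top f(x')+w_xw_{x'}(f(x)^\top f(x'))^2)$, then substitute the explicit formulas for $w_{xx'}$ and $w_x$ and collect the resulting expectations into $\mathcal{L}_1,\dots,\mathcal{L}_5$. Your bookkeeping remarks about same-class-$i$ in $\mathcal{L}_1$ versus independent $i,j$ in $\mathcal{L}_3$, and same $\bar x_u$ in $\mathcal{L}_2$ versus independent draws in $\mathcal{L}_5$, are exactly the verifications the paper carries out in detail.
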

\begin{proof} We can expand $\mathcal{L}_{\mathrm{mf}}(F, A)$ and obtain
\begin{align*}
\mathcal{L}_{\mathrm{mf}}(F, A) &=\sum_{x, x^{\prime} \in \mathcal{X}}\left(\frac{w_{x x^{\prime}}}{\sqrt{w_x w_{x^{\prime}}}}-\*f_x^{\top} \*f_{x^{\prime}}\right)^2 \\
&= \text{const} + \sum_{x, x^{\prime} \in \mathcal{X}}\left(-2 w_{x x^{\prime}} f(x)^{\top} {f}\left(x^{\prime}\right)+w_x w_{x^{\prime}}\left(f(x)^{\top}{f}\left(x^{\prime}\right)\right)^2\right),
\end{align*} 
where $\*f_x = \sqrt{w_x}f(x)$ is a re-scaled version of $f(x)$.
At a high level we follow the proof in ~\citep{haochen2021provable}, while the specific form of loss varies with the different definitions of positive/negative pairs. The form of $\mathcal{L}_{nscl}(f)$ is derived from plugging $w_{xx'}$  and $w_x$. 

Recall that $w_{xx'}$ is defined by
\begin{align*}
w_{x x^{\prime}} &= \alpha \sum_{i \in \mathcal{Y}_l}\mathbb{E}_{\bar{x}_{l} \sim {\mathcal{P}_{l_i}}} \mathbb{E}_{\bar{x}'_{l} \sim {\mathcal{P}_{l_i}}} \mathcal{T}(x | \bar{x}_{l}) \mathcal{T}\left(x' | \bar{x}'_{l}\right)+ \beta \mathbb{E}_{\bar{x}_{u} \sim {\mathcal{P}_u}} \mathcal{T}(x| \bar{x}_{u}) \mathcal{T}\left(x'| \bar{x}_{u}\right) ,
\end{align*}
and $w_{x}$ is given by 
\begin{align*}
w_{x } &= \sum_{x^{\prime}}w_{xx'} \\ &=\alpha \sum_{i \in \mathcal{Y}_l}\mathbb{E}_{\bar{x}_{l} \sim {\mathcal{P}_{l_i}}} \mathbb{E}_{\bar{x}'_{l} \sim {\mathcal{P}_{l_i}}} \mathcal{T}(x | \bar{x}_{l}) \sum_{x^{\prime}} \mathcal{T}\left(x' | \bar{x}'_{l}\right)+ \beta \mathbb{E}_{\bar{x}_{u} \sim {\mathcal{P}_u}} \mathcal{T}(x| \bar{x}_{u}) \sum_{x^{\prime}} \mathcal{T}\left(x' | \bar{x}_{u}\right) \\
&= \alpha \sum_{i \in \mathcal{Y}_l}\mathbb{E}_{\bar{x}_{l} \sim {\mathcal{P}_{l_i}}} \mathcal{T}(x | \bar{x}_{l}) + \beta \mathbb{E}_{\bar{x}_{u} \sim {\mathcal{P}_u}} \mathcal{T}(x| \bar{x}_{u}). 
\end{align*}

Plugging $w_{x x^{\prime}}$ we have, 

\begin{align*}
    -2 \sum_{x, x^{\prime} \in \mathcal{X}} w_{x x^{\prime}} f(x)^{\top} {f}\left(x^{\prime}\right) &= -2 \sum_{x, x^{+} \in \mathcal{X}} w_{x x^{+}} f(x)^{\top} {f}\left(x^{+}\right) 
    \\ &= -2\alpha  \sum_{i \in \mathcal{Y}_l}\mathbb{E}_{\bar{x}_{l} \sim {\mathcal{P}_{l_i}}} \mathbb{E}_{\bar{x}'_{l} \sim {\mathcal{P}_{l_i}}} \sum_{x, x^{\prime} \in \mathcal{X}} \mathcal{T}(x | \bar{x}_{l}) \mathcal{T}\left(x' | \bar{x}'_{l}\right) f(x)^{\top} {f}\left(x^{\prime}\right) 
    \\ &~~~~ -2 \beta \mathbb{E}_{\bar{x}_{u} \sim {\mathcal{P}_u}} \sum_{x, x^{\prime}} \mathcal{T}(x| \bar{x}_{u}) \mathcal{T}\left(x'| \bar{x}_{u}\right) f(x)^{\top} {f}\left(x^{\prime}\right) 
    \\ &= -2\alpha  \sum_{i \in \mathcal{Y}_l}\underset{\substack{\bar{x}_{l} \sim \mathcal{P}_{{l_i}}, \bar{x}'_{l} \sim \mathcal{P}_{{l_i}},\\x \sim \mathcal{T}(\cdot|\bar{x}_{l}), x^{+} \sim \mathcal{T}(\cdot|\bar{x}'_l)}}{\mathbb{E}}  \left[f(x)^{\top} {f}\left(x^{+}\right)\right] 
    \\ &~~~~ - 2\beta 
    \underset{\substack{\bar{x}_{u} \sim \mathcal{P}_{u},\\x \sim \mathcal{T}(\cdot|\bar{x}_{u}), x^{+} \sim \mathcal{T}(\cdot|\bar{x}_u)}}{\mathbb{E}}
\left[f(x)^{\top} {f}\left(x^{+}\right)\right] 
\\ &= - 2\alpha  \mathcal{L}_1(f) 
- 2\beta  \mathcal{L}_2(f)
\end{align*}

Plugging $w_{x}$ and $w_{x'}$ we have, 

\begin{align*}
    &\sum_{x, x^{\prime} \in \mathcal{X}}w_x w_{x^{\prime}}\left(f(x)^{\top}{f}\left(x^{\prime}\right)\right)^2 = \sum_{x, x^{-} \in \mathcal{X}}w_x w_{x^{-}}\left(f(x)^{\top}{f}\left(x^{-}\right)\right)^2 \\
    &=\sum_{x, x^{\prime} \in \mathcal{X}} \left( \alpha \sum_{i \in \mathcal{Y}_l}\mathbb{E}_{\bar{x}_{l} \sim {\mathcal{P}_{l_i}}} \mathcal{T}(x | \bar{x}_{l}) + \beta \mathbb{E}_{\bar{x}_{u} \sim {\mathcal{P}_u}} \mathcal{T}(x| \bar{x}_{u}) \right) \cdot \\
    &~~~~~~~~~~~~~~~~\left(\alpha  \sum_{j \in \mathcal{Y}_l}\mathbb{E}_{\bar{x}'_{l} \sim {\mathcal{P}_{l_j}}} \mathcal{T}(x^{-} | \bar{x}'_{l}) + \beta \mathbb{E}_{\bar{x}'_{u} \sim {\mathcal{P}_u}} \mathcal{T}(x^{-}| \bar{x}'_{u}) \right) \left(f(x)^{\top}{f}\left(x^{-}\right)\right)^2 \\
    &= \alpha ^ 2 \sum_{x, x^{-} \in \mathcal{X}}  \sum_{i \in \mathcal{Y}_l}\mathbb{E}_{\bar{x}_{l} \sim {\mathcal{P}_{l_i}}} \mathcal{T}(x | \bar{x}_{l}) \sum_{j \in \mathcal{Y}_l}\mathbb{E}_{\bar{x}'_{l} \sim {\mathcal{P}_{l_j}}} \mathcal{T}(x^{-} | \bar{x}'_{l})\left(f(x)^{\top}{f}\left(x^{-}\right)\right)^2 \\
    &~~~~+ 2\alpha \beta \sum_{x, x^{-} \in \mathcal{X}} \sum_{i \in \mathcal{Y}_l}\mathbb{E}_{\bar{x}_{l} \sim {\mathcal{P}_{l_i}}} \mathcal{T}(x | \bar{x}_{l})  \mathbb{E}_{\bar{x}_{u} \sim {\mathcal{P}_u}} \mathcal{T}(x^{-}| \bar{x}_{u}) \left(f(x)^{\top}{f}\left(x^{-}\right)\right)^2  \\
    &~~~~+ \beta^2 \sum_{x, x^{-} \in \mathcal{X}}  \mathbb{E}_{\bar{x}_{u} \sim {\mathcal{P}_u}} \mathcal{T}(x| \bar{x}_{u}) \mathbb{E}_{\bar{x}'_{u} \sim {\mathcal{P}_u}} \mathcal{T}(x^{-}| \bar{x}'_{u}) \left(f(x)^{\top}{f}\left(x^{-}\right)\right)^2 \\
    &= \alpha^2 \sum_{i \in \mathcal{Y}_l}\sum_{j \in \mathcal{Y}_l}\underset{\substack{\bar{x}_l \sim \mathcal{P}_{{l_i}}, \bar{x}'_l \sim \mathcal{P}_{{l_j}},\\x \sim \mathcal{T}(\cdot|\bar{x}_l), x^{-} \sim \mathcal{T}(\cdot|\bar{x}'_l)}}{\mathbb{E}}
\left[\left(f(x)^{\top} {f}\left(x^{-}\right)\right)^2\right] \\
    &~~~~+ 2\alpha\beta
    \sum_{i \in \mathcal{Y}_l}\underset{\substack{\bar{x}_l \sim \mathcal{P}_{{l_i}}, \bar{x}_u \sim \mathcal{P}_{u},\\x \sim \mathcal{T}(\cdot|\bar{x}_l), x^{-} \sim \mathcal{T}(\cdot|\bar{x}_u)}}{\mathbb{E}}
\left[\left(f(x)^{\top} {f}\left(x^{-}\right)\right)^2\right] \\    &~~~~+ \beta^2
     \underset{\substack{\bar{x}_u \sim \mathcal{P}_{u}, \bar{x}'_u \sim \mathcal{P}_{u},\\x \sim \mathcal{T}(\cdot|\bar{x}_u), x^{-} \sim \mathcal{T}(\cdot|\bar{x}'_u)}}{\mathbb{E}}
\left[\left(f(x)^{\top} {f}\left(x^{-}\right)\right)^2\right] 
\\& = \alpha^2 \mathcal{L}_3(f) + 2\alpha\beta \mathcal{L}_4(f) + \beta^2\mathcal{L}_5(f).
\end{align*}
\end{proof}

\newpage
\subsection{Proof for Eigenvalue in Toy Example }
\label{sec:nscl_proof-eigen}
Before we present the proof of Theorem~\ref{th:nscl_toy_extreme}, Theorem\ref{th:nscl_toy_general} and Lemma~\ref{th:nscl_toy_harmful}, we first present the following lemma~\ref{lemma:nscl_sup_eigprob} which extensively explore the order and the form of eigenvectors of the general form $T(t)$. 
Note that $T_1$ and $T_2$ are special cases of the following $T(t)$ with $t \in [\tau_0, \tau_c]$:

\begin{equation*}
T(t)=\left[\begin{array}{ccccc}
\tau_1 & t & t & \tau_0 & \tau_0 \\
t & \tau_1 & \tau_{c} & \tau_{s} & \tau_0  \\
t & \tau_{c} & \tau_1 & \tau_0 & \tau_{s}  \\
\tau_0 & \tau_{s} & \tau_0 & \tau_1 & \tau_{c}  \\
\tau_0 & \tau_0 & \tau_{s} & \tau_{c} & \tau_1  \\
\end{array}\right],
\end{equation*}
where $t$ indicates the strength of the connection between labeled data and a novel class in unlabeled data.

\begin{lemma}
    Assume $\tau_1 = 1$, $\tau_0 = 0$, $\tau_c < \tau_s < 1.5\tau_c$, $\Bar{t} = \sqrt{\frac{2(\tau_s-\tau_c)^2\tau_c}{2\tau_c - \tau_s}}$, let $a(\lambda)  = {\lambda-1 \over 2t}$ and $b(\lambda)  = {\tau_{s}(\lambda-1) \over 2(\lambda -1 -\tau_{c})t}$ are real value functions, the matrix $T(t)$'s eigenvectors (not necessarily $l_2$-normalized) and its eigenvalues are the following:
    
    (Case 1): If  $t \in (\Bar{t}, \tau_c]$,  
    \begin{align*}
    \begin{array}{ll}
        v_1 = [1, a(\lambda_1), a(\lambda_1), b(\lambda_1), b(\lambda_1)]^{\top}, &\lambda_1 > 1 + \tau_{s} + \tau_{c}, \\
        v_2 = [1, a(\lambda_2), a(\lambda_2), b(\lambda_2), b(\lambda_2)]^{\top}, &\lambda_2 \in [1 + \tau_{s} - \tau_{c}, 1 + \tau_{c}) \\
        v_3 = [0, -1, 1, -1, 1]^{\top}, &\lambda_3 = 1 + \tau_{s} - \tau_{c}, \\
        v_4 = [1, a(\lambda_4), a(\lambda_4), b(\lambda_4), b(\lambda_4)]^{\top}, &\lambda_4 \in (1 - \tau_{s} - \tau_{c},  1) \\
        v_5 = [0, 1, -1, -1, 1]^{\top}, &\lambda_5 = 1 - \tau_{s} - \tau_{c}, \\
    \end{array}
    \end{align*}
    
    (Case 2): If  $t \in (0, \Bar{t})$, 
        \begin{align*}
    \begin{array}{ll}
        v_1 = [1, a(\lambda_1), a(\lambda_1), b(\lambda_1), b(\lambda_1)]^{\top}, &\lambda_1 > 1 + \tau_{s} + \tau_{c}, \\
        v_2 = [0, -1, 1, -1, 1]^{\top}, &\lambda_2 = 1 + \tau_{s} - \tau_{c}, \\
        v_3 = [1, a(\lambda_3), a(\lambda_3), b(\lambda_3), b(\lambda_3)]^{\top}, &\lambda_3 \in [1, 1 + \tau_{s} - \tau_{c}) \\
        v_4 = [1, a(\lambda_4), a(\lambda_4), b(\lambda_4), b(\lambda_4)]^{\top}, &\lambda_4 \in (1 - \tau_{s} - \tau_{c},1) \\
        v_5 = [0, 1, -1, -1, 1]^{\top}, &\lambda_5 = 1 - \tau_{s} - \tau_{c}, \\
    \end{array}
    \end{align*}

        (Case 3): If  $t = 0$, 
        \begin{align*}
    \begin{array}{ll}
        v_1 = [0, 1, 1, 1, 1]^{\top}, &\lambda_1 = 1 + \tau_{s} + \tau_{c}, \\
        v_2 = [0, -1, 1, -1, 1]^{\top}, &\lambda_2 = 1 + \tau_{s} - \tau_{c}, \\
        v_3 = [1, 0, 0, 0, 0]^{\top}, &\lambda_3 = 1 \\
        v_4 = [0, 1, 1, -1, -1]^{\top}, &\lambda_4 = 1 - \tau_{s} + \tau_{c} \\
        v_5 = [0, 1, -1, -1, 1]^{\top}, &\lambda_5 = 1 - \tau_{s} - \tau_{c}, \\
    \end{array}
    \end{align*}
    \label{lemma:nscl_sup_eigprob}
\end{lemma}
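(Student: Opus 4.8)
\textbf{Proof plan for Lemma~\ref{lemma:nscl_sup_eigprob}.}

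The plan is to directly exploit the block structure and the symmetries of $T(t)$. First I would observe that $T(t)$ is invariant under the permutation that simultaneously swaps rows/columns $(2,3)$ and $(4,5)$; this permutation has order $2$, so its eigenspaces split $\mathbb{R}^5$ into a symmetric part (dimension $3$) and an antisymmetric part (dimension $2$). On the antisymmetric subspace, spanned by $e_2-e_3$ and $e_4-e_5$, the vector $e_1$ does not participate (its coordinate must vanish), so the restriction of $T(t)$ is a fixed $2\times2$ matrix independent of $t$: with $\tau_1=1,\tau_0=0$ its entries are $\begin{psmallmatrix}1-\tau_c & \tau_s\\ \tau_s & 1-\tau_c\end{psmallmatrix}$, whose eigenvectors are $[{-1},1,{-1},1]^\top$ and $[1,{-1},{-1},1]^\top$ with eigenvalues $1+\tau_s-\tau_c$ and $1-\tau_s-\tau_c$ respectively. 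This immediately produces the $t$-independent eigenpairs $v_3,v_5$ in Case~1, $v_2,v_5$ in Case~2, and $v_2,v_5$ in Case~3.

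Next I would handle the symmetric subspace, spanned by $e_1$, $e_2+e_3$, and $e_4+e_5$. Restricting $T(t)$ to this subspace gives a $3\times3$ matrix
\[
M(t)=\begin{pmatrix} 1 & 2t & 0\\ t & 1+\tau_c & \tau_s\\ 0 & \tau_s & 1+\tau_c\end{pmatrix}
\]
(in the ordered basis $e_1,\ e_2+e_3,\ e_4+e_5$, after accounting for the multiplicities). Writing an eigenvector as $[1,\,a,\,b]^\top$ and imposing $M(t)[1,a,b]^\top=\lambda[1,a,b]^\top$ gives $\lambda-1=2t a$, i.e.\ $a=a(\lambda)=\frac{\lambda-1}{2t}$, and from the third equation $\tau_s a + (1+\tau_c)b=\lambda b$, i.e.\ $b=b(\lambda)=\frac{\tau_s a}{\lambda-1-\tau_c}=\frac{\tau_s(\lambda-1)}{2(\lambda-1-\tau_c)t}$, which matches the stated formulas. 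The three eigenvalues are the roots of the cubic characteristic polynomial $p_t(\lambda)=\det(\lambda I - M(t))$; I would expand this explicitly as $p_t(\lambda)=(\lambda-1)\big[(\lambda-1-\tau_c)^2-\tau_s^2\big]-2t^2(\lambda-1-\tau_c)$. Then I would locate its roots by sign analysis: evaluate $p_t$ at the breakpoints $\lambda=1-\tau_s-\tau_c,\ 1,\ 1+\tau_s-\tau_c,\ 1+\tau_c,\ 1+\tau_s+\tau_c$ and use the intermediate value theorem together with the fact that $p_t(\lambda)\to+\infty$ as $\lambda\to+\infty$. Under the assumptions $\tau_1=1,\tau_0=0,\tau_c<\tau_s<1.5\tau_c$, this pins down one root above $1+\tau_s+\tau_c$ (this is always $v_1$), one root below $1$ but above $1-\tau_s-\tau_c$ (always $v_4$), and a middle root whose bracket $[1+\tau_s-\tau_c,\,1+\tau_c)$ versus $[1,\,1+\tau_s-\tau_c)$ is governed by the sign of $p_t(1+\tau_s-\tau_c)$.

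The key case split — and the place I expect to spend the most care — is computing $p_t(1+\tau_s-\tau_c)$ and showing it changes sign exactly at $t=\bar t=\sqrt{\frac{2(\tau_s-\tau_c)^2\tau_c}{2\tau_c-\tau_s}}$. A short computation gives $p_t(1+\tau_s-\tau_c)=(\tau_s-\tau_c)\big[(\tau_s-\tau_c)\cdot\tau_s - 2t^2 + \text{correction}\big]$; more precisely, plugging $\lambda-1=\tau_s-\tau_c$ into the expression above yields $p_t(1+\tau_s-\tau_c)=(\tau_s-\tau_c)(\tau_s-2\tau_c)\,? \dots$ — I would carry this out carefully and verify that the threshold simplifies to $\bar t$, which forces the $2\tau_c-\tau_s>0$ denominator (guaranteed since $\tau_s<1.5\tau_c<2\tau_c$). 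For $t>\bar t$ the middle eigenvalue exceeds $1+\tau_s-\tau_c$ and sits in $[1+\tau_s-\tau_c,1+\tau_c)$, giving Case~1's $v_2$; for $t<\bar t$ it lies in $[1,1+\tau_s-\tau_c)$, giving Case~2's $v_3$; and the limiting case $t=0$ makes $M(0)$ block-diagonal with eigenvalue $1$ (eigenvector $e_1$, i.e.\ $v_3=[1,0,0,0,0]^\top$) and the $2\times2$ symmetric block contributing $1\pm\tau_c$ from the $(2,3),(4,5)$ pairs; redoing the $t=0$ bookkeeping on the whole $5\times5$ matrix recovers all five stated eigenpairs in Case~3. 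Finally I would assemble the ordering by comparing the located eigenvalue intervals, confirming each listed $v_i$ corresponds to $\lambda_i$ in decreasing order, which completes the lemma.
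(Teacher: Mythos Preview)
Your proposal is correct and follows essentially the same approach as the paper: directly identify the two $t$-independent eigenpairs on the antisymmetric subspace, reduce the remaining three eigenvalues to the roots of the cubic $g(z)=z^3-2\tau_c z^2+(\tau_c^2-\tau_s^2-2t^2)z+2\tau_c t^2$ (with $z=\lambda-1$, equivalent to your $p_t$), locate them by the same sign checks at $z\in\{-\tau_s-\tau_c,\,0,\,\tau_c,\,\tau_s+\tau_c\}$, and settle the Case~1/Case~2 split via the sign of $g(\tau_s-\tau_c)=-4\tau_c(\tau_s-\tau_c)^2+2t^2(2\tau_c-\tau_s)$, which vanishes exactly at $t=\bar t$. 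Your explicit symmetry decomposition is a cleaner way to motivate the two fixed eigenvectors than the paper's direct verification, but the argument is otherwise identical.
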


\begin{proof}
For $t=0$, Case 3, we can verify by direct calculation. 

Now for Case 1 and Case 2, we consider $t\in (0, \tau_{c})$. For  any $i \in [5]$, denote $\hat{\lambda}_i$ as unordered eigenvalue and $\hat{v_i}$ is its corresponding eigenvector.
We can direct verify that
\begin{align}
    \hat{\lambda}_1 =& 1 + \tau_{s} - \tau_{c}\\
    \hat{\lambda}_2 = & 1 - \tau_{s} - \tau_{c},
\end{align}
are two eigenvalues of $\Tilde{A}_t$ and 
\begin{align}
    \hat{v}_1 =& [0, -1, 1, -1, 1]^{\top}\\
    \hat{v}_2 = &[0, 1, -1, -1, 1]^{\top}, 
\end{align}
are two corresponding eigenvectors. Now, we prove for $i \in \{3,4,5\}$,  $\hat{v}_i = [1, a(\hat{\lambda}_i), a(\hat{\lambda}_i), b(\hat{\lambda}_i), b(\hat{\lambda}_i)]^{\top}$ are eigenvector for $\hat{\lambda}_i$.
For $i \in \{3,4,5\}$ we only need to show
\begin{align}
    \begin{cases}
    1+2t a(\hat{\lambda}_i) & = \hat{\lambda}_i \\
    t + (1+\tau_{c})a(\hat{\lambda}_i) + \tau_{s}b(\hat{\lambda}_i) & = \hat{\lambda}_i a(\hat{\lambda}_i) \\
    \tau_{s}a(\hat{\lambda}_i) + (1+\tau_{c})b(\hat{\lambda}_i) & = \hat{\lambda}_i b(\hat{\lambda}_i). 
    \end{cases}
\end{align}
Equivalently to
\begin{align}
    \begin{cases}
    1+2t a(\hat{\lambda}_i) -\hat{\lambda}_i & = 0 \\
    t + (1+\tau_{c}+\tau_{s}-\hat{\lambda}_i)(a(\hat{\lambda}_i)d + b(\hat{\lambda}_i)) & = 0 \\
    t + (1+\tau_{c}-\tau_{s}-\hat{\lambda}_i)(a(\hat{\lambda}_i) - b(\hat{\lambda}_i)) & = 0 .
    \end{cases}
\end{align}
Let $z_i = \hat{\lambda}_i - 1$. Equivalently to
\begin{align}
    \begin{cases}
    1+2t a(\hat{\lambda}_i) -\hat{\lambda}_i & = 0 \\
    (\hat{\lambda}_i -1 -\tau_{c})b(\hat{\lambda}_i) -\tau_{s}a(\hat{\lambda}_i)  & = 0\\
    z_i^3-2\tau_{c}z_i^2+(\tau_{c}^2-\tau_{s}^2-2t^2)z_i+2\tau_{c}t^2 &=0 .
    \end{cases}
\end{align}
Let $g(z) = z^3-2\tau_{c}z^2+(\tau_{c}^2-\tau_{s}^2-2t^2)z+2\tau_{c}t^2$, we can verify that $g(-\infty)<0, ~~ g(-\tau_{c}-\tau_{s}) = -4\tau_{c}(\tau_{c}+\tau_{s})^2+4t^2\tau_{c}+2t^2\tau_{s} < 0, ~~ g(0) = 2\tau_{c}t^2 > 0, ~~ g(\tau_{c}) = -\tau_{s}^2\tau_{c}<0 , ~~g(\tau_{c}+\tau_{s}) = -2\tau_{s}t^2 < 0 , ~~g(+\infty)>0$. Thus, we have three solutions and satisfying $1-\tau_{c}-\tau_{s} < \hat{\lambda}_5 < 1 < \hat{\lambda}_4 < 1+\tau_{c}< 1+\tau_{c}+\tau_{s}<\hat{\lambda}_3$. As $\hat{\lambda}_i \neq 1+\tau_{c}$ for $i \in \{3,4,5\}$, thus,  equivalently to
\begin{align}
    \begin{cases}
    a(\hat{\lambda}_i)  & = {\hat{\lambda}_i-1 \over 2t} \\
    b(\hat{\lambda}_i)  & = {\tau_{s}(\hat{\lambda}_i-1) \over 2(\hat{\lambda}_i -1 -\tau_{c})t}\\
    (\hat{\lambda}_i-1)^3-2\tau_{c}(\hat{\lambda}_i-1)^2+(\tau_{c}^2-\tau_{s}^2-2t^2)(\hat{\lambda}_i-1)+2\tau_{c}t^2 &=0 .
    \end{cases}
\end{align}

When $t > \Bar{t}$, we have $g(\tau_{s} - \tau_{c})>0$. Thus, we have $1-\tau_{c}-\tau_{s} < \hat{\lambda}_5 < 1+\tau_{s} - \tau_{c} < \hat{\lambda}_4 < 1+\tau_{c}+\tau_{s}<\hat{\lambda}_3$. By reorder, we finish Case 1.

When $t < \Bar{t}$, we have $g(\tau_{s} - \tau_{c})<0$. Thus, we have $1-\tau_{c}-\tau_{s} < \hat{\lambda}_5 < 1 < \hat{\lambda}_4 < 1+\tau_{s} - \tau_{c}< 1+\tau_{c}+\tau_{s}<\hat{\lambda}_3$. By reorder the eigenvectors w.r.t the size of eigenvalues, we finish Case 2.
\end{proof}

\begin{theorem}
   (Recap of Theorem~\ref{th:nscl_toy_extreme}) Assume $\tau_1 = 1$, $\tau_0 = 0$, $\tau_s < 1.5\tau_c$. We have
$$
U^*_1=\left[\begin{array}{ccccc}
 a_1 & a_1 & b_1 & b_1 \\
 a_2 & a_2 & b_2 & b_2 \\
\end{array}\right]^{\top},
$$
where $a_1,b_1$ are some positive real numbers, and $a_2,b_2$ has different signs. 
$$U^*_2=\left\{\begin{array}{ll}   
\frac{1}{2}\left[\begin{array}{cccc}
1 & 1 & 1 & 1\\ 1 & 1 & -1 & -1\\
\end{array}\right]^{\top}, & \text{if } \tau_{s} < \tau_{c}, \\
\frac{1}{2}\left[\begin{array}{cccc}
1 & 1 & 1 & 1\\ -1 & 1 & -1 & 1\\
\end{array}\right]^{\top}, & \text{if } \tau_{s} > \tau_{c}, 
\end{array}\right. $$
With label vector $\Vec{y} = \{1,1,0,0\}$, we have 
\begin{equation}
    \mathcal{R}(U^*_1, \Vec{y}) = 0, \mathcal{R}(U^*_2, \Vec{y}) = \left\{\begin{array}{ll}    
    0, & \text{if } \tau_{s} < \tau_{c}\\
     1, &  \text{if } \tau_{s} > \tau_{c}.
    \end{array}\right. 
\end{equation}
    \label{th:nscl_sup_toy_extreme}
\end{theorem}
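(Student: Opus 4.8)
The plan is to reduce everything to Lemma~\ref{lemma:nscl_sup_eigprob}, which already describes the eigenvectors of the relevant (square-root) matrices in closed form. Recall that in this toy setting the unnormalized adjacency matrices satisfy $A_1 = T_1^2$ and $A_2 = T_2^2$, and since $\tau_1 \gg \max(\tau_s,\tau_c)$ these matrices are positive-definite, so their singular vectors coincide with their eigenvectors, which in turn coincide (with the same ordering) with the eigenvectors of $T_1$ and $T_2$. For $\mathcal{X}^{\text{case }1}$ we have $t=\tau_c$ in the notation $T(t)$, and for $\mathcal{X}^{\text{case }2}$ we have $t=0$ (gray cylinder, no color/shape overlap). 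So I would first invoke Lemma~\ref{lemma:nscl_sup_eigprob}: for $t=\tau_c$ (which is in $(\bar t, \tau_c]$ since $\bar t < \tau_c$ when $\tau_s < 1.5\tau_c$), the top-2 eigenvectors are $v_1 = [1,a(\lambda_1),a(\lambda_1),b(\lambda_1),b(\lambda_1)]^\top$ and $v_2 = [1,a(\lambda_2),a(\lambda_2),b(\lambda_2),b(\lambda_2)]^\top$; restricting to the last four coordinates (dropping the labeled vertex) gives $U^*_1$ of the claimed form, with $a_i := a(\lambda_i)$, $b_i := b(\lambda_i)$. The sign claims follow from the monotonicity of $g$ established in the lemma's proof: $\lambda_1 > 1$ gives $a_1 = (\lambda_1-1)/(2\tau_c) > 0$ and then $b_1 > 0$ from $b(\lambda) = \tau_s(\lambda-1)/(2(\lambda-1-\tau_c)\tau_c)$ using $\lambda_1 > 1+\tau_s+\tau_c > 1+\tau_c$; for $\lambda_2 \in [1+\tau_s-\tau_c, 1+\tau_c)$ we get $a_2 > 0$ but $\lambda_2 - 1 - \tau_c < 0$, hence $b_2 < 0$, so $a_2, b_2$ have opposite signs. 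For case 2, Lemma~\ref{lemma:nscl_sup_eigprob} Case 3 ($t=0$) gives $v_1 = [0,1,1,1,1]^\top$ (eigenvalue $1+\tau_s+\tau_c$) and then the second-largest is either $[0,-1,1,-1,1]^\top$ (eigenvalue $1+\tau_s-\tau_c$, when $\tau_s>\tau_c$) or $[0,1,1,-1,-1]^\top$ (eigenvalue $1-\tau_s+\tau_c$, when $\tau_s<\tau_c$); taking the last four coordinates and normalizing yields the two stated forms of $U^*_2$ (note $[0,1,1,-1,-1]$ restricted is $\frac12[1,1,-1,-1]$ after normalization, matching the first case). I should double-check the $\tau_s<\tau_c$ branch ordering, since Lemma~\ref{lemma:nscl_sup_eigprob} is stated for $\tau_c < \tau_s$; for $\tau_s < \tau_c$ one re-derives by the same direct computation that $1-\tau_s+\tau_c$ now exceeds $1+\tau_s-\tau_c$, so the ``shape eigenvector'' $\frac12[1,1,-1,-1]^\top$ ranks second — this is the one small gap I'd need to fill carefully.

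Next I would compute the residuals $\mathcal{R}(U^*, \vec y)$ with $\vec y = (1,1,0,0)^\top$, using the definition $\mathcal{R}(U^*, \vec y) = \min_{\vec\mu} \|\vec y - U^*\vec\mu\|_2^2$, i.e. the squared norm of the component of $\vec y$ orthogonal to the column span of $U^*$. For $U^*_1$: its columns are $(a_1,a_1,b_1,b_1)^\top$ and $(a_2,a_2,b_2,b_2)^\top$. I claim $\vec y$ lies in their span: indeed $\vec y = (1,1,0,0)^\top$ and the two columns span exactly the two-dimensional space $\{(p,p,q,q)^\top\}$ (since $(a_1,b_1)$ and $(a_2,b_2)$ are linearly independent — they have the same sign pattern resp. opposite sign patterns, hence are not parallel), and $(1,1,0,0)^\top = (p,p,q,q)$ with $p=1,q=0$ is in that space. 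So $\mathcal{R}(U^*_1,\vec y) = 0$. For $U^*_2$ in the $\tau_s<\tau_c$ case, the columns are $\frac12(1,1,1,1)^\top$ and $\frac12(1,1,-1,-1)^\top$; their span is again $\{(p,p,q,q)^\top\}$ with $p = \frac12(\mu_1+\mu_2)$, $q = \frac12(\mu_1-\mu_2)$ ranging over all of $\mathbb R^2$, so $\vec y=(1,1,0,0)$ is again in the span and $\mathcal{R}(U^*_2,\vec y)=0$. For $U^*_2$ in the $\tau_s>\tau_c$ case, the columns are $\frac12(1,1,1,1)^\top$ and $\frac12(-1,1,-1,1)^\top$; the span is $\{(\alpha,\alpha',\alpha,\alpha')^\top\}$ (constant on the odd-index and even-index coordinates separately), and the best approximation to $(1,1,0,0)^\top$ within it is $(\tfrac12,\tfrac12,\tfrac12,\tfrac12)^\top$, giving residual $\|(1,1,0,0)^\top - (\tfrac12,\tfrac12,\tfrac12,\tfrac12)^\top\|_2^2 = 4\cdot\tfrac14 = 1$. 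This gives exactly the claimed dichotomy.

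The main obstacle is not any single calculation but rather being careful about the translation between the lemma's hypotheses and the theorem's: Lemma~\ref{lemma:nscl_sup_eigprob} is stated under $\tau_c < \tau_s < 1.5\tau_c$, whereas Theorem~\ref{th:nscl_sup_toy_extreme} needs both regimes $\tau_s < \tau_c$ and $\tau_s > \tau_c$ (and also the assumption $\tau_s < 1.5\tau_c$ is only invoked to ensure $\bar t < \tau_c$ so that $t=\tau_c$ falls in Case 1). So the clean structure of the proof is: (i) reduce to eigenvectors of $T_1 = T(\tau_c)$ and $T_2 = T(0)$ via positive-definiteness; (ii) read off $U^*_1$ from Lemma Case 1 and $U^*_2$ from Lemma Case 3, handling the $\tau_s \lessgtr \tau_c$ split in the latter by a short direct re-derivation of eigenvalue ordering; (iii) verify the sign pattern of $(a_1,b_1),(a_2,b_2)$ from the explicit formulas $a(\lambda), b(\lambda)$ and the interval constraints on $\lambda_1,\lambda_2$; (iv) compute each residual as an orthogonal-projection problem onto a 2-dimensional subspace, which in every case is elementary. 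I would also note in passing that we only need the last four rows of each eigenvector (the ``$U^*$'' part), so the labeled-vertex coordinate plays no role in the residual computation itself — it only matters through how it shapes $\lambda_1,\lambda_2$ and hence $(a_i,b_i)$ in case 1.
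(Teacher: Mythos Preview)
Your proposal is correct and follows essentially the same route as the paper: both invoke Lemma~\ref{lemma:nscl_sup_eigprob} (Case~1 for $T_1=T(\tau_c)$, Case~3 for $T_2=T(0)$), fill in the $\tau_s<\tau_c$ branch of $U^*_2$ by a direct eigenvalue re-ordering, read off the sign pattern of $(a_i,b_i)$ from the explicit formulas $a(\lambda),b(\lambda)$ together with the eigenvalue intervals, and finish with an elementary residual computation. The only cosmetic differences are that the paper (i) also explicitly revisits $U^*_1$ in the $\tau_s<\tau_c$ regime to confirm the top-two ordering persists, and (ii) exhibits explicit minimizers $\vec\mu$ (e.g.\ $\vec\mu=[\tfrac{b_2}{a_1b_2-a_2b_1},\tfrac{-b_1}{a_1b_2-a_2b_1}]^\top$ for $U^*_1$) rather than arguing via the column span---your span/projection argument is equivalent and arguably cleaner.
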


\begin{proof}
    In the Case 1 and Case 3 of Lemma~\ref{lemma:nscl_sup_eigprob}, we have shown the $U^*_1$ and $U^*_2$ case when $\tau_s > \tau_c$ respectively. In this proof, we just need to show the case when $\tau_s < \tau_c$. For $U^*_2$ and $\tau_s < \tau_c$,  since  $t = 0$, we can directly prove by giving the eigenvectors with order: 
        \begin{align*}
    \begin{array}{ll}
        v_1 = [0, 1, 1, 1, 1]^{\top}, &\lambda_1 = 1 + \tau_{s} + \tau_{c}, \\
        v_2 = [0, 1, 1, -1, -1]^{\top}, &\lambda_2 = 1 - \tau_{s} + \tau_{c} \\
        v_3 = [1, 0, 0, 0, 0]^{\top}, &\lambda_3 = 1 \\
        v_4 = [0, -1, 1, -1, 1]^{\top}, &\lambda_4 = 1 + \tau_{s} - \tau_{c}, \\
        v_5 = [0, 1, -1, -1, 1]^{\top}, &\lambda_5 = 1 - \tau_{s} - \tau_{c}, \\
    \end{array}
    \end{align*}
    
    For $U_1^*$, one can see that in the Case 1 of Lemma~\ref{lemma:nscl_sup_eigprob}, we still have $\lambda_2 > \lambda_3$ since $\tau_s<1.5\tau_c<2\tau_c$ holds. Therefore the order of $v_2$ and $v_3$ does not change. Then $U_1^*$ is the concatenation of the last four dimensions of $v_2$ and $v_1$.
    
    Now we would like to show that $a_1, b_1$ are positive and $a_2, b_2$ have different signs. We have shown in Lemma~\ref{lemma:nscl_sup_eigprob} that $a(\lambda)  = {\lambda-1 \over 2t}$ and $b(\lambda)  = {\tau_{s}(\lambda-1) \over 2(\lambda -1 -\tau_{c})t}$. Since $a_1 = a(\lambda_1)$ and $b_1 = b(\lambda_1)$, one can show that $a_1 > 0,b_1>0$ since $\lambda_1 > 1 + \tau_s + \tau_c$. For $\lambda_2 \in [1 + \tau_{s} - \tau_{c}, 1 + \tau_{c})$, it is clear that $a_2 = a(\lambda_2) > 0 > b(\lambda_2) = b_2$ when $\tau_{s} > \tau_{c}$, and conversely we have $a_2 = a(\lambda_2) < 0 < b(\lambda_2) = b_2$ when $\tau_{s} < \tau_{c}$. So $a_2$ and $b_2$ have different signs in both cases. 

    Recall $\mathcal{R}(U^*, \Vec{y})$ is defined as:
    $$ \mathcal{R}(U^*, \Vec{y}) =  \underset{{\Vec{\mu}}\in \mathbb{R}^{k}}{\operatorname{min}} \|\Vec{y} - U^* \Vec{\mu} \|^2_2, $$
    Let $\Vec{\mu} = [\frac{b_2}{a_1b_2 - a_2b_1}, \frac{-b_1}{a_1b_2 - a_2b_1}]^{\top}$, $\mathcal{R}(U_1^*, \Vec{y}) = 0$. If $\tau_s < \tau_c$, let $\Vec{\mu} = [1, 1]^{\top}$, then $\mathcal{R}(U_2^*, \Vec{y}) = 0$. If $\tau_s > \tau_c$, $\Vec{\mu}^* = U_2^{*\top} \Vec{y} = [1, 0]^{\top}$ is the minimizer and we have $\mathcal{R}(U_2^*, \Vec{y}) = 1$. 
    
\end{proof}

\begin{theorem}
     (Recap of Theorem~\ref{th:nscl_toy_general}) Assume  $\tau_1 = 1$, $\tau_0 = 0$, $1.5\tau_c > \tau_s > \tau_c$. Let $\Bar{t} = \sqrt{\frac{2(\tau_s-\tau_c)^2\tau_c}{2\tau_c - \tau_s}}$, $r: \mathbb{R} \mapsto (0,1) $ as a real value function, we have 
     \begin{equation}
         \mathcal{R}(U^*_t, \Vec{y}) = \left\{\begin{array}{ll}    
     0, &  \text{if } t \in (\Bar{t}, \tau_s), \\
    r(t), & \text{if } t \in (0, \Bar{t}) \\ 
     1, &  \text{if } t = 0. \end{array}\right. 
     \end{equation}
    \label{th:nscl_sup_toy_general}
\end{theorem}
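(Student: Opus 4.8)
\textbf{Proof proposal for Theorem~\ref{th:nscl_sup_toy_general}.}

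The plan is to reduce the statement to an explicit residual computation using the eigenvectors provided in Lemma~\ref{lemma:nscl_sup_eigprob}, handling the three regimes of $t$ separately. First, recall that $U^*_t$ is the matrix formed by the last $4$ rows of the top-$2$ eigenvectors of $T(t)$ (equivalently of $A(t) = T(t)^2$, which has the same eigenvectors and preserves order since $T(t)$ is positive-definite under the assumed magnitude ordering). The key observation is that the label vector $\Vec{y} = [1,1,0,0]^{\top}$ on the unlabeled part is, up to scaling, the ``color eigenvector'' direction $[a_2,a_2,-b_2,-b_2]$ (when it appears among the top-$2$) or orthogonal to the relevant subspace. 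So I would organize the proof around \emph{which} eigenvectors occupy the top-$2$ slots as $t$ varies, using Lemma~\ref{lemma:nscl_sup_eigprob}'s Cases 1, 2, 3.

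For $t \in (\Bar t, \tau_s)$ (Case 1 of Lemma~\ref{lemma:nscl_sup_eigprob}, intersected with $t \le \tau_c$; for $t \in (\tau_c, \tau_s)$ the matrix $T(t)$ is no longer of the restricted form, so strictly I need $\Bar t < \tau_c$, which follows from $\tau_s < 1.5\tau_c$ — I would verify $\frac{2(\tau_s-\tau_c)^2\tau_c}{2\tau_c-\tau_s} < \tau_c^2$ is equivalent to $2(\tau_s-\tau_c)^2 < \tau_c(2\tau_c-\tau_s)$, which holds in the stated range by a short algebraic check): the top-$2$ eigenvectors are $v_1$ and $v_2$, both of the form $[1,a,a,b,b]^{\top}$. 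Then $U^*_t$ restricted to rows consists of $[a_1,a_1,b_1,b_1]$ and $[a_2,a_2,b_2,b_2]$ with $a_1b_2 - a_2b_1 \ne 0$ (since $\lambda_1 \ne \lambda_2$ gives linearly independent directions), so taking $\Vec{\mu} = \frac{1}{a_1 b_2 - a_2 b_1}[b_2, -b_1]^{\top}$ gives $U^*_t \Vec{\mu} = [1,1,0,0]^{\top} = \Vec y$ exactly, hence $\mathcal{R}(U^*_t,\Vec y) = 0$. For $t = 0$ (Case 3): the top-$2$ eigenvectors restricted to the unlabeled part are $[1,1,1,1]$ (from $v_1$, whose first coordinate is $0$ but last four are $1$) and $[1,1,-1,-1]$ (from $v_2$), since $v_3 = [1,0,0,0,0]$ contributes the zero vector on the unlabeled block. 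So $U^*_0$ spans exactly $\{[c,c,d,d]\}$, which does contain $[1,1,0,0]$ — wait, $[1,1,0,0] = \frac12([1,1,1,1]+[1,1,-1,-1])$, giving residual $0$, contradicting the claimed value $1$. I therefore expect the correct reading is that the top-$2$ rows come from $v_1 = [0,1,1,1,1]$ and $v_2 = [0,-1,1,-1,1]$ (color eigenvector), whose unlabeled blocks are $[1,1,1,1]$ and $[-1,1,-1,1]$; their span is $\{[c-d, c+d, c-d, c+d]\}$, and fitting $[1,1,0,0]$ against this two-parameter family: the residual is the distance from $\Vec y$ to this plane, which a direct least-squares computation gives as $1$. \emph{Resolving which pair of eigenvectors is actually ``top-$2$'' at $t=0$ and at the transition is the main obstacle}, and I would settle it by continuity from the $t>0$ analysis and by re-examining the ordering $\lambda_3 = 1$ versus $\lambda_4 = 1-\tau_s+\tau_c$ in Case 3 (note $1 - \tau_s + \tau_c < 1$ since $\tau_s > \tau_c$, so $\lambda_3 > \lambda_4$, meaning $v_3$ genuinely ranks second — so one must be careful, and I suspect the theorem's indexing of Case 3 already encodes the intended convention; I would align the proof with it).

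For $t \in (0,\Bar t)$ (Case 2 of Lemma~\ref{lemma:nscl_sup_eigprob}): here the top-$2$ eigenvectors are $v_1 = [1,a(\lambda_1),a(\lambda_1),b(\lambda_1),b(\lambda_1)]$ and $v_2 = [0,-1,1,-1,1]$ (the color eigenvector). So the unlabeled block $U^*_t$ spans the plane generated by $[a_1,a_1,b_1,b_1]$ and $[-1,1,-1,1]$. I would compute $\mathcal{R}(U^*_t,\Vec y)$ as the squared distance from $\Vec y = [1,1,0,0]$ to this plane, which reduces to a $2\times 2$ Gram-matrix least-squares problem. After normalizing, one finds $\Vec y$ decomposes into a component along $[a_1,a_1,b_1,b_1]$ (which can be fit) plus a leftover that is partially but not fully absorbed, giving a residual $r(t) \in (0,1)$ depending continuously on $t$ through $a_1 = a(\lambda_1(t))$, $b_1 = b(\lambda_1(t))$. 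I would then verify the boundary behavior: as $t \to \Bar t^-$, the relevant eigenvalue configuration approaches the Case 1 boundary and $r(t) \to 0$; as $t \to 0^+$, $a(\lambda_1) \to \infty$ relative to $b(\lambda_1)$ (since $t$ appears in the denominator of $a$), so the fittable direction $[a_1,a_1,b_1,b_1]$ degenerates toward $[1,1,0,0]$-ish but the color eigenvector contribution and the known-class coupling vanish, and $r(t) \to 1$, matching the $t=0$ value. The monotonicity claim (residual decreasing in $t$) would follow from differentiating $r(t)$ or from the projection-matrix monotonicity: as $t$ grows, $\lambda_1$ increases and the span of $U^*_t$ rotates to better align with $\Vec y$. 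The main obstacle throughout is bookkeeping the eigenvector ordering across the transition points $t = 0$ and $t = \Bar t$, and producing the explicit closed form of $r(t)$ — both are routine but error-prone, and I would lean heavily on Lemma~\ref{lemma:nscl_sup_eigprob} to avoid recomputing the spectrum.
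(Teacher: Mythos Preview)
Your approach is the same as the paper's: split into the three regimes of Lemma~\ref{lemma:nscl_sup_eigprob} and compute the residual directly in each. Two execution points to clean up. For $t=0$, your temporary contradiction comes from misreading Case~3 of the lemma: there $v_2=[0,-1,1,-1,1]$, so its unlabeled block is $[-1,1,-1,1]$ (the \emph{shape} eigenvector, not the color one), and there is no ordering ambiguity to resolve---the paper simply defers this case to Theorem~\ref{th:nscl_sup_toy_extreme} under $\tau_s>\tau_c$.

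For $t\in(0,\bar t)$ you assert $r(t)\in(0,1)$ but do not establish it. The paper finishes the least-squares explicitly: since $\Vec y=[1,1,0,0]$ is orthogonal to the second column $[-1,1,-1,1]$, only the first column $[a_1,a_1,b_1,b_1]$ contributes, and one obtains
\[
\mathcal{R}(U^*_t,\Vec y)=\frac{2b_1^2}{a_1^2+b_1^2}=\frac{2\tau_s^2}{(\lambda_1-1-\tau_c)^2+\tau_s^2},
\]
which lies strictly in $(0,1)$ because $\lambda_1>1+\tau_s+\tau_c$. You should carry the computation this far rather than leave it as a sketch; conversely, your boundary-limit and monotonicity discussion is unnecessary, since the theorem only asserts existence of some $r$ taking values in $(0,1)$. (Your side remark that the lemma formally restricts to $t\le\tau_c$ while the theorem claims $t<\tau_s$ is a valid observation the paper does not address either; the polynomial argument in the lemma's proof does not actually use that upper bound, so the extension is routine.)
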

\begin{proof}
    According to Lemma~\ref{lemma:nscl_sup_eigprob}, if $t \in (\Bar{t}, \tau_s)$, $$
U^*_t=\left[\begin{array}{ccccc}
 a_1 & a_1 & b_1 & b_1 \\
 a_2 & a_2 & b_2 & b_2 \\
\end{array}\right]^{\top},
$$
where $a_1,b_1$ are some positive real numbers, and $a_2,b_2$ has different signs. Let $\Vec{\mu} = [\frac{b_2}{a_1b_2 - a_2b_1}, \frac{-b_1}{a_1b_2 - a_2b_1}]^{\top}$, $\mathcal{R}(U_t^*, \Vec{y}) = 0$. If $t = 0$, $\mathcal{R}(U_t^*, \Vec{y}) = 0$, which is proved in Theorem~\ref{th:nscl_sup_toy_extreme} when $\tau_s > \tau_c$. If $t \in (0, \Bar{t})$, as shown in Lemma~\ref{lemma:nscl_sup_eigprob}, we have 
$$
U^*_t=\left[\begin{array}{ccccc}
 {\lambda_1-1 \over 2t} & {\lambda_1-1 \over 2t} & {\tau_{s}(\lambda_1-1) \over 2(\lambda_1 -1 -\tau_{c})t} & {\tau_{s}(\lambda_1-1) \over 2(\lambda_1 -1 -\tau_{c})t} \\
 -1 & 1 & -1 & 1 \\
\end{array}\right]^{\top},
$$ where $\lambda_1 > 0$. $\Vec{\mu}_* = (U^{*\top}_tU^*_t)^{\dag}U^{*\top}_t\Vec{y} = [\frac{{\lambda_1-1 \over 2t}}{({\lambda_1-1 \over 2t})^2 + ({\tau_{s}(\lambda_1-1) \over 2(\lambda_1 -1 -\tau_{c})t})^2}, 0]^{\top}$, then: 

$$\mathcal{R}(U_t^*, \Vec{y}) = \frac{2\tau_s^2}{({\lambda_1 - 1 - \tau_c})^2 + \tau_s^2} = r(\lambda_1) \in (0,1). $$
Note that $\lambda_1$ is a value dependent on $t$, therefore $r(\lambda_1)$ can be represented as $r(t)$.

\end{proof}

\begin{lemma}  (Recap of Lemma~\ref{th:nscl_toy_harmful})
    If  $\tau_{s} < \tau_{c} < 1.5\tau_s$,  
$
    \mathcal{R}(U^*_3, \Vec{y}) = 1, \mathcal{R}(U^*_2, \Vec{y}) = 0. 
$
    \label{th:nscl_sup_toy_harmful}
\end{lemma}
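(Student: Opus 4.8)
\textbf{Proof proposal for Lemma~\ref{th:nscl_sup_toy_harmful}.}

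The plan is to reuse the eigen-analysis machinery already developed for $T(t)$ in Lemma~\ref{lemma:nscl_sup_eigprob}, but applied now to the two data settings $\mathcal{X}^{\text{case 2}}$ and $\mathcal{X}^{\text{case 3}}$. First I would write out the two augmentation matrices $T_2$ (gray cylinder labeled) and $T_3$ (gray cube labeled) explicitly using the definition in Eq.~\eqref{eq:nscl_def_edge}; the key structural point is that case 3 differs from case 2 only in how the labeled node connects to the unlabeled nodes. In case 2 the gray cylinder shares neither color nor shape with any of the red/blue cubes/spheres, so its connectivity row is $[\tau_1,\tau_0,\tau_0,\tau_0,\tau_0]=[1,0,0,0,0]$ after setting $\tau_1=1,\tau_0=0$. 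In case 3 the gray cube shares the \emph{cube shape} with the red cube and the blue cube in $\mathcal{X}_u$, so its connectivity row becomes $[\tau_1,\tau_s,\tau_s,\tau_0,\tau_0]=[1,\tau_s,\tau_s,0,0]$. Thus the labeled node in case 3 is playing exactly the role of $T(t)$ with $t=\tau_s$ but along the \emph{shape} coordinate rather than the color coordinate — equivalently, it is $T(\tau_s)$ with the roles of $\tau_s$ and $\tau_c$ swapped relative to the statement of Lemma~\ref{lemma:nscl_sup_eigprob}.

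The next step is to extract $U^*_2$ and $U^*_3$, the bottom-four-rows-by-top-two-columns submatrices of the leading eigenvectors. For $U^*_2$ the matrix $T_2$ is block-diagonal (the isolated labeled node decouples), so $U^*_2$ coincides with the $t=0$ case of Lemma~\ref{lemma:nscl_sup_eigprob}; under the hypothesis $\tau_s<\tau_c$ the ordering of eigenvalues $1+\tau_c-\tau_s > 1 > 1+\tau_c-\tau_s$... more precisely $\lambda$ for the ``shape eigenvector'' $[1,1,-1,-1]^\top$ exceeds that of the ``color eigenvector'' $[-1,1,-1,1]^\top$ — wait, I must be careful: when $\tau_s<\tau_c$ the same-color pair is more strongly connected, so actually the \emph{color} eigenvector ranks higher. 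This gives $U^*_2 = \tfrac12[[1,1,1,1]^\top,[-1,1,-1,1]^\top]$ (the all-ones component plus the color component), and with $\vec y=\{1,1,0,0\}$ one sets $\vec\mu=[2,0]^\top$ to obtain $\mathcal{R}(U^*_2,\vec y)=0$ — the color information is already present so the residual vanishes. For $U^*_3$, because the gray cube connects along the \emph{shape} dimension to the unlabeled cubes, the perturbation boosts the \emph{shape} eigenvector $[1,1,-1,-1]^\top$ into the top-$2$ representation (mirroring how the red cylinder boosted the color eigenvector in Theorem~\ref{th:nscl_sup_toy_extreme}, case $\tau_s>\tau_c$, after the $\tau_s\leftrightarrow\tau_c$ relabeling and using $\tau_c<1.5\tau_s$ to guarantee the eigenvalue ordering is preserved). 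Then $U^*_3$ spans the all-ones vector and the shape vector, neither of which can reconstruct $\vec y$ (which tracks color), so the least-squares minimizer is $\vec\mu^* = U^{*\top}_3\vec y$ and a direct computation gives $\mathcal{R}(U^*_3,\vec y)=1$. Subtracting, $\mathcal{R}(U^*_3,\vec y)-\mathcal{R}(U^*_2,\vec y)=1$, which is the claim.

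The main obstacle I expect is verifying the \emph{eigenvalue ordering} in case 3: I need to confirm that the perturbation $t=\tau_s$ along the shape coordinate is strong enough to push the shape eigenvector above the fixed eigenvalue $\lambda=1$ associated with the labeled node's self-mode, and that the condition $\tau_c<1.5\tau_s$ (together with $\tau_s<\tau_c$) is exactly what is needed — this is the analogue of the threshold $\bar t$ in Lemma~\ref{lemma:nscl_sup_eigprob}, and one must check $g(\cdot)$ changes sign appropriately at the relabeled parameters. A secondary subtlety is bookkeeping the permutation that relabels $\tau_s\leftrightarrow\tau_c$: the ``shape'' and ``color'' roles are symmetric in $T(t)$ only up to a relabeling of which attribute the labeled node attaches to, and I want to make sure the sign pattern of the second eigenvector (positive vs.\ different-sign entries) transfers correctly so that the residual computation against $\vec y=\{1,1,0,0\}$ lands on exactly $0$ and exactly $1$ rather than some intermediate value. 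Once the ordering and relabeling are pinned down, the residual computations are the same one-line least-squares evaluations already carried out in the proof of Theorem~\ref{th:nscl_sup_toy_extreme}.
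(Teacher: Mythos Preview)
Your high-level plan --- reduce case 3 to the machinery of Lemma~\ref{lemma:nscl_sup_eigprob} via the $\tau_s\leftrightarrow\tau_c$ symmetry --- is exactly the paper's approach. But you have a systematic indexing error that, followed literally, reverses both conclusions. In the paper's ordering of $\mathcal{X}_u$ as (red cube, red sphere, blue cube, blue sphere), the gray cube shares shape with positions $1$ and $3$, so the first row of $T_3$ is $[1,\tau_s,0,\tau_s,0]$, not $[1,\tau_s,\tau_s,0,0]$; correspondingly the color eigenvector is $[1,1,-1,-1]$ and the shape eigenvector is $[1,-1,1,-1]$, and you have these swapped throughout. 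With your stated $U^*_2$ spanning $[1,1,1,1]$ and $[-1,1,-1,1]$, the residual against $\vec y=[1,1,0,0]$ is $1$, not $0$ (note $\vec y$ is orthogonal to $[-1,1,-1,1]$, and no choice of $\vec\mu$, in particular not $[2,0]^\top$, recovers $\vec y$); and your ``shape-boosted'' $U^*_3$ containing $[1,1,-1,-1]$ would give residual $0$, not $1$.

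The repair is purely notational. Once the eigenvector labels are corrected, your argument coincides with the paper's: following Lemma~\ref{lemma:nscl_sup_eigprob} with the roles of $\tau_s$ and $\tau_c$ interchanged (so the hypothesis $\tau_s<\tau_c<1.5\tau_s$ matches the swapped assumption), $U_3^*$ has rows of the pattern $[a,b,a,b]$ --- constant on cubes, constant on spheres --- and hence spans the same subspace as $\{[1,1,1,1],[1,-1,1,-1]\}$. This subspace is orthogonal to the color component of $\vec y$, giving $\mathcal{R}(U_3^*,\vec y)=1$. For $U_2^*$, under $\tau_s<\tau_c$ the top-$2$ eigenvectors contain the color eigenvector $[1,1,-1,-1]$, and $\mathcal{R}(U_2^*,\vec y)=0$ follows directly from the $\tau_s<\tau_c$ case of Theorem~\ref{th:nscl_sup_toy_extreme}.
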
 
\begin{proof}
    When $\mathcal{X}_{l}^{\text{case 3}} \triangleq \{X_{\textcolor{gray}{\cube{0.5}}, \textcolor{gray}{c_3}}\}  (\text{gray cube})$, we have     
\begin{equation*}
T_3=\left[\begin{array}{ccccc}
\tau_1 & \tau_{s} & \tau_0 & \tau_{s} & \tau_0 \\
\tau_{s} & \tau_1 & \tau_{c} & \tau_{s} & \tau_0  \\
\tau_0 & \tau_{c} & \tau_1 & \tau_0 & \tau_{s}  \\
\tau_{s} & \tau_{s} & \tau_0 & \tau_1 & \tau_{c}  \\
\tau_0 & \tau_0 & \tau_{s} & \tau_{c} & \tau_1  \\
\end{array}\right],
\end{equation*}
    
    Follow the same proof in Lemma~\ref{lemma:nscl_sup_eigprob}, one can show that $$
U^*_3=\left[\begin{array}{ccccc}
 a_1 &  b_1 &  a_1 & b_1 \\
 a_2 &  b_2 &  a_2 & b_2 \\
\end{array}\right]^{\top},
$$
where $a_1,b_1$ are some positive real numbers, and $a_2,b_2$ has different signs. Note that $U^*_3$ forms the same linear span as 
$$ \frac{1}{2}\left[\begin{array}{cccc}
1 & 1 & 1 & 1\\ -1 & 1 & -1 & 1\\
\end{array}\right]^{\top}.$$ Therefore, we have $\mathcal{R}(U^*_3, \Vec{y}) = 1$ as proved in Theorem~\ref{th:nscl_sup_toy_extreme}.
\end{proof}

\subsection{Additional Details for Section~\ref{sec:nscl_theory_main}}

This section acts as an expanded version of Section~\ref{sec:nscl_theory_main}. We will first show in Section~\ref{sec:nscl_sup_no_approx} with the background and proof for Theorem~\ref{th:nscl_no_approx} with the original adjacency matrix $\Dot{A}$. Then we present the analysis based on the approximation matrix $\Bar{A}$ in Section~\ref{sec:nscl_sup_with_approx}. Finally, we show the formal proof of our main Theorem~\ref{th:nscl_main} in Section~\ref{sec:nscl_sup_main_proof}. The proof of Theorem~\ref{th:nscl_main} requires two important ingredients (Lemma~\ref{lemma:nscl_sup_error_bound_approx} and Lemma~\ref{lemma:nscl_sup_bound_kappa}) with proof deferred in Section~\ref{sec:nscl_sup_error_bound_approx} and Section~\ref{sec:nscl_sup_kappa_proof}   respectively.  

\subsubsection{Sufficient and Necessary Condition for Perfect Residual}

\label{sec:nscl_sup_no_approx}

We first present the formal analysis in Theorem~\ref{th:nscl_sup_no_approx} which is an extended version of  Theorem~\ref{th:nscl_no_approx} without approximation and we start with the recap of definitions. 

\vspace{0.1cm} \noindent \textbf{Notations.}  Recall that $V^* \in \mathbb{R}^{N \times k}$ is defined as the top-$k$ singular vectors of $\Dot{A}$ and we split the eigen-matrix into two parts for labeled and unlabeled samples respectively:
$${V}^*=\left[\begin{array}{cc} 
{L}^* \in \mathbb{R}^{N_l \times k}\\ {U}^* \in \mathbb{R}^{N_u \times k}
\end{array}\right] = \left[\begin{array}{cccc}
{l}_1 & {l}_2 & \cdots &{l}_k \\ {u}_1 & {u}_2 & \cdots & {u}_k
\end{array}\right]$$
for labeled and unlabeled samples respectively. Then we let $V^{\flat} \in \mathbb{R}^{N \times (N-k)}$ be the remaining singular vectors of $\Dot{A}$ except top-$k$. Similarly, we split $V^{\flat}$ into two parts:
$${V}^{\flat}=\left[\begin{array}{cc} 
{L}^{\flat} \in \mathbb{R}^{N_l \times (N-k)} \\ {U}^{\flat} \in \mathbb{R}^{N_u \times (N-k)}
\end{array}\right] = \left[\begin{array}{cccc}
{l}_{k+1} & {l}_{k+2} & \cdots & {l}_{N} \\ {u}_{k+1} & {u}_{k+2} & \cdots & {u}_{N}
\end{array}\right].$$
We can also split the matrix $\Dot{A}$ at the $N_l$-th row and the $N_l$-th column and we obtain $A_{ll} \in \mathbb{R}^{N_l \times N_l},A_{ul} \in \mathbb{R}^{N_u \times N_l},A_{uu} \in \mathbb{R}^{N_u \times N_u}$ with
$$\Dot{A} = \left[\begin{array}{cc} 
A_{ll} & A^{\top}_{ul} \\ 
 A_{ul} & A_{uu}
\end{array}\right].$$

\begin{theorem}
    (\textbf{No approximation})  
    Denote the projection matrix $\mathsf{P}_{L^{\flat}} = L^{\flat\top}(L^{\flat}L^{\flat\top})^{\dag}L^{\flat}$, where $^{\dag}$ denotes the Moore-Penrose inverse. For any labeling vector $\Vec{y} \in \{0,1\}^{N_u}$, we have
    \begin{equation}
        \mathcal{R}(U^*, \Vec{y}) \leq \|(I-\mathsf{P}_{L^{\flat}}) U^{\flat\top} \Vec{y}\|^2_2.
    \label{eq:nscl_sup_R_bound_no_approx}
    \end{equation}
    The sufficient and necessary condition for $\mathcal{R}(U^*, \Vec{y}) = 0$ is $\Vec{\omega} \in \mathbb{R}^{N_l}$ such that   
    \begin{equation}
        \forall i = k+1, \ldots, N, \langle \Vec{y}^{\top} (\sigma_i I - A_{uu})^{\dag}A_{ul}, l_i \rangle =  \langle\Vec{\omega}, l_i \rangle
    \label{eq:nscl_sup_complex_condition}
    \end{equation}       
    where $\sigma_i$ is the $i$-th largest eigenvalue of $\Dot{A}$. 
    \label{th:nscl_sup_no_approx}
\end{theorem}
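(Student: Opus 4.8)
\textbf{Proof proposal for Theorem~\ref{th:nscl_sup_no_approx}.}

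The plan is to reduce the residual $\mathcal{R}(U^*,\Vec{y})$ to a regression problem in the full orthonormal basis $V^* \cup V^{\flat}$, then exploit the block structure of $\Dot{A}$ to express the ``unlabeled'' coordinates of $\Vec{y}$ in the discarded directions in terms of the ``labeled'' coordinates. First I would recall that since $[\,U^*\ \ U^{\flat}\,]$ has orthonormal rows spanning $\mathbb{R}^{N_u}$ (it is the bottom block of the orthogonal matrix $[V^*\ V^{\flat}]$, though not itself orthogonal), we can write $\Vec{y} = U^* \Vec{a} + U^{\flat}\Vec{b} + \Vec{e}$ where $\Vec{e}$ lies in the orthogonal complement; a cleaner route is to note $\mathcal{R}(U^*,\Vec{y}) = \min_{\Vec{\mu}}\|\Vec{y}-U^*\Vec{\mu}\|_2^2 \le \|\Vec{y}-U^*\Vec{\mu}_0\|_2^2$ for any chosen $\Vec{\mu}_0$. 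The key idea is that the columns of $V^{\flat}$ are eigenvectors of $\Dot{A}$, so for each $i>k$ the eigenvalue equation $\Dot{A}v_i = \sigma_i v_i$, read on the bottom $N_u$ rows, gives $A_{ul}l_i + A_{uu}u_i = \sigma_i u_i$, hence $u_i = (\sigma_i I - A_{uu})^{\dag}A_{ul}l_i$ (on the appropriate subspace). This links the unlabeled part $u_i$ of each discarded eigenvector to its labeled part $l_i$.

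Next I would establish the inequality~\eqref{eq:nscl_sup_R_bound_no_approx}. The residual of projecting $\Vec{y}$ onto $\mathrm{span}(U^*)$ equals the squared norm of the component of $\Vec{y}$ orthogonal to that span; since $U^{*\top}$ and $U^{\flat\top}$ together capture all of $\mathbb{R}^{N_u}$ up to the orthogonal complement of the row space, the component not explained by $U^*$ is controlled by $U^{\flat\top}\Vec{y}$. The refinement by the factor $(I-\mathsf{P}_{L^{\flat}})$ comes from the fact that the labeled block gives us ``free'' linear relations: any linear functional of the form $\langle\cdot, l_i\rangle$ applied to a fixed vector $\Vec{\omega}\in\mathbb{R}^{N_l}$ can be matched, which means we are free to subtract any $\mathsf{P}_{L^{\flat}}$-component from $U^{\flat\top}\Vec{y}$ without cost. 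Concretely I would take $\Vec{\mu}_0$ so that $U^*\Vec{\mu}_0$ absorbs exactly the part of $\Vec{y}$ that can be reconstructed using both $U^*$ and the known relations between $u_i$ and $l_i$, and bound the leftover by $\|(I-\mathsf{P}_{L^{\flat}})U^{\flat\top}\Vec{y}\|_2^2$.

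For the sufficient-and-necessary condition~\eqref{eq:nscl_sup_complex_condition}, I would characterize when the upper bound vanishes, i.e., when $U^{\flat\top}\Vec{y}$ lies entirely in the row space of $L^{\flat}$. Equality $\mathsf{P}_{L^{\flat}}U^{\flat\top}\Vec{y} = U^{\flat\top}\Vec{y}$ holds iff for every $i>k$ the $i$-th coordinate $\langle u_i,\Vec{y}\rangle = \langle (\sigma_i I - A_{uu})^{\dag}A_{ul}l_i,\Vec{y}\rangle = \langle l_i, A_{ul}^{\top}(\sigma_i I - A_{uu})^{\dag}\Vec{y}\rangle$ agrees with $\langle l_i,\Vec{\omega}\rangle$ for a single common $\Vec{\omega}$ (the image of $\Vec{y}$ under the projection onto $\mathrm{span}(L^{\flat})$); rearranging the adjoint gives precisely $\langle\Vec{y}^{\top}(\sigma_i I - A_{uu})^{\dag}A_{ul}, l_i\rangle = \langle\Vec{\omega},l_i\rangle$. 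I would verify both directions: existence of such $\Vec{\omega}$ forces $U^{\flat\top}\Vec{y}\in\mathrm{row}(L^{\flat})$, making the bound zero; conversely zero residual together with $\mathcal{R}([U^*\ U^{\flat}],\Vec{y})=0$ (full rank) forces the leftover to lie in the labeled span, producing $\Vec{\omega}$.

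The main obstacle I anticipate is the careful handling of the pseudoinverses $(\sigma_i I - A_{uu})^{\dag}$ and $(L^{\flat}L^{\flat\top})^{\dag}$: the eigenvalue $\sigma_i$ may coincide with an eigenvalue of $A_{uu}$, so $(\sigma_i I - A_{uu})$ is singular, and I must argue that $A_{ul}l_i$ always lies in its range (which follows from the block eigenvector equation itself, since $u_i$ is a genuine solution), and that the choice of pseudoinverse solution is immaterial because $u_i$ is uniquely determined by the eigenvector. Likewise I need $[U^*\ U^{\flat}]$ to genuinely span $\mathbb{R}^{N_u}$, which requires $N_u \le N$ and the bottom block to have full row rank — a nondegeneracy assumption on $\Dot{A}$ that I would state explicitly. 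Modulo these bookkeeping points, the argument is a direct computation once the block eigenvector identity is in hand.
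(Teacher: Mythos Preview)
Your treatment of the sufficient-and-necessary condition is essentially correct and matches the paper: you correctly derive the block eigenvector identity $u_i = (\sigma_i I - A_{uu})^{\dag}A_{ul}l_i$ from $\Dot{A}v_i = \sigma_i v_i$, and you correctly reduce $\mathcal{R}(U^*,\Vec{y})=0$ to $U^{\flat\top}\Vec{y}\in\mathrm{range}(L^{\flat\top})$, i.e.\ $u_i^{\top}\Vec{y}=l_i^{\top}\Vec{\omega}$ for a common $\Vec{\omega}$. Your concern about the pseudoinverse when $\sigma_i$ is an eigenvalue of $A_{uu}$ is a legitimate technical point that the paper glosses over as well; and your ``nondegeneracy'' worry about $[U^*\ U^{\flat}]$ is unfounded, since it is the bottom $N_u$ rows of the orthogonal matrix $[V^*\ V^{\flat}]$ and therefore automatically has orthonormal rows.

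The gap is in your argument for the inequality itself. Saying ``take $\Vec{\mu}_0$ so that $U^*\Vec{\mu}_0$ absorbs exactly the part of $\Vec{y}$ that can be reconstructed\ldots'' is not a proof: you never specify $\Vec{\mu}_0$, and the heuristic about ``free linear relations'' does not explain why subtracting a $\mathsf{P}_{L^{\flat}}$-component from $U^{\flat\top}\Vec{y}$ incurs no cost in the $U^*$-regression. The paper's trick, which you are missing, is to \emph{extend} the target: set $\Vec{y}'=[\Vec{\zeta}^{\top},\Vec{y}^{\top}]^{\top}\in\mathbb{R}^N$ with a free placeholder $\Vec{\zeta}\in\mathbb{R}^{N_l}$. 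Then
\[
\mathcal{R}(U^*,\Vec{y})
=\min_{\Vec{\mu}}\|\Vec{y}-U^*\Vec{\mu}\|_2^2
=\min_{\Vec{\mu},\,\Vec{\zeta}}\|\Vec{y}'-V^*\Vec{\mu}\|_2^2
=\min_{\Vec{\zeta}}\|V^{\flat\top}\Vec{y}'\|_2^2
=\min_{\Vec{\zeta}}\|L^{\flat\top}\Vec{\zeta}+U^{\flat\top}\Vec{y}\|_2^2,
\]
where the second equality holds because minimizing over $\Vec{\zeta}$ first kills the labeled block $\|\Vec{\zeta}-L^*\Vec{\mu}\|_2^2$, and the third uses the full orthonormality of $[V^*\ V^{\flat}]$ in $\mathbb{R}^N$. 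The final least-squares in $\Vec{\zeta}$ gives exactly $\|(I-\mathsf{P}_{L^{\flat}})U^{\flat\top}\Vec{y}\|_2^2$. Note this is an \emph{equality}, not merely the stated upper bound; this is also what makes both directions of the condition immediate, without the separate necessity argument you sketch. Once you lift to $\mathbb{R}^N$ the whole computation is three lines; working directly in $\mathbb{R}^{N_u}$ with the non-orthogonal blocks $U^*,U^{\flat}$, as you attempt, is where the argument becomes vague.
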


\begin{proof}

Define $\Vec{y}' = [\Vec{\zeta}^{\top} , \Vec{y}^{\top}]^{\top}$ as an extended labeling vector, where $\Vec{\zeta} \in \mathbb{R}^{N_l}$ can be a ``placeholder'' vector with any values. We have 
    \begin{align*}
        \mathcal{R}\left(U^*, \vec{y}\right) &= \min_{\Vec{\mu} \in \mathbb{R}^{k }} \|\Vec{y} - U^* \Vec{\mu} \|^2_2  \\
        & = \min_{\Vec{\mu} \in \mathbb{R}^{k }, \Vec{\zeta}\in \mathbb{R}^{N_l}} \|\Vec{y}' - V^* \Vec{\mu} \|^2_2 \\ 
        & =  \min_{\Vec{\zeta}\in \mathbb{R}^{N_l}} \|\Vec{y}' - V^*V^{*\top} \Vec{y}'\|^2_2 \\
        &=  \min_{\Vec{\zeta}\in \mathbb{R}^{N_l}} \|V^{\flat\top} \Vec{y}'\|^2_2 \\
        & = \min_{\Vec{\zeta}\in \mathbb{R}^{N_l}} \| L^{\flat\top} \Vec{\zeta} + U^{\flat\top} \Vec{y}\|^2_2 \\
         & = \|(I-L^{\flat\top}(L^{\flat}L^{\flat\top})^{\dag}L^{\flat}) U^{\flat\top} \Vec{y}\|^2_2.
    \end{align*}

The sufficient and necessary condition for $\mathcal{R}(U^*, \Vec{y}) = 0$ is: $$\exists \vec{\omega} \in \mathbb{R}^{N_l}, \forall i = k+1, \ldots, N,  u_i^{\top} \Vec{y} = l_i^{\top}\Vec{\omega}.$$

We then look into the relationship between $l_i$ and $u_i$.
Since 
$$\left[\begin{array}{cc} 
A_{ll} & A^{\top}_{ul} \\ 
 A_{ul} & A_{uu} 
\end{array}\right]\left[\begin{array}{c} 
 l_i \\ 
 u_i 
\end{array}\right] = \sigma_i \left[\begin{array}{c} 
 l_i \\ 
 u_i 
\end{array}\right],$$
we have the following results: 
$$u_i = (\sigma_i I - A_{uu})^{\dag}A_{ul}l_i.$$

So the sufficient and necessary condition becomes: there exists $\Vec{\omega} \in \mathbb{R}^{N_l}$ such that   
    \begin{equation}
        \forall i = k+1, \ldots, N,\langle \Vec{y}^{\top} (\sigma_i I - A_{uu})^{\dag}A_{ul}, l_i \rangle =  \langle\Vec{\omega}, l_i \rangle,
    \label{eq:nscl_complex_condition}
    \end{equation}       
    where $\sigma_i$ is the $i$-th largest singular value of $\Dot{A}$. 
\end{proof}

\vspace{0.1cm} \noindent \textbf{Interpretation of Theorem~\ref{th:nscl_sup_no_approx}.} The bound of residual in Ineq.~\eqref{eq:nscl_R_bound_no_approx} composed of two projections:  $U^{\flat\top}$ and $(I-\mathsf{P}_{L^{\flat}})$. If we only consider $\|U^{\flat\top}\Vec{y}\|^2_2$, it is equivalent to $\Vec{y}^{\top} (I - U^{*}U^{*\top}) \Vec{y}$ which indicates the information in $\Vec{y}$ that is not covered by the learned representation $U^{*}$. Then multiplying the second projection matrix $(I-\mathsf{P}_{L^{\flat}})$ further reduces the residual by considering the information from labeled data, since $\mathsf{P}_{L^{\flat}}$ is a projection matrix that projects a vector to the linear span of $L^{\flat}$. In the extreme case, when $U^{\flat\top} \Vec{y}$ fully lies in the linear span of $L^{\flat}$, the residual $\mathcal{R}(U^*, \Vec{y})$ becomes 0. 
To provide further insights about Eq.~\eqref{eq:nscl_sup_complex_condition}, we analyze in a simplified setting by approximating $\Dot{A}$ in the next section. 

\subsubsection{Analysis with Approximation} 
\label{sec:nscl_sup_with_approx}
In Theorem~\ref{th:nscl_sup_no_approx}, we put an analysis on how $L^{\flat}$ can influence the residual function. However, $L^{\flat}$ is a matrix with $N_l$ rows, so it is hard to quantitatively understand the effect of $N_l$ labeled samples individually. We resort to viewing the labeled samples as a whole. Our idea is motivated by the Stochastic Block Model (SBM)~\citep{holland1983stochastic} model, which analyzes the probability between different communities instead of individual values. In our case, we aim to analyze the probability vector $\eta_u \in \mathbb{R}^{N_u}$ denoting the chance of each unlabeled data point having the same augmentation view as one of the samples from the known class. The relationship between $\eta_u$ and $A_{uu}$ is then of our interest. 
Specifically, we define $\Bar{A}$ with values at $(i, j )$ be the following: 

\begin{equation}
    \Bar{A}_{x_ix_j} = \left\{\begin{array}{cc}             \Dot{A}_{x_i x_j} & \text{if } x_i \in \mathcal{X}_u, x_j \in \mathcal{X}_u,   \\
        \mathbb{E}_{x' \in \mathcal{X}_l} \Dot{A}_{x_i x'} & \text{if } x_i \in \mathcal{X}_u, x_j \in \mathcal{X}_l,   \\
        \mathbb{E}_{x' \in \mathcal{X}_l} \Dot{A}_{x' x_j } & \text{if } x_i \in \mathcal{X}_l, x_j \in \mathcal{X}_u,   \\
        \mathbb{E}_{x', x'' \in \mathcal{X}_l} \Dot{A}_{x' x''} & \text{if } x_i \in \mathcal{X}_l, x_j \in \mathcal{X}_l.   \\
    \end{array} \right.
    \label{eq:nscl_sup_abar_def}
\end{equation}

The probability is estimated by taking the average. It is equivalent to multiplying matrix $P$ and $P^{\top}$ on left and right side, where $P \in \mathbb{R}^{N \times N}$ is given by: 
$$P=\left[\begin{array}{cc} 
 \frac{1}{N_l}\mathbf{1}_{N_l \times N_l}  & \mathbf{0}_{N_l \times N_u} \\
 \mathbf{0}_{N_u \times N_l} &  I_{N_u} 
\end{array}\right],$$

where $\mathbf{1}_{n\times m}$ and $\mathbf{0}_{n\times m}$ represent matrix filled with 1 and 0 respectively with shape $n\times m$. Then we can write $\bar{A} \in \mathbb{R}^{N \times N}$, the approximated version of $A$, as follows: 
$$\bar{A}= PAP^{\top} = \left[\begin{array}{cc} 
\eta_l\mathbf{1}_{N_l \times N_l} & \mathbf{1}_{N_l \times 1}\Vec{\eta}_u^{\top} \\ 
\Vec{\eta}_u \mathbf{1}_{1 \times N_l} & A_{uu},
\end{array}\right],$$
where $\eta_l \in \mathbb{R}$ and $\Vec{\eta}_u \in \mathbb{R}^{N_u\times 1}$. Our analysis can then focus on how $\eta_u$ influences the representation space learned by $A_{uu}$. 
Similar to Section~\ref{sec:nscl_sup_no_approx}, we define the top-$k$ and the remainder singular vectors with corresponding splits as :
$$\Bar{V}^*=\left[\begin{array}{cc} 
\Bar{L}^* \\ \Bar{U}^*
\end{array}\right] = \left[\begin{array}{cccc}
\Bar{l}_1 & \Bar{l}_2 & \cdots &\Bar{l}_k \\ \Bar{u}_1 & \Bar{u}_2 & \cdots & \Bar{u}_k
\end{array}\right],$$
$$\Bar{V}^{\flat}=\left[\begin{array}{cc} 
\Bar{L}^{\flat} \\ \Bar{U}^{\flat}
\end{array}\right] = \left[\begin{array}{cccc}
\Bar{l}_{k+1} & \Bar{l}_{k+2} & \cdots & \Bar{l}_{N} \\ \Bar{u}_{k+1} & \Bar{u}_{k+2} & \cdots & \Bar{u}_{N}
\end{array}\right].$$
Note that due to the special structure of $\Bar{A}$ with $N_l$ duplicated rows and columns, the eigenvector $\Bar{V}$ has a special structure as we demonstrate in the next Lemma~\ref{lemma:nscl_sup_l_form}. We defer the proof to Section~\ref{sec:nscl_sup_l_form_proof}.

\begin{lemma}
   Since $A_{uu}$ is symmetric and has large diagonal values, we assume $A_{uu}$ is a positive semi-definite matrix. $\Bar{L}^*$ is stacked by the same row such that 
$ \Bar{L}^* = \frac{\mathbf{1}_{N_l \times 1}}{N_l} \Bar{\mathfrak{l}}^{*\top},$ where $\Bar{\mathfrak{l}}^{*} \in \mathbb{R}^{k}$ and that $\Bar{L}^\flat$ has the following form: 
$$\Bar{L}^\flat = \left[\begin{array}{cccc}  \frac{\mathbf{1}_{N_l\times 1}}{N_l} \Bar{\mathfrak{l}}^{\prime\top} & \Bar{l}_{N - \Theta + 1} & ... & \Bar{l}_{N}
\end{array}\right],$$ where $\Theta$ is the rank of the null space for $A_{uu} - \frac{\eta_u\eta_u^{\top}}{\eta_l}$,  
 $\Bar{\mathfrak{l}}^{\prime} \in \mathcal{R}^{N-k-\Theta}$ with non-zero values, and $\Bar{l}_{N - \Theta + 1}, ..., \Bar{l}_{N}$ are all perpendicular to $\mathbf{1}_{N_l}$. 
\label{lemma:nscl_sup_l_form}
\end{lemma}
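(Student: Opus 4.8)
\textbf{Proof plan for Lemma~\ref{lemma:nscl_sup_l_form}.}

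The plan is to exploit the block structure of $\Bar{A}$: it has $N_l$ identical rows and $N_l$ identical columns in the "labeled" block, so any eigenvector must respect this redundancy. First I would observe that if $\Bar{A} v = \sigma v$ with $v = [v_l^\top, v_u^\top]^\top$ where $v_l \in \mathbb{R}^{N_l}$ and $v_u \in \mathbb{R}^{N_u}$, then the block equation coming from the labeled rows reads $\eta_l (\mathbf{1}_{N_l}^\top v_l) \mathbf{1}_{N_l} + (\mathbf{1}_{N_l} \Vec{\eta}_u^\top) v_u = \sigma v_l$. The left-hand side is a scalar multiple of $\mathbf{1}_{N_l}$, so either $\sigma = 0$, or $v_l$ is itself a multiple of $\mathbf{1}_{N_l}$. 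This dichotomy is the crux: the eigenvectors split into those whose labeled block is constant (call these "symmetric") and those whose labeled block lies in the orthogonal complement $\mathbf{1}_{N_l}^\perp$ and whose eigenvalue must then satisfy the degenerate relation.

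Second, for the symmetric eigenvectors, I would set $v_l = \frac{\mathbf{1}_{N_l}}{N_l} c$ for a scalar $c$ and substitute into both block equations. The labeled-row equation becomes $\eta_l c \mathbf{1}_{N_l} + (\mathbf{1}_{N_l}\Vec{\eta}_u^\top) v_u = \sigma \frac{c}{N_l}\mathbf{1}_{N_l}$, i.e. $\eta_l c + \Vec{\eta}_u^\top v_u = \frac{\sigma c}{N_l}$; the unlabeled-row equation becomes $\frac{c}{N_l}\Vec{\eta}_u (\mathbf{1}_{N_l}^\top \mathbf{1}_{N_l}) \cdot \frac{1}{N_l} + A_{uu} v_u = \sigma v_u$ — wait, more carefully: the $(u,l)$ block of $\Bar A$ is $\Vec{\eta}_u \mathbf{1}_{1\times N_l}$, so it contributes $\Vec{\eta}_u \mathbf{1}_{N_l}^\top v_l = \Vec{\eta}_u \cdot c$. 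Thus $A_{uu} v_u + c \Vec{\eta}_u = \sigma v_u$, giving $v_u = c(\sigma I - A_{uu})^\dag \Vec{\eta}_u$ (when $\sigma$ is not an eigenvalue of $A_{uu}$), and a consistency scalar equation for $\sigma$. This shows the symmetric eigenvectors form a family indexed by at most $k$ (for the top block) plus $N - k - \Theta$ (for the remainder), each with labeled part a scalar times $\mathbf{1}_{N_l}/N_l$ — giving exactly the claimed forms $\Bar{L}^* = \frac{\mathbf{1}_{N_l\times 1}}{N_l}\Bar{\mathfrak{l}}^{*\top}$ and the first block of $\Bar{L}^\flat$ being $\frac{\mathbf{1}_{N_l\times 1}}{N_l}\Bar{\mathfrak{l}}'^\top$.

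Third, for the remaining eigenvectors with $v_l \in \mathbf{1}_{N_l}^\perp$, the labeled-row equation forces $\sigma v_l = 0$, so either $\sigma = 0$ or $v_l = 0$. If $v_l = 0$ then the unlabeled-row equation gives $A_{uu} v_u = \sigma v_u$; if $\sigma = 0$ (with $v_l \ne 0$) then $\Vec{\eta}_u^\top v_u = \eta_l \cdot 0$ forces... actually I need $\eta_l(\mathbf{1}^\top v_l) + \mathbf{1}(\Vec{\eta}_u^\top v_u)$ with $\mathbf{1}^\top v_l = 0$, so $\Vec{\eta}_u^\top v_u = 0$, and the unlabeled equation reads $A_{uu} v_u = 0$ (since the $(u,l)$ contribution is $\Vec{\eta}_u \mathbf{1}^\top v_l = 0$). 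The dimension count here is governed by the null space of $A_{uu} - \frac{\Vec{\eta}_u\Vec{\eta}_u^\top}{\eta_l}$, which I would relate to the zero-eigenvalue subspace via a rank argument — this is where the parameter $\Theta$ enters. In all of these cases the labeled block of the eigenvector is either zero or perpendicular to $\mathbf{1}_{N_l}$, establishing that the last $\Theta$ columns $\Bar{l}_{N-\Theta+1}, \dots, \Bar{l}_N$ are all perpendicular to $\mathbf{1}_{N_l}$. Finally I would verify $\Bar{\mathfrak{l}}'$ has non-zero entries by noting that if some component vanished, the corresponding eigenvector of the reduced system would have to coincide with one from the complementary family, contradicting linear independence of the full eigenbasis.

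\textbf{Main obstacle.} The delicate point is the bookkeeping of multiplicities: carefully matching the dimension of the symmetric family (which should be $N - \Theta$, split as $k$ in the top block and $N - k - \Theta$ in the remainder) against the complementary family of dimension $\Theta$, and verifying that $\Theta = \operatorname{rank}$ of the null space of $A_{uu} - \frac{\Vec{\eta}_u\Vec{\eta}_u^\top}{\eta_l}$ is the right quantity. This requires relating the Schur complement structure of $\Bar A$ to its eigendecomposition — essentially showing the symmetric eigenvalues of $\Bar A$ are exactly the eigenvalues of the $(N_u+1)\times(N_u+1)$ "condensed" matrix $\begin{bmatrix} \eta_l & \Vec{\eta}_u^\top \\ \Vec{\eta}_u & A_{uu}\end{bmatrix}$, and the positive-semidefiniteness assumption on $A_{uu}$ is what guarantees these condensed eigenvalues sit above the degenerate zero eigenvalues so that the top-$k$ singular vectors are all symmetric.
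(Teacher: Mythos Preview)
Your proposal is essentially correct and follows the same core mechanism as the paper: exploit the block structure of $\Bar A$ so that the labeled-row equation forces $\sigma v_l$ to be a scalar multiple of $\mathbf{1}_{N_l}$, yielding the dichotomy between eigenvectors with constant $v_l$ and those with $v_l\perp\mathbf{1}_{N_l}$.

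The one organizational difference worth noting: you case first on the structure of $v_l$ (symmetric vs.\ perpendicular), whereas the paper cases directly on $\Bar\sigma_i=0$ vs.\ $\Bar\sigma_i\neq 0$. The paper's ordering is what makes the $\Theta$ identification clean. When $\Bar\sigma_i=0$, the paper does \emph{not} first assume $\mathbf{1}^\top \Bar l_i=0$; instead it reads off $\mathbf{1}^\top \Bar l_i = -\tfrac{\Vec\eta_u^\top \Bar u_i}{\eta_l}$ from the labeled block and substitutes into the unlabeled block, obtaining directly
\[
\Big(A_{uu}-\tfrac{\Vec\eta_u\Vec\eta_u^\top}{\eta_l}\Big)\Bar u_i=0,
\]
so that $\Theta$ is by definition the dimension of this null space. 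Only afterward (using the rank hypothesis in the footnote) does one conclude $\Vec\eta_u^\top\Bar u_i=0$ and hence $\mathbf{1}^\top\Bar l_i=0$. Your route---assuming $v_l\perp\mathbf{1}$ up front---yields $A_{uu}v_u=0$ and $\Vec\eta_u^\top v_u=0$ instead, which is consistent but does not immediately produce the matrix $A_{uu}-\tfrac{\Vec\eta_u\Vec\eta_u^\top}{\eta_l}$; you then have to argue separately why $\Theta$ is the right count. The Schur-complement/condensed-matrix machinery you flag as the ``main obstacle'' is therefore unnecessary: simply reorder the casework to match the paper and the identification of $\Theta$ falls out in one line.
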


By property in Lemma~\ref{lemma:nscl_sup_l_form}, we define: 
\begin{equation}
    \Bar{\mathfrak{l}}^{\flat} \triangleq  \Bar{L}^{\flat\top} \mathbf{1}_{ N_l \times 1} = \left[\begin{array} {cccc}\Bar{\mathfrak{l}}^{\prime\top} &  0 &  ... & 0\end{array}\right]^\top \in \mathbb{R}^{N-k}.
\end{equation}

\begin{definition}
    To ease the notation, we let $\mathcal{I} \triangleq \{k+1, k+2, ..., N-\Theta\}$ and we mainly discuss $i \in \mathcal{I}$.
\end{definition}
These definitions facilitate the presentation of the following Theorem~\ref{th:nscl_sup_with_approx}.
\begin{theorem}
    (\textbf{With approximation})  Denote $\mathfrak{T}(\Vec{y}) = \frac{\|\Bar{U}^{\flat\top} \Vec{y}\|_2}{\|\Vec{y}\|_2}$ and  $\kappa(\Vec{y}) = \cos(\Bar{U}^{\flat\top} \Vec{y},\Bar{\mathfrak{l}}^{\flat})$, where $\cos$ measures the cosine distance between two vectors. Let  $\sigma_i$ as the $i$-th largest eigenvalue of $\Dot{A}$ and $\Bar{\sigma}_i$ is for $\Bar{A}$. 
    For a labeling vector $\Vec{y} \in \{0,1\}^{N_u}$, we have
    \begin{equation}
         \mathcal{R}(\Bar{U}^*, \Vec{y}) = \frac{N_u}{|\mathcal{Y}_u|} (1-\kappa(\Vec{y})^2) \mathfrak{T}(\Vec{y})^2. 
    \end{equation}
    If the ignorance degree $\mathfrak{T}(\Vec{y})$ is non-zero, the sufficient and necessary condition for $\mathcal{R}(\Bar{U}^*, \Vec{y}) = 0$: there exists $\omega \in \mathbb{R}$ such that  
    \begin{equation}
        \forall i \in \mathcal{I},  \Vec{y}^{\top} (\Bar{\sigma}_i I - A_{uu})^{\dag}\Vec{\eta}_u= \omega.
    \label{eq:nscl_easy_condition}
    \end{equation}    
    \label{th:nscl_sup_with_approx}
\end{theorem}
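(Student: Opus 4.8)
\textbf{Proof plan for Theorem~\ref{th:nscl_sup_with_approx}.}

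The strategy is to reduce the approximated problem to the structure established in Lemma~\ref{lemma:nscl_sup_l_form} and then reuse the residual-decomposition machinery from Theorem~\ref{th:nscl_sup_no_approx}. First I would invoke Theorem~\ref{th:nscl_sup_no_approx} applied to the approximated adjacency matrix $\Bar{A}$ in place of $\Dot{A}$, which gives $\mathcal{R}(\Bar{U}^*, \Vec{y}) = \|(I - \mathsf{P}_{\Bar{L}^{\flat}}) \Bar{U}^{\flat\top}\Vec{y}\|_2^2$ where $\mathsf{P}_{\Bar{L}^{\flat}} = \Bar{L}^{\flat\top}(\Bar{L}^{\flat}\Bar{L}^{\flat\top})^{\dag}\Bar{L}^{\flat}$. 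The key simplification is that, by Lemma~\ref{lemma:nscl_sup_l_form}, $\Bar{L}^{\flat}$ has $N_l$ identical rows equal to $\frac{1}{N_l}\Bar{\mathfrak{l}}^{\flat\top}$ up to the last $\Theta$ columns (which are orthogonal to $\mathbf{1}_{N_l}$). Therefore the row space of $\Bar{L}^{\flat}$, as far as it acts on vectors of the form $\Bar{U}^{\flat\top}\Vec{y}$, is effectively one-dimensional and spanned by $\Bar{\mathfrak{l}}^{\flat}$. Concretely, $\mathsf{P}_{\Bar{L}^{\flat}}$ restricted to the relevant subspace is the rank-one projection onto $\Bar{\mathfrak{l}}^{\flat}$, so that $(I - \mathsf{P}_{\Bar{L}^{\flat}})\Bar{U}^{\flat\top}\Vec{y} = \Bar{U}^{\flat\top}\Vec{y} - \frac{\langle \Bar{U}^{\flat\top}\Vec{y}, \Bar{\mathfrak{l}}^{\flat}\rangle}{\|\Bar{\mathfrak{l}}^{\flat}\|_2^2}\Bar{\mathfrak{l}}^{\flat}$. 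Taking squared norms and using the definition of cosine similarity, $\|(I - \mathsf{P}_{\Bar{L}^{\flat}})\Bar{U}^{\flat\top}\Vec{y}\|_2^2 = \|\Bar{U}^{\flat\top}\Vec{y}\|_2^2(1 - \cos^2(\Bar{U}^{\flat\top}\Vec{y}, \Bar{\mathfrak{l}}^{\flat})) = \|\Bar{U}^{\flat\top}\Vec{y}\|_2^2(1 - \kappa(\Vec{y})^2)$. Finally, I would express $\|\Bar{U}^{\flat\top}\Vec{y}\|_2^2 = \mathfrak{T}(\Vec{y})^2\|\Vec{y}\|_2^2$ by definition of the ignorance degree, and observe that for a balanced labeling vector $\Vec{y}\in\{0,1\}^{N_u}$ with roughly $N_u/|\mathcal{Y}_u|$ ones, $\|\Vec{y}\|_2^2 = N_u/|\mathcal{Y}_u|$, yielding $\mathcal{R}(\Bar{U}^*, \Vec{y}) = \frac{N_u}{|\mathcal{Y}_u|}(1-\kappa(\Vec{y})^2)\mathfrak{T}(\Vec{y})^2$.

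For the sufficient and necessary condition, when $\mathfrak{T}(\Vec{y}) \neq 0$ the residual vanishes iff $\kappa(\Vec{y})^2 = 1$, i.e. iff $\Bar{U}^{\flat\top}\Vec{y}$ is parallel to $\Bar{\mathfrak{l}}^{\flat}$, equivalently iff $\Bar{U}^{\flat\top}\Vec{y}$ lies in the span of $\Bar{L}^{\flat\top}\mathbf{1}_{N_l}$, equivalently (by the argument in Theorem~\ref{th:nscl_sup_no_approx}) iff there exists $\omega\in\mathbb{R}$ with $\langle \Bar{u}_i^{\top}\Vec{y}, \cdot\rangle$ matching the corresponding component of $\omega\Bar{\mathfrak{l}}^{\flat}$ for every $i$. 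I would then translate this into the eigenvalue-equation form using the block structure of $\Bar{A}$: writing $\Bar{A}\Bar{v}_i = \Bar{\sigma}_i\Bar{v}_i$ with $\Bar{v}_i = [\Bar{l}_i^{\top}, \Bar{u}_i^{\top}]^{\top}$ and using $\Bar{L}^* = \frac{\mathbf{1}_{N_l}}{N_l}\Bar{\mathfrak{l}}^{*\top}$, the bottom block gives $\Vec{\eta}_u(\mathbf{1}_{1\times N_l}\Bar{l}_i) + A_{uu}\Bar{u}_i = \Bar{\sigma}_i \Bar{u}_i$ so $\Bar{u}_i = (\Bar{\sigma}_i I - A_{uu})^{\dag}\Vec{\eta}_u\,(\mathbf{1}_{1\times N_l}\Bar{l}_i)$ (for $i\in\mathcal{I}$, where the relevant $\Bar{l}_i$ component along $\mathbf{1}_{N_l}$ is nonzero). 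Substituting into the parallelism condition and clearing the scalar factors $\mathbf{1}_{1\times N_l}\Bar{l}_i$ yields exactly Eq.~\eqref{eq:nscl_easy_condition}: $\Vec{y}^{\top}(\Bar{\sigma}_i I - A_{uu})^{\dag}\Vec{\eta}_u = \omega$ for all $i\in\mathcal{I}$.

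The main obstacle I anticipate is handling the last $\Theta$ columns of $\Bar{L}^{\flat}$ carefully — the columns $\Bar{l}_{N-\Theta+1},\dots,\Bar{l}_N$ that are orthogonal to $\mathbf{1}_{N_l}$ and correspond to the null space of $A_{uu} - \frac{\Vec{\eta}_u\Vec{\eta}_u^{\top}}{\eta_l}$. One must verify that these extra directions do not spuriously enlarge the projection $\mathsf{P}_{\Bar{L}^{\flat}}$ in a way that affects $\Bar{U}^{\flat\top}\Vec{y}$; this requires checking that the corresponding $\Bar{U}$-block rows $\Bar{u}_{N-\Theta+1},\dots,\Bar{u}_N$ are consistent with the claimed rank-one reduction, or that $\Bar{U}^{\flat\top}\Vec{y}$ has no component along those directions in the relevant sense. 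I would resolve this by showing that the restriction of the problem to $\mathcal{I}$ is exactly what is needed — the excluded indices contribute either zero to the residual or are absorbed into the $\Theta$-dimensional degenerate subspace — which is precisely why the condition Eq.~\eqref{eq:nscl_easy_condition} is only quantified over $i\in\mathcal{I}$. A secondary technical point is justifying the positive-semidefiniteness assumption on $A_{uu}$ and its use in Lemma~\ref{lemma:nscl_sup_l_form}, but that is stated as a hypothesis and can be taken for granted here.
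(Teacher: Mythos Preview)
Your proposal is essentially correct and follows the same overall logic as the paper, but you take a slight detour in the first step that creates exactly the obstacle you flag. You invoke Theorem~\ref{th:nscl_sup_no_approx} for $\Bar{A}$ with the full projection $\mathsf{P}_{\Bar{L}^{\flat}}$ (over general $\Vec{\zeta}\in\mathbb{R}^{N_l}$) and then must argue that $\mathsf{P}_{\Bar{L}^{\flat}}$ collapses to the rank-one projection onto $\Bar{\mathfrak{l}}^{\flat}$ when applied to $\Bar{U}^{\flat\top}\Vec{y}$ --- hence your worry about the last $\Theta$ columns. The paper sidesteps this entirely: since Lemma~\ref{lemma:nscl_sup_l_form} gives $\Bar{L}^* = \frac{\mathbf{1}_{N_l}}{N_l}\Bar{\mathfrak{l}}^{*\top}$, the vector $\Bar{L}^*\Vec{\mu}$ is always a scalar multiple of $\mathbf{1}_{N_l}$, so it suffices to extend $\Vec{y}$ by $\zeta\mathbf{1}_{N_l}$ with a \emph{scalar} $\zeta$ rather than a general $\Vec{\zeta}$. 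The added block $\|\zeta\mathbf{1}_{N_l}-\Bar{L}^*\Vec{\mu}\|_2^2$ can still be driven to zero, and after minimizing over $\Vec{\mu}$ one is left with $\min_{\zeta\in\mathbb{R}}\|\zeta\Bar{\mathfrak{l}}^{\flat}+\Bar{U}^{\flat\top}\Vec{y}\|_2^2$, which is already the rank-one projection formula $(1-\kappa(\Vec{y})^2)\|\Bar{U}^{\flat\top}\Vec{y}\|_2^2$ --- no analysis of the degenerate $\Theta$-dimensional subspace needed. Your derivation of the sufficient and necessary condition via the block eigen-equation $\Bar{u}_i=(\Bar{\sigma}_iI-A_{uu})^{\dag}\Vec{\eta}_u\,(\mathbf{1}_{1\times N_l}\Bar{l}_i)$ matches the paper exactly.
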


\begin{proof}

Define $\Vec{y}' = [\zeta\mathbf{1}_{1\times N_l} , \Vec{y}^{\top}]^{\top}$ as an extended labeling vector where $\zeta$  is any real number. We have 
    \begin{align*}
        \mathcal{R}\left(\Bar{U}^*, \vec{y}\right) &= \min_{\Vec{\mu} \in \mathbb{R}^{k }} \|\Vec{y} - \Bar{U}^* \Vec{\mu} \|^2_2  \\
        &= \min_{\Vec{\mu} \in \mathbb{R}^{k }, \zeta\in \mathbb{R}} \{ \|\Vec{y} - \Bar{U}^* \Vec{\mu} \|^2_2 + \|(\zeta - \Bar{\mathfrak{l}}^{*\top}\Vec{\mu})\mathbf{1}_{1\times N_l}\|_2^2 \} \\
        &= \min_{\Vec{\mu} \in \mathbb{R}^{k }, \zeta \in \mathbb{R}} \|\Vec{y}' - \Bar{V}^* \Vec{\mu} \|^2_2 \\ 
        & =  \min_{\zeta \in \mathbb{R}} \|\Vec{y}' - \Bar{V}^*\Bar{V}^{*\top} \Vec{y}'\|^2_2 \\
        &=  \min_{\zeta \in \mathbb{R}} \|\Bar{V}^{\flat\top} \Vec{y}'\|^2_2 \\
        & = \min_{\zeta \in \mathbb{R}} \|  \zeta \Bar{L}^{\flat\top} \mathbf{1}_{N_l\times 1}+ \Bar{U}^{\flat\top} \Vec{y}\|^2_2 \\
        &= \min_{\zeta \in \mathbb{R}} \|  \zeta \Bar{\mathfrak{l}}^{\flat}+ \Bar{U}^{\flat\top} \Vec{y}\|^2_2 \\
         & = \|(I - \frac{\Bar{\mathfrak{l}}^{\flat}\Bar{\mathfrak{l}}^{\flat\top}}{\|\Bar{\mathfrak{l}}^{\flat}\|^2_2}) \Bar{U}^{\flat\top} \Vec{y}\|^2_2 \\
         & = (1-\kappa(\Vec{y})^2) \|\Bar{U}^{\flat\top} \Vec{y}\|^2_2 \\
         & = \frac{N_u}{|\mathcal{Y}_u|} (1-\kappa(\Vec{y})^2) \mathfrak{T}(\Vec{y})^2.
    \end{align*}

We then look into the components of $\Bar{\mathfrak{l}}^{\flat}$ and $\Bar{U}^{\flat}$.
According to Lemma~\ref{lemma:nscl_sup_l_form}, when $i > N - \Theta$, we have:
\begin{equation}\Bar{\mathfrak{l}}^{\flat} = \left[\begin{array} {cccc}\Bar{\mathfrak{l}}^{\prime\top} &  0 &  ... & 0\end{array}\right]^\top = [\begin{array}{ccccccc}(\Bar{\mathfrak{l}}^{\flat})_{k+1} & (\Bar{\mathfrak{l}}^{\flat})_{k+2} & \cdots & (\Bar{\mathfrak{l}}^{\flat})_{N - \Theta} & 0  \cdots & 0 \end{array}].
\label{eq:nscl_sup_l_def}
\end{equation}

And the sufficient and necessary condition for $\mathcal{R}(\Bar{U}^*, \Vec{y})$ to be minimized by $\Bar{\mathfrak{l}}^{\flat}$ is: \begin{equation}
\exists \omega \in \mathbb{R}, \forall i \in \mathcal{I}, \Bar{u}_{i}^{\flat\top} \Vec{y} = \omega (\Bar{\mathfrak{l}}^{\flat})_{i}.
\label{eq:nscl_sup_uywl}
\end{equation}

Note that for $ i \in \mathcal{I}$,
$$\left[\begin{array}{cc} 
\eta_l\mathbf{1}_{N_l \times N_l} & \mathbf{1}_{N_l \times 1}\Vec{\eta}_u^{\top} \\ 
\Vec{\eta}_u \mathbf{1}_{1 \times N_l} & A_{uu}
\end{array}\right]\left[\begin{array}{c} 
 \Bar{l}_i \\ 
 \Bar{u}_i 
\end{array}\right] = \Bar{\sigma}_i \left[\begin{array}{c} 
 \Bar{l}_i \\ 
 \Bar{u}_i 
\end{array}\right].$$
Also since $(\Bar{\mathfrak{l}}^{\flat})_i = \mathbf{1}_{1 \times N_l} \Bar{l}_i \in \mathbb{R}$,
we have the following results: 
$$\Bar{u}_i = (\Bar{\sigma}_i I - A_{uu})^{\dag}\Vec{\eta}_u (\Bar{\mathfrak{l}}^{\flat})_i.$$

Thus, the sufficient and necessary condition~\eqref{eq:nscl_sup_uywl} becomes: there exists $\omega \in \mathbb{R}$ such that   
    \begin{equation}
        \forall i \in \mathcal{I}, \Vec{y}^{\top}(\Bar{\sigma}_i I - A_{uu})^{\dag}\Vec{\eta}_u= \omega.
    \end{equation}     
\end{proof}

\subsubsection{Proof of Lemma~\ref{lemma:nscl_sup_l_form}} \label{sec:nscl_sup_l_form_proof}
\begin{proof}
To understand the structure of $\Bar{U}$ and $\Bar{L}$, we consider the eigenvalue problem:
    $$\left[\begin{array}{cc} 
\eta_l\mathbf{1}_{N_l \times N_l} & \mathbf{1}_{N_l \times 1}\Vec{\eta}_u^{\top} \\ 
\Vec{\eta}_u \mathbf{1}_{1 \times N_l} & A_{uu}
\end{array}\right]\left[\begin{array}{c} 
 \Bar{l}_i \\ 
 \Bar{u}_i 
\end{array}\right] = \Bar{\sigma}_i \left[\begin{array}{c} 
 \Bar{l}_i \\ 
 \Bar{u}_i 
\end{array}\right].$$
In the non-trivial case, $\eta_l \neq 0, \Vec{\eta}_u \neq \mathbf{0}_{N_l}$ , we have the following two equations: 
\begin{align*}
    \eta_l\mathbf{1}_{N_l \times 1} \mathbf{1}_{1 \times N_l}  \Bar{l}_i + \mathbf{1}_{N_l \times 1}\Vec{\eta}_u^{\top}  \Bar{u}_i &= \Bar{\sigma}_i \Bar{l}_i \\
    (\Bar{\sigma}_i I - A_{uu})\Bar{u}_i  &= \Vec{\eta}_u \mathbf{1}_{1 \times N_l} \Bar{l}_i.
\end{align*}

\vspace{0.1cm} \noindent \textbf{(Case 1)} When $\Bar{\sigma}_i \neq 0$, then $\Bar{l}_i$ has $N_l$ duplicated scalar values $\frac{ \Vec{\eta}_u^{\top}  \Bar{u}_i}{\Bar{\sigma}_i - N_l\eta_l}$ for the first equation to satisfy.  

\vspace{0.1cm} \noindent \textbf{(Case 2)} When $\Bar{\sigma}_i = 0$, then by combing the two equations, we have: 
$$A_{uu}\Bar{u}_i = \frac{\Vec{\eta}_u\Vec{\eta}^\top_u}{\eta_l} \Bar{u}_i. $$ 
If $A_{uu} - \frac{\Vec{\eta}_u\Vec{\eta}^\top_u}{\eta_l}$ is a full rank matrix, then $\Bar{u}_i = \mathbf{0}_{N_u}$, and by the first equation $\mathbf{1}_{1 \times N_l}\Bar{l}_i = 0$. If $A_{uu} - \frac{\Vec{\eta}_u\Vec{\eta}^\top_u}{\eta_l}$ is a deficiency matrix and $\text{rank}(A_{uu} - \frac{\Vec{\eta}_u\Vec{\eta}^\top_u}{\eta_l}) \ge \text{rank}(A_{uu})$\footnote{When $\text{rank}(A_{uu} - \frac{\Vec{\eta}_u\Vec{\eta}^\top_u}{\eta_l}) < \text{rank}(A_{uu})$, it means that $\eta_u$ happens to cancel out one of the direction in $A_{uu}$. Such an event has zero probability almost sure in reality. We do not consider this case in our proof. }, then $\Bar{u}_i$ lies in the null space formed by $\Vec{\eta}_u$ and $A_{uu}$ jointly, then $\Vec{\eta}^{\top}_u \Bar{u}_i = 0$, we still have $\mathbf{1}_{1 \times N_l}\Bar{l}_i = 0$. 

Therefore when $i \in \{1,\dots, k\}$, $\Bar{\sigma}$ is non-zero values, so that $\Bar{L}^*$ is stacked by the same row such that 
$ \Bar{L}^* = \frac{\mathbf{1}_{N_l \times 1}}{N_l} \Bar{\mathfrak{l}}^{*\top},$ where $\Bar{\mathfrak{l}}^{*} \in \mathbb{R}^{k}$. 
For $i \in \{k+1,\dots, N\}$,
$\Bar{L}^\flat$ has the following form: 
$$\Bar{L}^\flat = \left[\begin{array}{cccc}  \frac{\mathbf{1}_{N_l\times 1}}{N_l} \Bar{\mathfrak{l}}^{\prime\top} & \Bar{l}_{N - \Theta + 1} & ... & \Bar{l}_{N}
\end{array}\right],$$ where $\Theta$ is the rank of the null space for $A_{uu} - \frac{\eta_u\eta_u^{\top}}{\eta_l}$,  
 $\Bar{\mathfrak{l}}^{\prime} \in \mathcal{R}^{N-k-\Theta},$ and $\Bar{l}_{N - \Theta + 1}, ..., \Bar{l}_{N}$ are all perpendicular to $\mathbf{1}_{N_l}$.
\end{proof}

\subsubsection{Proof for the Main Theorem~\ref{th:nscl_main}}
\label{sec:nscl_sup_main_proof}
In this section, we provide the main proof of Theorem~\ref{th:nscl_main}. For reader's convenience, we provide the recap version in Theorem~\ref{th:nscl_sup_main} by omitting the definition claim, where the detailed definition of $A_{ul}, A_{ll}, q_i, \Bar{U}^{\flat\top}, \Bar{\mathfrak{l}}^{\flat}, \Vec{\eta}_u$ is in Section~\ref{sec:nscl_sup_with_approx}. 

The proof of Theorem~\ref{th:nscl_main} consists of four steps. Firstly, $\mathcal{E}(f)$ is bounded by $\mathcal{R}(U^*)$ as we show in Lemma~\ref{lemma:nscl_cls_bound}. Secondly, the residual $\mathcal{R}\left(U^*, \vec{y}\right)$ of the original representation can be approximated by the residual $\mathcal{R}\left(\Bar{U}^*, \vec{y}\right)$ analyzed in Section~\ref{sec:nscl_sup_with_approx}.  Thirdly, the approximation error bound is in the order of $\frac{\|\Dot{A} - \bar{A}\|_2}{\sigma_{k} - \sigma_{k+1}}  $ as shown in Section~\ref{sec:nscl_sup_error_bound_approx}. Finally, we show that the coverage measurement $\kappa(\Vec{y})$ can be lower bounded in Section~\ref{sec:nscl_sup_kappa_proof}.

\begin{theorem} (Recap of Theorem~\ref{th:nscl_main}) Based on the assumptions made in Lemma~\ref{lemma:nscl_sup_error_bound_approx}, Lemma~\ref{lemma:nscl_sup_w_bound} and Lemma~\ref{lemma:nscl_sup_bound_kappa}.
The linear probing error is bounded by:
    \begin{equation}
        \mathcal{E}(f) \lesssim \frac{2N_u}{|\mathcal{Y}_u|}\left(\sum_i^{|\mathcal{Y}_u|} \mathfrak{T}(\Vec{y}_i)(1-\kappa(\Vec{y}_i)^2) + \frac{\|\Dot{A} - \bar{A}\|_2}{\sigma_{k} - \sigma_{k+1}} \right),
    \end{equation}
    where for single labeling vector $\Vec{y}$, $$\kappa(\Vec{y}) = \cos(\Bar{U}^{\flat\top} \Vec{y},\Bar{\mathfrak{l}}^{\flat}) 
    \gtrsim \min_{i > k, j > k} \frac{2\sqrt{\frac{\Vec{y}^{\top}q_i}{\Vec{\eta}_u^{\top}q_i}\frac{\Vec{y}^{\top}q_j}{\Vec{\eta}_u^{\top}q_j}}}{\frac{\Vec{y}^{\top}q_i}{\Vec{\eta}_u^{\top}q_i}+\frac{\Vec{y}^{\top}q_j}{\Vec{\eta}_u^{\top}q_j}}.$$
    \label{th:nscl_sup_main}
\end{theorem}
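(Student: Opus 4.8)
\textbf{Proof plan for Theorem~\ref{th:nscl_sup_main}.}
The plan is to chain together the three reductions that the excerpt has already set up, so the proof is mostly an exercise in book-keeping the errors. First I would invoke Lemma~\ref{lemma:nscl_cls_bound} to pass from the linear probing error to the regression residual: $\mathcal{E}(f) \le 2\,\mathcal{R}(U^*) = 2\sum_{i\in\mathcal{Y}_u}\mathcal{R}(U^*,\Vec{y}_i)$, where each $\Vec{y}_i$ is the one-hot column of the label mask $\mathbf{Y}$. This converts the whole problem into controlling $\mathcal{R}(U^*,\Vec{y})$ for a generic labeling vector, which is exactly what the rest of the machinery is built for. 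The factor $\frac{2N_u}{|\mathcal{Y}_u|}$ in the statement already hints that the per-class residuals will be normalized by $|\mathcal{Y}_u|$ and summed, matching the $\frac{N_u}{|\mathcal{Y}_u|}(1-\kappa^2)\mathfrak{T}^2$ form from Theorem~\ref{th:nscl_sup_with_approx}.

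Next I would replace the true adjacency matrix $\Dot{A}$ by its block-averaged approximation $\bar{A}$ (defined in Eq.~\eqref{eq:nscl_sup_abar_def}), so that the clean closed form $\mathcal{R}(\Bar{U}^*,\Vec{y}) = \frac{N_u}{|\mathcal{Y}_u|}(1-\kappa(\Vec{y})^2)\mathfrak{T}(\Vec{y})^2$ of Theorem~\ref{th:nscl_sup_with_approx} becomes available. The price of this swap is a perturbation term: the top-$k$ singular subspaces of $\Dot{A}$ and $\bar{A}$ differ, and a Davis--Kahan / Wedin-type bound controls $\|U^*U^{*\top} - \Bar{U}^*\Bar{U}^{*\top}\|$ by $\frac{\|\Dot{A}-\bar{A}\|_2}{\sigma_k-\sigma_{k+1}}$, the spectral-gap-normalized perturbation. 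This is precisely the content I would cite from Lemma~\ref{lemma:nscl_sup_error_bound_approx} (deferred to Section~\ref{sec:nscl_sup_error_bound_approx}); applying it to each residual $\mathcal{R}(U^*,\Vec{y}_i)$ and summing over $i\in\mathcal{Y}_u$ produces the second summand $\frac{\|\Dot{A}-\bar{A}\|_2}{\sigma_k-\sigma_{k+1}}$ inside the parentheses, up to the $\frac{2N_u}{|\mathcal{Y}_u|}$ prefactor and the $\lesssim$ absorbing universal constants. Combining with $\mathcal{R}(\Bar{U}^*,\Vec{y}_i) = \frac{N_u}{|\mathcal{Y}_u|}(1-\kappa(\Vec{y}_i)^2)\mathfrak{T}(\Vec{y}_i)^2 \le \frac{N_u}{|\mathcal{Y}_u|}(1-\kappa(\Vec{y}_i)^2)\mathfrak{T}(\Vec{y}_i)$ (since $\mathfrak{T}\in[0,1]$) gives the first summand $\sum_i \mathfrak{T}(\Vec{y}_i)(1-\kappa(\Vec{y}_i)^2)$.

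The last ingredient is the lower bound on the knowledge coverage $\kappa(\Vec{y}) = \cos(\Bar{U}^{\flat\top}\Vec{y}, \Bar{\mathfrak{l}}^{\flat})$. Here I would work with the eigendecomposition of $A_{uu}$ with eigenvectors $q_i$, and use the relations from the proof of Theorem~\ref{th:nscl_sup_with_approx} --- namely $\Bar{u}_i = (\Bar{\sigma}_i I - A_{uu})^{\dag}\Vec{\eta}_u (\Bar{\mathfrak{l}}^{\flat})_i$ --- to express both $\Bar{U}^{\flat\top}\Vec{y}$ and $\Bar{\mathfrak{l}}^{\flat}$ in terms of the scalars $\frac{\Vec{y}^{\top}q_i}{\Vec{\eta}_u^{\top}q_i}$. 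The cosine of two vectors whose coordinate ratios are these scalars is then bounded below by the worst pairwise ratio $\min_{i>k,j>k}\frac{2\sqrt{r_i r_j}}{r_i+r_j}$ with $r_i = \frac{\Vec{y}^{\top}q_i}{\Vec{\eta}_u^{\top}q_i}$, which is the claimed bound; this is the statement packaged as Lemma~\ref{lemma:nscl_sup_bound_kappa}. I expect \emph{this last step --- extracting and lower-bounding $\kappa(\Vec{y})$ through the spectral structure of $A_{uu}$ and the semantic connection vector $\Vec{\eta}_u$ --- to be the main obstacle}, since it requires carefully tracking how $\Bar{U}^{\flat\top}\Vec{y}$ and $\Bar{\mathfrak{l}}^{\flat}$ co-align coordinate-by-coordinate in the non-top eigenbasis, handling the null-space directions of $A_{uu} - \frac{\Vec{\eta}_u\Vec{\eta}_u^\top}{\eta_l}$ separately (where $(\Bar{\mathfrak{l}}^{\flat})_i = 0$, as in Lemma~\ref{lemma:nscl_sup_l_form}), and reducing the resulting cosine to a clean pairwise-minimum form. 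The perturbation step is technically the next most delicate, but Davis--Kahan is standard; the residual-to-error reduction is essentially immediate from Lemma~\ref{lemma:nscl_cls_bound}.
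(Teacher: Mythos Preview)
Your proposal is correct and follows essentially the same route as the paper: reduce $\mathcal{E}(f)$ to $2\sum_i \mathcal{R}(U^*,\Vec{y}_i)$ via Lemma~\ref{lemma:nscl_cls_bound}, control each $\mathcal{R}(U^*,\Vec{y})$ by $\mathcal{R}(\Bar{U}^*,\Vec{y})$ plus the Davis--Kahan perturbation term from Lemma~\ref{lemma:nscl_sup_error_bound_approx}, invoke the closed form $\mathcal{R}(\Bar{U}^*,\Vec{y}) = \frac{N_u}{|\mathcal{Y}_u|}(1-\kappa^2)\mathfrak{T}^2$ from Theorem~\ref{th:nscl_sup_with_approx} together with the balanced-class normalization $\|\Vec{y}_i\|_2^2 = N_u/|\mathcal{Y}_u|$, and finally cite Lemma~\ref{lemma:nscl_sup_bound_kappa} for the $\kappa$ lower bound. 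You even make explicit the step $\mathfrak{T}^2 \le \mathfrak{T}$ that the paper's own write-up leaves implicit when passing from $\mathfrak{T}(\Vec{y}_i)^2$ in Theorem~\ref{th:nscl_sup_with_approx} to $\mathfrak{T}(\Vec{y}_i)$ in the final statement.
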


\begin{proof}
    According to Lemma~\ref{lemma:nscl_cls_bound}, we have 
        $$\mathcal{E}(f) \leq 2\mathcal{R}(U^*) = 2\sum_{i \in \mathcal{Y}_u} \mathcal{R}(U^*, \Vec{y}_i),$$
    where we can view each $\Vec{y}_i$ separately. For simplicity, we use $\Vec{y}$ in the following proof. 
    As show in Section~\ref{sec:nscl_sup_with_approx}, $\mathcal{R}(U^*, \Vec{y})$ can be approximately estimated by $\mathcal{R}(\Bar{U}^*, \Vec{y}_i) = (1-\kappa(\Vec{y})^2) \|\Bar{U}^{\flat\top} \Vec{y}_i\|^2_2 = \mathfrak{T}(\Vec{y}_i)(1-\kappa(\Vec{y})^2) \|\Vec{y}_i\|^2_2$.  Such approximation bound is given by 
$$\mathcal{R}(U^*, \Vec{y}) \lesssim  \mathcal{R}(\Bar{U}^*, \Vec{y}) + \frac{2\|\Dot{A} - \bar{A}\|_2}{\sigma_{k} - \sigma_{k+1}}   \|\Vec{y}\|_2^2,$$
    as shown in Lemma~\ref{lemma:nscl_sup_error_bound_approx} in Section~\ref{sec:nscl_sup_error_bound_approx}. Putting things together, we have 
    \begin{equation*}
        \mathcal{E}(f) \lesssim 2\sum_i^{|\mathcal{Y}_u|} \mathfrak{T}(\Vec{y}_i)(1-\kappa(\Vec{y})^2) \|\Vec{y}_i\|^2_2  + \frac{2\|\Dot{A} - \bar{A}\|_2}{\sigma_{k} - \sigma_{k+1}}  \|\Vec{y}_i\|_2^2.
    \end{equation*}
    If the sample size in the novel class is balanced, we have $\|\Vec{y}\|^2_2 = \frac{N_u}{|\mathcal{Y}_u|}$, we have: 
    \begin{equation*}
        \mathcal{E}(f) \lesssim \frac{2N_u}{|\mathcal{Y}_u|}\left( \sum_i^{|\mathcal{Y}_u|} \mathfrak{T}(\Vec{y}_i)(1-\kappa(\Vec{y})^2) + \frac{\|\Dot{A} - \bar{A}\|_2}{\sigma_{k} - \sigma_{k+1}} \right), 
    \end{equation*}
    Finally, the lower bound of $\kappa$ is given by Lemma~\ref{lemma:nscl_sup_bound_kappa} and proved in Section~\ref{sec:nscl_sup_kappa_proof}. 
\end{proof}

\subsubsection{Error Bound by Approximation }
\label{sec:nscl_sup_error_bound_approx}
We see in Section~\ref{sec:nscl_sup_with_approx} that we use the approximated version $\Bar{U}^*$ instead of the actual feature representation $U^*$, which creates a gap. In this section, we will present a formal analysis on  the gap between the induced residuals $\mathcal{R}(U^*, \Vec{y})$ and $\mathcal{R}(\Bar{U}^*, \Vec{y})$.

\begin{lemma}
        When $\|\Dot{A} - \bar{A}\|_2 < \frac{1}{2}({\sigma}_{k} - {\sigma}_{k+1})$ and $|\mathcal{Y}_u| \triangleq \mathbb{E}_{i\in \mathcal{I}} (1 - \|\Bar{u}_i\|^2_2)$ is a non-zero value\footnote{Note that $|\mathcal{Y}_u| = 0$ happens in an extreme case that $\forall i \in \mathcal{I}, \|\Bar{l}_i\|_2^2 = 0 $ which means the extra knowledge is purely irrelevant to the feature representation. Specifically, this could happen when $A_{ul}$ (defined in Section~\ref{sec:nscl_sup_no_approx}) is a zero matrix.}, we have
        $$\mathcal{R}(U^*, \Vec{y}) \lesssim  \mathcal{R}(\Bar{U}^*, \Vec{y}) + 2\frac{\|\Dot{A} - \bar{A}\|_2}{\sigma_{k} - \sigma_{k+1}}  \|\Vec{y}\|_2^2.$$
\label{lemma:nscl_sup_error_bound_approx}
\end{lemma}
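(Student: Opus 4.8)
\textbf{Proof proposal for Lemma~\ref{lemma:nscl_sup_error_bound_approx}.}

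The plan is to relate the residual $\mathcal{R}(U^*,\Vec{y})$ to $\mathcal{R}(\Bar U^*,\Vec{y})$ by a standard perturbation argument on the top-$k$ singular subspace, using the spectral gap of $\Dot A$. Recall that $\mathcal{R}(U^*,\Vec{y}) = \min_{\Vec\mu}\|\Vec{y}-U^*\Vec\mu\|_2^2 = \|\Vec{y}-U^*U^{*\top}\Vec{y}\|_2^2$ (with $\Vec{y}$ padded with the placeholder coordinates as in the proof of Theorem~\ref{th:nscl_sup_no_approx}), and similarly for $\Bar U^*$. So the quantity to control is the difference between the orthogonal projections $\mathsf P_k = V_k V_k^\top$ onto the top-$k$ singular subspace of $\Dot A$ and $\Bar{\mathsf P}_k = \Bar V_k \Bar V_k^\top$ onto the top-$k$ singular subspace of $\Bar A$. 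First I would write
\begin{equation*}
\mathcal{R}(U^*,\Vec{y}) - \mathcal{R}(\Bar U^*,\Vec{y}) = \Vec{y}^\top(\Bar{\mathsf P}_k - \mathsf P_k)\Vec{y} \le \|\mathsf P_k - \Bar{\mathsf P}_k\|_2\,\|\Vec{y}\|_2^2,
\end{equation*}
so it suffices to bound the operator-norm distance between the two rank-$k$ spectral projectors.

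The second step is to invoke the Davis--Kahan $\sin\Theta$ theorem (or equivalently a Weyl-type bound on projectors). Since $\|\Dot A - \Bar A\|_2 < \tfrac12(\sigma_k - \sigma_{k+1})$, Weyl's inequality guarantees the top-$k$ and bottom-$(N-k)$ parts of the spectrum of $\Bar A$ stay separated, and Davis--Kahan gives
\begin{equation*}
\|\mathsf P_k - \Bar{\mathsf P}_k\|_2 \le \frac{\|\Dot A - \Bar A\|_2}{\sigma_k - \sigma_{k+1} - \|\Dot A - \Bar A\|_2} \le \frac{2\|\Dot A - \Bar A\|_2}{\sigma_k - \sigma_{k+1}},
\end{equation*}
where the last inequality uses the hypothesis $\|\Dot A - \Bar A\|_2 < \tfrac12(\sigma_k-\sigma_{k+1})$. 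Combining with the first display yields exactly
\begin{equation*}
\mathcal{R}(U^*,\Vec{y}) \le \mathcal{R}(\Bar U^*,\Vec{y}) + 2\frac{\|\Dot A - \Bar A\|_2}{\sigma_k - \sigma_{k+1}}\|\Vec{y}\|_2^2,
\end{equation*}
which is the claimed bound (the $\lesssim$ absorbing lower-order terms and the role of the $|\mathcal{Y}_u|\neq 0$ nondegeneracy assumption, which is what makes $\mathcal{R}(\Bar U^*,\Vec{y})$ meaningfully defined via the decomposition in Lemma~\ref{lemma:nscl_sup_l_form}).

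The main obstacle I anticipate is bookkeeping around the padded labeling vector and the block structure: $\mathcal{R}(U^*,\Vec{y})$ is defined via the unlabeled rows $U^*$ only, whereas the projector argument naturally lives on the full $N$-dimensional space with the extended vector $\Vec{y}'=[\zeta\mathbf 1^\top,\Vec{y}^\top]^\top$. I would need to check that minimizing over the placeholder coordinates $\zeta$ commutes appropriately with the projector comparison — i.e. that passing to $\mathcal{R}$ only decreases things consistently on both sides — so that the inequality survives the restriction to unlabeled coordinates. A secondary subtlety is that $\Dot A$ and $\Bar A$ are the \emph{normalized} adjacency matrices; if the normalization (the $D^{-1/2}$ scaling) differs between them, one must either argue the degree vector $w_x$ is unaffected by the averaging in Eq.~\eqref{eq:nscl_sup_abar_def} or fold a small additional term into $\|\Dot A - \Bar A\|_2$. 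Modulo these structural checks, the estimate is a direct Davis--Kahan application and no heavy computation is required.
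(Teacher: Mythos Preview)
Your high-level plan (Davis--Kahan on the top-$k$ spectral projectors, combined with Weyl to handle the gap) is exactly what the paper does. However, the displayed identity
\[
\mathcal{R}(U^*,\Vec y) - \mathcal{R}(\Bar U^*,\Vec y) = \Vec y^\top(\Bar{\mathsf P}_k - \mathsf P_k)\Vec y
\]
is \emph{false}, and the obstacle you flag at the end is not a bookkeeping nuisance but the actual content of the lemma. Each residual equals $(I-\mathsf P_k)$ (resp.\ $I-\Bar{\mathsf P}_k$) applied to an \emph{extended} vector $\Vec y' = [\zeta^\top,\Vec y^\top]^\top$, but the placeholder $\zeta$ is optimized \emph{separately} for each side, so you cannot subtract and get a single quadratic form in a common vector. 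The minimization over $\zeta$ does not commute with the projector comparison.

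The paper's fix is to take $\zeta^*$ optimal for the \emph{approximated} problem, set $\Bar y^* = [\zeta^*\mathbf 1^\top,\Vec y^\top]^\top$, and use it as a (merely feasible, hence suboptimal) point for the unapproximated minimization. This yields
\[
\mathcal{R}(U^*,\Vec y) \le \mathcal{R}(\Bar U^*,\Vec y) + \|\mathsf P_k - \Bar{\mathsf P}_k\|_2\,\|\Bar y^*\|_2^2,
\]
with $\|\Bar y^*\|_2^2$ in place of $\|\Vec y\|_2^2$. The whole point of the assumption $|\mathcal{Y}_u| = \mathbb E_{i\in\mathcal I}(1-\|\Bar u_i\|_2^2) \neq 0$ is then to control $\|\Bar y^*\|_2^2$: the paper computes $\zeta^* = \Bar{\mathfrak l}^{\flat\top}\Bar U^{\flat\top}\Vec y / \|\Bar{\mathfrak l}^\flat\|_2^2$, so that $\|\Bar y^*\|_2^2 = \|\Vec y\|_2^2 + N_l(\zeta^*)^2$, and uses the identity $\|\Bar{\mathfrak l}^\flat\|_2^2 = N_l\sum_{i\in\mathcal I}(1-\|\Bar u_i\|_2^2)$ together with $|\mathcal{Y}_u|\neq 0$ to conclude $\|\Bar y^*\|_2^2 \lesssim \|\Vec y\|_2^2(1+O(1/N))$. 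Your guess that $|\mathcal{Y}_u|\neq 0$ merely ``makes $\mathcal{R}(\Bar U^*,\Vec y)$ meaningfully defined'' misidentifies its role; without it, $\|\Bar y^*\|_2^2$ could blow up and the $\lesssim$ would fail.
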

\begin{proof}
    Recall that $\Vec{y}' = [\zeta\mathbf{1}_{1\times N_l} , \Vec{y}^{\top}]^{\top}$ is  an extended labeling vector where $\zeta$ is any real number defined in the proof of Theorem~\ref{th:nscl_sup_with_approx}. We let $\zeta^* = \arg \min_{\zeta \in \mathbb{R}} \|\Bar{V}^{\flat\top} \Vec{y}'\|^2_2 $ so that $\Bar{y}^* = [\zeta^*\mathbf{1}_{1\times N_l} , \Vec{y}^{\top}]$. 
We then define $\delta \triangleq \min\{\sigma_{k} - \bar{\sigma}_{k+1},  \bar{\sigma}_{k} - \sigma_{k+1} \}$,
\begin{align*}
    \mathcal{R}(U^*, \Vec{y}) = &  \min_{\zeta \in \mathbb{R}} \|{V}^{\flat\top} \Vec{y}'\|^2_2  \\
    = & \min_{\zeta \in \mathbb{R}} {\Vec{y}^{'\top}} {V}^{\flat} {V}^{\flat\top} \Vec{y}' \\
    = &  \min_{\zeta \in \mathbb{R}} ({\Vec{y}^{'\top}} \Bar{V}^{\flat} \Bar{V}^{\flat\top} \Vec{y}' + {\Vec{y}^{'\top}} {V}^{\flat} {V}^{\flat\top} \Vec{y}' - {\Vec{y}^{'\top}} \Bar{V}^{\flat} \Bar{V}^{\flat\top} \Vec{y}')\\
    \le & \mathcal{R}(\Bar{U}^*, \Vec{y}) +|{\Bar{y}^{*\top}} ({V}^{\flat} {V}^{\flat\top} - \Bar{V}^{\flat} \Bar{V}^{\flat\top}) \Bar{y}^*|\\
    \le & \mathcal{R}(\Bar{U}^*, \Vec{y}) +\|{V}^{\flat} {V}^{\flat\top} - \Bar{V}^{\flat} \Bar{V}^{\flat\top}\| \|\Bar{y}^*\|^2_2\\
    = & \mathcal{R}(\Bar{U}^*, \Vec{y}) +\|{V}^{\flat\top}\Bar{V}^{*}\| \|\Bar{y}^*\|^2_2\\
    \le & \mathcal{R}(\Bar{U}^*, \Vec{y}) +{\|\Dot{A} - \bar{A}\|_2\over \delta} \|\Bar{y}^*\|^2_2\\
    \le & \mathcal{R}(\Bar{U}^*, \Vec{y}) +{2\|\Dot{A} - \bar{A}\|_2\over \sigma_{k} - \sigma_{k+1} } \|\Bar{y}^*\|^2_2,
\end{align*}
where the second last inequality is from Davis-Kahan theorem on subspace distance $\|{V}^{\flat} {V}^{\flat\top} - \Bar{V}^{\flat} \Bar{V}^{\flat\top}\| = \|{V}^{\flat\top}\Bar{V}^{*}\| = \|\Bar{V}^{\flat\top}{V}^{*}\|$, and the last inequality is from Weyl’s inequality so that $\delta \ge ({\sigma}_{k} - {\sigma}_{k+1}) - \|\Dot{A} - \bar{A}\|_2 \ge \frac{1}{2}({\sigma}_{k} - {\sigma}_{k+1})$.

We then investigate the magnitude order of $ \|\Bar{y}^*\|^2_2$. Note that $\|\Bar{y}^*\|^2_2 = \|\Vec{y}\|_2^2 + N_l (\zeta^*) ^2$ and $\zeta^* = {{\Bar{\mathfrak{l}}^{\flat\top}} \Bar{U}^{\flat\top} \Vec{y} \over  {\|\Bar{\mathfrak{l}}^{\flat}\|^2_2}}$ according to the proof of Theorem~\ref{th:nscl_sup_with_approx}. Then, 
\begin{align*}
    \|\Bar{y}^*\|^2_2  &= \|\Vec{y}\|_2^2 + {N_l ({\Bar{\mathfrak{l}}^{\flat\top}} \Bar{U}^{\flat\top} \Vec{y})^2 \over  {\|\Bar{\mathfrak{l}}^{\flat}\|^4_2}}\\
    &= \|\Vec{y}\|_2^2 + \frac{N_l \kappa(\Vec{y})^2 \|\Bar{U}^{\flat\top} \Vec{y}\|^2_2}{\|\Bar{\mathfrak{l}}^{\flat}\|^2_2}  \\
    &= \|\Vec{y}\|_2^2\left(1 + \frac{N_l \kappa(\Vec{y})^2 \mathfrak{T}(\Vec{y})^2}{\|\Bar{\mathfrak{l}}^{\flat}\|^2_2}\right)  \\
    &= \|\Vec{y}\|_2^2\left(1 + \frac{\kappa(\Vec{y})^2 \mathfrak{T}(\Vec{y})^2}{\sum_{i=k+1}^{N - \Theta} (1 - \|\Bar{u}_i\|^2_2)}\right), 
\end{align*}
where the last equation is given by  Lemma~\ref{lemma:nscl_sup_l_form} when $i > N - \Theta$, $(\Bar{\mathfrak{l}}^{\flat})_i = 0$ and also by the fact that when $i \in \mathcal{I}$, $1 -\|\Bar{u}_i\|_2^2 = \|\Bar{l}_i\|_2^2 = N_l (\frac{(\Bar{\mathfrak{l}}^{\flat})_i}{N_l}) ^ 2 = (\Bar{\mathfrak{l}}^{\flat})_i^2 / N_l$. Then by the assumption that $|\mathcal{Y}_u|$ is non-zero, we have 
\begin{align*}
    \|\Bar{y}^*\|^2_2  &= \|\Vec{y}\|_2^2(1 + \frac{\kappa(\Vec{y})^2 \mathfrak{T}(\Vec{y})^2}{(N - \Theta - k)|\mathcal{Y}_u|}) \lesssim \|\Vec{y}\|_2^2(1 + O(\frac{1}{N})).
\end{align*}
By plugging back $\|\Bar{y}^*\|^2_2$, we have         
$$\mathcal{R}(U^*, \Vec{y}) \lesssim  \mathcal{R}(\Bar{U}^*, \Vec{y}) + \frac{2\|\Dot{A} - \bar{A}\|_2}{\sigma_{k} - \sigma_{k+1}}   \|\Vec{y}\|_2^2.$$
\end{proof}

\subsubsection{Analysis on the Coverage Measurement $\kappa(\Vec{y})$}
\label{sec:nscl_sup_kappa_proof}
So far we have shown in Theorem~\ref{th:nscl_sup_with_approx} that the sufficient and necessary condition for a zero residual is when the coverage measurement $\kappa(\Vec{y}) = \cos(\Bar{U}^{\flat\top} \Vec{y},\Bar{\mathfrak{l}}^{\flat})$ equals to one. %
In this section, we provide a deeper analysis on $\kappa(\Vec{y})$ in a less restrictive case.

Recall that we have proved in Theorem~\ref{th:nscl_sup_with_approx} that the sufficient and necessary condition for $\kappa(\Vec{y}) = 1$ is: 
    \begin{equation}
        \exists \omega \in \mathbb{R}, \forall i \in \mathcal{I}, \Vec{y}^{\top}(\Bar{\sigma}_i I - A_{uu})^{\dag}\Vec{\eta}_u= \omega.
    \end{equation}     
In a general case, we consider $\omega_i$ which is variant on $i$: 
$$\omega_i  \triangleq \Vec{y}^{\top}(\Bar{\sigma}_i I - A_{uu})^{\dag}\Vec{\eta}_u.$$ 

Our discussion on $\kappa(\Vec{y})$ is based on the following definitions:

\begin{definition}
Let $q_j$ and $d_j$ as the $j$-th eigenvector/eigenvalue of $A_{uu}$.   Then we define $\Tilde{\mathbf{y}}_j \triangleq \Vec{y}^\top q_j$ and $\Tilde{\boldsymbol{\eta}}_j \triangleq \Vec{\eta}_u^\top q_j$. 
\label{def:sup_y_eta}
\end{definition}

Before showing the bound on $\kappa(\Vec{y})$, we first show the following Lemma~\ref{lemma:nscl_sup_closed_form} and Lemma~\ref{lemma:nscl_sup_w_bound} which is the important ingredient needed to derive the lower bound of $\kappa(\Vec{y})$. We defer the proof to Section~\ref{sec:nscl_sup_closed_form} and Section~\ref{sec:nscl_sup_w_bound} respectively.

\begin{lemma}
    Let $\Omega \in \mathbb{R}^{(N - \Theta - k) \times (N - \Theta - k)}$ be the diagonal matrix with $\Omega_{i'i'} = \omega_i$ ($i' = i - k$ to be aligned with the indexing of $\omega_i$). For any vector $\mathfrak{l} \in \mathbb{R}^{N - \Theta - k}$, we have the following inequality:
        \begin{equation*}
                1 \geq \frac{\mathfrak{l}^\top  \Omega \mathfrak{l} }{\|\Omega \mathfrak{l}  \|_2\|\mathfrak{l} \|_2}  \geq 
    \min_{i,j \in \mathcal{I}} \frac{2\sqrt{\omega_i\omega_j}}{\sqrt{\omega_j} + \sqrt{\omega_i}},
        \end{equation*}
    A sufficient and necessary condition for $\frac{\mathfrak{l}^\top  \Omega \mathfrak{l} }{\|\Omega \mathfrak{l}  \|_2\|\mathfrak{l} \|_2}$ being 1 for all $\mathfrak{l}$ is to let  $\omega_i$ be the same for all $i \in \mathcal{I}$.
    \label{lemma:nscl_sup_closed_form}
\end{lemma}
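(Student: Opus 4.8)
\textbf{Proof proposal for Lemma~\ref{lemma:nscl_sup_closed_form}.}

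The plan is to reduce the quantity $\frac{\mathfrak{l}^\top \Omega \mathfrak{l}}{\|\Omega \mathfrak{l}\|_2 \|\mathfrak{l}\|_2}$ to a coordinate-wise problem. Since $\Omega$ is diagonal with positive entries $\omega_i$ (positivity follows because $\Bar{\sigma}_i > d_j$ for the relevant eigenvalues when we are in the tail of the spectrum, or more carefully from the assumed spectral ordering; I would verify sign consistency here first, as it is a small but necessary check), I would write $\mathfrak{l} = (\ell_1, \ldots, \ell_{N-\Theta-k})^\top$, set $a_i = \ell_i^2 \geq 0$, and observe that $\mathfrak{l}^\top \Omega \mathfrak{l} = \sum_i \omega_i a_i$, $\|\mathfrak{l}\|_2^2 = \sum_i a_i$, and $\|\Omega \mathfrak{l}\|_2^2 = \sum_i \omega_i^2 a_i$. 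So the ratio squared becomes $\frac{(\sum_i \omega_i a_i)^2}{(\sum_i \omega_i^2 a_i)(\sum_i a_i)}$, which (after normalizing $\sum_i a_i = 1$, i.e. treating $a_i$ as a probability weight) is exactly $\frac{(\mathbb{E}[\omega])^2}{\mathbb{E}[\omega^2]}$ for the discrete distribution putting mass $a_i$ on $\omega_i$. The upper bound $\leq 1$ is then just Cauchy–Schwarz (Jensen: $(\mathbb{E}[\omega])^2 \leq \mathbb{E}[\omega^2]$), with equality iff $\omega$ is constant on the support of $a$; taking this over all $\mathfrak{l}$ (hence all supports) forces all $\omega_i$ equal, which gives the stated sufficient and necessary condition.

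For the lower bound, the main work is to show $\frac{(\mathbb{E}[\omega])^2}{\mathbb{E}[\omega^2]} \geq \min_{i,j} \frac{4\omega_i \omega_j}{(\sqrt{\omega_i}+\sqrt{\omega_j})^2}$, equivalently $\frac{\mathbb{E}[\omega]}{\sqrt{\mathbb{E}[\omega^2]}} \geq \min_{i,j}\frac{2\sqrt{\omega_i \omega_j}}{\sqrt{\omega_i}+\sqrt{\omega_j}}$. The plan here is to let $\omega_{\max} = \max_i \omega_i$ and $\omega_{\min} = \min_i \omega_i$ over the support, and bound $\mathbb{E}[\omega] \geq \omega_{\min}$ while $\mathbb{E}[\omega^2] \leq \omega_{\max}\,\mathbb{E}[\omega] \leq \omega_{\max} \omega_{\mathrm{?}}$ — actually the cleaner route is the Kantorovich / Cassels-type inequality: for a random variable $\omega$ supported in $[\omega_{\min}, \omega_{\max}]$, $\frac{(\mathbb{E}[\omega])^2}{\mathbb{E}[\omega^2]} \geq \frac{4 \omega_{\min}\omega_{\max}}{(\omega_{\min}+\omega_{\max})^2}$ is a known refinement, but the target here has $\sqrt{\omega_i}$ inside, suggesting instead I should bound via $\mathbb{E}[\omega^2] \leq (\max_i \sqrt{\omega_i})^2 \cdot \mathbb{E}[\omega] \cdot$(something) — I would instead apply the elementary two-point reduction: the ratio $\frac{(\mathbb{E}[\omega])^2}{\mathbb{E}[\omega^2]}$ is minimized (for fixed support $\{\omega_{\min},\omega_{\max}\}$) at some two-point distribution, and for a two-point distribution on $\{s^2, t^2\}$ with weights $p, 1-p$ one computes the minimum over $p$ explicitly and checks it equals $\frac{4 s^2 t^2}{(s+t)^2 \cdot(s^2+t^2)}$... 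This is the step I expect to be fiddly: matching the exact algebraic form $\frac{2\sqrt{\omega_i\omega_j}}{\sqrt{\omega_i}+\sqrt{\omega_j}}$ (a harmonic-mean-of-square-roots expression) rather than a cleaner Kantorovich constant will require carefully choosing which quantity to bound and may need the substitution $u_i = \sqrt{\omega_i}$ to make the structure transparent.

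So concretely, the ordered steps are: (1) verify $\omega_i > 0$ for $i \in \mathcal{I}$ from the spectral assumptions; (2) diagonalize/substitute $a_i = \ell_i^2$ to rewrite the ratio as $(\mathbb{E}[\omega])^2/\mathbb{E}[\omega^2]$ over a probability weighting; (3) get the upper bound and equality condition from Jensen/Cauchy–Schwarz; (4) substitute $u_i = \sqrt{\omega_i}$, reduce the lower bound to a worst-case two-point distribution, and verify the closed form $\min_{i,j} \frac{2u_i u_j}{u_i + u_j}$ by an explicit one-variable optimization over the mixing weight. The main obstacle is step (4) — making the two-point reduction rigorous (showing the extremal distribution of the normalized ratio is supported on two atoms, which follows since the objective is a ratio of a quadratic to a linear form in the weights and hence extremized at a vertex/edge of the simplex) and then matching the precise $\sqrt{\cdot}$-flavored constant rather than a generic spectral-condition-number bound.
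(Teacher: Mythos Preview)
Your approach is correct and in fact cleaner than the paper's. Steps (1)--(3) are exactly right: after the substitution $a_i=\ell_i^2$ the ratio becomes $\mathbb{E}[\omega]/\sqrt{\mathbb{E}[\omega^2]}$ over a probability weight, the upper bound and equality condition follow from Cauchy--Schwarz, and the lower bound you reach for in step (4) is precisely the Kantorovich inequality, which gives
\[
\frac{(\mathbb{E}[\omega])^2}{\mathbb{E}[\omega^2]}\;\ge\;\frac{4\,\omega_{\min}\omega_{\max}}{(\omega_{\min}+\omega_{\max})^2},
\qquad\text{i.e.}\qquad
g(\mathfrak{l})\;\ge\;\min_{i,j\in\mathcal{I}}\frac{2\sqrt{\omega_i\omega_j}}{\omega_i+\omega_j}.
\]
The two-point reduction you sketch is exactly how Kantorovich is proved, and the optimal mixing weight is $p=\omega_{\max}/(\omega_{\min}+\omega_{\max})$; no substitution $u_i=\sqrt{\omega_i}$ is needed.

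The paper takes a different route: it treats $g(\mathfrak{l})$ as a function on $\mathbb{R}^{N-\Theta-k}$, sets $\nabla g=0$, and shows that any non-axis critical point must lie in the null space of a diagonal matrix $\Gamma=2\Omega-\varrho g\,\Omega^2-\varrho^{-1}g\,I$ whose diagonal entries are quadratic in $\omega_i$. Hence at most two coordinates of the minimizer are nonzero, and solving the resulting $2\times 2$ problem yields $g=\frac{2}{\sqrt{\omega_b/\omega_a}+\sqrt{\omega_a/\omega_b}}=\frac{2\sqrt{\omega_a\omega_b}}{\omega_a+\omega_b}$. So the paper's computation lands on the same Kantorovich constant as yours.

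This means your difficulty in step (4) matching the printed form $\frac{2\sqrt{\omega_i\omega_j}}{\sqrt{\omega_i}+\sqrt{\omega_j}}$ is not a gap in your argument: that expression is not dimensionless (as you implicitly noticed) and does not equal what either the Kantorovich inequality or the paper's own critical-point calculation produces. The denominator should be $\omega_i+\omega_j$; indeed, in the proof of the downstream Lemma~\ref{lemma:nscl_sup_bound_kappa} the bound is immediately rewritten as $\frac{2}{\sqrt{\omega_j/\omega_i}+\sqrt{\omega_i/\omega_j}}$, which is the Kantorovich form. Trust your Kantorovich computation and do not try to force the $\sqrt{\omega_i}+\sqrt{\omega_j}$ denominator.
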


\begin{lemma}
Assume $\eta_u$ is upper bounded by a small value $\frac{1}{M}$: 
 $\max_{j = 1 ... N_u}(\Vec{\eta}_u)_j = \frac{1}{M}.$\footnote{Such assumption is used to align the magnitude later in the proof between $\Vec{y} \in [0, 1]$ and $\Vec{\eta}_u \in [0, \frac{1}{M}]$ for the value range.} For each indexing pair $i \in \mathcal{I}$ and $i' \in \mathcal{I}$ with order $\omega_i < \omega_{i'}$, we have
    $$ \frac{\omega_{i}}{\omega_{i'}} \gtrsim \frac{\Vec{y}^\top q_i}{\Vec{\eta}_u^\top q_i} / \frac{\Vec{y}^\top q_{i'}}{\Vec{\eta}_u^\top q_{i'}}.$$
    \label{lemma:nscl_sup_w_bound}
\end{lemma}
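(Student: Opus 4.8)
\textbf{Proof plan for Lemma~\ref{lemma:nscl_sup_w_bound}.}

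The plan is to connect $\omega_i = \Vec{y}^{\top}(\Bar{\sigma}_i I - A_{uu})^{\dag}\Vec{\eta}_u$ to the spectral decomposition of $A_{uu}$ and then compare the resulting ratios across indices $i,i' \in \mathcal{I}$. First I would expand $\omega_i$ in the eigenbasis $\{q_j, d_j\}$ of $A_{uu}$ (Definition~\ref{def:sup_y_eta}). Since $(\Bar{\sigma}_i I - A_{uu})^{\dag} = \sum_{j} (\Bar{\sigma}_i - d_j)^{-1} q_j q_j^{\top}$ restricted to the non-kernel directions, we get
\begin{equation*}
\omega_i \;=\; \sum_{j} \frac{(\Vec{y}^{\top}q_j)(q_j^{\top}\Vec{\eta}_u)}{\Bar{\sigma}_i - d_j} \;=\; \sum_j \frac{\Tilde{\mathbf{y}}_j \Tilde{\boldsymbol{\eta}}_j}{\Bar{\sigma}_i - d_j}.
\end{equation*}
The key observation, to be made precise, is that for $i \in \mathcal{I}$ the eigenvalue $\Bar{\sigma}_i$ of the approximated $\Bar{A}$ is extremely close to the corresponding eigenvalue $d_{j(i)}$ of $A_{uu}$ (this follows from Lemma~\ref{lemma:nscl_sup_l_form} and the eigenvalue interlacing structure of the bordered matrix $\Bar{A}$: the ``tail'' eigenvectors of $\Bar{A}$ are essentially those of $A_{uu}$ perturbed by the small rank-one term $\Vec{\eta}_u\Vec{\eta}_u^{\top}/\eta_l$, which under the assumption $\max_j (\Vec{\eta}_u)_j = 1/M$ is $O(1/M^2)$ in norm). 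Hence in the sum defining $\omega_i$, the term $j = j(i)$ with denominator $\Bar{\sigma}_i - d_{j(i)} = O(1/M^2)$ dominates, so $\omega_i \approx \dfrac{\Tilde{\mathbf{y}}_{j(i)}\,\Tilde{\boldsymbol{\eta}}_{j(i)}}{\Bar{\sigma}_i - d_{j(i)}}$ up to lower-order corrections.

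Next I would take the ratio. For a pair $i, i' \in \mathcal{I}$ with $\omega_i < \omega_{i'}$, write
\begin{equation*}
\frac{\omega_i}{\omega_{i'}} \;\approx\; \frac{\Tilde{\mathbf{y}}_{j(i)}\Tilde{\boldsymbol{\eta}}_{j(i)}}{\Tilde{\mathbf{y}}_{j(i')}\Tilde{\boldsymbol{\eta}}_{j(i')}} \cdot \frac{\Bar{\sigma}_{i'} - d_{j(i')}}{\Bar{\sigma}_i - d_{j(i)}}.
\end{equation*}
The remaining task is to control the ratio of the (tiny) spectral gaps $(\Bar{\sigma}_{i'} - d_{j(i')})/(\Bar{\sigma}_i - d_{j(i)})$. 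Using first-order perturbation theory on the rank-one update $\Vec{\eta}_u\Vec{\eta}_u^{\top}/\eta_l$, the shift $\Bar{\sigma}_i - d_{j(i)}$ is to leading order $\dfrac{(q_{j(i)}^{\top}\Vec{\eta}_u)^2}{\eta_l} = \Tilde{\boldsymbol{\eta}}_{j(i)}^2/\eta_l$, so the gap ratio is $\approx \Tilde{\boldsymbol{\eta}}_{j(i')}^2/\Tilde{\boldsymbol{\eta}}_{j(i)}^2$. Plugging this in,
\begin{equation*}
\frac{\omega_i}{\omega_{i'}} \;\gtrsim\; \frac{\Tilde{\mathbf{y}}_{j(i)}\Tilde{\boldsymbol{\eta}}_{j(i)}}{\Tilde{\mathbf{y}}_{j(i')}\Tilde{\boldsymbol{\eta}}_{j(i')}}\cdot\frac{\Tilde{\boldsymbol{\eta}}_{j(i')}^2}{\Tilde{\boldsymbol{\eta}}_{j(i)}^2} \;=\; \frac{\Tilde{\mathbf{y}}_{j(i)}/\Tilde{\boldsymbol{\eta}}_{j(i)}}{\Tilde{\mathbf{y}}_{j(i')}/\Tilde{\boldsymbol{\eta}}_{j(i')}} \;=\; \frac{\Vec{y}^{\top}q_i}{\Vec{\eta}_u^{\top}q_i}\Big/\frac{\Vec{y}^{\top}q_{i'}}{\Vec{\eta}_u^{\top}q_{i'}},
\end{equation*}
where in the last step I relabel $j(i) \mapsto i$ using the natural correspondence between the tail eigenvectors of $\Bar{A}$ and those of $A_{uu}$. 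This is exactly the claimed bound.

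I expect the main obstacle to be making the ``dominant term'' and ``first-order perturbation'' steps rigorous rather than heuristic --- specifically, quantifying the error terms when $\Bar{\sigma}_i$ is not exactly $d_{j(i)}$, and ensuring the off-diagonal terms $j \neq j(i)$ in the eigenexpansion of $\omega_i$ are genuinely negligible (they have $O(1)$ denominators $\Bar{\sigma}_i - d_j$ while $\Tilde{\mathbf{y}}_j\Tilde{\boldsymbol{\eta}}_j = O(1/M)$, versus the dominant term which is $O(M)$ because its denominator is $O(1/M^2)$ and numerator $O(1/M)$). The assumption $\max_j(\Vec{\eta}_u)_j = 1/M$ with $M$ large is precisely what creates this separation of scales, so the argument should go through with the $\gtrsim$ (hiding $O(1/M)$ corrections) as stated. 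A secondary subtlety is handling the case where $d_j$ has multiplicity or where two indices $i, i'$ map to the same eigenvalue of $A_{uu}$; I would dispatch these by noting that the set $\mathcal{I}$ and the perturbed eigenvectors inherit the multiplicity structure, and within a multiplicity block the rank-one perturbation splits eigenvalues in a way that still respects the ordering, so the ratio bound holds blockwise.
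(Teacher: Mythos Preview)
Your proposal is correct and follows essentially the same route as the paper: expand $\omega_i$ in the eigenbasis of $A_{uu}$, argue that $\bar\sigma_i$ is close to $d_i$ so the $j=i$ term dominates, compute the spectral shift $\bar\sigma_i - d_i \approx -\tilde{\boldsymbol\eta}_i^2/\eta_l$, and take the ratio. The paper carries out these same steps more explicitly by deriving the secular equation $\eta_l + \sum_j \tilde{\boldsymbol\eta}_j^2/(\bar\sigma_i - d_j) = \bar\sigma_i/N_l$ from the eigenvalue problem for $\bar A$ and solving it as an approximate quadratic (rather than invoking rank-one perturbation theory), then rescaling $\vec\eta_u' = M\vec\eta_u$ to track the $O(1/M^2)$ error terms; your ``first-order perturbation on the rank-one update $\vec\eta_u\vec\eta_u^\top/\eta_l$'' is a slight misdescription of the bordered-matrix structure, but it yields the same leading-order shift and the argument goes through.
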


Putting the ingredients together, we can finally derive an analytical lower bound of $\kappa(\Vec{y})$ in Lemma~\ref{lemma:nscl_sup_bound_kappa} based on the angle of $\Vec{y}$ / $\Vec{\eta}_u$ to each eigenvector of $A_{uu}$.  
\begin{lemma}
W.o.l.g, we let $\omega > 0$  and assume that $\omega_i > 0, \forall i \in \mathcal{I}$ so that perturbation of $\omega_i$ to $\omega$ to be not significant  enough to change the sign of $\omega$. we have:
     $$\kappa(\Vec{y}) = \cos(\Bar{U}^{\flat\top} \Vec{y},\Bar{\mathfrak{l}}^{\flat}) 
    \gtrsim \min_{i > k, j > k} \frac{2\sqrt{\frac{\Vec{y}^{\top}q_i}{\Vec{\eta}_u^{\top}q_i}\frac{\Vec{y}^{\top}q_j}{\Vec{\eta}_u^{\top}q_j}}}{\frac{\Vec{y}^{\top}q_i}{\Vec{\eta}_u^{\top}q_i}+\frac{\Vec{y}^{\top}q_j}{\Vec{\eta}_u^{\top}q_j}},$$
    \label{lemma:nscl_sup_bound_kappa}
\end{lemma}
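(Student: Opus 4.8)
\textbf{Proof proposal for Lemma~\ref{lemma:nscl_sup_bound_kappa}.}

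The plan is to relate the cosine $\kappa(\Vec{y}) = \cos(\Bar{U}^{\flat\top} \Vec{y},\Bar{\mathfrak{l}}^{\flat})$ to the ratio $\frac{\mathfrak{l}^\top \Omega \mathfrak{l}}{\|\Omega \mathfrak{l}\|_2 \|\mathfrak{l}\|_2}$ controlled in Lemma~\ref{lemma:nscl_sup_closed_form}, and then feed in the eigenvalue-level comparison of Lemma~\ref{lemma:nscl_sup_w_bound}. First I would recall from the proof of Theorem~\ref{th:nscl_sup_with_approx} that, when $i \in \mathcal{I}$, the unlabeled block of the $i$-th singular vector of $\Bar{A}$ satisfies $\Bar{u}_i = (\Bar{\sigma}_i I - A_{uu})^{\dag}\Vec{\eta}_u\,(\Bar{\mathfrak{l}}^{\flat})_i$, so that
\[
(\Bar{U}^{\flat\top}\Vec{y})_{i'} = \Vec{y}^\top \Bar{u}_i = \omega_i (\Bar{\mathfrak{l}}^{\flat})_i,
\]
with $\omega_i = \Vec{y}^\top(\Bar{\sigma}_i I - A_{uu})^{\dag}\Vec{\eta}_u$ and $i' = i-k$. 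Writing $\mathfrak{l}$ for the truncated nonzero part of $\Bar{\mathfrak{l}}^{\flat}$ (indices $k+1,\dots,N-\Theta$; the rest are zero by Lemma~\ref{lemma:nscl_sup_l_form}) and $\Omega$ for the diagonal matrix of the $\omega_i$'s, this says exactly $\Bar{U}^{\flat\top}\Vec{y} = \Omega \mathfrak{l}$ on the support of $\mathfrak{l}$ and is zero off the support. Hence
\[
\kappa(\Vec{y}) = \frac{(\Bar{\mathfrak{l}}^{\flat})^\top \Bar{U}^{\flat\top}\Vec{y}}{\|\Bar{\mathfrak{l}}^{\flat}\|_2 \, \|\Bar{U}^{\flat\top}\Vec{y}\|_2}
= \frac{\mathfrak{l}^\top \Omega \mathfrak{l}}{\|\mathfrak{l}\|_2 \, \|\Omega \mathfrak{l}\|_2},
\]
which by Lemma~\ref{lemma:nscl_sup_closed_form} is bounded below by $\min_{i,j \in \mathcal{I}} \frac{2\sqrt{\omega_i \omega_j}}{\sqrt{\omega_i} + \sqrt{\omega_j}}$.

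Next I would convert this into the claimed expression in terms of the eigenvectors $q_i$ of $A_{uu}$. Using Definition~\ref{def:sup_y_eta} and the spectral decomposition of $(\Bar{\sigma}_i I - A_{uu})^{\dag}$, the quantity $\omega_i = \sum_j \frac{\Tilde{\mathbf{y}}_j \Tilde{\boldsymbol{\eta}}_j}{\Bar{\sigma}_i - d_j}$ is dominated, for $i$ near the spectral tail, by the $j = i$ term (this is the heuristic underlying the ``$\gtrsim$'' in Lemma~\ref{lemma:nscl_sup_w_bound}), so that $\omega_i$ is comparable to $\frac{\Vec{y}^\top q_i}{\Vec{\eta}_u^\top q_i}$ up to the approximation accounted for by Lemma~\ref{lemma:nscl_sup_w_bound}. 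Substituting $\omega_i \asymp \frac{\Vec{y}^\top q_i}{\Vec{\eta}_u^\top q_i} =: \rho_i$ into the bound and using the elementary identity $\frac{2\sqrt{\rho_i \rho_j}}{\sqrt{\rho_i}+\sqrt{\rho_j}} = \frac{2\sqrt{\rho_i\rho_j}}{\frac{\rho_i + \rho_j}{\cdots}}$ — more precisely, the arithmetic inequality $\frac{2\sqrt{ab}}{\sqrt a + \sqrt b} \ge \frac{2\sqrt{ab}}{a+b}\cdot(\text{const})$ is not quite what I want; instead I would note that the paper's target expression $\frac{2\sqrt{\rho_i\rho_j}}{\rho_i+\rho_j}$ is itself $\le \frac{2\sqrt{\rho_i\rho_j}}{\sqrt{\rho_i}+\sqrt{\rho_j}} \cdot \frac{\sqrt{\rho_i}+\sqrt{\rho_j}}{\rho_i+\rho_j} \le \frac{2\sqrt{\rho_i\rho_j}}{\sqrt{\rho_i}+\sqrt{\rho_j}}$ since $\sqrt{\rho_i}+\sqrt{\rho_j} \le \sqrt{2}\sqrt{\rho_i+\rho_j} \le$ (after normalization) $\rho_i+\rho_j$ when the $\rho$'s are $O(1)$, so the $\frac{2\sqrt{ab}}{a+b}$ form is the weaker (hence safely valid) lower bound, consistent with the ``$\gtrsim$'' in the statement. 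Combining, $\kappa(\Vec{y}) \gtrsim \min_{i,j \in \mathcal{I}} \frac{2\sqrt{\rho_i \rho_j}}{\rho_i + \rho_j}$, and since $\mathcal{I}$ is a cofinite subset of $\{k+1,\dots\}$ the ``$\min_{i>k,j>k}$'' in the displayed claim is (up to the boundary indices $N-\Theta+1,\dots,N$, which contribute zero to $\Bar{\mathfrak{l}}^{\flat}$ anyway) the same minimization; I would spell out this index bookkeeping carefully.

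The main obstacle I anticipate is making the passage from $\omega_i$ to $\rho_i = \frac{\Vec{y}^\top q_i}{\Vec{\eta}_u^\top q_i}$ rigorous rather than merely asymptotic: Lemma~\ref{lemma:nscl_sup_w_bound} only controls \emph{ratios} $\omega_i/\omega_{i'}$ in terms of ratios $\rho_i/\rho_{i'}$ up to ``$\gtrsim$'', so I need to check that the function $(a,b)\mapsto \frac{2\sqrt{ab}}{a+b} = \frac{2\sqrt{a/b}}{a/b + 1}$ depends only on the ratio $a/b$ and is monotone in a way compatible with the direction of the inequality in Lemma~\ref{lemma:nscl_sup_w_bound} (it is: $t \mapsto \frac{2\sqrt t}{t+1}$ is increasing on $(0,1]$ and decreasing on $[1,\infty)$, symmetric under $t\mapsto 1/t$, so a bound on the ratio $\min(t, 1/t)$ from below transfers directly). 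A secondary subtlety is the sign condition: the lemma's hypothesis ``$\omega_i > 0$ for all $i\in\mathcal{I}$ and the perturbation does not flip the sign of $\omega$'' is exactly what lets me apply Lemma~\ref{lemma:nscl_sup_closed_form}'s AM-GM-type bound (which implicitly assumes positivity), so I would state that dependence explicitly at the start. Everything else is routine substitution and the index-set cleanup described above.
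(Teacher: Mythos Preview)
Your overall architecture is exactly the paper's: rewrite $\kappa(\Vec y)$ as the Rayleigh--type quotient $\frac{\mathfrak{l}^\top\Omega\mathfrak{l}}{\|\Omega\mathfrak{l}\|_2\|\mathfrak{l}\|_2}$ via $\Bar u_i=(\Bar\sigma_iI-A_{uu})^\dag\Vec\eta_u\,(\Bar{\mathfrak l}^\flat)_i$, invoke Lemma~\ref{lemma:nscl_sup_closed_form}, then push the $\omega_i$'s to the $\rho_i:=\Vec y^\top q_i/\Vec\eta_u^\top q_i$'s using Lemma~\ref{lemma:nscl_sup_w_bound} together with the monotonicity of $t\mapsto \frac{2}{t+1/t}$ on $(0,1]$. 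Your last paragraph (the ``main obstacle'') already contains the correct argument.

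The tangle in your middle paragraph comes from reading the \emph{statement} of Lemma~\ref{lemma:nscl_sup_closed_form} literally. As stated there the lower bound has $\sqrt{\omega_i}+\sqrt{\omega_j}$ in the denominator, and that expression is the harmonic mean of $\sqrt{\omega_i},\sqrt{\omega_j}$, which is \emph{not} scale-invariant --- so you cannot feed ratio information from Lemma~\ref{lemma:nscl_sup_w_bound} into it, and your attempted repair (``when the $\rho$'s are $O(1)$'') introduces an assumption the lemma does not make. But if you look at the \emph{proof} of Lemma~\ref{lemma:nscl_sup_closed_form}, the local minimum it actually computes is
\[
g(\hat{\mathfrak l})=\frac{2}{\sqrt{\omega_b/\omega_a}+\sqrt{\omega_a/\omega_b}}=\frac{2\sqrt{\omega_a\omega_b}}{\omega_a+\omega_b},
\]
i.e.\ the Kantorovich bound with $\omega_i+\omega_j$ (not $\sqrt{\omega_i}+\sqrt{\omega_j}$) in the denominator. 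The paper's proof of the present lemma uses exactly this corrected form, written as $\min_{i,j}\frac{2}{\sqrt{\omega_j/\omega_i}+\sqrt{\omega_i/\omega_j}}$; with it, the passage to $\rho_i$ is the one-line monotonicity step you already identified, and no separate inequality between $\frac{2\sqrt{ab}}{\sqrt a+\sqrt b}$ and $\frac{2\sqrt{ab}}{a+b}$ is needed. So drop the ad hoc size condition, cite the correct output of Lemma~\ref{lemma:nscl_sup_closed_form}'s proof, and your argument is complete and identical to the paper's.
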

\begin{proof}
    Recall that 
$$\Bar{u}_i = (\Bar{\sigma}_i I - A_{uu})^{\dag}\Vec{\eta}_u (\Bar{\mathfrak{l}}^{\flat})_i, $$ 
we consider the specific form of $\kappa(\Vec{y})$, 
\begin{align*}
    \kappa(\Vec{y}) &= \cos \left(\bar{U}^{\flat \top} \vec{y}, \Bar{\mathfrak{l}}^{\flat}\right) \\ 
    &= \frac{\sum^{N}_{i=k+1}\omega_i (\Bar{\mathfrak{l}}^{\flat})^2_i}{\sqrt{\sum^{N}_{i=k+1}\omega^2_i(\Bar{\mathfrak{l}}^{\flat})^2_i}\sqrt{\sum^{N}_{i=k+1}(\Bar{\mathfrak{l}}^{\flat})^2_i}} \\ 
    &= \frac{\sum_{i \in \mathcal{I}}\omega_i (\Bar{\mathfrak{l}}^{\flat})^2_i}{\sqrt{\sum_{i \in \mathcal{I}}\omega^2_i(\Bar{\mathfrak{l}}^{\flat})^2_i}\sqrt{\sum_{i \in \mathcal{I}}(\Bar{\mathfrak{l}}^{\flat})^2_i}} \\ 
    &= \frac{\Bar{\mathfrak{l}}^{\prime\top}  \Omega \Bar{\mathfrak{l}}^{\prime} }{\|\Omega \Bar{\mathfrak{l}}^{\prime} \|_2\|\Bar{\mathfrak{l}}^{\prime}\|_2} ,
\end{align*}
where  $\Omega \in \mathbb{R}^{N_u - k - \Theta}$ is a diagonal matrix defined in Lemma~\ref{lemma:nscl_sup_closed_form}, and $\Bar{\mathfrak{l}}^{\prime}$ is  defined in Eq.~\eqref{eq:nscl_sup_l_def}. According to Lemma~\ref{lemma:nscl_sup_closed_form}, we have 

\begin{align*}
    \kappa(\Vec{y}) &= \frac{\Bar{\mathfrak{l}}^{\prime\top}  \Omega \Bar{\mathfrak{l}}^{\prime} }{\|\Omega \Bar{\mathfrak{l}}^{\prime} \|_2\|\Bar{\mathfrak{l}}^{\prime}\|_2} \\
    & \geq 
    \min_{i,j \in \mathcal{I}} \frac{2\sqrt{\omega_i\omega_j}}{\sqrt{\omega_j} + \sqrt{\omega_i}} \\
    & =     \min_{i,j \in \mathcal{I}} \frac{2}{\sqrt{\frac{\omega_j}{\omega_i}} + \sqrt{\frac{\omega_i}{\omega_j}}},
\end{align*}
Then by Lemma~\ref{lemma:nscl_sup_w_bound} and by the fact that $\frac{2}{t+\frac{1}{t}}$ is a monotonically increasing function when $t \in (0, 1)$: 
\begin{align*}
    \kappa(\Vec{y}) &\geq \min_{i,j \in \mathcal{I}} \frac{2}{\sqrt{\frac{\omega_j}{\omega_i}} + \sqrt{\frac{\omega_i}{\omega_j}}} \\
    &\gtrsim \min_{i,j \in \mathcal{I}}  \frac{2}{\sqrt{\frac{\Vec{y}^\top q_i}{\Vec{\eta}_u^\top q_i} / \frac{\Vec{y}^\top q_{j}}{\Vec{\eta}_u^\top q_{j}}} + \sqrt{ \frac{\Vec{y}^\top q_{j}}{\Vec{\eta}_u^\top q_{j}}/\frac{\Vec{y}^\top q_i}{\Vec{\eta}_u^\top q_i}}} \\
    &> \min_{i > k, j > k} \frac{2\sqrt{\frac{\Vec{y}^{\top}q_i}{\Vec{\eta}_u^{\top}q_i}\frac{\Vec{y}^{\top}q_j}{\Vec{\eta}_u^{\top}q_j}}}{\frac{\Vec{y}^{\top}q_i}{\Vec{\eta}_u^{\top}q_i}+\frac{\Vec{y}^{\top}q_j}{\Vec{\eta}_u^{\top}q_j}}.
\end{align*}
\end{proof}

\subsubsection{Proof for Lemma~\ref{lemma:nscl_sup_closed_form}}
\label{sec:nscl_sup_closed_form}
\begin{proof}
    Consider the function $g(\mathfrak{l}) = \frac{\mathfrak{l}^\top  \Omega \mathfrak{l} }{\|\Omega \mathfrak{l}  \|_2\|\mathfrak{l} \|_2}$, the directional derivative $\partial g(\mathfrak{l})/\partial \mathfrak{l}$ is given by: 
        \begin{align*}
            \frac{\partial g(\mathfrak{l})}{\partial \mathfrak{l}} &= \frac{2 \Omega \mathfrak{l} \|\Omega\mathfrak{l}\|_2 \|\mathfrak{l}\|_2 - \Omega^2\mathfrak{l}\frac{\|\mathfrak{l}\|_2}{\|\Omega\mathfrak{l}\|_2} \mathfrak{l}^\top \Omega\mathfrak{l} - \mathfrak{l} \frac{\|\Omega\mathfrak{l}\|_2}{\|\mathfrak{l}\|_2}\mathfrak{l}^\top \Omega\mathfrak{l} }{\|\Omega\mathfrak{l}\|^2_2\|\mathfrak{l}\|^2_2}.
        \end{align*}
    The condition for $\partial g(\mathfrak{l})/\partial \mathfrak{l} = 0$ is 
    \begin{align*}
        2\Omega\mathfrak{l}  = \Omega^2 \mathfrak{l} \frac{\mathfrak{l}^\top \Omega\mathfrak{l}}{\|\Omega\mathfrak{l}\|^2_2} + \mathfrak{l}  \frac{\mathfrak{l}^\top \Omega\mathfrak{l}}{\|\mathfrak{l}\|^2_2}.
    \end{align*}
    Note that the first condition to satisfy this equation is to let $\mathfrak{l}$ as the eigenvectors of $2\Omega - \Omega^2 \frac{\mathfrak{l}^\top \Omega\mathfrak{l}}{\|\Omega\mathfrak{l}\|^2_2}$ which is a diagonal matrix. Then one of the solutions sets is $\mathfrak{l} = c\*e_j$ where $c$ is any non-zero scalar value and $\*e_j$ is the unit vector with $j$-th value 1 and 0 elsewhere. Note that this solution set corresponds to the maximum value of $g(\mathfrak{l})$ which is 1. We are then looking into the local minimum value of  $g(\mathfrak{l})$ by another solution set. We consider another solution set by considering the following matrix as deficiency:  
        \begin{align*}
        \Gamma \triangleq 2\Omega - \Omega^2 \frac{\mathfrak{l}^\top \Omega\mathfrak{l}}{\|\Omega\mathfrak{l}\|^2_2} - \frac{\mathfrak{l}^\top \Omega\mathfrak{l}} {\|\mathfrak{l}\|^2_2} I,
    \end{align*}
    where $\mathfrak{l}$ lies in the null space of this matrix.     If we let $\varrho = \frac{\|\mathfrak{l}\|_2}{\|\Omega\mathfrak{l}\| _2}$, we have: 
    \begin{align*}
        \Gamma = 2\Omega - \varrho g(\hat{\mathfrak{l}}) \Omega^2  - \varrho^{-1} g(\hat{\mathfrak{l}})  I
    \end{align*}
    and $$\Gamma_{i'i'} = 2\omega_i - \varrho g(\hat{\mathfrak{l}}) \omega_i^2 - \varrho^{-1} g(\hat{\mathfrak{l}}),$$
    where $i'$ is indexed starting from 1 and $i$ is indexed starting from $k$.    Note that $\Gamma_{i'i'}$ only has two zero roots. If we consider all $\omega_i$(s) in $\Omega$ to be different,  $\Gamma$ can have at most two zero values in the diagonal. Let $\omega_a, \omega_b$ as two roots of  $2\omega - \varrho g(\hat{\mathfrak{l}}) \omega^2 - \varrho^{-1} g(\hat{\mathfrak{l}})$, we have: 
    $$\varrho\omega_a + (\varrho\omega_a)^{-1} = \varrho\omega_b + (\varrho\omega_b)^{-1} = \frac{2}{g(\hat{\mathfrak{l}})}$$
    $$\varrho = \frac{\sqrt{\omega_b}}{\sqrt{\omega_a}}, g(\hat{\mathfrak{l}}) = \frac{2}{\sqrt{\frac{\omega_b}{\omega_a}} + \sqrt{\frac{\omega_a}{\omega_b}}}, $$
    which corresponds to one local minimal with the indexing pair $(a,b)$. By enumerating all the indexing pairs, we have the global minimum of $g(\mathfrak{l})$: 
     $$ g(\mathfrak{l}^*) = \min_{i,j \in \mathcal{I}} \frac{2\sqrt{\omega_i\omega_j}}{\sqrt{\omega_j} + \sqrt{\omega_i}}.$$
    Note that when some $\omega_i$, $\omega_j$ are identical, this is a special case where the local minimum is equal to the maximum 1. Therefore a sufficient and necessary condition for $g(\mathfrak{l}) = 1$ is to let $\omega_i$ be the same for all $i \in \mathcal{I}$.
    
\end{proof}

\subsubsection{Proof for Lemma~\ref{lemma:nscl_sup_w_bound}}
\label{sec:nscl_sup_w_bound}
\begin{proof}
    
We can write $\omega_i$ by $\Tilde{\mathbf{y}}$ and $\Tilde{\boldsymbol{\eta}}$  in Definition~\ref{def:sup_y_eta}: 
\begin{align*}
    \omega_i &= \Vec{y}^{\top}(\Bar{\sigma}_i I - A_{uu})^{\dag}\Vec{\eta}_u \\ 
    &= \sum_{j\in \mathcal{I}} \frac{(\Vec{y}^\top q_j)(\Vec{\eta}_u^\top q_j)}{\Bar{\sigma}_i - d_j} + \sum^{N_u}_{j = N - \Theta + 1} \frac{(\Vec{y}^\top q_j)(\Vec{\eta}_u^\top q_j)}{\Bar{\sigma}_i} \\
    &= \sum_{j\in \mathcal{I}} \frac{\Tilde{\mathbf{y}}_j\Tilde{\boldsymbol{\eta}}_j}{\Bar{\sigma}_i - d_j} + \frac{1}{\Bar{\sigma}_i}\sum^{N_u}_{j = N - \Theta + 1} \Tilde{\mathbf{y}}_j\Tilde{\boldsymbol{\eta}}_j.
\end{align*}
We then look into the value of $\Bar{\sigma}_i$ by solving the eigenvalue problem:  
\begin{align*}
    \left[\begin{array}{cc} 
\eta_l\mathbf{1}_{N_l \times N_l} & \mathbf{1}_{N_l \times 1}\Vec{\eta}_u^{\top} \\ 
\Vec{\eta}_u \mathbf{1}_{1 \times N_l} & A_{uu}
\end{array}\right]\left[\begin{array}{c} 
 \Bar{l}_i \\ 
 \Bar{u}_i 
\end{array}\right] &= \Bar{\sigma}_i \left[\begin{array}{c} 
 \Bar{l}_i \\ 
 \Bar{u}_i 
\end{array}\right] \\ \Longleftrightarrow \ 
\eta_l\mathbf{1}_{N_l \times N_l}  \Bar{l}_i + \mathbf{1}_{N_l \times 1}\Vec{\eta}_u^{\top}  \Bar{u}_i &= \Bar{\sigma}_i \Bar{l}_i \\ \Longleftrightarrow \ 
\mathbf{1}_{N_l \times 1} \eta_l (\Bar{\mathfrak{l}}^{\flat})_i + \mathbf{1}_{N_l \times 1}\Vec{\eta}_u^{\top}  \Bar{u}_i &= \mathbf{1}_{N_l \times 1} \Bar{\sigma}_i \frac{1}{N_l}(\Bar{\mathfrak{l}}^{\flat})_i \\ \Longleftrightarrow \ 
\eta_l (\Bar{\mathfrak{l}}^{\flat})_i + \Vec{\eta}_u^{\top}  \Bar{u}_i &= \Bar{\sigma}_i \frac{1}{N_l}(\Bar{\mathfrak{l}}^{\flat})_i  \\ \quad\quad \Longleftrightarrow
\eta_l (\Bar{\mathfrak{l}}^{\flat})_i + \Vec{\eta}_u^{\top} (\Bar{\sigma}_i I - A_{uu})^{\dag}\Vec{\eta}_u (\Bar{\mathfrak{l}}^{\flat})_i &= \Bar{\sigma}_i \frac{1}{N_l}(\Bar{\mathfrak{l}}^{\flat})_i \\ \Longleftrightarrow \ 
\eta_l + \Vec{\eta}_u^{\top} (\Bar{\sigma}_i I - A_{uu})^{\dag}\Vec{\eta}_u &=  \frac{\Bar{\sigma}_i}{N_l} \\ \Longleftrightarrow \ 
\eta_l + \sum_{j\in \mathcal{I}} \frac{\Tilde{\boldsymbol{\eta}}^2_j}{\Bar{\sigma}_i - d_j}   &=  \frac{\Bar{\sigma}_i}{N_l} \\
\end{align*}

Note that we get a $(|\mathcal{I}| + 1)$-th degree polynomials of $\Bar{\sigma}_i$ with $(|\mathcal{I}| + 1)$ roots. By observation, we see that there is one root significantly large ($\approx N_l \eta_l$) since $N_l$ and other $|\mathcal{I}|$ roots are very close to each $d_j$. Based on this intuition, we approximately view it as a unary quadratic equation: 

$$
\eta_l + \phi_i + \frac{\Tilde{\boldsymbol{\eta}}^2_i}{\Bar{\sigma}_i - d_i}  =  \frac{\Bar{\sigma}_i}{N_l},$$

where we  let $\phi_i \triangleq \sum_{j\in \mathcal{I}, j \neq i} \frac{\Tilde{\boldsymbol{\eta}}^2_j}{\Bar{\sigma}_i - d_j}$. We then proceed by solving this unary quadratic equation by viewing $\phi_i$ as a variable. 
\begin{align*} \Bar{\sigma}_i (\Bar{\sigma}_i - d_i) &= N_l \eta_l (\Bar{\sigma}_i - d_i) + N_l\phi_i (\Bar{\sigma}_i - d_i) + N_l \Tilde{\boldsymbol{\eta}}^2_i \\ \Longleftrightarrow \ 
\Bar{\sigma}_i^2  &= (d_i + N_l (\eta_l + \phi_i))\Bar{\sigma}_i + N_l (\Tilde{\boldsymbol{\eta}}^2_i -  (\eta_l + \phi_i) d_i ) \\ \Longleftrightarrow \ 
\Bar{\sigma}_i &= \frac{d_i + N_l (\eta_l + \phi_i)}{2} \pm \sqrt{\frac{(d_i + N_l (\eta_l + \phi_i))^2}{4} + N_l (\Tilde{\boldsymbol{\eta}}^2_i -  (\eta_l + \phi_i) d_i )} \\ 
\Longleftrightarrow \ 
\Bar{\sigma}_i &= \frac{d_i + N_l (\eta_l + \phi_i)}{2} \pm \sqrt{\frac{(N_l (\eta_l + \phi_i) - d_i)^2}{4} + N_l \Tilde{\boldsymbol{\eta}}^2_i} \\ 
\Longleftrightarrow \ 
\Bar{\sigma}_i &= \frac{d_i + N_l (\eta_l + \phi_i)}{2} \pm \left(\frac{N_l (\eta_l + \phi_i) - d_i}{2} + \frac{N_l \Tilde{\boldsymbol{\eta}}^2_i}{\frac{N_l (\eta_l + \phi_i) - d_i}{2} + \sqrt{\frac{(N_l (\eta_l + \phi_i) - d_i)^2}{4} + N_l \Tilde{\boldsymbol{\eta}}^2_i}}\right) \\ 
\Longleftrightarrow \ 
\Bar{\sigma}_i &= \frac{d_i + N_l (\eta_l + \phi_i)}{2} \pm \left(\frac{N_l (\eta_l + \phi_i) - d_i}{2} + \frac{1}{\frac{\eta_l + \phi_i - \frac{d_i}{N_l}}{2 \Tilde{\boldsymbol{\eta}}^2_i} + \sqrt{(\frac{\eta_l + \phi_i - \frac{d_i}{N_l}}{2 \Tilde{\boldsymbol{\eta}}^2_i})^2 + 1}}\right) \\ \Longleftrightarrow \ 
\Bar{\sigma}_i &= \frac{d_i + N_l (\eta_l + \phi_i)}{2} \pm \left(\frac{N_l (\eta_l + \phi_i) - d_i}{2} + \frac{ \Tilde{\boldsymbol{\eta}}^2_i}{\eta_l + \phi_i - \frac{d_i}{N_l}} - O((\frac{ \Tilde{\boldsymbol{\eta}}^2_i}{\eta_l + \phi_i})^2)\right)
\end{align*}

Here we see that $\Bar{\sigma}_i$ has two approximated solutions: in the first case, when $\pm$ becomes $+$, $\Bar{\sigma}_i \approx N_l \eta_l$ which is the unique very large solution as we mentioned. Another solution is by picking $\pm$ as $-$, we then have $\Bar{\sigma}_i \approx d_i - \frac{ \Tilde{\boldsymbol{\eta}}^2_i}{\eta_l + \phi_i - \frac{d_i}{N_l}}$. The second case is what we are using in this proof since we are looking at the indexing of $\omega_i$ with $i \in \mathcal{I}$, which is beyond top-$k$. 

For each indexing pair $i$ and $i'$ with order $\omega_i < \omega_{i'}$,
we plug in the solution of $\Bar{\sigma}_i$ and $\Bar{\sigma}_i'$ respectively:  
\begin{align*}
    \frac{\omega_{i}}{\omega_{i'}}  &= \frac{ \sum_{j \in \mathcal{I}} \frac{\tilde{\mathbf{y}}_j \tilde{\boldsymbol{\eta}}_j}{d_j - \bar{\sigma}_{i}} + \frac{1}{\Bar{\sigma}_i'}\sum^{N_u}_{j = N - \Theta + 1} \Tilde{\mathbf{y}}_j\Tilde{\boldsymbol{\eta}}_j}{\sum_{j \in \mathcal{I}} \frac{\tilde{\mathbf{y}}_j \tilde{\boldsymbol{\eta}}_j}{d_j - \bar{\sigma}_{i'}} + \frac{1}{\Bar{\sigma}_i''}\sum^{N_u}_{j = N - \Theta + 1} \Tilde{\mathbf{y}}_j\Tilde{\boldsymbol{\eta}}_j} \\
    &= \frac{ \frac{\tilde{\mathbf{y}}_{i} \tilde{\boldsymbol{\eta}}_{i}}{d_{i} - \bar{\sigma}_{i}} + \sum_{j \in \mathcal{I}, j\neq i} \frac{\tilde{\mathbf{y}}_j \tilde{\boldsymbol{\eta}}_j}{d_j - \bar{\sigma}_{i}}+ \frac{1}{\Bar{\sigma}_i'}\sum^{N_u}_{j = N - \Theta + 1} \Tilde{\mathbf{y}}_j\Tilde{\boldsymbol{\eta}}_j}
    {\frac{\tilde{\mathbf{y}}_{i'} \tilde{\boldsymbol{\eta}}_{i'}}{d_{i'} - \bar{\sigma}_{i'}} + \sum_{j \in \mathcal{I}, j\neq i'} \frac{\tilde{\mathbf{y}}_j \tilde{\boldsymbol{\eta}}_j}{d_j - \bar{\sigma}_{i'}}+ \frac{1}{\Bar{\sigma}_i''}\sum^{N_u}_{j = N - \Theta + 1} \Tilde{\mathbf{y}}_j\Tilde{\boldsymbol{\eta}}_j} \\
    &= \frac{ \frac{\tilde{\mathbf{y}}_{i}}{\tilde{\boldsymbol{\eta}}_{i}} (\eta_l + \phi_{i}) + \tilde{\mathbf{y}}_{i}\tilde{\boldsymbol{\eta}}_{i}(O((\frac{ \Tilde{\boldsymbol{\eta}}^2_{i}}{\eta_l + \phi_i})^2) - O(\frac{1}{N_l}))  + \sum_{j \in \mathcal{I}, j\neq i} \frac{\tilde{\mathbf{y}}_j \tilde{\boldsymbol{\eta}}_j}{d_j - \bar{\sigma}_{i}}+ \frac{1}{\Bar{\sigma}_i'}\sum^{N_u}_{j = N - \Theta + 1} \Tilde{\mathbf{y}}_j\Tilde{\boldsymbol{\eta}}_j}
    { \frac{\tilde{\mathbf{y}}_{i'}}{\tilde{\boldsymbol{\eta}}_{i'}} (\eta_l + \phi_{i'}) + \tilde{\mathbf{y}}_{i'}\tilde{\boldsymbol{\eta}}_{i'}(O((\frac{ \Tilde{\boldsymbol{\eta}}^2_{i'}}{\eta_l + \phi_i'})^2) - O(\frac{1}{N_l}))  + \sum_{j \in \mathcal{I}, j\neq i'} \frac{\tilde{\mathbf{y}}_j \tilde{\boldsymbol{\eta}}_j}{d_j - \bar{\sigma}_{i'}}+ \frac{1}{\Bar{\sigma}_i''}\sum^{N_u}_{j = N - \Theta + 1} \Tilde{\mathbf{y}}_j\Tilde{\boldsymbol{\eta}}_j} \\ 
    &= \frac{ \frac{\tilde{\mathbf{y}}_{i}}{\tilde{\boldsymbol{\eta}}_{i}} \eta_l + \tilde{\mathbf{y}}_{i}\tilde{\boldsymbol{\eta}}_{i}(O((\frac{ \Tilde{\boldsymbol{\eta}}^2_{i}}{\eta_l + \phi_i})^2) - O(\frac{1}{N_l}))  + \sum_{j \in \mathcal{I}, j\neq i} \frac{1}{d_j - \bar{\sigma}_{i}}\tilde{\boldsymbol{\eta}}_j (\tilde{\mathbf{y}}_j + \tilde{\mathbf{y}}_{i}\frac{\tilde{\boldsymbol{\eta}}_j}{\tilde{\boldsymbol{\eta}}_{i}}) + \frac{1}{\Bar{\sigma}_i'}\sum^{N_u}_{j = N - \Theta + 1} \Tilde{\mathbf{y}}_j\Tilde{\boldsymbol{\eta}}_j } 
    {  \frac{\tilde{\mathbf{y}}_{i'}}{\tilde{\boldsymbol{\eta}}_{i'}} \eta_l + \tilde{\mathbf{y}}_{i'}\tilde{\boldsymbol{\eta}}_{i'}(O((\frac{ \Tilde{\boldsymbol{\eta}}^2_{i'}}{\eta_l + \phi_i'})^2) - O(\frac{1}{N_l}))  + \sum_{j \in \mathcal{I}, j\neq i'} \frac{1}{d_j - \bar{\sigma}_{i'}}\tilde{\boldsymbol{\eta}}_j (\tilde{\mathbf{y}}_j + \tilde{\mathbf{y}}_{i'}\frac{\tilde{\boldsymbol{\eta}}_j}{\tilde{\boldsymbol{\eta}}_{i'}})+ \frac{1}{\Bar{\sigma}_i''}\sum^{N_u}_{j = N - \Theta + 1} \Tilde{\mathbf{y}}_j\Tilde{\boldsymbol{\eta}}_j}.
\end{align*}

According to assumption that $\eta_u$ is bounded by $\frac{1}{M}$, we align the magnitude between $\Vec{y}$ and $\Vec{\eta}_u$ by  defining $\Vec{\eta}_u' = M\Vec{\eta}_u$ which is now also in the range of $[0, 1]$ like $\Vec{y}$. Then we also scale the following terms: $\Tilde{\boldsymbol{\eta}}' = M\Tilde{\boldsymbol{\eta}}$. Therefore we can simplify the equation to be:

\scalebox{0.8}{
\begin{minipage}{\linewidth}
\begin{align*}
    \frac{\omega_{i}}{\omega_{i'}} &= \frac{ M\frac{\tilde{\mathbf{y}}_{i}}{\tilde{\boldsymbol{\eta}}'_{i}} \eta_l + \frac{1}{M}\tilde{\mathbf{y}}_{i}\tilde{\boldsymbol{\eta}}'_{i}(O(\frac{1}{M^4}(\frac{ \tilde{\boldsymbol{\eta}}'^2_{i}}{\eta_l + \phi_i})^2) - O(\frac{1}{N_l}))  + \frac{1}{M}\sum_{j \in \mathcal{I}, j\neq i} \frac{1}{d_j - \bar{\sigma}_{i}}\tilde{\boldsymbol{\eta}}'_j (\tilde{\mathbf{y}}_j + \tilde{\mathbf{y}}_{i}\frac{\tilde{\boldsymbol{\eta}}'_j}{\tilde{\boldsymbol{\eta}}'_{i}})+ \frac{1}{M\Bar{\sigma}_i'}\sum^{N_u}_{j = N - \Theta + 1} \Tilde{\mathbf{y}}_j\Tilde{\boldsymbol{\eta}}'_j} 
    {  M \frac{\tilde{\mathbf{y}}_{i'}}{\tilde{\boldsymbol{\eta}}'_{i'}} \eta_l + \frac{1}{M} \tilde{\mathbf{y}}_{i'}\tilde{\boldsymbol{\eta}}'_{i'}(O(\frac{1}{M^4}(\frac{ \tilde{\boldsymbol{\eta}}'^2_{i'}}{\eta_l + \phi_i'})^2) - O(\frac{1}{N_l}))  + \frac{1}{M} \sum_{j \in \mathcal{I}, j\neq i'} \frac{1}{d_j - \bar{\sigma}_{i'}}\tilde{\boldsymbol{\eta}}'_j (\tilde{\mathbf{y}}_j + \tilde{\mathbf{y}}_{i'}\frac{\tilde{\boldsymbol{\eta}}'_j}{\tilde{\boldsymbol{\eta}}'_{i'}}) + \frac{1}{M\Bar{\sigma}_i''}\sum^{N_u}_{j = N - \Theta + 1} \Tilde{\mathbf{y}}_j\Tilde{\boldsymbol{\eta}}'_j} \\
    &= \frac{ \frac{\tilde{\mathbf{y}}_{i}}{\tilde{\boldsymbol{\eta}}'_{i}} \eta_l + \tilde{\mathbf{y}}_{i}\tilde{\boldsymbol{\eta}}'_{i}(O(\frac{1}{M^6}) - O(\frac{1}{M^2 N_l}))  + \frac{1}{M^2}\sum_{j \in \mathcal{I}, j\neq i} \frac{1}{d_j - d_{i} + O(\frac{1}{M^2})}\tilde{\boldsymbol{\eta}}'_j (\tilde{\mathbf{y}}_j + \tilde{\mathbf{y}}_{i}\frac{\tilde{\boldsymbol{\eta}}'_j}{\tilde{\boldsymbol{\eta}}'_{i}})+ \tilde{\mathbf{y}}_{i}\frac{\tilde{\boldsymbol{\eta}}'_j}{\tilde{\boldsymbol{\eta}}'_{i}})+ \frac{1}{M^2\Bar{\sigma}_i'}\sum^{N_u}_{j = N - \Theta + 1} \Tilde{\mathbf{y}}_j\Tilde{\boldsymbol{\eta}}'_j} 
    {  \frac{\tilde{\mathbf{y}}_{i'}}{\tilde{\boldsymbol{\eta}}'_{i'}} \eta_l + \tilde{\mathbf{y}}_{i'}\tilde{\boldsymbol{\eta}}'_{i'}(O(\frac{1}{M^6}) - O(\frac{1}{M^2 N_l}))  + \frac{1}{M^2} \sum_{j \in \mathcal{I}, j\neq i'} \frac{1}{d_j - d_{i'} + O(\frac{1}{M^2})}\tilde{\boldsymbol{\eta}}'_j (\tilde{\mathbf{y}}_j + \tilde{\mathbf{y}}_{i'}\frac{\tilde{\boldsymbol{\eta}}'_j}{\tilde{\boldsymbol{\eta}}'_{i'}})+ \frac{1}{M^2\Bar{\sigma}_i''}\sum^{N_u}_{j = N - \Theta + 1} \Tilde{\mathbf{y}}_j\Tilde{\boldsymbol{\eta}}'_j} \\
    & = \frac{ \frac{\tilde{\mathbf{y}}_{i}}{\tilde{\boldsymbol{\eta}}'_{i}} \eta_l + O(\frac{1}{M^2})} 
    {  \frac{\tilde{\mathbf{y}}_{i'}}{\tilde{\boldsymbol{\eta}}'_{i'}} \eta_l + O(\frac{1}{M^2})}, \\
    &~~~
\end{align*}
\end{minipage}
}
where we simply regard the remaining term with a magnitude much smaller than M. Note that M can be viewed as the magnitude gap of $\frac{\max_i(\Vec{y})_i}{\max_i(\Vec{\eta}_u)_i}$. In our case, $\max_i(\Vec{y})_i$ is set to 1. However, one can always multiply $\Vec{y}$ with a large constant to make M significantly large without changing the residual analysis in the main theorem. In summary, we have 
$$ \frac{\omega_{i}}{\omega_{i'}} \gtrsim \frac{ \frac{\tilde{\mathbf{y}}_{i}}{\tilde{\boldsymbol{\eta}}'_{i}} \eta_l } 
    {  \frac{\tilde{\mathbf{y}}_{i'}}{\tilde{\boldsymbol{\eta}}'_{i'}} \eta_l } =  \frac{\tilde{\mathbf{y}}_{i}}{\tilde{\boldsymbol{\eta}}_{i}} / \frac{\tilde{\mathbf{y}}_{i'}}{\tilde{\boldsymbol{\eta}}_{i'}} = \frac{\Vec{y}^\top q_i}{\Vec{\eta}_u^\top q_i} / \frac{\Vec{y}^\top q_{i'}}{\Vec{\eta}_u^\top q_{i'}}.$$

\end{proof}

\newpage
\subsection{Experimental Details}
\subsubsection{ Details of Training Configurations}
\label{sec:nscl_sup_exp_cifar}
For a fair comparison, we use ResNet-18~\citep{he2016deep} as the backbone for all methods. We add a trainable two-layer MLP projection head that projects the feature from the penultimate layer to an embedding space $\mathbb{R}^{k}$ ($k = 1000$). We use the same data augmentation strategies as SimSiam~\citep{chen2021exploring,haochen2021provable}. We train our model $f(\cdot)$ for 1200 epochs by NCD Spectral Contrastive Loss defined in Eq.~\eqref{eq:nscl_def_nscl}.  We set $\alpha=0.0225$ and $\beta=2$.
We use SGD with momentum 0.95 as an optimizer with cosine annealing (lr=0.03), weight decay 5e-4, and batch size 512. 
We also conduct a sensitivity analysis of the hyper-parameters in Figure~\ref{fig:nscl_hyper}. The performance comparison for each hyper-parameter is reported by fixing other hyper-parameters. The results suggest that the novel class discovery performance of NSCL is  stable when $\alpha$, $\beta$ in a reasonable range and with different learning rates. 

\nocite{sun2017faster,sun2019adaptive}

\begin{figure}[htb]
    \centering
    \includegraphics[width=0.95\linewidth]{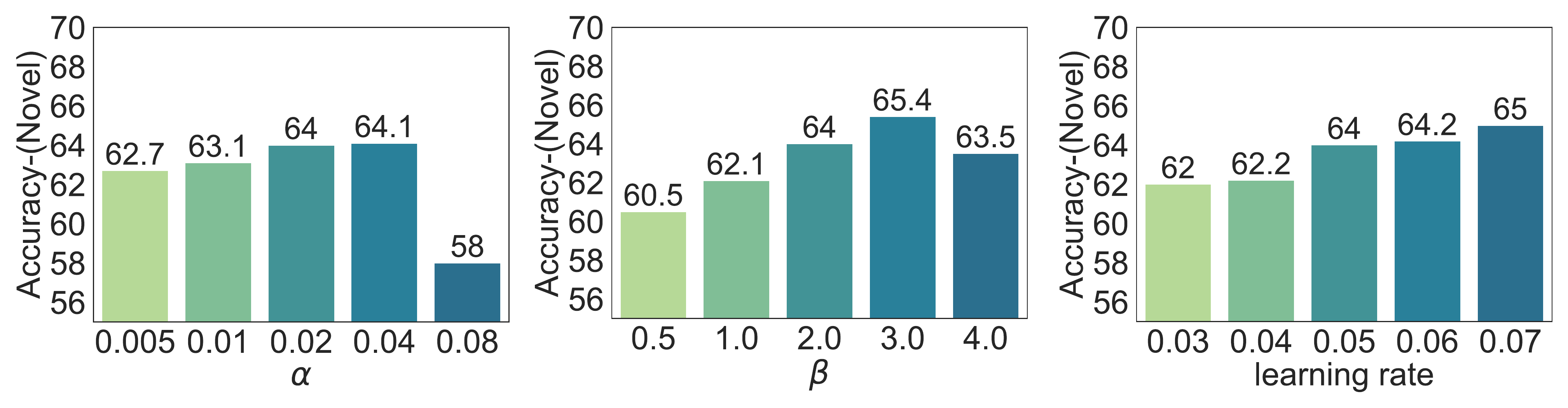}
    \caption[Sensitivity analysis of hyper-parameters.]{Sensitivity analysis of hyper-parameters $\alpha$, $\beta$, and learning rate. We use the training split of CIFAR-100-50/50, and report the novel class accuracy.}
    \vspace{-0.4cm}
    \label{fig:nscl_hyper}
\end{figure}

\subsubsection{Experimental Details of Toy Example}
\label{sec:nscl_sup_exp_vis}
\vspace{0.1cm} \noindent \textbf{Recap of set up}. In Section~\ref{sec:nscl_theory_setup} we consider a toy example that helps illustrate the core idea of our theoretical findings. Specifically, the example aims to cluster 3D objects of different colors and shapes, generated by a 3D rendering software~\citep{johnson2017clevr} with user-defined properties including colors, shape, size, position, etc. 

In what follows, we define two data configurations and corresponding graphs, where the labeled data is correlated  with the attribute of unlabeled data (\textbf{case 1}) vs. not (\textbf{case 2}). For both cases, we have an unlabeled dataset containing red/blue cubes/spheres as: 

$$\mathcal{X}_u \triangleq \{X_{\textcolor{red}{\cube{0.6}}, \textcolor{red}{c_1}}, X_{\textcolor{red}{\sphere{0.5}{red}}, \textcolor{red}{c_1}}, X_{\textcolor{blue}{\cube{0.6}}, \textcolor{blue}{c_2}}, X_{\textcolor{blue}{\sphere{0.5}{blue}}, \textcolor{blue}{c_2}}\}.$$

In the first case, we let the labeled data $\mathcal{X}_{l}^{\text{case 1}}$ be strongly correlated with the target class (red color) in unlabeled data:
$$\mathcal{X}_{l}^{\text{case 1}} \triangleq \{X_{\textcolor{red}{\cylinder{0.4}}, \textcolor{red}{c_1}}\} (\text{red cylinder}).$$  
In the second case, we use gray cylinders which have no overlap in either shape and color: 
$$\mathcal{X}_{l}^{\text{case 2}} \triangleq \{X_{\textcolor{gray}{\cylinder{0.4}}, \textcolor{gray}{c_3}}\}  (\text{gray cylinder}).$$ 
Putting it together, our entire training dataset is 
$\mathcal{X}^{\text{case 1}} = \mathcal{X}_{l}^{\text{case 1}} \cup \mathcal{X}_{u}$ or  $\mathcal{X}^{\text{case 2}} = \mathcal{X}_{l}^\text{case 2} \cup \mathcal{X}_{u}$.

\vspace{0.1cm} \noindent \textbf{Experimental details for Figure~\ref{fig:nscl_toy_vis}}. For training, we rendered 2500 samples for each type of data (4 types in $\mathcal{X}_u$ and 1 type in $\mathcal{X}_l$). In total, we have 12500 samples for both $\mathcal{X}^{\text{case 1}}$ and $\mathcal{X}^{\text{case 2}}$. For training, we use the same data augmentation strategy as in SimSiam~\citep{chen2021exploring}. We use ResNet18 and train the model for 40 epochs (sufficient for convergence) with a fixed learning rate of 0.005, using NSCL defined in Eq.~\eqref{eq:nscl_def_nscl}.  We set $\alpha=0.04$ and $\beta=1$, respectively. Our visualization is by PyTorch implementation of UMAP~\citep{umap}, with parameters $(\texttt{n\_neighbors=30,                min\_dist=1.5, spread=2, metric=euclidean})$.

\clearpage
\section{A Graph-theoretic Framework for Understanding ORL~}
\label{sec:sorl_supp}

\subsection{Technical Details of Spectral Open-world Representation Learning}
\label{sec:sorl_proof-SORL}
\begin{theorem}
\label{th:sorl_sup-orl-scl} (Recap of Theorem~\ref{th:sorl_orl-scl}) 
We define $\*f_x = \sqrt{w_x}f(x)$ for some function $f$. Recall $\eta_{u},\eta_{l}$ are two hyper-parameters defined in Eq.~\eqref{eq:sorl_def_wxx}. Then minimizing the loss function $\mathcal{L}_{\mathrm{mf}}(F, A)$ is equivalent to minimizing the following loss function for $f$, which we term \textbf{Spectral Open-world Representation Learning (SORL)}:
\begin{align}
\begin{split}
    \mathcal{L}_\text{SORL}(f) \triangleq & - 2\eta_{u} \mathcal{L}_1(f) 
- 2\eta_{l}  \mathcal{L}_2(f) \\ & +  \eta_{u}^2 \mathcal{L}_3(f) + 2\eta_{u} \eta_{l} \mathcal{L}_4(f) +  
\eta_{l}^2 \mathcal{L}_5(f),
\label{eq:sorl_sup_def_sorl}
\end{split}
\end{align}
where 
\begin{align*}
    \mathcal{L}_1(f) &= \sum_{i \in \mathcal{Y}_l}\underset{\substack{\bar{x}_{l} \sim \mathcal{P}_{{l_i}}, \bar{x}'_{l} \sim \mathcal{P}_{{l_i}},\\x \sim \mathcal{T}(\cdot|\bar{x}_{l}), x^{+} \sim \mathcal{T}(\cdot|\bar{x}'_l)}}{\mathbb{E}}\left[f(x)^{\top} {f}\left(x^{+}\right)\right] , \\
    \mathcal{L}_2(f) &= \underset{\substack{\bar{x}_{u} \sim \mathcal{P},\\x \sim \mathcal{T}(\cdot|\bar{x}_{u}), x^{+} \sim \mathcal{T}(\cdot|\bar{x}_u)}}{\mathbb{E}}
\left[f(x)^{\top} {f}\left(x^{+}\right)\right], \\
    \mathcal{L}_3(f) &= \sum_{i,j \in \mathcal{Y}_l}\underset{\substack{\bar{x}_l \sim \mathcal{P}_{{l_i}}, \bar{x}'_l \sim \mathcal{P}_{{l_j}},\\x \sim \mathcal{T}(\cdot|\bar{x}_l), x^{-} \sim \mathcal{T}(\cdot|\bar{x}'_l)}}{\mathbb{E}}
\left[\left(f(x)^{\top} {f}\left(x^{-}\right)\right)^2\right], \\
    \mathcal{L}_4(f) &= \sum_{i \in \mathcal{Y}_l}\underset{\substack{\bar{x}_l \sim \mathcal{P}_{{l_i}}, \bar{x}_u \sim \mathcal{P},\\x \sim \mathcal{T}(\cdot|\bar{x}_l), x^{-} \sim \mathcal{T}(\cdot|\bar{x}_u)}}{\mathbb{E}}
\left[\left(f(x)^{\top} {f}\left(x^{-}\right)\right)^2\right], \\
    \mathcal{L}_5(f) &= \underset{\substack{\bar{x}_u \sim \mathcal{P}, \bar{x}'_u \sim \mathcal{P},\\x \sim \mathcal{T}(\cdot|\bar{x}_u), x^{-} \sim \mathcal{T}(\cdot|\bar{x}'_u)}}{\mathbb{E}}
\left[\left(f(x)^{\top} {f}\left(x^{-}\right)\right)^2\right].
\end{align*}
\end{theorem}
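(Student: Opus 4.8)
The plan is to prove Theorem~\ref{th:sorl_sup-orl-scl} by expanding the matrix factorization loss $\mathcal{L}_{\mathrm{mf}}(F, A) = \|\Dot{A} - FF^\top\|_F^2$ in terms of its entries and then substituting the definition of the edge weights $w_{xx'}$ from Eq.~\eqref{eq:sorl_def_wxx_b}. This follows essentially the same strategy as the proof of Theorem~\ref{th:nscl_ncd-scl} (i.e.\ Theorem~\ref{th:nscl_sup-ncd-scl} in the appendix), the only difference being that here the unlabeled distribution is $\mathcal{P}$ (the marginal over all data, both known and novel) rather than $\mathcal{P}_u$, and the mixture coefficients are $\eta_u,\eta_l$ in place of $\alpha,\beta$. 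So the proof is largely a transcription with the substitution $\alpha \mapsto \eta_u$, $\beta \mapsto \eta_l$, $\mathcal{P}_u \mapsto \mathcal{P}$.

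First I would write $\mathcal{L}_{\mathrm{mf}}(F, A) = \sum_{x,x'\in\mathcal{X}} \left( \frac{w_{xx'}}{\sqrt{w_x w_{x'}}} - \*f_x^\top \*f_{x'} \right)^2$, using that the normalized adjacency matrix has entries $\Dot{A}_{xx'} = w_{xx'}/\sqrt{w_x w_{x'}}$. Expanding the square and dropping the term $\sum_{x,x'} \Dot{A}_{xx'}^2$ (which is a constant independent of $f$), the loss becomes, up to a constant,
\begin{align*}
\sum_{x,x'\in\mathcal{X}} \left( -2 w_{xx'} f(x)^\top f(x') + w_x w_{x'} \left( f(x)^\top f(x') \right)^2 \right),
\end{align*}
where I have used $\*f_x = \sqrt{w_x} f(x)$ so that $\frac{w_{xx'}}{\sqrt{w_x w_{x'}}} \*f_x^\top \*f_{x'} = w_{xx'} f(x)^\top f(x')$ and $\*f_x^\top \*f_{x'} \cdot \*f_x^\top \*f_{x'} = w_x w_{x'} (f(x)^\top f(x'))^2$. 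Next I would compute $w_x = \sum_{x'} w_{xx'}$ by noting $\sum_{x'} \mathcal{T}(x'|\bar{x}) = 1$ for any fixed $\bar{x}$, giving $w_x = \eta_u \mathbb{E}_{\bar{x}\sim\mathcal{P}} \mathcal{T}(x|\bar{x}) + \eta_l \sum_{i\in\mathcal{Y}_l} \mathbb{E}_{\bar{x}_l\sim\mathcal{P}_{l_i}} \mathcal{T}(x|\bar{x}_l)$.

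Then I would substitute $w_{xx'}$ into the linear term: the $\eta_u$ part of $w_{xx'}$ produces $-2\eta_u \sum_{x,x'} \mathbb{E}_{\bar{x}\sim\mathcal{P}}\mathcal{T}(x|\bar{x})\mathcal{T}(x'|\bar{x}) f(x)^\top f(x')$, which by moving the finite sums inside the expectation equals $-2\eta_u \mathbb{E}_{\bar{x}_u\sim\mathcal{P}}\mathbb{E}_{x\sim\mathcal{T}(\cdot|\bar{x}_u), x^+\sim\mathcal{T}(\cdot|\bar{x}_u)}[f(x)^\top f(x^+)] = -2\eta_u \mathcal{L}_2(f)$; similarly the $\eta_l$ part gives $-2\eta_l \mathcal{L}_1(f)$. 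For the quadratic term, substituting $w_x w_{x'}$ expands into three cross-products: $\eta_u^2$ times a term that becomes $\mathcal{L}_5(f)$, $2\eta_u\eta_l$ times $\mathcal{L}_4(f)$, and $\eta_l^2$ times $\mathcal{L}_3(f)$, each obtained by the same maneuver of absorbing the summations over $x,x'$ into the augmentation expectations. Collecting terms yields exactly $\mathcal{L}_\text{SORL}(f)$ as stated.

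I do not anticipate a genuine obstacle here — the argument is a routine bookkeeping computation — but the step requiring the most care is the quadratic term: one must correctly track that $w_x w_{x'}$ is a product of two sums, each a mixture of a $\mathcal{P}$-expectation and a $\mathcal{Y}_l$-indexed sum of $\mathcal{P}_{l_i}$-expectations, so the product has four pieces; two of them ($\eta_u\eta_l$ cross terms) coincide after relabeling $x\leftrightarrow x'$, which is why the coefficient is $2\eta_u\eta_l$ and there are five loss terms rather than six. I would also note explicitly, as in Theorem~\ref{th:nscl_sup-ncd-scl}, that the $\eta_l^2$ double sum $\sum_{i,j\in\mathcal{Y}_l}$ arises because the two endpoints may come from independently chosen known classes $i$ and $j$. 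The full details mirror Appendix~\ref{sec:nscl_proof-nscl} verbatim under the stated symbol substitutions, so I would reference that proof and carry out the substitution.
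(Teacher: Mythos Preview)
Your approach is exactly the paper's (Appendix~\ref{sec:sorl_proof-SORL}): expand $\|\Dot A - FF^\top\|_F^2$ entrywise, drop the constant, use $\*f_x = \sqrt{w_x}\,f(x)$ to reduce to $\sum_{x,x'}\bigl(-2w_{xx'}f(x)^\top f(x') + w_xw_{x'}(f(x)^\top f(x'))^2\bigr)$, then substitute the mixture forms of $w_{xx'}$ and $w_x$ and absorb the sums over $x,x'$ into the augmentation expectations to obtain the five terms.

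One bookkeeping slip to fix: your stated substitution $\alpha\mapsto\eta_u$, $\beta\mapsto\eta_l$ is what actually matches the theorem's coefficients (since in NSCL $\alpha$ weights the labeled term and the theorem here has $-2\eta_u\mathcal{L}_1$), but in your explicit computation of $w_x$ and the linear term you instead follow Eq.~\eqref{eq:sorl_def_wxx_b}, where $\eta_u$ multiplies the \emph{unlabeled} part $w^{(u)}$; carried through, that convention yields $-2\eta_u\mathcal{L}_2 - 2\eta_l\mathcal{L}_1 + \eta_u^2\mathcal{L}_5 + 2\eta_u\eta_l\mathcal{L}_4 + \eta_l^2\mathcal{L}_3$, i.e.\ with $\eta_u\leftrightarrow\eta_l$ swapped relative to the statement. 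The paper's appendix proof silently writes $w_{xx'} = \eta_u\sum_{i\in\mathcal{Y}_l}[\cdots] + \eta_l\,\mathbb{E}_{\bar x_u\sim\mathcal{P}}[\cdots]$, opposite to Eq.~\eqref{eq:sorl_def_wxx_b}, so this is a notational inconsistency in the paper rather than an error in your method --- just pick one convention and use it consistently so the final expression lines up with the statement.
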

\begin{proof} We can expand $\mathcal{L}_{\mathrm{mf}}(F, A)$ and obtain
\begin{align*}
\mathcal{L}_{\mathrm{mf}}(F, A) = &\sum_{x, x^{\prime} \in \mathcal{X}}\left(\frac{w_{x x^{\prime}}}{\sqrt{w_x w_{x^{\prime}}}}-\*f_x^{\top} \*f_{x^{\prime}}\right)^2 \\
= & \text{const} + \sum_{x, x^{\prime} \in \mathcal{X}}\left(-2 w_{x x^{\prime}} f(x)^{\top} {f}\left(x^{\prime}\right)+w_x w_{x^{\prime}}\left(f(x)^{\top}{f}\left(x^{\prime}\right)\right)^2\right),
\end{align*} 
where $\*f_x = \sqrt{w_x}f(x)$ is a re-scaled version of $f(x)$.
At a high level, we follow the proof in ~\citep{haochen2021provable}, while the specific form of loss varies with the different definitions of positive/negative pairs. The form of $\mathcal{L}_\text{SORL}(f)$ is derived from plugging $w_{xx'}$  and $w_x$. 

Recall that $w_{xx'}$ is defined by
\begin{align*}
w_{x x^{\prime}} &= \eta_{u} \sum_{i \in \mathcal{Y}_l}\mathbb{E}_{\bar{x}_{l} \sim {\mathcal{P}_{l_i}}} \mathbb{E}_{\bar{x}'_{l} \sim {\mathcal{P}_{l_i}}} \mathcal{T}(x | \bar{x}_{l}) \mathcal{T}\left(x' | \bar{x}'_{l}\right)+ \eta_{l} \mathbb{E}_{\bar{x}_{u} \sim {\mathcal{P}}} \mathcal{T}(x| \bar{x}_{u}) \mathcal{T}\left(x'| \bar{x}_{u}\right) ,
\end{align*}
and $w_{x}$ is given by 
\begin{align*}
w_{x } &= \sum_{x^{\prime}}w_{xx'} \\ &=\eta_{u} \sum_{i \in \mathcal{Y}_l}\mathbb{E}_{\bar{x}_{l} \sim {\mathcal{P}_{l_i}}} \mathbb{E}_{\bar{x}'_{l} \sim {\mathcal{P}_{l_i}}} \mathcal{T}(x | \bar{x}_{l}) \sum_{x^{\prime}} \mathcal{T}\left(x' | \bar{x}'_{l}\right)+ \eta_{l} \mathbb{E}_{\bar{x}_{u} \sim {\mathcal{P}}} \mathcal{T}(x| \bar{x}_{u}) \sum_{x^{\prime}} \mathcal{T}\left(x' | \bar{x}_{u}\right) \\
&= \eta_{u} \sum_{i \in \mathcal{Y}_l}\mathbb{E}_{\bar{x}_{l} \sim {\mathcal{P}_{l_i}}} \mathcal{T}(x | \bar{x}_{l}) + \eta_{l} \mathbb{E}_{\bar{x}_{u} \sim {\mathcal{P}}} \mathcal{T}(x| \bar{x}_{u}). 
\end{align*}

Plugging in $w_{x x^{\prime}}$ we have, 

\begin{align*}
    &-2 \sum_{x, x^{\prime} \in \mathcal{X}} w_{x x^{\prime}} f(x)^{\top} {f}\left(x^{\prime}\right) \\
    = & -2 \sum_{x, x^{+} \in \mathcal{X}} w_{x x^{+}} f(x)^{\top} {f}\left(x^{+}\right) 
    \\ = & -2\eta_{u}  \sum_{i \in \mathcal{Y}_l}\mathbb{E}_{\bar{x}_{l} \sim {\mathcal{P}_{l_i}}} \mathbb{E}_{\bar{x}'_{l} \sim {\mathcal{P}_{l_i}}} \sum_{x, x^{\prime} \in \mathcal{X}} \mathcal{T}(x | \bar{x}_{l}) \mathcal{T}\left(x' | \bar{x}'_{l}\right) f(x)^{\top} {f}\left(x^{\prime}\right) \\
    & -2 \eta_{l} \mathbb{E}_{\bar{x}_{u} \sim {\mathcal{P}}} \sum_{x, x^{\prime}} \mathcal{T}(x| \bar{x}_{u}) \mathcal{T}\left(x'| \bar{x}_{u}\right) f(x)^{\top} {f}\left(x^{\prime}\right) 
    \\ = & -2\eta_{u}  \sum_{i \in \mathcal{Y}_l}\underset{\substack{\bar{x}_{l} \sim \mathcal{P}_{{l_i}}, \bar{x}'_{l} \sim \mathcal{P}_{{l_i}},\\x \sim \mathcal{T}(\cdot|\bar{x}_{l}), x^{+} \sim \mathcal{T}(\cdot|\bar{x}'_l)}}{\mathbb{E}}  \left[f(x)^{\top} {f}\left(x^{+}\right)\right] \\
    & - 2\eta_{l} 
    \underset{\substack{\bar{x}_{u} \sim \mathcal{P},\\x \sim \mathcal{T}(\cdot|\bar{x}_{u}), x^{+} \sim \mathcal{T}(\cdot|\bar{x}_u)}}{\mathbb{E}}
\left[f(x)^{\top} {f}\left(x^{+}\right)\right] \\ =& - 2\eta_{u}  \mathcal{L}_1(f) 
- 2\eta_{l}  \mathcal{L}_2(f).
\end{align*}

Plugging $w_{x}$ and $w_{x'}$ we have, 

\begin{align*}
    &\sum_{x, x^{\prime} \in \mathcal{X}}w_x w_{x^{\prime}}\left(f(x)^{\top}{f}\left(x^{\prime}\right)\right)^2 \\
    = &  \sum_{x, x^{-} \in \mathcal{X}}w_x w_{x^{-}}\left(f(x)^{\top}{f}\left(x^{-}\right)\right)^2 \\
    = & \sum_{x, x^{\prime} \in \mathcal{X}} \left( \eta_{u} \sum_{i \in \mathcal{Y}_l}\mathbb{E}_{\bar{x}_{l} \sim {\mathcal{P}_{l_i}}} \mathcal{T}(x | \bar{x}_{l}) + \eta_{l} \mathbb{E}_{\bar{x}_{u} \sim {\mathcal{P}}} \mathcal{T}(x| \bar{x}_{u}) \right)  \\&~~~~~~~~~~~\cdot \left(\eta_{u}  \sum_{j \in \mathcal{Y}_l}\mathbb{E}_{\bar{x}'_{l} \sim {\mathcal{P}_{l_j}}} \mathcal{T}(x^{-} | \bar{x}'_{l}) + \eta_{l} \mathbb{E}_{\bar{x}'_{u} \sim {\mathcal{P}}} \mathcal{T}(x^{-}| \bar{x}'_{u}) \right) \left(f(x)^{\top}{f}\left(x^{-}\right)\right)^2 \\
    = & \eta_{u} ^ 2 \sum_{x, x^{-} \in \mathcal{X}}  \sum_{i \in \mathcal{Y}_l}\mathbb{E}_{\bar{x}_{l} \sim {\mathcal{P}_{l_i}}} \mathcal{T}(x | \bar{x}_{l}) \sum_{j \in \mathcal{Y}_l}\mathbb{E}_{\bar{x}'_{l} \sim {\mathcal{P}_{l_j}}} \mathcal{T}(x^{-} | \bar{x}'_{l})\left(f(x)^{\top}{f}\left(x^{-}\right)\right)^2 \\
    &+ 2\eta_{u} \eta_{l} \sum_{x, x^{-} \in \mathcal{X}} \sum_{i \in \mathcal{Y}_l}\mathbb{E}_{\bar{x}_{l} \sim {\mathcal{P}_{l_i}}} \mathcal{T}(x | \bar{x}_{l})  \mathbb{E}_{\bar{x}_{u} \sim {\mathcal{P}}} \mathcal{T}(x^{-}| \bar{x}_{u}) \left(f(x)^{\top}{f}\left(x^{-}\right)\right)^2  \\
    &+ \eta_{l}^2 \sum_{x, x^{-} \in \mathcal{X}}  \mathbb{E}_{\bar{x}_{u} \sim {\mathcal{P}}} \mathcal{T}(x| \bar{x}_{u}) \mathbb{E}_{\bar{x}'_{u} \sim {\mathcal{P}}} \mathcal{T}(x^{-}| \bar{x}'_{u}) \left(f(x)^{\top}{f}\left(x^{-}\right)\right)^2 \\
    = & \eta_{u}^2 \sum_{i \in \mathcal{Y}_l}\sum_{j \in \mathcal{Y}_l}\underset{\substack{\bar{x}_l \sim \mathcal{P}_{{l_i}}, \bar{x}'_l \sim \mathcal{P}_{{l_j}},\\x \sim \mathcal{T}(\cdot|\bar{x}_l), x^{-} \sim \mathcal{T}(\cdot|\bar{x}'_l)}}{\mathbb{E}}
\left[\left(f(x)^{\top} {f}\left(x^{-}\right)\right)^2\right] \\
&+ 2\eta_{u}\eta_{l}
    \sum_{i \in \mathcal{Y}_l}\underset{\substack{\bar{x}_l \sim \mathcal{P}_{{l_i}}, \bar{x}_u \sim \mathcal{P},\\x \sim \mathcal{T}(\cdot|\bar{x}_l), x^{-} \sim \mathcal{T}(\cdot|\bar{x}_u)}}{\mathbb{E}}
\left[\left(f(x)^{\top} {f}\left(x^{-}\right)\right)^2\right] \\ &+ \eta_{l}^2
     \underset{\substack{\bar{x}_u \sim \mathcal{P}, \bar{x}'_u \sim \mathcal{P},\\x \sim \mathcal{T}(\cdot|\bar{x}_u), x^{-} \sim \mathcal{T}(\cdot|\bar{x}'_u)}}{\mathbb{E}}
\left[\left(f(x)^{\top} {f}\left(x^{-}\right)\right)^2\right] 
\\ = & \eta_{u}^2 \mathcal{L}_3(f) + 2\eta_{u}\eta_{l} \mathcal{L}_4(f) + \eta_{l}^2\mathcal{L}_5(f).
\end{align*}
\end{proof}

\newpage
\subsection{Technical Details for Toy Example}
\label{sec:sorl_sup_toy}

\subsubsection{Calculation Details for Figure~\ref{fig:sorl_toy_setting}.}

We first recap the toy example, which illustrates the core idea of our theoretical findings. Specifically, the example aims to distinguish 3D objects with different shapes, as shown in Figure~\ref{fig:sorl_toy_setting}. These images are generated by a 3D rendering software~\citep{johnson2017clevr} with user-defined properties including colors, shape, size, position, etc. 

\noindent \textbf{Data design.} Suppose the training samples come from three types, $\mathcal{X}_{\cube{1}}$, $\mathcal{X}_{\sphere{0.7}{gray}}$, $\mathcal{X}_{\cylinder{0.6}}$. Let $\mathcal{X}_{\cube{1}}$ be the sample space with \textbf{known} class, and $\mathcal{X}_{\sphere{0.7}{gray}}, \mathcal{X}_{\cylinder{0.6}}$ be the sample space with \textbf{novel} classes. Further, the two novel classes are constructed to have different relationships with the known class. 
Specifically, we construct the toy dataset with 6 elements as shown in Figure~\ref{fig:sorl_sup_toy}(a).

\textbf{Augmentation graph.} Based on the data design, we formally define the augmentation graph, which encodes the probability of augmenting a source image $\bar{x}$ to the augmented view $x$:
\begin{align}
    \mathcal{T}\left(x \mid \bar{x} \right)=
    \left\{\begin{array}{ll}
    \tau_{1} & \text { if }  \text{color}(x) = \text{color}(\bar{x}), \text{shape}(x) = \text{shape}(\bar{x}); \\
    \tau_{c} & \text { if }  \text{color}(x) = \text{color}(\bar{x}), \text{shape}(x) \neq \text{shape}(\bar{x}); \\
    \tau_{s} & \text { if }  \text{color}(x) \neq \text{color}(\bar{x}), \text{shape}(x) = \text{shape}(\bar{x}); \\
    \tau_{0} & \text { if }  \text{color}(x) \neq \text{color}(\bar{x}), \text{shape}(x) \neq \text{shape}(\bar{x}). \\
    \end{array}\right.
    \label{eq:sorl_sup_def_edge}
\end{align}

According to the definition above,  the corresponding augmentation matrix $T$ with each element formed by $\mathcal{T}(\cdot \mid \cdot)$ is given in Figure~\ref{fig:sorl_sup_toy}(b). We proceed by showing the details to derive $A^{(u)}$ and $A$ using $T$. 
\begin{figure*}[htb]
    \centering
\includegraphics[width=0.9\linewidth]{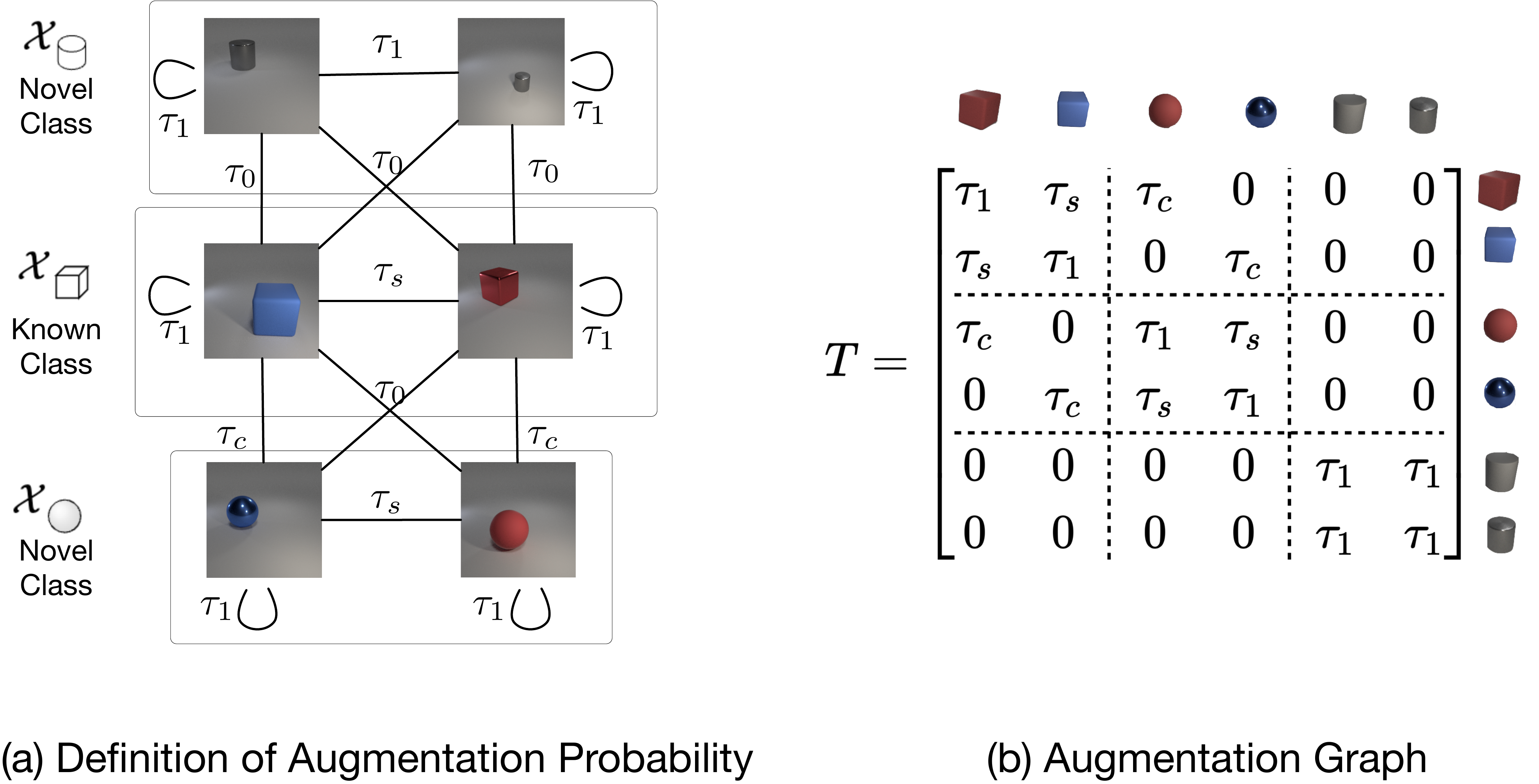}
    \caption[An illustrative example for theoretical analysis (appendix).]{An illustrative example for theoretical analysis. We consider a 6-node graph with one known class (cube) and two novel classes (sphere, cylinder). (a) The augmentation probabilities between nodes are defined by their color and shape in Eq.~\eqref{eq:sorl_sup_def_edge}. (b) The augmentation matrices $T$ derived by Eq.~\eqref{eq:sorl_sup_def_edge} where we let $\tau_0=0$.  }
    \vspace{-0.2cm}
    \label{fig:sorl_sup_toy}
\end{figure*}

\textbf{Derivation details for $A^{(u)}$ and $A$.} 
Recall that each element of $A^{(u)}$ is formed by 
$
w^{(u)}_{x x^{\prime}} = \mathbb{E}_{\bar{x} \sim {\mathcal{P}}}  \mathcal{T}(x| \bar{x}) \mathcal{T}\left(x'| \bar{x}\right).
$
In this toy example, one can then see that $A^{(u)} = \frac{1}{6} T T^{\top}$ since augmentation matrix $T$ is defined that each element $T_{x\bar{x}}=\mathcal{T}(x| \bar{x})$. Note that $T$ is explicitly given in Figure~\ref{fig:sorl_sup_toy}(b) and then if we let $\eta_u = 6$, we have the close-from: 

$$\eta_u A^{(u)}= T^2 = \left[\begin{array}{cccccc}\tau_1^2+\tau_s^2+\tau_c^2 & 2 \tau_1\tau_s & 2 \tau_1\tau_c & 2 \tau_c \tau_s & 0 & 0 \\ 2 \tau_1\tau_s & \tau_1^2+\tau_s^2+\tau_c^2 & 2 \tau_c \tau_s & 2 \tau_1\tau_c & 0 & 0 \\  2 \tau_1\tau_c & 2 \tau_c \tau_s & \tau_1^2+\tau_s^2+\tau_c^2 & 2 \tau_1\tau_s & 0 & 0 \\ 2 \tau_c \tau_s & 2 \tau_1\tau_c & 2 \tau_1\tau_s & \tau_1^2+\tau_s^2+\tau_c^2 & 0 & 0 \\  0 & 0 & 0 & 0 & 2 \tau_1^2 & 2 \tau_1^2 \\ 0 & 0 & 0 & 0 & 2 \tau_1^2 & 2 \tau_1^2\end{array}\right].$$

We then derive the second part $A^{(l)}$ whose element is given by:
$$w^{(l)}_{x x^{\prime}} \triangleq \sum_{i \in \mathcal{Y}_l}\mathbb{E}_{\bar{x}_{l} \sim {\mathcal{P}_{l_i}}} \mathbb{E}_{\bar{x}'_{l} \sim {\mathcal{P}_{l_i}}} \mathcal{T}(x | \bar{x}_{l}) \mathcal{T}\left(x' | \bar{x}'_{l}\right).$$ 
Such a form can be simplified in Section~\ref{sec:sorl_theory} by defining $\mathfrak{l} \in \mathbb{R}^{N}, (\mathfrak{l})_x = \mathbb{E}_{\bar{x}_{l} \sim {\mathcal{P}_{l_1}}} \mathcal{T}(x | \bar{x}_{l})$ and by letting $|\mathcal{Y}_l|=1$. 
In this toy example, the known class only has two elements, so $\mathfrak{l} = \frac{1}{2}(T_{:, 1}+T_{:, 2})$ (average of $T$'s 1st \& 2nd column), we then have:
$$
A^{(l)} = \mathfrak{l} \mathfrak{l}^{\top} = \frac{1}{4}\left[\begin{array}{cccccc}\left(\tau_1+\tau_s\right)^2 & \left(\tau_1+\tau_s\right)^2 & \tau_c\left(\tau_1+\tau_s\right) & \tau_c\left(\tau_1+\tau_s\right) & 0 & 0 \\ \left(\tau_1+\tau_s\right)^2 & \left(\tau_1+\tau_s\right)^2 & \tau_c\left(\tau_1+\tau_s\right) & \tau_c\left(\tau_1+\tau_s\right) & 0 & 0 \\  \tau_c\left(\tau_1+\tau_s\right) & \tau_c\left(\tau_1+\tau_s\right) & \tau_c^2 & \tau_c^2 & 0 & 0 \\ \tau_c\left(\tau_1+\tau_s\right) & \tau_c\left(\tau_1+\tau_s\right) & \tau_c^2 & \tau_c^2 & 0 & 0 \\  0 & 0 & 0 & 0 & 0 & 0 \\ 0 & 0 & 0 & 0 & 0 & 0\end{array}\right].
$$
Finally, if we let $\eta_{l} = 4$ and $A =  \eta_{u} A^{(u)} + \eta_{l} A^{(l)}$, we have the full results in Figure~\ref{fig:sorl_toy_setting}.

\subsubsection{Calculation Details for Figure~\ref{fig:sorl_toy_result}.}
In this section, we present the analysis of eigenvectors and their orders for toy examples shown in Figure~\ref{fig:sorl_toy_setting}. In Theorem~\ref{th:sorl_sup_toy_label} we present the spectral analysis for the adjacency matrix with additional label information while in Theorem~\ref{th:sorl_sup_toy_base}, we show the spectral 
 analysis for the unlabeled case. 

\begin{theorem}\label{th:sorl_sup_toy_label} Let 
$$ \eta_u A^{(u)}  = \left[\begin{array}{cccccc}
\tau_1^2+\tau_s^2+\tau_c^2 & 2\tau_1\tau_s & 2\tau_1\tau_c & 2\tau_c\tau_s & 0 & 0\\
    2\tau_1\tau_s & \tau_1^2+\tau_s^2+\tau_c^2 & 2\tau_c\tau_s & 2\tau_1\tau_c & 0 & 0\\
    2\tau_1\tau_c & 2\tau_c\tau_s & \tau_1^2+\tau_s^2+\tau_c^2 & 2\tau_1\tau_s & 0 & 0\\
    2\tau_c\tau_s & 2\tau_1\tau_c & 2\tau_1\tau_s & \tau_1^2+\tau_s^2+\tau_c^2 & 0 & 0\\
    0 & 0 & 0 & 0 & 2\tau_1^2 & 2\tau_1^2\\
    0 & 0 & 0 & 0 & 2\tau_1^2 & 2\tau_1^2\\
\end{array}\right],$$ 

$$  A  = \eta_u A^{(u)} + \left[\begin{array}{cccccc}
    (\tau_1+\tau_s)^2 & (\tau_1+\tau_s)^2 & \tau_c(\tau_1+\tau_s) & \tau_c(\tau_1+\tau_s) & 0 & 0\\
    (\tau_1+\tau_s)^2 & (\tau_1+\tau_s)^2 & \tau_c(\tau_1+\tau_s) & \tau_c(\tau_1+\tau_s) & 0 & 0\\
    \tau_c(\tau_1+\tau_s) & \tau_c(\tau_1+\tau_s) & \tau_c^2 & \tau_c^2 & 0 & 0\\
    \tau_c(\tau_1+\tau_s) & \tau_c(\tau_1+\tau_s) & \tau_c^2 & \tau_c^2 & 0 & 0\\
    0 & 0 & 0 & 0 & 0 & 0\\
    0 & 0 & 0 & 0 & 0 & 0\\
\end{array}\right], $$ 
and we assume that $1 \gg {\tau_{c} \over \tau_1} > {\tau_{s} \over \tau_1} > 0$, $ {4 \over 9} \tau_c \le \tau_s \le \tau_c$ and $\tau_1 + \tau_c + \tau_s = 1$. 

Let $\lambda_1, \lambda_2, \lambda_3$ and $v_1,v_2,v_3$ be the largest three eigenvalues and their corresponding eigenvectors of ${D^{{{-{1\over 2}}}}} {A} {D^{{{-{1\over 2}}}}}$, which is the normalized adjacency matrix of $A$. Then the concrete form of $\lambda_1, \lambda_2, \lambda_3$ and $v_1,v_2,v_3$ can be approximately given by: 
\begin{align*}
    &\hat{\lambda}_1 = 1, ~~\hat{\lambda}_2 = 1, ~~\hat{\lambda}_3 = 1-{16\over 3} {\tau_c \over \tau_1}, \\
    & \hat{v}_1 = [0,0,0,0,1,1], \\
    & \hat{v}_2 = [\sqrt{3},\sqrt{3},1,1,0,0], \\
    &\hat{v}_3 = [1,1,-\sqrt{3},-\sqrt{3},0,0].
\end{align*}
Note that the approximation gap can be tightly bounded. Specifically, for $i \in \{1,2,3\}$, we have $|\lambda_i - \hat{\lambda}_i| \le O(({\tau_c \over \tau_1})^2 )$ and $\|\sin(U, \hat{U})\footnote{The $\sin$ operation measures the distance of two matrices with orthonormal columns, which is usually used in the subspace distance. See more in \url{https://trungvietvu.github.io/notes/2020/DavisKahan}.}\|_F \le O({\tau_c \over \tau_1})$, where  $U = [v_1,v_2,v_3], \hat{U} = [\hat{v}_1, \hat{v}_2, \hat{v}_3]$.
\end{theorem}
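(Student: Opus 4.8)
The plan is to exploit the block structure of $A$ together with a perturbation argument, treating $\tfrac{\tau_c}{\tau_1}$ (and hence $\tfrac{\tau_s}{\tau_1}$) as a small parameter. First I would split $A$ into a "leading" part $A_0$ and a "perturbation" part $E$, where $A_0$ keeps only the $O(\tau_1^2)$ terms in each block: within the cube block $A_0$ contributes $(\tau_1+\tau_s)^2$ on the diagonal-ish $2\times 2$ sub-block plus $\tau_1^2$ from $\eta_u A^{(u)}$, within the sphere block it contributes the $\tau_1^2$-order diagonal from $\eta_u A^{(u)}$, the cube–sphere cross terms carry $2\tau_1\tau_c$ and $\tau_c(\tau_1+\tau_s)$ which are $O(\tau_1\tau_c)$, and the cylinder block is exactly $2\tau_1^2\,\mathbf 1_{2\times2}$ already. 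The degree matrix $D$ is computed from the row sums $w_x$; because each row sum is dominated by an $O(\tau_1^2)$ term, $D^{-1/2}$ is itself a smooth function of $\tfrac{\tau_c}{\tau_1}$ and can be expanded. So the first concrete step is to write $\dot A = D^{-1/2}AD^{-1/2} = \dot A_0 + \dot E$ with $\|\dot E\|_2 = O(\tfrac{\tau_c}{\tau_1})$ after normalization, and to diagonalize $\dot A_0$ exactly.

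Second, I would diagonalize $\dot A_0$. Its cylinder block is rank one with eigenvector $\propto(0,0,0,0,1,1)$ and eigenvalue exactly $1$ after normalization — this gives $\hat\lambda_1=1,\hat v_1$. The remaining $4\times4$ piece (cube $\oplus$ sphere) at leading order decouples into the known-class directions and the novel-sphere directions; because the cross-block is $O(\tau_1\tau_c)$ it is negligible at this order, so the sphere block contributes an eigenvector $\propto(0,0,1,1,0,0)$ and the cube block contributes $\propto(1,1,0,0,0,0)$, each with eigenvalue $1$ after normalization (since these are the "within-class averaging" directions of stochastic-like blocks). Thus at zeroth order there is a triple eigenvalue $1$ spanned by $\{(0,0,0,0,1,1),(1,1,0,0,0,0),(0,0,1,1,0,0)\}$. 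The nontrivial work is the \emph{degenerate} first-order perturbation: I would restrict $\dot E$ to this three-dimensional eigenspace and diagonalize the resulting $3\times3$ matrix. The cylinder direction stays decoupled (no $O(\tau_c)$ coupling to the cube/sphere blocks, since those cross entries are zero), so it remains at $\hat\lambda_1 = 1$; within the two-dimensional $\{(1,1,0,0,0,0),(0,0,1,1,0,0)\}$ span, the cube–sphere coupling enters at order $\tfrac{\tau_c}{\tau_1}$ through the $2\tau_1\tau_c$ and $\tau_c(\tau_1+\tau_s)$ cross entries (normalized), and one checks that the symmetric $2\times2$ effective matrix has eigenvalues $1$ and $1 - c\,\tfrac{\tau_c}{\tau_1}$ with eigenvectors proportional to $(\sqrt3,\sqrt3,1,1)$ and $(1,1,-\sqrt3,-\sqrt3)$ on the cube/sphere coordinates. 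The constant $c = \tfrac{16}{3}$ should fall out of the explicit entries once $D^{-1/2}$ is expanded; the assumptions $\tfrac49\tau_c\le\tau_s\le\tau_c$ and $\tau_1+\tau_c+\tau_s=1$ are precisely what is needed to (i) make $\hat\lambda_3 = 1 - \tfrac{16}{3}\tfrac{\tau_c}{\tau_1}$ the \emph{third} largest eigenvalue (i.e. it beats whatever the remaining three smaller eigenvalues are, which are $O(\tfrac{\tau_s}{\tau_1})$ or smaller away from $1$) and (ii) keep the relevant ratios positive.

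Third, I would convert the heuristic perturbation expansion into rigorous bounds. For the eigenvalue gaps I would invoke Weyl's inequality: $|\lambda_i - \lambda_i(\dot A_0)| \le \|\dot E\|_2 = O(\tfrac{\tau_c}{\tau_1})$, but this only gives $O(\tfrac{\tau_c}{\tau_1})$ accuracy, whereas the claim is $|\lambda_i - \hat\lambda_i| \le O((\tfrac{\tau_c}{\tau_1})^2)$. To get the squared rate I would instead do second-order perturbation theory on the block, or — cleaner — observe that $\hat\lambda_i$ are themselves the first-order-corrected eigenvalues, so the residual is genuinely second order; this can be made rigorous by a Rayleigh–quotient argument (plug $\hat v_i$ into $\dot A$, bound $\|\dot A\hat v_i - \hat\lambda_i \hat v_i\|$ by $O((\tfrac{\tau_c}{\tau_1})^2)$ after the first-order correction is absorbed, then apply a residual bound for eigenvalues). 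For the eigenvector bound $\|\sin(U,\hat U)\|_F \le O(\tfrac{\tau_c}{\tau_1})$ I would use the Davis–Kahan $\sin\Theta$ theorem: the eigenspace $\hat U$ is separated from the rest of the spectrum of $\dot A_0$ by a constant gap (the other three eigenvalues sit at distance $\Omega(1)$ or at least $\Omega(\tfrac{\tau_s}{\tau_1})$ below $1$, and within the triple the splitting is $\Omega(\tfrac{\tau_c}{\tau_1})$, which matches the perturbation size so Davis–Kahan still applies to the combined $3$-dim space), so $\|\sin(U,\hat U)\|_F \le \|\dot E\|_2 / \mathrm{gap} = O(\tfrac{\tau_c}{\tau_1})$.

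The main obstacle I anticipate is bookkeeping the normalization $D^{-1/2}$ carefully enough to (a) extract the exact constant $\tfrac{16}{3}$ and (b) verify the ordering of eigenvalues — i.e. that $1 - \tfrac{16}{3}\tfrac{\tau_c}{\tau_1}$ really is the third-largest and not overtaken by an eigenvalue coming from the $O(\tfrac{\tau_s}{\tau_1})$ fluctuations in the sphere block or the $\tau_s$-vs-$\tau_c$ competition inside $\eta_u A^{(u)}$. This is exactly where the side conditions $\tfrac49\tau_c\le\tau_s\le\tau_c$ and $\tau_1+\tau_c+\tau_s=1$ must be used, and pinning down which of the lower eigenvalues is the "runner-up" is the delicate combinatorial part of the argument; everything else is a routine (if tedious) degenerate perturbation computation plus standard Weyl / Davis–Kahan invocations.
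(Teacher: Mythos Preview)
Your overall architecture --- build an approximate normalized adjacency matrix, identify its top eigenpairs, then close with Weyl and Davis--Kahan --- matches the paper. The difference is in \emph{where} you place the approximation. You take a \emph{zeroth-order} $\dot A_0$ (only the $O(\tau_1^2)$ entries), get a degenerate top eigenvalue, and then run degenerate first-order perturbation theory. The paper instead keeps all terms linear in $\tau_c/\tau_1$ and $\tau_s/\tau_1$ in the approximate matrix $\widehat{D^{-1/2}}\widehat A\,\widehat{D^{-1/2}}$, then diagonalizes \emph{that} matrix exactly (four eigenvectors by symmetry, the last two by solving a quadratic, yielding the explicit $\hat\lambda_e,\hat\lambda_f$). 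Because $\hat v_i$ are exact eigenvectors of the first-order matrix and $\|D^{-1/2}AD^{-1/2}-\widehat{D^{-1/2}}\widehat A\,\widehat{D^{-1/2}}\|_2=O((\tau_c/\tau_1)^2)$, Weyl gives $|\lambda_i-\hat\lambda_i|\le O((\tau_c/\tau_1)^2)$ in one line, and Davis--Kahan with the gap $\hat\lambda_c-\hat\lambda_e\ge\Omega(\tau_c/\tau_1)$ gives the eigenvector bound. The ordering check $\hat\lambda_c\ge\hat\lambda_e$ is done by algebra on the closed-form $\hat\lambda_e$, and this is exactly where the condition $\tfrac{4}{9}\tau_c\le\tau_s$ enters.

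Your route has two concrete problems. First, the degenerate eigenspace of $\dot A_0$ at eigenvalue $1$ is \emph{four}-dimensional, not three: at zeroth order the sphere $2\times2$ block normalizes to $I_2$ (its off-diagonal is $O(\tau_1\tau_s)$), so both $(0,0,1,1,0,0)$ \emph{and} $(0,0,1,-1,0,0)$ sit at eigenvalue $1$. The missing direction $(0,0,1,-1,0,0)$ is precisely the one whose first-order eigenvalue $1-4(\tau_s+\tau_c)/\tau_1$ competes with $\hat\lambda_3=1-\tfrac{16}{3}\tau_c/\tau_1$ for the third slot, so you cannot ignore it. Second, your Rayleigh-quotient argument for the $O((\tau_c/\tau_1)^2)$ eigenvalue bound does not go through as written: if $\hat v_i$ comes from degenerate perturbation within the top subspace $P$, then $\dot A\hat v_i-\hat\lambda_i\hat v_i = P_\perp\dot E\hat v_i$, which is $O(\tau_c/\tau_1)$, not $O((\tau_c/\tau_1)^2)$, because of coupling to states outside $P$. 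A residual bound then only yields $O(\tau_c/\tau_1)$ accuracy. The paper's device of absorbing the first-order terms into the approximating matrix is exactly what buys you the extra order for free and avoids the degeneracy bookkeeping altogether.
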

\begin{proof}
By $\tau_1 + \tau_c + \tau_s = 1$ and $1 \gg {\tau_{c} \over \tau_1} > {\tau_{s} \over \tau_1} > 0$, 
we define the following equation which approximates the corresponding terms up to error $O(({\tau_c \over \tau_1})^2)$: 
$$ A \approx \widehat{A}  =  \tau_1^2 \left[\begin{array}{cccccc}
    2+2{\tau_s \over \tau_1} & 1+4{\tau_s \over \tau_1} & 3{\tau_c \over \tau_1} & {\tau_c \over \tau_1} & 0 & 0\\
    1+4{\tau_s \over \tau_1} & 2+2{\tau_s \over \tau_1} & {\tau_c \over \tau_1} & 3{\tau_c \over \tau_1} & 0 & 0\\
    3{\tau_c \over \tau_1} & {\tau_c \over \tau_1} & 1 & 2{\tau_s \over \tau_1} & 0 & 0\\
    {\tau_c \over \tau_1} & 3{\tau_c \over \tau_1} & 2{\tau_s \over \tau_1} & 1 & 0 & 0\\
    0 & 0 & 0 & 0 & 2 & 2\\
    0 & 0 & 0 & 0 & 2 & 2\\
\end{array}\right].$$ 

\begin{equation*}
\resizebox{1.1\linewidth}{!}{$ D \approx \widehat{D}  =  \tau_1^2  diag\left(\left[3\left(1+2{\tau_s \over \tau_1}+{4\over 3}{\tau_c \over \tau_1} \right), 3\left(1+2{\tau_s \over \tau_1}+{4\over 3}{\tau_c \over \tau_1} \right), 1+ 2{\tau_s \over \tau_1}+4{\tau_c \over \tau_1}, 1+ 2{\tau_s \over \tau_1}+4{\tau_c \over \tau_1}, 4,  4 \right]\right). $}
\end{equation*} 
\begin{equation*}
\resizebox{1.1\linewidth}{!}{$ D^{{{-{1\over 2}}}} \approx \widehat{D^{{{-{1\over 2}}}}}  =  {1\over \tau_1}  diag\left(\left[\sqrt{3}\left(1-{\tau_s \over \tau_1}-{2\over 3}{\tau_c \over \tau_1} \right), \sqrt{3}\left(1-{\tau_s \over \tau_1}-{2\over 3}{\tau_c \over \tau_1} \right), 1- {\tau_s \over \tau_1}-2{\tau_c \over \tau_1}, 1-{\tau_s \over \tau_1}-2{\tau_c \over \tau_1}, 2,  2 \right]\right).$}
\end{equation*} 

\begin{align*}
     &D^{{{-{1\over 2}}}} A D^{{{-{1\over 2}}}}\approx \widehat{D^{{{-{1\over 2}}}}} \widehat{A} \widehat{D^{{{-{1\over 2}}}}}  \\ &=   \left[\begin{array}{cccccc}
    {2\over 3}\left(1-{\tau_s \over \tau_1} - {4\over 3}{\tau_c \over \tau_1}\right) & {1\over 3}\left(1+2{\tau_s \over \tau_1} - {4\over 3}{\tau_c \over \tau_1}\right) & \sqrt{3}{\tau_c \over \tau_1} & {1\over \sqrt{3}}{\tau_c \over \tau_1} & 0 & 0\\
    {1\over 3}\left(1+2{\tau_s \over \tau_1} - {4\over 3}{\tau_c \over \tau_1}\right) & {2\over 3}\left(1-{\tau_s \over \tau_1} - {4\over 3}{\tau_c \over \tau_1}\right) & {1\over \sqrt{3}}{\tau_c \over \tau_1} & \sqrt{3}{\tau_c \over \tau_1} & 0 & 0\\
    \sqrt{3}{\tau_c \over \tau_1} & {1\over \sqrt{3}}{\tau_c \over \tau_1} & 1-2{\tau_s \over \tau_1} - 4{\tau_c \over \tau_1} & 2{\tau_s \over \tau_1} & 0 & 0\\
    {1\over \sqrt{3}}{\tau_c \over \tau_1} & \sqrt{3}{\tau_c \over \tau_1} & 2{\tau_s \over \tau_1} & 1-2{\tau_s \over \tau_1} - 4{\tau_c \over \tau_1} & 0 & 0\\
    0 & 0 & 0 & 0 & {1\over 2} & {1\over 2}\\
    0 & 0 & 0 & 0 & {1\over 2} & {1\over 2}\\
\end{array}\right].
\end{align*}

And we have 
\begin{align*}
& \left\|{D^{{{-{1\over 2}}}}} {A} {D^{{{-{1\over 2}}}}} - \widehat{D^{{{-{1\over 2}}}}} \widehat{A} \widehat{D^{{{-{1\over 2}}}}}\right\|_2 \\
\le & \left\|{D^{{{-{1\over 2}}}}} {A} {D^{{{-{1\over 2}}}}} - \widehat{D^{{{-{1\over 2}}}}} \widehat{A} \widehat{D^{{{-{1\over 2}}}}}\right\|_F \\
\le &  O(({\tau_c \over \tau_1})^2 ).
\end{align*}
Let $\hat{\lambda}_a, \dots, \hat{\lambda}_f $ be six eigenvalues of $\widehat{D^{{{-{1\over 2}}}}} \widehat{A} \widehat{D^{{{-{1\over 2}}}}}$, and $\hat{v}_a, \dots, \hat{v}_f $ be corresponding eigenvectors. By direct calculation we have 
$$
\hat{\lambda}_a = 1, ~~\hat{\lambda}_b = 1, ~~\hat{\lambda}_c = 1-{16\over 3} {\tau_c \over \tau_1}, ~~\hat{\lambda}_d = 0
$$ 
and corresponding eigenvectors as 
\begin{align*}
    & \hat{v}_a = [0,0,0,0,1,1], \\
    & \hat{v}_b = [\sqrt{3},\sqrt{3},1,1,0,0], \\
    &\hat{v}_c = [1,1,-\sqrt{3},-\sqrt{3},0,0], \\
    &\hat{v}_d = [0,0,0,0,1,-1].
\end{align*}
For the remaining two eigenvectors, by the symmetric property, they have the formula 
\begin{align*}
    & \hat{v}_e = [\alpha({\tau_s \over \tau_1},{\tau_c \over \tau_1}), -\alpha({\tau_s \over \tau_1},{\tau_c \over \tau_1}), \beta({\tau_s \over \tau_1},{\tau_c \over \tau_1}), -\beta({\tau_s \over \tau_1},{\tau_c \over \tau_1}), 0, 0], \\
    & \hat{v}_f = [\beta({\tau_s \over \tau_1},{\tau_c \over \tau_1}), -\beta({\tau_s \over \tau_1},{\tau_c \over \tau_1}), -\alpha({\tau_s \over \tau_1},{\tau_c \over \tau_1}), \alpha({\tau_s \over \tau_1},{\tau_c \over \tau_1}), 0, 0],
\end{align*}
where $\alpha, \beta$ are some real functions. Then, by solving 
\begin{align*}
    & \widehat{D^{{{-{1\over 2}}}}} \widehat{A} \widehat{D^{{{-{1\over 2}}}}} \hat{v}_e = \hat{\lambda}_e\hat{v}_e \\
    & \widehat{D^{{{-{1\over 2}}}}} \widehat{A} \widehat{D^{{{-{1\over 2}}}}} \hat{v}_f = \hat{\lambda}_f\hat{v}_f, 
\end{align*}
we get 
\begin{align*}
    \hat{\lambda}_e = {1\over 9} \left( \sqrt{(3- 12 {\tau_s \over \tau_1} - 16 {\tau_c \over \tau_1})^2   + 108({\tau_c \over \tau_1})^2  } -24{\tau_s \over \tau_1} -20{\tau_c \over \tau_1} + 6\right)\\
    \hat{\lambda}_f = {1\over 9} \left( -\sqrt{(3- 12 {\tau_s \over \tau_1} - 16 {\tau_c \over \tau_1})^2   + 108({\tau_c \over \tau_1})^2  } -24{\tau_s \over \tau_1} -20{\tau_c \over \tau_1} + 6\right).
\end{align*}
Now, we show that $\hat{\lambda}_c > \hat{\lambda}_e$. By ${\tau_c \over \tau_1} \ll 1$ and $ {4 \over 9} \tau_c \le \tau_s \le \tau_c$
\begin{align*}
    \hat{\lambda}_c \ge \hat{\lambda}_e \Leftrightarrow~ & 3  + 24 {\tau_s \over \tau_1} - 28 {\tau_c \over \tau_1} \ge \sqrt{(3- 12 {\tau_s \over \tau_1} - 16 {\tau_c \over \tau_1})^2   + 108({\tau_c \over \tau_1})^2  } \\
    \Leftrightarrow~ & 36 ({\tau_s \over \tau_1})^2 + 35({\tau_c \over \tau_1})^2  - 144  {\tau_s \over \tau_1}{\tau_c \over \tau_1} + 18{\tau_s \over \tau_1} - 6 {\tau_c \over \tau_1} \ge 0.
\end{align*}
Thus, we have $1 = \hat{\lambda}_a = \hat{\lambda}_b > \hat{\lambda}_c > \hat{\lambda}_e > \hat{\lambda}_f > \hat{\lambda}_d = 0$. Moreover, we also have
\begin{align*}
    \hat{\lambda}_c - \hat{\lambda}_e = & 1-{16\over 3} {\tau_c \over \tau_1} - {1\over 9} \left( \sqrt{(3- 12 {\tau_s \over \tau_1} - 16 {\tau_c \over \tau_1})^2   + 108({\tau_c \over \tau_1})^2  } -24{\tau_s \over \tau_1} -20{\tau_c \over \tau_1} + 6\right)\\
    \ge & \Omega\left({\tau_c \over \tau_1}\right).
\end{align*}

Let $\hat{\lambda}_1 = \hat{\lambda}_a, \hat{\lambda}_2 = \hat{\lambda}_b, \hat{\lambda}_3 = \hat{\lambda}_c$. 
Then, by Weyl's Theorem, for $i \in \{1,2,3\}$, we have 
$$|\lambda_i - \hat{\lambda}_i| \le \left\|{D^{{{-{1\over 2}}}}} {A} {D^{{{-{1\over 2}}}}} - \widehat{D^{{{-{1\over 2}}}}} \widehat{A} \widehat{D^{{{-{1\over 2}}}}}\right\|_2 \le O(({\tau_c \over \tau_1})^2 ).  $$
By Davis-Kahan theorem, we have 
\begin{align*}
    \|\sin(U, \hat{U})\|_F \le { O(({\tau_c \over \tau_1})^2 )\over \Omega\left({\tau_c \over \tau_1}\right)} \le O({\tau_c \over \tau_1}).
\end{align*}
We finish the proof. 
\end{proof}

\begin{theorem}\label{th:sorl_sup_toy_base}
Recall $\eta_u A^{(u)}$ is defined in Theorem~\ref{th:sorl_sup_toy_label}.
Assume $1 \gg {\tau_{c} \over \tau_1} > {\tau_{s} \over \tau_1} > 0$ and $\tau_1 + \tau_c + \tau_s = 1$. Let $\lambda_1^{(u)}, \lambda_2^{(u)}, \lambda_3^{(u)}$ and $v_1^{(u)},v_2^{(u)},v_3^{(u)}$ be the largest three eigenvalues and their corresponding eigenvectors of $ {D^{(u){{-{1\over 2}}}}} {(\eta_u A^{(u)})} {D^{(u){{-{1\over 2}}}}}$, which is the normalized adjacency matrix of ${\eta_u A^{(u)}}$. Let 
\begin{align*}
    & \hat{\lambda}^{(u)}_1 = 1, ~~\hat{\lambda}^{(u)}_2 = 1, ~~\hat{\lambda}^{(u)}_3 = 1-4 {\tau_s \over \tau_1}, \\
    & \hat{v}^{(u)}_1 = [0,0,0,0,1,1], \\
    & \hat{v}^{(u)}_2 = [1,1,1,1,0,0], \\
    &\hat{v}^{(u)}_3 = [1,-1,1,-1,0,0].
\end{align*}
Let $U^{(u)} = [v^{(u)}_1,v^{(u)}_2,v^{(u)}_3], \hat{U}^{(u)} = [\hat{v}^{(u)}_1, \hat{v}^{(u)}_2, \hat{v}^{(u)}_3]$. Then, for $i \in \{1,2,3\}$, we have $|\lambda^{(u)}_i - \hat{\lambda}^{(u)}_i| \le O(({\tau_c \over \tau_1})^2 )$ and $\|\sin(U^{(u)}, \hat{U}^{(u)})\|_F \le O({\tau_c^2 \over \tau_1(\tau_c -\tau_s)})$.
\end{theorem}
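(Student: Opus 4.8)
The plan is to diagonalize everything explicitly, exploiting the fact that, unlike the labeled matrix $A$ treated in Theorem~\ref{th:sorl_sup_toy_label}, the unlabeled matrix $\eta_u A^{(u)}$ is \emph{exactly} block diagonal in the natural coordinates: a $4\times4$ block $M$ on the cube/sphere nodes and a $2\times2$ block on the two cylinder nodes. First I would compute the degree matrix $D^{(u)}$: summing each of the first four rows gives $\tau_1^2+\tau_s^2+\tau_c^2+2\tau_1\tau_s+2\tau_1\tau_c+2\tau_c\tau_s=(\tau_1+\tau_s+\tau_c)^2=1$ by the assumption $\tau_1+\tau_s+\tau_c=1$, while each of the last two rows sums to $4\tau_1^2$. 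Hence the normalized matrix $\dot A^{(u)}:=D^{(u)-1/2}(\eta_u A^{(u)})D^{(u)-1/2}$ is block diagonal, with the cube/sphere block equal to $M$ itself and the cylinder block equal to $\tfrac12\mathbf{1}_2\mathbf{1}_2^{\top}$.

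Next I would diagonalize each block. The block $\tfrac12\mathbf{1}_2\mathbf{1}_2^{\top}$ has eigenvalue $1$ with eigenvector $[1,1]$ (i.e.\ $[0,0,0,0,1,1]$ in full coordinates) and eigenvalue $0$ with eigenvector $[1,-1]$. For the $4\times4$ block, the key observation is $M=(T')^2$, where $T'$ is the $4\times4$ augmentation submatrix; $T'$ is $\tau_1 I+\tau_s P_{\mathrm{color}}+\tau_c P_{\mathrm{shape}}+\tau_0 P_{\mathrm{both}}$ with commuting $\mathbb{Z}_2\times\mathbb{Z}_2$ permutation matrices, so the eigenvectors of $T'$ (and of $M$) are the four group characters $[1,1,1,1]$, $[1,-1,1,-1]$, $[1,1,-1,-1]$, $[1,-1,-1,1]$, with $T'$-eigenvalues $\tau_1+\tau_s+\tau_c=1$, $\tau_1-\tau_s+\tau_c=1-2\tau_s$, $\tau_1+\tau_s-\tau_c=1-2\tau_c$, $\tau_1-\tau_s-\tau_c=1-2\tau_s-2\tau_c$, hence $M$-eigenvalues $1$, $(1-2\tau_s)^2$, $(1-2\tau_c)^2$, $(1-2\tau_s-2\tau_c)^2$. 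Assembling both blocks, the spectrum of $\dot A^{(u)}$ is $\{1,\,1,\,(1-2\tau_s)^2,\,(1-2\tau_c)^2,\,(1-2\tau_s-2\tau_c)^2,\,0\}$. Using $0<\tau_s<\tau_c\ll\tau_1$ and $\tau_1=1-\tau_s-\tau_c$ I would verify the ordering $1=1>(1-2\tau_s)^2>(1-2\tau_c)^2>\max\{(1-2\tau_s-2\tau_c)^2,0\}$, which identifies $\lambda_1^{(u)}=\lambda_2^{(u)}=1$, $\lambda_3^{(u)}=(1-2\tau_s)^2$, and shows the top-$3$ invariant subspace is $\operatorname{span}\{[0,0,0,0,1,1],[1,1,1,1,0,0],[1,-1,1,-1,0,0]\}=\operatorname{span}\hat U^{(u)}$.

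Finally I would extract the claimed estimates. For the eigenvalues, $\lambda_1^{(u)}=\hat\lambda_1^{(u)}$ and $\lambda_2^{(u)}=\hat\lambda_2^{(u)}$ hold exactly, while $|\lambda_3^{(u)}-\hat\lambda_3^{(u)}|=|(1-2\tau_s)^2-(1-4\tau_s/\tau_1)|$; expanding $(1-2\tau_s)^2=1-4\tau_s+4\tau_s^2$ and $4\tau_s/\tau_1=4\tau_s/(1-\tau_s-\tau_c)$ and Taylor-bounding, this difference is $O(\tau_s^2+\tau_s\tau_c)=O((\tau_c/\tau_1)^2)$ since $\tau_1=\Theta(1)$ and $\tau_s<\tau_c$. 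For the subspace distance, since the vectors $\hat v_i^{(u)}$ are genuine eigenvectors of $\dot A^{(u)}$ spanning the entire top-$3$ eigenspace, $\sin(U^{(u)},\hat U^{(u)})=0$, which a fortiori satisfies the stated bound $O(\tau_c^2/(\tau_1(\tau_c-\tau_s)))$. To present the argument in the same form as Theorem~\ref{th:sorl_sup_toy_label}, I would alternatively replace $\eta_u A^{(u)}$ and $D^{(u)}$ by leading-order truncations $\widehat{A^{(u)}},\widehat{D^{(u)}}$, bound $\|\dot A^{(u)}-\widehat{D^{(u)-1/2}}\widehat{A^{(u)}}\widehat{D^{(u)-1/2}}\|_2=O((\tau_c/\tau_1)^2)$, note that the gap between the third and fourth eigenvalues is $(1-2\tau_s)^2-(1-2\tau_c)^2=\Theta(\tau_c-\tau_s)$, and invoke Weyl's inequality together with the Davis--Kahan theorem.

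I do not expect a genuine obstacle here: the crux is simply recognizing the exact block-diagonal structure (which makes this theorem strictly easier than its labeled counterpart) and then the $\mathbb{Z}_2\times\mathbb{Z}_2$ symmetry of the $4\times4$ block. The only points that need care are the multiplicity-two top eigenvalue --- so that ``the'' two leading eigenvectors are not individually determined, only their span, which is all the statement requires --- and making the scalar estimate $(1-2\tau_s)^2=1-4\tau_s/\tau_1+O((\tau_c/\tau_1)^2)$ rigorous by eliminating $\tau_1$ through $\tau_1=1-\tau_s-\tau_c$. If one insists on reproducing the precise Davis--Kahan-shaped bound rather than observing it is zero, the mildly tedious part becomes bookkeeping the leading-order approximation error and the relevant spectral gap.
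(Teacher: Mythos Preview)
Your proposal is correct and in fact takes a cleaner route than the paper's own argument. The paper proceeds exactly as in Theorem~\ref{th:sorl_sup_toy_label}: it truncates $\eta_u A^{(u)}$, $D^{(u)}$ and $D^{(u)-1/2}$ to their leading-order-in-$\tau_c/\tau_1$ approximations, forms the approximate normalized matrix $\widehat{D^{(u)-1/2}}\,\widehat{\eta_u A^{(u)}}\,\widehat{D^{(u)-1/2}}$, diagonalizes that object by direct calculation (obtaining the six $\hat\lambda_i^{(u)}$ and $\hat v_i^{(u)}$), and then invokes Weyl's inequality and Davis--Kahan with the spectral gap $\hat\lambda_3^{(u)}-\hat\lambda_4^{(u)}=4(\tau_c-\tau_s)/\tau_1$ to transfer the $O((\tau_c/\tau_1)^2)$ approximation error back to the true matrix.

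Your key observation --- that the first four row sums of $\eta_u A^{(u)}$ equal $(\tau_1+\tau_s+\tau_c)^2=1$ exactly --- is something the paper does not use; it instead writes those row sums as $\tau_1^2(1+2\tau_s/\tau_1+2\tau_c/\tau_1)+O(\tau_c^2)$ and carries the error. Because of this, you never need to approximate: the exact normalized matrix is block diagonal, the $2\times2$ cylinder block is trivial, and the $4\times4$ block is $(T')^2$ with $T'$ a $\mathbb{Z}_2\times\mathbb{Z}_2$ group-algebra element, so the characters diagonalize it exactly. This buys you strictly sharper conclusions --- in particular $\sin(U^{(u)},\hat U^{(u)})=0$ rather than merely $O(\tau_c^2/(\tau_1(\tau_c-\tau_s)))$ --- and reduces the only nontrivial estimate to the scalar bound $|(1-2\tau_s)^2-(1-4\tau_s/\tau_1)|=O((\tau_c/\tau_1)^2)$, which you handle correctly. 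Your closing remark about rephrasing via Weyl/Davis--Kahan with gap $\Theta(\tau_c-\tau_s)$ is exactly how the paper argues, so you have also recovered their version.
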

\begin{proof}
Similar to the proof of Theorem~\ref{th:sorl_sup_toy_label},  up to error $O(({\tau_c \over \tau_1})^2)$, we have the following equation,

$$  \widehat{\eta_u A^{(u)}}  =  \tau_1^2 \left[\begin{array}{cccccc}
    1 & 2{\tau_s \over \tau_1} & 2{\tau_c \over \tau_1} & 0 & 0 & 0\\
    2{\tau_s \over \tau_1} & 1 & 0 & 2{\tau_c \over \tau_1} & 0 & 0\\
    2{\tau_c \over \tau_1} & 0 & 1 & 2{\tau_s \over \tau_1} & 0 & 0\\
    0 & 2{\tau_c \over \tau_1} & 2{\tau_s \over \tau_1} & 1 & 0 & 0\\
    0 & 0 & 0 & 0 & 2 & 2\\
    0 & 0 & 0 & 0 & 2 & 2\\
\end{array}\right].$$ 

$$
\widehat{D^{(u)}} = \tau_1^2 
 diag\left(\left[1+2{\tau_s \over \tau_1}+2{\tau_c \over \tau_1}, 1+2{\tau_s \over \tau_1}+2{\tau_c \over \tau_1}, 1+2{\tau_s \over \tau_1}+2{\tau_c \over \tau_1}, 1+2{\tau_s \over \tau_1}+2{\tau_c \over \tau_1}, 4,  4 \right]\right).
$$

$$
\widehat{D^{(u)-{1\over 2}}} = {1\over \tau_1} diag\left(\left[1-{\tau_s \over \tau_1}-{\tau_c \over \tau_1}, 1-{\tau_s \over \tau_1}-{\tau_c \over \tau_1}, 1-{\tau_s \over \tau_1}-{\tau_c \over \tau_1}, 1-{\tau_s \over \tau_1}-{\tau_c \over \tau_1}, 2,  2 \right]\right).
$$

\begin{align*}
    &\widehat{D^{(u)-{1\over 2}}} \widehat{\eta_u A^{(u)}} \widehat{D^{(u)-{1\over 2}}}  =\\   
 &~~\left[\begin{array}{cccccc}
    1-2{\tau_s \over \tau_1}-2{\tau_c \over \tau_1} & 2{\tau_s \over \tau_1} & 2{\tau_c \over \tau_1} & 0 & 0 & 0\\
    2{\tau_s \over \tau_1} & 1-2{\tau_s \over \tau_1}-2{\tau_c \over \tau_1} & 0 & 2{\tau_c \over \tau_1} & 0 & 0\\
    2{\tau_c \over \tau_1} & 0 & 1-2{\tau_s \over \tau_1}-2{\tau_c \over \tau_1} & 2{\tau_s \over \tau_1} & 0 & 0\\
    0 & 2{\tau_c \over \tau_1} & 2{\tau_s \over \tau_1} & 1-2{\tau_s \over \tau_1}-2{\tau_c \over \tau_1} & 0 & 0\\
    0 & 0 & 0 & 0 & {1\over 2} & {1\over 2}\\
    0 & 0 & 0 & 0 & {1\over 2} & {1\over 2}\\  
\end{array}\right].\end{align*}

Let $\hat{\lambda}^{(u)}_1, \dots, \hat{\lambda}^{(u)}_6 $ be six eigenvalue of $ \widehat{D^{(u)-{1\over 2}}} \widehat{\eta_u A^{(u)}} \widehat{D^{(u)-{1\over 2}}}$, and $\hat{v}^{(u)}_1, \dots, \hat{v}^{(u)}_6 $ be corresponding eigenvectors. By direct calculation we have 
$$
\hat{\lambda}^{(u)}_1 = 1, ~~\hat{\lambda}^{(u)}_2 = 1, ~~\hat{\lambda}^{(u)}_3 = 1-4 {\tau_s \over \tau_1}, ~~\hat{\lambda}^{(u)}_4 = 1-4 {\tau_c \over \tau_1}, ~~\hat{\lambda}^{(u)}_5 = 1-4 {\tau_s \over \tau_1} -4 {\tau_c \over \tau_1}, ~~\hat{\lambda}^{(u)}_6 = 0
$$ 
and corresponding eigenvector as 
\begin{align*}
    & \hat{v}^{(u)}_1 = [0,0,0,0,1,1], \\
    & \hat{v}^{(u)}_2 = [1,1,1,1,0,0], \\
    &\hat{v}^{(u)}_3 = [1,-1,1,-1,0,0], \\
    &\hat{v}^{(u)}_4 = [1,1,-1,-1,0,0], \\
    &\hat{v}^{(u)}_5 = [1,-1,-1,1,0,0], \\
    &\hat{v}^{(u)}_6 = [0,0,0,0,1,-1].
\end{align*}

Then, by Weyl's Theorem, for $i \in \{1,2,3\}$, we have 
$$|\lambda^{(u)}_i - \hat{\lambda}^{(u)}_i| \le \left\|{D^{(u){{-{1\over 2}}}}} {\eta_u A^{(u)}} {D^{(u){{-{1\over 2}}}}} - \widehat{D^{(u)-{1\over 2}}} \widehat{\eta_u A^{(u)}} \widehat{D^{(u)-{1\over 2}}} \right\|_2 \le O(({\tau_c \over \tau_1})^2 ).  $$
By Davis-Kahan theorem, we have 
\begin{align*}
    \|\sin(U^{(u)}, \hat{U}^{(u)})\|_F \le { O(({\tau_c \over \tau_1})^2 )\over 4({\tau_c \over \tau_1} - {\tau_s \over \tau_1})} \le O({\tau_c^2 \over \tau_1(\tau_c -\tau_s)}).
\end{align*}
We finish the proof. 
\end{proof}

\newpage
\subsection{Technical Details for Main Theory}

\subsubsection{Matrix Form of K-means and the Derivative}
\label{sec:sorl_sup_matrix_kmeans}
Recall that we defined the K-means clustering measure of features in Sec.~\ref{sec:sorl_theory}:
\begin{equation}
    \mathcal{M}_{kms}(\Pi, Z) = \sum_{\pi\in \Pi}  \sum_{i \in \pi}\left\|\*z_i-\boldsymbol{\mu}_\pi\right\|^2 / \sum_{\pi\in \Pi}  |\pi|\left\|\boldsymbol{\mu}_\pi-\boldsymbol{\mu}_\Pi\right\|^2,
\label{eq:sorl_sup_def_kms_measure}
\end{equation}
where the numerator measures the intra-class distance:
\begin{equation}
    \mathcal{M}_{intra}(\Pi, Z) = \sum_{\pi\in \Pi}  \sum_{i \in \pi}\left\|\*z_i-\boldsymbol{\mu}_\pi \right\|^2, 
\label{eq:sorl_sup_def_align_measure}
\end{equation}
and the denominator measures the inter-class distance: 
\begin{equation}
    \mathcal{M}_{inter}(\Pi, Z) =  \sum_{\pi\in \Pi}  |\pi|\left\|\boldsymbol{\mu}_\pi-\boldsymbol{\mu}_\Pi\right\|^2.
\label{eq:sorl_sup_def_sep_measure}
\end{equation}
We will show next how to convert the intra-class and the inter-class measures into a matrix form, which is desirable for analysis. 

\vspace{0.1cm} \noindent \textbf{Intra-class measure.} Note that the $K$-means intra-class measure can be rewritten in a matrix form:

$$\mathcal{M}_{intra}(\Pi, Z)= \|Z - H_\Pi Z\|^2_F,$$

where $H_\Pi$ is a matrix to convert $Z$ to mean vectors w.r.t clusters defined by $\Pi$. Without losing the generality, we assume $Z$ is ordered according to the partition in $\Pi$ --- first $|\pi_1|$ vectors are in $\pi_1$, next $|\pi_2|$ vectors are in $\pi_2$, etc. Then  $H_\Pi$ is given by: 

$$H_\Pi = \left[\begin{array}{cccc}
    \frac{1}{|\pi_1|}\mathbf{1}_{|\pi_1| \times |\pi_1|} & \mathbf{0} & ... & \mathbf{0}\\
     \mathbf{0} & \frac{1}{|\pi_2|}\mathbf{1}_{|\pi_2| \times |\pi_2|} & ... & \mathbf{0} \\
     ... & ... & ... & ... \\
     \mathbf{0} & \mathbf{0}  & ... & \frac{1}{|\pi_k|}\mathbf{1}_{|\pi_k| \times |\pi_k|}
\end{array}\right].$$

 Going further, we have:
\begin{align*}
    \mathcal{M}_{intra}(\Pi, Z) &= \|Z - H_\Pi Z\|^2_F \\
    &= \operatorname{Tr}((I-H_\Pi)^2ZZ^{\top}) \\
    &= \operatorname{Tr}((I-2H_\Pi + H_\Pi^2)ZZ^{\top}) \\
    &= \operatorname{Tr}((I-H_\Pi)ZZ^{\top}).
\end{align*}

\vspace{0.1cm} \noindent \textbf{Inter-class measure.}
The inter-class  measure can be equivalently given by: 
$$\mathcal{M}_{inter}(\Pi, Z)= \|H_\Pi Z - \frac{1}{N}\mathbf{1}_{N \times N} Z\|^2_F,$$
where $H_{\Pi}$ is defined as above. And we can also derive:
\begin{align*}
    \mathcal{M}_{inter}(\Pi, Z) &= \|H_\Pi Z - \frac{1}{N}\mathbf{1}_{N \times N} Z\|^2_F\\
    &= \operatorname{Tr}((H_\Pi - \frac{1}{N}\mathbf{1}_{N \times N})^2ZZ^{\top})\\
    &= \operatorname{Tr}((H_\Pi^2 - \frac{2}{N}H_\Pi \mathbf{1}_{N \times N} + \frac{1}{N^2} \mathbf{1}_{N \times N}^2)ZZ^{\top})\\
    &= \operatorname{Tr}((H_\Pi - \frac{1}{N} \mathbf{1}_{N \times N} )ZZ^{\top}).
\end{align*}

\subsubsection{K-means Measure Has the Same Order as K-means Error}
\label{sec:sorl_sup_cls_err}

\begin{theorem} 
We define the $\xi_{\pi \rightarrow \pi'}$ as the index of samples that is from class division $\pi$ however is closer to $\boldsymbol{\mu}_{\pi'}$ than $\boldsymbol{\mu}_{\pi}$. 
In other word, $\xi_{\pi \rightarrow \pi'} = \{i: i \in \pi, \|\*z_i - \boldsymbol{\mu}_{\pi}\|_2 \geq \|\*z_i - \boldsymbol{\mu}_{\pi'}\|_2\}$. Assuming $|\xi_{\pi \rightarrow \pi'}| > 0$, we define below the  clustering error ratio from $\pi$ to $\pi'$ as $\mathcal{E}_{\pi \rightarrow \pi'}$ and the overall cluster error ratio $\mathcal{E}_{\Pi, Z}$ as the \textbf{Harmonic Mean} of $\mathcal{E}_{\pi \rightarrow \pi'}$ among all class pairs:
$$
\mathcal{E}_{\Pi, Z} = C(C-1)/\left(\sum_{\substack{\pi \neq \pi'\\ \pi, \pi' \in \Pi}}\frac{1}{\mathcal{E}_{\pi \rightarrow \pi'}}\right), \text{where }\mathcal{E}_{\pi \rightarrow \pi'} = \frac{|\xi_{\pi \rightarrow \pi'}|}{|\pi'| + |\pi|}.
$$
The K-means measure $\mathcal{M}_{kms}(\Pi, Z)$ has the same order as the Harmonic Mean of the cluster error ratio between all cluster pairs:
     $$\mathcal{E}_{\Pi, Z} = O(\mathcal{M}_{kms}(\Pi, Z)).$$
    \vspace{-0.5cm}
\label{th:sorl_sup_cluster_err_bound}
\end{theorem}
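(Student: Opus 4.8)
\textbf{Proof proposal for Theorem~\ref{th:sorl_sup_cluster_err_bound}.}

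The plan is to establish two inequalities: first, that the K-means measure upper-bounds the (Harmonic-Mean) cluster error ratio up to a constant, and second, to control the relevant quantities in terms of the intra-class and inter-class measures that appear in $\mathcal{M}_{kms}(\Pi,Z)$. I would begin by fixing a single ordered pair of distinct clusters $(\pi,\pi')$ and analyzing $\mathcal{E}_{\pi\to\pi'} = |\xi_{\pi\to\pi'}|/(|\pi|+|\pi'|)$. The key geometric observation is that every sample $\*z_i$ with $i\in\xi_{\pi\to\pi'}$ satisfies $\|\*z_i-\boldsymbol{\mu}_\pi\|_2 \ge \|\*z_i-\boldsymbol{\mu}_{\pi'}\|_2$, so by the triangle inequality such a sample must be at least distance $\tfrac{1}{2}\|\boldsymbol{\mu}_\pi-\boldsymbol{\mu}_{\pi'}\|_2$ away from $\boldsymbol{\mu}_\pi$. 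Hence the intra-class contribution of these misassigned points is bounded below: $\sum_{i\in\xi_{\pi\to\pi'}}\|\*z_i-\boldsymbol{\mu}_\pi\|_2^2 \ge \tfrac{1}{4}|\xi_{\pi\to\pi'}|\cdot\|\boldsymbol{\mu}_\pi-\boldsymbol{\mu}_{\pi'}\|_2^2$. Combined with the trivial bound $\sum_{i\in\xi_{\pi\to\pi'}}\|\*z_i-\boldsymbol{\mu}_\pi\|_2^2 \le \mathcal{M}_{intra}(\Pi,Z)$, this gives $|\xi_{\pi\to\pi'}| \le 4\,\mathcal{M}_{intra}(\Pi,Z)/\|\boldsymbol{\mu}_\pi-\boldsymbol{\mu}_{\pi'}\|_2^2$.

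Next I would relate $\|\boldsymbol{\mu}_\pi-\boldsymbol{\mu}_{\pi'}\|_2^2$ to the inter-class measure. Since $\mathcal{M}_{inter}(\Pi,Z) = \sum_{\pi\in\Pi}|\pi|\,\|\boldsymbol{\mu}_\pi-\boldsymbol{\mu}_\Pi\|_2^2$ and $\|\boldsymbol{\mu}_\pi-\boldsymbol{\mu}_{\pi'}\|_2 \le \|\boldsymbol{\mu}_\pi-\boldsymbol{\mu}_\Pi\|_2 + \|\boldsymbol{\mu}_{\pi'}-\boldsymbol{\mu}_\Pi\|_2$, and conversely the grand mean decomposition gives a lower bound on at least one of the two center-to-center distances, we obtain (up to constants depending on the number of clusters $C$ and the cluster size ratios, which we absorb into the $O(\cdot)$) that $\mathcal{M}_{inter}(\Pi,Z)$ is comparable to $\min_{\pi\neq\pi'}(|\pi|+|\pi'|)\|\boldsymbol{\mu}_\pi-\boldsymbol{\mu}_{\pi'}\|_2^2$ or, more carefully, to the average of $(|\pi|+|\pi'|)\|\boldsymbol{\mu}_\pi-\boldsymbol{\mu}_{\pi'}\|_2^2$ over pairs. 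Plugging this back yields, for each pair, $\mathcal{E}_{\pi\to\pi'} = |\xi_{\pi\to\pi'}|/(|\pi|+|\pi'|) \le O\!\big(\mathcal{M}_{intra}(\Pi,Z)/\mathcal{M}_{inter}(\Pi,Z)\big) = O(\mathcal{M}_{kms}(\Pi,Z))$. Since this bound holds uniformly over all ordered pairs, and the Harmonic Mean of a family of numbers is at most the maximum of that family, we conclude $\mathcal{E}_{\Pi,Z} \le \max_{\pi\neq\pi'}\mathcal{E}_{\pi\to\pi'} = O(\mathcal{M}_{kms}(\Pi,Z))$, which is the claimed statement.

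The main obstacle I anticipate is making the constants in the comparison between $\|\boldsymbol{\mu}_\pi-\boldsymbol{\mu}_{\pi'}\|_2^2$ and $\mathcal{M}_{inter}(\Pi,Z)$ clean and genuinely pair-uniform: the inter-class measure is a weighted sum of squared distances from the \emph{grand mean}, whereas the error ratio is governed by pairwise center distances, and these can differ substantially when clusters are very unbalanced or when one cluster sits near the grand mean. I would handle this by invoking the standard identity $\mathcal{M}_{inter}(\Pi,Z) = \frac{1}{N}\sum_{\pi<\pi'}|\pi||\pi'|\,\|\boldsymbol{\mu}_\pi-\boldsymbol{\mu}_{\pi'}\|_2^2$ (the between-cluster sum-of-squares written pairwise), which directly expresses $\mathcal{M}_{inter}$ as a positively-weighted combination of the pairwise squared distances and thus immediately gives $(|\pi|+|\pi'|)\|\boldsymbol{\mu}_\pi-\boldsymbol{\mu}_{\pi'}\|_2^2 \gtrsim \mathcal{M}_{inter}(\Pi,Z)/\mathrm{poly}(C,\text{size ratios})$ for each pair. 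With that identity in hand the rest is routine; the only remaining care is to note that $\mathcal{E}_{\pi\to\pi'}$ is well-defined precisely under the hypothesis $|\xi_{\pi\to\pi'}|>0$ assumed in the statement, so no degenerate division-by-zero arises in the Harmonic Mean.
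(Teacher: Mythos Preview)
Your first step---the geometric observation that $i\in\xi_{\pi\to\pi'}$ implies $\|\*z_i-\boldsymbol{\mu}_\pi\|_2^2 \ge \tfrac14\|\boldsymbol{\mu}_\pi-\boldsymbol{\mu}_{\pi'}\|_2^2$ and hence $|\xi_{\pi\to\pi'}|\,\|\boldsymbol{\mu}_\pi-\boldsymbol{\mu}_{\pi'}\|_2^2 \le 4\,\mathcal{M}_{intra}$---is exactly what the paper uses. The gap is in your second step. You claim that the pairwise identity $\mathcal{M}_{inter}=\frac1N\sum_{\pi<\pi'}|\pi||\pi'|\|\boldsymbol{\mu}_\pi-\boldsymbol{\mu}_{\pi'}\|_2^2$ ``immediately gives $(|\pi|+|\pi'|)\|\boldsymbol{\mu}_\pi-\boldsymbol{\mu}_{\pi'}\|_2^2 \gtrsim \mathcal{M}_{inter}/\mathrm{poly}(C,\text{size ratios})$ for each pair.'' It does not: writing a total as a sum of nonnegative terms gives an \emph{upper} bound on each term, never a lower bound. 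Two cluster centers can be arbitrarily close while $\mathcal{M}_{inter}$ remains bounded away from zero (the other pairs carry the mass), so no per-pair lower bound of this form exists, and your route through ``harmonic mean $\le$ max'' then fails because the max can blow up.

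The paper avoids this by working directly with the denominator of the harmonic mean rather than bounding pairs uniformly. Concretely, it writes
\[
\sum_{\pi\neq\pi'}\frac{1}{\mathcal{E}_{\pi\to\pi'}}
= \sum_{\pi\neq\pi'}\frac{(|\pi|+|\pi'|)\,\|\boldsymbol{\mu}_\pi-\boldsymbol{\mu}_{\pi'}\|_2^2}{|\xi_{\pi\to\pi'}|\,\|\boldsymbol{\mu}_\pi-\boldsymbol{\mu}_{\pi'}\|_2^2},
\]
lower-bounds each denominator uniformly by $4\mathcal{M}_{intra}$ (your step~1), and then only needs the \emph{sum} of the numerators to dominate $\mathcal{M}_{inter}$---which follows from the upper bound $\mathcal{M}_{inter}\le \frac{C}{N^2}\sum_{\pi\neq\pi'}|\pi||\pi'|(|\pi|+|\pi'|)\|\boldsymbol{\mu}_\pi-\boldsymbol{\mu}_{\pi'}\|_2^2$ (note the direction: an upper bound on $\mathcal{M}_{inter}$ by the pairwise sum, not the reverse). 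The harmonic-mean structure is precisely what lets pairs with small center separation contribute little without spoiling the bound; your ``$\le$ max'' step discards exactly this cancellation.
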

\begin{proof}
We have the following inequality for $i \in \xi_{\pi \rightarrow \pi'}$:
$$ 4 \|\*z_i - \boldsymbol{\mu}_{\pi}\|^2_2 \geq  2\|\*z_i - \boldsymbol{\mu}_{\pi}\|^2_2 + 2\|\*z_i - \boldsymbol{\mu}_{\pi'}\|^2_2 \geq \|\boldsymbol{\mu}_{\pi} - \boldsymbol{\mu}_{\pi'}\|^2_2. $$

Then we have: 
\begin{align*}
\mathcal{M}_{intra}(\Pi, Z) &=  \sum_{\pi \in \Pi}\sum_{i \in \pi}  \|\*z_i - \boldsymbol{\mu}_{\pi}\|^2_2 \\ &\geq 
    \sum_{i \in \pi}  \|\*z_i - \boldsymbol{\mu}_{\pi}\|^2_2 \\ &\geq  \sum_{i \in \xi_{\pi \rightarrow \pi'}}  \|\*z_i - \boldsymbol{\mu}_{\pi}\|^2_2 
    \\ &\geq  \frac{1}{4}\sum_{i \in \xi_{\pi \rightarrow \pi'}} 
 \|\boldsymbol{\mu}_{\pi} - \boldsymbol{\mu}_{\pi'}\|^2_2 \\ 
 &= \frac{1}{4}|\xi_{\pi \rightarrow \pi'}| \|\boldsymbol{\mu}_{\pi} - \boldsymbol{\mu}_{\pi'}\|^2_2.
\end{align*}

Note that the inter-class measure can be decomposed into the summation of cluster center distances:
\begin{align*}
    \mathcal{M}_{inter}(\Pi, Z) &=  \sum_{\pi \in \Pi}  |\pi|\left\|\boldsymbol{\mu}_{\pi} -\boldsymbol{\mu}_\Pi\right\|_2^2 
    \\&= \sum_{\pi\in \Pi}  \frac{|\pi|}{N^2} \left\|(\sum_{\pi' \in \Pi} |\pi'|)\boldsymbol{\mu}_{\pi} -\sum_{\pi' \in \Pi} |\pi'| \boldsymbol{\mu}_{\pi'} \right\|_2^2 
    \\& \leq \frac{C}{N^2} \sum_{\pi\in \Pi} |\pi| \sum_{\pi' \in \Pi} |\pi'|^2 \left\|\boldsymbol{\mu}_{\pi} -\boldsymbol{\mu}_{\pi'} \right\|_2^2
    \\&= \frac{C}{N^2} \sum_{\pi \neq \pi'} |\pi||\pi'|(|\pi'| + |\pi|) \left\|\boldsymbol{\mu}_{\pi} -\boldsymbol{\mu}_{\pi'} \right\|_2^2,
\end{align*}
where $\sum_{\pi \neq \pi'}$ is enumerating over any two different class partitions in $\Pi$. Combining together, we have:
\begin{align*}
C(C-1) / \left(\sum_{\pi \neq \pi'} \frac{(|\pi'| + |\pi|)}{|\xi_{\pi \rightarrow \pi'}|} \right) & = 
C(C-1) / \left(\sum_{\pi \neq \pi'} \frac{(|\pi'| + |\pi|)\|\boldsymbol{\mu}_{\pi} -\boldsymbol{\mu}_{\pi'} \|_2^2}{|\xi_{\pi \rightarrow \pi'}|\|\boldsymbol{\mu}_{\pi} -\boldsymbol{\mu}_{\pi'} \|_2^2} \right) 
     \\ & \leq 
    C(C-1) / \left(\sum_{\pi \neq \pi'} \frac{|\pi'||\pi|(|\pi'| + |\pi|)\|\boldsymbol{\mu}_{\pi} -\boldsymbol{\mu}_{\pi'} \|_2^2}{N^2|\xi_{\pi \rightarrow \pi'}|\|\boldsymbol{\mu}_{\pi} -\boldsymbol{\mu}_{\pi'} \|_2^2} \right) 
    \\&\leq  C(C-1) / \left( \frac{\mathcal{M}_{inter}(\Pi, Z)}{4C\mathcal{M}_{intra}(\Pi, Z)} \right) 
    \\&= O(\mathcal{M}_{kms}(\Pi, Z)).
\end{align*}

\end{proof}

\subsubsection{Proof of Theorem~\ref{th:sorl_main}}
\label{sec:sorl_sup_proof_main}
We start by providing more details to supplement Sec.~\ref{sec:sorl_feature}.

\noindent \textbf{Matrix perturbation by adding labels.} Recall that we define in Eq.~\ref{eq:sorl_adj_orl} that the adjacency matrix  is the unlabeled one $A^{(u)}$ plus the perturbation of the label information $A^{(l)}$:
\begin{equation*}
    A = \eta_{u} A^{(u)} +  \eta_{l} A^{(l)}.
\end{equation*}
We study the perturbation from two aspects: (1) The direction of the perturbation which is given by $A^{(l)}$, (2) The perturbation magnitude $\eta_{l}$.
We first consider the perturbation direction  $A^{(l)}$ and recall that we defined the concrete form in Eq.~\ref{eq:sorl_def_wxx_b}:
\begin{align*}
A_{x x^{\prime}}^{(l)} = w^{(l)}_{x x^{\prime}} \triangleq \sum_{i \in \mathcal{Y}_l}\mathbb{E}_{\bar{x}_{l} \sim {\mathcal{P}_{l_i}}} \mathbb{E}_{\bar{x}'_{l} \sim {\mathcal{P}_{l_i}}} \mathcal{T}(x | \bar{x}_{l}) \mathcal{T}\left(x' | \bar{x}'_{l}\right). 
\end{align*}
For simplicity, we consider $|\mathcal{Y}_l| = 1$ in this theoretical analysis. Then we observe that $A_{x x^{\prime}}^{(l)}$ is a rank-1 matrix can be written as 
\begin{align*}
A_{x x^{\prime}}^{(l)} = \mathfrak{l} \mathfrak{l}^{\top},  
\end{align*}
where $\mathfrak{l} \in \mathbb{R}^{N\times 1}$ with $(\mathfrak{l})_x = \mathbb{E}_{\bar{x}_{l} \sim {\mathcal{P}_{l_1}}} \mathcal{T}(x | \bar{x}_{l})$. And we define $D_{l} \triangleq diag(\mathfrak{l})$.  

\noindent \textbf{The perturbation function of representation.} We then consider a more generalized form for the adjacency matrix: 
\begin{equation*}
    A(\delta) \triangleq \eta_u A^{(u)} + \delta \mathfrak{l} \mathfrak{l}^{\top}.
\end{equation*}
where we treat the adjacency matrix as a function of the  
``labeling perturbation'' degree $\delta$. It is clear that  $A(0) = \eta_{u} A^{(u)} $ which is the scaled adjacency matrix for the unlabeled case and that $A(\eta_{l}) = A$. When we let the adjacency matrix be a function of $\delta$, the normalized form and the derived feature representation should also be the function of $\delta$. We proceed by defining these terms.

Without losing the generality, 
we let $diag( \mathbf{1}_N^{\top}A(0)) = I_{N}$ which means the node in the unlabeled graph has equal degree. We then have: 
$$D(\delta) \triangleq diag(\mathbf{1}_N^{\top}A(\delta)) = I_N + \delta D_{l}.$$

The normalized adjacency matrix is given by: $$\Dot{A}(\delta) \triangleq D(\delta)^{-\frac{1}{2}}A(\delta)D(\delta)^{-\frac{1}{2}}.$$

For feature representation $Z(\delta)$, it is derived from the top-$k$ SVD components of $\Dot{A}(\delta)$. 
Specifically, we have:
$$Z(\delta)Z(\delta)^{\top} = D(\delta)^{-\frac{1}{2}} \Dot{A}_k(\delta) D(\delta)^{-\frac{1}{2}} = D(\delta)^{-\frac{1}{2}} \sum_{j=1}^{k} \lambda_j(\delta) \Phi_j(\delta)
    D(\delta)^{-\frac{1}{2}},$$ 
where we define $\Dot{A}_k(\delta)$ as the top-$k$ SVD components of $\Dot{A}(\delta)$ and can be further written as $\Dot{A}_k(\delta) = \sum_{j=1}^{k} \lambda_j(\delta) \Phi_j(\delta)$. Here the $\lambda_j(\delta)$ is the $j$-th singular value and $\Phi_j(\delta)$ is the $j$-th singular projector ($\Phi_j(\delta) = v_j(\delta)v_j(\delta)^{\top}$) defined by the $j$-th singular vector $v_j(\delta)$. 
\textbf{For brevity, when $\delta=0$, we remove the suffix $(0)$} since it is equivalent to the unperturbed version of notations. For example, we let 
$$\Tilde{A}(0) = \Tilde{A}^{(u)}, Z(0) = Z^{(u)}, \lambda_i(0) = \lambda_i^{(u)}, v_i(0) = v_i^{(u)}, \Phi_i(0) = \Phi_i^{(u)}.$$

\begin{theorem} (Recap of Theorem~\ref{th:sorl_main}) Denote $V_{\varnothing}^{(u)} \in \mathbb{R}^{N \times (N-k)}$ as the \textit{null space} of $V_k^{(u)}$ and $\Tilde{A}_k^{(u)} = V_k^{(u)} \Sigma_k^{(u)} V_k^{(u)\top}$ as the rank-$k$ approximation for $\Tilde{A}^{(u)}$.  
 Given $\delta, \eta_{1} > 0 $ and let 
 $\mathcal{G}_k$ as the spectral gap between $k$-th and $k+1$-th singular values of $\Tilde{A}^{(u)}$, we have: 
\begin{align*}
    &\Delta_{kms}(\delta) =  \delta \eta_{1} \operatorname{Tr} \left( \Upsilon \left(V_k^{(u)} V_k^{(u)\top} \mathfrak{l} \mathfrak{l}^{\top}(I +   V_\varnothing^{(u)}  V_\varnothing^{(u)\top}) - 2\Tilde{A}_k^{(u)} diag(\mathfrak{l}) \right)\right) + O(\frac{1}{\mathcal{G}_k} + \delta^2),
\end{align*}
where $diag(\cdot)$ converts the vector to the corresponding diagonal matrix and $\Upsilon \in \mathbb{R}^{N\times N}$ is a matrix encoding the \textbf{ground-truth clustering structure} in the way that $\Upsilon_{xx'} > 0$ if $x$ and $x'$ has the same label and  $\Upsilon_{xx'} < 0$ otherwise.
\label{th:sorl_sup_main}
\end{theorem}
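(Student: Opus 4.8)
\textbf{Proof strategy for Theorem~\ref{th:sorl_sup_main}.}

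The plan is to expand $\Delta_{kms}(\delta) = \mathcal{M}_{kms}(\Pi, Z) - \mathcal{M}_{kms}(\Pi, Z(\delta))$ via a first-order Taylor expansion in the perturbation parameter $\delta$, using analytic perturbation theory for the spectral projectors of $\Dot{A}(\delta)$. First I would reduce the K-means measure to the tractable trace form derived in Appendix~\ref{sec:sorl_sup_matrix_kmeans}: since $\mathcal{M}_{intra}(\Pi, Z) = \operatorname{Tr}((I - H_\Pi) Z Z^\top)$ and $\mathcal{M}_{inter}(\Pi, Z) = \operatorname{Tr}((H_\Pi - \tfrac{1}{N}\mathbf{1}_{N\times N}) Z Z^\top)$, both numerator and denominator are linear functionals of the Gram matrix $Z(\delta) Z(\delta)^\top = D(\delta)^{-1/2} \Dot{A}_k(\delta) D(\delta)^{-1/2}$. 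I would define a single matrix $\Upsilon$ that packages the appropriate combination of $I - H_\Pi$ (intra) and $H_\Pi - \tfrac{1}{N}\mathbf{1}_{N\times N}$ (inter), so that to first order $\Delta_{kms}(\delta) = -\,\delta \,\frac{d}{d\delta}\big|_{\delta=0} \operatorname{Tr}\big(\Upsilon\, Z(\delta) Z(\delta)^\top\big) + O(\delta^2)$, with $\Upsilon_{xx'}$ positive for same-label pairs and negative otherwise (the quotient rule on $\mathcal{M}_{intra}/\mathcal{M}_{inter}$ contributes the relative weighting and the overall sign, absorbed into $\eta_1$ and the constant structure of $\Upsilon$).

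The core computation is then differentiating $Z(\delta) Z(\delta)^\top = D(\delta)^{-1/2} \Dot{A}_k(\delta) D(\delta)^{-1/2}$ at $\delta = 0$. There are two sources of $\delta$-dependence: the diagonal degree normalization $D(\delta) = I_N + \delta D_l$, whose derivative at $0$ is simply $D_l = \operatorname{diag}(\mathfrak{l})$, contributing the $-2\Tilde{A}_k^{(u)}\operatorname{diag}(\mathfrak{l})$ term after symmetrization (the factor $2$ from the two $D(\delta)^{-1/2}$ factors and the chain rule $\frac{d}{d\delta} D(\delta)^{-1/2}|_0 = -\tfrac12 D_l$); and the spectral truncation $\Dot{A}_k(\delta) = \sum_{j\le k} \lambda_j(\delta)\Phi_j(\delta)$. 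For the latter I would invoke the standard analytic perturbation formula: writing $\Dot{A}(\delta) = \Dot{A}^{(u)} + \delta E + O(\delta^2)$ with $E = \mathfrak{l}\mathfrak{l}^\top - \tfrac12(D_l \Dot{A}^{(u)} + \Dot{A}^{(u)} D_l)$ (the perturbation of the normalized matrix), the first-order change in the rank-$k$ spectral projection $P_k = V_k^{(u)} V_k^{(u)\top}$ is governed by $P_k E P_\varnothing + P_\varnothing E P_k$ where $P_\varnothing = V_\varnothing^{(u)} V_\varnothing^{(u)\top}$, with the cross terms weighted by inverse spectral gaps — these are exactly what gets collected into the $O(1/\mathcal{G}_k)$ remainder, while the ``diagonal block'' part $P_k E P_k$ survives as the $O(1)$ contribution. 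Tracking which pieces of $E$ land in which block, the $\mathfrak{l}\mathfrak{l}^\top$ part produces $V_k^{(u)} V_k^{(u)\top} \mathfrak{l}\mathfrak{l}^\top(I + V_\varnothing^{(u)} V_\varnothing^{(u)\top})$ and the degree-correction part of $E$ merges with the explicit $D(\delta)^{-1/2}$ derivatives to give the clean $-2\Tilde{A}_k^{(u)}\operatorname{diag}(\mathfrak{l})$ term; assembling these inside $\operatorname{Tr}(\Upsilon \cdot)$ and pulling out the scalar $\delta\eta_1$ yields the claimed formula.

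The main obstacle will be the careful bookkeeping in the spectral perturbation step — specifically, (i) justifying that the contribution of the eigenvector rotation within the top-$k$ block, combined with the degree renormalization, collapses to precisely $-2\Tilde{A}_k^{(u)}\operatorname{diag}(\mathfrak{l})$ rather than some messier expression, which requires using $D(\delta)^{-1/2}\Dot{A}_k(\delta)D(\delta)^{-1/2}$ structure and the fact that $\Tilde{A}_k^{(u)} = V_k^{(u)}\Sigma_k^{(u)} V_k^{(u)\top}$ commutes appropriately with its own spectral projector; and (ii) bounding the neglected terms — the $O(1/\mathcal{G}_k)$ piece from the off-diagonal spectral blocks (via Davis–Kahan / resolvent bounds as in Lemma~\ref{lemma:nscl_sup_error_bound_approx}) and the $O(\delta^2)$ piece from second-order Taylor remainder and from the $1/(1+\delta\mathfrak{l}_x)$ expansions in $D(\delta)^{-1/2}$. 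I would handle (ii) by writing $D(\delta)^{-1/2} = I - \tfrac{\delta}{2}D_l + O(\delta^2)$ entrywise (valid since $\mathfrak{l}$ is bounded) and using a contour-integral representation $\Phi_j(\delta) = \frac{1}{2\pi i}\oint (\zeta I - \Dot{A}(\delta))^{-1} d\zeta$ to get uniform gap-dependent control. Once the clean version is established, the ``intuitive'' Theorem~\ref{th:sorl_main_simp} follows by additionally assuming the spectral gap is large (killing the $O(1/\mathcal{G}_k)$ term), that $\mathfrak{l}$ lies in $\operatorname{span}(V_k^{(u)})$ (so $P_k\mathfrak{l} = \mathfrak{l}$ and the $V_\varnothing^{(u)}$ projection vanishes), and that $\mathfrak{l}$ is constant within each class, which lets $\operatorname{Tr}(\Upsilon\cdot)$ decompose into the per-class sum with the connection/similarity structure made explicit.
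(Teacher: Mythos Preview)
Your overall strategy is essentially the paper's: reduce $\mathcal{M}_{kms}$ to a trace functional of $Z(\delta)Z(\delta)^\top$, apply the quotient rule to package intra/inter into a single matrix $\Upsilon=(1+\eta_2)H_\Pi-I-\tfrac{\eta_2}{N}\mathbf{1}\mathbf{1}^\top$, then differentiate $D(\delta)^{-1/2}\Dot{A}_k(\delta)D(\delta)^{-1/2}$ at $\delta=0$ using first-order spectral perturbation. The paper does exactly this (Lemma~\ref{lemma:sup_kms_derivative}), citing the explicit eigenvalue/eigenprojector derivative formulas of \cite{greenbaum2020first} rather than a contour integral, but these are equivalent.

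There is, however, one genuine imprecision in your bookkeeping that would cause trouble if taken literally. You say the cross-block terms $P_kEP_\varnothing+P_\varnothing EP_k$ are ``weighted by inverse spectral gaps'' and therefore go entirely into the $O(1/\mathcal{G}_k)$ remainder, leaving only $P_kEP_k$ at $O(1)$. That is true for the derivative of the \emph{projector} $P_k=\sum_{j\le k}\Phi_j$, but the object you are differentiating is $\Dot{A}_k=\sum_{j\le k}\lambda_j\Phi_j$. The extra factor $\lambda_j$ means the cross-block pieces of $\sum_{j\le k}\lambda_j\Phi_j'$ carry weight $\tfrac{\lambda_j}{\lambda_j-\lambda_i}$ for $i>k$, which is $1+O(1/\mathcal{G}_k)$, not $O(1/\mathcal{G}_k)$. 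That leading ``$1$'' is precisely what produces the $V_\varnothing^{(u)}V_\varnothing^{(u)\top}$ factor in the final formula; if you discard it you will end up with $V_k^{(u)}V_k^{(u)\top}\mathfrak{l}\mathfrak{l}^\top V_k^{(u)}V_k^{(u)\top}$ rather than $V_k^{(u)}V_k^{(u)\top}\mathfrak{l}\mathfrak{l}^\top(I+V_\varnothing^{(u)}V_\varnothing^{(u)\top})$. The paper extracts this cleanly via the geometric series $\tfrac{\lambda_j}{\lambda_j-\lambda_i}=\sum_{p\ge 0}(\lambda_i/\lambda_j)^p$, keeping the $p=0$ term at $O(1)$ and absorbing $p\ge 1$ into $\mathcal{M}_e'=O(1/\mathcal{G}_k)$. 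You clearly know the target expression, so this is likely expositional rather than a conceptual error, but when you execute the computation be sure to differentiate $\lambda_j\Phi_j$, not $\Phi_j$ alone --- similarly, the within-top-$k$ cross terms ($i\ne j$, both $\le k$) do not vanish but pair up as $\tfrac{\lambda_j}{\lambda_j-\lambda_i}+\tfrac{\lambda_i}{\lambda_i-\lambda_j}=1$, which is another place the paper's explicit decomposition into $\mathcal{M}_a',\mathcal{M}_b',\mathcal{M}_c'$ keeps the accounting straight.
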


\begin{proof}
As we shown in Sec~\ref{sec:sorl_sup_matrix_kmeans}, we can now also write the K-means measure as the function of perturbation:  
$$\mathcal{M}_{kms}(\delta) = \frac{\operatorname{Tr}((I-H_\Pi)Z(\delta)Z(\delta)^{\top})}{\operatorname{Tr}((H_\Pi - \frac{1}{N} \mathbf{1}_{N \times N} )Z(\delta)Z(\delta)^{\top})}. $$
The proof is directly given by the following Lemma~\ref{lemma:sup_kms_derivative}.
\end{proof}

\begin{lemma} Let $\eta_{1}, \eta_{2}$ be two real values and $\Upsilon  = (1+\eta_{2})H_\Pi - I - \frac{\eta_{2}}{N} \mathbf{1}_{N}\mathbf{1}_{N}^{\top}$. Let  the spectrum gap $\mathcal{G}_k = \frac{\lambda^{(u)}_k}{\lambda^{(u)}_{k+1}}$, 
we have the derivative of the K-means measure evaluated at $\delta=0$: 
\begin{align*}
    &[\mathcal{M}_{kms}(\delta)]'\Bigr|_{\delta=0} = 
    \\&~~- \eta_{1} \operatorname{Tr} \left( \Upsilon \left(V_k^{(u)} V_k^{(u)\top} \mathfrak{l} \mathfrak{l}^{\top} - 2\Tilde{A}_k^{(u)} D_{l} +  V_k^{(u)} V_k^{(u)\top} \mathfrak{l}\mathfrak{l}^{\top} V_\varnothing^{(u)}  V_\varnothing^{(u)\top}\right)\right) + O(\frac{1}{\mathcal{G}_k}). 
\end{align*}
\label{lemma:sup_kms_derivative}
\end{lemma}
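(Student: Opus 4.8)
\textbf{Proof plan for Lemma~\ref{lemma:sup_kms_derivative}.}

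The plan is to compute the derivative $[\mathcal{M}_{kms}(\delta)]'$ at $\delta=0$ via the quotient rule, which requires first understanding how $Z(\delta)Z(\delta)^{\top}$ varies with $\delta$ to first order. Recall $Z(\delta)Z(\delta)^{\top} = D(\delta)^{-1/2}\,\Dot{A}_k(\delta)\,D(\delta)^{-1/2}$, and all three factors depend on $\delta$. First I would expand each factor. Since $D(\delta) = I_N + \delta D_{l}$, we have $D(\delta)^{-1/2} = I_N - \tfrac{1}{2}\delta D_{l} + O(\delta^2)$, so the derivative of the normalization factor is $-\tfrac12 D_{l}$. For $\Dot{A}_k(\delta)$, I would write $\Dot{A}(\delta) = D(\delta)^{-1/2}A(\delta)D(\delta)^{-1/2}$ with $A(\delta) = \eta_u A^{(u)} + \delta\,\mathfrak{l}\mathfrak{l}^{\top}$, and expand: $[\Dot{A}(\delta)]'\big|_{\delta=0} = \mathfrak{l}\mathfrak{l}^{\top} - \tfrac12(D_{l}\Tilde{A}^{(u)} + \Tilde{A}^{(u)}D_{l})$ after substituting $\eta_u A^{(u)} = \Tilde{A}^{(u)}$ in the $\delta=0$ normalization (using $\operatorname{diag}(\mathbf 1_N^\top A(0)) = I_N$). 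The key analytic ingredient is the derivative of the rank-$k$ spectral truncation operator: for a symmetric matrix perturbation $\Dot{A}(\delta)$ with a spectral gap $\mathcal{G}_k$ between the $k$-th and $(k+1)$-th singular values, the truncation $\Dot{A}_k(\delta)$ is differentiable and $[\Dot{A}_k(\delta)]'$ equals the "gap-respecting" part of $[\Dot{A}(\delta)]'$, namely $P_k M P_k + P_k M P_\varnothing \tfrac{\Sigma_k}{\Sigma_k - \Sigma_\varnothing} + \cdots$, where $P_k = V_k^{(u)}V_k^{(u)\top}$, $P_\varnothing = V_\varnothing^{(u)}V_\varnothing^{(u)\top}$, and $M = [\Dot{A}(\delta)]'\big|_{\delta=0}$; the off-diagonal (mixed) blocks carry a factor $O(1/\mathcal{G}_k)$ and get absorbed into the error term, except for the piece that survives because it is multiplied by the large singular values $\Sigma_k$.

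Second, I would assemble the derivative of $Z(\delta)Z(\delta)^{\top}$ at $\delta=0$ by the product rule across the three factors:
\begin{align*}
[Z(\delta)Z(\delta)^{\top}]'\Bigr|_{\delta=0} = -\tfrac12 D_{l}\Tilde{A}_k^{(u)} - \tfrac12 \Tilde{A}_k^{(u)} D_{l} + [\Dot{A}_k(\delta)]'\Bigr|_{\delta=0},
\end{align*}
and then combine with the $-\tfrac12(D_{l}\Tilde{A}^{(u)} + \Tilde{A}^{(u)}D_{l})$ term inside the spectral derivative. Restricting to the top-$k$ subspace (which is what the trace against $H_\Pi$-type matrices effectively sees), the $D_{l}$-symmetrized terms collapse to $-2\Tilde{A}_k^{(u)}D_{l}$ up to $O(1/\mathcal{G}_k)$ bookkeeping, while the $\mathfrak{l}\mathfrak{l}^{\top}$ term projects to $P_k\mathfrak{l}\mathfrak{l}^{\top} + P_k\mathfrak{l}\mathfrak{l}^{\top}P_\varnothing$ (the latter from the surviving mixed block). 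This yields the matrix
\begin{align*}
G \triangleq V_k^{(u)}V_k^{(u)\top}\mathfrak{l}\mathfrak{l}^{\top} - 2\Tilde{A}_k^{(u)}D_{l} + V_k^{(u)}V_k^{(u)\top}\mathfrak{l}\mathfrak{l}^{\top}V_\varnothing^{(u)}V_\varnothing^{(u)\top} + O(1/\mathcal{G}_k).
\end{align*}

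Third, I would apply the quotient rule to $\mathcal{M}_{kms}(\delta) = \operatorname{Tr}((I-H_\Pi)Z(\delta)Z(\delta)^\top) / \operatorname{Tr}((H_\Pi - \tfrac1N\mathbf 1_{N\times N})Z(\delta)Z(\delta)^\top)$. Writing the numerator as $\mathcal N(\delta)$ and denominator as $\mathcal D(\delta)$, the derivative at $0$ is $(\mathcal N'\mathcal D - \mathcal N \mathcal D')/\mathcal D^2$. Both $\mathcal N'$ and $\mathcal D'$ are traces of $G$ against fixed matrices $(I-H_\Pi)$ and $(H_\Pi - \tfrac1N\mathbf 1\mathbf 1^\top)$ respectively. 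I would then introduce $\eta_{1}$ as the scalar $1/\mathcal D(0)$ (absorbing the constant denominator value at $\delta=0$) and $\eta_{2}$ so that the linear combination $(1+\eta_{2})H_\Pi - I - \tfrac{\eta_{2}}{N}\mathbf 1_N\mathbf 1_N^\top = \Upsilon$ captures $-((I-H_\Pi) - \tfrac{\mathcal N(0)}{\mathcal D(0)}(H_\Pi - \tfrac1N\mathbf 1\mathbf 1^\top))$ up to the positive rescaling; this is a purely bookkeeping identification of constants. Finally, using that $\Upsilon_{xx'}>0$ iff $x,x'$ share a label (which follows because $H_\Pi$ averages within true clusters), I conclude $[\mathcal M_{kms}(\delta)]'|_{\delta=0} = -\eta_{1}\operatorname{Tr}(\Upsilon G) + O(1/\mathcal{G}_k)$, matching the claimed formula. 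Theorem~\ref{th:sorl_main} then follows by a first-order Taylor expansion: $\Delta_{kms}(\delta) = \mathcal M_{kms}(0) - \mathcal M_{kms}(\delta) = -\delta\,[\mathcal M_{kms}]'(0) + O(\delta^2)$, absorbing the sign.

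\textbf{Main obstacle.} The delicate step is the differentiability and first-order expansion of the rank-$k$ spectral truncation $\Dot{A}_k(\delta)$, together with tracking which error terms are genuinely $O(1/\mathcal{G}_k)$ versus which survive. Standard matrix perturbation theory (Davis–Kahan / resolvent contour integration, as used implicitly elsewhere in the paper) gives $\|\Phi_j(\delta) - \Phi_j\| = O(\|[\Dot A]'\|/\mathcal{G}_k)\cdot\delta$ for the eigenprojectors, but here the truncation is weighted by the singular values $\lambda_j(\delta)$, so one must separately expand $\lambda_j(\delta) = \lambda_j^{(u)} + \delta\langle v_j^{(u)}, M v_j^{(u)}\rangle + O(\delta^2)$ and combine. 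The term $V_k^{(u)}V_k^{(u)\top}\mathfrak{l}\mathfrak{l}^{\top}V_\varnothing^{(u)}V_\varnothing^{(u)\top}$ in the answer is exactly the fingerprint of the mixed $P_k$–$P_\varnothing$ block surviving because it is amplified by $\Sigma_k$ while the reverse block $P_\varnothing M P_k$ is suppressed; getting this asymmetry right is where care is needed. I would handle it by the contour-integral representation $\Dot A_k(\delta) = \tfrac{1}{2\pi i}\oint_\Gamma z\,(zI - \Dot A(\delta))^{-1}\,dz$ over a contour $\Gamma$ enclosing the top-$k$ singular values, differentiate under the integral, and read off the blocks; the $1/\mathcal{G}_k$ dependence emerges from the distance of $\Gamma$ to the remaining spectrum.
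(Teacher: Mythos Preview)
Your overall strategy matches the paper's: apply the quotient rule with $\eta_{1} = 1/\mathcal{M}_{inter}$ and $\eta_{2} = \mathcal{M}_{intra}/\mathcal{M}_{inter}$, compute $[D(\delta)^{-1/2}]'|_{\delta=0}=-\tfrac12 D_l$ and $[\Dot{A}(\delta)]'|_{\delta=0}=\mathfrak{l}\mathfrak{l}^\top-\tfrac12(D_l\Tilde{A}^{(u)}+\Tilde{A}^{(u)}D_l)$, and differentiate the spectral truncation. The paper cites the eigenvalue/eigenprojector derivative formulas from Greenbaum et al.\ (2020) directly; your contour-integral alternative would yield exactly the same residue coefficients $c_{ij}=\tfrac{1}{2\pi i}\oint_\Gamma\frac{z\,dz}{(z-\lambda_i)(z-\lambda_j)}$.

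However, your explanation of how the pieces combine is wrong in ways that would derail the calculation if executed as written. First, the mixed-block coefficient $c_{ij}=\frac{\lambda_j}{\lambda_j-\lambda_i}$ for $j\le k<i$ equals $1+O(1/\mathcal{G}_k)$, \emph{not} $O(1/\mathcal{G}_k)$: both $P_k M P_\varnothing$ and $P_\varnothing M P_k$ contribute at leading order, and under the trace with the symmetric $\Upsilon$ they are equal, which is why only one appears in the final formula---there is no ``amplified versus suppressed'' asymmetry. Second, tracing against $\Upsilon$ does \emph{not} restrict to the top-$k$ subspace; $\Upsilon$ is a general symmetric matrix. The clean form $-2\Tilde{A}_k^{(u)}D_l$ arises not from any projection argument but from algebraic cancellations the paper carries out explicitly: it splits $\operatorname{Tr}(\Upsilon[ZZ^\top]')$ into $\mathcal{M}_a'$ (from $[D^{-1/2}]'$), $\mathcal{M}_b'$ (from $[\lambda_j]'$), and $\mathcal{M}_c'$ (from $[\Phi_j]'$), and shows for instance that for $i>k$ the $D_l$-coefficients from $\mathcal{M}_a'$ and $\mathcal{M}_c'$ combine as $\tfrac{\lambda_j}{2}+\tfrac{\lambda_j}{\lambda_j-\lambda_i}\cdot\tfrac{\lambda_j+\lambda_i}{2}=\tfrac{\lambda_j}{\lambda_j-\lambda_i}\cdot\lambda_j$, after which one expands $\tfrac{\lambda_j}{\lambda_j-\lambda_i}=1+\sum_{p\ge 1}(\lambda_i/\lambda_j)^p$ and absorbs the geometric tail into $O(1/\mathcal{G}_k)$. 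For $i,j\le k$ the $(i,j)$ and $(j,i)$ terms in $\mathcal{M}_c'$ pair up with coefficients $\tfrac{\lambda_j}{\lambda_j-\lambda_i}+\tfrac{\lambda_i}{\lambda_i-\lambda_j}=1$. Your contour route recovers these same coefficients, but you still have to carry out this bookkeeping rather than invoke a projection shortcut that does not hold.
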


The proof for Lemma~\ref{lemma:sup_kms_derivative} is lengthy.  We postpone it to Sec.~\ref{sec:sorl_sup_proof_kms_derivative}.

\subsubsection{Proof of Theorem~\ref{th:sorl_main_simp}}
\label{sec:sorl_sup_proof_main_simp}
We start by showing the justification of the assumptions made in Theorem~\ref{th:sorl_main_simp}. 

\begin{assumption}
We assume the spectral gap $\mathcal{G}_k$ is large. Such an assumption is commonly used in theory works using spectral analysis~\citep{shen2022connect,joseph2016impact}.
\label{ass:spectral-gap}
\end{assumption}
\begin{assumption}
We assume $\mathfrak{l}$ lies in the linear span of $V_k^{(u)}$. \textit{i.e.}, $V_k^{(u)} V_k^{(u)\top} \mathfrak{l} = \mathfrak{l},   V_{\varnothing}^{(u)\top}\mathfrak{l} = 0$. The goal of this assumption is to simplify \\ $(V_k^{(u)} V_k^{(u)\top} \mathfrak{l} \mathfrak{l}^{\top} +  V_k^{(u)} V_k^{(u)\top} \mathfrak{l}\mathfrak{l}^{\top} V_\varnothing^{(u)}  V_\varnothing^{(u)\top})$ to $\mathfrak{l}\mathfrak{l}^{\top}$. 
\label{ass:tl_in_vk_space}
\end{assumption}
\begin{assumption}
For any $\pi_c \in \Pi$, $\forall i, j \in \pi_c, \mathfrak{l}_{(i)} =  \mathfrak{l}_{(j)} =: \mathfrak{l}_{\pi_c}$. Recall that the $\mathfrak{l}_{(i)}$ means the connection between the $i$-th sample to the labeled data. Here we can view $\mathfrak{l}_{\pi_c}$ as the \textit{ connection between class $c$ to the labeled data}. 
\label{ass:tl_cls}
\end{assumption}

\begin{theorem} (Recap of Theorem~\ref{th:sorl_main_simp}.) 
With Assumption~\ref{ass:spectral-gap},~\ref{ass:tl_in_vk_space} and ~\ref{ass:tl_cls}.
 Given $\delta, \eta_{1}, \eta_{2} > 0 $, we have: 
\begin{align*}
    &\Delta_{kms}(\delta) \geq  \delta \eta_{1}\eta_{2} \sum_{\pi_c \in \Pi}  |\pi_c| \mathfrak{l}_{\pi_c} \Delta_{\pi_c}(\delta),
\end{align*}
where 
\begin{equation*}
    \Delta_{\pi_c}(\delta) = (\mathfrak{l}_{\pi_c} - \frac{1}{N}) - 2(1-\frac{|\pi_c|}{N})(\mathbb{E}_{i \in \pi_c} \mathbb{E}_{j \in \pi_c}\*z_i^{\top}\*z_j - \mathbb{E}_{i \in \pi_c} \mathbb{E}_{j \notin \pi_c}\*z_i^{\top}\*z_j).
\end{equation*}
\label{th:sorl_sup_main_simp}
\end{theorem}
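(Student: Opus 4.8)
\textbf{Proof proposal for Theorem~\ref{th:sorl_sup_main_simp}.}

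The plan is to specialize the general derivative formula from Theorem~\ref{th:sorl_sup_main} (equivalently Lemma~\ref{lemma:sup_kms_derivative}) under Assumptions~\ref{ass:spectral-gap},~\ref{ass:tl_in_vk_space},~\ref{ass:tl_cls}, and then rewrite the resulting trace as a class-wise sum. First I would invoke Assumption~\ref{ass:spectral-gap} to discard the $O(1/\mathcal{G}_k)$ term, and invoke Assumption~\ref{ass:tl_in_vk_space} to collapse $V_k^{(u)} V_k^{(u)\top}\mathfrak{l}\mathfrak{l}^\top + V_k^{(u)} V_k^{(u)\top}\mathfrak{l}\mathfrak{l}^\top V_\varnothing^{(u)} V_\varnothing^{(u)\top}$ down to $\mathfrak{l}\mathfrak{l}^\top$, since $V_k^{(u)} V_k^{(u)\top}\mathfrak{l} = \mathfrak{l}$ and $V_\varnothing^{(u)\top}\mathfrak{l} = 0$. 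At that point the leading term of $\Delta_{kms}(\delta) = \mathcal{M}_{kms}(0) - \mathcal{M}_{kms}(\delta)$ becomes $\delta\eta_1 \operatorname{Tr}\!\bigl(\Upsilon(\mathfrak{l}\mathfrak{l}^\top - 2\Tilde{A}_k^{(u)} \operatorname{diag}(\mathfrak{l}))\bigr) + O(\delta^2)$, where $\Upsilon = (1+\eta_2)H_\Pi - I - \tfrac{\eta_2}{N}\mathbf{1}_N\mathbf{1}_N^\top$.

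Next I would expand the trace into a sum over ground-truth clusters. Using $\Tilde{A}_k^{(u)} = V_k^{(u)}\Sigma_k^{(u)} V_k^{(u)\top}$ together with the closed-form relation $Z^{(u)} = [D^{(u)}]^{-1/2}V_k^{(u)}\sqrt{\Sigma_k^{(u)}}$ from Section~\ref{sec:sorl_feature} (recalling $D^{(u)} = I_N$ under the equal-degree normalization used in the proof of Theorem~\ref{th:sorl_sup_main}), one has $\Tilde{A}_k^{(u)} = Z^{(u)} Z^{(u)\top}$, so entrywise $(\Tilde{A}_k^{(u)})_{ij} = \*z_i^\top\*z_j$. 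Then I would use Assumption~\ref{ass:tl_cls} to treat $\mathfrak{l}$ as constant on each cluster, write $\operatorname{diag}(\mathfrak{l})$ block-constant, and evaluate $\operatorname{Tr}(\Upsilon \mathfrak{l}\mathfrak{l}^\top)$ and $\operatorname{Tr}(\Upsilon \Tilde{A}_k^{(u)}\operatorname{diag}(\mathfrak{l}))$ by grouping rows/columns according to $\Pi$. The matrix $\Upsilon$ has the block structure $(1+\eta_2)\tfrac{1}{|\pi_c|}$ on the intra-cluster block for $\pi_c$, $-\tfrac{\eta_2}{N}$ off-diagonal between clusters, plus the $-I$ term; carrying out the grouping, the $|\pi_c|$-weighted factors appear and $\operatorname{Tr}(\Upsilon \Tilde{A}_k^{(u)}\operatorname{diag}(\mathfrak{l}))$ produces the difference $\mathbb{E}_{i\in\pi_c}\mathbb{E}_{j\in\pi_c}\*z_i^\top\*z_j - \mathbb{E}_{i\in\pi_c}\mathbb{E}_{j\notin\pi_c}\*z_i^\top\*z_j$ with coefficient $(1-|\pi_c|/N)$. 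This yields $\Delta_{kms}(\delta) \geq \delta\eta_1\eta_2\sum_{\pi_c\in\Pi}|\pi_c|\mathfrak{l}_{\pi_c}\Delta_{\pi_c}(\delta) + O(\delta^2)$, and absorbing the $O(\delta^2)$ into the inequality (for small $\delta$, with the $\eta_2$ factored appropriately from $\Upsilon$) gives the stated bound.

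The main obstacle I anticipate is the bookkeeping in the block decomposition of $\operatorname{Tr}(\Upsilon\,\cdot\,)$: one must carefully track the three pieces of $\Upsilon$ (the rescaled $H_\Pi$ term, the identity, and the rank-one mean-subtraction term) across both $\mathfrak{l}\mathfrak{l}^\top$ and $\Tilde{A}_k^{(u)}\operatorname{diag}(\mathfrak{l})$, making sure the $1/|\pi_c|$ normalization in $H_\Pi$ combines correctly with the $|\pi_c|$ cardinality weights to land exactly on $|\pi_c|\mathfrak{l}_{\pi_c}\Delta_{\pi_c}(\delta)$, and that the cross-cluster terms reorganize into the inter-class similarity $\mathbb{E}_{i\in\pi_c}\mathbb{E}_{j\notin\pi_c}\*z_i^\top\*z_j$ with the right coefficient. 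A secondary subtlety is justifying the passage from an equality up to $O(\delta^2)$ to the clean inequality in the theorem statement; I would handle this by noting that the claimed result is a lower bound and that the second-order remainder is controlled uniformly for $\delta$ in a neighborhood of $0$, which is the regime of interest (mild perturbation by labeling), consistent with how Theorem~\ref{th:sorl_sup_main} is stated. Finally, I would cross-check the special case $\mathfrak{l}_{\pi_c}$ equal across all clusters (uniform connection) to confirm the formula degenerates sensibly, and compare against the toy-example computation in Section~\ref{sec:sorl_sup_toy} as a sanity check on signs.
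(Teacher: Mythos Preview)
Your overall strategy matches the paper's: specialize Theorem~\ref{th:sorl_sup_main} under the three assumptions to reduce the leading term to $-\operatorname{Tr}\bigl(\Upsilon(\mathfrak{l}\mathfrak{l}^\top - 2\Tilde{A}_k^{(u)}\operatorname{diag}(\mathfrak{l}))\bigr)$, identify $\Tilde{A}_k^{(u)}$ with $Z^{(u)}Z^{(u)\top}$, and then expand cluster-by-cluster. That part is fine.

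The genuine gap is in how the inequality $\geq$ arises. You suggest it comes from ``absorbing the $O(\delta^2)$'' and from ``$\eta_2$ factored appropriately from $\Upsilon$,'' but neither is correct: the coefficient of $H_\Pi$ in $\Upsilon$ is $(1+\eta_2)$, not $\eta_2$, so you cannot simply factor $\eta_2$ out. A direct block computation of the full trace therefore gives a coefficient $(1+\eta_2)$ on the $H_\Pi$-piece, not the $\eta_2$ in the statement. The paper obtains the inequality by splitting
\[
\Upsilon = \eta_2\Bigl(H_\Pi - \tfrac{1}{N}\mathbf{1}_N\mathbf{1}_N^\top\Bigr) + (H_\Pi - I),
\]
computing the contribution of $(H_\Pi - I)$ separately, and showing it is nonpositive. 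Concretely, with $\mathcal{M}'_H = -\operatorname{Tr}(H_\Pi(\mathfrak{l}\mathfrak{l}^\top - 2\Tilde{A}_k^{(u)}D_l))$ and $\mathcal{M}'_I = \operatorname{Tr}(\mathfrak{l}\mathfrak{l}^\top - 2\Tilde{A}_k^{(u)}D_l)$, one gets
\[
\mathcal{M}'_H + \mathcal{M}'_I = 2\sum_{\pi\in\Pi}|\pi|\,\mathfrak{l}_\pi\bigl(\|\mathbb{E}_{i\in\pi}\*z_i\|_2^2 - \mathbb{E}_{i\in\pi}\|\*z_i\|_2^2\bigr) \leq 0
\]
by Jensen's inequality (convexity of $\|\cdot\|_2^2$). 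Dropping this nonpositive piece is what turns the derivative equality into an upper bound on $[\mathcal{M}_{kms}(\delta)]'\big|_{\delta=0}$, and hence a lower bound on $\Delta_{kms}(\delta)$, with precisely the $\eta_2$ prefactor. Without this step your cluster bookkeeping will produce the wrong constant and no inequality; this is the missing idea you need to insert before the class-wise expansion.
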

\begin{proof}
    The proof is directly given by Lemma~\ref{lemma:sup_kms_derivative_simp} and plugging the definition of $\Delta_{kms}(\delta)$.
\end{proof}

\begin{lemma} With Assumption~\ref{ass:spectral-gap}~\ref{ass:tl_in_vk_space} and ~\ref{ass:tl_cls},
we have the derivative of K-means measure with the upper bound: 
\begin{align*}
    [\mathcal{M}_{kms}(\delta)]'\Bigr|_{\delta=0} &\leq - \eta_{1}\eta_{2} \sum_{\pi \in \Pi}  |\pi| \mathfrak{l}_{\pi} \left((\mathfrak{l}_{\pi} - \frac{1}{N}) - 2(\boldsymbol{\mu}_\pi^{\top}\boldsymbol{\mu}_\pi - \boldsymbol{\mu}_\pi^{\top}\boldsymbol{\mu}_{\Pi})\right). 
\end{align*}
\label{lemma:sup_kms_derivative_simp}
\end{lemma}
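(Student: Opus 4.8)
\textbf{Proof proposal for Lemma~\ref{lemma:sup_kms_derivative_simp}.}

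The plan is to start from the general derivative formula in Lemma~\ref{lemma:sup_kms_derivative} and then collapse it using the three simplifying assumptions. First I would recall that
\[
[\mathcal{M}_{kms}(\delta)]'\Bigr|_{\delta=0} = - \eta_{1} \operatorname{Tr} \!\left( \Upsilon \left(V_k^{(u)} V_k^{(u)\top} \mathfrak{l} \mathfrak{l}^{\top} - 2\Tilde{A}_k^{(u)} D_{l} +  V_k^{(u)} V_k^{(u)\top} \mathfrak{l}\mathfrak{l}^{\top} V_\varnothing^{(u)}  V_\varnothing^{(u)\top}\right)\right) + O(\tfrac{1}{\mathcal{G}_k}),
\]
with $\Upsilon = (1+\eta_{2})H_\Pi - I - \frac{\eta_{2}}{N}\mathbf{1}_N\mathbf{1}_N^\top$ and $D_l = \operatorname{diag}(\mathfrak{l})$. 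By Assumption~\ref{ass:spectral-gap} the $O(1/\mathcal{G}_k)$ remainder is negligible and is absorbed/dropped. By Assumption~\ref{ass:tl_in_vk_space}, $V_k^{(u)}V_k^{(u)\top}\mathfrak{l} = \mathfrak{l}$ and $V_\varnothing^{(u)\top}\mathfrak{l} = 0$, so the first and third terms in the parenthesis combine to just $\mathfrak{l}\mathfrak{l}^\top$ (the third term vanishes). For the middle term, I would use $\Tilde{A}_k^{(u)} = V_k^{(u)}\Sigma_k^{(u)}V_k^{(u)\top}$ and the relation $Z^{(u)} = [D^{(u)}]^{-1/2} V_k^{(u)}\sqrt{\Sigma_k^{(u)}}$ with $D^{(u)} = I$ (the normalization convention $\operatorname{diag}(\mathbf 1_N^\top A(0)) = I_N$ used in Sec.~\ref{sec:sorl_sup_proof_main}), so that $\Tilde{A}_k^{(u)} = Z^{(u)}Z^{(u)\top} = ZZ^\top$ with rows $\*z_i$. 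Hence the derivative becomes $-\eta_1 \operatorname{Tr}\big(\Upsilon(\mathfrak{l}\mathfrak{l}^\top - 2 ZZ^\top D_l)\big)$.

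Next I would expand the trace termwise against $\Upsilon = (1+\eta_2)H_\Pi - I - \frac{\eta_2}{N}\mathbf1_N\mathbf1_N^\top$. The strategy is to write each trace as a double sum over sample indices, group indices by their ground-truth cluster $\pi$, and use Assumption~\ref{ass:tl_cls} to replace $\mathfrak{l}_{(i)}$ by the cluster-level constant $\mathfrak{l}_\pi$ whenever $i\in\pi$. For $\operatorname{Tr}(\Upsilon\,\mathfrak{l}\mathfrak{l}^\top) = \mathfrak{l}^\top\Upsilon\mathfrak{l}$: the $H_\Pi$ block acts as averaging within each cluster so $\mathfrak{l}^\top H_\Pi\mathfrak{l} = \sum_\pi |\pi|\mathfrak{l}_\pi^2$; the $-I$ piece gives $-\sum_\pi|\pi|\mathfrak{l}_\pi^2$; and the $-\frac{\eta_2}{N}\mathbf1\mathbf1^\top$ piece gives $-\frac{\eta_2}{N}(\sum_\pi|\pi|\mathfrak{l}_\pi)^2$. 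Combining, $\mathfrak{l}^\top\Upsilon\mathfrak{l} = \eta_2\sum_\pi|\pi|\mathfrak{l}_\pi^2 - \frac{\eta_2}{N}\big(\sum_\pi|\pi|\mathfrak{l}_\pi\big)^2 = \eta_2\sum_\pi|\pi|\mathfrak{l}_\pi(\mathfrak{l}_\pi - \frac1N\sum_{\pi'}|\pi'|\mathfrak{l}_{\pi'})$, and since $\frac1N\sum_{\pi'}|\pi'|\mathfrak{l}_{\pi'}$ is a normalized average of the $\mathfrak{l}_{\pi'}$'s, I would relax it upward to $\frac1N$ times the trivial bound to land on the stated $(\mathfrak{l}_\pi - \frac1N)$ factor — this relaxation is where the inequality (rather than equality) in the lemma comes from. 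For $\operatorname{Tr}(\Upsilon\,ZZ^\top D_l)$: I would write $\operatorname{Tr}(\Upsilon ZZ^\top D_l) = \sum_i \mathfrak{l}_{(i)}(\Upsilon ZZ^\top)_{ii}$; using Assumption~\ref{ass:tl_cls} this becomes $\sum_\pi \mathfrak{l}_\pi \sum_{i\in\pi}(\Upsilon ZZ^\top)_{ii}$, and the inner sum $\sum_{i\in\pi}(\Upsilon ZZ^\top)_{ii}$ is precisely (up to the $(1+\eta_2)$ scaling and recognizing $H_\Pi ZZ^\top$ yields cluster-mean inner products $\boldsymbol\mu_\pi$) a combination of $\boldsymbol\mu_\pi^\top\boldsymbol\mu_\pi$, the raw average $\frac1N\operatorname{Tr}$-type terms, and $\boldsymbol\mu_\pi^\top\boldsymbol\mu_\Pi$. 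The key identity to unpack here is that $\sum_{i\in\pi} (H_\Pi ZZ^\top)_{ii} = |\pi|\,\boldsymbol\mu_\pi^\top\boldsymbol\mu_\pi$ and $\sum_{i\in\pi}(\frac1N\mathbf1\mathbf1^\top ZZ^\top)_{ii} = |\pi|\,\boldsymbol\mu_\pi^\top\boldsymbol\mu_\Pi$, after which everything assembles into $\eta_2\sum_\pi|\pi|\mathfrak{l}_\pi(\boldsymbol\mu_\pi^\top\boldsymbol\mu_\pi - \boldsymbol\mu_\pi^\top\boldsymbol\mu_\Pi)$. Putting the two pieces together with the overall $-\eta_1$ and the factor $2$ on the middle term yields the claimed bound.

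The main obstacle I anticipate is the bookkeeping in the middle-term expansion: keeping track of how $\Upsilon$'s three components interact with the $\operatorname{diag}(\mathfrak{l})$ weighting and with $ZZ^\top$, and correctly identifying which combinations produce $\boldsymbol\mu_\pi^\top\boldsymbol\mu_\pi$ versus $\boldsymbol\mu_\pi^\top\boldsymbol\mu_\Pi$. In particular the $-\frac{\eta_2}{N}\mathbf1_N\mathbf1_N^\top$ term contributes a global mean that must be carefully matched to the global feature mean $\boldsymbol\mu_\Pi = \mathbb{E}_{j\in[N]}\*z_j$, and the factor $(1-\frac{|\pi_c|}{N})$ appearing in $\Delta_{\pi_c}(\delta)$ in Theorem~\ref{th:sorl_sup_main_simp} must emerge from the interplay of the $H_\Pi$ term (which sees only $\pi_c$) and the $\frac1N\mathbf1\mathbf1^\top$ term (which sees all samples, including the $|\pi_c|$ inside $\pi_c$). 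I would handle this by rewriting $\mathbb{E}_{i\in\pi_c}\mathbb{E}_{j\in[N]}\*z_i^\top\*z_j = \frac{|\pi_c|}{N}\mathbb{E}_{i\in\pi_c}\mathbb{E}_{j\in\pi_c}\*z_i^\top\*z_j + (1-\frac{|\pi_c|}{N})\mathbb{E}_{i\in\pi_c}\mathbb{E}_{j\notin\pi_c}\*z_i^\top\*z_j$, so that $\boldsymbol\mu_{\pi_c}^\top\boldsymbol\mu_{\pi_c} - \boldsymbol\mu_{\pi_c}^\top\boldsymbol\mu_\Pi = (1-\frac{|\pi_c|}{N})\big(\mathbb{E}_{i\in\pi_c}\mathbb{E}_{j\in\pi_c}\*z_i^\top\*z_j - \mathbb{E}_{i\in\pi_c}\mathbb{E}_{j\notin\pi_c}\*z_i^\top\*z_j\big)$, which is exactly the intra-minus-inter similarity structure in the statement. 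Everything else is routine trace manipulation and the final sign/coefficient collection.
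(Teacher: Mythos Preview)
Your overall approach---simplify via Assumptions~\ref{ass:spectral-gap}, \ref{ass:tl_in_vk_space} to land on $-\eta_1\operatorname{Tr}\big(\Upsilon(\mathfrak{l}\mathfrak{l}^\top - 2ZZ^\top D_l)\big)$, then expand $\Upsilon = (1+\eta_2)H_\Pi - I - \tfrac{\eta_2}{N}\mathbf{1}\mathbf{1}^\top$ block by block using Assumption~\ref{ass:tl_cls}---matches the paper. But you have misidentified the source of the inequality, and this hides a genuine missing step.

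The $\mathfrak{l}\mathfrak{l}^\top$ part is an \emph{equality}, not a relaxation. Since $(\mathfrak{l})_x = \mathbb{E}_{\bar{x}_l\sim\mathcal{P}_{l_1}}\mathcal{T}(x|\bar{x}_l)$ and $\sum_x\mathcal{T}(x|\bar{x}_l)=1$, one has $\sum_x\mathfrak{l}_{(x)} = \sum_\pi|\pi|\mathfrak{l}_\pi = 1$, so $\tfrac{1}{N}\sum_{\pi'}|\pi'|\mathfrak{l}_{\pi'} = \tfrac{1}{N}$ exactly. Your ``relaxation'' is not where the bound comes from.

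The actual inequality lives in the $ZZ^\top D_l$ part, and your expansion there is incomplete. Writing the three pieces of $\Upsilon$ against $ZZ^\top D_l$ gives
\[
(1+\eta_2)\sum_\pi|\pi|\mathfrak{l}_\pi\,\boldsymbol\mu_\pi^\top\boldsymbol\mu_\pi \;-\; \sum_\pi\mathfrak{l}_\pi\sum_{i\in\pi}\|\*z_i\|_2^2 \;-\; \eta_2\sum_\pi|\pi|\mathfrak{l}_\pi\,\boldsymbol\mu_\pi^\top\boldsymbol\mu_\Pi,
\]
and the $-I$ piece does \emph{not} cancel the $H_\Pi$ piece: $\sum_{i\in\pi}\|\*z_i\|_2^2 \ne |\pi|\,\boldsymbol\mu_\pi^\top\boldsymbol\mu_\pi$ in general. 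The paper handles this by grouping the $H_\Pi$- and $I$-contributions (their $\mathcal{M}'_H$ and $\mathcal{M}'_I$) and observing
\[
\mathcal{M}'_H + \mathcal{M}'_I \;=\; 2\sum_\pi|\pi|\mathfrak{l}_\pi\big(\|\mathbb{E}_{i\in\pi}\*z_i\|_2^2 - \mathbb{E}_{i\in\pi}\|\*z_i\|_2^2\big)\;\le\;0
\]
by Jensen. This is the only inequality in the proof; after dropping $\mathcal{M}'_H + \mathcal{M}'_I$, what remains is $\eta_2(\mathcal{M}'_H + \mathcal{M}'_{\mathbf 1})$, which rearranges exactly (using $\sum_\pi|\pi|\mathfrak{l}_\pi=1$) into the stated bound. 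Once you insert this Jensen step in the right place, the rest of your bookkeeping (including the $\boldsymbol\mu_\pi^\top\boldsymbol\mu_\pi - \boldsymbol\mu_\pi^\top\boldsymbol\mu_\Pi$ identification) goes through.
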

\begin{proof}
By Assumption~\ref{ass:spectral-gap}~\ref{ass:tl_in_vk_space} and ~\ref{ass:tl_cls} and Theorem~\ref{th:sorl_main}, we have
\begin{align*}
    \frac{1}{\eta_{1}}[\mathcal{M}_{kms}(\delta)]'\Bigr|_{\delta=0} &= - \operatorname{Tr} \left( \Upsilon \left(V_k^{(u)} V_k^{(u)\top} \mathfrak{l} \mathfrak{l}^{\top} - 2\Tilde{A}_k^{(u)} D_{l}\right)\right)  \\
     &= - \operatorname{Tr} \left( \Upsilon \left(\mathfrak{l} \mathfrak{l}^{\top} - 2\Tilde{A}_k^{(u)} D_{l}\right)\right) \\
 &= - \operatorname{Tr} \left(\left(
     (1+\eta_{2})H_\Pi - I - \frac{\eta_{2}}{N} \mathbf{1}_{N}\mathbf{1}_{N}^{\top}\right)\left(\mathfrak{l} \mathfrak{l}^{\top} - 2\Tilde{A}_k^{(u)} D_{l}\right)\right)   \\
&= (1+\eta_{2})\mathcal{M}'_{H} + \mathcal{M}'_{I} + \eta_{2}\mathcal{M}'_{\mathbf{1}},
\end{align*}
where 
 \begin{align*}
    \mathcal{M}'_{H}
     &= - \operatorname{Tr} \left(
     H_\Pi \left(\mathfrak{l} \mathfrak{l}^{\top} - 2\Tilde{A}_k^{(u)} D_{l}\right)\right) \\
     &= - \sum_{\pi \in \Pi} \left( |\pi| (\mathbb{E}_{i\in \pi}\mathfrak{l}_{(i)})^2 - \frac{2}{|\pi|}\sum_{i \in \pi}\sum_{j \in \pi}\mathfrak{l}_{(i)} \Tilde{A}_{k,(i,j)}^{(u)} \right)\\
     &= - \sum_{\pi \in \Pi} \left( |\pi| \mathfrak{l}_{\pi}^2 - 2|\pi| \mathfrak{l}_{\pi}\mathbb{E}_{(i,j) \in \pi \times \pi} \*z_i^{\top}\*z_j\right) \\
      &= - \sum_{\pi \in \Pi}  |\pi| \mathfrak{l}_{\pi}(\mathfrak{l}_{\pi} - 2\boldsymbol{\mu}_\pi^{\top}\boldsymbol{\mu}_\pi),
\end{align*}

 \begin{align*}
    \mathcal{M}'_{I}
     &= \operatorname{Tr} \left(
     \left(\mathfrak{l} \mathfrak{l}^{\top} - 2\Tilde{A}_k^{(u)} D_{l}\right)\right) 
     \\
      &= \sum_{\pi \in \Pi}  |\pi| \mathfrak{l}_{\pi}(\mathfrak{l}_{\pi} - 2\mathbb{E}_{i \in \pi} \*z_i^{\top}\*z_i),
\end{align*}
and
 \begin{align*}
    \mathcal{M}'_{\mathbf{1}}
     &= \operatorname{Tr} \left(
     \frac{1}{N} \mathbf{1}_{N}\mathbf{1}_{N}^{\top}\left(\mathfrak{l} \mathfrak{l}^{\top} - 2\Tilde{A}_k^{(u)} D_{l}\right)\right) 
     \\
      &= \frac{1}{N} - 2 \sum_{\pi \in \Pi} \sum_{i \in \pi} \mathfrak{l}_{(i)} \mathbb{E}_{ j \in [N]} \*z_i^{\top}\*z_j
    \\
      &= \frac{1}{N} - 2 \sum_{\pi \in \Pi} |\pi| \mathfrak{l}_{\pi} \boldsymbol{\mu}_{\pi}^{\top}\boldsymbol{\mu}_{\Pi}.
\end{align*}

We observe that 
\begin{align*}
    \mathcal{M}'_{I} +  \mathcal{M}'_{H}&= - \sum_{\pi \in \Pi}  |\pi| \mathfrak{l}_{\pi}(\mathfrak{l}_{\pi} - 2\boldsymbol{\mu}_\pi^{\top}\boldsymbol{\mu}_\pi) + \sum_{\pi \in \Pi}  |\pi| \mathfrak{l}_{\pi}(\mathfrak{l}_{\pi} - 2\mathbb{E}_{i \in \pi} \*z_i^{\top}\*z_i) 
     \\ &= 2\sum_{\pi \in \Pi}  |\pi| \mathfrak{l}_{\pi}(\|\mathbb{E}_{i \in \pi}\*z_i\|_2^2 - \mathbb{E}_{i \in \pi} \|\*z_i\|_2^2)
     \\ &\leq 0,
\end{align*}

where the last inequality is by Jensen's Inequality.  
We then have
\begin{align*}
    \frac{1}{\eta_{1}\eta_{2}}[\mathcal{M}_{kms}(\delta)]'\Bigr|_{\delta=0} 
    &\leq  \mathcal{M}'_{H} + \mathcal{M}'_{\mathbf{1}} 
    \\ &= - \sum_{\pi \in \Pi}  |\pi| \mathfrak{l}_{\pi}(\mathfrak{l}_{\pi} - 2\boldsymbol{\mu}_\pi^{\top}\boldsymbol{\mu}_\pi) + \frac{1}{N} - 2 \sum_{\pi \in \Pi} |\pi| \mathfrak{l}_{\pi} \boldsymbol{\mu}_{\pi}^{\top}\boldsymbol{\mu}_{\Pi}
    \\ &= \frac{1}{N} - \sum_{\pi \in \Pi}  |\pi| \mathfrak{l}_{\pi}(\mathfrak{l}_{\pi} - 2(\boldsymbol{\mu}_\pi^{\top}\boldsymbol{\mu}_\pi - \boldsymbol{\mu}_\pi^{\top}\boldsymbol{\mu}_{\Pi}))
        \\ &= - \sum_{\pi \in \Pi}  |\pi| \mathfrak{l}_{\pi}((\mathfrak{l}_{\pi} - \frac{1}{N} ) - 2(\boldsymbol{\mu}_\pi^{\top}\boldsymbol{\mu}_\pi - \boldsymbol{\mu}_\pi^{\top}\boldsymbol{\mu}_{\Pi})).
\end{align*}
\end{proof}

\subsubsection{Proof of Lemma~\ref{lemma:sup_kms_derivative}}
\label{sec:sorl_sup_proof_kms_derivative}

\textbf{Notation Recap:} 
  We define $\Dot{A}_k(\delta)$ as the top-$k$ SVD components of $\Dot{A}(\delta)$ and can be further written as $\Dot{A}_k(\delta) = \sum_{j=1}^{k} \lambda_j(\delta) \Phi_j(\delta)$. Here the $\lambda_j(\delta)$ is the $j$-th singular value and $\Phi_j(\delta)$ is the $j$-th singular projector ($\Phi_j(\delta) = v_j(\delta)v_j(\delta)^{\top}$) defined by the $j$-th singular vector $v_j(\delta)$. 
\textbf{For brevity, when $\delta=0$, we remove the suffix $(0)$} since it is equivalent to the unperturbed version of notations. For example, we let 
$$\Tilde{A}(0) = \Tilde{A}^{(u)}, Z(0) = Z^{(u)}, \lambda_i(0) = \lambda_i^{(u)}, v_i(0) = v_i^{(u)}, \Phi_i(0) = \Phi_i^{(u)}.$$

\begin{proof}
By the derivative rule, we have, 
\begin{align*}
    \mathcal{M}_{kms}'(\delta) &= \frac{1}{\mathcal{M}_{inter}(\Pi, Z)} \mathcal{M}_{intra}'(\delta) - \frac{\mathcal{M}_{intra}(\Pi, Z)}{\mathcal{M}_{inter}(\Pi, Z)^2} \mathcal{M}_{inter}'(\delta) \\
    &= \eta_{1} \mathcal{M}_{intra}'(\delta) - \eta_{1}\eta_{2} \mathcal{M}_{inter}'(\delta) \\
    &= \eta_{1} \left(\operatorname{Tr}((I_{\Pi}-H_\Pi)[Z(\delta)Z(\delta)^{\top}]') - \eta_{2} \operatorname{Tr}((H_\Pi - \frac{1}{N} \mathbf{1}_{N \times N} )[Z(\delta)Z(\delta)^{\top}]')\right) \\
    &= \eta_{1} \left(\operatorname{Tr}((I_{\Pi}+\frac{\eta_{2}}{N} \mathbf{1}_{N \times N}-(\eta_{2}+1)H_\Pi)[Z(\delta)Z(\delta)^{\top}]') \right)  \\
    &= - \eta_{1} \left(\operatorname{Tr}(\Upsilon [Z(\delta)Z(\delta)^{\top}]') \right) \\
    &= - \eta_{1} \sum_{j=1}^{k} \operatorname{Tr}(\Upsilon [D(\delta)^{-\frac{1}{2}}  \lambda_j(\delta) \Phi_j(\delta)
    D(\delta)^{-\frac{1}{2}}]'),\\
\end{align*}
where we let $\eta_{1} = \frac{1}{\mathcal{M}_{inter}(\Pi, Z)}$, $\eta_{2} = \frac{\mathcal{M}_{intra}(\Pi, Z)}{\mathcal{M}_{inter}(\Pi, Z)}$ and $\Upsilon  = (1+\eta_{2})H_\Pi - I_{\Pi} - \frac{\eta_{2}}{N} \mathbf{1}_{N}\mathbf{1}_{N}^{\top}$. We proceed by showing the calculation of $[D(\delta)^{-\frac{1}{2}}]'$, $[\lambda_j(\delta)]'$ and $[\Phi_j(\delta)]'$.

Since $D(\delta) = I + \delta D_{l},$ then $[D(\delta)^{-\frac{1}{2}}]'\Bigr|_{\delta=0} = -\frac{1}{2}D_{l}$. To calculate $[\lambda_j(\delta)]'$ and $[\Phi_j(\delta)]'$, we first need: 
\begin{align*}
[\Dot{A}(\delta)]'\Bigr|_{\delta=0} &= [D(\delta)^{-\frac{1}{2}} A(\delta) D(\delta)^{-\frac{1}{2}}]' \\
&= [D(\delta)^{-\frac{1}{2}}]' \Tilde{A}^{(u)} +  [A(\delta)]' + \Tilde{A}^{(u)} [D(\delta)^{-\frac{1}{2}}]' \\
&= -\frac{1}{2}D_{l} \Tilde{A}^{(u)} + \mathfrak{l}\mathfrak{l}^{\top} -\frac{1}{2} \Tilde{A}^{(u)} D_{l}.
\end{align*}

Then, according to Equation (3) in ~\cite{greenbaum2020first}, we have: 

\begin{align*}
[\lambda_j(\delta)]'\Bigr|_{\delta=0} &= \operatorname{Tr}(\Phi_j^{(u)}[\Dot{A}(\delta)]') \\
&= \operatorname{Tr}(\Phi_j^{(u)}(-\frac{1}{2}D_{l} \Tilde{A}^{(u)} + \mathfrak{l}\mathfrak{l}^{\top} -\frac{1}{2} \Tilde{A}^{(u)} D_{l})) \\
&= \operatorname{Tr}((-\frac{\lambda_j^{(u)}}{2}D_{l}\Phi_j^{(u)} + \Phi_j^{(u)}\mathfrak{l}\mathfrak{l}^{\top} -\frac{\lambda_j^{(u)}}{2} \Phi_j^{(u)}D_{l})) \\
&= \operatorname{Tr}(\Phi_j^{(u)}(\mathfrak{l}\mathfrak{l}^{\top} - \lambda_j^{(u)} D_{l})).
\end{align*}

According to Equation (10) in ~\cite{greenbaum2020first}, we have: 

\begin{align*}
[\Phi_j(\delta)]'\Bigr|_{\delta=0} &= (\lambda_j^{(u)}I_N - \Dot{A}^{(u)})^{\dagger} [\Dot{A}(\delta)]' \Phi_j^{(u)}+ \Phi_j^{(u)}[\Dot{A}(\delta)]' (\lambda_j^{(u)}I_N - \Dot{A}^{(u)})^{\dagger} \\
&= \sum^{N}_{i \neq j} \frac{1}{\lambda_j^{(u)} - \lambda_i^{(u)}} (\Phi_i^{(u)}[\Dot{A}(\delta)]' \Phi_j^{(u)}+ \Phi_j^{(u)}[\Dot{A}(\delta)]' \Phi_i^{(u)}) \\
&= \sum^{N}_{i \neq j} \frac{1}{\lambda_j^{(u)} - \lambda_i^{(u)}} (\Phi_i^{(u)}(-\frac{1}{2}D_{l} \Tilde{A}^{(u)} + \mathfrak{l}\mathfrak{l}^{\top} -\frac{1}{2} \Tilde{A}^{(u)} D_{l}) \Phi_j^{(u)}+ \Phi_j^{(u)}(...)\Phi_i^{(u)} ) \\
&= \sum^{N}_{i \neq j} \frac{1}{\lambda_j^{(u)} - \lambda_i^{(u)}} (\Phi_i^{(u)}(\mathfrak{l}\mathfrak{l}^{\top} -\frac{\lambda_j^{(u)}+\lambda_i^{(u)}}{2}D_{l}) \Phi_j^{(u)}+ \Phi_j^{(u)}(\mathfrak{l}\mathfrak{l}^{\top} -\frac{\lambda_j^{(u)}+\lambda_i^{(u)}}{2}D_{l})\Phi_i^{(u)} ). 
\end{align*}

Now we calculate the derivative of the $K$-means loss: 

\begin{align*}
    &\frac{1}{\eta_{1}}[\mathcal{M}_{kms}(\delta)]'\Bigr|_{\delta=0} \\
    &= - \sum_{j=1}^{k} [\operatorname{Tr}(\Upsilon D(\delta)^{-\frac{1}{2}}  \lambda_j(\delta) \Phi_j(\delta)
    D(\delta)^{-\frac{1}{2}})]'\Bigr|_{\delta=0} \\
    &= - \sum_{j=1}^{k} \operatorname{Tr}\left(\Upsilon \left([D(\delta)^{-\frac{1}{2}}]'  \lambda_j^{(u)} \Phi_j^{(u)}+  \lambda_j^{(u)} \Phi_j^{(u)}
    [D(\delta)^{-\frac{1}{2}}]' +  [\lambda_j(\delta)]' \Phi_j^{(u)}+    \lambda_j^{(u)}[\Phi_j(\delta)]'\right)\right) \\
    &= \sum_{j=1}^{k} \operatorname{Tr}\left(\Upsilon \left(\frac{\lambda_j^{(u)}}{2} D_l  \Phi_j^{(u)}+\frac{\lambda_j^{(u)}}{2} \Phi_j^{(u)} D_l -[\lambda_j(\delta)]' \Phi_j^{(u)}-   \lambda_j^{(u)}[\Phi_j(\delta)]'\right)\right) \\
    &= \mathcal{M}_{a}' + \mathcal{M}_{b}' + \mathcal{M}_{c}',
\end{align*}
where 
\begin{align*}
    \mathcal{M}_{a}' = \sum_{j=1}^{k} \frac{\lambda_j^{(u)}}{2}\operatorname{Tr}\left(\Upsilon \left( D_l  \Phi_j^{(u)}+ \Phi_j^{(u)}    D_l\right)\right), 
\end{align*}
\begin{align*}
    \mathcal{M}_{b}' &= - \sum_{j=1}^{k} \operatorname{Tr}\left(\Upsilon [\lambda_j(\delta)]' \Phi_j^{(u)} \right) 
    = -\sum_{j=1}^{k} \operatorname{Tr}\left( (\mathfrak{l}\mathfrak{l}^{\top} - \lambda_j^{(u)} D_{l})\Phi_j^{(u)} \right)\operatorname{Tr}\left( \Upsilon \Phi_j^{(u)} \right) \\
    &= -\sum_{j=1}^{k} \operatorname{Tr}\left( (\mathfrak{l}\mathfrak{l}^{\top} - \lambda_j^{(u)} D_{l})\Phi_j^{(u)}\Upsilon \Phi_j^{(u)} \right), 
\end{align*}
\begin{align*}
    &\mathcal{M}_{c}' 
    \\ &= - \sum_{j=1}^{k} \operatorname{Tr}\left(\Upsilon    \lambda_j^{(u)}[\Phi_j(\delta)]'\right)\\
    &= - \sum_{j=1}^{k} \operatorname{Tr} \left(  \sum^{N}_{i \neq j} \frac{\lambda_j^{(u)}}{\lambda_j^{(u)} - \lambda_i^{(u)}} (\Upsilon \Phi_i^{(u)}(\mathfrak{l}\mathfrak{l}^{\top} -\frac{\lambda_j^{(u)}+\lambda_i^{(u)}}{2}D_{l}) \Phi_j^{(u)}+ \Upsilon \Phi_j^{(u)}(\mathfrak{l}\mathfrak{l}^{\top} -\frac{\lambda_j^{(u)}+\lambda_i^{(u)}}{2}D_{l})\Phi_i^{(u)} ) \right)\\
    &= - \sum_{j=1}^{k} \operatorname{Tr} \left(  \sum^{N}_{i \neq j} \frac{\lambda_j^{(u)}}{\lambda_j^{(u)} - \lambda_i^{(u)}} \left((\Phi_j^{(u)}\Upsilon \Phi_i^{(u)}+ \Phi_i^{(u)}\Upsilon \Phi_j^{(u)} ) (\mathfrak{l}\mathfrak{l}^{\top} -\frac{\lambda_j^{(u)}+\lambda_i^{(u)}}{2}D_{l}) \right) \right)
    \\ 
    &= - \sum_{j=1}^{k} \operatorname{Tr} \left(  \sum_{i \neq j, i \le k} \frac{\lambda_j^{(u)}}{\lambda_j^{(u)} - \lambda_i^{(u)}}  \left((\Phi_j^{(u)}\Upsilon \Phi_i^{(u)}+ \Phi_i^{(u)}\Upsilon \Phi_j^{(u)} ) (\mathfrak{l}\mathfrak{l}^{\top} -\frac{\lambda_j^{(u)}+\lambda_i^{(u)}}{2}D_{l}) \right) \right)\\
    & ~~~~ - \sum_{j=1}^{k} \operatorname{Tr} \left(  \sum^{N}_{i = k+1} \frac{\lambda_j^{(u)}}{\lambda_j^{(u)} - \lambda_i^{(u)}} \left((\Phi_j^{(u)}\Upsilon \Phi_i^{(u)}+ \Phi_i^{(u)}\Upsilon \Phi_j^{(u)} ) (\mathfrak{l}\mathfrak{l}^{\top} -\frac{\lambda_j^{(u)}+\lambda_i^{(u)}}{2}D_{l}) \right) \right)    \\
    &= - \sum_{j=1}^{k} \operatorname{Tr} \left(  \sum_{i < j} \left(\frac{\lambda_j^{(u)}}{\lambda_j^{(u)} - \lambda_i^{(u)}} + \frac{\lambda_i^{(u)}}{\lambda_i^{(u)} - \lambda_j^{(u)}}\right) \left((\Phi_j^{(u)}\Upsilon \Phi_i^{(u)}+ \Phi_i^{(u)}\Upsilon \Phi_j^{(u)} ) (\mathfrak{l}\mathfrak{l}^{\top} -\frac{\lambda_j^{(u)}+\lambda_i^{(u)}}{2}D_{l}) \right) \right)\\
    & ~~~~ - \sum_{j=1}^{k} \operatorname{Tr} \left(  \sum^{N}_{i = k+1} \frac{\lambda_j^{(u)}}{\lambda_j^{(u)} - \lambda_i^{(u)}} \left((\Phi_j^{(u)}\Upsilon \Phi_i^{(u)}+ \Phi_i^{(u)}\Upsilon \Phi_j^{(u)} ) (\mathfrak{l}\mathfrak{l}^{\top} -\frac{\lambda_j^{(u)}+\lambda_i^{(u)}}{2}D_{l}) \right) \right)
    \\ 
    &= - \sum_{j=1}^{k} \operatorname{Tr} \left(  \sum_{i < j} \left((\Phi_j^{(u)}\Upsilon \Phi_i^{(u)}+ \Phi_i^{(u)}\Upsilon \Phi_j^{(u)} ) (\mathfrak{l}\mathfrak{l}^{\top} -\frac{\lambda_j^{(u)}+\lambda_i^{(u)}}{2}D_{l}) \right) \right)\\
    & ~~~~ - \sum_{j=1}^{k} \operatorname{Tr} \left(  \sum^{N}_{i = k+1} \frac{\lambda_j^{(u)}}{\lambda_j^{(u)} - \lambda_i^{(u)}} \left((\Phi_j^{(u)}\Upsilon \Phi_i^{(u)}+ \Phi_i^{(u)}\Upsilon \Phi_j^{(u)} ) (\mathfrak{l}\mathfrak{l}^{\top} -\frac{\lambda_j^{(u)}+\lambda_i^{(u)}}{2}D_{l}) \right) \right)
    \\
    &= - \sum_{j=1}^{k} \operatorname{Tr} \left(  \sum_{i \neq j, i \le k} \frac{1}{2} \left((\Phi_j^{(u)}\Upsilon \Phi_i^{(u)}+ \Phi_i^{(u)}\Upsilon \Phi_j^{(u)} ) (\mathfrak{l}\mathfrak{l}^{\top} -\frac{\lambda_j^{(u)}+\lambda_i^{(u)}}{2}D_{l}) \right) \right)\\
    & ~~~~ - \sum_{j=1}^{k} \operatorname{Tr} \left(  \sum^{N}_{i = k+1} \frac{\lambda_j^{(u)}}{\lambda_j^{(u)} - \lambda_i^{(u)}} \left((\Phi_j^{(u)}\Upsilon \Phi_i^{(u)}+ \Phi_i^{(u)}\Upsilon \Phi_j^{(u)} ) (\mathfrak{l}\mathfrak{l}^{\top} -\frac{\lambda_j^{(u)}+\lambda_i^{(u)}}{2}D_{l}) \right) \right).
\end{align*}
Thus, we have:
\begin{align*}
    &\mathcal{M}_{b}' + \mathcal{M}_{c}' \\ 
    &= - \sum_{j=1}^{k} \operatorname{Tr} \left(  \sum^{k}_{ i = 1} \frac{1}{2} \left((\Phi_j^{(u)}\Upsilon \Phi_i^{(u)}+ \Phi_i^{(u)}\Upsilon \Phi_j^{(u)} ) (\mathfrak{l}\mathfrak{l}^{\top} -\frac{\lambda_j^{(u)}+\lambda_i^{(u)}}{2}D_{l}) \right) \right)\\
    & ~~~~ - \sum_{j=1}^{k} \operatorname{Tr} \left(  \sum^{N}_{i = k+1} \frac{\lambda_j^{(u)}}{\lambda_j^{(u)} - \lambda_i^{(u)}} \left((\Phi_j^{(u)}\Upsilon \Phi_i^{(u)}+ \Phi_i^{(u)}\Upsilon \Phi_j^{(u)} ) (\mathfrak{l}\mathfrak{l}^{\top} -\frac{\lambda_j^{(u)}+\lambda_i^{(u)}}{2}D_{l}) \right) \right),
\end{align*}
\begin{align*}
    \mathcal{M}_{a}' = & \sum_{j=1}^{k} \frac{\lambda_j^{(u)}}{2}\operatorname{Tr}\left(\Upsilon \left( D_l  \Phi_j^{(u)}+ \Phi_j^{(u)}
    D_l\right)\right) \\
    = & \sum_{j=1}^{k} \frac{\lambda_j^{(u)}}{2}\operatorname{Tr}\left(\left(   \Phi_j^{(u)}\Upsilon  + 
     \Upsilon  \Phi_j^{(u)} \right)D_l\right)\\
    = & \sum_{j=1}^{k} \frac{\lambda_j^{(u)}}{2}\operatorname{Tr}\left(\left(   \Phi_j^{(u)}\Upsilon  \sum^{N}_{i=1}\Phi_i^{(u)}+ 
     \sum^{N}_{i=1}\Phi_i^{(u)}\Upsilon  \Phi_j^{(u)} \right)D_l\right)\\
    = & \sum_{j=1}^{k} \operatorname{Tr}\left( \sum^{N}_{i=1} \frac{\lambda_j^{(u)}}{2}\left(   \Phi_j^{(u)}\Upsilon  \Phi_i^{(u)}+ 
     \Phi_i^{(u)}\Upsilon  \Phi_j^{(u)} \right)D_l\right).
\end{align*}

Then $[\mathcal{M}_{kms-all}(\delta)]'\Bigr|_{\delta=0} / \eta_{1}$ is given by:
\begin{align*}
     &\mathcal{M}_{a}' + \mathcal{M}_{b}' + \mathcal{M}_{c}' \\ 
     &= - \sum_{j=1}^{k} \operatorname{Tr} \left(  \sum^{k}_{ i = 1} \frac{1}{2} \left((\Phi_j^{(u)}\Upsilon \Phi_i^{(u)}+ \Phi_i^{(u)}\Upsilon \Phi_j^{(u)} ) (\mathfrak{l}\mathfrak{l}^{\top} -\frac{3\lambda_j^{(u)}+\lambda_i^{(u)}}{2}D_{l}) \right) \right)\\
    & ~~~~ - \sum_{j=1}^{k} \operatorname{Tr} \left(  \sum^{N}_{i = k+1} \frac{\lambda_j^{(u)}}{\lambda_j^{(u)} - \lambda_i^{(u)}} \left((\Phi_j^{(u)}\Upsilon \Phi_i^{(u)}+ \Phi_i^{(u)}\Upsilon \Phi_j^{(u)} ) (\mathfrak{l}\mathfrak{l}^{\top} -{\lambda_j^{(u)}}D_{l}) \right) \right) \\
    &= - \sum_{j=1}^{k} \operatorname{Tr} \left(  \sum^{k}_{ i = 1} \frac{1}{2} \left((\Phi_j^{(u)}\Upsilon \Phi_i^{(u)}+ \Phi_i^{(u)}\Upsilon \Phi_j^{(u)} ) (\mathfrak{l}\mathfrak{l}^{\top} -{2\lambda_j^{(u)}}D_{l}) \right) \right)\\
    & ~~~~ - \sum_{j=1}^{k} \operatorname{Tr} \left(  \sum^{N}_{i = k+1} \frac{\lambda_j^{(u)}}{\lambda_j^{(u)} - \lambda_i^{(u)}} \left((\Phi_j^{(u)}\Upsilon \Phi_i^{(u)}+ \Phi_i^{(u)}\Upsilon \Phi_j^{(u)} ) (\mathfrak{l}\mathfrak{l}^{\top} -{\lambda_j^{(u)}}D_{l}) \right) \right) 
    \\
    &= - \sum_{j=1}^{k}   \sum^{k}_{ i = 1} v_i^{(u)\top}\Upsilon v_j^{(u)} \cdot v_i^{(u)\top}(\mathfrak{l}\mathfrak{l}^{\top} -{2\lambda_j^{(u)}}D_{l})v_j^{(u)} \\
    & ~~~~ - \sum_{j=1}^{k}   \sum^{N}_{i = k+1} \frac{2\lambda_j^{(u)}}{\lambda_j^{(u)} - \lambda_i^{(u)}}  v_i^{(u)\top}\Upsilon v_j^{(u)} \cdot   v_i^{(u)\top} (\mathfrak{l}\mathfrak{l}^{\top} -{\lambda_j^{(u)}}D_{l}) v_j^{(u)}. 
\end{align*}
We can represent $\frac{\lambda_j^{(u)}}{\lambda_j^{(u)} - \lambda_i^{(u)}} = 1 + \sum_{p=1}^{\infty} (\frac{\lambda_i^{(u)}}{\lambda_j^{(u)}})^p$. Denote the residual term as : 
$$\mathcal{M}'_{e} = -\sum_{j=1}^{k}   \sum^{N}_{i = k+1}   \sum_{p=1}^{\infty} 2(\frac{\lambda_i^{(u)}}{\lambda_j^{(u)}})^p v_i^{(u)\top}\Upsilon v_j^{(u)} \cdot   v_i^{(u)\top} (\mathfrak{l}\mathfrak{l}^{\top} -{\lambda_j^{(u)}}D_{l}) v_j^{(u)} = O(\frac{1}{\mathcal{G}_k}).$$ 
We then have:

\begin{align*}
    &\frac{1}{\eta_{1}}[\mathcal{M}_{kms-all}(\delta)]'\Bigr|_{\delta=0} \\ &= - \operatorname{Tr} (V_k^{(u)\top} \Upsilon V_k^{(u)} \cdot V_k^{(u)\top} \mathfrak{l}\mathfrak{l}^{\top} V_k^{(u)})  + 2 \operatorname{Tr} (V_k^{(u)\top} \Upsilon V_k^{(u)} \cdot \Sigma_k^{(u)} V_k^{(u)\top}D_{l}V_k^{(u)}) \\
    & ~~~~ -  2\operatorname{Tr} (V_{\varnothing}^{(u)\top} \Upsilon V_k^{(u)} \cdot V_k^{(u)\top} \mathfrak{l}\mathfrak{l}^{\top} V_{\varnothing}^{(u)}) + 2\operatorname{Tr} (V_{\varnothing}^{(u)\top} \Upsilon V_k^{(u)} \cdot \Sigma_k^{(u)} V_k^{(u)\top}D_{l}V_{\varnothing}^{(u)}) + \mathcal{M}'_e  \\
    &= - \operatorname{Tr} ( \Upsilon V_k^{(u)} V_k^{(u)\top} \mathfrak{l} \mathfrak{l}^{\top} V_k^{(u)} V_k^{(u)\top}) +  2\operatorname{Tr} (\Upsilon \Tilde{A}_k^{(u)} D_{l}V_k^{(u)} V_k^{(u)\top})\\
    & ~~~~ -  2\operatorname{Tr} (\Upsilon V_k^{(u)} V_k^{(u)\top} \mathfrak{l}\mathfrak{l}^{\top} (I_N - V_k^{(u)}  V_k^{(u)\top})) +  2\operatorname{Tr} (\Upsilon \Tilde{A}_k^{(u)} D_{l}(I_N - V_k^{(u)}  V_k^{(u)\top})) + \mathcal{M}'_e  \\
    &= - 2 \operatorname{Tr} ( \Upsilon V_k^{(u)} V_k^{(u)\top} \mathfrak{l} \mathfrak{l}^{\top}) +  2 \operatorname{Tr} (\Upsilon \Tilde{A}_k^{(u)} D_{l}) + \operatorname{Tr} (\Upsilon V_k^{(u)} V_k^{(u)\top} \mathfrak{l}\mathfrak{l}^{\top} V_k^{(u)}  V_k^{(u)\top}) + \mathcal{M}'_e  \\
    &= - 2\operatorname{Tr} \left( \Upsilon \left(V_k^{(u)} V_k^{(u)\top} \mathfrak{l} \mathfrak{l}^{\top} - \Tilde{A}_k^{(u)} D_{l} - \frac{1}{2} V_k^{(u)} V_k^{(u)\top} \mathfrak{l}\mathfrak{l}^{\top} V_k^{(u)}  V_k^{(u)\top}\right)\right) + \mathcal{M}'_e  \\
    &= - \operatorname{Tr} \left( \Upsilon \left(V_k^{(u)} V_k^{(u)\top} \mathfrak{l} \mathfrak{l}^{\top} - 2\Tilde{A}_k^{(u)} D_{l} +  V_k^{(u)} V_k^{(u)\top} \mathfrak{l}\mathfrak{l}^{\top} V_\varnothing  V_\varnothing^{\top}\right)\right) + O(\frac{1}{\mathcal{G}_k}).
\end{align*}
\end{proof}

\newpage
\subsection{Analysis on Other Contrastive Losses}
\label{sec:sorl_sup_simclr_analysis}
In this section, we discuss the extension of our graphic-theoretic analysis to one of the most common contrastive loss functions -- SimCLR~\citep{chen2020simclr}. SimCLR loss is an extended version of InfoNCE loss~\citep{van2018cpc} that achieves great empirical success and inspires a proliferation of follow-up works~\citep{khosla2020supcon,vaze22gcd,caron2020swav,he2019moco,zbontar2021barlow,bardes2021vicreg,chen2021exploring}. Specifically, SupCon~\citep{khosla2020supcon} extends SimCLR to the supervised setting.  GCD~\citep{vaze22gcd} and OpenCon~\citep{sun2023opencon} further leverage the SupCon and SimCLR losses, and are tailored to the open-world representation learning setting considering both labeled and unlabeled data.

At a high level, we consider a general form of 
the SimCLR and its extensions (including SupCon, GCD, OpenCon) as: 
\begin{equation}
 \mathcal{L}_{\text {gnl}}(f;\mathcal{P}_{+})=-\frac{1}{\tau}\underset{(x, x^+) \sim \mathcal{P}_+}{\mathbb{E}}\left[f(x)^{\top} f(x^+)\right] 
 \quad+\underset{x\sim \mathcal{P}}{\mathbb{E}}\left[\log \left(\underset{\substack{x'\sim \mathcal{P}\\x\neq x'}}{\mathbb{E}} e^{f(x')^{\top} f(x) / \tau}\right)\right],
\end{equation}
where we let the $\mathcal{P}_{+}$ as the distribution of \textbf{positive pairs} defined in Section~\ref{sec:sorl_graph_def}. In SimCLR~\citep{chen2020simclr}, the positive pairs are purely sampled in the \textit{unlabeled case (u)} while SupCon~\citep{khosla2020supcon} considers the \textit{labeled case (l)}. With both labeled and unlabeled data, GCD~\citep{vaze22gcd} and OpenCon~\citep{sun2023opencon} sample positive pairs in both cases. 

In this section, we investigate an alternative form that eases the theoretical analysis (also applied in~\citep{wang2020understanding}):
\begin{align}
 \widehat{\mathcal{L}}_{\text {gnl }}(f;\mathcal{P}_{+})= & -\frac{1}{\tau}\underset{(x, x^+) \sim \mathcal{P}_+}{\mathbb{E}}\left[f(x)^{\top} f(x^+)\right] 
 \quad+\log \left(\underset{\substack{x,x'\sim \mathcal{P}\\x\neq x'}}{\mathbb{E}} e^{f(x')^{\top} f(x) / \tau}\right) \\
 \geq & \mathcal{L}_{\text {gnl }}(f;\mathcal{P}_{+}),
\end{align}
which serves an upper bound of $\mathcal{L}_{\text {gnl }}(f)$ according to Jensen's Inequality. 

\noindent \textbf{A graph-theoretic view.}
Recall in Section~\ref{sec:sorl_graph_def}, we define the graph $G(\mathcal{X}, w)$ with 
vertex set $\mathcal{X}$ and edge weights $w$. Each entry of adjacency matrix $A$ is given by $w_{x x'}$, which denotes the marginal probability of generating the pair for any two augmented data $x, x' \in \mathcal{X}$: 
\begin{align*}
    w_{x x^{\prime}} = \eta_{u} w^{(u)}_{x x^{\prime}} + \eta_{l} w^{(l)}_{x x^{\prime}},
\end{align*}
and $w_{x}$ measures the degree of node $x$:
\begin{align*}
    w_{x} = \sum_{x^{\prime}} w_{x x^{\prime}}.
\end{align*}

One can view the difference between SimCLR and its variants in the following way: 
(1) SimCLR~\citep{chen2020simclr} corresponds to $\eta_l = 0$ when there is no labeled case; (2) SupCon~\citep{khosla2020supcon} corresponds to $\eta_u = 0$ when only labeled case is considered. (3) GCD~\citep{vaze22gcd} and OpenCon~\citep{sun2023opencon} correspond to the cases when $\eta_u,\eta_l$ are both non-zero due to the availability of both labeled and unlabeled data.

With the define marginal probability of sampling positive pairs $w_{xx'}$ and the marginal probability of sampling a single sample $w_x$, we have: 

\begin{align*}
    \widehat{\mathcal{L}}_{\text {gnl }}(Z; G(\mathcal{X},w))  &= -\frac{1}{\tau} \sum_{x,x'\in \mathcal{X}} w_{xx'} f(x)^{\top} f\left(x'\right) + \log \left( \sum_{\substack{x,x'\in \mathcal{X} \\ x \neq x'}} w_{x}w_{x'}e^{f(x')^{\top} f(x) / \tau} \right) 
      \\ &=  -\frac{1}{\tau} \operatorname{Tr} (Z^{\top}AZ) + \log \operatorname{Tr} \left( (D\mathbf{1}_{N}\mathbf{1}_{N}^{\top}D - D^2) \exp(\frac{1}{\tau}ZZ^T)\right). 
\end{align*}

When $\tau$ is large:

\begin{align*}
    \widehat{\mathcal{L}}_{\text {simclr }}(Z; G(\mathcal{X},w))  
     &\approx -\frac{1}{\tau} \operatorname{Tr} (Z^{\top}AZ) + \log \operatorname{Tr} \left( (D\mathbf{1}_{N}\mathbf{1}_{N}^{\top}D - D^2) (\mathbf{1}_{N}\mathbf{1}_{N}^{\top} + \frac{1}{\tau}ZZ^T) \right) \\
     &=  -\frac{1}{\tau} \operatorname{Tr} (Z^{\top}AZ) + \log (1 + \frac{\frac{1}{\tau}\operatorname{Tr}(Z^{\top}(D\mathbf{1}_{N}\mathbf{1}_{N}^{\top}D - D^2)Z)}{\operatorname{Tr}(D)^2 - \operatorname{Tr}(D^2)}) + \text{const} \\
     &\approx  -\frac{1}{\tau} \operatorname{Tr} (Z^{\top}AZ) + \frac{\frac{1}{\tau}\operatorname{Tr}(Z^{\top}(D\mathbf{1}_{N}\mathbf{1}_{N}^{\top}D - D^2)Z)}{\operatorname{Tr}(D)^2 - \operatorname{Tr}(D^2)} + \text{const} \\
     &= - \frac{1}{\tau} \operatorname{Tr} \left( Z^{\top}(A - \frac{D\mathbf{1}_{N}\mathbf{1}_{N}^{\top}D - D^2}{\operatorname{Tr}(D)^2 - \operatorname{Tr}(D^2)}) Z \right) + \text{const}.
\end{align*}

If we further consider the constraint that the $Z^{\top}Z = I$, minimizing $\widehat{\mathcal{L}}_{\text {simclr }}(Z; G(\mathcal{X},w))$ boils down to the eigenvalue problem such that $Z$ is formed by the top-$k$ eigenvectors of matrix $(A - \frac{D\mathbf{1}_{N}\mathbf{1}_{N}^{\top}D - D^2}{\operatorname{Tr}(D)^2 - \operatorname{Tr}(D^2)})$. Recall that our main analysis for Theorem~\ref{th:sorl_main} and Theorem~\ref{th:sorl_main_simp} is based on the insight that the feature space is formed by the top-$k$ eigenvectors of the normalized adjacency matrix $D^{-\frac{1}{2}}AD^{-\frac{1}{2}}$. Viewed in this light, the same analysis could be applied to the SimCLR loss as well, which only differs in the concrete matrix form. We do not include the details in this thesis but leave it as future work.

\newpage
\subsection{Additional Experiments Details}

\subsubsection{Experimental Details of Toy Example}
\label{sec:sorl_sup_exp_vis}
\textbf{Recap of set up}. In Section~\ref{sec:sorl_theory_toy} we consider a toy example that helps illustrate the core idea of our theoretical findings. Specifically, the example aims to cluster 3D objects of different colors and shapes, generated by a 3D rendering software~\citep{johnson2017clevr} with user-defined properties including colors, shape, size, position, etc.  Suppose the training samples come from three shapes, $\mathcal{X}_{\cube{1}}$, $\mathcal{X}_{\sphere{0.7}{gray}}$, $\mathcal{X}_{\cylinder{0.6}}$. Let $\mathcal{X}_{\cube{1}}$ be the sample space with \textbf{known} class, and $\mathcal{X}_{\sphere{0.7}{gray}}, \mathcal{X}_{\cylinder{0.6}}$ be the sample space with \textbf{novel} classes. Further, the two novel classes are constructed to have different relationships with the known class. 
Specifically, the toy dataset contains elements with 5 unique types:

$$\mathcal{X} = \mathcal{X}_{\cube{1}} \cup \mathcal{X}_{\sphere{0.7}{gray}} \cup \mathcal{X}_{\cylinder{0.6}},$$
where
$$\mathcal{X}_{\cube{1}} = \{x_{\textcolor{red}{\cube{1}}}, x_{\textcolor{blue}{\cube{1}}} \},$$
$$\mathcal{X}_{\sphere{0.7}{gray}} = \{x_{\textcolor{red}{\sphere{0.7}{gray}}}, x_{\textcolor{blue}{\sphere{0.7}{gray}}}\},$$
$$\mathcal{X}_{\cylinder{0.6}} = \{x_{\textcolor{gray}{\cylinder{0.6}}}\}.$$

\textbf{Experimental details for Figure~\ref{fig:sorl_toy_result}(b)}. We rendered 2500 samples for each type of data. In total, we have 12500 samples. For known class $\mathcal{X}_{\cube{1}}$, we randomly select $50\%$ as labeled data and treat the rest as unlabeled.  
For training, we use the same data augmentation strategy as in SimSiam~\citep{chen2021exploring}. We use ResNet18 and train the model for 40 epochs (sufficient for convergence) with a fixed learning rate of 0.005, using SORL defined in Eq.~\eqref{eq:sorl_def_SORL}.  We set $\eta_l=0.2$ and $\eta_u=1$, respectively. Our visualization is by PyTorch implementation of UMAP~\citep{umap}, with parameters $(\texttt{n\_neighbors=30, min\_dist=1.5, spread=2, metric=euclidean})$.

\subsubsection{Experimental Details for Benchmarks}
\label{sec:sorl_sup_exp_details}
\vspace{0.1cm} \noindent \textbf{Training settings.}
For a fair comparison, we use ResNet-18~\citep{he2016deep} as the backbone for all methods. Similar to~\citep{cao2022openworld}, we pre-train the backbone using the unsupervised Spectral Contrastive Learning~\citep{haochen2021provable} for 1200 epochs. The configuration for the pre-training stage is consistent with ~\citep{haochen2021provable}. Note that the pre-training stage does not incorporate any label information. At the training stage,  we follow the same practice in~\citep{sun2023opencon,cao2022openworld}, and train our model $f(\cdot)$ by only updating the parameters of the last block of ResNet. In addition, we add a trainable two-layer MLP projection head that projects the feature from the penultimate layer to an embedding space $\mathbb{R}^{k}$ ($k = 1000$). We use the same data augmentation strategies as SimSiam~\citep{chen2021exploring,haochen2021provable}. We train for 400 epochs by SORL defined in Eq.~\eqref{eq:sorl_def_SORL}. For CIFAR-10, we set $\eta_l=0.5, \eta_u=1$ while for CIFAR-100, we set $\eta_l=0.15, \eta_u=3$.
We use SGD with momentum 0.9 as an optimizer with cosine annealing (lr=0.05), weight decay 5e-4, and batch size 512.

\vspace{0.1cm} \noindent \textbf{Evaluation settings.}
At the inference stage, we evaluate the performance in a transductive manner (evaluate on $\mathcal{D}_u$). We run a semi-supervised K-means algorithm as proposed in~\citep{vaze22gcd}. 
We follow the evaluation strategy in~\citep{cao2022openworld} and report the following metrics: (1) classification accuracy on known classes, (2) clustering accuracy  on the novel data, and (3) overall accuracy on all classes. The accuracy
of the novel classes is measured by solving an optimal assignment problem using the Hungarian algorithm~\citep{kuhn1955hungarian}. When reporting accuracy on all classes, we solve optimal assignments using both known and
novel classes.

\clearpage
\section{OpenCon: Open-world Contrastive Learning~}
\label{sec:opencon_supp}

\subsection{Preliminaries of  Contrastive Losses}
\label{sec:opencon_lull}

Recall in the main context of this chapter, we provide a general form of the per-sample contrastive loss:

\begin{equation*}
    \mathcal{L}_\phi\big(\bx;\tau,\mathcal{P}(\bx),\mathcal{N}(\bx)\big) = -\frac{1}{|\mathcal{P}(\bx)|}\sum_{\bz^{+} \in \mathcal{P}(\bx)} \log \frac{\exp(\bz^{\top} \cdot \bz^+ / \tau)}{\sum_{\bz^- \in \mathcal{N}(\bx)} \exp (\bz^{\top} \cdot \bz^- / \tau)},
\end{equation*}

where $\tau$ is the temperature parameter, $\bz$ is the $L_2$ normalized embedding of $\bx$, $\mathcal{P}(\bx)$ is the positive set of embeddings \emph{w.r.t.}  $\bz$, and $\mathcal{N}(\bx)$ is the negative set of embeddings. 

In this section, we provide a detailed definition of Supervised Contrastive Loss (SupCon)~\citep{khosla2020supcon} and Self-supervised Contrastive Loss (SimCLR)~\citep{chen2020simclr}.

\vspace{0.1cm} \noindent \textbf{Supervised Contrastive Loss.} For a mini-batch $\mathcal{B}_l$ with samples drawn from $\mathcal{D}_l$, we apply two random augmentations for each sample and generate a multi-viewed batch $\tilde{\mathcal{B}}_l$. We denote the embeddings of the multi-viewed batch as $\mathcal{A}_l$, where the cardinality $|\mathcal{A}_l|$  = 2$|\mathcal{B}_l|$. For any sample $\bx$ in the mini-batch $\tilde{\mathcal{B}}_l$, the positive and negative set of embeddings are as follows: 

\begin{align*}
    \mathcal{P}_{l}(\mathbf{x}) &=\left\{\mathbf{z}^{\prime} \mid \mathbf{z}^{\prime} \in\left\{\mathcal{A}_{l} \backslash \mathbf{z}\right\}, y^{\prime}=y\right\} \\
\mathcal{N}_{l}(\mathbf{x}) &=\mathcal{A}_{l} \backslash \mathbf{z},
\end{align*}

where $y$ is the ground-truth label of  $\bx$,  and $y^{\prime}$ is the predicted label for the corresponding sample of $\bz^{\prime}$. Formally, the supervised contrastive loss is defined as:

\begin{equation*}
    \mathcal{L}_l = \mathcal{L}_\phi\big(\bx;\tau_l,\mathcal{P}_l(\bx),\mathcal{N}_l(\bx)\big),
\end{equation*}
where $\tau_l$ is the temperature.

\vspace{0.1cm} \noindent \textbf{Self-Supervised Contrastive Loss.} For a mini-batch $\mathcal{B}_u$ with samples drawn from unlabeled dataset $\mathcal{D}_u$, we apply two random augmentations for each sample and generate a multi-viewed batch $\tilde{\mathcal{B}}_u$. We denote the embeddings of the multi-viewed batch as $\mathcal{A}_u$, where the cardinality $|\mathcal{A}_u|$  = 2$|\mathcal{B}_u|$. For any sample $\bx$ in the mini-batch $\tilde{\mathcal{B}}_u$, the positive and negative set of embeddings is as follows: 

\begin{align*}
    \mathcal{P}_{u}(\mathbf{x}) &=\left\{\mathbf{z}^{\prime} \mid \mathbf{z}^{\prime} = \phi(\bx^{\prime}), \bx^{\prime} \text{ is augmented from the same sample as } \bx \right\} \\
\mathcal{N}_{u}(\mathbf{x}) &=\mathcal{A}_{u} \backslash \mathbf{z}
\end{align*}

The self-supervised contrastive loss is then defined as:
\begin{equation*}
    \mathcal{L}_u = \mathcal{L}_\phi\big(\bx;\tau_u,\mathcal{P}_u(\bx),\mathcal{N}_u(\bx)\big),
\end{equation*}
where $\tau_u$ is the temperature.

\newpage
\subsection{Algorithm}


Below we summarize the full algorithm of open-world contrastive learning. The notation of $\mathcal{B}_u$, $\mathcal{B}_l$, $\mathcal{A}_u$, $\mathcal{A}_l$ is defined in Appendix~\ref{sec:opencon_lull}.
\begin{algorithm}[htb]
\begin{algorithmic}
  \STATE {\textbf{Input:}} Labeled set $\mathcal{D}_{l}=\left\{\bx_{i} , y_{i}\right\}_{i=1}^{n}$ and unlabeled set $\mathcal{D}_{u}=\left\{\bx_{i}\right\}_{i=1}^{m}$, neural network encoder $\phi$, randomly initialized prototypes $\bM$.\\
   
  \STATE \textbf{Training Stage}:
        \REPEAT
          \STATE \textbf{Data Preparation}:
        \STATE Sample a mini-batch of labeled data $\mathcal{B}_l = \{\bx_i, y_i\}_{i=1}^{b_l}$ and unlabeled data $\mathcal{B}_u = \{\bx_i\}_{i=1}^{b_u}$\\
        \STATE Generate augmented batch  and extract normalized embedding set $\mathcal{A}_l, \mathcal{A}_u$%
	    \STATE \textbf{OOD detection}:
	        \STATE Calculate OOD detection threshold $\lambda$ by $\mathcal{A}_l$
	        \STATE Separate $\mathcal{A}_{n}$ from $\mathcal{A}_{u}$
	    \STATE \textbf{Positive/Negative Set Selection}:
    	    \STATE Assign pseudo-labels $\hat{y}_i$ by prototypes for each sample in $\mathcal{A}_n$
    	    \STATE Obtain $\mathcal{P}_n, \mathcal{N}_n$ from $\mathcal{A}_n$
	    \STATE \textbf{Back-propagation}:
	    \STATE Calculate loss $\mathcal{L}_\text{OpenCon}$
	    \STATE Update network $\phi$ using the gradients.
	  	  \STATE \textbf{Prototype Update}:  
	  	  \STATE Update prototype vectors with Equation~\ref{eq:opencon_mov_avg}
        \UNTIL{Convergence}
  \STATE 
  
\end{algorithmic}
\caption{Open-world Contrastive Learning}
\label{alg:main}
\end{algorithm}

\subsection{Additional Theoretical Details}

\subsubsection{Theoretical Justification of OOD Detection for OpenCon}
\label{sec:opencon_whyood}

In this section, we theoretically show that OOD detection helps open-world representation learning by reducing the lower bound of loss $\underset{\bx \in \mathcal{D}_n}{\mathbb{E}} \mathcal{L}_{n}(\bx) $. We start with the definition of the supervised loss of the Mean Classifier, which provides the lower bound in Lemma~\ref{lemma:lowerbound}.

\begin{definition}
\label{def:meancls}
(Mean Classifier)  the mean classifier is a linear layer with weight matrix $\bM^*$ whose $c$-th row is the mean $\tilde{\boldsymbol{\mu}}_c$ of representations of inputs with class $c$: $\tilde{\boldsymbol{\mu}}_{c}=\underset{\bx \in  \mathcal{S}(c)}{\mathbb{E}}[\phi(\bx)]$, where $\mathcal{S}(c)$ defined in Appendix~\ref{sec:opencon_em} is the set of samples with predicted label $c$. The average supervised loss of its mean
classifier is:

\begin{equation}
    \mathcal{L}_{sup}^*:=-\underset{c^{+},  c^{-} \in \mathcal{Y}_\text{all}^{2}}{\mathbb{E}}\left[  \underset{\bx \in \mathcal{S}(c^+)}{\mathbb{E}} \phi(\bx)(\tilde{\boldsymbol{\mu}}_{c^+} - \tilde{\boldsymbol{\mu}}_{c^-}) \mid c^{+} \neq c^{-}\right]
\end{equation}
\end{definition}

\begin{lemma}
\label{lemma:lowerbound}
Let $\gamma = p(c^+ = c^-), c^{+},  c^{-} \in \mathcal{Y}_\text{all}^{2} $, it holds that
$$
\underset{\bx \in \mathcal{D}_n}{\mathbb{E}} \mathcal{L}_{n}(\bx) \geq \frac{1-\gamma}{\tau} \mathcal{L}_{s u p}^{*} 
$$
\end{lemma}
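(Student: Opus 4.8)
\textbf{Proof plan for Lemma~\ref{lemma:lowerbound}.}

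The plan is to lower bound the per-sample contrastive loss $\mathcal{L}_n(\bx)$ by dropping all but one negative term, then apply Jensen's inequality to pull the expectation inside the $\log$, and finally recognize the resulting quantity as (a scaled version of) the supervised loss of the mean classifier. This is the standard argument from the theoretical analysis of contrastive learning (\emph{cf.} Arora et al., 2019), adapted to our prototype-based positive/negative construction.

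First I would fix an anchor $\bx \in \mathcal{D}_n$ with predicted label $c^+ = \hat y$, so that the positive set $\mathcal{P}_n(\bx)$ consists of embeddings $\bz^+$ of samples also predicted as $c^+$, and the negative set $\mathcal{N}_n(\bx)$ contains embeddings drawn from all predicted classes. Starting from
\begin{equation*}
\mathcal{L}_n(\bx) = -\frac{1}{|\mathcal{P}_n(\bx)|}\sum_{\bz^+\in\mathcal{P}_n(\bx)} \log\frac{\exp(\bz^\top\bz^+/\tau)}{\sum_{\bz^-\in\mathcal{N}_n(\bx)}\exp(\bz^\top\bz^-/\tau)},
\end{equation*}
I would rewrite each summand as $\log\big(\sum_{\bz^-}\exp((\bz^\top\bz^- - \bz^\top\bz^+)/\tau)\big)$, then use convexity of $\log\sum\exp$ together with Jensen's inequality (taking expectations over the sampling of negatives and over $\bx$, $\bz^+$) to move the expectation inside. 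The key normalization step is that averaging $\exp(\bz^\top\bz^-/\tau)$ over negatives from class $c^-$ produces a term involving $\bz^\top\tilde{\boldsymbol\mu}_{c^-}$ after a further Jensen step (or, more cleanly, lower bounding $\log\mathbb{E}[\exp(\cdot)]\ge \mathbb{E}[\cdot]$ and $\mathbb{E}[\exp(\cdot)]\ge\exp(\mathbb{E}[\cdot])$ appropriately), and likewise the positive term gives $\bz^\top\tilde{\boldsymbol\mu}_{c^+}$ after averaging $\bz^+$ over $\mathcal{S}(c^+)$. Collecting signs, one obtains
\begin{equation*}
\underset{\bx\in\mathcal{D}_n}{\mathbb{E}}\,\mathcal{L}_n(\bx) \;\ge\; \frac{1}{\tau}\underset{c^+,c^-}{\mathbb{E}}\Big[\underset{\bx\in\mathcal{S}(c^+)}{\mathbb{E}}\phi(\bx)^\top(\tilde{\boldsymbol\mu}_{c^-}-\tilde{\boldsymbol\mu}_{c^+})\Big],
\end{equation*}
where the outer expectation ranges over (possibly equal) pairs $(c^+,c^-)$. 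Splitting this expectation according to whether $c^+=c^-$ (which happens with probability $\gamma$ and contributes zero to the bracket) or $c^+\neq c^-$ (probability $1-\gamma$), the conditional expectation on the event $c^+\neq c^-$ is exactly $-\mathcal{L}_{sup}^*$ by Definition~\ref{def:meancls}, yielding $\mathbb{E}\,\mathcal{L}_n(\bx)\ge \frac{1-\gamma}{\tau}\mathcal{L}_{sup}^*$.

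The main obstacle I anticipate is carefully tracking the two nested applications of Jensen's inequality so that the inequality goes in the correct direction and the constant is exactly $\frac{1-\gamma}{\tau}$ rather than something looser: one application moves $\mathbb{E}$ inside $\log\sum\exp$ (needs convexity, gives $\ge$), while the reduction of the negative sum to mean embeddings needs $\log\mathbb{E}\exp \ge \mathbb{E}$ applied in the right place, and the reduction of the positive term needs linearity (no inequality) since $\phi(\bx)^\top\bz^+$ is linear in $\bz^+$. Getting these to compose without slack on the positive side — and ensuring the diagonal $c^+=c^-$ terms genuinely vanish and can be discarded — is where the delicacy lies. Once the bound is established in this generic form, the point of the lemma (made precise in the surrounding discussion) is that restricting the contrastive loss to $\mathcal{D}_n$ via OOD detection reduces the effective number of candidate classes $|\mathcal{Y}_\text{all}|$ relevant to novel data, which tightens $\mathcal{L}_{sup}^*$ and hence the lower bound.
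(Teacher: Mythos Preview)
Your plan is essentially the paper's argument, but you are overcomplicating the Jensen bookkeeping and this is where your stated ``obstacle'' comes from. The paper uses exactly \emph{one} Jensen step: after splitting $\mathcal{L}_n(\bx)$ into the linear positive term $-\frac{1}{|\mathcal{P}|}\sum\bz^\top\bz^+/\tau$ and the log-partition term $\log\sum_{\bz^-}\exp(\bz^\top\bz^-/\tau)$, it replaces the sum over negatives by an expectation (an approximation the paper flags explicitly as step (a)) and then applies $\log\mathbb{E}_{c^-,\bx^-}[\exp(\cdot)]\ge\mathbb{E}_{c^-,\bx^-}[\cdot]$ once, by concavity of $\log$. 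After that single step, \emph{both} the positive and negative terms are linear in $\phi(\bx^+)$ and $\phi(\bx^-)$ respectively, so passing to $\tilde{\boldsymbol\mu}_{c^+}$ and $\tilde{\boldsymbol\mu}_{c^-}$ is pure linearity of expectation --- no second inequality, no convexity of log-sum-exp, no ``dropping all but one negative.'' Your split on $\{c^+=c^-\}$ vs.\ $\{c^+\neq c^-\}$ is then exactly right and gives the factor $1-\gamma$.

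Two minor corrections: the step that replaces finite-batch sums by population expectations is an approximation, not an inequality, and the paper treats it as such; you should too. And your closing interpretation is off --- the paper's point is that OOD detection changes $\gamma$ (the collision probability of sampled classes), not that it tightens $\mathcal{L}_{sup}^*$ itself.
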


\begin{proof}
\begin{align*}
    & \underset{\bx \in \mathcal{D}_n}{\mathbb{E}} \mathcal{L}_{n}(\bx)  \\
    &= \underset{\bx \in \mathcal{D}_n}{\mathbb{E}} -\frac{1}{|\mathcal{P}(\bx)|}\sum_{\bz^{+} \in \mathcal{P}(\bx)} \log \frac{\exp(\bz^{\top} \cdot \bz^+ / \tau)}{\sum_{\bz^- \in \mathcal{N}(\bx)} \exp (\bz \cdot \bz^- / \tau)} 
    \\ &= \underset{\bx \in \mathcal{D}_n}{\mathbb{E}} \left[ -\frac{1}{|\mathcal{P}(\bx)|}\sum_{\bz^{+} \in \mathcal{P}(\bx)} (\bz^{\top} \cdot \bz^+ / \tau) + \frac{1}{|\mathcal{P}(\bx)|} \sum_{\bz^{+}\in \mathcal{P}(\bx)} \log \sum_{\bz^- \in \mathcal{N}(\bx)} \exp (\bz^{\top} \cdot \bz^- / \tau)\right]
    \\ & \overset{\mathrm{(a)}}{\approx} - \underset{c^+ \in \mathcal{Y}_\text{all}} {\mathbb{E}} \ \underset{\bx, \bx^+ \in \mathcal{S}^2(c^+) } {\mathbb{E}} \left[ \phi(\bx)^{\top} \cdot \phi(\bx^+) / \tau - \log \underset{c^- \in \mathcal{Y}_\text{all}, \bx \in \mathcal{S}(c^-) } {\mathbb{E}} \exp (\phi(\bx)^{\top} \cdot \phi(\bx^-) / \tau) \right] 
    \\ & \overset{\mathrm{(b)}}{\geq} - \underset{c^+ \in \mathcal{Y}_\text{all}} {\mathbb{E}} \ \underset{\bx, \bx^+ \in \mathcal{S}^2(c^+) } {\mathbb{E}} \left[ \phi(\bx)^{\top} \cdot \phi(\bx^+) / \tau -  \underset{c^- \in \mathcal{Y}_\text{all}, \bx \in \mathcal{S}(c^-) } {\mathbb{E}} \phi(\bx)^{\top} \cdot \phi(\bx^-) / \tau \right] 
    \\ &= - \underset{c^+, c^- \in \mathcal{Y}_\text{all}} {\mathbb{E}} \ \underset{\bx \in \mathcal{S}(c^+) } {\mathbb{E}}   \phi(\bx)(\tilde{\boldsymbol{\mu}}_{c^+} - \tilde{\boldsymbol{\mu}}_{c^-}) / \tau
    \\ &= p(c^+ \neq c^-) 
    \cdot \underset{c^{+},  c^{-} \in \mathcal{Y}_\text{all}^{2}}{\mathbb{E}} \left[  -\underset{\bx \in \mathcal{S}(c^+)}{\mathbb{E}} \phi(\bx)(\tilde{\boldsymbol{\mu}}_{c^+} - \tilde{\boldsymbol{\mu}}_{c^-}) / \tau \mid c^{+} \neq c^{-}\right] + p(c^+ \neq c^-) \cdot 0
    \\ &= \frac{1-\gamma}{\tau} \mathcal{L}_{s u p}^{*},
\end{align*}
where in (a) we approximate the summation over the positive/negative set by taking the expectation over the positive/negative sample in set $\mathcal{S}(c)$ (defined in Appendix~\ref{sec:opencon_em}) and in (b) we apply the Jensen Inequality since the $\log$ is a concave function. 
\end{proof}

In the first step, we show in Lemma~\ref{lemma:lowerbound} that $\underset{\bx \in \mathcal{D}_n}{\mathbb{E}} \mathcal{L}_{n}(\bx) $ is lower-bounded by a constant $\frac{1-\gamma}{\tau}$ times supervised loss $\mathcal{L}_{sup}^*$ defined in Definition~\ref{def:meancls}. Note that $\mathcal{L}_{sup}^*$ is non-positive and close to $-1$ in practice. Then the lower bound of $\underset{\bx \in \mathcal{D}_n}{\mathbb{E}} \mathcal{L}_{n}(\bx) $ has a  positive correlation with $\gamma=p(c^+ = c^-)$. Note that $\gamma$ can be reduced by OOD detection. To explain this:

When we separate novelty samples and form $\mathcal{D}_n$, it has fewer hidden classes than $\mathcal{D}_u$. With fewer hidden classes, the probability of $c^+$ being equal to $c^-$ in random sampling is decreased, and thus reduces the lower bound of the $\underset{\bx \in \mathcal{D}_n}{\mathbb{E}} \mathcal{L}_{n}(\bx) $. 

In summary, OOD detection facilitates open-world contrastive learning by having fewer candidate classes.

\subsubsection{Proof Details}
\label{sec:opencon_proof}

\vspace{0.1cm} \noindent \textbf{Proof of Lemma~\ref{lemma:mstep}.}
\begin{proof}
\begin{align*}
\underset{\phi, \bM}{\operatorname{argmax\ }} \sum_{i=1}^{|\mathcal{D}_n|} \sum_{c \in \mathcal{Y}_\text{all}} q_i(c) \log \frac{p\left(\bx_{i}, c | \phi, \bM\right)}{q_i(c)} 
&\overset{\mathrm{(a)}}{=}\underset{\phi, \bM}{\operatorname{argmax\ }}\sum_{i=1}^{|\mathcal{D}_n|} \sum_{c \in \mathcal{Y}_\text{all}} q_i(c) \log p\left(\bx_{i} | c, \phi, \bM\right)\\
&\overset{\mathrm{(b)}}{=}\underset{\phi, \bM}{\operatorname{argmax\ }}\sum_{i=1}^{|\mathcal{D}_n|} \sum_{c \in \mathcal{Y}_\text{all}} \mathbf{1}\{\hat{y}_{i}=c\} \log p\left(\bx_{i} | c, \phi, \bM\right) \\
&\overset{\mathrm{(d)}}{=}\underset{\phi, \bM}{\operatorname{argmax\ }} \sum_{c \in \mathcal{Y}_\text{all}} \sum_{\bx \in \mathcal{S}(c)} \log p(\bx | c, \phi, \bM) \\
&\overset{\mathrm{(e)}}{=}\underset{\phi, \bM}{\operatorname{argmax\ }} \sum_{c \in \mathcal{Y}_\text{all}} \sum_{\bx \in \mathcal{S}(c)}  \phi(\bx)^{\top} \cdot \boldsymbol{\mu}_{c},
\end{align*}

where equation $(a)$ is given by removing the constant term $q_i(c)\log\frac{p(c)}{q_i(c)}$ in $\operatorname{argmax}$, (b) is by plugging $q_i(c)$, (d) is by reorganizing the index, and (e) is by plugging the vMF density function and removing the constant.
\end{proof}

\vspace{0.1cm} \noindent \textbf{Proof of Lemma~\ref{lemma:opt_phi}.}
\begin{proof}
\begin{align*}
    \underset{\phi}{\operatorname{argmin\ }} \sum_{\bx \in \mathcal{D}_n}  \mathcal{L}_{a}(\bx)
    &= \underset{\phi}{\operatorname{argmin\ }} -
    \sum_{\bx \in \mathcal{D}_n} \frac{1}{ |\mathcal{P}(\bx)|}  \sum_{\bz^{+} \in \mathcal{P}(\bx)} \phi(\bx)^{\top} \cdot \bz^+ 
    \\ &= \underset{\phi}{\operatorname{argmin\ }} -
    \sum_{c \in \mathcal{Y}_\text{all}} \sum_{\bx \in \mathcal{S}(c)} \frac{1}{|\mathcal{S}_c| - 1} \sum_{\boldsymbol{x^+} \in \mathcal{S}(c) \backslash  \bx} \phi(\bx)^{\top} \cdot \phi(\bx^+) 
    \\ &= \underset{\phi}{\operatorname{argmin\ }}  - \sum_{c \in \mathcal{Y}_\text{all}} \sum_{\bx \in \mathcal{S}(c)}
    \frac{1}{|\mathcal{S}_c| - 1} \left(\left(\sum_{\boldsymbol{x^+} \in \mathcal{S}(c) } \phi(\bx)^{\top} \cdot \phi(\bx^+) \right) - 1 \right) 
    \\ & \overset{\mathrm{(a)}}{=} \underset{\phi}{\operatorname{argmin\ }}  - \sum_{c \in \mathcal{Y}_\text{all}} \sum_{\bx \in \mathcal{S}(c)} \eta_{\mathcal{S}_c} \phi(\bx)^{\top} \cdot \boldsymbol{\mu}^*_{c}
    \\  & \overset{\mathrm{(b)}}{\approx}  \underset{\phi}{\operatorname{argmax\ }} \sum_{c \in \mathcal{Y}_\text{all}} \sum_{\bx \in \mathcal{S}(c)}  \phi(\bx)^{\top} \cdot \boldsymbol{\mu}^*_{c} , 
\end{align*}

where in (a) $\eta_{\mathcal{S}_c} = \frac{|\mathcal{S}_c|  }{|\mathcal{S}_c| - 1} \| \mathbb{E}_{\bx \in \mathcal{S}(c)}[\phi(\bx)]\|_2$ is a constant value close to 1, and in (b) we  show the approximation to the optimization target in Equation~\ref{eq:opencon_m-target} with the fixed prototypes. The proof is done by using the equation in Lemma~\ref{lemma:mstep}. 
\end{proof}

\subsection{Additional Experiements}

\subsubsection{Discussion on Using Samples in \texorpdfstring{$\mathcal{D}_u \backslash \mathcal{D}_n$}{Lg}}
\label{sec:opencon_du_slash_dn}

We discussed in Section~\ref{sec:opencon_prototype} that samples from $\mathcal{D}_u \backslash \mathcal{D}_n$ contain indistinguishable data from known and novel classes. In this section, we show that using these samples for prototype-based learning may be undesirable. 

We first show that the overlapping between the novel and known classes in $\mathcal{D}_u \backslash \mathcal{D}_n$ can be non-trivial. In Figure~\ref{fig:opencon_distrib}, we show the distribution plot of the scores $\max_{j \in \mathcal{Y}_l}\boldsymbol{\mu}_{j}^\top \cdot \phi(\bx_i)$. It is notable that there exists a large overlapping area when $\max_{j \in \mathcal{Y}_l}\boldsymbol{\mu}_{j}^\top \cdot \phi(\bx_i) > \lambda$. For visualization clarity, we color the known classes in blue and the novel classes in gray. 

\begin{figure}
    \centering
    \includegraphics[width=0.6\linewidth]{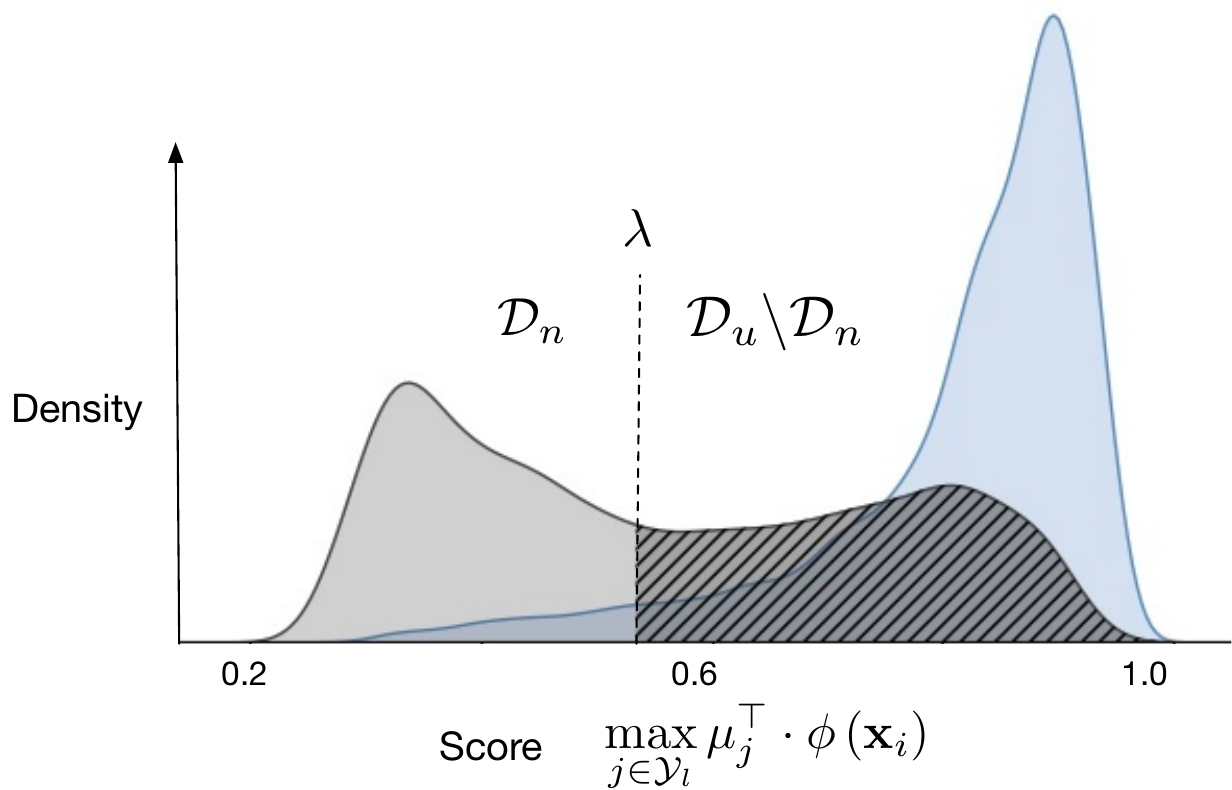}
    \caption[Distribution of OOD score for unlabeled data.]{Distribution of OOD score $\max_{j \in \mathcal{Y}_l}\boldsymbol{\mu}_{j}^\top \cdot \phi(\bx_i)$ for unlabeled data $\mathcal{D}_u$ on CIFAR-100. For visual clarity, gray indicates samples from novel classes and blue indicates samples from known classes. The shaded area highlights samples from novel classes but misidentified as known classes.}
    \label{fig:opencon_distrib}
\end{figure}

We next show that using this part of data will be harmful to representation learning. Specifically, we replace $\mathcal{L}_l$ to be the following loss: 

$$\mathcal{L}_{known} = \sum_{\mathbf{x} \in \tilde{\mathcal{B}}_k} \mathcal{L}_{\phi}\left(\mathbf{x} ; \tau_{k}, \mathcal{P}_{k}(\mathbf{x}), \mathcal{N}_{k}(\mathbf{x})\right), $$

where we define $\mathcal{B}_k$ to be a minibatch with samples drawn from $
\mathcal{D}_l \cup (\mathcal{D}_u \backslash \mathcal{D}_n)$---labeled data with known classes, along with the unlabeled data \emph{predicted as known classes}. And we apply two random augmentations for each sample and generate a multi-viewed batch $\tilde{\mathcal{B}}_k$. We denote the embeddings of the multi-viewed batch as $\mathcal{A}_k$. The positive set of embeddings is as follows: 
\begin{align*}
\mathcal{P}_{k}(\mathbf{x}) &=\left\{\mathbf{z}^{\prime} \mid \mathbf{z}^{\prime} \in\left\{\mathcal{A}_{k} \backslash \mathbf{z}\right\}, \tilde{y}^{\prime}=\tilde{y}\right\} \\
\mathcal{N}_{k}(\mathbf{x}) &=\mathcal{A}_{k} \backslash \mathbf{z},  
\end{align*}

where

$$\tilde{y}_i = \begin{cases} 
\hat{y}_i \text{ (pseudo-label)} & \text { if } \bx_{i} \in \mathcal{D}_u \\ y_{i} \text{ (ground-truth label)}, & \text { if } \bx_{i} \in \mathcal{D}_l
\end{cases}$$

Intuitively, $\mathcal{L}_{known}$ is an extension of $\mathcal{L}_{l}$, where we utilize both labeled and unlabeled data from known classes for representation learning. The final loss now becomes: 

\begin{equation}
\label{eq:opencon_modified_loss}
    \mathcal{L}_\text{Modified} = \lambda_n \mathcal{L}_n + \lambda_k\mathcal{L}_{known} + \lambda_u \mathcal{L}_u,
\end{equation}

We show results in Table~\ref{tab:opencon_lmodified}. Compared to the original loss, the seen accuracy drops by 6.6\%. This finding suggests that using $\mathcal{D}_u \backslash \mathcal{D}_n$ for prototype-based learning is suboptimal.

\begin{table}[htb]
\centering
\caption[Comparison with modified losses.]{Comparison with loss $\mathcal{L}_\text{Modified} $ on CIFAR-100.}
\begin{tabular}{lllllll}
\toprule
\multirow{2}{*}{\textbf{Method}} & \multicolumn{3}{c}{\textbf{CIFAR-100}} \\
 & \textbf{All} & \textbf{Novel} & \textbf{Seen} \\ \midrule
$\mathcal{L}_\text{Modified}$ & 47.7 & 46.4 & 62.4 \\
$\mathcal{L}_\text{OpenCon}$ & \textbf{{53.7}} & \textbf{{48.7}} & \textbf{{{69.0}}} \\ \bottomrule
\end{tabular}
\label{tab:opencon_lmodified}
\end{table}

\subsubsection{Results on CIFAR-10}
\label{sec:opencon_cifar-10}
We show results for CIFAR-10 in Table~\ref{tab:opencon_c10}, where OpenCon consistently outperforms strong baselines, particularly ORCA and GCD. Classes are divided into 50\% known and 50\% novel classes. We then select 50\% of known classes as the labeled dataset and the rest as the unlabeled set. The division is consistent with~\citep{cao2022openworld}, which allows us to compare the performance in a fair setting.

\begin{table}[htb]
\centering
\caption[Results on CIFAR-10 dataset.]{Results on CIFAR-10. Asterisk ($^\star$) denotes that the original method can not recognize seen classes. Dagger ($^\dagger$) denotes the original method can not
detect novel classes (and we had to extend it). Results on GCD, ORCA, and OpenCon  (mean and standard deviation) are averaged over five different runs. The ORCA results are reported by running the official repo~\citep{cao2022git}.}
\scalebox{0.95}{
\begin{tabular}{lllllll}
\toprule
\multirow{2}{*}{\textbf{Method}} & \multicolumn{3}{c}{\textbf{CIFAR-10}} \\
 & \textbf{All}  & \textbf{Novel} & \textbf{Seen}\\ \midrule

$^{\dagger}$\textbf{FixMatch}~\citep{alex2020fixmatch} & 49.5 & 50.4 & 71.5 \\
$^{\dagger}$\textbf{DS$^{3}$L}~\citep{guo2020dsl} & 40.2 & 45.3 & 77.6 \\
$^{\dagger}$\textbf{CGDL}~\citep{sun2020cgdl} & 39.7 & 44.6 & 72.3 \\
$^\star$\textbf{DTC}~\citep{Han2019dtc} & 38.3 & 39.5 & 53.9 \\
$^\star$\textbf{RankStats}~\citep{zhao2021rankstat} & 82.9 & 81.0 & 86.6 \\
$^\star$\textbf{SimCLR}~\citep{chen2020simclr} & 51.7 & 63.4 & 58.3 \\ \hline
\textbf{ORCA}~\citep{cao2022openworld}  & 88.3$ ^{\pm{0.3}} $ & 87.5$ ^{\pm{0.2}} $ & 89.9$ ^{\pm{0.4}} $ \\
\textbf{GCD}~\citep{vaze22gcd} & 87.5$ ^{\pm{0.5}} $ & 86.7$ ^{\pm{0.4}} $ & \textbf{90.1}$ ^{\pm{0.3}} $ \\
\textbf{OpenCon (Ours)} & \textbf{90.4}$ ^{\pm{0.6}} $ & \textbf{91.1}$ ^{\pm{0.1}} $ & 89.3$ ^{\pm{0.2}} $
\\ \bottomrule
\end{tabular}}
\label{tab:opencon_c10}
\end{table}

\subsubsection{More Qualitative Comparisons of Embeddings}
\label{sec:opencon_umap20}

In Figure~\ref{fig:opencon_umap20}, we visualize the feature embeddings for a subset of 20 classes using UMAP~\citep{umap}. This covers more classes than what has been shown in the main context of the chapter (Figure~\ref{fig:opencon_umap}). The model is trained on ImageNet-100. OpenCon produces a more compact and distinguishable embedding space than GCD and ORCA. 

\begin{figure}[htb]
    \centering
    \includegraphics[width=0.97\linewidth]{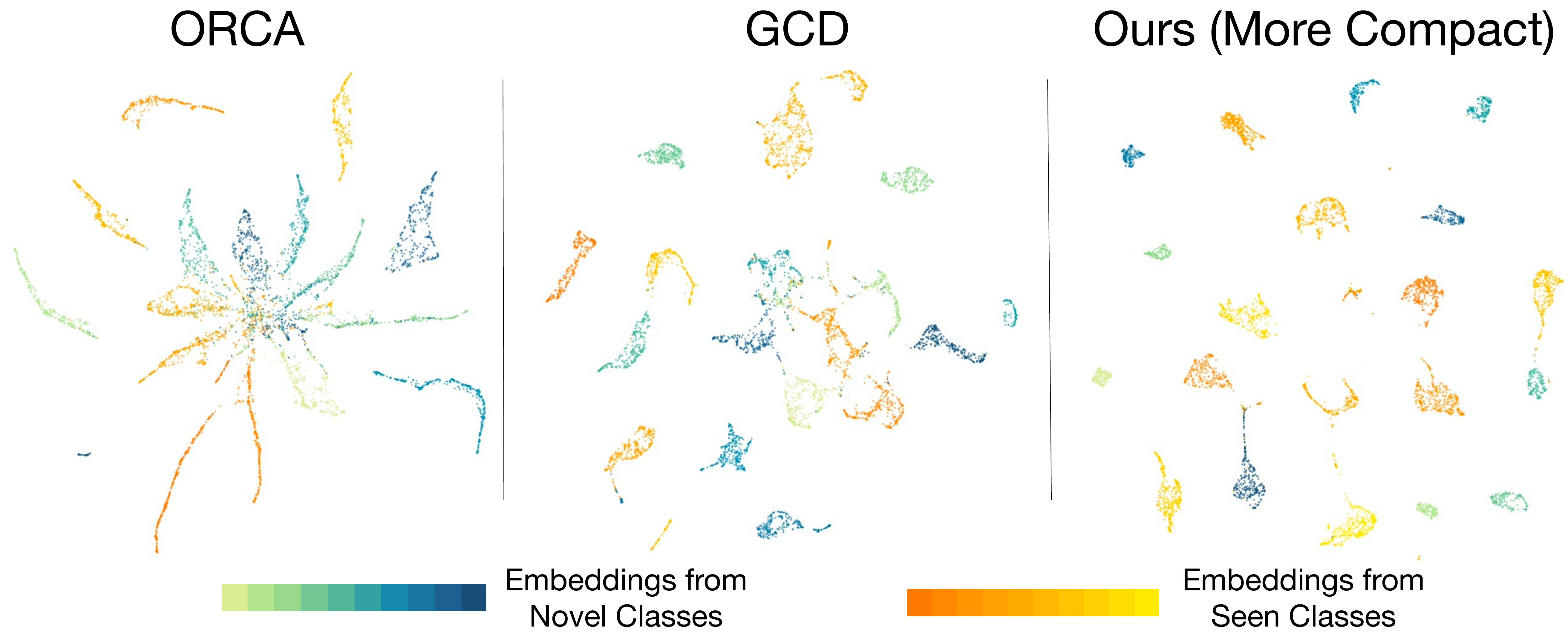}
    \caption[UMAP visualization of the feature embedding from more classes.]{UMAP~\citep{umap} visualization of the feature embedding from 20 classes (10 for seen, 10 for novel). The model is trained on ImageNet-100 with ORCA~\citep{cao2022openworld}, GCD~\citep{vaze22gcd}, and OpenCon (ours).}
    \label{fig:opencon_umap20}
\end{figure}

\subsubsection{Hyperparameters and Sensitivity Analysis}

\label{sec:opencon_hyperparam}

In this section, we introduce the hyper-parameter settings for OpenCon. We also show a validation strategy to determine important hyper-parameters (weight and temperature) in loss $\mathcal{L}_{OpenCon}$ and conduct a sensitivity analysis to show that the validation strategy can select near-optimal hyper-parameters. We start by introducing the basic training setting. 

For CIFAR-100/ImageNet-100, the model is trained for 200/120 epochs with batch-size 512 using stochastic gradient descent with momentum 0.9, and weight decay  $10^{-4}$. The learning rate starts at 0.02 and decays by a factor of 10 at the 50\% and the 75\% training stage. The momentum for prototype updating $\gamma$ is fixed at 0.9. The percentile $p$ for OOD detection is 70\%. We fix the weight for the KL-divergence regularizer to be 0.05. 

Since the label for $\mathcal{D}_u$ is not available, we propose a validation strategy by using labeled data $\mathcal{D}_l$. Specifically, we split the classes in $\mathcal{Y}_l$ equally into two parts: known classes and ``novel'' classes (for which we know the labels). Moreover, $50\%$ samples of the selected known classes are labeled. We further use the new validation dataset to select the best hyper-parameters by grid searching. The selected hyper-parameter groups are summarized in Table~\ref{tab:opencon_hyper}. Note that the only difference between CIFAR-100 and ImageNet-100 settings is the temperature of the self-supervised loss $\mathcal{L}_u$. 

We show the sensitivity of hyper-parameters in Figure~\ref{fig:opencon_hyper}. The performance comparison in the bar plot for each hyper-parameter is reported by fixing other hyper-parameters. We see that our validation strategy successfully selects $\lambda_l$, $\lambda_u$, and $\tau_l$ with the optimal one, and the other three $\lambda_n$, $\tau_u$ and $\tau_n$ are close to the optimal (with <1\% gap in overall accuracy).

\begin{table}[htb]
\centering
\caption[Hyperparameters used for OpenCon.]{Hyperparameters in OpenCon.}
\begin{tabular}{lllllll} \toprule
 & $\lambda_n$  &  $\tau_n$   &  $\lambda_l$  &  $\tau_l$   &    $\lambda_u$  &  $\tau_u$   \\ \midrule
ImageNet-100 & 0.1 & 0.7 & 0.2 & 0.1 & 1 & 0.6\\
CIFAR-100 & 0.1 & 0.7 & 0.2 & 0.1 & 1 & 0.4 \\ \bottomrule
\end{tabular}
\label{tab:opencon_hyper}
\end{table}
\begin{figure}
    \centering
    \includegraphics[width=1\linewidth]{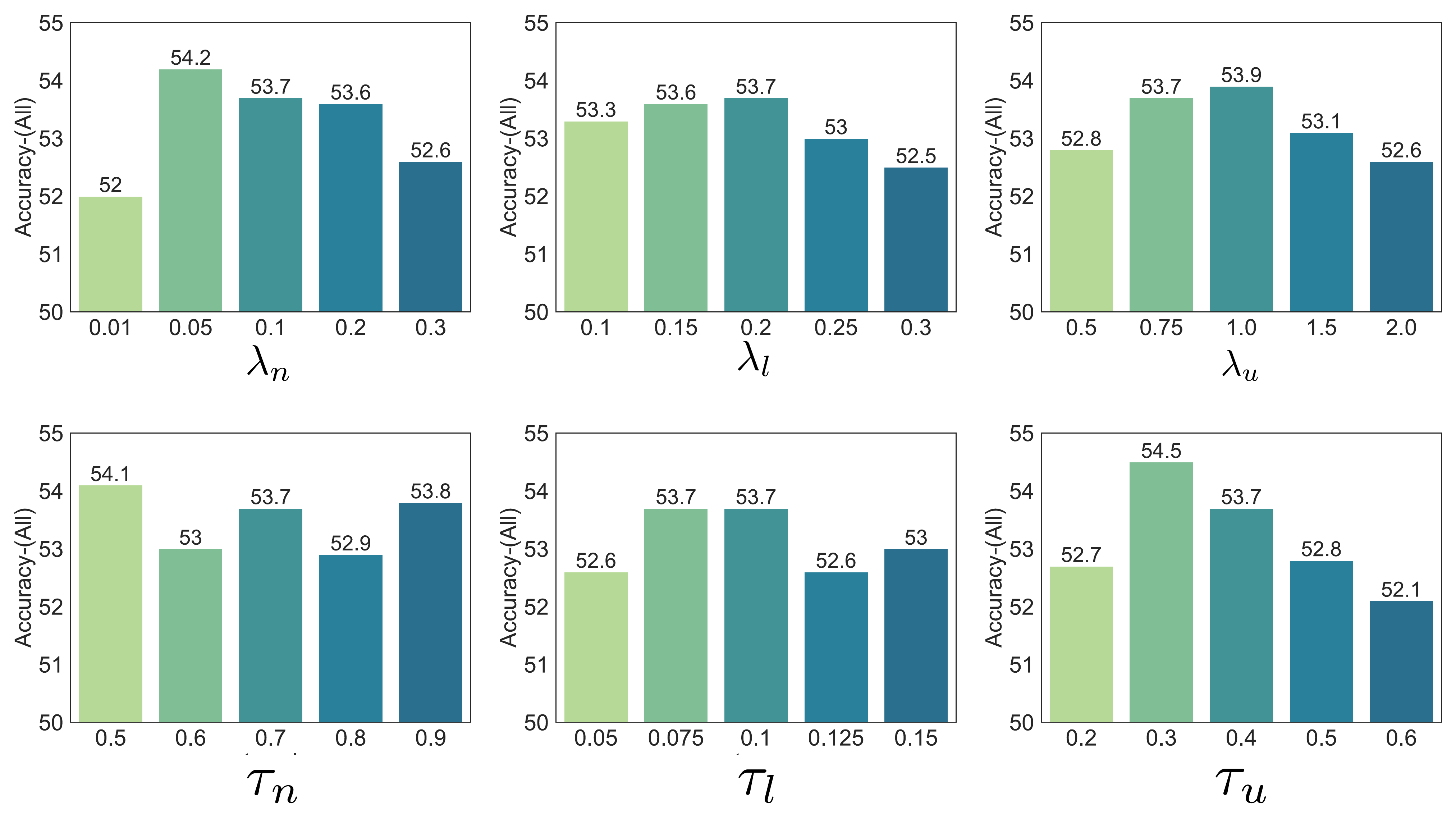}
    \caption[Sensitivity analysis of hyper-parameters.]{Sensitivity analysis of hyper-parameters on CIFAR-100. The overall accuracy is reported on three (weight, temperature) pairs:  ($\lambda_n$, $\tau_n$) for loss $\mathcal{L}_{n}$, ($\lambda_l$, $\tau_l$) for loss $\mathcal{L}_{l}$, and the ($\lambda_u$, $\tau_u$) for loss $\mathcal{L}_{u}$. The middle bar in each plot corresponds to the hyperparameter value used in our main experiments.  }
    \label{fig:opencon_hyper}
\end{figure}

\subsubsection{OOD Detection Comparison}
\label{sec:opencon_ood}
We compare different OOD detection methods in Table~\ref{tab:opencon_ood}. Results show that several popular OOD detection methods produce similar OOD detection performance. Note that Mahalanobis~\citep{lee2018simple} 
require heavier computation which causes an unbearable burden in the training stage. Our method incurs minimal computational overhead.

\begin{table}[htb]
\centering
\caption[Comparison of OOD detection performance with popular methods in OpenCon.]{Comparison of OOD detection performance with popular methods. Results are reported  on CIFAR-100. Samples from 50 known classes are treated as in-distribution (ID) data and samples from the remaining 50 classes are used as out-of-distribution (OOD) data. The OOD detection threshold is estimated on the labeled known classes $\mathcal{D}_l$.}

\begin{tabular}{llllll}
\toprule
 \textbf{Method} & \textbf{FPR95} & \textbf{AUROC} \\ \midrule
{MSP}~\citep{Kevin} & 58.9 & 86.0\\
{Energy}~\citep{liu2020energy} & 57.1 & 87.4\\ 
{Mahalanobis}~\citep{lee2018simple} & 54.6 & 88.7\\ 
{Ours} & 57.1 & 87.4\\ 
\bottomrule
\end{tabular}
\label{tab:opencon_ood}
\end{table}

\nocite{uppaal2023fine}


\bibliographystyle{mcbride}
\bibliography{thesis}


\end{document}